\documentclass[12pt]{article}
\usepackage{etoc}
\usepackage{titletoc}

\usepackage{xcolor}
\definecolor{deepyellow}{RGB}{190,150,0}
\usepackage{graphicx,epstopdf,epsfig}
\usepackage{amsmath,amsthm,verbatim,amssymb}
\usepackage[dvipsnames, svgnames, x11names]{xcolor}
\usepackage{lscape}
\usepackage{threeparttable}
\usepackage{algorithm, algorithmic}
\usepackage{float}
\usepackage{mathrsfs}
\usepackage{booktabs}
\usepackage{multirow}
\usepackage{hyperref}
\hypersetup{colorlinks=true,
	linkcolor=blue,
	anchorcolor=blue,
	citecolor=blue}
\usepackage{enumerate}
\usepackage[round]{natbib}

\makeatletter 
\usepackage{dsfont}
\date{}
\usepackage{subcaption}

\newtheorem{theorem}{Theorem}[section]
\newtheorem{lemma}{Lemma}[section]
\newtheorem{lemma*}{Lemma}
\newtheorem{remark}{Remark}[section]
\newtheorem{remark*}{Remark}
\newtheorem{corollary}{Corollary}[section]

\newtheorem{definition}{Definition}[section]
\newtheorem{assumption}{Assumption}[section]
\usepackage{bbm}

\def\Ebb{\mathbb{E}}

\def \cA {{\cal A}}
\def \cB {{\cal B}}

\def \cD {{\cal D}}

\def \cF {{\cal F}}
\def \cG {{\cal G}}

\def \cL {{\cal L}}

\def \cN {{\cal N}}
\def \cO {{\cal O}}
\def \cP {{\cal P}}
\def \cQ {{\cal Q}}
\def \cR {{\cal R}}
\def \cS {{\cal S}}
\def \cT {{\cal T}}

\def \cW {{\cal W}}
\def \cX {{\cal X}}

\def \bD {\mathbb{D}}

\def \bS {\mathbb{S}}
\def \bE {\mathbb{E}}
\def \bT {\mathbb{T}}
\def \bZ {\mathbb{Z}}
\newcommand{\norm}[1]{\left\| #1 \right\|}  
\def \TV {\mathrm{TV}}

\usepackage{booktabs}
\usepackage{makecell}

\usepackage{xcolor}
\usepackage{tcolorbox}

\usepackage[normalem]{ulem}

\begin{document}

\title{Offline Deep $Q^*$ Estimation with Diffusion Models}

\author{
Xiaohong Chen
\thanks{
Department of Economics, Yale University, New Haven, 06511, USA.
Email: xiaohong.chen@yale.edu
}
\and
Yuling Jiao
\thanks{
School of Artificial Intelligence,
 and Hubei Key Laboratory of Computational Science, Wuhan University, Wuhan, 430072,  China.
Email: yulingjiaomath@whu.edu.cn
}     
\and
Lican Kang
\thanks{Institute for Math and AI,  Hubei Key Laboratory of Computational Science, and
School of Artificial Intelligence,
Wuhan University, Wuhan, 430072, China. Email: kanglican@whu.edu.cn }
\and
Jerry Zhijian Yang
\thanks{
School of Mathematics and Statistics, and Hubei Key Laboratory of Computational Science, Wuhan University, Wuhan, 430072,  China.
Email: zjyang.math@whu.edu.cn
}
\and
Chen Zhong
\thanks{
School of Mathematics and Statistics, Wuhan University, Wuhan, 430072, China. Email: zcmath@whu.edu.cn
}
}

\maketitle

\begin{abstract}

In offline reinforcement learning (RL), estimating the optimal action-value function $Q^\ast$ can be formulated as solving the optimal Bellman equation based solely on offline observations. A fundamental challenge is that the reward function and transition kernel are unknown, so the optimal Bellman operator is not directly observable from data. To address this issue, we propose a novel framework that decouples operator estimation from value function learning. In this approach, we first formulate  conditional diffusion models to estimate the reward law and transition kernel, which induces a data-driven approximation of the optimal Bellman operator. We then plug these estimators into the Bellman equation and obtain a deep estimator of  $Q^\ast$ by minimizing the empirical Bellman residual over a neural network function class. 
This formulation separates high-dimensional conditional distribution estimation from the fixed point estimation problem and yields a coherent end-to-end statistical framework. In particular, conditional diffusion models provide a flexible and expressive approach to modeling unknown environment dynamics under limited data, addressing a central challenge in offline RL. 
Theoretically, we first establish sharp nonasymptotic convergence rates for learning the optimal Bellman operator through an end-to-end analysis of conditional diffusion estimation in total variation distance. 
We then establish the oracle value-stage rate $\widetilde{\mathcal O}\bigl(n^{-\frac{2\beta}{d_x+d_a+2\beta}}\bigr)$ for the excess Bellman residual risk. Finally, under a concentrability condition, we translate this residual bound into an $L^2$ convergence rate of $\widetilde{\mathcal O}\bigl(n^{-\frac{\beta}{d_x+d_a+2\beta}}\bigr)$ for the resulting deep estimator of $Q^*$, where $d_x$ and $d_a$ denote the dimensions of the state and action spaces, respectively, and $\beta$ denotes the H\"older smoothness index of $Q^*$.
Importantly, our theoretical analysis does not rely on completeness assumptions commonly used in deep RL theory.
Extensive numerical experiments demonstrate the effectiveness of the proposed method and its strong empirical performance.

\vspace{0.5cm} 
\noindent{\bf KEY WORDS}: 
Deep RL,
Bellman residual minimization,
Deep $Q^\ast$ estimation, Conditional diffusion model.
\end{abstract} 
\section{Introduction}
In deep reinforcement learning (DRL), value functions or policies are represented by deep neural networks (DNNs), which makes RL applicable to high-dimensional control and planning problems  \citep{mousavi2016deep,li2023deep}. 
The empirical success of DRL in Atari games \citep{mnih2015human} and Go \citep{silver2016mastering} demonstrates that deep function approximation can support effective decision-making in complex environments. 
At the same time, many practical problems do not permit repeated online interaction with the environment, because data collection may be costly, unsafe, or otherwise infeasible. 
In such settings, the learner must rely on previously collected data, often generated by an unknown behavior policy 
\citep{levine2020offline,fujimoto2019off,prudencio2023survey}. 
This offline regime fundamentally changes the nature of the problem: the primary challenge is no longer exploration, but the reliable estimation of value functions or policies from limited and potentially biased data. 

From the value function modeling perspective, a central object in offline RL is to estimate the optimal action-value function $Q^\ast$, defined as the fixed point of the optimal Bellman equation. 
Classical fitted-value methods, including fitted $Q$-iteration (FQI) \citep{ernst2005tree} and its deep extensions, such as deep $Q$-networks (DQN) \citep{mnih2015human} and deep fitted $Q$-iteration (DFQI) \citep{fan2020theoretical}, approximate $Q^\ast$ through repeated Bellman regression with DNNs. 
These methods directly connect sampled transitions with value estimation, but recursive bootstrapping can propagate approximation and optimization errors, and max-based targets in deep $Q$-learning may induce overestimation bias \citep{van2016deep,fan2020theoretical}.
Deep approximate policy iteration (DAPI) \citep{jiao2025deep} follows a related
route based on the Bellman equation, where each policy evaluation step solves a
Bellman equation over a DNN class. These iterative methods provide a natural
approach to value learning, but repeated fitted regression or policy evaluation
steps can be computationally demanding when many trajectories or repeated
Bellman solves are required.
Another line focuses on direct value estimation. Methods such as Minimax Squared Bellman Optimality Error Minimization (MSBO) and Minimax Average Bellman Optimality Error Minimization (MABO) estimate $Q^\ast$ through adversarial Bellman-error formulations \citep{xie2020q}.
These minimax formulations provide a direct statistical criterion for value estimation and can be applied to batch data, but they usually require solving coupled optimization problems over multiple function classes, which can be computationally demanding.
Theoretical analyses of these methods typically rely on concentrability coefficients and Bellman completeness assumptions, which respectively control distribution mismatch and the representability of Bellman updates within a prescribed function class 
\citep{chen2019information,fan2020theoretical,feng2023over,jiao2025deep}. 
While these assumptions have enabled important non-asymptotic guarantees, they may become restrictive when flexible DNN approximators are used in offline RL, particularly in 
high-dimensional settings where Bellman updates may no longer be accurately represented within a fixed function class.

A complementary direction is model-based RL and its offline variants, including Model-Based Policy Optimization (MBPO) \citep{janner2019trust}, Model-based Offline Policy Optimization (MOPO) \citep{yu2020mopo}, MOReL: Model-Based Offline RL  \citep{kidambi2020morel}, and Conservative Offline Model-Based Policy Optimization (COMBO) \citep{yu2021combo}.
Model-based methods make the environment mechanism explicit by estimating rewards or transitions, and therefore provide a natural alternative to directly fitting value functions. However, existing model-based offline RL methods often use the learned model for rollout generation, uncertainty penalization, or conservative policy optimization. Their performance therefore depends heavily on the quality and stability of the learned environment model, especially in 
high-dimensional and continuous settings.
Taken together, these approaches highlight a fundamental tradeoff: 
{\it 
direct value-learning methods remain close to the Bellman equation but rely on recursive or coupled optimization, whereas model-based methods make the environment mechanism explicit but depend on the quality of reward and transition estimation. 
}

This tradeoff suggests a statistical perspective that separates environment modeling from value estimation. Since the Bellman equation is fully determined by the reward law and transition kernel, it is natural to first estimate these conditional objects and then use them to construct an estimator of $Q^\ast$. Related ideas appear in inverse RL (IRL), where demonstrations are used to recover reward information \citep{ziebart2008maximum,wulfmeier2015maximum,arora2021survey}, and in conditional moment and instrumental variable models, where nuisance components are estimated and then incorporated into a target equation \citep{dikkala2020minimax,xu2020learning,kim2025optimality}. From this viewpoint, offline RL can be formulated as a statistical problem in which the propagation of estimation error from environment modeling to value estimation becomes an explicit object of analysis.  
The main difficulty in this formulation lies in estimating high-dimensional conditional distributions for rewards and transitions. Diffusion models \citep{ho2020denoising,song2020score} provide a flexible generative framework 
for learning complex conditional distributions, 
and recent theoretical work establishes strong statistical guarantees for diffusion-based estimators \citep{oko2023diffusion,chen2023score,jiao2024latent,chang2026deep}. 
Among these results, \citet{tang2025conditional} established minimax optimal rates in total variation distance for conditional distribution estimation and proved manifold adaptivity under the Wasserstein metric. Our analysis further quantifies how estimation errors in the conditional reward and transition laws propagate through the optimal Bellman operator to the estimator of $Q^\ast$.
In particular, diffusion models are well suited for complex and multimodal distributions, which makes them attractive for modeling conditional reward laws and transition kernels in continuous offline RL.
Recently, diffusion models have also become an active direction in RL, with applications to conditional decision making and trajectory generation \citep{ajay2023conditional}, offline policy learning and behavior regularization \citep{wang2022diffusion,chen2023score,mao2024diffusion}, value estimation \citep{mazoure2023value}, and policy optimization in online or continuous-control settings \citep{ren2024diffusion,ma2025efficient,frans2025diffusion}.

Motivated by these developments, we propose a framework for offline $Q^\ast$ estimation that integrates conditional diffusion modeling with value function learning. We use conditional diffusion models to estimate the reward law and transition kernel, which define a data-driven approximation of the optimal Bellman operator. These estimates are then incorporated into the Bellman equation, and $\widehat{Q}$ is obtained via empirical Bellman residual minimization over a neural network function class. This construction yields an end-to-end statistical framework in which the interaction between generative estimation and value learning can be rigorously analyzed.
In theoretical analysis, we establish convergence rates for the proposed deep \(Q^*\) estimator. In particular, we first derive convergence rates for the conditional diffusion estimators and then quantify how these errors propagate through the Bellman operator and affect value function estimation. This yields an excess Bellman residual risk bound of order \(\widetilde{\mathcal O}(n^{-\frac{2\beta}{d_x+d_a+2\beta}})\), where \(n\) is the sample size used for value function learning, \(d_x\) and \(d_a\) are the dimensions of the state and action spaces, respectively, and \(\beta\) denotes the H\"older smoothness index of the target value function. 
Furthermore, under a 
concentrability condition, we obtain an  \(L^2\) convergence rate \(\mathcal O(n^{-\frac{\beta}{d_x+d_a+2\beta}})\) for value function estimation.
Notably, our analysis avoids the Bellman completeness assumption commonly imposed
in theoretical analyses of DRL
\citep{chen2019information,fan2020theoretical,feng2023over,jiao2025deep}; see
Section  \ref{sec:theory} for details.
The key point is that our framework for Bellman operator learning and loss decomposition reduces the problem to approximating $Q^*$ of the optimal Bellman operator,
rather than requiring the Bellman updates of an
entire function class to be representable  within the same class.
Empirically, we conduct numerical experiments on OpenAI Gym tasks, including Pendulum and Swimmer  (\url{https://gymnasium.farama.org/}) and on the Medical Information Mart for Intensive Care III (MIMIC-III) clinical dataset (\url{https://physionet.org/content/mimiciii/1.4/}). We compare our method with MOPO, DFQI, and MABO in terms of both training rewards and evaluation rewards. The results support the benefit of separating environment-law estimation from value estimation in offline RL.

\subsection{Contribution}

Our main contributions are summarized as follows:
\begin{itemize}
    \item We propose a framework for offline RL that integrates conditional diffusion modeling with value function estimation. 
    Conditional diffusion models are used to estimate the reward law and transition kernel, which induce a data-driven approximation of the optimal Bellman operator. 
    These estimates are incorporated into a $Q^\ast$ estimation procedure via empirical Bellman residual minimization over a DNN class. Extensive numerical experiments demonstrate the effectiveness of the proposed method.
 
\item We establish rigorous theoretical guarantees for the proposed framework. Specifically, 
we derive sharp  convergence rates for the conditional diffusion estimators in total variation distance, and obtain 
oracle value-stage rates
for the resulting estimator of \(Q^\ast\), as summarized in Theorem~\ref{th:inf}. 
Moreover, our analysis avoids the Bellman completeness assumption commonly imposed in theoretical analyses of deep RL, while explicitly characterizing how the generative estimation error propagates through the Bellman operator
and interacts with the value estimation error.
\end{itemize}

\begin{theorem}[Informal]\label{th:inf}
Let $\widehat Q$ be the  proposed estimator  of $Q^*$.
Under some 
regular conditions,
the Bellman residual  satisfies 
\[
\mathbb{E} \Big[ \cL_{\mathcal T^\ast}(\widehat Q)-\cL_{\mathcal T^\ast}(Q^\ast) \Big] \lesssim
\underbrace{\varepsilon_{\mathrm{opt}}}_{\text{Optimal Bellman operator learning error}}
+
\underbrace{\varepsilon_{\mathrm{value}}}_{\text{Value function learning  error}}.
\]
In particular, the operator learning error $\varepsilon_{\mathrm{opt}}$ consists of the reward law estimation error $\widetilde{\mathcal O}\big(n_r^{-\frac{\alpha}{1+d_x+d_a+2\alpha}}\big)$,
the transition kernel estimation error $\widetilde{\mathcal O}\bigl(n_x^{-\frac{\alpha}{2d_x+d_a+2\alpha}}\bigr)$, and the Monte Carlo error
$\widetilde{\mathcal O}\bigl(  \frac{\widetilde n^{\frac{d_x+d_a}{d_x+d_a+2\beta}}}{m}\bigr)$ induced by approximating the learned Bellman operator.
Meanwhile, the value function estimation error satisfies $\varepsilon_{\mathrm{value}}=\widetilde{\mathcal O}\big(n^{-\frac{2\beta}{d_x+d_a+2\beta}}\big)$. 
By choosing appropriate sample sizes \(n_r, n_x\),  and Monte Carlo sample size $m$, the Bellman residual achieves convergence rate
\(\widetilde{\mathcal O}(n^{-\frac{2\beta}{d_x+d_a+2\beta}})\).
Furthermore, under a 
concentrability condition, we have
\[
\mathbb E\|\widehat Q-Q^*\|_{L^2(\rho)}
\lesssim n^{-\frac{\beta}{d_x+d_a+2\beta}}.
\]
\end{theorem}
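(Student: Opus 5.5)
The proof splits the excess Bellman residual into an operator-learning part and a value-learning part, and handles each with existing tools. Write $\widehat{\mathcal T}_m$ for the learned Bellman operator. It uses the diffusion reward law $\widehat P_r(\cdot\mid x,a)$ and transition kernel $\widehat P_x(\cdot\mid x,a)$, approximated by $m$ Monte Carlo draws. Then $\widehat Q$ minimizes the empirical residual $\widehat{\cL}_n(Q)=\frac1n\sum_i (Q(x_i,a_i)-\widehat{\mathcal T}_m Q(x_i,a_i))^2$ over the network class $\cF$.

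\textbf{Step 1: decomposition.} Let $\cL_{\widehat{\mathcal T}}$ denote the population residual built with the learned operator. Inserting and subtracting it, together with an approximant $Q_0\in\cF$ of $Q^*$, I write
\[
\cL_{\mathcal T^*}(\widehat Q)-\cL_{\mathcal T^*}(Q^*)
\le \big[\cL_{\mathcal T^*}-\cL_{\widehat{\mathcal T}}\big](\widehat Q)
+\big[\cL_{\widehat{\mathcal T}}(\widehat Q)-\widehat{\cL}_n(\widehat Q)\big]
+\big[\widehat{\cL}_n(Q_0)-\cL_{\widehat{\mathcal T}}(Q_0)\big]
+\big[\cL_{\widehat{\mathcal T}}-\cL_{\mathcal T^*}\big](Q_0)
+\cL_{\mathcal T^*}(Q_0)-\cL_{\mathcal T^*}(Q^*),
\]
using $\widehat{\cL}_n(\widehat Q)\le \widehat{\cL}_n(Q_0)$.

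\textbf{Step 2: operator terms.} These are the first and fourth brackets. Since $Q$ and $r$ are uniformly bounded, for every bounded $Q$
\[
|\mathcal T^*Q-\widehat{\mathcal T}Q|(x,a)\lesssim \TV(P_r,\widehat P_r)(x,a)+\gamma\|Q\|_\infty\TV(P_x,\widehat P_x)(x,a).
\]
Because $|u^2-v^2|\le (|u|+|v|)|u-v|$, the operator terms are bounded by a constant times $\bE_\rho$ of these TV distances. Importantly, this bound is uniform over $\cF$, so no completeness is needed. I then invoke the conditional diffusion TV rates in the style of \citet{tang2025conditional}. The score function is approximated by ReLU networks, and the denoising score-matching error and early stopping are controlled. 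This gives $n_r^{-\alpha/(1+d_x+d_a+2\alpha)}$ with response dimension $1$, and $n_x^{-\alpha/(2d_x+d_a+2\alpha)}$ with response dimension $d_x$.

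\textbf{Step 3: Monte Carlo error.} Replacing the expectation over $\widehat P_x$ by $m$ samples adds a variance term. Squaring the residual introduces a bias of order $1/m$ per point. The maximum over actions, taken uniformly over $\cF$, brings in a complexity factor of order the pseudo-dimension, about $\widetilde n^{(d_x+d_a)/(d_x+d_a+2\beta)}$. This yields the stated $\widetilde n^{(d_x+d_a)/(d_x+d_a+2\beta)}/m$.

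\textbf{Step 4: value terms.} The second and third brackets form the stochastic error. Residual minimization is not plain regression, because the squared residual has a conditional-variance (double sampling) component. However, the target is a fixed learned operator independent of the value-stage sample (sample splitting), so the excess loss relative to $Q^*$ still satisfies a Bernstein-type variance condition. A localized empirical-process bound with pseudo-dimension $\mathrm{Pdim}(\cF)\lesssim SL\log S$ then gives a fast rate $\mathrm{Pdim}\log n/n$. The approximation term is bounded by $\|Q_0-Q^*\|_\infty^2$ times the Lipschitz constant of $\mathcal T^*$ in sup norm. The H\"older ReLU approximation results give $\|Q_0-Q^*\|_\infty\lesssim N^{-2\beta/(d_x+d_a)}$ up to logs. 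Balancing the network size yields $n^{-2\beta/(d_x+d_a+2\beta)}$.

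\textbf{Step 5: $L^2$ conversion.} Using $Q-Q^*=(I-\mathcal T^*)^{-1}$-type control, I write $Q-Q^*=(Q-\mathcal T^*Q)+(\mathcal T^*Q-\mathcal T^*Q^*)$. The concentrability condition bounds $\|\mathcal T^*Q-\mathcal T^*Q^*\|_{L^2(\rho)}\le \gamma C\|Q-Q^*\|_{L^2(\rho)}$, with the max handled by a greedy-policy envelope. This gives $\|\widehat Q-Q^*\|_{L^2(\rho)}\lesssim (1-\gamma C)^{-1}\|\widehat Q-\mathcal T^*\widehat Q\|_{L^2(\rho)}$. The square root of the residual rate, via Jensen, gives $n^{-\beta/(d_x+d_a+2\beta)}$.

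The \textbf{main obstacle} is Step 4. The loss $\cL_{\mathcal T^*}$ is not the squared distance to $Q^*$, and the max inside $\mathcal T$ destroys linearity. Getting the fast $1/n$-type rate therefore requires verifying the variance–excess-risk inequality for this residual loss. I would first try to prove it through the $L^2$ equivalence of Step 5, run in reverse under concentrability.
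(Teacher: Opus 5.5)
\textbf{Step 5 fails as written.} The bound $\|\mathcal T^*Q-\mathcal T^*Q^*\|_{L^2(\rho)}\le\gamma C\|Q-Q^*\|_{L^2(\rho)}$ needs the successor law of $\rho$, under your greedy comparison policy, to have bounded density with respect to $\rho$ itself. The resulting factor $(1-\gamma C)^{-1}$ is only meaningful when $\gamma C<1$, and no concentrability condition gives that: coefficients are at least $1$ and usually large. You would also still have to move the residual from $L^2(\rho)$ to $L^2(\mu)$. The missing idea is to avoid closing the recursion in a single norm. The paper uses a class $\mathfrak M_\delta\supseteq\mathfrak M$ that is closed under $\nu\mapsto\mathcal P(\nu)\otimes\pi_{Q,\delta}$ and whose members all satisfy $\|d\nu/d\mu\|_\infty\le C_\delta$. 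For every $\nu\in\mathfrak M_\delta$ this gives $\|Q-Q^*\|_{L^2(\nu)}\le\sqrt{C_\delta}\,\|Q-\mathcal T^*Q\|_{L^2(\mu)}+\gamma\|Q-Q^*\|_{L^2(\mathcal P(\nu)\otimes\pi_{Q,\delta})}+\gamma\delta$. Taking the supremum over $\mathfrak M_\delta$ on both sides and rearranging yields the factor $\sqrt{C_\delta}/(1-\gamma)$, with no restriction on $\gamma C_\delta$ and with the residual already in $L^2(\mu)$. The comparison kernel must also be randomized. With continuous actions, a deterministic greedy policy generally produces a law singular with respect to $\mu$. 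The paper therefore spreads $\pi_{Q,\delta}$ uniformly over the $\delta$-greedy set $G_{Q,\delta}(x)$, which is assumed to have positive Lebesgue measure. It uses $|\max_{\tilde a}Q(x,\tilde a)-\max_{\tilde a}Q^*(x,\tilde a)|\le|Q(x,a)-Q^*(x,a)|+\delta$ for $a\in G_{Q,\delta}(x)$, and pays an additive $\gamma\delta/(1-\gamma)$, later balanced by taking $\delta=n^{-\beta/(d_x+d_a+2\beta)}$.

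\textbf{Step 4 leaves the fast rate unproved, and your proposed fix points the wrong way.} Define the population loss with the Monte Carlo operator $\widehat{\mathcal T}^*$ itself, as the paper does. Then there is no double-sampling component: the value stage uses only $(X_i,A_i)$ from $\mathbb D$, and the generated rewards and next states are held fixed across all $Q$. So, conditionally on the operator-stage randomness, $\ell_Q(x,a)=|\widehat{\mathcal T}^*Q(x,a)-Q(x,a)|^2$ is a deterministic function of $(x,a)$. A Bernstein condition relative to $Q^*$ (or $Q_0$) is the wrong target, since $Q^*$ does not minimize $\mathcal L_{\widehat{\mathcal T}^*}$ and the excess risk can be negative. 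Routing it through concentrability would also make the residual bound depend on an assumption the statement reserves for the $L^2$ conversion. What you need is self-bounding: $0\le\ell_Q\le C:=4R_{\max}^2/(1-\gamma)^2$ gives $\ell_Q^2\le C\ell_Q$. Hence $\mathbb E_{\mathbb D}\sup_{Q\in\mathcal Q}\bigl(\mathcal L_{\widehat{\mathcal T}^*}(Q)-2\widehat{\mathcal L}_{\widehat{\mathcal T}^*}(Q)\bigr)$ is controlled by an offset Rademacher complexity of order $\log\mathcal N(\mathcal Q,1/n,\|\cdot\|_\infty)/n$. Use sup-norm covers rather than pseudo-dimension, since $Q\mapsto\max_{a'}Q(\cdot,a')$ is $1$-Lipschitz in sup norm. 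The paper's decomposition is built to exploit this, in three moves:
\begin{itemize}
\item From $\mathcal L_{\mathcal T^*}(Q^*)=0$, it gets $\mathcal L_{\mathcal T^*}(\widehat Q)\le2\|\mathcal T^*\widehat Q-\widehat{\mathcal T}^*\widehat Q\|_{L^2(\mu)}^2+2\mathcal L_{\widehat{\mathcal T}^*}(\widehat Q)$.
\item It splits $\mathcal L_{\widehat{\mathcal T}^*}(\widehat Q)=(\mathcal L_{\widehat{\mathcal T}^*}-2\widehat{\mathcal L}_{\widehat{\mathcal T}^*})(\widehat Q)+2\widehat{\mathcal L}_{\widehat{\mathcal T}^*}(\widehat Q)$.
\item By the $\gamma$-contraction of $\widehat{\mathcal T}^*$, it bounds $\mathbb E_{\mathbb D}\widehat{\mathcal L}_{\widehat{\mathcal T}^*}(\widehat Q)\le\inf_{Q\in\mathcal Q}\mathcal L_{\widehat{\mathcal T}^*}(Q)\le2\|\widehat{\mathcal T}^*Q^*-\mathcal T^*Q^*\|_{L^2(\mu)}^2+2(1+\gamma)^2\inf_{Q\in\mathcal Q}\|Q-Q^*\|_\infty^2$.
\end{itemize}
This route also keeps the operator errors quadratic in TV. Your $|u^2-v^2|\le(|u|+|v|)|u-v|$ gives only first-order TV. That matches the exponents displayed in the informal statement, but it requires larger $n_r,n_x$ to reach $n^{-2\beta/(d_x+d_a+2\beta)}$.
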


\subsection{Related Work}
We review the literature on DRL and diffusion-based methods in RL.

\paragraph{Deep RL.}
DRL uses DNNs to represent value functions or policies, and has been widely studied for high dimensional control and planning problems \citep{mousavi2016deep,li2023deep}. A basic route to value function modeling is to approximate the optimal action-value function $Q^\ast$ through Bellman regression. FQI casts Bellman updates as supervised regression problems \citep{ernst2005tree}, and DFQI extends this procedure to a class of DNNs \citep{fan2020theoretical,kang2023error}. 
DQN combines the 
fitted-value idea with deep representation learning, experience replay, and target networks, achieving strong performance on Atari games \citep{mnih2015human,bellemare2013arcade,hester2018deep,fan2020theoretical}. 
DAPI follows a related iterative framework in which each iteration solves a Bellman equation over a DNN class to obtain an updated value function estimator \citep{jiao2025deep}.
However, these methods rely on recursive fitted targets, so approximation and optimization errors may accumulate across iterations. In addition, max-based updates may cause overestimation bias, which Double DQN reduces by separating action selection from value evaluation \citep{van2016deep}. Despite these improvements, DQN-type methods are mainly suited to discrete action spaces.
Existing non-asymptotic theory for deep value learning has established convergence guarantees under concentrability coefficients and Bellman completeness assumptions 
\citep{chen2019information,fan2020theoretical,feng2023over,jiao2025deep}. 
These assumptions control distribution mismatch and the propagation of recursive Bellman approximation errors. 
In contrast, our framework separates estimation of the reward and
transition laws from value function learning.

The limitation of DQN-type methods in continuous action spaces motivates policy-gradient and actor-critic (AC) methods. In AC methods, the actor updates the policy, while the critic estimates value functions through Bellman-type equations \citep{konda1999actor,grondman2012survey}. Proximal Policy Optimization (PPO) stabilizes policy-gradient updates using a clipped surrogate objective \citep{schulman2017proximal}. Deep Deterministic Policy Gradient (DDPG) and Soft Actor-Critic (SAC) extend AC learning to continuous-control tasks \citep{haarnoja2018soft}. Although these methods improve policy-learning flexibility, their critics still face the challenge of controlling fitted value function errors. 
The error propagation of fitted Bellman targets motivates another line of work that seeks more direct criteria for estimating $Q^\ast$. MSBO and MABO formulate this task through minimax Bellman optimality error objectives, replacing simple recursive regression with adversarial Bellman error minimization \citep{xie2020q}. 
Minimax Bellman error ideas have also been extended to heterogeneous offline RL;
\citet{miao2025reinforcement} estimate individual \(Q\) functions using minimax
Bellman error criteria and combine them with pessimistic personalized policy
learning.
These formulations provide a clearer statistical criterion for value estimation. However, they typically require solving coupled optimization problems over multiple function classes, which can be computationally demanding and sensitive to optimization design \citep{uehara2020minimax,xie2020q}.

A complementary response is to model the environment mechanism before computing values or policies. MBPO learns an ensemble dynamics model and uses short rollouts generated by the model to augment real data for policy optimization, reducing the accumulation of model error over long horizons \citep{janner2019trust}. In offline RL, MOPO introduces rewards penalized by uncertainty to discourage rollouts through unreliable model regions \citep{yu2020mopo}. MOReL constructs a pessimistic MDP by detecting uncertain state action pairs and assigning conservative values outside the reliable region of the model \citep{kidambi2020morel}. COMBO combines data generated by the model with conservative value learning to use synthetic rollouts while controlling overestimation on poorly modeled samples \citep{yu2021combo}. These methods show that estimating rewards and transitions is a useful alternative to direct value regression. Our work follows this environment modeling perspective, but focuses on estimating the Bellman operator from reward laws and transition kernels and analyzing the resulting excess Bellman loss, rather than on policy optimization with learned rollouts.

\paragraph{Diffusion RL.}
Diffusion models have recently been used in RL as flexible generative tools across several settings. They have been applied to conditional decision making and trajectory generation \citep{ajay2023conditional,zhu2023diffusion}, policy representation and behavior regularization in offline RL \citep{wang2022diffusion,lu2023contrastive,chen2022offline,chen2024srpo,mao2024diffusion}, value estimation \citep{mazoure2023value}, and policy improvement or online optimization for continuous control \citep{ren2024diffusion,fang2024diffusion,wang2024diffusion,ma2025efficient,frans2025diffusion}.

For offline decision making, Decision Diffuser treats decision making as conditional generation of state trajectories. It conditions the generated trajectories on returns, constraints, or skills and recovers actions using an inverse dynamics model \citep{ajay2023conditional}. Related diffusion methods use conditional generation or guidance to model complex behavior, select actions with high value, or improve offline policy learning \citep{zhu2023diffusion,lu2023contrastive,chen2022offline,chen2024srpo,mao2024diffusion}. Another group of methods uses diffusion models to represent policies or behavior distributions. Diffusion $Q$-learning represents the policy by a conditional diffusion model and combines behavior cloning with maximization of the $Q$ function \citep{wang2022diffusion}. Contrastive energy prediction learns a guidance function for diffusion sampling and applies it to offline RL \citep{lu2023contrastive}. Selecting from behavior candidates uses a generative behavior model together with action evaluation to avoid actions outside the support of the offline data. Score regularized policy optimization uses the score function of a pretrained diffusion behavior model to regularize policy optimization \citep{chen2022offline,chen2024srpo}. Diffusion DICE uses diffusion guidance within the support of the behavior distribution to move this distribution toward the optimal policy distribution \citep{mao2024diffusion}. These methods use diffusion primarily for action or trajectory generation within policy learning.

Another line combines diffusion policies with RL optimization and policy improvement. Diffusion Policy Policy Optimization fine tunes diffusion policies through policy gradient updates for continuous control and robot learning \citep{ren2024diffusion}. Diffusion AC methods use diffusion policies in actor critic learning. They formulate constrained policy iteration as a diffusion noise regression problem for offline RL \citep{fang2024diffusion}, or introduce an entropy regulator for online RL with maximum entropy \citep{wang2024diffusion}. Soft Diffusion AC uses the state action value function as an energy function to train diffusion policies \citep{ma2025efficient}. Diffusion guidance can also define a controllable policy improvement operator without requiring an explicit value function \citep{frans2025diffusion}. These methods focus on policy learning rather than on estimating the environment laws that define the Bellman operator.

Other adjacent studies address different statistical targets or decision
problems. Diffuser uses diffusion as a trajectory planner rather than to
estimate the environment laws for a single transition
\citep{janner2022planning}.
Diffusion models have also been used to quantify uncertainty in trajectory
planning, construct visual world models, and generate trajectories adapted
to a target policy
\citep{sun2023conformal,alonso2024diffusionworld,fang2024learning}.
They have further been used to estimate action values from sampled future
state sequences and to represent successor state measures through the
Bellman flow equation
\citep{mazoure2023value,schramm2024bellman}.
Beyond diffusion models, related studies estimate transition distributions
with contrastive energy models and analyze offline RL based on learned
models under partial coverage or in tabular MDPs
\citep{chen2024transition,uehara2022pessimistic,li2024settling}.

The work most closely related to ours is the diffusion world model developed by \citet{ding2024diffusionworld}. Both this model and ours use conditional diffusion models to generate future states and rewards for value function learning. The diffusion world model jointly generates multistep future states and rewards and uses them for value estimation and offline policy learning. 
We instead estimate the one-step conditional reward law and transition kernel and plug these estimates into the optimal Bellman operator for $Q^\ast$ estimation.

\subsection{Outlines}

The rest of this paper is organized as follows. Section \ref{sec:preliminaries} introduces preliminaries. 
Section \ref{sec:method} presents the proposed method. Section \ref{sec:theory} establishes the theoretical results.
Section \ref{sec:experiment} reports numerical experiments. 
Section \ref{sec:conclusion} concludes the paper. Proofs of all theoretical results are deferred to the Appendix.

\section{Preliminaries}\label{sec:preliminaries}
\subsection{Markov Decision Process}
A discounted Markov decision process (MDP) \citep{sutton1998reinforcement,agarwal2019reinforcement} consists of a quintuple $(\cX,\cA,\cP,\cR,\gamma)$, where $\cX \subset \mathbb{R}^{d_x} $ denotes the state space, $\cA \subset \mathbb{R}^{d_a}$ denotes the action space, $\cP:\cX\times\cA\to\mathcal{M}(\cX)$ denotes the transition kernel, $\cR(\cdot| {x,a})$ denotes the distribution of the immediate reward $R({x,a}) \in \mathbb{R}$, and $\gamma\in[0,1)$ denotes the discount factor. Here, $\mathcal{M}(\cX)$ denotes the set of probability measures on $(\cX,\cB(\cX))$. For each state--action pair $({x,a})\in\cX\times\cA$, 
the measure $\cP(\cdot| x,a)$ gives the distribution of the next state after taking action $a$ at state $x$. In addition, for every measurable set $D\in\cB(\cX)$, the map $(x,a)\mapsto \cP(D| x,a)$ is measurable. The setting in this paper involves continuous state-action spaces, and, without loss of generality, assumes that $\cX\times\cA=[0,1]^{d_x}\times [0,1]^{d_a}$. Let $\pi(\cdot|x)$ denote a stochastic policy, namely, a probability distribution over actions given the current state $x$. Given an initial distribution $\nu\in\mathcal{M}(\cX)$ with $X_1\sim \nu$, the offline dataset $\{(X_i,A_i,R_i,X_i^\prime)\}_{i=1}^n,$ with $X_i^\prime=X_{i+1},$
satisfies
\[
X_1\sim \nu,~
A_i\sim \pi(\cdot| X_i),~
R_i\sim \cR(\cdot| X_i,A_i),~
X_i^\prime\sim \cP(\cdot| X_i,A_i),~ i=1,\ldots,n.
\]
For theoretical analysis, the sample data are assumed to be independent and identically distributed (i.i.d.), and we let $\mu^\pi\in\mathcal{M}(\cX\times\cA)$ denote the joint distribution of the state-action pair $(X,A)$ under policy $\pi$. For a given policy $\pi$, the action-value function is defined by
\[
Q^\pi(x,a)
:=
\Ebb\!\left[\sum_{t=0}^{\infty}\gamma^t R_{t+1}| \,X_1=x,\ A_1=a,\ \pi\right].
\]
In the general case, we assume that there exists a positive constant $R_{\max}$ such that $R(x,a) \in [0,R_{\max}]$ for each pair $(x,a) \in \cX \times \cA$, then $Q^{\pi}$ take values in $\left[0, \frac{R_{\max}}{1-\gamma}\right]$. 
Suppose there exists a policy $\pi^*$ that maximizes $Q^\pi$ such that $Q^* := Q^{\pi^*}.$ The optimal action-value function $Q^*$ satisfies the optimal Bellman equation $Q^* = \mathcal{T}^* Q^*$, 
where the optimal Bellman operator $\mathcal{T}^*$ is given by
\begin{equation*}  
	\mathcal{T}^* Q(x,a) = \mathbb{E}[R(x,a)] + \gamma \mathbb{E}_{X^{\prime} \sim P(\cdot | x,a)} \max_{a^{\prime} \in \mathcal{A}} Q(X^{\prime}, a^{\prime}).
\end{equation*}
In this work, we consider three independent batch datasets collected under a behavior policy $\pi_b$:
$\mathbb{S}_r:=\{(X_i,A_i,R_i,X_i^\prime)\}_{i=1}^{n_r}$, $\mathbb{S}_x:=\{(X_i,A_i,R_i,X_i^\prime)\}_{i=1}^{n_x}$, and $\bD:=\{(X_i,A_i,R_i,X_i^\prime)\}_{i=1}^n$, where the three collections are independent copies, and $n_r$, $n_x$, and $n$ denote their respective sample sizes.
Throughout the paper, we assume that all observations are independently and identically distributed under the behavior policy $\pi_b$. 
Moreover, we use $\mu^{\pi_b}$ (abbreviated as $\mu$) to denote the joint distribution of the state--action pair $(X,A)$ induced by $\pi_b$.
The datasets $\mathbb{S}_r$ and $\mathbb{S}_x$ are used for optimal Bellman operator learning, while $\bD$ is used for value function learning, as detailed in Section \ref{sec:method}.

\subsection{Diffusion Models}\label{subsec:cdm}
We briefly review diffusion models 
\citep{song2020score,yang2023diffusion,chen2024overview}
and their role in conditional generative modeling. Mathematically, diffusion models are described by a pair of stochastic differential equations (SDEs): a forward diffusion process that gradually perturbs data into noise, and a reverse-time generative process that transforms noise back into samples from the target distribution. The key ingredient in this construction is the score function, which is typically learned through score matching, while sample generation is carried out by numerically discretizing the reverse SDE.

\paragraph{Forward Process.}
The forward process starts from an initial data sample $\bar{Y}_0\sim \zeta$ and gradually corrupts it by injecting Gaussian noise. Its dynamics are governed by the SDE
\begin{equation}\label{eq:Forward-process}
d\bar{Y}_t=f(\bar{Y}_t,t)\,dt+g(t)\,dB_t,\qquad \bar{Y}_0\sim \zeta,
\end{equation}
where $\bar{Y}_t\in\mathbb{R}^d$ denotes the data state at time $t\in[0,T]$, $B_t$ is a standard Brownian motion, and $\zeta$ is the target data distribution. 
The drift term $f:[0,T]\times\mathbb{R}^d\to\mathbb{R}^d$ characterizes the deterministic evolution, while the diffusion coefficient $g:[0,T]\to\mathbb{R}$ controls the stochastic noise injection. As time evolves, the forward process progressively smooths the original data distribution through Gaussian perturbations, eventually mapping $\zeta$ to a tractable prior distribution, typically a standard Gaussian.

\paragraph{Backward Process.}
By the time-reversal theory of diffusion processes \citep{anderson1982reverse,haussmann1986time}, the forward SDE in \eqref{eq:Forward-process} admits a reverse-time representation involving the score function $\nabla\log \zeta_t$, where $\zeta_t$ denotes the marginal distribution of $\bar{Y}_t$. The corresponding reverse-time SDE is given by
\begin{equation*}
d\bar{Y}_t^R =
\Big[f(\bar{Y}_t^R,t)-g^2(t)\nabla \log \zeta_t(\bar{Y}_t^R)\Big]dt
+ g(t)\,d\widetilde{B}_t,
\end{equation*}
where $\widetilde{B}_t$ denotes a standard Brownian motion in reverse time. For numerical implementation, it is convenient to rewrite the reverse dynamics as a forward-time process. Applying the time transformation $t\mapsto T-t$ with a sufficiently large terminal time $T$, we obtain the  SDE on $[0,T]$:
\begin{equation}\label{eq:re-backward-process}
dY_t
=
\Big[-f(Y_t,T-t)+g^2(T-t)\nabla \log \zeta_{T-t}(Y_t)\Big]dt
+g(T-t)\,dB_t,~~ Y_0\sim \zeta_T.
\end{equation}
Here, $Y_t=\bar{Y}_{T-t}$. Starting from the prior distribution $\zeta_T$ at time $t=0$, the process evolves toward $Y_T$, which approximates a sample from the target distribution $\zeta$. Therefore, the main task is to learn the time-dependent score function $\nabla \log \zeta_{T-t}(Y_t)$ from data.

Conditional diffusion models generalize this framework by replacing the target distribution $\zeta$ with a conditional distribution $\zeta(\cdot| X)$ depending on a covariate $X$. 
In this case, both the forward and reverse diffusion processes become conditional on $X$, and the corresponding score function changes to
$
\nabla \log \zeta_t(\cdot| X).
$
Consequently, the drift term in the reverse SDE also depends on the conditioning variable $X$. 
Since the conditional score function is unknown in practice, it must be estimated from data. 
A standard approach is score matching \citep{hyvarinen2005estimation,vincent2011connection}, where the score function is parameterized by a DNN and learned by minimizing a denoising objective.
Once the score estimator is obtained, samples from the target conditional distribution can be generated by numerically solving the reverse SDE \eqref{eq:re-backward-process}. 
In practice, this is typically implemented through stochastic discretization schemes such as the Euler--Maruyama (EM)   method, which yields the final sampling procedure for the conditional diffusion model.


\subsection{Deep Neural Networks} \label{section:ReLU-DNNs}
We now introduce DNNs equipped with Rectified Linear Unit (ReLU) activation functions. A ReLU DNN is given by
\[
h_{\boldsymbol{\theta}}(x)=\mathcal{A}_{\mathcal{D}}\circ\sigma\circ\mathcal{A}_{\mathcal{D}-1}\circ\sigma\circ\cdots\circ\sigma\circ\mathcal{A}_1\circ\sigma\circ\mathcal{A}_0(x)\in\mathbb{R}^q,\qquad x\in\mathbb{R}^d,
\]
where $\sigma(x)=\max\{0,x\}$ denotes the ReLU activation function applied element-wise, $\boldsymbol{\theta}$ denotes the entire set of network parameters, and $\mathcal{D}$ denotes the depth of the network. For each $i=0,1,\ldots,\mathcal{D}$, the affine transformation $\mathcal{A}_i(\cdot)$ is defined as
\[
\mathcal{A}_i(x)=\mathbf{M}_i x+\boldsymbol{b}_i,\qquad \mathbf{M}_i\in\mathbb{R}^{k_{i+1}\times k_i},\qquad \boldsymbol{b}_i\in\mathbb{R}^{k_{i+1}},
\]
where $(k_0,k_1,\ldots,k_{\mathcal{D}},k_{\mathcal{D}+1})$ denotes the sequence of layer dimensions. Specifically, $k_0=d$ corresponds to the input dimension, and $k_{\mathcal{D}+1}=q$ corresponds to the output dimension. We write $h_{\boldsymbol{\theta}}=(h_{\boldsymbol{\theta},1},\ldots,h_{\boldsymbol{\theta},q})^\top$, where $h_{\boldsymbol{\theta},j}$ denotes the $j$-th component of $h_{\boldsymbol{\theta}}$. Consequently, the network consists of $\mathcal{D}$ hidden layers and a total of $\mathcal{D}+1$ layers. Accordingly, the parameter set of the DNN $h_{\boldsymbol{\theta}}(\cdot)$can be explicitly written as
\[
\boldsymbol{\theta}:=\big((\mathbf{M}_0,\boldsymbol{b}_0),(\mathbf{M}_1,\boldsymbol{b}_1),\ldots,(\mathbf{M}_{\mathcal{D}},\boldsymbol{b}_{\mathcal{D}})\big).
\]
We denote the number of non-zero elements in $\boldsymbol{\theta}$ as
\[
\|\boldsymbol{\theta}\|_0=\sum_{i=0}^{\mathcal{D}}\Big(\|\operatorname{vec}(\mathbf{M}_i)\|_0+\|\boldsymbol{b}_i\|_0\Big),
\]
and the largest absolute value among all parameters in $\boldsymbol{\theta}$ as
\[
\|\boldsymbol{\theta}\|_\infty=\max\left\{\max_{i\in\{0,\ldots,\mathcal{D}\}}\|\operatorname{vec}(\mathbf{M}_i)\|_\infty,\max_{i\in\{0,\ldots,\mathcal{D}\}}\|\boldsymbol{b}_i\|_\infty\right\},
\]
where $\operatorname{vec}(\mathbf{M})$ denotes the column-wise vectorization of the matrix $\mathbf{M}$. For convenience, we define the width $\mathcal{W}$, the size $\mathcal{S}$, the weight bound $\mathcal{B}$, and the uniform function bound $C_{\text{NN}}$ of the network as
\[
\mathcal{W}=\max\{k_1,\ldots,k_{\mathcal{D}}\},~~\mathcal{S}=\|\boldsymbol{\theta}\|_0,~~\|\boldsymbol{\theta}\|_\infty\leq\mathcal{B},~~
\max_{j\in[q]}\|h_{\boldsymbol{\theta},j}\|_\infty\leq C_{\text{NN}},
\]
respectively, where $[q]:=\{1,\ldots,q\}$ and $\|h_{\boldsymbol{\theta},j}\|_\infty:=\sup\limits_{x\in[0,1]^d}|h_{\boldsymbol{\theta},j}(x)|$. 

\begin{definition}[H{\"o}lder Class]\label{def:holder}
For $s>0$ with $s=r+\tau$, where $r\in\mathbb{N}_0$, $\tau\in(0,1]$, and $d\in\mathbb{N}$, define the H{\"o}lder class $\mathcal{H}^{s}(\mathbb{R}^d,\Sigma)$ by
\[
\mathcal{H}^{s}(\mathbb{R}^d,\Sigma):=\left\{g:\mathbb{R}^d\to\mathbb{R}:\max_{\|\xi\|_1\le r}\|\partial^\xi g\|_\infty\le\Sigma,\ \max_{\|\xi\|_1=r}\sup_{x\ne y}\frac{|\partial^\xi g(x)-\partial^\xi g(y)|}{\|x-y\|^\tau}\le\Sigma\right\}.
\]
For the domain $[0,1]^d\subseteq\mathbb{R}^d$, write
\[
\mathcal{H}^{s}:=\left\{g:[0,1]^d\to\mathbb{R}:g\in\mathcal{H}^{s}(\mathbb{R}^d,\Sigma)\right\}.
\]
\end{definition}

\begin{definition}[Total Variation Distance]
	Let $\nu_1$ and $\nu_2$ be two probability distributions over a sample space $\Omega \subseteq \mathbb{R}^d$. The total variation  distance between $\nu_1$ and $\nu_2$ is defined by
\[
\TV(\nu_1,\nu_2)=\sup_{D\in\mathcal{B}(\Omega)}\left|\nu_1(D)-\nu_2(D)\right|.
\]
	For a function $g: \Omega \to \mathbb{R}$, let $\| g \|_{\infty, \Omega} := \sup\limits_{x \in \Omega} |g(x)|$. Then, an equivalent characterization of the $\TV$ distance is
	\[
	\TV(\nu_1, \nu_2) = \frac{1}{2}\sup_{\substack{\| g \|_{\infty, \Omega} \leq 1}} \left| \int_{\Omega} g(x) \nu_1(dx) - \int_{\Omega} g(x) \nu_2(dx) \right|.
	\]
\end{definition}

\begin{definition}[Covering Number]
Let $\mathcal{G}$ be a set endowed with a semi-metric $\tilde{\rho}$. For any $\varepsilon > 0$, the $\varepsilon$-covering number $\mathcal{N}(\mathcal{G}, \varepsilon, \tilde{\rho})$ of $\mathcal{G}$ with respect to $\tilde{\rho}$ is defined as
\[
\mathcal{N}(\mathcal{G}, \varepsilon, \tilde{\rho})
= \inf\biggl\{k\in\mathbb{N} \,\bigg|\,
\exists\,g_1,g_2,\dots,g_k\in\mathcal{G},\text{ such that }
\forall\,\tilde{g}\in\mathcal{G},\ \min_{1\le j\le k}\tilde{\rho}(\tilde{g},g_j)\le \varepsilon\biggr\}.
\]
\end{definition}

\begin{definition}[Admissible distributions \citep{chen2019information}]
A distribution \(\rho\in\mathcal M(\cX\times\cA)\) is called admissible if there exist \(h\ge 1\) and a possibly nonstationary randomized policy $\pi$
such that \(\rho(B)=\mathbb P \big((X_h,A_h)\in B| X_1\sim\nu,\pi \big)\) for every \(B\in\mathcal B(\cX\times\cA)\), where \(A_h\sim\pi(\cdot| X_h)\) and \(X_{h+1}\sim\cP(\cdot| X_h,A_h)\). 
Fix in advance a nonempty restricted class $\Pi_{\mathrm{tar}}$ of possibly nonstationary randomized policies, rather than the class of all policies with Lebesgue-density action kernels. For $\pi\in\Pi_{\mathrm{tar}}$ and $h\geq1$, let $\rho_h^\pi$ denote the law of $(X_h,A_h)$ under $X_1\sim\nu$ and $\pi$, and set  $\mathfrak M:=\{\rho_h^\pi:h\geq1,\ \pi\in\Pi_{\mathrm{tar}}\}.$

\end{definition}

\subsection{Notations}
We introduce some additional notation used throughout the paper. For any $u,v\in\mathbb{R}$, write $u=\mathcal{O}(v)$ or $u\lesssim v$ if $u\le Cv$ for some constant $C>0$, and write $u=\Omega(v)$ or $u\gtrsim v$ if $u\ge C^{\prime}v$ for some constant $C^{\prime}>0$. Moreover, write $u\asymp v$ if both $u\lesssim v$ and $u\gtrsim v$ hold. 
We also use $\widetilde{\mathcal O}$ to suppress logarithmic factors.
For any $s,t\in\mathbb{R}$, let $\lceil s\rceil$ and $\lfloor s\rfloor$ denote the ceiling and floor of $s$, and define $s\vee t:= \max\{s,t\}$ and $s\wedge t:= \min\{s,t\}$.
Let $\mathbb{N}_0$ and $\mathbb{N}$ denote the sets of non-negative integers and positive integers, respectively. For a vector $\boldsymbol{z}=(z_1,\ldots,z_d)^\top\in\mathbb{R}^d$ and $q\in[1,\infty)$, define   $\|\boldsymbol{z}\|_q:=(\sum_{j=1}^d |z_j|^q)^{1/q}$, and define $\|\boldsymbol{z}\|_\infty:=\max_{1\le j\le d}|z_j|$ for $q=\infty$. In particular, $\|\boldsymbol{z}\|_2:=(\sum_{j=1}^d z_j^2)^{1/2}$ denotes the Euclidean norm. For a probability measure $\nu$ and a measurable function $h:\mathbb{R}^d\to\mathbb{R}$,  define the $L^q(\nu)$ norm by  $\|h\|_{L^q(\nu)}^q:=\Ebb_{z\sim\nu}|h(z)|^q$.


\section{Method}\label{sec:method}
In this section, we introduce the proposed framework for offline deep $Q^\ast$ estimation based on conditional diffusion models. 
Recall that the optimal action-value function $Q^\ast$ is characterized as the fixed point of the optimal Bellman equation. 
This naturally motivates estimating $Q^\ast$ through Bellman residual minimization. 
We then define the population Bellman residual loss
\begin{equation}\label{eq:loss}
\mathcal{L}_{\mathcal{T}^\ast}(Q)
:=
\mathbb{E}_{(X,A)\sim \mu}
\left|
\mathcal{T}^\ast Q(X,A)-Q(X,A)
\right|^2,
\end{equation}
where $\mathcal{T}^\ast$ is the optimal Bellman operator and $\mu$ is the data generation  distribution of 
state-action pair $(X,A)$. 
By definition of the Bellman equation,
\[
Q^\ast
\in
\arg\min_{Q}
\mathcal L_{\mathcal T^\ast}(Q).
\]
 A fundamental difficulty is that the optimal Bellman operator $\mathcal T^\ast$ is unknown, since both the reward law and the transition kernel are unavailable in offline RL. 
This observation motivates learning these conditional distributions directly from offline data and then constructing a data-driven approximation of the optimal Bellman operator. 
To this end, we use conditional diffusion models to estimate the reward law and transition kernel, as described in Section \ref{sec:rl-diffusion-learning}. 
The resulting estimators induce an estimated optimal Bellman operator, which is subsequently incorporated into the 
value-learning procedure in Section \ref{sec:rl-vf-learning}. 
Accordingly, the proposed method requires three independent batch datasets generated under a behavior policy $\pi_b$:
$ 
\mathbb S_r
=
\{(X_i,A_i,R_i,X_i^{\prime})\}_{i=1}^{n_r},
~
\mathbb S_x
=
\{(X_i,A_i,R_i,X_i^\prime)\}_{i=1}^{n_x},
~
\mathbb D
=
\{(X_i,A_i,R_i,X_i^\prime)\}_{i=1}^n.
$ 
The datasets $\mathbb S_r$ and $\mathbb S_x$ are used for reward law and transition kernel learning, respectively, while $\mathbb D$ is used for value function learning. 
Next, we present the detailed construction of the optimal Bellman operator learning and the corresponding value-learning procedure in the following two sections.

\subsection{Optimal Bellman Operator Learning}\label{sec:rl-diffusion-learning}

To learn the optimal Bellman operator $\mathcal T^\ast$, it is sufficient to estimate the underlying reward law and transition kernel that define the Bellman equation. 
Consequently, we use conditional diffusion models to learn these conditional distributions from offline datasets.  Specifically, the reward distribution ${\mathcal{R}(\cdot|x,a)}$ is estimated from the batch dataset 
{$\mathbb{S}_r$}, while the transition distribution $\mathcal{P}(\cdot|x,a)$ is learned using $\mathbb{S}_x 
$. The resulting approximated counterparts are denoted as $\widehat{\mathcal{R}}(\cdot|x,a)$ and $\widehat{\mathcal{P}}(\cdot|x,a)$, respectively. 
For notational convenience, let $\mathbb S:=\mathbb S_r\cup\mathbb S_x$ denote the combined dataset. 
We write $\boldsymbol w:=(x,a) \in \mathbb{R}^{d_w}$ for a fixed state--action observation with dimension $d_w:=d_x+d_a$.
Once these two conditional models have been learned, we can generate rewards and next states for any given state-action pair $\boldsymbol{w}$ 
by sampling from $\widehat{\mathcal{R}}(\cdot|\boldsymbol{w})$ and $\widehat{\mathcal{P}}(\cdot|\boldsymbol{w})$. Since the constructions of the reward and transition models are parallel, 
we focus on the transition model below; the reward model  can be developed analogously.  

To estimate the transition kernel, we let 
$p(\cdot|\boldsymbol w)$ denote the  
distribution of the next state   
given current state-action pair $\boldsymbol w$, that is, 
{$p(\cdot|\boldsymbol w)
\equiv \mathcal P(\cdot|\boldsymbol{w} )
$,
}
and introduce the forward diffusion process
\begin{equation*}
d \bar{\boldsymbol{x}}_t =-2 \bar{\boldsymbol{x}}_t dt+2d B_t,~~
\bar{\boldsymbol{x}}_0 \sim p(\cdot|\boldsymbol{w})
,~~ t \in [0,T].
\end{equation*}
Then,  the corresponding  reverse process ${\boldsymbol{x}}_t:=\bar{\boldsymbol{x}}_{T-t}$ is given by
\begin{equation}\label{eq:r-Back-process}
	d {\boldsymbol{x}}_t=[2 {\boldsymbol{x}}_t+4\nabla \log p_{T-t}({\boldsymbol{x}}_t|\boldsymbol{w})]dt+2d
    {B_t}.
\end{equation}
Here, $p_t(\cdot | \boldsymbol w)$ denotes the conditional density function of $\bar{\boldsymbol x}_t$, and $\nabla\log p_t(\cdot | \boldsymbol w)$ denotes the corresponding conditional score function to be estimated.

\paragraph{Score Estimation.}
The reverse process described in \eqref{eq:r-Back-process} theoretically transforms noise into samples from the target conditional distribution $p(\boldsymbol{x}|\boldsymbol{w})$. However, direct simulation is intractable because the true conditional score function $\nabla_{\boldsymbol{x}} \log p_t(\boldsymbol{x}|\boldsymbol{w})$ is unknown, rendering the drift term $\boldsymbol{s}^*(t, \boldsymbol{x}, \boldsymbol{w}) := \nabla_{\boldsymbol{x}} \log p_t(\boldsymbol{x}|\boldsymbol{w})$ uncomputable. To address this, we employ score matching techniques \citep{hyvarinen2005estimation,vincent2011connection} to learn an approximate score estimator $\widehat{\boldsymbol{s}}(t, \boldsymbol{x}, \boldsymbol{w})$. 
We define the following loss function
$$\mathcal{L}(\boldsymbol{s}) := \frac{1}{T - 2T_1} \int_{T_1}^{T- T_1} \mathbb{E}_{\bar{\boldsymbol{x}}_t, \boldsymbol{W}} \| \boldsymbol{s}(t, \bar{\boldsymbol{x}}_t, \boldsymbol{W}) - \nabla\log p_t(\bar{\boldsymbol{x}}_t|\boldsymbol{W}) \|^2 dt,$$
where $T_1 \in (0,T)$ is introduced as a time truncation parameter to avoid instability near the endpoints of the diffusion process, 
and  $\boldsymbol W:=(X,A)$ denotes the  random state--action pair. 
Using denoising score matching,
the objective can be equivalently expressed, up to a constant, as
\begin{align*}
\mathcal L(\boldsymbol s)
&=\frac{1}{T-2T_1}
\int_{T_1}^{T-T_1}
\mathbb E_{\bar{\boldsymbol x}_0, \boldsymbol W}
\mathbb E_{\bar{\boldsymbol x}_t|\bar{\boldsymbol x}_0,\boldsymbol W}
\left\|\boldsymbol s(t,\bar{\boldsymbol x}_t,\boldsymbol W) +\frac{\bar{\boldsymbol x}_t-\bar{\boldsymbol x}_0 \mu_t}{\sigma_t^2}
\right\|^2 dt \\
&=\frac{1}{T-2T_1} \int_{T_1}^{T-T_1}
\mathbb E_{\bar{\boldsymbol x}_0, \boldsymbol W}
\mathbb E_{Z}
\left\|\boldsymbol s(t,\bar{\boldsymbol{x}}_0 \mu_t + Z\sigma_t ,\boldsymbol W) +\frac{Z}{\sigma_t}\right\|^2 dt,
\end{align*}
where $\mu_t=e^{-2t}$ and $\sigma_t^2=1-e^{-4t}$. 
For the empirical objective, write $\boldsymbol W_i:=(X_i,A_i),~\bar{\boldsymbol x}_{0,i}=X_i^{\prime},$ of the $i$-th transition in $\mathbb S_x$. Let $\{({t_j},Z_j)\}_{j=1}^{m_x}$ be i.i.d. auxiliary samples with 
$t_j\sim U[T_1,T-T_1]$ and $Z_j\sim\mathcal N(0,\mathbf{I}_{d_x})$. 
The estimator $\widehat{\boldsymbol s}$ is given by
\begin{equation}\label{eq:score-hat}
\widehat{\boldsymbol{s}} \in \underset{\boldsymbol{s} \in {\cG }}{\arg\min}   ~ \widehat{\mathcal{L}}(\boldsymbol{s}) 
:= \frac{1}{m_x n_x} \sum_{j=1}^{m_x} \sum_{i=1}^{n_x} \left\| \boldsymbol{s}(t_j, \bar{\boldsymbol x}_{0,i} \mu_{t_j} + Z_j\sigma_{t_j}, \boldsymbol{W}_i) +\frac{Z_j}{\sigma_{t_j}} \right\|^2,
\end{equation}
where $\cG$ denotes ReLU DNNs in Section \ref{section:ReLU-DNNs}. The resulting drift estimator is then given by
$\widehat{b}(t,\boldsymbol{x},\boldsymbol{w})=4\widehat{\boldsymbol{s}}(t,\boldsymbol{x},\boldsymbol{w})+2\boldsymbol{x}$.

\paragraph{EM Discretization.}
With the   estimated drift term $\widehat{b}$, 
we can define an SDE initializing from the prior distribution $p_{T_1}(\boldsymbol{x}|\boldsymbol{w})$, i.e., 
\begin{align}
	\label{eq:r-hat-model}
	d\widehat{\boldsymbol{x}}_t &=
	\widehat{b}(T-t,\widehat{\boldsymbol{x}}_t,\boldsymbol{w})d t+2 d B_t,\notag \\
	\widehat{\boldsymbol{x}}_{T_1} & \sim p_{T-T_1}(\boldsymbol{x} |\boldsymbol{w}),~~ T_1\leq t \leq T-T_1.
\end{align}
Subsequently, we can employ a discrete-approximation method for this dynamics \eqref{eq:r-hat-model}. Let
\[
T_1 = {t_0 }< t_1 < \cdots < t_K = T-T_1, \quad K \in \mathbb{N}^+,
\]
be the discretization points on $[T_1, T-T_1]$. We consider the explicit EM scheme:
\begin{align}
	\label{eq:r-tilde-model}
	d\widetilde{\boldsymbol{x}}_t &= \widehat{b}(T - t_k, \widetilde{\boldsymbol{x}}_{t_k}, \boldsymbol{w})dt + 2dB_t, \notag \\
	\widetilde{\boldsymbol{x}}_{t_0} &= \boldsymbol{x}_{T_1} \sim p_{T-T_1}(\boldsymbol{x}|\boldsymbol{w}), \quad t \in [t_k, t_{k+1}],
\end{align}
for $k = 0, 1, \cdots, K - 1$. However, the conditional distribution $p_{T-T_1}(\boldsymbol{x}|\boldsymbol{w})$ is unknown, rendering it impossible to utilize dynamics \eqref{eq:r-tilde-model} for generating new samples. Thus, we substitute $p_{T-T_1}(\boldsymbol{x}|\boldsymbol{w})$ with $\mathcal{N}(\mathbf{0}, \mathbf{I}_{d_x})$ since $p_{T-T_1}(\boldsymbol{x}|\boldsymbol{w})$ converges to $\mathcal{N}(\mathbf{0}, \mathbf{I}_{d_x})$ as $T$ tends to infinity. This adaptation presents a practically viable strategy. Therefore, the sampling dynamics take the following form:
\begin{align}
	\label{eq:r-check-model}
	d\check{\boldsymbol{x}}_t &= \widehat{b}(T - t_k, \check{\boldsymbol{x}}_{t_k}, \boldsymbol{w})dt + 2dB_t, \notag \\
	\check{\boldsymbol{x}}_{t_0} &\sim \mathcal{N}(\mathbf{0}, \mathbf{I}_{d_x}), \quad t \in [t_k, t_{k+1}]. 
\end{align}
Consequently, we can employ the dynamics in \eqref{eq:r-check-model} to generate new samples that are approximately distributed according to the target conditional distribution. 
We work with the state space \(\mathcal X=[0,1]^{d_x}\) and reward space \([0,R_{\max}]\).  We therefore truncate the generated samples to their respective domains. For \(l<u\), define $\operatorname{CLIP}(z,l,u):=\min\{u,\max\{l,z\}\}$, 
where the operation is applied componentwise when \(z\) is a vector. 
We present the conditional diffusion model for transition kernel
learning in Algorithm \ref{algo:CDM}.
\begin{algorithm}[H] 
	\caption{Transition Kernel Learning via Conditional Diffusion Models}
	\label{algo:CDM}
	\begin{algorithmic}[1]
		\REQUIRE $N$, 
		$\mathbb S_x$,  $\{(t_j, Z_j)\}_{j=1}^{m_x} \sim \text{U}[T_1,T-T_1] ~\mbox{and}~\mathcal{N}(Z;\mathbf{0},\mathbf{I}_{d_x})$.
		\STATE 
        Obtain the score estimation $\widehat{\boldsymbol{s}}$ by \eqref{eq:score-hat}
		\STATE Initialize $\check{\boldsymbol{x}}_{t_0} \sim \mathcal{N}(\mathbf{0},\mathbf{I}_{d_x})$.
		\FOR{$k = 0$ \TO $K-1$}
		\STATE Sample $\boldsymbol{\epsilon}_{k}\sim{\mathcal{N}}(\mathbf{0},\mathbf{I}_{d_x})$.
		\STATE 
        Update $\check{\boldsymbol{x}}_{t_{k+1}} \leftarrow \check{\boldsymbol{x}}_{t_k}+
        \widehat{b}(T - t_k, \check{\boldsymbol{x}}_{t_k}, \boldsymbol{w})
        (t_{k+1}-t_k) 
        +2 \sqrt{t_{k+1}-t_k}\boldsymbol{\epsilon}_{k}$;
		\ENDFOR
		\STATE
		Output $\operatorname{CLIP}(\check{\boldsymbol{x}}_{t_K},0,1)$.
	\end{algorithmic}
\end{algorithm}

\subsection{Value Function Learning}\label{sec:rl-vf-learning}
With the learned reward law and transition kernel 
$\widehat{\mathcal R}(\cdot |  x,a)$ 
and $\widehat{\mathcal P}(\cdot |  x,a)$,  
constructed in Section \ref{sec:rl-diffusion-learning}, we can draw $m$ samples from these conditional distributions for any given state--action pair $(x,a)$, denoted by 
$ \{\widehat R_j \}_{j=1}^m\sim \widehat{\mathcal R}(\cdot |  x,a),~ \{\widehat X_j^\prime\}_{j=1}^m \sim \widehat{\mathcal P}(\cdot | x,a).$
For each fixed $(x,a)$, the empirical operator uses the same reward and next-state samples for all candidate $Q$-functions.
Therefore, we can define the empirical optimal Bellman operator:
\begin{equation}\label{eq:T-hat}
	\widehat{\mathcal{T}}^*Q(x,a):= \frac{1}{m}\sum_{j=1}^m\widehat{R}_j +\frac{\gamma}{m} \sum_{j=1}^m\max_{a^{\prime}\in\mathcal A} Q(\widehat{X}^{\prime}_j,a^{\prime}).
\end{equation}
We can verify that the empirical optimal Bellman operator $\widehat{\mathcal T}^\ast$ is a $\gamma$-contraction (See Lemma \ref{lem:T-hat-fixed-point}). 
Accordingly, 
we define the corresponding   Bellman residual loss:
\begin{equation}\label{eq:loss-modified}
	\mathcal{L}_{{\widehat{\mathcal T}}^*}(Q):=
	\mathbb{E}_{(X,A)\sim \mu }\left|\widehat{\mathcal T}^*Q(X,A)-Q(X,A)\right|^2.
\end{equation}
Given the batch dataset $\mathbb D$, we define the final estimator $\widehat Q$ as the empirical risk minimizer associated with the 
empirical optimal Bellman operator $\widehat{\mathcal T}^\ast$. 
Specifically,
\begin{equation}\label{eq:empirical-learned}
\widehat{Q}\in \arg\min_{Q\in { \cQ}}\widehat{\mathcal{L}}_{\widehat{\mathcal T}^*}(Q):=\frac{1}{n}\sum_{i=1}^n \left|\widehat{\mathcal T}^*Q(X_i,A_i)-Q(X_i,A_i)\right|^2.
\end{equation}
Here, $\cQ$ denotes the class of ReLU DNNs used to estimate the optimal action-value function $Q^\ast$. By the definition of the MDP, the action-value function is bounded above by $\frac{R_{\max}}{1-\gamma}$. Consequently, without loss of generality, we restrict $\cQ$ to functions uniformly bounded by $\frac{R_{\max}}{1-\gamma}$.

Therefore, in the proposed framework, once the reward model and transition model are learned, the optimal Bellman operator can be estimated accordingly. The remaining step then reduces to estimating $Q^\ast$ through Bellman residual minimization under the learned Bellman operator.  
We summarize the overall structure of the proposed method in the following Algorithm \ref{algo:ODQDM}.
\begin{algorithm}[H] 
	\caption{Offline Deep $Q^*$ Estimation with Diffusion Models}
	\label{algo:ODQDM}
    \begin{algorithmic}[1]
    \REQUIRE   Batch datasets $\bS_r$, $\bS_x$, $\bD$, and sample size $m$.
\ENSURE Estimator $\widehat Q$ of the optimal action-value function $Q^\ast$.
        \STATE  
Learn the reward law and transition kernel via Algorithm \ref{algo:CDM}  using the datasets $\mathbb S_r$ and $\mathbb S_x$, respectively.
\STATE 
Construct the estimator of the optimal Bellman operator, $\widehat{\mathcal T}^\ast$, via \eqref{eq:T-hat}.
\STATE Obtain the estimator of the optimal action-value function, $\widehat Q$, via \eqref{eq:empirical-learned}.
 \end{algorithmic}
\end{algorithm}

Algorithm~\ref{algo:ODQDM} provides a concise description of the proposed estimation procedure. The reward law and transition kernel are first estimated from offline data and then incorporated into an empirical approximation of the optimal Bellman operator. Specifically, for each state and action pair \((x,a)\), the learned transition model generates \(m\) next-state samples \(\{\widehat X'_j(x,a)\}_{j=1}^{m}\). For each sampled next state \(\widehat X'_j(x,a)\), the maximization step solves $a^{\max}_j \in
\arg\max\limits_{a'\in\mathcal A}
Q\bigl(\widehat X'_j(x,a),a'\bigr)$ and substitutes \(Q(\widehat X'_j(x,a),a^{\max}_j)\) into the empirical Bellman backup. The resulting maximum values are averaged over the \(m\) next-state samples and combined with the Monte Carlo average of the generated rewards to form \(\widehat{\mathcal T}^{\ast}Q(x,a)\). 
In this work, we consider a continuous action space. Hence, the maximization involved in the estimated Bellman operator is generally approximated numerically, for example, through action discretization or gradient-based optimization. 
Our theoretical analysis assumes access to the exact maximization operator.  
An approximate action optimization procedure introduces an additional optimization error, which is not considered in the present analysis. 
Finally, the  estimator \(\widehat Q\) is obtained by empirical Bellman residual minimization with respect to the learned operator \(\widehat{\mathcal T}^{\ast}\).

Existing offline RL methods usually use data or learned models in different ways. Fitted value learning methods, such as FQI, DQN, DFQI, and DAPI, estimate \(Q^\ast\) through recursive Bellman regression \citep{ernst2005tree,mnih2015human,fan2020theoretical,jiao2025deep}.
Minimax Bellman error methods, such as MSBO and MABO, replace recursive regression with optimization over Bellman error criteria and auxiliary function classes \citep{xie2020q}. Model based offline RL methods learn reward or transition models, but these models are typically used for rollout generation, uncertainty penalization, or conservative policy optimization \citep{janner2019trust,yu2020mopo,kidambi2020morel,yu2021combo}. In contrast, the proposed method uses the learned reward and transition laws directly to construct the optimal Bellman operator. This leads to a simple final stage: after the conditional diffusion models are trained, \(Q^\ast\) is estimated by a standard empirical risk minimization problem. The use of conditional diffusion models further allows flexible modeling of complex reward and transition laws, including continuous and potentially multimodal conditional distributions, without adding rollout trajectories generated by the model to the replay buffer.

\section{Theoretical Analysis}\label{sec:theory}
In this section, we develop the theoretical analysis of the proposed method. Our main goal is to derive a value function estimation error bound for the estimator \(\widehat Q\). 
The analysis uses a randomized comparison kernel supported on a $\delta$-greedy comparison set of positive Lebesgue measure. Under Assumption~\ref{apt:concentrability}, we establish that
\[
\|\widehat Q-Q^*\|_{L^2(\rho)}\leq\frac{\sqrt{ C_\delta}}{1-\gamma}\|\widehat Q-\cT^*\widehat Q\|_{L^2(\mu)}+\frac{\gamma\delta}{1-\gamma}.
\]
Here, $\mu$ is the data generating distribution of $(X,A)$, $\rho$ is the target distribution used to evaluate the value function error, and ${C_\delta}$ quantifies the mismatch between the data distribution and the target and successor comparison distributions.
This result shows 
that bounding the 
value function estimation error reduces to controlling the Bellman residual, i.e., 
$
\mathcal L_{\mathcal T^\ast}(\widehat Q)
-
\mathcal L_{\mathcal T^\ast}(Q^\ast).
$
As described in Section 
\ref{sec:method}, our method consists of two key components: learning the reward law and transition kernel through conditional diffusion models, and then estimating the value function through Bellman residual minimization with the learned Bellman operator. 
Therefore, the final error depends jointly on the accuracy of the learned reward and transition models and the  error from value function learning. 
To make this interaction explicit, 
we decompose the Bellman residual using the following lemma (see
Appendix~\ref{sec:main-results} for the proof).
\begin{lemma}[Error decomposition]
\label{lem:error-dec}
Conditional on $\bS,\bT,\bZ$, the estimator $\widehat Q$ satisfies
\begin{align*}
\bE_{\bD}\big[\mathcal L_{\mathcal T^*}(\widehat Q)-\mathcal L_{\mathcal T^*}(Q^*)\big]
&\leq
\underbrace{2
\bE_{\bD}\left[\|\mathcal T^*\widehat{Q}-\widehat{\mathcal T}^*\widehat{Q}\|_{L^2(\mu)}^2\right]}_{\mbox{Part}~\textbf{I}}
+\underbrace{8\|\widehat{\mathcal T}^*Q^*-\mathcal T^*Q^*\|_{L^2(\mu)}^2}_{\mbox{Part}~\textbf{II}}\\
&~ +\underbrace{2\bE_{\bD}\sup_{Q\in\mathcal Q}\big(\mathcal L_{\widehat{\mathcal T}^*}(Q)-2\widehat{\mathcal L}_{\widehat{\mathcal T}^*}(Q)\big)+8(1+\gamma)^2\inf_{Q\in\mathcal Q}\|Q-Q^*\|_\infty^2}_{\mbox{Part}~\textbf{III}}. 
\end{align*}
\end{lemma}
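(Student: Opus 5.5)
Conditional on $\bS,\bT,\bZ$, the operator $\widehat{\mathcal T}^*$ is a fixed (non-random w.r.t.\ $\bD$) map, so the argument is the standard ERM comparison run twice: once from the true loss to the surrogate loss, and once from the surrogate back to the true loss at an approximant of $Q^*$. Since $\mathcal T^*Q^*=Q^*$, we have $\mathcal L_{\mathcal T^*}(Q^*)=0$, so it suffices to bound $\bE_{\bD}\mathcal L_{\mathcal T^*}(\widehat Q)$.

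\textbf{Step 1 (true to surrogate loss).} By $(u+v)^2\le 2u^2+2v^2$ applied to $\widehat Q-\mathcal T^*\widehat Q=(\widehat Q-\widehat{\mathcal T}^*\widehat Q)+(\widehat{\mathcal T}^*\widehat Q-\mathcal T^*\widehat Q)$,
\[
\mathcal L_{\mathcal T^*}(\widehat Q)\le 2\|\mathcal T^*\widehat Q-\widehat{\mathcal T}^*\widehat Q\|_{L^2(\mu)}^2+2\mathcal L_{\widehat{\mathcal T}^*}(\widehat Q).
\]
The first term gives Part \textbf{I} after taking $\bE_{\bD}$.

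\textbf{Step 2 (surrogate loss at $\widehat Q$).} We write
\[
2\mathcal L_{\widehat{\mathcal T}^*}(\widehat Q)=2\big(\mathcal L_{\widehat{\mathcal T}^*}(\widehat Q)-2\widehat{\mathcal L}_{\widehat{\mathcal T}^*}(\widehat Q)\big)+4\widehat{\mathcal L}_{\widehat{\mathcal T}^*}(\widehat Q).
\]
The bracket is at most $\sup_{Q\in\mathcal Q}(\mathcal L_{\widehat{\mathcal T}^*}(Q)-2\widehat{\mathcal L}_{\widehat{\mathcal T}^*}(Q))$, which gives the first half of Part \textbf{III}. Let $\bar Q\in\mathcal Q$ attain (or nearly attain, then pass to a limit) $\inf_{Q\in\mathcal Q}\|Q-Q^*\|_\infty$. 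Because $\bar Q$ does not depend on $\bD$, the ERM property gives
\[
\bE_{\bD}\widehat{\mathcal L}_{\widehat{\mathcal T}^*}(\widehat Q)\le \bE_{\bD}\widehat{\mathcal L}_{\widehat{\mathcal T}^*}(\bar Q)=\mathcal L_{\widehat{\mathcal T}^*}(\bar Q).
\]
One point must be verified here: $\widehat{\mathcal T}^*\bar Q(x,a)$ uses generated samples indexed by $(x,a)$ that are fixed given $\bT,\bZ$ (independent of $\bD$). Hence the pairs $(X_i,A_i)$ enter only through evaluation, and unbiasedness holds.

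\textbf{Step 3 (bound $4\mathcal L_{\widehat{\mathcal T}^*}(\bar Q)$).} Decompose
\[
\bar Q-\widehat{\mathcal T}^*\bar Q=(\bar Q-Q^*)+(\mathcal T^*Q^*-\widehat{\mathcal T}^*Q^*)+(\widehat{\mathcal T}^*Q^*-\widehat{\mathcal T}^*\bar Q),
\]
using $Q^*=\mathcal T^*Q^*$. Group the first and third terms; by the $\gamma$-contraction of $\widehat{\mathcal T}^*$ in sup norm (Lemma \ref{lem:T-hat-fixed-point}), their sum is pointwise at most $(1+\gamma)\|\bar Q-Q^*\|_\infty$. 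Then $(u+v)^2\le 2u^2+2v^2$ gives
\[
\mathcal L_{\widehat{\mathcal T}^*}(\bar Q)\le 2(1+\gamma)^2\|\bar Q-Q^*\|_\infty^2+2\|\widehat{\mathcal T}^*Q^*-\mathcal T^*Q^*\|_{L^2(\mu)}^2.
\]
Multiplying by $4$ yields $8(1+\gamma)^2\inf_{Q\in\mathcal Q}\|Q-Q^*\|_\infty^2$ (the second half of Part \textbf{III}) and Part \textbf{II}. Part \textbf{II} is deterministic given $\bS,\bT,\bZ$, so no expectation is needed there.

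The main obstacle is the unbiasedness claim in Step 2. It requires $\widehat{\mathcal T}^*$ to be a deterministic functional given the conditioning variables, i.e.\ the generated Monte Carlo samples at each $(x,a)$ are drawn independently of $\bD$. I would make this explicit through the conditioning on $\bT,\bZ$ (the auxiliary randomness of the samplers). The remaining steps are elementary inequalities.
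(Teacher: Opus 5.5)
Your proof is correct and follows essentially the same route as the paper: the split $\mathcal L_{\mathcal T^*}(\widehat Q)\le 2\|\mathcal T^*\widehat Q-\widehat{\mathcal T}^*\widehat Q\|_{L^2(\mu)}^2+2\mathcal L_{\widehat{\mathcal T}^*}(\widehat Q)$, the offset rewriting with $4\widehat{\mathcal L}_{\widehat{\mathcal T}^*}(\widehat Q)$ plus ERM comparison and $\bE_{\bD}$-unbiasedness at a $\bD$-independent comparator, and the three-term decomposition using the $\gamma$-contraction of $\widehat{\mathcal T}^*$. The only cosmetic difference is that you fix a near-minimizer $\bar Q$ of $\|Q-Q^*\|_\infty$, whereas the paper bounds $\inf_{Q\in\mathcal Q}\mathcal L_{\widehat{\mathcal T}^*}(Q)$ for every $Q$ and then takes the infimum; both give identical constants and rest on the same assumption that $\widehat{\mathcal T}^*$ is independent of $\bD$.
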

In Lemma \ref{lem:error-dec}, the Parts $\textbf{I}$ and $\textbf{II}$ measure the discrepancy between the true and learned Bellman operators and therefore capture the effect of conditional diffusion estimation.  
The Part $\textbf{III}$ is the sum of a statistical error and an approximation error arising from the value-learning step under the learned Bellman operator.
This decomposition forms the basis of our end-to-end error analysis.
Consequently, the analysis reduces to establishing upper bounds for the optimal Bellman operator learning error and the value function learning error separately. 
These two components are analyzed in Sections \ref{sec:eb1}--\ref{sec:eb2}, respectively. 
Combining these two bounds yields the final convergence rate for the proposed estimator $\widehat Q$, presented in Section \ref{sec:mr}.
 
\subsection{Error Bound for Optimal Bellman Operator Learning}\label{sec:eb1}
In this section, we aim to bound  the error of   optimal Bellman operator learning, that is, bound Parts $\textbf{I}$ and $\textbf{II}$.
By direct calculation (See Appendix \ref{sec:B.5})  and recalling that
$\boldsymbol{W}=(X,A)$, we obtain
\begin{equation}\label{eq:operator-hat-Q}
\begin{aligned}
\bE_{\bS} 
\left[\|\mathcal T^*\widehat{Q}-\widehat{\mathcal T}^*\widehat{Q}\|_{L^2(\mu)}^2
\right] 
&\lesssim R_{\max}^2\Ebb_{\bS}\Ebb_{\boldsymbol{W}}\left[\TV^2\big(\cR(\cdot|\boldsymbol{W}),\widehat{\cR}(\cdot|\boldsymbol{W})\big)\right]\\
&~~+\frac{R_{\max}^2}{(1-\gamma)^2}n^{\frac{d_x+d_a}{d_x+d_a+2\beta}}\frac{\log^6 n(\log m+\log n)}{m}\\
&~~ +\frac{R_{\max}^2}{(1-\gamma)^2}\Ebb_{\bS}\Ebb_{\boldsymbol{W}}\left[\TV^2\big(\cP(\cdot|\boldsymbol{W}),\widehat{\cP}(\cdot|\boldsymbol{W})\big)\right],
\end{aligned}
\end{equation}
\begin{equation}\label{eq:operator-Q*}
\begin{aligned}
\bE_{\bS} \left[\|\widehat{\mathcal T}^*Q^*-\mathcal T^*Q^*\|_{L^2(\mu)}^2\right]&\lesssim  R_{\max}^2 \bE_{\bS}\bE_{\boldsymbol{W}} \Big[\TV^2\big(\mathcal R(\cdot|\boldsymbol{W}),\widehat{\mathcal R}(\cdot|\boldsymbol{W})\big)\Big]+ \frac{R_{\max}^2}{(1-\gamma)^2m}\\
&~~+ \frac{R_{\max}^2}{(1-\gamma)^2} \bE_{\bS} \bE_{\boldsymbol{W}}\left[\TV^2\bigl(\mathcal P(\cdot|\boldsymbol{W}),\widehat{\mathcal P}(\cdot|\boldsymbol{W})\bigr)\right].
\end{aligned}
\end{equation}
Therefore, it suffices to bound the errors of  two conditional diffusion models. 
Recall the processes $\boldsymbol x_t$ in \eqref{eq:r-Back-process}, $\widehat{\boldsymbol x}_t$ in \eqref{eq:r-hat-model}, $\widetilde{\boldsymbol x}_t$ in \eqref{eq:r-tilde-model}, and $\check{\boldsymbol x}_t$ in \eqref{eq:r-check-model}; for any $\boldsymbol w\in[0,1]^{d_w}$, 
let $p_{T-t}(\cdot|\boldsymbol w)$, $\widehat p_{T-t}(\cdot|\boldsymbol w)$, $\widetilde p_{T-t}(\cdot|\boldsymbol w)$, and $\check p_{T-t}(\cdot|\boldsymbol w)$ denote their corresponding  distributions, respectively. 
By the data processing property of total variation, coordinatewise clipping of the generated outputs to their prescribed domains does not enlarge the bounds below.
We then decompose the TV error into three parts:
\begin{equation}\label{eq:TV-decomposition}
\begin{aligned}
\mathbb{E}_{\bS_x,\bT,\bZ}
\bE_{\boldsymbol{W}}
\Big[\TV^2 \big(\check{p}_{T_1}(\cdot|\boldsymbol{W}),p(\cdot|\boldsymbol{W})\big)\Big]
&\lesssim  \mathbb{E}_{\bS_x,\bT,\bZ}
\bE_{\boldsymbol{W}}\Big[\TV^2 \big(\widetilde{p}_{T_1}(\cdot|\boldsymbol{W}),p_{T_1}(\cdot|\boldsymbol{W})\big)\Big]\\
&\quad +\mathbb{E}_{\bS_x,\bT,\bZ}
\bE_{\boldsymbol{W}}
\Big[\TV^2 \big(\check{p}_{T_1}(\cdot|\boldsymbol{W}),\widetilde{p}_{T_1}(\cdot|\boldsymbol{W})\big)\Big]\\
&\quad + \mathbb{E}_{\boldsymbol{W}}\Big[\TV^2 \big(p_{T_1}(\cdot|\boldsymbol{W}),p(\cdot|\boldsymbol{W})\big)\Big].
\end{aligned}
\end{equation}
The three terms on the right-hand side of \eqref{eq:TV-decomposition} correspond respectively to the score estimation error and EM discretizations, the error from replacing the initialization distribution, and the time-truncation error. We then proceed to establish individual bounds for 
these three terms
in Subsections  
\ref{sec:er1}--\ref{sec:TV-time-truncation-error}.  To this end, we introduce several assumptions below.

\begin{assumption}[Bounded Conditional Density]\label{apt:diffusion-density-bound}
The conditional density $p(\boldsymbol x|\boldsymbol w)$ is supported on $[0,1]^{d_x}\times[0,1]^{d_w}$ and is uniformly bounded above and below by positive constants $B_u$ and $B_l$, respectively.
\end{assumption}

\begin{assumption}[H\"older Continuity]\label{apt:diffusion-density-holder}
The conditional density $p(\boldsymbol{x}|\boldsymbol{w})$ belongs to the H\"older class $\mathcal{H}^{\alpha}([0,1]^{d_x+d_w},\Sigma)$ for some $\alpha>0$ and $\Sigma>0$, where $\alpha=p+q$ with $p\in\mathbb{N}_0$ and $q\in(0,1]$.
\end{assumption}

\begin{assumption}[Boundary Smoothness]\label{apt:diffusion-density-smooth}
For some constants $a_0\in(0,\frac{1}{2})$ and $\Sigma_0>0$, 
the restriction of $p(\boldsymbol{x}|\boldsymbol{w})$ to $\big([0,1]^{d_x}\backslash[a_0,1-a_0]^{d_x}\big)\times[0,1]^{d_w}$ admits a Whitney extension in $\mathcal{H}^{3\alpha+2}([-1,1]^{d_x}\times[0,1]^{d_w},\Sigma_0)$.
\end{assumption}

\begin{assumption}[Bounded Partial Derivatives]\label{apt:diffusion-density-derivative}
For every multi-index $\boldsymbol{\gamma}\in\mathbb{N}_0^{d_w}$ satisfying $\|\boldsymbol{\gamma}\|_1\le \lfloor6\alpha+3\rfloor+1$, the partial derivative $\partial_{\boldsymbol{w}}^{\boldsymbol{\gamma}}p(\boldsymbol{x}|\boldsymbol{w})$ exists and is uniformly bounded on $[0,1]^{d_x+d_w}$ by a constant $B_{\boldsymbol{\gamma}}$.
\end{assumption}

\begin{remark}
The regularity conditions on the target density in Assumptions~\ref{apt:diffusion-density-bound}--\ref{apt:diffusion-density-derivative} are imposed to facilitate the theoretical analysis of diffusion models and are consistent with commonly used assumptions in the literature. The compact support condition in Assumption~\ref{apt:diffusion-density-bound} is standard in related works such as \cite{lee2023convergence, li2023towards, oko2023diffusion}. Assumption~\ref{apt:diffusion-density-holder} requires H\"older continuity, which is a classical assumption in nonparametric estimation and is closely related to Assumption 3.1 in \cite{fu2024unveil}. In addition, Assumptions~\ref{apt:diffusion-density-smooth} and~\ref{apt:diffusion-density-derivative} impose boundary smoothness and bounded derivatives, following the technical framework of \citet{oko2023diffusion}, and thereby permit comparable theoretical analysis.
\end{remark}

\subsubsection{Bound \texorpdfstring{$\mathbb{E}_{\bS_x,\bT,\bZ, \boldsymbol{W}}\Big[\TV^2 \Big(\widetilde{p}_{T_1}(\cdot|\boldsymbol{W}),p_{T_1}(\cdot|\boldsymbol{W})\Big)\Big]$}{TV Time Discrete Error}
}\label{sec:er1}

We begin by establishing an error bound for score function estimation. To state the error decomposition, we introduce two auxiliary losses. For any score function \(\boldsymbol{s}\), define
\[
{\mathcal{L}}_{\bS_x}(\boldsymbol{s})
:=
\frac{1}{n_x}\sum_{i=1}^{n_x}
\ell(\boldsymbol{s},\bar{\boldsymbol{x}}_{0,i},\boldsymbol{W}_i),
~~ 
\widehat{\mathcal{L}}_{\bS_x,\bT,\bZ}(\boldsymbol{s})
:=
\frac{1}{n_x}\sum_{i=1}^{n_x}
\widehat{\ell}(\boldsymbol{s},\bar{\boldsymbol{x}}_{0,i},\boldsymbol{W}_i),
\]
where
\[
\ell(\boldsymbol{s},\bar{\boldsymbol{x}}_{0},\boldsymbol{W})
:=
\frac{1}{T-2T_1}
\int_{T_1}^{T-T_1}
\mathbb{E}_{Z}
\left\|
\boldsymbol{s}\big(t,\bar{\boldsymbol{x}}_{0}\mu_t+Z\sigma_t,\boldsymbol{W}\big)
+\frac{Z}{\sigma_t}
\right\|^2dt,
\]
and
\[
\widehat{\ell}(\boldsymbol{s},\bar{\boldsymbol{x}}_{0},\boldsymbol{W})
:=
\frac{1}{m_x}
\sum_{j=1}^{m_x}
\left\|
\boldsymbol{s}\big(t_j,\bar{\boldsymbol{x}}_{0}\mu_{t_j}+Z_j\sigma_{t_j},\boldsymbol{W}\big)
+\frac{Z_j}{\sigma_{t_j}}
\right\|^2.
\]
With these definitions, the excess risk of the estimated score function \(\widehat{\boldsymbol{s}}\) satisfies
$$\mathcal{L}(\widehat{\boldsymbol{s}}) - \mathcal{L}(\boldsymbol{s}^*)
\leq 
\underbrace{\mathcal{L}(\widehat{\boldsymbol{s}}) - 2{\mathcal{L}}_{\bS_x}(\widehat{\boldsymbol{s}}) + \mathcal{L}(\boldsymbol{s}^*)}_{\mbox{Statistical~error}} + 2\underbrace{\left( {\mathcal{L}}_{\bS_x}(\widehat{\boldsymbol{s}}) - \widehat{\mathcal{L}}_{\bS_x,\bT,\bZ}(\widehat{\boldsymbol{s}}) \right)}_{\mbox {Statistical~error}}  + 2\inf_{\boldsymbol{s}\in \cG} \underbrace{\Big(  \mathcal{\widehat{L}}_{\bS_x,\bT,\bZ}({\boldsymbol{s}}) - \mathcal{L}(\boldsymbol{s}^*) \Big)}_{\mbox{Approximation~error}  }.
$$
The derivation of this decomposition is provided in Appendix~\ref{sec:score-decompose}. 
We then bound the approximation error in Lemma~\ref{lem:score-approximation} and  two statistical errors in Lemma~\ref{lem:score-statistical-error}.

\begin{lemma}[Approximation Error]
\label{lem:score-approximation}
Suppose that Assumptions \ref{apt:diffusion-density-bound}-\ref{apt:diffusion-density-derivative} hold. 
Let $N \gg 1$,
$T_1=N^{-C_\mu}$ and $T=C_{\lambda}\log N$, where 
$C_{\mu}=\frac{2\alpha}{\alpha \wedge 1}$ and $C_{\lambda}=\alpha$ are constants. 
Then there exists a ReLU neural network 
$\boldsymbol{s}\in \cG(\mathcal D, \mathcal W, \mathcal S,\mathcal B)$
satisfying
$$
\boldsymbol{s}(t,{\boldsymbol{x}},\boldsymbol{w}) = \boldsymbol{s}(t, \boldsymbol{x}_{\mathrm{clip}}({\boldsymbol{x}}, -C_0, C_0),\boldsymbol{w}) ~\text{for}~ \Vert{\boldsymbol{x}}\Vert_\infty > C_0, ~\text{where}~ C_0 = \mathcal{O}(\sqrt{\log N}),
$$
with network size
$$
\mathcal D = \mathcal{O}(\log^4 N),~ \mathcal W = \mathcal{O}(N^{d_x+d_w}\log^7 N),~ \mathcal S = \mathcal{O}(N^{d_x+d_w}\log^9 N),~ \mathcal B = \exp\left(\mathcal{O}(\log^4 N)\right).
$$
For any $\boldsymbol{w}\in [0,1]^{d_w}$ and $t\in[T_1,T-T_1]$, we have
$$
\int_{{\boldsymbol{x}}\sim p_t(\cdot|\boldsymbol{w})}\Vert \boldsymbol{s}(t,{\boldsymbol{x}},\boldsymbol{w}) - \nabla\log p_t({\boldsymbol{x}}|\boldsymbol{w}) \Vert^2  d{\boldsymbol{x}} \lesssim \frac{N^{-2\alpha}\log N}{\sigma_t^2}.
$$
Moreover, we can take $\boldsymbol{s}$ satisfying $\Vert\boldsymbol{s}(t,\cdot,\cdot)\Vert_{\infty} \lesssim \frac{\sqrt{\log N}}{\sigma_t}$.
\end{lemma}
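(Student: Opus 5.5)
This is the conditional analogue of the score approximation theorem of \citet{oko2023diffusion}, in the conditional form used by \citet{tang2025conditional}. We follow their construction and treat the conditioning variable $\boldsymbol w$ as extra input coordinates throughout. The forward process is the OU process with $\mu_t=e^{-2t}$ and $\sigma_t^2=1-e^{-4t}$. Hence
\[
p_t(\boldsymbol x|\boldsymbol w)=\int \varphi_{\sigma_t}(\boldsymbol x-\mu_t\boldsymbol y)\,p(\boldsymbol y|\boldsymbol w)\,d\boldsymbol y,
\qquad
\nabla\log p_t=\frac{\nabla p_t}{p_t},
\]
where $\nabla p_t(\boldsymbol x|\boldsymbol w)=-\sigma_t^{-2}\int(\boldsymbol x-\mu_t\boldsymbol y)\varphi_{\sigma_t}(\boldsymbol x-\mu_t\boldsymbol y)p(\boldsymbol y|\boldsymbol w)d\boldsymbol y$. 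The plan is to approximate the numerator and the denominator separately by ReLU networks in the joint variables $(t,\boldsymbol x,\boldsymbol w)$, and then compose with ReLU approximations of division and clipping.

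\emph{Step 1 (truncation in $\boldsymbol x$).} By Assumption~\ref{apt:diffusion-density-bound}, $p(\cdot|\boldsymbol w)$ is supported on $[0,1]^{d_x}$. Gaussian tails then give that $p_t$ and $\nabla p_t$ are exponentially small for $\|\boldsymbol x\|_\infty> C_0=\mathcal O(\sqrt{\log N})$. The mass of $p_t$ outside this box is $\le N^{-2\alpha}$, and on this region the true score is bounded by $\|\boldsymbol x\|/\sigma_t^2$ times a polynomial. We therefore define the network to depend only on $\boldsymbol x_{\mathrm{clip}}(\boldsymbol x,-C_0,C_0)$. The contribution of the outer region to the integrated error is then $\lesssim N^{-2\alpha}\log N/\sigma_t^2$.

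\emph{Step 2 (local polynomial approximation).} Split into two time regimes.
\begin{itemize}
\item For small $t\in[T_1,N^{-(2-\delta)/\alpha}]$: expand $p(\boldsymbol y|\boldsymbol w)$ in a local Taylor polynomial of order $\lfloor\alpha\rfloor$ in $(\boldsymbol y,\boldsymbol w)$ on a grid of mesh $N^{-1}$ in the interior, using Assumption~\ref{apt:diffusion-density-holder}. This yields $N^{d_x+d_w}$ cells, which explains the width and size $N^{d_x+d_w}\mathrm{polylog}$. Near the boundary, use the smoother $\mathcal H^{3\alpha+2}$ extension of Assumption~\ref{apt:diffusion-density-smooth} and a finer polynomial. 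Gaussian integrals of monomials against $\varphi_{\sigma_t}$ are explicit functions of $(t,\boldsymbol x)$ that are built from $\exp$ and $\mathrm{erf}$. These are approximated by Taylor or diffused-B-spline ReLU constructions with depth $\mathcal O(\log^4N)$ and weights $\exp(\mathcal O(\log^4N))$. The $\boldsymbol w$-dependence enters through the polynomial coefficients and through the grid localization in $\boldsymbol w$. Assumption~\ref{apt:diffusion-density-derivative} ensures that the $\boldsymbol w$-derivatives needed for this joint smoothness are uniformly bounded.
\item For large $t$: $p_t$ is smooth at scale $\sigma_t$. A coarser grid of mesh $N^{-(1-\delta)}\mu_t/\sigma_t$ suffices, and the same Gaussian-integral construction applies. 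We stop at $T=\alpha\log N$, since beyond that $\mu_t$ is negligible.
\end{itemize}

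\emph{Step 3 (division and error transfer).} Assumption~\ref{apt:diffusion-density-bound} gives $p_t\gtrsim B_l$ on the relevant region, and the clipped region only costs exponentially small terms. We therefore compose with a ReLU reciprocal approximation on $[\,c,\infty)$ and a clip at level $\sqrt{\log N}/\sigma_t$, which is admissible because the true score is a posterior mean of $-(\boldsymbol x-\mu_t\boldsymbol y)/\sigma_t^2$. The error in the $p_t$-weighted $L^2$ norm is
\[
\int p_t\,|f_1/f_2-\nabla p_t/p_t|^2\lesssim \int \frac{|f_1-\nabla p_t|^2+\|\nabla\log p_t\|^2|f_2-p_t|^2}{p_t}.
\]
This reduces the claim to sup-norm or $L^2$ bounds of order $N^{-\alpha}/\sigma_t$ and $N^{-\alpha}$ on the numerator and the denominator. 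Finally, piece the two time regimes together with ReLU-approximated time cutoffs.

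The main obstacle is the small-$t$ boundary regime. There the density is not smooth across $\partial[0,1]^{d_x}$, and the factor $1/\sigma_t^2$ must appear only once rather than being amplified. I would handle it exactly as \citet{oko2023diffusion} do, using the boundary extension, and simply carry $\boldsymbol w$ as additional smooth coordinates. The approximation error then inherits the rate in the joint dimension $d_x+d_w$ without extra loss.
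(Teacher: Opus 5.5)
Your overall architecture matches the paper's: local polynomials in $(\boldsymbol x,\boldsymbol w)$ on an $N^{d_x+d_w}$ grid, Gaussian integrals made ReLU-computable, a clipped ratio of numerator and denominator networks, and two time regimes glued by switching functions. Step 3, however, contains a genuine gap.

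You claim that Assumption~\ref{apt:diffusion-density-bound} gives $p_t\gtrsim B_l$ on the relevant region, with only exponentially small losses elsewhere. This is false. Lemma~\ref{lem:p-bound} gives only $p_t(\boldsymbol x|\boldsymbol w)\gtrsim\exp(-d_xD_t(\boldsymbol x)^2/\sigma_t^2)$. So $p_t\gtrsim1$ holds on $[0,\mu_t]^{d_x}$, but not on the layer $0<D_t(\boldsymbol x)\le C\sigma_t\sqrt{\log N}$. For small $t$ this layer carries $p_t$-mass of order $\sigma_t$, not an exponentially small amount, and $p_t$ drops there to $N^{-\mathcal O(1)}$; for large $t$ the same happens in the Gaussian tails.

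This breaks the rest of Step 3 in three ways:
\begin{itemize}
\item Your transfer inequality has $p_t$ in the denominator and needs $f_2\gtrsim p_t$. So $N^{-\alpha}$ sup-norm accuracy of numerator and denominator only gives $\int N^{-2\alpha}/p_t$, which is not small.
\item Discarding the layer does not work either. Bounding the network and the score by $\sqrt{\log N}/\sigma_t$ there costs $\log N/\sigma_t$, far above $N^{-2\alpha}\log N/\sigma_t^2$ unless $\sigma_t\lesssim N^{-2\alpha}$.
\item For the same reason, the reciprocal cannot be taken on $[c,\infty)$ with $c$ a constant.
\end{itemize}

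The missing idea is a density-level cut combined with extra accuracy exactly where $p_t$ is small. The amplification to worry about is $1/p_t$, not extra powers of $1/\sigma_t$. The paper proceeds as follows.
\begin{itemize}
\item It sets $\epsilon_p=N^{-(2\alpha+1)}$. The part $\{p_t<\epsilon_p,\ D_t\le C\sigma_t\sqrt{\log N}\}$ contributes $\lesssim\epsilon_p\,\mathrm{polylog}(N)/\sigma_t^2$, by the score bound and a volume bound.
\item On $\{p_t\ge\epsilon_p\}$ the network computes $\boldsymbol h_1/(g_1\vee N^{-(2\alpha+1)})$. The squared ratio error is then amplified by up to $\epsilon_p^{-2}=N^{4\alpha+2}$.
\item For $t\le3N^{-1}$ this amplification is harmless on $[0,\mu_t]^{d_x}$, where $p_t\gtrsim1$. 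In the boundary layer, after Gaussian clipping, only points $\boldsymbol z$ within $\mathcal O(\sigma_t\sqrt{\log N})\le a_0$ of $\partial[0,1]^{d_x}$ contribute. There the $\mathcal H^{3\alpha+2}$ extension of Assumption~\ref{apt:diffusion-density-smooth} gives $|p-g|\lesssim N^{-(3\alpha+2)}$, and $N^{4\alpha+2}N^{-(6\alpha+4)}\le N^{-2\alpha}$.
\item For $t\ge2N^{-1}$ the paper approximates $p_{N^{-1}}$ and then convolves. The order-$k$ derivatives of $p_{N^{-1}}$ are $\lesssim N^{k/2}$, with the $\boldsymbol w$-derivatives controlled by Assumption~\ref{apt:diffusion-density-derivative}. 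Using degree $k=\lfloor6\alpha+3\rfloor+1$ gives $L^2$-accuracy $N^{-(3\alpha+1)}$, which again absorbs the factor $N^{4\alpha+2}$.
\end{itemize}
These exponents are what the two assumptions are calibrated for. You invoke the boundary extension, but your reduction bypasses the one place it is needed.

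Relatedly, the regime split must be tied to the mesh $N^{-1}$; the paper uses $[T_1,3N^{-1}]$ and $[2N^{-1},T]$. For $\alpha<1-\delta/2$, your split point $N^{-(2-\delta)/\alpha}$ lies below $T_1=N^{-2}$. That would force the smooth regime to start where $\sigma_t\asymp N^{-1}$ is comparable to the mesh, and there a high-degree local expansion of $p_t$ gains nothing.
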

We now restrict the following subclass of ReLU networks within $\cG(\mathcal{D},\mathcal{W},\mathcal{S},\mathcal{B})$:
\begin{align*}
\mathcal{C}&:= \Big\{\boldsymbol{s}\in \cG(\mathcal{D},\mathcal{W},\mathcal{S},\mathcal{B}) \Big| \Vert\boldsymbol{s}(t,\cdot,\cdot)\Vert_\infty \lesssim \frac{\sqrt{\log N}}{\sigma_t}, \\
& \boldsymbol{s}(t,\boldsymbol{x},\boldsymbol{w}) = \boldsymbol{s}\big(t, \boldsymbol{s}_{\mathrm{clip}}(\boldsymbol{x}, -C_0, C_0)\big) ~\text{for}~ \Vert\boldsymbol{x}\Vert_\infty > C_0, ~\text{where}~ C_0 = \mathcal{O}(\sqrt{\log N}), \\
& \mathcal D = \mathcal{O}(\log^4 N),~ \mathcal W = \mathcal{O}(N^{d_x+d_w}\log^7 N),~ \mathcal S = \mathcal{O}(N^{d_x+d_w}\log^9 N),~ \mathcal B = \exp\left(\mathcal{O}(\log^4 N)\right)
\Big\}. 
\end{align*}
Throughout the paper, we set $T_1=N^{-C_\mu}$ and $T=C_\lambda\log N$, where $C_\mu=\frac{2\alpha}{\alpha\wedge1}$ and $C_\lambda=\alpha$.

\begin{lemma}[Statistical Error]\label{lem:score-statistical-error}
Suppose that Assumptions \ref{apt:diffusion-density-bound}--\ref{apt:diffusion-density-smooth} hold. Let $N \gg 1,$ then the score estimator $\widehat{\boldsymbol{s}}$ defined in \eqref{eq:score-hat}, with neural network architectures restricted to the class $\mathcal{C}$, satisfies
\begin{equation*}
\mathbb{E}_{\bS_x,\bT,\bZ} \Big[ \mathcal{L}(\widehat{\boldsymbol{s}}) - 2{\mathcal{L}}_{\bS_x}(\widehat{\boldsymbol{s}}) + \mathcal{L}(\boldsymbol{s}^*) \Big]
\lesssim
\frac{{N}^{d_x+d_w}\log^{14} N\left(\log^4 N + \log n_x\right)}{n_x}
\end{equation*}
and
\begin{equation*}
\mathbb{E}_{\bS_x,\bT,\bZ} \Big[ {\mathcal{L}}_{\bS_x}(\widehat{\boldsymbol{s}}) - \widehat{\mathcal{L}}_{\bS_x,\bT,\bZ}(\widehat{\boldsymbol{s}}) \Big]
\lesssim 
\frac{{N}^{C_\mu+4C_\lambda+\frac{d_x+d_w}{2}} \log^{\frac{13}{2}}N\bigl(\log^3 N + \log^2 N \log m_x + \log^{\frac{3}{2}} m_x \bigr)}{\sqrt{m_x}}.
\end{equation*}
\end{lemma}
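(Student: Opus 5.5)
We treat the two inequalities of Lemma~\ref{lem:score-statistical-error} separately. Both rest on the same ingredients. First, for $\boldsymbol s\in\mathcal C$ the loss is uniformly bounded. Second, the class $\mathcal C$ has controlled covering numbers.

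\textbf{Loss bound.} Since $\|\boldsymbol s(t,\cdot,\cdot)\|_\infty\lesssim \sqrt{\log N}/\sigma_t$, after truncating $\|Z\|_\infty\le C\sqrt{\log N}$ we get
\[
\Big\|\boldsymbol s(t,\bar{\boldsymbol x}_0\mu_t+Z\sigma_t,\boldsymbol W)+\frac{Z}{\sigma_t}\Big\|^2\lesssim \frac{\log N}{\sigma_t^2}\lesssim \frac{\log N}{T_1}.
\]
The truncation costs only a polynomially small tail, and Assumption~\ref{apt:diffusion-density-bound} keeps $\bar{\boldsymbol x}_0$ in $[0,1]^{d_x}$. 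Averaged over $t\in[T_1,T-T_1]$ with weight $1/(T-2T_1)$, the integrated loss $\ell$ is bounded by $M\lesssim \log^2 N$, because $\int_{T_1}^{T}\sigma_t^{-2}dt\lesssim \log(1/T_1)+T$.

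\textbf{Covering numbers.} The standard bound for sparse ReLU networks gives
\[
\log\mathcal N(\mathcal C,\varepsilon,\|\cdot\|_\infty)\lesssim \mathcal S\mathcal D\log\!\big(\varepsilon^{-1}\mathcal D(\mathcal B\vee1)(\mathcal W+1)\big).
\]
With the sizes in $\mathcal C$ this is $\lesssim N^{d_x+d_w}\log^{9}N\cdot\log^4N\cdot(\log^4 N+\log(1/\varepsilon))$. The $\sup$ norm is taken over the clipped box $\|\boldsymbol x\|_\infty\le C_0$, which the clipping property in $\mathcal C$ allows.

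\textbf{First bound.} This is the classical ``$\mathcal L-2\mathcal L_n$'' fast-rate argument used for least squares. The key observation is that the excess loss
\[
g_{\boldsymbol s}:=\ell(\boldsymbol s,\cdot)-\ell(\boldsymbol s^*,\cdot)
\]
has mean equal to the score error $\mathcal L(\boldsymbol s)-\mathcal L(\boldsymbol s^*)$ (up to the constant, by the denoising identity). Its second moment is bounded by $M\cdot\mathbb E g_{\boldsymbol s}$, a Bernstein-type variance condition. To obtain it we write $g_{\boldsymbol s}$ as an inner product of $\boldsymbol s-\boldsymbol s^*$ with $\boldsymbol s+\boldsymbol s^*+2Z/\sigma_t$ and use Cauchy--Schwarz together with the $L^2$ boundedness of the true score under $p_t$. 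This step needs care because $\boldsymbol s^*$ is not bounded by $\sqrt{\log N}/\sigma_t$ globally. We would instead use the high-probability bound $\|\nabla\log p_t\|\lesssim\sqrt{\log N}/\sigma_t$, which follows from Assumptions~\ref{apt:diffusion-density-bound}--\ref{apt:diffusion-density-smooth}, and absorb the exceptional set into an $N^{-C}$ term. We then apply Bernstein's inequality and a union bound over an $\varepsilon$-net with $\varepsilon=1/n_x$. This yields
\[
\mathbb E\sup_{\boldsymbol s\in\mathcal C}\big(\mathcal L(\boldsymbol s)-2\mathcal L_{\bS_x}(\boldsymbol s)+\mathcal L(\boldsymbol s^*)\big)\lesssim \frac{M\log\mathcal N}{n_x},
\]
and inserting $M\lesssim\log^{O(1)}N$ and the entropy gives $N^{d_x+d_w}\log^{14}N(\log^4N+\log n_x)/n_x$. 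We expect this to be the main obstacle: establishing the variance condition with only logarithmic losses while handling the unbounded true score and the $\sigma_t^{-2}$ singularity near $T_1$.

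\textbf{Second bound.} Conditional on $\bS_x$, the difference $\mathcal L_{\bS_x}(\boldsymbol s)-\widehat{\mathcal L}_{\bS_x,\bT,\bZ}(\boldsymbol s)$ is a centered empirical process in the i.i.d.\ pairs $(t_j,Z_j)$, but each summand is only bounded by $\log N/\sigma_{t_j}^2\lesssim N^{C_\mu}\log N$. Since no variance structure is available here, we use the slow rate. Hoeffding's inequality plus a union bound over the net (or Dudley's chaining) gives
\[
\mathbb E\sup_{\boldsymbol s\in\mathcal C}\big|\mathcal L_{\bS_x}(\boldsymbol s)-\widehat{\mathcal L}(\boldsymbol s)\big|\lesssim \text{(envelope)}\sqrt{\frac{\log\mathcal N(\mathcal C,1/m_x)}{m_x}}.
\]
The envelope contributes $N^{C_\mu}$ times log factors. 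The additional $N^{4C_\lambda}$ comes from bounding the unclipped argument $\bar{\boldsymbol x}_0\mu_t+Z\sigma_t$ and the Lipschitz constant of the loss in the network output when transferring the net to the loss class. It also absorbs the tail of $Z$, with $\log m_x$ entering through the maximum of $m_x$ Gaussians, and the time horizon $T=C_\lambda\log N$. The square root of the entropy produces $N^{(d_x+d_w)/2}$ and the stated logarithmic powers. We will track the constants only loosely, since the exponent is not optimized.
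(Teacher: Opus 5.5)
Your proposal is correct and follows essentially the paper's proof. The shared ingredients are:
\begin{itemize}
\item a uniform bound on $\ell$;
\item the $\sqrt{\log N}\,\|\cdot\|_\infty$-Lipschitz property of $\ell$ in $\boldsymbol s$;
\item the entropy bound $N^{d_x+d_w}\log^{13}N(\log^4N+\log\delta^{-1})$;
\item for the first inequality, Bernstein over a $1/n_x$-net using the variance condition $\mathbb E\tilde\ell^2\lesssim\log N\,\mathbb E\tilde\ell$ obtained by Cauchy--Schwarz;
\item for the second inequality, truncation of $Z$ at order $\sqrt{\log m_x}$ followed by Hoeffding over a $1/m_x$-net.
\end{itemize}

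Three calibrations:
\begin{itemize}
\item The averaged envelope is $\lesssim\log N$, not $\log^2N$, because $\frac{1}{T-2T_1}\int_{T_1}^{T-T_1}\sigma_t^{-2}\,dt=\mathcal O(1)$. This is what yields exactly $\log^{14}N$; with your $\log^2 N$ you would get $\log^{15}N$.
\item The paper's extra factor $N^{4C_\lambda}$ comes only from the crude bound $\sigma_t^2\ge(e^{4T_1}-1)e^{-4T}$, not from clipping or Lipschitz transfer. Your sharper envelope $N^{C_\mu}$ therefore already implies the stated bound.
\item The unbounded $\boldsymbol s^*$ needs no exceptional set. Since $D_t(\bar{\boldsymbol x}_0\mu_t+Z\sigma_t)\le\sigma_t\|Z\|_\infty$, Lemma~\ref{lem:derivatives-bound} gives $\mathbb E_Z\|\boldsymbol s^*\|^2\lesssim\sigma_t^{-2}$ directly. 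The paper skips this point when it applies its Lipschitz bound with $\boldsymbol s_2=\boldsymbol s^*$.
\end{itemize}
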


Combining the approximation error bound in Lemma \ref{lem:score-approximation} with the statistical error estimates in Lemma \ref{lem:score-statistical-error}, we obtain the following upper bound for the score estimator.

\begin{theorem}[Error Bound for Score Estimation]
\label{thm:score-error}
Suppose that Assumptions \ref{apt:diffusion-density-bound}-\ref{apt:diffusion-density-derivative}  hold. 
By choosing $N = \lfloor n_x^{\frac{1}{d_x+d_w + 2\alpha}} \rfloor + 1 \lesssim n_x^{\frac{1}{d_x+d_w + 2\alpha}}$, $m_x = n_x^{\frac{(d_x+d_w + 12\alpha)(\alpha \wedge 1)+4\alpha}{(d_x+d_w + 2\alpha)(\alpha \wedge 1)}}$, the score estimator ${\widehat{\boldsymbol{s}}}$ defined in  \eqref{eq:score-hat},
with the neural network structure belonging to class $\mathcal{C}$, satisfies 
$$
\mathbb{E}_{\bS_x,\bT,\bZ} \bigg[ \frac{1}{T - 2T_1} \int_{T_1}^{T-T_1} \mathbb{E}_{\boldsymbol{x}_t, \boldsymbol{W}} \| \widehat{\boldsymbol{s}}(T-t, \boldsymbol{x}_t, \boldsymbol{W}) - \nabla \log p_{T-t}(\boldsymbol{x}_t|\boldsymbol{W}) \|^2 dt \bigg] \lesssim n_x^{-\frac{2\alpha}{d_x+d_w + 2\alpha}} \log^{18}n_x.
$$
\end{theorem}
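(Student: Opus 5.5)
The plan is to combine the excess-risk decomposition stated just before Lemma~\ref{lem:score-approximation} with the two lemmas, and then balance the terms through the stated choices of $N$ and $m_x$.

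\textbf{Step 1: rewrite the target as an excess risk.} After the change of variables $t\mapsto T-t$, which maps $[T_1,T-T_1]$ onto itself, the quantity in the statement is exactly $\mathcal L(\widehat{\boldsymbol s})$. Here $\mathcal L$ is the loss written in its original form, with the true score $\nabla\log p_t$ in place of the denoising target. The denoising loss differs from this one only by a constant independent of $\boldsymbol s$, and at $\boldsymbol s^*$ the original form equals zero. Hence
\[
\text{target}=\mathcal L(\widehat{\boldsymbol s})-\mathcal L(\boldsymbol s^*),
\]
provided the same convention is used consistently in the decomposition. I will take the expectation over $\bS_x,\bT,\bZ$ of that decomposition.

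\textbf{Step 2: bound the three terms.} The two statistical terms are controlled directly by Lemma~\ref{lem:score-statistical-error}.

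For the approximation term, the infimum is at most its value at the network $\boldsymbol s$ from Lemma~\ref{lem:score-approximation}. Since $\boldsymbol s$ is deterministic, we have $\mathbb E_{\bT,\bZ}\widehat{\mathcal L}_{\bS_x,\bT,\bZ}(\boldsymbol s)=\mathcal L_{\bS_x}(\boldsymbol s)$, and $\mathbb E_{\bS_x}\mathcal L_{\bS_x}(\boldsymbol s)$ equals the population loss. So the expected approximation term is bounded by
\[
\frac{1}{T-2T_1}\int_{T_1}^{T-T_1}\mathbb E_{\boldsymbol W}\big\|\boldsymbol s-\nabla\log p_t\big\|^2_{L^2(p_t)}\,dt
\;\lesssim\;\frac{N^{-2\alpha}\log N}{T}\int_{T_1}^{T}\sigma_t^{-2}\,dt .
\]
Because $\sigma_t^2=1-e^{-4t}\asymp t\wedge1$, the integral is $\lesssim \log(1/T_1)+T\lesssim\log N$. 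The approximation term is therefore $\lesssim N^{-2\alpha}\log N$, or with an extra $\log N$ factor if I avoid dividing by $T$.

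\textbf{Step 3: plug in the parameters.} With $N\asymp n_x^{1/(d_x+d_w+2\alpha)}$, the first statistical term becomes
\[
N^{d_x+d_w}\,\mathrm{polylog}(n_x)/n_x \asymp n_x^{-2\alpha/(d_x+d_w+2\alpha)}\log^{18}n_x .
\]
The $\log$ exponent arises from $\log^{14}N\cdot\log^4 N$, using $\log N\lesssim\log n_x$.

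For the Monte Carlo term I need
\[
N^{C_\mu+4C_\lambda+(d_x+d_w)/2}\big/\sqrt{m_x}\le N^{-2\alpha}.
\]
This holds when $m_x\gtrsim N^{2C_\mu+8\alpha+d_x+d_w+4\alpha}$. Substituting $C_\mu=2\alpha/(\alpha\wedge1)$ and $N=n_x^{1/(d_x+d_w+2\alpha)}$ gives exactly the exponent $\frac{(d_x+d_w+12\alpha)(\alpha\wedge1)+4\alpha}{(d_x+d_w+2\alpha)(\alpha\wedge1)}$. The $\log m_x$ factors then become polylogarithms in $n_x$ and are dominated by $\log^{18}n_x$.

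\textbf{Main obstacle.} The main difficulty is Step 1's bookkeeping. I must check that the decomposition's $\mathcal L(\boldsymbol s^*)$ terms combine correctly with the constant offset between the denoising form and the score-difference form of the loss. The $\mathcal L(\boldsymbol s^*)$ appearing in the approximation term must also cancel against the expected empirical loss of the approximating network. I would resolve this by defining every loss in denoising form, so that the offset cancels in all differences, and by checking that $\mathcal L(\boldsymbol s)-\mathcal L(\boldsymbol s^*)$ equals the squared score distance.

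Tracking the log powers to land exactly at $\log^{18}$ is secondary.
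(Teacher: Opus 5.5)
Your proposal is correct and follows the paper's proof essentially step for step. The paper also bounds the approximation term by $\frac{1}{T-2T_1}\int_{T_1}^{T-T_1}\frac{N^{-2\alpha}\log N}{\sigma_t^2}\,dt\lesssim N^{-2\alpha}\log N$ via Lemma~\ref{lem:score-approximation}, and substitutes $N$, $m_x$, $C_\mu$, $C_\lambda$ into the two bounds of Lemma~\ref{lem:score-statistical-error} to get $n_x^{-\frac{2\alpha}{d_x+d_w+2\alpha}}\log^{18}n_x$ and $n_x^{-\frac{2\alpha}{d_x+d_w+2\alpha}}\log^{19/2}n_x$; your choice to write every loss in denoising form, so that the constant offset cancels and $\mathcal L(\boldsymbol s)-\mathcal L(\boldsymbol s^*)$ is exactly the squared score distance, is the right way to make the decomposition rigorous and is only implicit in the paper.
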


\begin{remark}
Existing diffusion model theory places wide emphasis on the $L_2$-accuracy of score estimators. This is evident in the assumptions adopted by references 
\citep{chen2023improved,conforti2023score,
	lee2022convergence,lee2023convergence,
benton2023linear,li2023towards,gao2023wasserstein,jiao2024latent}.
Rigorous theoretical investigations into score estimation itself, however, remain limited.
Among the few studies that address this gap, 
\citep[Theorem 5]{wang2021deep} establishes the consistency of score estimation. 
\citep[Theorem 3.1]{oko2023diffusion} and \citep[Theorem 2]{chen2023score} further quantify its convergence rates. These works, though, are confined entirely to a continuous-time setting. They ignore time discretization effects and assume integrals over $[T_1, T-T_1]$ are tractable.
In contrast, our analysis explicitly accounts for time discretization and provides a convergence rate tailored to the discrete-time regime. A key strength of our result is that the bound achieves the minimax optimal rate in nonparametric statistics \citep{stone1980optimal,AB2009} up to logarithmic terms.
\end{remark}
We now derive an upper bound for 
$\mathbb{E}_{\bS_x,\bT,\bZ}
\bE_{\boldsymbol{W}}
\Big[\TV^2 \Big(\widetilde{p}_{T_1}(\cdot|\boldsymbol{W}),p_{T_1}(\cdot|\boldsymbol{W})\Big)\Big]$. 
By Theorem \ref{thm:score-error}, for any $\epsilon > 0$, there exists a constant $\Delta_\epsilon > 0$ such that, if the maximum time step satisfies 
$\max\limits_{0\leq k \leq K - 1}(t_{k+1}-t_k) \leq \Delta_\epsilon$, then
\begin{align} 
& \mathbb{E}_{\bS_x,\bT,\bZ} \left[\sum_{k=0}^{K-1}\int_{t_k}^{t_{k + 1}} \mathbb{E}_{{\boldsymbol{x}}_{t_k},\boldsymbol{W}} \Vert \widehat{\boldsymbol{s}}(T-t_k, {\boldsymbol{x}}_{t_k},{\boldsymbol{W}}) - \nabla\log p_{T-t_k}({\boldsymbol{x}}_{t_k}|\boldsymbol{W})\Vert^2  dt  \right] \notag\\
& \leq (T-2T_1) \cdot
\mathbb{E}_{\bS_x,\bT,\bZ}\left[\frac{1}{T-2T_1}\int_{T_1}^{T-T_1}\mathbb{E}_{{\boldsymbol{x}}_t,\boldsymbol{W}}\Vert\widehat{\boldsymbol{s}}(T-t,{\boldsymbol{x}}_t,\boldsymbol{W}) - \nabla\log p_{T-t}({\boldsymbol{x}}_t|\boldsymbol{W})\Vert^2  dt\right] + \epsilon \notag \\
&\lesssim n_x^{-\frac{2\alpha}{d_x+d_w + 2\alpha}}\log^{19}n_x + \epsilon. \label{eq:discrete-generalization}
\end{align}

Taking 
$\epsilon = n_x^{-\frac{2\alpha}{d_x+d_w + 2\alpha}}$
and choosing $\Delta_\epsilon=\Delta_{n_x}$, we obtain the following lemma.

\begin{lemma}\label{lem:TV-time-discrete-error}
Suppose that the conditions of Theorem \ref{thm:score-error} hold. By taking $\max\limits_{0\leq k \leq K - 1}(t_{k+1}-t_k)  = \mathcal{O}\left(\min\left\{\Delta_{n_x}, n_x^{-\frac{10\alpha(\alpha \wedge 1)+4\alpha}{(d_x+d_w + 2\alpha)(\alpha \wedge 1)}}\right\}\right)$,  we then have
$$
\mathbb{E}_{\bS_x,\bT,\bZ}
\bE_{\boldsymbol{W}}
\Big[\TV^2 \Big(\widetilde{p}_{T_1}(\cdot|\boldsymbol{W}),p_{T_1}(\cdot|\boldsymbol{W})\Big)\Big] \lesssim {n_x}^{-\frac{2\alpha}{d_x+d_w + 2\alpha}}\log^{19}{n_x}.
$$
\end{lemma}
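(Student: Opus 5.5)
We compare, conditionally on $\boldsymbol W=\boldsymbol w$, the path measures of the exact reverse process $\boldsymbol x_t$ (which on $[T_1,T-T_1]$ has law $p_{T-t}(\cdot|\boldsymbol w)$, so its terminal marginal is $p_{T_1}(\cdot|\boldsymbol w)$) and of the EM process $\widetilde{\boldsymbol x}_t$ in \eqref{eq:r-tilde-model}. Both start from the same law $p_{T-T_1}(\cdot|\boldsymbol w)$ and share the diffusion coefficient $2$. First, the data processing inequality passes from terminal marginals to path laws. Then Pinsker's inequality and Girsanov's theorem (in the form used by \citet{chen2023improved}, which only requires a finite second moment of the drift difference and avoids Novikov) give
\[
\TV^2\big(\widetilde p_{T_1}(\cdot|\boldsymbol w),p_{T_1}(\cdot|\boldsymbol w)\big)\le \tfrac12\mathrm{KL}\lesssim \sum_{k=0}^{K-1}\int_{t_k}^{t_{k+1}}\mathbb E\big\|\widehat b(T-t_k,\boldsymbol x_{t_k},\boldsymbol w)-b^*(T-t,\boldsymbol x_t,\boldsymbol w)\big\|^2dt,
\]
where $b^*(s,\boldsymbol x,\boldsymbol w)=2\boldsymbol x+4\nabla\log p_s(\boldsymbol x|\boldsymbol w)$ and the constant comes from dividing by the squared diffusion coefficient. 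Taking $\mathbb E_{\boldsymbol W}$ and $\mathbb E_{\bS_x,\bT,\bZ}$ then yields the desired quantity on the left.

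Next we split the integrand into three pieces via $\widehat b(T-t_k,\boldsymbol x_{t_k})-b^*(T-t,\boldsymbol x_t)=[\widehat b-b^*](T-t_k,\boldsymbol x_{t_k})+[b^*(T-t_k,\boldsymbol x_{t_k})-b^*(T-t,\boldsymbol x_{t_k})]+[b^*(T-t,\boldsymbol x_{t_k})-b^*(T-t,\boldsymbol x_t)]$. The first piece equals $16\|\widehat{\boldsymbol s}-\nabla\log p_{T-t_k}\|^2$ evaluated at $\boldsymbol x_{t_k}\sim p_{T-t_k}$. Its accumulated contribution is exactly the left side of \eqref{eq:discrete-generalization}, which by Theorem~\ref{thm:score-error} and the choice of step size $\le\Delta_{n_x}$ is $\lesssim n_x^{-\frac{2\alpha}{d_x+d_w+2\alpha}}\log^{19}n_x$. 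Here $\epsilon=n_x^{-2\alpha/(d_x+d_w+2\alpha)}$, and the factor $T-2T_1\lesssim\log n_x$ accounts for the extra logarithm.

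The remaining two pieces form the discretization error of the true drift. For them we use regularity of the true score on $[T_1,T-T_1]$, available under Assumptions~\ref{apt:diffusion-density-bound}--\ref{apt:diffusion-density-derivative}. The density is bounded below and smooth near the boundary, so for $s\ge T_1$ we have $\|\nabla^2\log p_s\|\lesssim \mathrm{poly}(\log N)/\sigma_s^{2}$ and $\|\partial_s\nabla\log p_s\|\lesssim \mathrm{poly}(\log N)/\sigma_s^{4}$ on the high-probability region $\|\boldsymbol x\|_\infty\lesssim\sqrt{\log N}$; off this region Gaussian tails contribute negligibly. Since $\sigma_{T_1}^2\asymp T_1=N^{-C_\mu}$, these bounds are polynomial in $N$. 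Combined with $\mathbb E\|\boldsymbol x_t-\boldsymbol x_{t_k}\|^2\lesssim h+h^2\,\mathbb E\|b^*\|^2$ for step size $h$, this shows that the second and third pieces contribute at most $(T-2T_1)\,h\,N^{c}$ up to logarithmic factors. Here $c$ is an explicit power, of order $2C_\mu+4C_\lambda$ once $N\asymp n_x^{1/(d_x+d_w+2\alpha)}$ is substituted. It is absorbed by the prescribed step size $h\lesssim n_x^{-\frac{10\alpha(\alpha\wedge1)+4\alpha}{(d_x+d_w+2\alpha)(\alpha\wedge1)}}$, which makes this term $\lesssim n_x^{-\frac{2\alpha}{d_x+d_w+2\alpha}}$.

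The main obstacle is making the time and space derivative bounds on $\nabla\log p_s(\cdot|\boldsymbol w)$ uniform in $\boldsymbol w$, with the correct polynomial blowup in $1/T_1$. The exponent of $N$ in these bounds must match the exponent appearing in the step-size condition. We would derive them from the Gaussian convolution representation $p_s=\int\varphi_{\sigma_s}(\boldsymbol x-\mu_s\boldsymbol y)p(\boldsymbol y|\boldsymbol w)d\boldsymbol y$ together with the lower bound $B_l$, as in \citet{oko2023diffusion}. Once these bounds are in place, summing the three contributions proves the lemma.
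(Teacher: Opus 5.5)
Your proposal is correct and follows essentially the same route as the paper. Both arguments use Girsanov (the paper localizes with stopping times $\tau_m$ rather than citing a Novikov-free version), then data processing and Pinsker. Both then split the drift error into the grid-point score error, which is handled exactly by \eqref{eq:discrete-generalization}, and the space and time discretization errors of the true drift, which are controlled by the derivative bounds of Lemma~\ref{lem:derivatives-bound} and absorbed by the prescribed step size; the paper only takes the space and time steps in the opposite order.

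One small precision: the $\sigma_s^{-2}$ Hessian rate obtained from those bounds holds on the $\sigma_s$-scaled region $D_s(\boldsymbol x)\lesssim\sigma_s\sqrt{\log N}$, not on $\|\boldsymbol x\|_\infty\lesssim\sqrt{\log N}$. The paper implements this through the moment bound \eqref{eq:score-term-2.2}.
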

\begin{remark}
The quantity $\Delta_{n_x}$ introduced above controls the error caused by approximating the score loss integrated over time in \eqref{eq:discrete-generalization} using the values at the left endpoints. The architecture of $\mathcal C$ gives the following uniform Lipschitz bound with respect to the time and state variables: $\mathcal L_N:=\sqrt{d_x(1+d_x)}\,\mathcal B^{\mathcal D+1}\mathcal W^{\mathcal D}.$
For $\epsilon=n_x^{-\frac{2\alpha}{d_x+d_w+2\alpha}}$, we can  choose
\[
\Delta_{n_x}:=\frac{N^{-2\alpha}}{(\log N)^{21}\left(\mathcal L_N^2+N^{8C_\lambda+2C_\mu}\right)}.
\]
The corresponding number of steps for a uniform Euler discretization is
\[
K:=\left\lceil\frac{T-2T_1}{\Delta_{n_x}}\right\rceil=\left\lceil (T-2T_1)(\log N)^{21}N^{2\alpha}\left(\mathcal L_N^2+N^{8C_\lambda+2C_\mu}\right)\right\rceil.
\]
This choice bounds the approximation error in \eqref{eq:discrete-generalization} by $\epsilon$.
\end{remark}

\subsubsection{Bound \texorpdfstring{$\mathbb{E}_{\bS_x,\bT,\bZ}
\bE_{\boldsymbol{W}}
\Big[\TV^2 \Big(\check{p}_{T_1}(\cdot|\boldsymbol{W}),\widetilde{p}_{T_1}(\cdot|\boldsymbol{W})\Big)\Big]$}{TV Initial Distribution Error}
}\label{sec:TV-initial-distribution-error}

In this section, we bound the error introduced by the initialization replacement   in the sampling dynamics. Given that \eqref{eq:r-tilde-model} and \eqref{eq:r-check-model} differ only with respect to their initial distributions, the data processing inequality implies
\begin{equation} \label{eq:data-processing-inequality}
\mathbb{E}_{\bS_x,\bT,\bZ}
\bE_{\boldsymbol{W}}
\Big[\TV^2 \Big(\check{p}_{T_1}(\cdot|\boldsymbol{W}),\widetilde{p}_{T_1}(\cdot|\boldsymbol{W})\Big)\Big]
\lesssim \mathbb{E}_{\boldsymbol{W}} \Big[ \mathrm{TV}^2\Big(p_{T-T_1}(\cdot|\boldsymbol{W}), \mathcal{N}(\boldsymbol{0}, \boldsymbol{I}_{d_x}) \Big) \Big].
\end{equation}
To bound this TV distance, we first analyze the KL divergence between the two multivariate distributions. By Lemma \ref{lem:kl-guass} and Pinsker's inequality, we derive the upper bound stated in the following lemma.
\begin{lemma}\label{lem:TV-initial-distribution-error}
Under Assumption \ref{apt:diffusion-density-bound}, taking $N = \lfloor {n_x}^{\frac{1}{d_x+d_w + 2\alpha}} \rfloor + 1 \lesssim {n_x}^{\frac{1}{d_x+d_w + 2\alpha}},$  we have 
$$
\mathbb{E}_{\bS_x,\bT,\bZ}
\bE_{\boldsymbol{W}}
\Big[\TV^2 \Big(\check{p}_{T_1}(\cdot|\boldsymbol{W}),\widetilde{p}_{T_1}(\cdot|\boldsymbol{W})\Big)\Big] \lesssim {n_x}^{-\frac{2\alpha}{d_x+d_w+2\alpha}}.
$$
\end{lemma}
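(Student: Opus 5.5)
The plan is to start from the data processing reduction \eqref{eq:data-processing-inequality}. The processes \eqref{eq:r-tilde-model} and \eqref{eq:r-check-model} are driven by the same Markov transition: the same estimated drift $\widehat b$, the same Euler grid, and the same Brownian increments. So, conditional on $\bS_x,\bT,\bZ$ and on $\boldsymbol W=\boldsymbol w$, the laws at the terminal step are images of the two initial laws under one Markov kernel. Their TV distance is therefore at most $\TV\big(p_{T-T_1}(\cdot|\boldsymbol w),\mathcal N(\mathbf 0,\mathbf I_{d_x})\big)$. This bound no longer depends on $\bS_x,\bT,\bZ$, so the outer expectation over them disappears. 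It then suffices to bound $\TV^2\big(p_{T-T_1}(\cdot|\boldsymbol w),\mathcal N(\mathbf 0,\mathbf I_{d_x})\big)$ uniformly in $\boldsymbol w\in[0,1]^{d_w}$.

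For the uniform bound I use Pinsker's inequality, which gives $\TV^2\le \frac12\mathrm{KL}$, and control the KL divergence. The forward process is Ornstein--Uhlenbeck, so $p_t(\cdot|\boldsymbol w)$ is the law of $\mu_t\bar{\boldsymbol x}_0+\sigma_t Z$, a mixture over $\bar{\boldsymbol x}_0\sim p(\cdot|\boldsymbol w)$ of the Gaussians $\mathcal N(\mu_t\bar{\boldsymbol x}_0,\sigma_t^2\mathbf I)$. By joint convexity of KL,
\[
\mathrm{KL}\big(p_t(\cdot|\boldsymbol w)\,\|\,\mathcal N(\mathbf 0,\mathbf I)\big)\le \int \mathrm{KL}\big(\mathcal N(\mu_t\boldsymbol y,\sigma_t^2\mathbf I)\,\|\,\mathcal N(\mathbf 0,\mathbf I)\big)\,p(\boldsymbol y|\boldsymbol w)\,d\boldsymbol y .
\]
The Gaussian KL formula (Lemma \ref{lem:kl-guass}) gives the integrand explicitly as
\[
\tfrac12\big(d_x\sigma_t^2+\mu_t^2\|\boldsymbol y\|^2-d_x-d_x\log\sigma_t^2\big).
\]
Assumption \ref{apt:diffusion-density-bound} places the support in $[0,1]^{d_x}$, so $\|\boldsymbol y\|^2\le d_x$. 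Using $\sigma_t^2=1-e^{-4t}$ and $-\log(1-u)\le 2u$ for small $u$, the whole expression is $\lesssim d_x e^{-4t}$.

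Now set $t=T-T_1$ with $T=C_\lambda\log N=\alpha\log N$ and $T_1=N^{-C_\mu}\le 1$. Then
\[
e^{-4(T-T_1)}\lesssim N^{-4\alpha}\le N^{-2\alpha}.
\]
With $N\asymp n_x^{1/(d_x+d_w+2\alpha)}$ this gives $n_x^{-2\alpha/(d_x+d_w+2\alpha)}$, uniformly in $\boldsymbol w$. Taking the expectation over $\boldsymbol W$ then completes the argument.

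The main care point is the first step: the data processing inequality must be applied conditionally on the trained score and on $\boldsymbol w$, and the Euler scheme must be viewed as a fixed Markov kernel. Once that is in place, the remaining step is a routine Gaussian KL computation, using only that $N$ is large enough for $\sigma_{T-T_1}^2$ to be bounded away from zero.
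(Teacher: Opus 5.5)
Your proposal is correct and follows essentially the same route as the paper's proof: the data processing reduction \eqref{eq:data-processing-inequality} with the Euler scheme viewed as a common Markov kernel, convexity of KL over the Gaussian mixture representation of $p_{T-T_1}(\cdot|\boldsymbol w)$, the explicit Gaussian KL formula of Lemma~\ref{lem:kl-guass}, and Pinsker's inequality. From there, $e^{-4(T-T_1)}\lesssim N^{-4\alpha}\le N^{-2\alpha}\asymp n_x^{-\frac{2\alpha}{d_x+d_w+2\alpha}}$ gives the claimed bound.
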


\vspace{1em}

\subsubsection{Bound \texorpdfstring{$\bE_{\boldsymbol{W}}\Big[\TV^2 \Big(p_{T_1}(\cdot|\boldsymbol{W}),p(\cdot|\boldsymbol{W})\Big)\Big]$}{TV Time Truncation Error}
}\label{sec:TV-time-truncation-error}

In this section, we bound the time-truncation error, which arises from the early stopping  
at time $T_1$.   
This term captures the discrepancy between the intermediate distribution $p_{T_1}(\cdot|\boldsymbol{w})$ and the target distribution $p(\cdot|\boldsymbol{w})$.
For small $T_1$, the forward diffusion process only mildly perturbs the target distribution. 
Assumption~\ref{apt:diffusion-density-bound} ensures that the conditional density $p(\boldsymbol x | \boldsymbol w)$ is supported on $[0,1]^{d_x}$, while Assumption~\ref{apt:diffusion-density-holder} gives its H\"{o}lder continuity in $\boldsymbol x$ on this domain. 
These properties allow us to control the perturbation induced by the short diffusion time, leading to the following bound.
\begin{lemma}\label{lem:TV-time-truncation-error}
Under Assumption \ref{apt:diffusion-density-bound} and \ref{apt:diffusion-density-holder}, taking $N = \lfloor n_x^{\frac{1}{d_x+d_w + 2\alpha}} \rfloor + 1 \lesssim {n_x}^{\frac{1}{d_x+d_w + 2\alpha}},$  we have 
$$
\bE_{\boldsymbol{W}}\Big[\TV^2 \Big(p_{T_1}(\cdot|\boldsymbol{W}),p(\cdot|\boldsymbol{W})\Big)\Big]\lesssim n_x^{-\frac{2\alpha}{d_x+d_w+2\alpha}}.
$$
\end{lemma}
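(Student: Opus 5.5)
We need to bound $\TV(p_{T_1}(\cdot|\boldsymbol w),p(\cdot|\boldsymbol w))$ uniformly in $\boldsymbol w$ by $N^{-\alpha}$ up to constants, where $T_1=N^{-C_\mu}$ with $C_\mu=\frac{2\alpha}{\alpha\wedge 1}\ge 2$. Squaring and using $N\asymp n_x^{1/(d_x+d_w+2\alpha)}$ then gives the claim. Write $\mu_t=e^{-2t}$ and $\sigma_t^2=1-e^{-4t}$. For each fixed $\boldsymbol w$,
\[
p_{T_1}(\boldsymbol x|\boldsymbol w)=\int_{[0,1]^{d_x}} p(\boldsymbol y|\boldsymbol w)\,\varphi_{\sigma_{T_1}}(\boldsymbol x-\mu_{T_1}\boldsymbol y)\,d\boldsymbol y ,
\]
with $\varphi_\sigma$ the $\mathcal N(\mathbf 0,\sigma^2\mathbf I_{d_x})$ density. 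Since $\sigma_{T_1}^2\le 4T_1$ and $1-\mu_{T_1}\le 2T_1$, the perturbation is tiny.

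The TV distance is bounded by the $L^1$ distance of the densities. I will not exploit smoothness; a coupling argument suffices, since only a polynomially small bound is needed. Let $\boldsymbol Y\sim p(\cdot|\boldsymbol w)$ and $\boldsymbol X=\mu_{T_1}\boldsymbol Y+\sigma_{T_1}\boldsymbol Z$. Then $2\,\TV(p_{T_1},p)=\int|p_{T_1}-p|\,d\boldsymbol x$, and I split this integral into a region near the boundary of $[0,1]^{d_x}$ and an interior region.

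\emph{Interior step.} Fix $\boldsymbol x\in[\eta,1-\eta]^{d_x}$ with $\eta=\sqrt{T_1}\log N$. Change variables $\boldsymbol u=(\boldsymbol x-\mu_{T_1}\boldsymbol y)/\sigma_{T_1}$. This gives $p_{T_1}(\boldsymbol x)=\mu_{T_1}^{-d_x}\mathbb E_{\boldsymbol Z}\bigl[p((\boldsymbol x-\sigma_{T_1}\boldsymbol Z)/\mu_{T_1})\mathbf 1\{\cdot\in[0,1]^{d_x}\}\bigr]$. Off an event of probability at most $e^{-c\log^2 N}$, the argument stays inside the cube. By Assumption~\ref{apt:diffusion-density-holder} with exponent $\alpha\wedge1$, the argument lies within distance $\lesssim\sqrt{T_1}\log N+T_1$ of $\boldsymbol x$. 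Together with $|\mu_{T_1}^{-d_x}-1|\lesssim T_1$ and the bound $B_u$ from Assumption~\ref{apt:diffusion-density-bound}, this yields
\[
|p_{T_1}(\boldsymbol x)-p(\boldsymbol x)|\lesssim (\sqrt{T_1}\log N)^{\alpha\wedge 1}+T_1+N^{-c\log N}.
\]

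\emph{Boundary step.} On the boundary layer, whose Lebesgue measure is $\lesssim\eta$, and outside the cube, I bound $\int|p_{T_1}-p|\le\int_{\text{layer}}(p_{T_1}+p)+\int_{\text{outside}}p_{T_1}$. The layer contributes $\lesssim B_u\eta$ plus a Gaussian tail, using that $p_{T_1}$ is bounded by $\lesssim B_u$. The mass outside $[-\eta,1+\eta]^{d_x}$ is Gaussian-tail small. The mass in $[-\eta,0]$ is $\lesssim\eta$.

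Combining the two steps,
\[
\TV\lesssim (\sqrt{T_1}\log N)^{\alpha\wedge1}+\sqrt{T_1}\log N .
\]
With $T_1=N^{-2\alpha/(\alpha\wedge1)}$ we get $(\sqrt{T_1})^{\alpha\wedge1}=N^{-\alpha}$ and $\sqrt{T_1}\le N^{-\alpha}$, since $\alpha/(\alpha\wedge1)\ge\alpha$. Hence $\TV\lesssim N^{-\alpha}\log N$.

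The main obstacle is the stray logarithmic factor: the statement has no log, so the loss in the boundary and interior estimates must be tightened. For the boundary, I will use $\eta=C\sqrt{T_1\log N}$ and note that $\sqrt{T_1}\log N\le N^{-\alpha}$ whenever $\alpha/(\alpha\wedge 1)>\alpha$. If $\alpha\ge1$ this strict inequality fails, so I instead exploit the slack in $C_\mu=2\alpha$: then $\sqrt{T_1}=N^{-\alpha}$ exactly, but the Hölder part is Lipschitz and gives $\mathbb E\|\boldsymbol Z\|\sqrt{T_1}$ with no log once the expectation replaces the high-probability bound. For the boundary layer, I will compute the expected escaped mass directly as $\mathbb E[\text{dist}]\lesssim\sqrt{T_1}$ rather than a truncated width. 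These refinements remove the logs, giving $\TV^2\lesssim N^{-2\alpha}\lesssim n_x^{-2\alpha/(d_x+d_w+2\alpha)}$ uniformly in $\boldsymbol w$, hence in expectation over $\boldsymbol W$.
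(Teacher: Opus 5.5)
Your log-free version is essentially the paper's proof. The paper also writes $p_{T_1}(\boldsymbol x|\boldsymbol w)$ through the change of variables $\boldsymbol z=(\boldsymbol x+\sigma_{T_1}\boldsymbol u)/\mu_{T_1}$ and bounds $\int|p_{T_1}-p|$ by three kinds of terms, each in expectation over the Gaussian with no truncation: an in-cube H\"older term of order $(\sigma_{T_1}/\mu_{T_1})^{\alpha\wedge1}+((1-\mu_{T_1})/\mu_{T_1})^{\alpha\wedge1}$, a normalization term $|\mu_{T_1}^{d_x}-1|$, and escaped-mass terms of order $\sigma_{T_1}+(1-\mu_{T_1})$ and $\sigma_{T_1}/\mu_{T_1}$; this gives $\TV^2\lesssim T_1^{\alpha\wedge1}=N^{-2\alpha}$ uniformly in $\boldsymbol w$.

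Your interior/boundary split with high-probability truncation is an unnecessary detour. Your refinement paragraph applies the expectation step $\mathbb E\|\sigma_{T_1}\boldsymbol Z\|^{\alpha\wedge1}\lesssim\sigma_{T_1}^{\alpha\wedge1}$ to the H\"older term only for $\alpha\ge1$, but you need it for $\alpha<1$ as well; otherwise a factor $(\log N)^{\alpha}$ survives.
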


The bounds in Lemmas \ref{lem:TV-time-discrete-error}--\ref{lem:TV-time-truncation-error} together imply the following error bound for transition kernel  estimation.
 
\begin{theorem} \label{thm:TV-diffusion-density-error}
Suppose that the conditions of 
Lemmas \ref{lem:TV-time-discrete-error}--\ref{lem:TV-time-truncation-error} 
 are satisfied. Then,
\begin{equation*}
	\mathbb{E}_{\bS_x,\bT,\bZ}\bE_{  \boldsymbol{W} }
	\Big[\TV^2 \Big(\widehat{\mathcal{P}}(\cdot|\boldsymbol{W}),\mathcal{P}(\cdot|\boldsymbol{W})\Big)\Big]
	\lesssim n_x^{-\frac{2\alpha}{2d_x+d_a + 2\alpha}} \log^{19}n_x.
\end{equation*}
\end{theorem}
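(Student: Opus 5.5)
The plan is to combine the three-term decomposition \eqref{eq:TV-decomposition} with Lemmas \ref{lem:TV-time-discrete-error}--\ref{lem:TV-time-truncation-error}, and then pass from the diffusion output $\check p_{T_1}(\cdot|\boldsymbol W)$ to the final estimator $\widehat{\mathcal P}(\cdot|\boldsymbol W)$.

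First we identify $\widehat{\mathcal P}(\cdot|\boldsymbol w)$ with the law of $\operatorname{CLIP}(\check{\boldsymbol x}_{t_K},0,1)$ from Algorithm \ref{algo:CDM}. Since $p(\cdot|\boldsymbol w)$ is supported on $[0,1]^{d_x}$ by Assumption \ref{apt:diffusion-density-bound}, clipping leaves $\mathcal P(\cdot|\boldsymbol w)$ unchanged. Applying the data processing inequality to the deterministic map $\operatorname{CLIP}(\cdot,0,1)$ then gives
\[
\TV\big(\widehat{\mathcal P}(\cdot|\boldsymbol w),\mathcal P(\cdot|\boldsymbol w)\big)\le \TV\big(\check p_{T_1}(\cdot|\boldsymbol w),p(\cdot|\boldsymbol w)\big)
\]
pointwise in $\boldsymbol w$.

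Next we square this bound and take $\mathbb E_{\bS_x,\bT,\bZ}\mathbb E_{\boldsymbol W}$. We use the triangle inequality for TV together with $(a+b+c)^2\le 3(a^2+b^2+c^2)$, which is exactly the content of \eqref{eq:TV-decomposition}. The resulting three terms are bounded as follows:
\begin{itemize}
\item the score-plus-discretization term, by Lemma \ref{lem:TV-time-discrete-error}, is at most $n_x^{-\frac{2\alpha}{d_x+d_w+2\alpha}}\log^{19}n_x$;
\item the initialization term, by Lemma \ref{lem:TV-initial-distribution-error} via \eqref{eq:data-processing-inequality}, is at most $n_x^{-\frac{2\alpha}{d_x+d_w+2\alpha}}$;
\item the early-stopping term, by Lemma \ref{lem:TV-time-truncation-error}, is at most $n_x^{-\frac{2\alpha}{d_x+d_w+2\alpha}}$.
\end{itemize}
All three lemmas use the same choices $N\asymp n_x^{1/(d_x+d_w+2\alpha)}$, $T_1=N^{-C_\mu}$, $T=C_\lambda\log N$, and the step size of Lemma \ref{lem:TV-time-discrete-error}, so they hold simultaneously. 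Summing, the logarithmic factor of the first term dominates.

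Finally we substitute $d_w=d_x+d_a$, so that $d_x+d_w+2\alpha=2d_x+d_a+2\alpha$. This yields the stated bound $n_x^{-\frac{2\alpha}{2d_x+d_a+2\alpha}}\log^{19}n_x$.

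The only step needing real care is the first one: making precise that the clipped output is the estimator and that data processing applies conditionally on $\boldsymbol W$ and on the training randomness $(\bS_x,\bT,\bZ)$. This follows because the EM chain is a Markov kernel applied to its initialization for each fixed $\boldsymbol w$ and fixed $\widehat{\boldsymbol s}$. All substantive difficulty was already absorbed into the three lemmas.
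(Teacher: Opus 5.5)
Your proposal is correct and follows the paper's own argument: the paper likewise bounds the three terms of \eqref{eq:TV-decomposition} by Lemmas~\ref{lem:TV-time-discrete-error}--\ref{lem:TV-time-truncation-error}, relies on the data-processing property of total variation to dispose of the final $\operatorname{CLIP}$ step (remarked just before \eqref{eq:TV-decomposition}), and uses $d_w=d_x+d_a$ to rewrite the exponent. Your version simply makes the clipping step explicit, including the observation that $\operatorname{CLIP}(\cdot,0,1)$ leaves $\mathcal P(\cdot|\boldsymbol w)$ unchanged because it is supported on $[0,1]^{d_x}$.
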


Similarly, we can also derive an error bound for reward law estimation in the following corollary.
\begin{corollary}\label{cor:rl-diffusion-bound}
Under the    
conditions of Theorem \ref{thm:TV-diffusion-density-error} with the same H\"older index $\alpha$, 
we have
\begin{equation*}
\mathbb{E}_{\bS_r,\bT,\bZ} \bE_{ \boldsymbol{W} }
\Big[\TV^2 \Big(\widehat{\mathcal{R}}(\cdot|\boldsymbol{W}),\mathcal{R}(\cdot|\boldsymbol{W})\Big)\Big]
\lesssim n_r^{-\frac{2\alpha}{1+d_x+d_a+ 2\alpha}} \log^{19}n_r. 
\end{equation*}
\end{corollary}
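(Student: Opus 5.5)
The plan is to treat the reward law exactly as the transition kernel was treated in Theorem~\ref{thm:TV-diffusion-density-error}, with the target variable now the scalar reward instead of the $d_x$-dimensional next state. Concretely, I would set $p(\cdot|\boldsymbol w)\equiv\mathcal R(\cdot|\boldsymbol w)$ for $\boldsymbol w=(x,a)\in[0,1]^{d_w}$. After the affine rescaling $r\mapsto r/R_{\max}$, which leaves TV unchanged, the reward density is supported on $[0,1]$. I would run the same forward OU process, score-matching estimator \eqref{eq:score-hat} on $\mathbb S_r$, EM scheme and clipping. The phrase ``under the conditions of Theorem~\ref{thm:TV-diffusion-density-error}'' is read as imposing Assumptions~\ref{apt:diffusion-density-bound}--\ref{apt:diffusion-density-derivative} on the conditional reward density, with the same $\alpha$ and with $d_x$ replaced by $1$ throughout.

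Next, I would apply the three-term decomposition \eqref{eq:TV-decomposition} to $\check p_{T_1}$ versus $\mathcal R$, and use the data processing inequality to absorb the clipping to $[0,R_{\max}]$.

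\textbf{Time discretization and score error.} Lemma~\ref{lem:score-approximation} holds with ambient dimension $1+d_w$: the network width and size scale as $N^{1+d_w}$, and the approximation error is $N^{-2\alpha}\log N/\sigma_t^2$. Lemma~\ref{lem:score-statistical-error} then gives the statistical error $N^{1+d_w}\mathrm{polylog}/n_r$, plus a Monte Carlo term in $m_r$. Choosing $N\asymp n_r^{1/(1+d_w+2\alpha)}$ and $m_r$ as in Theorem~\ref{thm:score-error} with $d_x\to1$ yields the score bound $n_r^{-2\alpha/(1+d_w+2\alpha)}\log^{18}n_r$. Lemma~\ref{lem:TV-time-discrete-error} then applies verbatim: a Girsanov/KL argument and Pinsker's inequality, with step size chosen below $\Delta_{n_r}$, give $n_r^{-2\alpha/(1+d_w+2\alpha)}\log^{19}n_r$.

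\textbf{Initialization and truncation.} Lemma~\ref{lem:TV-initial-distribution-error} uses only bounded support: the KL divergence between $p_{T-T_1}$ and $\mathcal N(0,1)$ is at most $e^{-cT}$, which with $T=\alpha\log N$ gives $N^{-2\alpha}$. Lemma~\ref{lem:TV-time-truncation-error} uses only Assumptions~\ref{apt:diffusion-density-bound}--\ref{apt:diffusion-density-holder} and $T_1=N^{-C_\mu}$. Both carry over with dimension $1$ and give $n_r^{-2\alpha/(1+d_w+2\alpha)}$.

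Summing the three terms gives the stated rate, since $1+d_w=1+d_x+d_a$. The main point needing care is checking that none of the dimension-dependent constants in the score-approximation construction (Lemma~\ref{lem:score-approximation}, which follows \citet{oko2023diffusion}) breaks when the output dimension is $1$. In particular, the boundary-smoothness extension of Assumption~\ref{apt:diffusion-density-smooth} has to be taken on $[-1,1]\times[0,1]^{d_w}$. I expect this to be routine because the construction is dimension-generic, with only the exponent $d_x+d_w$ replaced by $1+d_w$. Independence of $\mathbb S_r$ from the other datasets means that no coupling issues arise.
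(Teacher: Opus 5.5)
Your proposal is correct and follows the paper's own proof. The paper simply reruns the argument of Theorem~\ref{thm:TV-diffusion-density-error}, namely the three-term decomposition \eqref{eq:TV-decomposition} with Lemmas~\ref{lem:TV-time-discrete-error}--\ref{lem:TV-time-truncation-error}, with the next state replaced by the scalar reward, so the exponent $d_x+d_w$ becomes $1+d_x+d_a$; your extra remarks (rescaling to $[0,1]$, imposing Assumptions~\ref{apt:diffusion-density-bound}--\ref{apt:diffusion-density-derivative} on the reward density, and checking that the score-approximation construction is dimension-generic) only make explicit what the paper leaves implicit.
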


\begin{remark}
Theorem \ref{thm:TV-diffusion-density-error} and Corollary \ref{cor:rl-diffusion-bound} establish an end-to-end theoretical guarantee for diffusion-based learning of the transition kernel and reward law. Our result strengthens earlier studies that do not provide end-to-end guarantees, including \cite{chen2023improved, conforti2023score, lee2022convergence, lee2023convergence, benton2023linear, li2023towards, gao2023wasserstein}, and improves upon end-to-end analyses based on Lipschitz-continuous score networks, such as \cite{chen2023score, jiao2025model}, which typically yield suboptimal rates. It is also comparable to the guarantees established in \cite{oko2023diffusion, jiao2024latent, fu2024unveil, chang2026deep}; in particular, unlike \cite{fu2024unveil}, our analysis explicitly accounts for the numerical discretization error of the reverse process.
\end{remark}

By \eqref{eq:operator-hat-Q} and \eqref{eq:operator-Q*}, Theorem \ref{thm:TV-diffusion-density-error}, and Corollary \ref{cor:rl-diffusion-bound}, 
we obtain the following error bound for optimal Bellman operator learning.
\begin{theorem}\label{lem:rl-diffusion-bound}
Suppose that the conditions of Theorem~\ref{thm:TV-diffusion-density-error} and Corollary~\ref{cor:rl-diffusion-bound} hold and that the ReLU DNN class $\mathcal Q$ has $\mathcal S=\mathcal O\big(n^{\frac{d_x+d_a}{d_x+d_a+2\beta}}\log^5 n\big)$, $\mathcal D=\mathcal O(\log n)$,  $\mathcal W=\mathcal O\big(n^{\frac{d_x+d_a}{d_x+d_a+2\beta}} \log^3 n\big)$, and $\mathcal B=\mathcal O\big(n^{\frac{d_x+d_a}{d_x+d_a+2\beta}}\big)$. Then,
\begin{align*}
\Ebb_{\bD,\bS,\bT,\bZ} 
\left[\|\mathcal T^*\widehat{Q}-\widehat{\mathcal T}^*\widehat{Q}\|_{L^2(\mu)}^2 
\right]
&\lesssim R^2_{\max} n_r^{-\frac{2\alpha}{1+d_x+d_a + 2\alpha}}\log^{19}n_r+\frac{ R^2_{\max}}{(1-\gamma)^2} n_x^{-\frac{2\alpha}{2d_x+d_a + 2\alpha}} \log^{19}n_x\\
&~~ +\frac{R_{\max}^2}{(1-\gamma)^2}n^{\frac{d_x+d_a}{d_x+d_a+2\beta}}\frac{\log^6 n(\log m+\log n)}{m},\\
\Ebb_{\bD,\bS,\bT,\bZ}
\left[
\|\widehat{\mathcal T}^*Q^*-\mathcal T^*Q^*\|_{L^2(\mu)}^2
\right]
&\lesssim R^2_{\max} n_r^{-\frac{2\alpha}{1+d_x+d_a + 2\alpha}}\log^{19}n_r+\frac{ R^2_{\max}}{(1-\gamma)^2} n_x^{-\frac{2\alpha}{2d_x+d_a + 2\alpha}} \log^{19}n_x\\
&~~  +\frac{R_{\max}^2}{(1-\gamma)^2m}.
\end{align*}
\end{theorem}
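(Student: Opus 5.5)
The plan is to combine the two operator-error inequalities \eqref{eq:operator-hat-Q} and \eqref{eq:operator-Q*} with the TV rates of Theorem~\ref{thm:TV-diffusion-density-error} and Corollary~\ref{cor:rl-diffusion-bound}. The only genuinely new ingredient is a uniform Monte Carlo term for the data-dependent $\widehat Q$.

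For the second bound, $Q^*$ is deterministic, so I condition on $\bS,\bT,\bZ$ and fix $\boldsymbol W=(x,a)$. I split
\[
\widehat{\mathcal T}^*Q^*-\mathcal T^*Q^*
\]
into three pieces:
\[
\Bigl(\tfrac1m\textstyle\sum_j\widehat R_j-\bE_{\widehat{\cR}}R\Bigr)
+\bigl(\bE_{\widehat{\cR}}R-\bE_{\cR}R\bigr)
+\gamma\Bigl(\tfrac1m\textstyle\sum_j V^*(\widehat X'_j)-\bE_{\widehat{\cP}}V^*\Bigr)
+\gamma\bigl(\bE_{\widehat{\cP}}V^*-\bE_{\cP}V^*\bigr),
\]
where $V^*(x')=\max_{a'}Q^*(x',a')\in[0,R_{\max}/(1-\gamma)]$. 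The Monte Carlo pieces are averages of $m$ i.i.d. centered bounded variables, so their second moments are at most $R_{\max}^2/m$ and $R_{\max}^2/((1-\gamma)^2m)$. The bias pieces are bounded through the variational form of TV (both $R$ and $V^*$ are bounded after clipping):
\[
\bigl|\bE_{\widehat{\cR}}R-\bE_{\cR}R\bigr|\le 2R_{\max}\TV(\cR,\widehat\cR),
\qquad
\bigl|\bE_{\widehat{\cP}}V^*-\bE_{\cP}V^*\bigr|\le \tfrac{2R_{\max}}{1-\gamma}\TV(\cP,\widehat\cP).
\]
Squaring with $(a+b+c+d)^2\le4(\cdot)$, integrating over $\boldsymbol W\sim\mu$ and taking expectations gives \eqref{eq:operator-Q*}. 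Substituting the TV rates then yields the second display. Clipping does not increase TV, by data processing.

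For the first bound, the same decomposition applies with $Q^*$ replaced by $\widehat Q$, and the bias terms go through unchanged: $\widehat Q$ depends only on $\bD$ and the Monte Carlo draws, and the TV bound is uniform over bounded $V$. The obstacle is the Monte Carlo term, because $\widehat Q$ is chosen using the same generated samples $\widehat X'_j$. I therefore bound
\[
\sup_{Q\in\mathcal Q}\Bigl|\tfrac1m\textstyle\sum_j V_Q(\widehat X'_j)-\bE_{\widehat\cP}V_Q\Bigr|,
\qquad V_Q(x')=\max_{a'}Q(x',a').
\]
Since $\|V_Q-V_{Q'}\|_\infty\le\|Q-Q'\|_\infty$, a sup-norm $\epsilon$-cover of $\mathcal Q$ induces a cover of $\{V_Q\}$ of the same size.

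Standard ReLU covering bounds give
\[
\log\mathcal N(\mathcal Q,\epsilon,\|\cdot\|_\infty)\lesssim \mathcal S\mathcal D\log(\mathcal B\mathcal W/\epsilon),
\]
which with the stated architecture is $\lesssim n^{\frac{d_x+d_a}{d_x+d_a+2\beta}}\log^6 n\,\log(n/\epsilon)$. Hoeffding's inequality plus a union bound over the cover, conditionally on $(x,a)$ and $\widehat{\cP}$, with $\epsilon=m^{-1/2}$, gives
\[
\bE\sup_Q\bigl|\cdot\bigr|^2\lesssim \frac{R_{\max}^2}{(1-\gamma)^2}\cdot\frac{\log\mathcal N+1}{m}+\epsilon^2 .
\]
This is exactly the stated term $n^{\frac{d_x+d_a}{d_x+d_a+2\beta}}\log^6 n(\log m+\log n)/m$. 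Integrating over $\boldsymbol W\sim\mu$ by Fubini completes \eqref{eq:operator-hat-Q}.

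The main care point is measurability and independence. For each fixed $(x,a)$ the generated samples must be i.i.d. from $\widehat{\cP}(\cdot|x,a)$ given $\bS,\bT,\bZ$, so the uniform bound holds pointwise in $(x,a)$ before integrating. The reward Monte Carlo term involves no $Q$ and is handled as before. Finally, plugging in Theorem~\ref{thm:TV-diffusion-density-error} and Corollary~\ref{cor:rl-diffusion-bound} yields the two stated inequalities.
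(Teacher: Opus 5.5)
Your proposal is correct and follows essentially the same route as the paper's proof. Both use the same four-term split into reward and transition TV-bias pieces plus Monte Carlo pieces, a variance bound for the Monte Carlo terms at the fixed $Q^*$, and, for the data-dependent $\widehat Q$, a sup-norm cover of $\mathcal Q$ with Hoeffding plus a union bound and the ReLU covering bound of Lemma~\ref{lem:NBound}; the only cosmetic difference is your cover radius $\epsilon=m^{-1/2}$ versus the paper's $\varepsilon=1/m$, which affects only an absorbed lower-order term.
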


\subsection{Error Bound for Value Function Learning}\label{sec:eb2}
In this section, we derive an upper bound on the value function
estimation error by controlling Part III in the preceding decomposition. Part III contains a statistical error term $\sup\limits_{Q\in\mathcal Q}\left[\mathcal{L}_{\widehat{\mathcal T}^*}(Q)-2\widehat{\mathcal{L}}_{\widehat{\mathcal T}^*}(Q)\right]$ and an approximation error term $\inf\limits_{Q\in\mathcal Q}\|Q-Q^*\|_\infty^2$. We bound these two terms separately below.

\paragraph{Bound statistical error 
$\sup\limits_{Q\in \cQ}\Big(\cL_{\widehat{\mathcal{T}}^*}(Q)
	-2\widehat{\cL}_{\widehat{\mathcal{T}}^*}(Q)\Big).$
} We bound this term using the offset Rademacher complexity technique \citep{liang2015learning}. 
Conditionally on the learned Bellman operator $\widehat{\cT}^*$, for each $Q\in\cQ$, define 
\begin{equation}\label{eq:lwhat-Q-Z}
\ell_{Q}(X, A):=\left|\widehat{\mathcal T}^* Q(X,A)- Q(X,A)\right|^2.
\end{equation}
The empirical offset Rademacher complexity of $ \cQ$ is defined as
\begin{equation*}
\mathfrak{R}_n^{\text{off}}(\mathcal{Q},\eta  |  \mathbb{D}): =\mathbb{E}_{\tau  }\left[\sup_{ Q \in  \cQ}\Big(\frac{1}{n}\sum_{i = 1}^n\tau_i \ell_{ Q}(X_i, A_i) -\frac{\eta}{n}\sum_{i = 1}^n\ell_{ Q} ^2(X_i, A_i)\Big) \Big| \bD\right],
\end{equation*}
where $\eta>0$ and $\tau:=\{\tau_i\}_{i=1}^n$ are i.i.d. Rademacher random variables satisfying
$\mathbb{P}(\tau_i=1)=\mathbb{P}(\tau_i=-1)=\frac{1}{2}$. 
The corresponding offset Rademacher complexity is
\begin{align*}
\mathfrak{R}_n^{\text{off}}(\mathcal{Q},\eta):=\mathbb{E}_{\bD}\mathfrak{R}_n^{\text{off}}(\mathcal{Q},\eta  |  \mathbb{D}) =\mathbb{E}_{\bD,\tau}
\left[\sup_{ Q \in  \cQ}\Big(\frac{1}{n}\sum_{i = 1}^n\tau_i\ell_{ Q} (X_i, A_i)-\frac{\eta}{n}\sum_{i = 1}^n\ell^2_{ Q}(X_i,A_i)\Big)\right].   
\end{align*}
By the empirical covering number and Hoeffding's inequality (see Lemma \ref{lem:bernstein-hoeffding-inequality}), we can obtain 
\begin{equation} \label{eq:offset-bound}
\mathfrak{R}_n^{\text{off}}(\mathcal{Q},\eta)\leq \frac{1+\log \mathcal{N}(\cQ, \delta, \|\cdot\|_\infty)}{2\eta n}+\frac{4R_{\max}(1+\gamma)}{1-\gamma}(1 + 2\eta {{C}})\delta,
\end{equation}
where $C:=\frac{4R^2_{\max}}{(1-\gamma)^2}$. In our analysis, we transform the statistical error into the offset Rademacher complexity for estimation purposes, yielding
\begin{equation*}
\mathbb{E}_{\bD}\sup_{Q\in \cQ}\Big(\cL_{\widehat{\mathcal{T}}^*}( Q)-2\widehat{\cL}_{\widehat{\mathcal{T}}^*}( Q) \Big)\leq \mathbb{E}_{\bD,\tau}\sup_{ Q \in  \cQ}\left(\frac{3}{n}\sum_{i = 1}^n\tau_i\ell_{ Q} (X_i, A_i)-\frac{1}{{{C}} n}\sum_{i = 1}^n\ell^2_{ Q}(X_i,A_i)\right).
\end{equation*}
From these and choosing $\eta=\frac{1}{3C}$ in \eqref{eq:offset-bound}, 
we can obtain the statistical error bound, as shown in the following lemma.
\begin{lemma}\label{lem:LQ-sta}
Assume that the ReLU DNNs class $\mathcal{Q}$ is set with the size $\mathcal{S} = \mathcal{O}\big(n^{\frac{d_x+d_a}{d_x+d_a + 2\beta}} \log^5 n\big)$,  depth $\mathcal{D} = \mathcal{O}(\log n)$, $\mathcal{W}=\mathcal{O}(n^{\frac{d_x+d_a}{d_x+d_a + 2\beta}} \log^3 n)$, and  weights bound $\mathcal{B} = \mathcal{O}\big(n^{\frac{d_x+d_a}{d_x+d_a + 2\beta}}\big)$, then the statistical error satisfies
\begin{equation}\label{eq:RL-statistical-error}
	\mathbb{E}_{\bD}\sup_{Q\in \cQ}\Big(\cL_{\widehat{\mathcal{T}}^*}( Q)-2\widehat{\cL}_{\widehat{\mathcal{T}}^*}(Q) \Big)\lesssim \frac{R^2_{\max}}{(1-\gamma)^2}n^{-\frac{2\beta}{d_x+d_a+2\beta}}\log^4 n.
\end{equation}
\end{lemma}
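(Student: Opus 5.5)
The plan is to combine the symmetrization inequality stated just before the lemma with the offset Rademacher bound \eqref{eq:offset-bound}, and then insert a covering-number estimate for the ReLU class $\mathcal Q$. Throughout, we condition on $\bS,\bT,\bZ$, so $\widehat{\mathcal T}^*$ is a fixed operator. Its Monte Carlo samples $\{\widehat R_j,\widehat X_j'\}$ are shared across all $Q$, which keeps the loss class indexed only by $Q\in\mathcal Q$.

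\textbf{Step 1: symmetrization.} Each $\ell_Q$ is bounded: $0\le \widehat{\mathcal T}^*Q\le R_{\max}+\gamma R_{\max}/(1-\gamma)$ and $0\le Q\le R_{\max}/(1-\gamma)$, so $0\le \ell_Q\le C=4R_{\max}^2/(1-\gamma)^2$. This gives $\mathbb E\ell_Q^2\le C\,\mathbb E\ell_Q$. Following \citet{liang2015learning}, write
\[
\mathcal L_{\widehat{\mathcal T}^*}(Q)-2\widehat{\mathcal L}_{\widehat{\mathcal T}^*}(Q)=\Big(\mathbb E\ell_Q-\tfrac{3}{2}\widehat{\mathbb E}\ell_Q\Big)-\tfrac12\widehat{\mathbb E}\ell_Q,
\]
introduce a ghost sample, and use $\ell_Q\ge \ell_Q^2/C$ to convert the linear offset terms into quadratic ones. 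This yields the displayed inequality with $3\tau_i\ell_Q$ and $-\ell_Q^2/(Cn)$, that is, $3\,\mathfrak R_n^{\rm off}(\mathcal Q,\tfrac{1}{3C})$.

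\textbf{Step 2: discretization.} Take a $\delta$-net $\{Q_k\}$ of $\mathcal Q$ in $\|\cdot\|_\infty$. Since $\max_{a'}$ and averaging are $1$-Lipschitz in sup norm, $\|\widehat{\mathcal T}^*Q-\widehat{\mathcal T}^*Q_k\|_\infty\le\gamma\delta$. Then $|\ell_Q-\ell_{Q_k}|\le 2\sqrt C(1+\gamma)\delta$, and $\ell^2$ changes by at most $2C$ times that amount. This produces the additive $\delta$-term in \eqref{eq:offset-bound}. For the finite maximum, apply Hoeffding to each $\sum\tau_i\ell_{Q_k}$ and optimize $\lambda$ against the offset term; the offset cancels the variance proxy, leaving $\log\mathcal N/(2\eta n)$. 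With $\eta=1/(3C)$ this gives
\[
\mathfrak R^{\rm off}\lesssim \frac{C\big(1+\log\mathcal N(\mathcal Q,\delta,\|\cdot\|_\infty)\big)}{n}+\frac{R_{\max}}{1-\gamma}\,\delta.
\]

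\textbf{Step 3: covering number and parameter choice.} Use the standard sup-norm covering bound for ReLU networks with sparsity $\mathcal S$, depth $\mathcal D$, width $\mathcal W$ and weight bound $\mathcal B$:
\[
\log\mathcal N\lesssim \mathcal S\mathcal D\log\big(\mathcal B(\mathcal W+1)/\delta\big).
\]
Alternatively, use the pseudo-dimension bound $\mathcal S\mathcal D\log\mathcal S$ together with the empirical $L^\infty$ cover. Choose $\delta=1/n$. Then $\log(\mathcal B\mathcal W n)\lesssim\log n$, so
\[
\log\mathcal N\lesssim n^{\frac{d_x+d_a}{d_x+d_a+2\beta}}\log^5 n\cdot\log n\cdot\log n .
\]

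\textbf{Main obstacle: the logarithmic exponent.} Dividing by $n$ gives $n^{-2\beta/(d_x+d_a+2\beta)}$ times a polylog, and the $\delta$-term $1/n$ is negligible. The main difficulty is matching the stated $\log^4 n$, since the naive count gives $\log^7 n$. I would first try the pseudo-dimension route via $\mathrm{Pdim}\lesssim \mathcal S\mathcal D\log \mathcal S$ (Bartlett et al.). If the exponent still does not match, I would read the size assumption as effective size $n^{\frac{d_x+d_a}{d_x+d_a+2\beta}}$ up to lower logs. The polynomial rate is robust in either case.
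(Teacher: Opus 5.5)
Your argument is the paper's proof essentially step for step. The shared steps are:
\begin{itemize}
\item the ghost-sample symmetrization with $\ell_Q\ge \ell_Q^2/C$ reaching $3\,\mathfrak R_n^{\rm off}(\mathcal Q,\tfrac1{3C})$;
\item the sup-norm $\delta$-net, using the $\gamma$-contraction of $\widehat{\mathcal T}^*$ to get the Lipschitz constant $\frac{4R_{\max}(1+\gamma)}{1-\gamma}$ for $Q\mapsto\ell_Q$;
\item Hoeffding with the offset absorbing the variance proxy, giving $\frac{1+\log\mathcal N}{2\eta n}$ (your MGF choice $\lambda=2\eta$ is equivalent to the paper's tail integration);
\item Lemma~\ref{lem:NBound} with $\delta=1/n$.
\end{itemize}
One small precision in Step 1: to symmetrize, you must also split $\mathbb E\ell_Q=\tfrac32\mathbb E\ell_Q-\tfrac12\mathbb E\ell_Q$, so that both the ghost sample and the real sample carry a $-\tfrac12$ offset. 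This is what the paper does.

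On the logarithmic exponent, your count is correct, and the paper's proof does no better. Its final display is $\mathcal O\bigl(R_{\max}^2\mathcal S\mathcal D\log(n\mathcal B\mathcal W\mathcal D)/((1-\gamma)^2 n)\bigr)$. Under the stated architecture this is $n^{-\frac{2\beta}{d_x+d_a+2\beta}}\log^7 n$. Theorem~\ref{lem:RL-error} adds only a log-free approximation term to this lemma, and it indeed records $\log^7 n$. So the $\log^4 n$ in the statement is best read as a typo for $\log^7 n$, and you should simply conclude with $\log^7 n$.

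Do not pursue either of your remedies.
\begin{itemize}
\item The pseudo-dimension bound $\mathcal S\mathcal D\log\mathcal S$ is again of order $n^{\frac{d_x+d_a}{d_x+d_a+2\beta}}\log^7 n$, because $\log\mathcal S\asymp\log n$. Turning it into a covering number costs a further logarithmic factor.
\item An empirical cover of $\mathcal Q$ at the sample points would not control $\max_{a'}Q(\widehat X'_j,a')$ over the continuum $\mathcal A$. That is precisely why the sup-norm cover is used here.
\item Reinterpreting the size hypothesis changes the statement rather than proving it.
\end{itemize}
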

\begin{remark}
To derive the statistical error bound, we assume that the batch data are i.i.d., following the standard setting in the error analysis of DRL documented in \cite{jiang2016doubly,fan2020theoretical,uehara2022review}.
In our analysis, we use the offset Rademacher complexity technique, which yields a fast statistical rate of order
$
\frac{
\log \mathcal N(\cQ,\frac{1}{n},\|\cdot\|_\infty)
}{n}.
$
Moreover, this i.i.d. assumption can be extended to dependent data by taking into account the sequential structure of MDPs. In that case, a standard approach imposes suitable mixing conditions to establish the corresponding statistical error bound, as in \cite{feng2023over,jiao2025deep}. 
\end{remark}

\paragraph{Bound approximation error
$\inf\limits_{Q\in\mathcal Q}\|Q-Q^*\|_\infty^2$.}
This error can be controlled using ReLU approximation theory for H\"older classes under the following Assumption \ref{apt:Q-holder}. 
Then, the desired bound  for this approximation error follows from the ReLU approximation result in \citet{jiao2025deep}.  
See Appendix \ref{sec:rl-approximation}.

The error decomposition in Lemma~\ref{lem:error-dec} and the form of this approximation term together explain
why our analysis does not require Bellman completeness. In DFQI, DAPI, and related recursive value learning methods, the approximation error typically involves the Bellman updates of a function class. Typical examples include \(\inf\limits_{Q\in\mathcal Q}\|\mathcal T^*Q-Q^*\|_\infty\), \(\inf\limits_{Q\in\mathcal Q}\|\mathcal T^\pi Q-Q^\pi\|_\infty\), and the inherent Bellman error \(\sup\limits_{f\in\mathcal F}\inf\limits_{g\in\mathcal F}\|\mathcal T^*f-g\|_\infty\), where \(\mathcal Q\) denotes the DNN class and \(\mathcal F\) denotes a candidate value function class \citep{munos2008finite}. 
Controlling these terms usually requires that the Bellman updates remain in the same function class or can be well approximated by it. This is the role of Bellman completeness.
In contrast, our approximation error reduces to $\inf\limits_{Q\in\mathcal Q}\|Q- Q^*\|_\infty^2$, so it is controlled directly by the approximation capacity of the DNN class for $Q^*$, as formalized in Assumption~\ref{apt:Q-holder}.

\begin{assumption}\label{apt:Q-holder}
The optimal action-value function \(Q^*\) belongs
to the H\"older class \(\mathcal H^\beta\) for some constant \(\beta>0\).
\end{assumption}

Consequently, the above statistical and approximation error analyses yield the following error bound for value function learning.
\begin{theorem}\label{lem:RL-error}
Suppose that Assumption \ref{apt:Q-holder} holds, and 
the ReLU DNNs
class $\mathcal{Q}$ is configured
with $\mathcal{S} = \mathcal{O}\big(n^{\frac{d_x+d_a}{d_x+d_a + 2\beta}} \log^5 n\big)$, 
$\mathcal{W}=\mathcal{O}(n^{\frac{d_x+d_a}{d_x+d_a + 2\beta}}\log ^3 n)$, $\mathcal{D} = \mathcal{O}(\log n)$, 
and $\mathcal{B} = \mathcal{O}\big(n^{\frac{d_x+d_a}{d_x+d_a + 2\beta}}\big)$. 
Then,
\begin{equation*}
2\mathbb E_\bD\sup_{Q\in\mathcal Q}\left[\mathcal L_{\widehat{\mathcal T}^{*}}(Q)-2\widehat{\mathcal L}_{\widehat{\mathcal T}^{*}}(Q)\right]+8(1+\gamma)^2\inf_{Q\in\mathcal Q}\|Q-Q^*\|_\infty^2\lesssim \frac{R^2_{\max}}{(1-\gamma)^2} n^{-\frac{2\beta}{d_x+d_a+2\beta}} \log^7 n.
\end{equation*}
\end{theorem}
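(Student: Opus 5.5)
The bound follows by adding two separate estimates: the statistical term from Lemma~\ref{lem:LQ-sta}, and an approximation bound for $\inf_{Q\in\mathcal Q}\|Q-Q^*\|_\infty^2$ from ReLU approximation theory for H\"older functions. Since the architecture of $\mathcal Q$ in the statement matches the one in Lemma~\ref{lem:LQ-sta}, that lemma gives directly
\[
2\mathbb E_{\bD}\sup_{Q\in\mathcal Q}\Big[\mathcal L_{\widehat{\mathcal T}^*}(Q)-2\widehat{\mathcal L}_{\widehat{\mathcal T}^*}(Q)\Big]\lesssim \frac{R_{\max}^2}{(1-\gamma)^2}n^{-\frac{2\beta}{d_x+d_a+2\beta}}\log^4 n .
\]
This holds conditionally on $\bS,\bT,\bZ$ and is uniform in the learned operator $\widehat{\mathcal T}^*$. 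The reason is that the only properties of $\widehat{\mathcal T}^*$ used there are boundedness, with $\widehat R_j\in[0,R_{\max}]$ and $Q$ bounded by $R_{\max}/(1-\gamma)$, and the $\gamma$-Lipschitz property of the operator in sup norm. These are what let the covering of $\mathcal Q$ transfer to a covering of the loss class $\{\ell_Q\}$.

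For the approximation term I would invoke the ReLU approximation result of \citet{jiao2025deep}, the H\"older-class approximation theorem of the Lu--Shen--Yang--Zhang type, on $[0,1]^{d_x+d_a}$. Under Assumption~\ref{apt:Q-holder} this result gives the following. Take width $\mathcal W\asymp M\log M$ and depth $\mathcal D\asymp L\log L$ (up to dimension-dependent constants). Then there is a network $Q_0$ with
\[
\|Q_0-Q^*\|_\infty\lesssim (ML)^{-2\beta/(d_x+d_a)}.
\]
To match the statement I set $L\asymp\log n$ and $M\asymp n^{\frac{d_x+d_a}{2(d_x+d_a+2\beta)}}$, possibly with extra logarithmic factors. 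This choice makes $(ML)^{-2\beta/(d_x+d_a)}\lesssim n^{-\frac{\beta}{d_x+d_a+2\beta}}$, so the squared error is at most $n^{-\frac{2\beta}{d_x+d_a+2\beta}}$ times logarithmic factors. I then check that the resulting size $\mathcal S\lesssim \mathcal W^2\mathcal D$, or the sharper count in the cited theorem, is within the prescribed $\mathcal O(n^{\frac{d_x+d_a}{d_x+d_a+2\beta}}\log^5 n)$. I also check that the weight bound fits within $\mathcal B$, and that clipping $Q_0$ to $[0,R_{\max}/(1-\gamma)]$ (two extra ReLU layers) keeps it in $\mathcal Q$ without increasing the error, because $Q^*$ already lies in that range. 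The constant $\Sigma$ of the H\"older class enters only through the implied constant, and I would absorb it together with $R_{\max}^2/(1-\gamma)^2$.

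Adding the two bounds and multiplying the approximation term by $8(1+\gamma)^2\le 32$ gives the claim. The final $\log^7 n$ dominates both the $\log^4 n$ from the statistical part and the logarithmic factors from the approximation part. The main obstacle is the bookkeeping in the second step: the cited approximation theorem is stated in terms of $(M,L)$ with its own size and weight scalings, and $M,L$ have to be matched to the specified $\mathcal S,\mathcal W,\mathcal D,\mathcal B$. In particular, the weight bound $\mathcal B=\mathcal O(n^{\frac{d_x+d_a}{d_x+d_a+2\beta}})$ must accommodate the constructed network. If it does not, I would rescale the construction by redistributing weight magnitude across the $\mathcal O(\log n)$ layers so that each weight is at most polynomial in $n$.
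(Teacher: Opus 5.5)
Your proposal is correct and follows the paper's proof: the statistical part comes directly from Lemma~\ref{lem:LQ-sta}, the approximation part comes from the ReLU H\"older approximation result of \citet{jiao2025deep}, and the two bounds are simply added. The bookkeeping you flag as the main obstacle does not arise with the form the paper uses (Lemma~\ref{lem:ApproximationError}), which is stated in $\varepsilon$ with depth $\mathcal O(\log(1/\varepsilon))$, size $\mathcal O(\varepsilon^{-(d_x+d_a)/\beta}\log(1/\varepsilon))$ and weight bound $\mathcal O(\varepsilon^{-(d_x+d_a)/\beta})$; taking $\varepsilon=n^{-\frac{\beta}{d_x+d_a+2\beta}}$ therefore lands directly in the prescribed architecture with no weight rescaling, and it also avoids the depth overshoot $L\log L\asymp\log n\log\log n$ that your choice $L\asymp\log n$ would cause.
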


\begin{remark}
Under the H\"older smoothness assumption \(Q^*\in\mathcal H^\beta\), Theorem~\ref{lem:RL-error} yields an error bound of order \(\widetilde{\mathcal O}\bigl(n^{-\frac{2\beta}{d_x+d_a+2\beta}}\bigr)\). The H\"older condition may be replaced by other regularity assumptions. For instance, suppose that \(Q^*\) belongs to a Besov class with mixed smoothness \(\beta\). Let the ReLU DNN class \(\mathcal Q\) have \(\mathcal S=\widetilde{\mathcal O}\bigl(n^{\frac{1}{2\beta+1}}\bigr)\), \(\mathcal D=\mathcal O(\log n)\), \(\mathcal W=\widetilde{\mathcal O}\bigl(n^{\frac{1}{2\beta+1}}\bigr)\), and \(\mathcal B=n^{\mathcal O(1)}\). The same analysis yields the convergence rate \(\widetilde{\mathcal O}\bigl(n^{-\frac{2\beta}{2\beta+1}}\bigr)\) \citep{montanelli2019new,suzuki2019adaptivity}. Alternatively, suppose that \(Q^*\) satisfies a spectral Barron condition. Let \(\mathcal Q\) have \(\mathcal S=\widetilde{\mathcal O}\bigl((d_x+d_a)n^{\frac{d_x+d_a}{2d_x+2d_a+2}}\bigr)\), \(\mathcal D=\mathcal O(1)\), \(\mathcal W=\widetilde{\mathcal O}\bigl(n^{\frac{d_x+d_a}{2d_x+2d_a+2}}\bigr)\), and \(\mathcal B=\mathcal O(1)\). The resulting convergence rate is \(\widetilde{\mathcal O}\bigl(n^{-\frac{d_x+d_a+2}{2d_x+2d_a+2}}\bigr)\) \citep{klusowski2018approximation,ma2022uniform}.
\end{remark}

\subsection{Main Results}\label{sec:mr}

This section presents our main theoretical results, with emphasis on the convergence rate of 
the estimator $\widehat Q$. 
By the error decomposition of Lemma \ref{lem:error-dec},  Theorem~\ref{lem:rl-diffusion-bound}, and Theorem~\ref{lem:RL-error}, we first establish the excess risk bound in the following theorem.
\begin{theorem}\label{thm:main-result}
Suppose that the conditions of 
Theorem~\ref{lem:rl-diffusion-bound} and Theorem~\ref{lem:RL-error} hold. Then,
\begin{align*}
\mathbb{E}_{\bD, \bS,\bT,\bZ}
\Big[\cL_{\mathcal{T}^*}(\widehat Q)
-\cL_{\mathcal{T}^*}(Q^*) \Big]&\lesssim R^2_{\max} n_r^{-\frac{2\alpha}{1+d_x+d_a + 2\alpha}}\log^{19}n_r+\frac{ R^2_{\max}}{(1-\gamma)^2} n_x^{-\frac{2\alpha}{2d_x+d_a + 2\alpha}} \log^{19}n_x\\
&~~ +\frac{R_{\max}^2}{(1-\gamma)^2}n^{\frac{d_x+d_a}{d_x+d_a+2\beta}}\frac{\log^6 n(\log m+\log n)}{m}\\
&~~ +\frac{R^2_{\max}}{(1-\gamma)^2} n^{-\frac{2\beta}{d_x+d_a+2\beta}} \log^7 n.
\end{align*}
Furthermore, if
\[
n_r\geq \Omega \left( (1-\gamma)^{\frac{1+d_x+d_a+2\alpha}{\alpha}}n^{\frac{\beta(1+d_x+d_a+2\alpha)}{\alpha(d_x+d_a+2\beta)}}\right),~~ n_x\geq \Omega \left(  n^{\frac{\beta(2d_x+d_a+2\alpha)}{\alpha(d_x+d_a+2\beta)}} \right),~~ m\geq \Omega (n),
\]
then
\[
\mathbb{E}_{\bD,\bS,\bT,\bZ}\Big[\cL_{\mathcal T^*}(\widehat Q)-\cL_{\mathcal T^*}(Q^*)\Big]\lesssim \frac{R_{\max}^2}{(1-\gamma)^2}n^{-\frac{2\beta}{d_x+d_a+2\beta}}\log^{19}n.
\]
\end{theorem}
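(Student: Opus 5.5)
The plan is to combine the three previously established ingredients directly. First, apply Lemma~\ref{lem:error-dec} conditionally on $\bS,\bT,\bZ$, and then take the outer expectation $\Ebb_{\bS,\bT,\bZ}$ of both sides. The tower property gives
\[
\Ebb_{\bD,\bS,\bT,\bZ}\big[\cL_{\cT^*}(\widehat Q)-\cL_{\cT^*}(Q^*)\big]\le \Ebb[\textbf{I}]+\Ebb[\textbf{II}]+\Ebb[\textbf{III}].
\]
Part~\textbf{I} is $2\,\Ebb_{\bD,\bS,\bT,\bZ}\|\cT^*\widehat Q-\widehat\cT^*\widehat Q\|^2_{L^2(\mu)}$, and Part~\textbf{II} is $8\,\Ebb_{\bS,\bT,\bZ}\|\widehat\cT^*Q^*-\cT^*Q^*\|^2_{L^2(\mu)}$. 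Both are bounded by the two displays of Theorem~\ref{lem:rl-diffusion-bound}. Since $1/m\le n^{\frac{d_x+d_a}{d_x+d_a+2\beta}}\log^6 n(\log m+\log n)/m$, the $1/m$ term from Part~\textbf{II} is absorbed into the Monte Carlo term of Part~\textbf{I}.

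Part~\textbf{III} is bounded, uniformly in the realization of $\widehat\cT^*$, by Theorem~\ref{lem:RL-error}. Its bound does not depend on $\bS,\bT,\bZ$, because the offset Rademacher argument only uses the fact that $\widehat\cT^*Q$ is bounded by $R_{\max}/(1-\gamma)$. Taking the outer expectation therefore leaves it unchanged. Summing the three bounds gives the first display.

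For the second claim, I would check term by term that each operator-learning term is $\lesssim \frac{R^2_{\max}}{(1-\gamma)^2}n^{-\frac{2\beta}{d_x+d_a+2\beta}}$ up to logs.
\begin{itemize}
\item \emph{Transition term.} If $n_x\gtrsim n^{\frac{\beta(2d_x+d_a+2\alpha)}{\alpha(d_x+d_a+2\beta)}}$, then $n_x^{-\frac{2\alpha}{2d_x+d_a+2\alpha}}\lesssim n^{-\frac{2\beta}{d_x+d_a+2\beta}}$.
\item \emph{Reward term.} The stated lower bound on $n_r$, including the factor $(1-\gamma)^{\frac{1+d_x+d_a+2\alpha}{\alpha}}$, is chosen so that $R_{\max}^2 n_r^{-\frac{2\alpha}{1+d_x+d_a+2\alpha}}\lesssim \frac{R^2_{\max}}{(1-\gamma)^2}n^{-\frac{2\beta}{d_x+d_a+2\beta}}$. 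If the $(1-\gamma)$ powers do not match exactly, the looser requirement without that factor still suffices, since $(1-\gamma)^{-2}\ge1$.
\item \emph{Monte Carlo term.} With $m\gtrsim n$, we get $n^{\frac{d_x+d_a}{d_x+d_a+2\beta}}/m\lesssim n^{-\frac{2\beta}{d_x+d_a+2\beta}}$, and $\log m\lesssim\log n$ when $m$ is taken polynomial in $n$.
\end{itemize}

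For the logarithmic factors, $\log n_r,\log n_x\lesssim\log n$ because $n_r,n_x$ are chosen polynomial in $n$. The powers $\log^{19}$, $\log^7$ and $\log^7$ then combine into $\log^{19}n$.

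The main obstacle is the conditioning step. Lemma~\ref{lem:error-dec} and Theorem~\ref{lem:RL-error} hold conditionally on the learned operator, while Theorem~\ref{lem:rl-diffusion-bound} is stated under the joint expectation. I need to verify that the constants in Theorem~\ref{lem:RL-error} are uniform over realizations of $\widehat\cT^*$. This holds because the reward samples are clipped to $[0,R_{\max}]$ and $\mathcal Q$ is bounded, so the bound survives integration over $\bS,\bT,\bZ$ by Fubini. The rest is exponent bookkeeping.
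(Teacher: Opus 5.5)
Your proposal is correct and takes the same route as the paper. The paper's proof is the one-line combination of Lemma~\ref{lem:error-dec} with Theorems~\ref{lem:rl-diffusion-bound} and~\ref{lem:RL-error}; your term-by-term check supplies the sample-size bookkeeping the paper leaves implicit.

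Your hedge on the reward term is unnecessary, because raising the $n_r$ threshold to the power $-\frac{2\alpha}{1+d_x+d_a+2\alpha}$ gives exactly $(1-\gamma)^{-2}n^{-\frac{2\beta}{d_x+d_a+2\beta}}$. Dropping the $(1-\gamma)$ factor would make the requirement stronger, not looser. Likewise, you do not need $m,n_r,n_x$ to be polynomial in $n$, since $x\mapsto x^{-a}\log^k x$ is eventually decreasing.
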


\begin{remark}
Theorem \ref{thm:main-result} provides convergence rates for the excess risk of $\widehat Q$, and it consists of four terms.  
In particular, the terms $R^2_{\max} n_r^{-\frac{2\alpha}{1+d_x+d_a+2\alpha}} \log^{19} n_r$, $\frac{\gamma R^2_{\max}}{(1-\gamma)^2} n_x^{-\frac{2\alpha}{2d_x+d_a+2\alpha}} \log^{19} n_x$,
and $\frac{R_{\max}^2}{(1-\gamma)^2}n^{\frac{d_x+d_a}{d_x+d_a+2\beta}}\frac{\log^6 n(\log m+\log n)}{m}$ 
are all attributable to the conditional diffusion estimation error characterized in Theorem \ref{lem:rl-diffusion-bound}. The remaining term $\frac{R^2_{\max}}{(1-\gamma)^2} n^{-\frac{2\beta}{d_x+d_a+2\beta}} \log^7 n$ is attributable to the approximation and statistical errors arising in the estimation of $Q^*$ as established in Lemma \ref{lem:LQ-sta}.
\end{remark}
Theorem \ref{thm:main-result} controls the  Bellman residual  under
the sampling distribution \(\mu\). To translate this residual risk into a direct
estimation error of $\widehat Q$, we need to compare the sampling
distribution \(\mu\) with the target distribution \(\rho\) under which the
value function error is evaluated. This motivates the following 
concentrability condition
\citep{chen2019information,xie2020q,fan2020theoretical}.
\begin{assumption}[Concentrability coefficient]
\label{apt:concentrability}
For $\delta>0$ and $Q\in\cQ$, define
\[
G_{Q,\delta}(x):=\left\{a\in\cA:\max\{Q(x,a),Q^*(x,a)\}\geq\max_{\widetilde a\in\cA}\max\{Q(x,\widetilde a),Q^*(x,\widetilde a)\}-\delta\right\}.
\]
Assume that $G_{Q,\delta}(x)$ has positive Lebesgue measure for every $x\in\cX$, and set
\[
\pi_{Q,\delta}(B| x):=\frac{\int_B\mathbf 1\{a\in G_{Q,\delta}(x)\}\,\mathrm da}{\int_{\cA}\mathbf 1\{a\in G_{Q,\delta}(x)\}\,\mathrm da}.
\]
Let $\mathfrak M_{\delta}$ be the smallest class containing $\mathfrak M$ and closed under $\nu\mapsto\cP(\nu)\otimes\pi_{Q,\delta}$ for $Q\in\cQ$, where $\cP(\nu)\otimes\pi_{Q,\delta}$ is the law of $(X',A')$ under $(X,A)\sim\nu$, $X'\sim\cP(\cdot| X,A)$, and $A'\sim\pi_{Q,\delta}(\cdot| X')$. For every $\delta>0$, there exists $ C_\delta<\infty$ such that every $\nu\in\mathfrak M_\delta$ satisfies
$\nu\ll\mu,~ \left\|\frac{\mathrm d\nu}{\mathrm d\mu}\right\|_\infty\leq C_\delta.$
\end{assumption}
Assumption \ref{apt:concentrability} can be viewed as the continuous version of the concentrability condition in \citet{chen2019information}. Under this assumption, 
for every $\delta>0$ and $\rho\in\mathfrak M$, we have 
$
\|\widehat Q-Q^*\|_{L^2(\rho)}\leq\frac{\sqrt{ C_\delta}}{1-\gamma}\|\widehat Q-\cT^*\widehat Q\|_{L^2(\mu)}+\frac{\gamma\delta}{1-\gamma}.
$
Consequently, 
we can establish the following value function estimation error bound.
\begin{theorem}
\label{thm:value-est}
Suppose that Assumption~\ref{apt:concentrability}  
and the conditions of Theorem~\ref{thm:main-result} hold, then
\[
\mathbb E
\left[\|\widehat Q-Q^*\|_{L^2(\rho)}
\right]
\lesssim\frac{R_{\max}\sqrt{ C_\delta}}{(1-\gamma)^2}n^{-\frac{\beta}{d_x+d_a+2\beta}}\log^{\frac{19}{4}}n+\frac{\gamma\delta}{1-\gamma}.
\]
Furthermore, taking $\delta
=n^{-\frac{\beta}{d_x+d_a+2\beta}}$ and $C_{\delta}\leq\widetilde C$ for a constant $\widetilde C<\infty$ gives
\[
\mathbb E
\left[
\|\widehat Q-Q^*\|_{L^2(\rho)}
\right]
\lesssim\frac{R_{\max}\sqrt{\widetilde C}}{(1-\gamma)^2}n^{-\frac{\beta}{d_x+d_a+2\beta}}\log^{\frac{19}{4}}n.
\]
\end{theorem}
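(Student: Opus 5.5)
The proof has two parts. First we establish the deterministic transfer inequality stated just before the theorem,
\[
\|\widehat Q-Q^*\|_{L^2(\rho)}\leq\frac{\sqrt{C_\delta}}{1-\gamma}\|\widehat Q-\cT^*\widehat Q\|_{L^2(\mu)}+\frac{\gamma\delta}{1-\gamma},
\]
valid for every $Q\in\cQ$ in place of $\widehat Q$ and every $\rho\in\mathfrak M$. Then we take expectations and plug in Theorem~\ref{thm:main-result}.

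For the transfer inequality, write $Q-Q^*=(Q-\cT^*Q)+(\cT^*Q-\cT^*Q^*)$. Pointwise,
\[
|\cT^*Q-\cT^*Q^*|(x,a)\le\gamma\,\bE_{X'\sim\cP(\cdot|x,a)}\bigl|\max_{a'}Q(X',a')-\max_{a'}Q^*(X',a')\bigr|.
\]
The key step is a one-sided comparison. Let $M(x)=\max_{\widetilde a}\max\{Q(x,\widetilde a),Q^*(x,\widetilde a)\}$. Suppose $\max_{a'}Q(x,a')\ge\max_{a'}Q^*(x,a')$. Then for every $a\in G_{Q,\delta}(x)$,
\[
\max Q-\max Q^*\le M(x)-\max\{Q,Q^*\}(x,a)+\max\{Q,Q^*\}(x,a)-\max Q^*.
\]
We bound this by $\delta+|Q-Q^*|(x,a)$, using $\max\{Q,Q^*\}(x,a)-\max Q^*\le|Q-Q^*|(x,a)$; the last inequality holds because $\max\{Q,Q^*\}(x,a)\le\max Q^*+|Q-Q^*|(x,a)$. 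The other case is symmetric. Averaging over $a\sim\pi_{Q,\delta}(\cdot|x)$ gives
\[
\bigl|\max_{a'}Q(x,a')-\max_{a'}Q^*(x,a')\bigr|\le\delta+\int|Q-Q^*|(x,a)\,\pi_{Q,\delta}(da|x).
\]
Hence $|Q-Q^*|\le|Q-\cT^*Q|+\gamma\delta+\gamma P_{Q,\delta}|Q-Q^*|$, where $P_{Q,\delta}$ is the transition operator of $\cP\otimes\pi_{Q,\delta}$.

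Unrolling this inequality $k$ times and letting $k\to\infty$ (the remainder is bounded by $\gamma^k\cdot 2R_{\max}/(1-\gamma)\to0$) gives
\[
|Q-Q^*|\le\sum_{j\ge0}\gamma^jP_{Q,\delta}^j|Q-\cT^*Q|+\frac{\gamma\delta}{1-\gamma}.
\]
Take the $L^2(\rho)$ norm and apply Minkowski. Each term satisfies $\|P^j_{Q,\delta}f\|_{L^2(\rho)}\le\|f\|_{L^2(\nu_j)}$ by Jensen, where $\nu_j=\rho P^j_{Q,\delta}\in\mathfrak M_\delta$ by closure. Then $\|f\|_{L^2(\nu_j)}\le\sqrt{C_\delta}\|f\|_{L^2(\mu)}$ by Assumption~\ref{apt:concentrability}. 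Summing the geometric series gives the transfer inequality.

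For the theorem itself, note that $\cL_{\cT^*}(Q^*)=0$, so $\|\widehat Q-\cT^*\widehat Q\|_{L^2(\mu)}^2=\cL_{\cT^*}(\widehat Q)-\cL_{\cT^*}(Q^*)$. By Jensen, $\bE\|\widehat Q-\cT^*\widehat Q\|_{L^2(\mu)}\le(\bE[\cL_{\cT^*}(\widehat Q)-\cL_{\cT^*}(Q^*)])^{1/2}$. Apply the second bound of Theorem~\ref{thm:main-result}, whose square root is $\frac{R_{\max}}{1-\gamma}n^{-\beta/(d_x+d_a+2\beta)}\log^{19/2}n$. Combined with the factor $\sqrt{C_\delta}/(1-\gamma)$, this yields the first display. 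The log exponent $19/4$ stated in the theorem would then follow only if the main-theorem bound carries $\log^{19/2}$; otherwise we will simply report whatever power the square root produces, since logs are suppressed anyway.

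The second display follows by setting $\delta=n^{-\beta/(d_x+d_a+2\beta)}$ and using $C_\delta\le\widetilde C$. The $\gamma\delta/(1-\gamma)$ term is then absorbed because $1/(1-\gamma)\le1/(1-\gamma)^2$ and $R_{\max}$, $\widetilde C$ are treated as constants.

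The main obstacle is the max-comparison step. It must produce a single randomized kernel $\pi_{Q,\delta}$ that works in both sign cases and whose induced laws remain in $\mathfrak M_\delta$. Using the $\delta$-greedy set of $\max\{Q,Q^*\}$ is exactly what makes one kernel serve both cases. We must also check measurability of $G_{Q,\delta}$, and that the rate constants do not depend on the random $\widehat Q$; the latter holds because $C_\delta$ is uniform over $Q\in\cQ$.
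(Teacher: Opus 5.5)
Your proof is correct and is essentially the paper's argument. It uses the same one-sided comparison on the $\delta$-greedy set of $\max\{Q,Q^*\}$, Jensen through $\cP\otimes\pi_{Q,\delta}$, closure of $\mathfrak M_\delta$ with the density-ratio bound, and finally Jensen with Theorem~\ref{thm:main-result}. The only difference is cosmetic: you unroll the pointwise recursion into a Neumann series along $\rho P_{Q,\delta}^j$, whereas the paper takes $\sup_{\nu\in\mathfrak M_\delta}\|Q-Q^*\|_{L^2(\nu)}$ (finite by boundedness) and rearranges.

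Your concern about the logarithm is justified. The square root of the $\log^{19}n$ in Theorem~\ref{thm:main-result} gives $\log^{19/2}n$, and the paper's own proof also ends with $\log^{\frac{19}{2}}n$. So the exponent $\frac{19}{4}$ in the statement is an inconsistency in the paper, not a gap in your argument.
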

\begin{remark} 
While Theorem \ref{thm:main-result} characterizes the excess risk convergence of $\widehat Q$, Theorem \ref{thm:value-est} directly establishes an   error bound for the value function, building on the concentrability assumption and Theorem \ref{thm:main-result}. The detailed proof is provided in Appendix \ref{sec:main-results}.
After ignoring logarithmic orders and other constants, our method attains the  oracle value-stage rate $\cO\big(n^{-\frac{\beta}{d_x+d_a+2\beta}}\big)$ in Theorem \ref{thm:value-est},
which outperforms the convergence rate of MABO \citep{xie2020q}, given by  $\cO (n^{-\frac{\beta}{2d_x+2d_a+4\beta}})$. Moreover, compared with existing DRL convergence results, our convergence rate is also superior. For instance,
\cite{feng2023over} obtained a convergence rate $\cO\big(n^{-\frac{\beta}{2d_x+2d_a+4\beta+2}}\big)$, while \cite{jiao2025deep} achieved
$\cO\big(n^{-\frac{\beta}{d_x+d_a+4\beta}}\big)$. 

We highlight that our analysis avoids the Bellman completeness assumption commonly used in existing DRL theory
\citep{chen2019information,fan2020theoretical,feng2023over,jiao2025deep}. In fitted value iteration, approximate policy iteration, and minimax Bellman error methods, the approximation error is often tied to the Bellman updates of the working function class. This is commonly captured by the inherent Bellman error $\sup\limits_{g\in\cF}\inf\limits_{f\in\cF}\|f-\mathcal T^\ast g\|_{L^2(\mu)}$, where $\cF$ denotes a candidate value function class \citep{munos2008finite}. Controlling this error requires the Bellman image $\mathcal T^\ast\cF$ to be contained in the same function class or to be well approximated by it. This requirement is precisely the role of Bellman completeness. 
In contrast, our approximation error is controlled directly by
approximating \(Q^\ast\), the fixed point of the optimal Bellman
operator \(\mathcal T^\ast\). It suffices to assume that $Q^*$ is H\"older continuous. Bellman completeness of the whole function class is not required.
\end{remark}

The preceding result provides a refined error decomposition with respect
to the individual sample sizes $n$, $n_r$, and $n_x$. Since offline data
are typically collected under a fixed total budget, we next derive the
corresponding convergence rate under a prescribed allocation of the total
sample size. We further establish a  lower bound under the same
total offline budget, which characterizes the fundamental statistical
limit of the proposed 
framework.

\begin{theorem}
\label{thm:weighted}
Suppose that the conditions of Theorem~\ref{thm:value-est} hold. Let $\widetilde n$ be the total offline sample size and set $n=\omega_n\widetilde n$, $n_r=\omega_r\widetilde n$, and $n_x=\omega_x\widetilde n$, where $\omega_n,\omega_r,\omega_x\in(0,1)$ and $\omega_n+\omega_r+\omega_x=1$. If $m\geq \Omega (n)$,
then
\begin{equation*}
\mathbb{E}_{\bD,\bS,\bT,\bZ}\left[\cL_{\mathcal T^*}(\widehat Q)-\cL_{\mathcal T^*}(Q^*)\right]\lesssim\frac{R_{\max}^2}{(1-\gamma)^2}\widetilde n^{-\min\left\{\frac{2\alpha}{2d_x+d_a+2\alpha},\frac{2\beta}{d_x+d_a+2\beta}\right\}}\log^{19}\widetilde n.
\end{equation*}
Taking $\delta=\widetilde n^{-\min\left\{\frac{\alpha}{2d_x+d_a+2\alpha},\frac{\beta}{d_x+d_a+2\beta}\right\}},$
then
\begin{equation*}
\mathbb E_{\bD,\bS,\bT,\bZ}\left[\|\widehat Q-Q^*\|_{L^2(\rho)}\right]\lesssim\frac{R_{\max}\sqrt{\widetilde C}}{(1-\gamma)^2}\widetilde n^{-\min\left\{\frac{\alpha}{2d_x+d_a+2\alpha},\frac{\beta}{d_x+d_a+2\beta}\right\}}\log^{\frac{19}{2}}\widetilde n.
\end{equation*}
 
\end{theorem}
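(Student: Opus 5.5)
The plan is to plug $n=\omega_n\widetilde n$, $n_r=\omega_r\widetilde n$, $n_x=\omega_x\widetilde n$ into the four-term bound of Theorem~\ref{thm:main-result} (its first display, valid without any sample-size lower bounds) and compare exponents. Since the weights $\omega_\cdot$ are fixed constants in $(0,1)$, each of $n,n_r,n_x$ is $\asymp\widetilde n$. Powers such as $\omega_r^{-2\alpha/(1+d_x+d_a+2\alpha)}$ are absorbed into $\lesssim$, and every logarithm $\log n_\cdot$ is at most $\log\widetilde n$ up to constants.

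The terms are then handled one by one.
\begin{enumerate}
\item The reward term is $\widetilde n^{-\frac{2\alpha}{1+d_x+d_a+2\alpha}}$. Since $d_x\ge1$, we have $1+d_x+d_a\le 2d_x+d_a$, so this decays at least as fast as $\widetilde n^{-\frac{2\alpha}{2d_x+d_a+2\alpha}}$. Also $R_{\max}^2\le R_{\max}^2/(1-\gamma)^2$.
\item The transition term is exactly $\widetilde n^{-\frac{2\alpha}{2d_x+d_a+2\alpha}}\log^{19}\widetilde n$.
\item For the Monte Carlo term, $m\ge \Omega(n)$ gives
\[
n^{\frac{d_x+d_a}{d_x+d_a+2\beta}}\frac{\log^6 n(\log m+\log n)}{m}\lesssim n^{-\frac{2\beta}{d_x+d_a+2\beta}}\log^6 n(\log m+\log n).
\]
Here I need $\log m\lesssim\log n$. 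I would either take $m$ polynomial in $n$ (e.g.\ $m\asymp n$), or observe that $x\mapsto \log x/x$ is decreasing, so the worst case is $m\asymp n$. Either way the term is $\lesssim \widetilde n^{-\frac{2\beta}{d_x+d_a+2\beta}}\log^7\widetilde n$.
\item The value term gives $\widetilde n^{-\frac{2\beta}{d_x+d_a+2\beta}}\log^7\widetilde n$.
\end{enumerate}
Taking the maximum of these four rates yields the exponent $-\min\{\cdot,\cdot\}$ with a $\log^{19}\widetilde n$ factor, which proves the first claim.

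For the $L^2(\rho)$ claim I use the inequality stated after Assumption~\ref{apt:concentrability}:
\[
\|\widehat Q-Q^*\|_{L^2(\rho)}\le \frac{\sqrt{C_\delta}}{1-\gamma}\|\widehat Q-\mathcal T^*\widehat Q\|_{L^2(\mu)}+\frac{\gamma\delta}{1-\gamma}.
\]
Since $Q^*=\mathcal T^*Q^*$, we have $\mathcal L_{\mathcal T^*}(Q^*)=0$, so $\|\widehat Q-\mathcal T^*\widehat Q\|_{L^2(\mu)}^2$ equals the excess risk. Taking expectations and applying Jensen ($\mathbb E\sqrt{Z}\le\sqrt{\mathbb E Z}$) to the first claim gives the residual term as $\frac{R_{\max}}{1-\gamma}\widetilde n^{-\min\{\frac{\alpha}{2d_x+d_a+2\alpha},\frac{\beta}{d_x+d_a+2\beta}\}}\log^{19/2}\widetilde n$, times $\sqrt{C_\delta}/(1-\gamma)$. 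With the prescribed $\delta$, the term $\gamma\delta/(1-\gamma)$ has the same rate and is dominated. Using $C_\delta\le\widetilde C$ as in Theorem~\ref{thm:value-est} finishes the argument.

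The only delicate step is the $\log m$ factor in the Monte Carlo term: $m\ge\Omega(n)$ alone does not force $\log m\lesssim\log n$ unless one uses the monotonicity of $\log m/m$. I would handle this via that monotonicity. Everything else is exponent bookkeeping.
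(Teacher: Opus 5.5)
Your proposal is correct and matches the paper's proof: substitute $n,n_r,n_x\asymp\widetilde n$ into the first display of Theorem~\ref{thm:main-result}, use $d_x\ge 1$ so that the reward rate is dominated by the transition rate, use $m\ge\Omega(n)$ for the Monte Carlo term, and then pass to $L^2(\rho)$ via the concentrability inequality together with Jensen's inequality. Your use of the monotonicity of $\log m/m$ to control the $\log m$ factor is a detail the paper's proof leaves implicit, and you handle it correctly.
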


\begin{theorem}[Information-theoretic lower bound]
\label{thm:Q-minimax-lower}
Let $M$ denote an MDP and $\mathfrak E_{\alpha,\beta}$ the class of MDPs satisfying Assumptions~\ref{apt:diffusion-density-bound} and~\ref{apt:diffusion-density-holder} for both the conditional transition and reward densities, and Assumption~\ref{apt:Q-holder} for $Q^*$. Suppose that $\mu=\rho=\operatorname{Unif}(\cX\times\cA)$ and let $\widetilde n:=n+n_r+n_x$. Then, 
\[
\inf_{\widehat Q}\sup_{M\in\mathfrak E_{\alpha,\beta}}\mathbb E_M\|\widehat Q-Q_M^*\|_{L^2(\rho)}
\gtrsim\widetilde n^{-\max\left\{\frac{\alpha}{d_x+d_a+2\alpha},\frac{\beta}{d_x+d_a+2\beta}\right\}}.
\]
Here, $Q_M^*$ denotes the optimal action-value function under model $M$.
The infimum is taken over all possibly randomized measurable estimators
$\widehat Q$ with respect to
$\sigma(\mathbb S_r,\mathbb S_x,\bD)$, and $\mathbb E_M$ is taken with
respect to the joint distribution of the observations and any estimator
randomization induced by $M$.
\end{theorem}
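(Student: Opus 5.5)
The lower bound is the maximum of two rates, so I will prove it as two separate minimax lower bounds. Each will come from a hard subfamily of $\mathfrak E_{\alpha,\beta}$, built on a Fano or Assouad argument over a packing of bump perturbations. Because the overall lower bound dominates each subfamily's lower bound, it suffices to show
\[
\inf_{\widehat Q}\sup_{M}\mathbb E_M\|\widehat Q-Q_M^*\|_{L^2(\rho)}\gtrsim \widetilde n^{-\frac{\beta}{d_x+d_a+2\beta}}
\quad\text{and}\quad
\gtrsim \widetilde n^{-\frac{\alpha}{d_x+d_a+2\alpha}}.
\]
Throughout, the observations are i.i.d.\ tuples $(X,A,R,X')$ with $(X,A)\sim\mu=\operatorname{Unif}$. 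The three datasets together are therefore $\widetilde n$ i.i.d.\ draws from the joint law $P_M$, and the KL divergence between data laws is $\widetilde n\,\mathrm{KL}(P_M,P_{M'})$.

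\emph{First construction ($\beta$-rate, via the reward).} I fix the transition kernel to be uniform on $\mathcal X$, independent of $(x,a)$. I choose a reward law with a smooth conditional density on $[0,R_{\max}]$ whose mean is $r_\theta(x,a)=c+\sum_{k=1}^{K}\theta_k h^{\beta}\phi((w-w_k)/h)$ for $\theta\in\{0,1\}^K$. Here $K\asymp h^{-(d_x+d_a)}$ and $\phi$ is a smooth compactly supported bump. Concretely, I take the reward density to be $1/R_{\max}+$ a bump tilt of amplitude $\propto h^\beta$ in the reward variable.

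Since the next state does not depend on $(x,a)$, $Q^*_\theta=r_\theta+\gamma v_\theta/(1-\gamma)$, where $v_\theta=\int \max_{a'}Q^*_\theta(x',a')\,dx'$ is a constant. Thus $Q^*_\theta-Q^*_{\theta'}$ equals $r_\theta-r_{\theta'}$ up to an additive constant. To prevent this constant from cancelling the separation, I will use perturbations $\phi$ with zero spatial mean in the action direction, or simply argue that the $L^2$ separation of the nonconstant part survives. Then $\|Q^*_\theta-Q^*_{\theta'}\|_{L^2}^2\gtrsim h^{2\beta+d}\,d_H(\theta,\theta')$.

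The densities lie in $\mathcal H^\alpha$ and $Q^*\in\mathcal H^\beta$ provided the amplitude is $h^{\max(\alpha,\beta)}$. The cleanest choice is amplitude $h^{\beta}$ with the extra smoothness carried by $\phi$, which I will need to reconcile with the $\alpha$ constraint. The per-sample KL is $\lesssim h^{2\beta}\cdot h^{d}$ per flipped coordinate. Assouad with $h\asymp\widetilde n^{-1/(d+2\beta)}$ then gives $\mathbb E\|\widehat Q-Q^*\|^2\gtrsim K h^{2\beta+d}\asymp h^{2\beta}$, which yields the $\beta$-rate.

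\emph{Second construction ($\alpha$-rate).} Here I perturb the reward density itself at smoothness $\alpha$, with amplitude $h^\alpha$ in $(w,r)$, so that the mean reward moves by $\asymp h^{\alpha}$ on each cell and $Q^*$ moves accordingly. The density lies in $\mathcal H^\alpha$, and $Q^*$, being a smooth integral of it, must lie in $\mathcal H^\beta$. The same Assouad computation gives $\widetilde n^{-\alpha/(d+2\alpha)}$.

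\emph{Main obstacle.} The hard step is keeping one family simultaneously inside all three constraints: the density bounds $B_l,B_u$, $\mathcal H^\alpha$ for the densities, and $\mathcal H^\beta$ for $Q^*$. It must also carry enough $Q^*$-separation while the KL stays $\lesssim h^{2s+d}$. For each construction I will take amplitude $h^{s}$ with $s\ge$ the relevant index, so that both Hölder norms stay bounded. This is exactly where the $\max$ in the exponent comes from: the slower of the two rates is achieved by whichever index makes the bumps smaller.

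I also need to ensure that the $\max_{a'}$ nonlinearity does not destroy separation. Using action-independent next states avoids this, since the max enters only through a constant shift, and I will handle that shift by measuring separation after projecting out constants. Finally, taking the maximum over the two families yields the stated bound.
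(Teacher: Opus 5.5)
You have the direction of the bound backwards, and this breaks the two-family plan. Write $d:=d_x+d_a$. The map $s\mapsto \frac{s}{d+2s}$ is increasing, so $\widetilde n^{-\max\{\frac{\alpha}{d+2\alpha},\frac{\beta}{d+2\beta}\}}=\min\{\widetilde n^{-\frac{\alpha}{d+2\alpha}},\widetilde n^{-\frac{\beta}{d+2\beta}}\}$ is the \emph{faster} of the two rates, namely the one with index $s=\max\{\alpha,\beta\}$. A single lower bound at that index suffices. Proving both, as you announce, would give the strictly stronger bound $\widetilde n^{-\frac{s'}{d+2s'}}$ with $s'=\min\{\alpha,\beta\}$ whenever $\alpha\neq\beta$, and that half of your plan cannot be carried out.

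If $\alpha>\beta$, your first family needs mean-reward (hence reward-density) bumps of height $h^\beta$ and width $h$. Their $\alpha$-Hölder seminorm in $w$ scales like $h^{\beta-\alpha}\to\infty$, and a smoother $\phi$ does not change this scaling. No other design helps either. We have $Q_M^*(w)=\int r\,p_R(r|w)\,dr+\gamma\int \max_{a'}Q_M^*(x',a')\,p(x'|w)\,dx'$, and both terms inherit the $w$-regularity of the densities. So every $Q_M^*$ in the class lies in a fixed $\alpha$-Hölder ball and cannot carry bumps of height $h^\beta\gg h^\alpha$ at scale $h$.

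If $\beta>\alpha$, your second family fails for the symmetric reason. Here $Q_\theta^*$ minus a constant is exactly the mean-reward perturbation, a bump of height $h^\alpha$ and width $h$ whose $\beta$-Hölder seminorm is $\asymp h^{\alpha-\beta}\to\infty$. Integrating out $r$ adds no smoothness in $w$, so the claim that $Q^*$, ``being a smooth integral'' of the density, lies in $\mathcal H^\beta$ is false.

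Your last paragraph contains the right repair: amplitude $h^{s}$ with $s$ large enough that both Hölder norms stay bounded. But it is inconsistent with the rest of the proposal. With $s=\max\{\alpha,\beta\}$ both families give the same rate $\widetilde n^{-\frac{s}{d+2s}}$. That is exactly the theorem, not the two separate rates you set out to combine, and ``taking the maximum over the two families'' adds nothing.

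Replace the two-family plan by this single family: uniform transitions, reward density $1/R_{\max}+(\text{perturbation of size }h^{\max\{\alpha,\beta\}})\cdot\varphi(r)$ with $\int\varphi=0$ and $\int r\varphi(r)\,dr=1$, and $h\asymp\widetilde n^{-1/(d+2s)}$. Your argument then becomes essentially the paper's proof. The paper also subtracts $\gamma b_v$, with $b_v=\int\max_a f_v(x,a)\,dx$, from the mean reward, so that $Q_v^*=\frac{R_{\max}}{2(1-\gamma)}+f_v$ exactly. It then concludes with Fano over a Varshamov--Gilbert packing. Your zero-mean-bump device is an acceptable alternative. If you use Assouad, some such device is genuinely needed: otherwise the $\theta$-dependent constant couples all cells and the loss no longer decomposes coordinatewise.
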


\begin{remark}
Theorems \ref{thm:weighted}--\ref{thm:Q-minimax-lower} characterize
the statistical difficulty of offline reinforcement learning from two
different perspectives. Theorem~\ref{thm:weighted} provides an achievable
upper bound for the proposed conditional diffusion model-based Bellman
learning procedure under a fixed offline sample budget. Since the proposed
procedure first learns the conditional reward distribution and transition
kernel and subsequently constructs the Bellman operator, its convergence
rate reflects the additional statistical complexity of estimating the
underlying environment dynamics. In particular, the transition
distribution estimation term involves the joint space $(x,a,x')$ and
therefore depends on the dimension $2d_x+d_a$. 
Furthermore, if we set $\beta \leq  \frac{\alpha(d_x+d_a)}{2d_x+d_a}$ in Theorem \ref{thm:weighted},
then
$
\mathbb E_{\bD,\bS,\bT,\bZ}\left[\|\widehat Q-Q^*\|_{L^2(\rho)}\right]\lesssim\frac{R_{\max}\sqrt{\widetilde C}}{(1-\gamma)^2}\widetilde n^{-\frac{\beta}{d_x+d_a+2\beta}}\log^{\frac{19}{2}}\widetilde n.
$
This matches the rate in Theorem \ref{thm:value-est}.
In contrast, Theorem \ref{thm:Q-minimax-lower} provides an
information-theoretic lower bound for direct estimation of the optimal
action-value function over the considered MDP class. The corresponding
minimax problem allows arbitrary estimators of $Q^*$ and does not impose
the intermediate requirement of recovering the underlying conditional
environment distributions. Consequently, this lower bound characterizes
the intrinsic difficulty of estimating the value function itself, rather
than the additional complexity induced by explicitly learning the
conditional dynamics.
The distinction between the two results highlights the difference
between value function estimation and conditional environment modeling.
The proposed framework targets the latter objective by estimating the
conditional distributions that determine the Bellman operator. The
resulting rate therefore reflects the statistical cost of learning the
environment dynamics before performing Bellman optimization, which is absent in direct $Q^*$ estimation.
\end{remark}

\section{Numerical Experiments}\label{sec:experiment}

In this section, we evaluate the empirical performance of the proposed method on both synthetic benchmark environments and real-world offline RL tasks.   In Section~\ref{sec:exp-openai}, we first consider OpenAI Gymnasium environments, including Pendulum and Swimmer. In Section~\ref{sec:exp-mimic-iii}, we then examine the method on the real-world MIMIC-III clinical database. We compare the proposed method with three representative offline RL baselines: MOPO \citep{yu2020mopo}, DFQI \citep{fan2020theoretical,feng2023over}, and MABO \citep{xie2020q}.

\subsection{OpenAI Gymnasium Environments}\label{sec:exp-openai}

We  consider two continuous control benchmarks from Gymnasium, Pendulum and Swimmer (\url{https://gymnasium.farama.org/}). Pendulum is a classic low-dimensional control task in which the agent applies continuous torque to swing a pendulum upward and keep it balanced near the upright position. It mainly tests whether an algorithm can learn smooth and stable control from continuous actions. Swimmer is a MuJoCo locomotion task in which a multi-link body must coordinate joint torques to move forward in a fluid-like environment. Compared with Pendulum, Swimmer has higher-dimensional dynamics and requires coordinated control across multiple joints. These two environments therefore provide complementary tests of offline continuous-control performance.

For both tasks, we construct the offline dataset by rolling out a random policy for 200 episodes. The baseline methods are trained on the full offline dataset. Our method uses 20,000 samples for diffusion model training and another 20,000 samples to estimate \(Q^\ast\) through
empirical Bellman residual minimization. We train all methods for 300 epochs and evaluate each learned policy over 50 episodes. All reported results are averaged over three seeds. In the training curves, the solid line denotes the mean training reward over the three seeds, and the shaded region represents the minimum-to-maximum range across these seeds.

\begin{figure}[H]
    \centering
    \begin{minipage}{0.5\linewidth}
        \centering
        \includegraphics[width=0.7\linewidth]{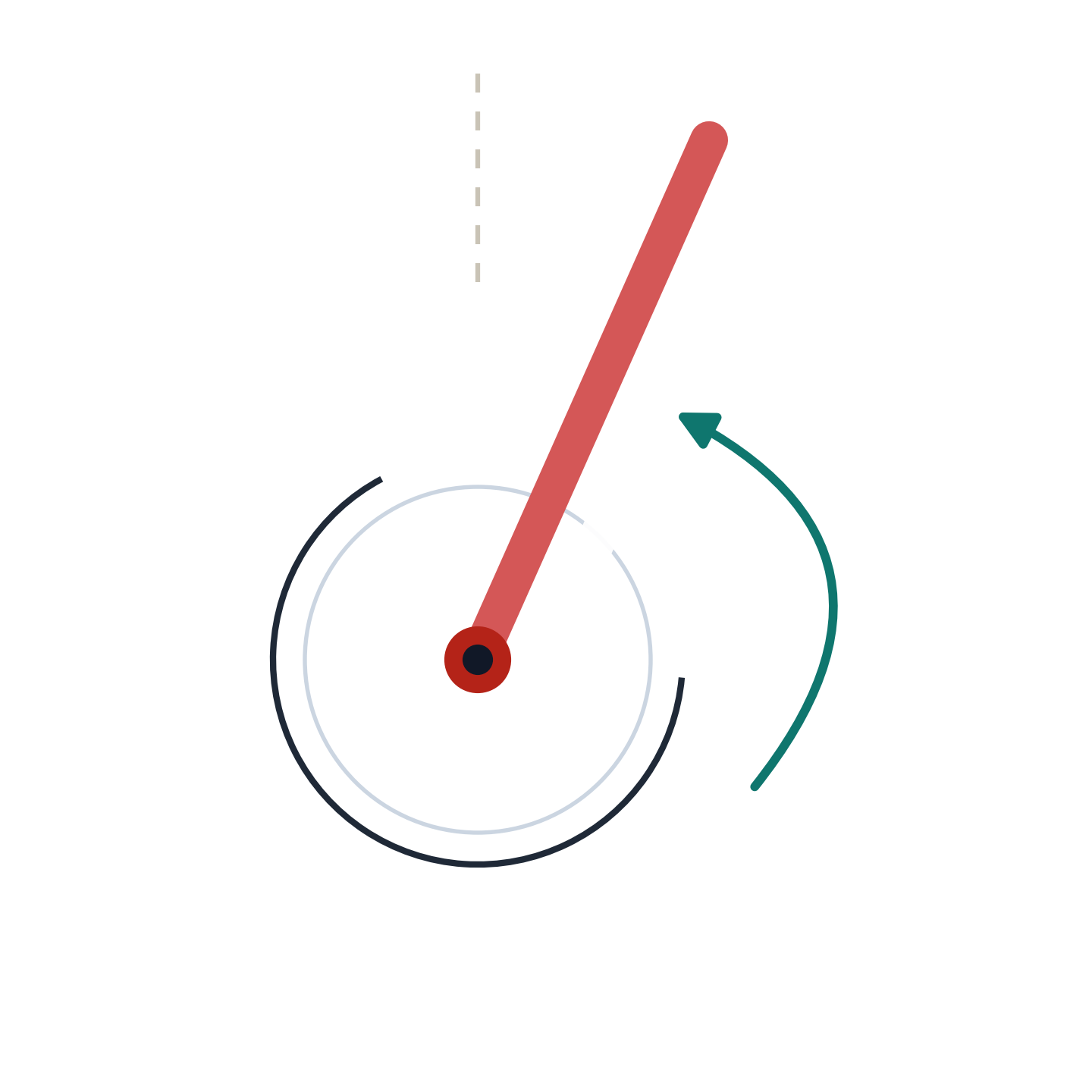}
    \end{minipage}\hfill
    \begin{minipage}{0.5\linewidth}
        \centering
        \includegraphics[width=0.7\linewidth]{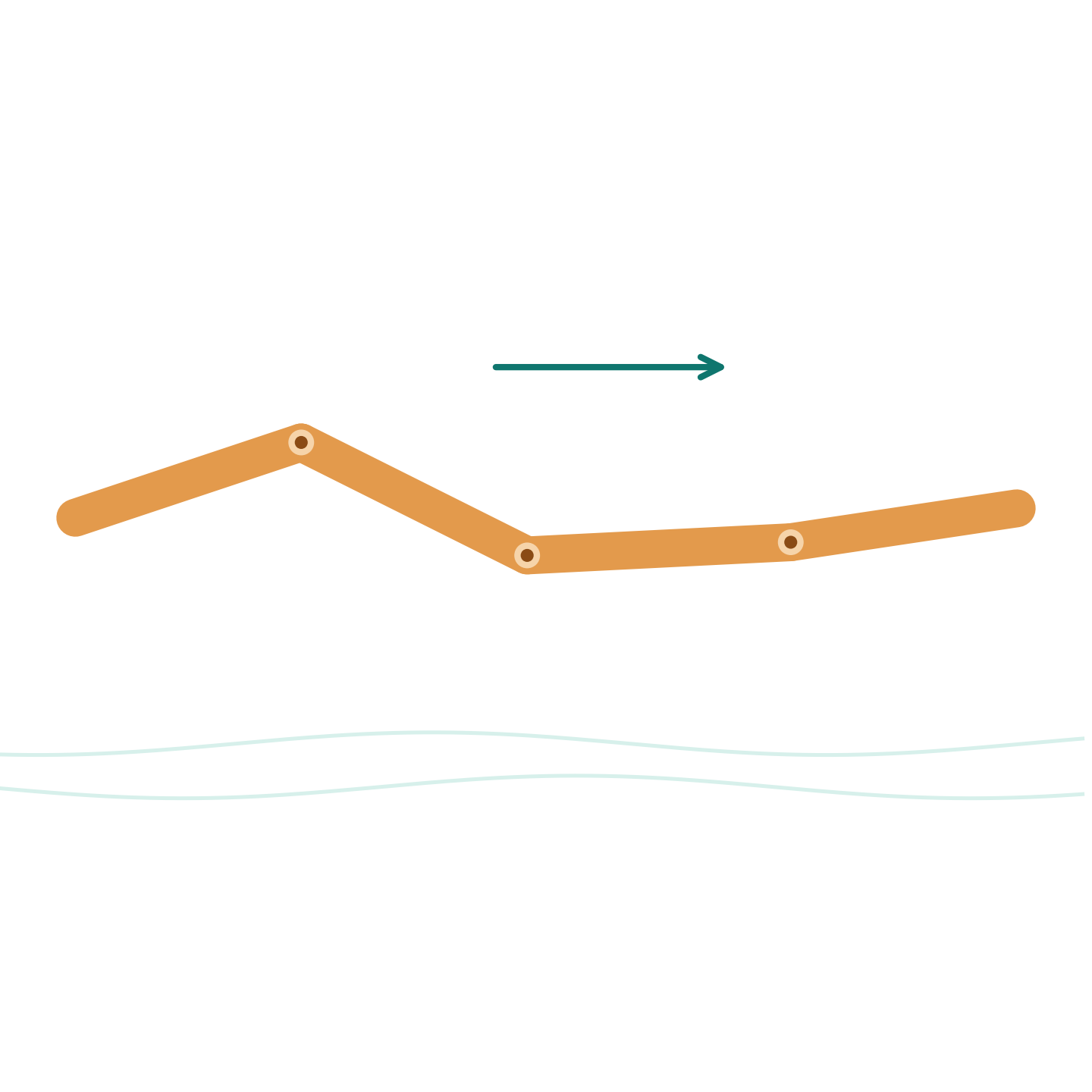}
    \end{minipage}
    \caption{
    Illustrations of the benchmark environments:
    Pendulum (Left) and 
    Swimmer (Right).}
    \label{fig:benchmark_envs}
\end{figure}

\paragraph{Pendulum.}
In the Pendulum task, the agent generates continuous torque actions to swing the pendulum up and maintain it near the upright position. Figure~\ref{fig:pendulum_results} shows the training rewards and evaluation rewards. The training curves show that all methods improve from low initial rewards and reach a stable reward region during training. Although DFQI shows competitive training behavior, its evaluation performance is substantially lower than that of our method. MABO and MOPO also improve during training, but they obtain lower final evaluation rewards. As reported in Table~\ref{tab:gym_results}, our method achieves the best evaluation reward of $-177.06$, improving over the strongest baseline result of $-206.39$ from MABO.

\begin{figure}[H]
    \centering
    \includegraphics[width=\linewidth]{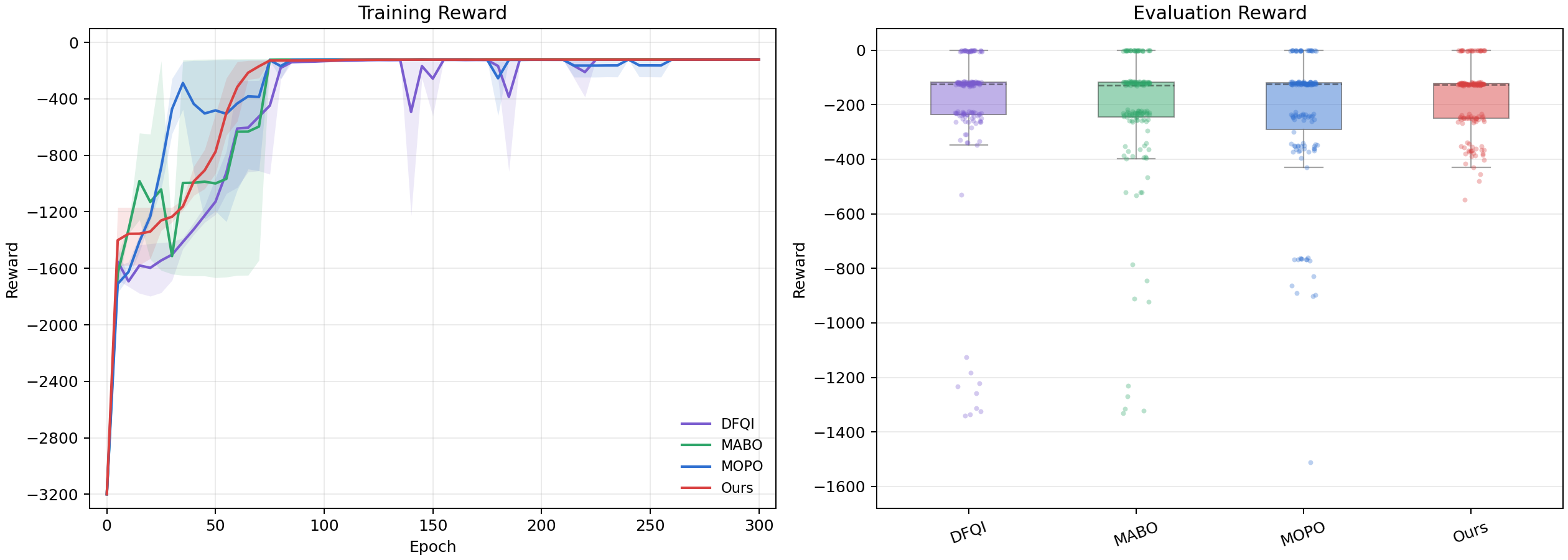}
    \caption{
    Performance comparison on the Gymnasium Pendulum task. Left: training reward over 300 epochs, where the solid line is the mean over three seeds and the shaded region is the min-max range. Right: 
    box plot of evaluation rewards over 150 returns.}
    \label{fig:pendulum_results}
\end{figure}

\paragraph{Swimmer.}
In the Swimmer task, the agent controls a multi-link body by applying torques at its joints, with the objective of producing coordinated forward movement. Figure~\ref{fig:swimmer_results} shows the corresponding results. The training curves indicate that the methods exhibit different reward patterns on this more challenging task. Our method maintains a relatively high reward level during training and achieves the best final evaluation performance. As shown in Table~\ref{tab:gym_results}, our method obtains an evaluation reward of $71.90$, while the best baseline result is $39.33$ from MABO. These results suggest that the proposed framework can remain effective in a task that requires coordinated multi-joint control.

\begin{figure}[H]
    \centering
    \includegraphics[width=\linewidth]{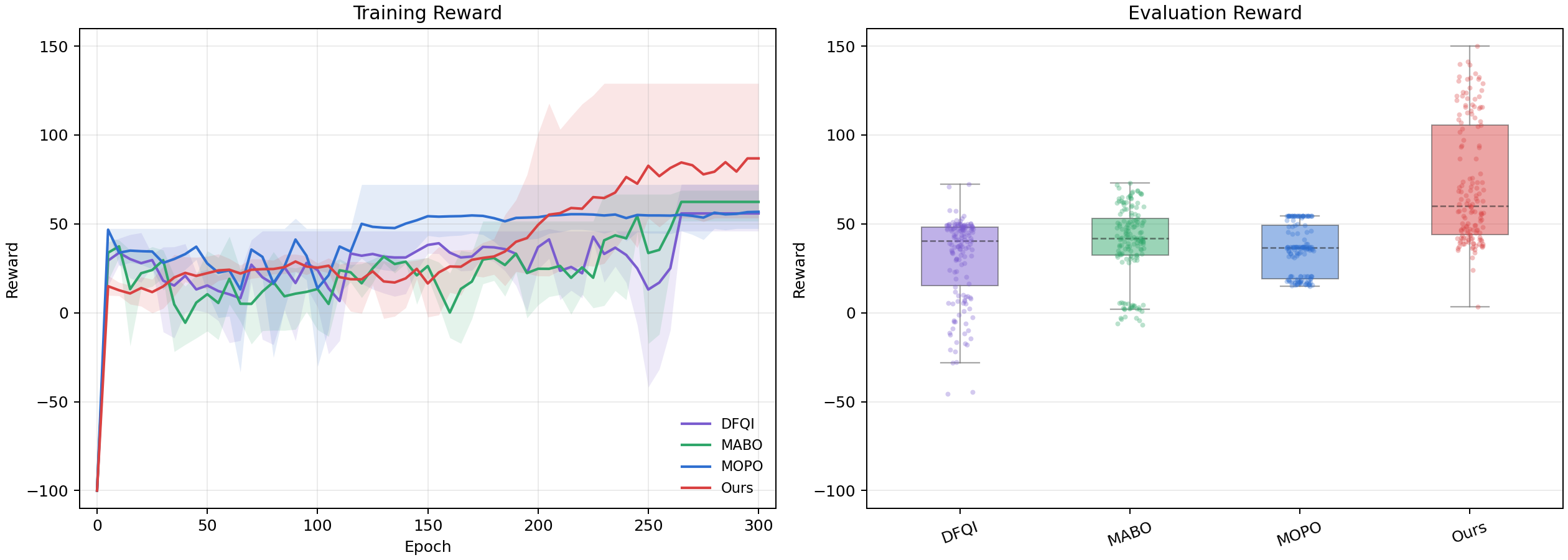}
    \caption{Performance comparison on the Gymnasium Swimmer task. Left: training reward over 300 epochs, where the solid line is the mean over three seeds and the shaded region is the min-max range. Right: box plot of evaluation rewards over 150 returns.}
    \label{fig:swimmer_results}
\end{figure}

\begin{table}[H]
    \centering
    \caption{Evaluation results on the Gymnasium environment tasks.}
    \label{tab:gym_results}
    \renewcommand{\arraystretch}{1.2}
    \begin{tabular*}{0.85\linewidth}{@{\extracolsep{\fill}}ccccc}
        \toprule
        Task & DFQI & MABO & MOPO & Ours \\
        \midrule
        Pendulum & -212.92 & -206.39 & -232.40 & \textbf{-177.06} \\
        Swimmer  & 30.93   & 39.33   & 34.98   & \textbf{71.90} \\
        \bottomrule
    \end{tabular*}
\end{table}

\subsection{Real-World Data}\label{sec:exp-mimic-iii}

We examine the proposed method on a real-world sepsis treatment task constructed from the MIMIC-III database (\url{https://physionet.org/content/mimiciii/1.4/}). MIMIC-III is a publicly available critical care clinical database constructed from anonymized electronic health records of ICU patients at Beth Israel Deaconess Medical Center \citep{johnson2016mimic}. It contains longitudinal ICU records, including demographics, vital signs, laboratory tests, medication records, fluid input and output, diagnosis codes, nursing records, and disease severity scores. These temporal records allow patient disease trajectories to be reconstructed for sequential treatment decision making.

We construct the study cohort following standard preprocessing protocols used in RL studies of sepsis treatment \citep{raghu2017deep,nanayakkara2022unifying}. Suspected infection events are identified from antibiotic administration and microbiology culture records, and organ dysfunction is assessed by the Sequential Organ Failure Assessment (SOFA) score. We retain adult ICU patients and include only the first eligible sepsis-related ICU stay for each patient. Patients are required to have a maximum SOFA score of at least 2 within 24 hours after sepsis onset, and ICU stays with severe missingness or invalid treatment records are excluded. The final processed cohort contains approximately 12,600 patients.

Each ICU stay is divided into consecutive four-hour decision windows. Clinical measurements within a window are aggregated as the current state, and treatments administered in the same window define the action. Two adjacent windows form a transition sample \((x_t, a_t, r_t, x_{t+1})\). The original state contains 41 clinical variables. We apply principal component analysis to the non-SOFA variables, retain the first 10 principal components, and concatenate the raw SOFA score, resulting in an 11-dimensional state representation. The action is defined by intravenous fluids and vasopressors, each discretized into three dosage levels, yielding 9 discrete treatment actions. The reward is the negative next-step SOFA score, \(r_t=-\mathrm{SOFA}_{t+1}\), so a larger reward indicates lower subsequent organ dysfunction. This SOFA-based reward has clear clinical interpretability and has been used in related studies \citep{wang2023bless,miao2025reinforcement}. After processing, the dataset contains approximately 215,000 transition samples.

We select 5,000 patients as the offline training set, corresponding to approximately 85,000 transitions. All compared algorithms are trained on this fixed dataset, and simulator-generated transitions are not added to the replay buffer. Since the observed data only contain outcomes under clinician actions, we train a personalized simulator following the idea of PerSim \citep{agarwal2021persim}. Given a current state and a candidate action, the simulator predicts the next state and reward. It is used only for evaluation and does not provide additional training samples.
All algorithms are trained for 300 epochs. Since early intervention is important for sepsis progression, we set \(\gamma=0.7\) to emphasize immediate and short-term treatment effects. During training, we periodically evaluate each policy by simulated rollouts from fixed held-out initial states. In the training curves, the solid line denotes the mean reward over three seeds, and the shaded region represents the minimum-to-maximum range. Final evaluation is performed on held-out patient initial states that are not used in offline training, where we perform 50 independent rollouts.

Figure~\ref{fig:mimic_results} and Table~\ref{tab:mimic_results} summarize the experimental results. The proposed method achieves the highest final reward of \(-5.33\), improving over the clinician policy with reward \(-5.59\). Since the reward is defined as the negative future SOFA score, this improvement suggests lower predicted organ dysfunction in subsequent decision windows. Our method also outperforms MOPO, MABO, and DFQI. To further interpret the relative quality of different policies, we additionally evaluate a random policy that uniformly selects from the 9 treatment actions. The random policy obtains an average reward of \(-8.27\), which is substantially lower than the rewards of the clinician policy and all learned policies. This gap indicates that the compared algorithms 
learn useful treatment structure from the offline data, while the proposed method achieves the best simulated return among them.

\begin{figure}[H]
    \centering
    \includegraphics[width=0.9\linewidth]{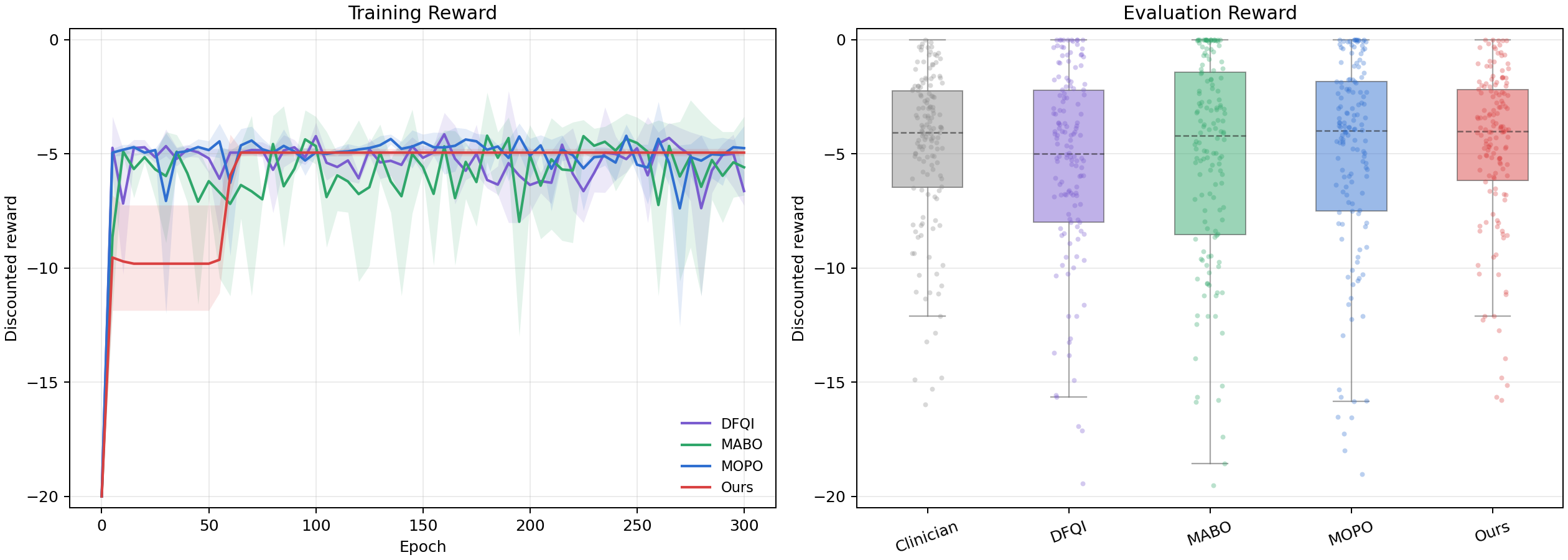}
    \caption{Performance comparison on the MIMIC-III sepsis treatment task. Left: training reward over 300 epochs, where the solid line is the mean over three seeds and the shaded region is the min-max range. Right: box plot of evaluation rewards over 150 returns.
    }
    \label{fig:mimic_results}
\end{figure}

\begin{table}[H]
    \centering
    \caption{Evaluation results on the MIMIC-III sepsis treatment task.}
    \label{tab:mimic_results}
    \renewcommand{\arraystretch}{1.2}
    \begin{tabular*}{0.85\linewidth}{@{\extracolsep{\fill}}cccccc}
        \toprule
        Method & Clinician & MABO & MOPO & DFQI & Ours \\
        \midrule
        Reward
        & -5.59
        & -5.92
        & -5.88
        & -6.69
        & \textbf{-5.33} \\
        \bottomrule
    \end{tabular*}
\end{table} 
\section{Conclusion}\label{sec:conclusion}
We develop an offline RL framework for deep $Q^\ast$ estimation based on conditional diffusion models.  
The proposed method learns the reward law and transition kernel from offline data through conditional diffusion modeling, and then incorporates the resulting Bellman operator estimator into a deep $Q^\ast$ learning procedure via Bellman residual minimization. This design cleanly separates conditional generative modeling from value learning and enables a precise analysis of how optimal Bellman operator estimation errors propagate to value estimation. 
Notably, this separation also allows our analysis to avoid the Bellman completeness assumption commonly used in existing DRL theory.

Ignoring logarithmic factors, the theoretical analysis yields convergence rates
of order
\(\mathcal{O}(n_r^{-\frac{2\alpha}{1+d_x+d_a+2\alpha}})\),
\(\mathcal{O}(n_x^{-\frac{2\alpha}{2d_x+d_a+2\alpha}})\) and 
$\mathcal{O}\bigl( \frac{n^{\frac{d_x+d_a}{d_x+d_a+2\beta}}}{m} \bigr)$ 
for reward law, transition kernel estimation and the Monte Carlo error induced by approximating the learned Bellman operator, respectively. 
It further gives
an excess Bellman residual risk rate of order
\(\mathcal{O}(n^{-\frac{2\beta}{d_x+d_a+2\beta}})\), and the value function
estimation error of \(\widehat Q\) attains the rate
\(\mathcal{O}(n^{-\frac{\beta}{d_x+d_a+2\beta}})\), 
which matches the oracle value-stage convergence rate of the nonparametric least-squares estimator. 
Numerical experiments support the theoretical findings and demonstrate the practical effectiveness of the proposed method.  
Several directions remain for future investigation. It would be of interest to extend the present framework to broader offline policy optimization problems and to more challenging settings, such as high dimensional locomotion, partially observable systems, and large-scale world-modeling problems.

\begin{appendix}
\section*{Appendix} 
In this appendix, Appendix \ref{sec:apd-bounds-on-the-score} provides the technical analysis for score function estimation. 
Appendix \ref{sec:apd-bounds-T*} establishes the optimal Bellman operator learning bounds,
Appendix \ref{sec:apd-bounds-on-rl} proves the value function learning results, Appendix \ref{sec:main-results} contains the proof of the main theorem, and Appendix \ref{sec:apd-auxiliary-results} collects auxiliary lemmas and technical approximation results used throughout the proofs.

\startcontents[appendix]
\printcontents[appendix]{l}{1}{\subsection*{Table of Contents}\setcounter{tocdepth}{2}}


\section{Bounds on the Score Function}\label{sec:apd-bounds-on-the-score}

We first outline the main idea before presenting the formal lemmas. The conditional score can be written as
\[
\nabla \log p_t(\boldsymbol x|\boldsymbol w)
=
\frac{\nabla p_t(\boldsymbol x|\boldsymbol w)}
     {p_t(\boldsymbol x|\boldsymbol w)}.
\]
Thus, it suffices to approximate both the diffused density $p_t(\boldsymbol x|\boldsymbol w)$ and its gradient. 
For the density term, the starting point is the decomposition
\[
p_t(\boldsymbol x|\boldsymbol w)
=
\underbrace{
\int_{\mathbb R^{d_x}}
}_{\text{Domain}}
~~
\underbrace{
p(\boldsymbol z|\boldsymbol w)
}_{\text{Density}}
~~
\underbrace{
\frac{1}{\sigma_t^{d_x}(2\pi)^{\frac{d_x}{2}}}
\exp\left(
-\frac{\|\boldsymbol x-\mu_t\boldsymbol z\|^2}{2\sigma_t^2}
\right)
}_{\text{Kernel}}
d\boldsymbol z.
\]
The approximation proceeds in four steps. 
\begin{itemize}
    \item {First (Domain localization).} By Lemma~\ref{lem:integral-clipping}, the integration domain can be localized from $\mathbb R^{d_x}$ to a neighborhood \( A_{{\boldsymbol{x}}} = \prod_{i=1}^{d_x} a_{i,{\boldsymbol{x}}},~ a_{i,{\boldsymbol{x}}} = \left[ \frac{{x}_i - C\sigma_t \sqrt{\log \epsilon^{-1}}}{\mu_t}, \frac{{x}_i + C\sigma_t \sqrt{\log \epsilon^{-1}}}{\mu_t} \right] \), with the clipping error controlled by the Gaussian tail.

    \item Second (Density approximation). On the localized region, we use a Taylor expansion to approximate the conditional density $p(\boldsymbol z|\boldsymbol w)$ (See Appendix~\ref{sec:approxmate-p}).

    \item Third (Kernel approximation). On the localized region, we use another Taylor expansion to approximate the Gaussian transition kernel (See Appendix~\ref{subsec:c2}).

    \item Finally (Neural network approximation). After the previous approximations, the localized integral reduces to a finite combination of polynomial-type functions, which can be approximated by ReLU DNNs (See Appendix~\ref{subsec:c3}). 
\end{itemize}

Following the work of \cite{oko2023diffusion,jiao2024latent}, we establish several high-probability bounds for the conditional density $p_t({\boldsymbol{x}}|\boldsymbol{w})$ and the corresponding score function in the following section.

\subsection{Bounds on $p_t(\boldsymbol{x}|\boldsymbol{w})$}
\begin{lemma}
\label{lem:p-bound}
Under Assumption~\ref{apt:diffusion-density-bound}, let $D_t(\boldsymbol{x}):=\max\limits_{1\le i\le d_x}\{(-x_i)_+,(x_i-\mu_t)_+\}.$
Then, for any $\boldsymbol{x}\in\mathbb{R}^{d_x}$ and
$\boldsymbol{w}\in[0,1]^{d_w}$,
\[
\exp\left(-\frac{d_xD_t(\boldsymbol{x})^2}{\sigma_t^2}\right)
\lesssim p_t(\boldsymbol{x}|\boldsymbol{w})
\lesssim \exp\left(-\frac{D_t(\boldsymbol{x})^2}{2\sigma_t^2}\right)
\le \exp\left(-\frac{(\|\boldsymbol{x}\|_\infty-\mu_t)_+^2}{2\sigma_t^2}\right).
\]
\end{lemma}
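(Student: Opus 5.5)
We start from the Gaussian convolution representation of the forward marginal. Since $\bar{\boldsymbol x}_t=\mu_t\bar{\boldsymbol x}_0+\sigma_t Z$, we have
\[
p_t(\boldsymbol x|\boldsymbol w)=\int_{[0,1]^{d_x}}p(\boldsymbol z|\boldsymbol w)\,\frac{1}{\sigma_t^{d_x}(2\pi)^{d_x/2}}\exp\Big(-\frac{\|\boldsymbol x-\mu_t\boldsymbol z\|^2}{2\sigma_t^2}\Big)d\boldsymbol z .
\]
Assumption~\ref{apt:diffusion-density-bound} sandwiches $p(\boldsymbol z|\boldsymbol w)$ between $B_l$ and $B_u$ on $[0,1]^{d_x}$. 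So, up to constants uniform in $\boldsymbol w$, $p_t(\boldsymbol x|\boldsymbol w)$ is comparable to the Gaussian mass
\[
I_t(\boldsymbol x):=\int_{[0,1]^{d_x}}\sigma_t^{-d_x}\exp\big(-\|\boldsymbol x-\mu_t\boldsymbol z\|^2/(2\sigma_t^2)\big)d\boldsymbol z .
\]
This integral factorizes over coordinates: $I_t(\boldsymbol x)=\prod_{i=1}^{d_x}J_t(x_i)$, where
\[
J_t(x_i)=\sigma_t^{-1}\int_0^1 e^{-(x_i-\mu_t z)^2/(2\sigma_t^2)}dz .
\]
After the change of variables $u=\mu_t z$, $J_t(x_i)$ is $\mu_t^{-1}$ times a standard Gaussian probability of an interval $[-x_i/\sigma_t,(\mu_t-x_i)/\sigma_t]$. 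The whole lemma therefore reduces to one-dimensional bounds on $J_t$ in terms of $d_i:=\max\{(-x_i)_+,(x_i-\mu_t)_+\}$, which is the distance from $x_i$ to $[0,\mu_t]$.

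\emph{Upper bound.} Every $u\in[0,\mu_t]$ satisfies $|x_i-u|\ge d_i$. Splitting $e^{-(x_i-u)^2/(2\sigma_t^2)}\le e^{-d_i^2/(2\sigma_t^2)}\cdot e^{-(|x_i-u|-d_i)^2/(2\sigma_t^2)}$ and integrating the second factor over $u$ gives at most $\sqrt{2\pi}\sigma_t$. Hence $J_t(x_i)\lesssim \mu_t^{-1}e^{-d_i^2/(2\sigma_t^2)}$. Since $\mu_t=e^{-2t}$ could be small, I would also use the trivial bound $J_t\le \sigma_t^{-1}$ where needed. In the regime $t\le T$, the product of the $\mu_t^{-1}$ and $\sigma_t^{-1}$ factors is what the implicit constant must absorb. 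The authors presumably treat these as constants (for small $t$, $\mu_t\approx1$; for large $t$, $\sigma_t\approx 1$), so I would take $J_t\lesssim \min(\mu_t^{-1},\sigma_t^{-1})e^{-d_i^2/(2\sigma_t^2)}$ and note that $\min(\mu_t^{-1},\sigma_t^{-1})\le \sqrt2$, because $\mu_t^2+\sigma_t^2=1$. For the $\sigma_t^{-1}$ branch, bound the integral over $z\in[0,1]$ of length 1 by the supremum. Multiplying over coordinates and keeping only the largest $d_i=D_t(\boldsymbol x)$ (the other factors are bounded by constants) gives the second inequality. The final inequality follows from $D_t(\boldsymbol x)\ge(\|\boldsymbol x\|_\infty-\mu_t)_+$ together with monotonicity of $e^{-s^2}$.

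\emph{Lower bound.} This is the main obstacle, because it must hold uniformly over all $t$, including tiny $\sigma_t$, where the Gaussian probability of the interval could be degenerate. I would restrict the integral to a subinterval of $[0,\mu_t]$ adjacent to the nearest point of $[0,\mu_t]$ to $x_i$, of length $\min(\sigma_t,\mu_t)$. On it, $|x_i-u|\le d_i+\sigma_t$, so
\[
J_t(x_i)\gtrsim \frac{\min(\sigma_t,\mu_t)}{\mu_t\sigma_t}\,e^{-(d_i+\sigma_t)^2/(2\sigma_t^2)}\gtrsim e^{-d_i^2/\sigma_t^2},
\]
using $(d_i+\sigma_t)^2\le 2d_i^2+2\sigma_t^2$ and $\min(\sigma_t,\mu_t)/(\mu_t\sigma_t)=1/\max(\sigma_t,\mu_t)\ge1$. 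Taking the product and using $\sum_i d_i^2\le d_xD_t(\boldsymbol x)^2$ yields $p_t\gtrsim\exp(-d_xD_t^2/\sigma_t^2)$.
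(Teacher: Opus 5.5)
Your proof is correct and follows essentially the same route as the paper. Both arguments sandwich $p_t$ between $B_l\prod_i q_i$ and $B_u\prod_i q_i$, bound each one-dimensional factor above by $\min\{\sigma_t^{-1},\mu_t^{-1}\}e^{-d_i^2/(2\sigma_t^2)}\lesssim e^{-d_i^2/(2\sigma_t^2)}$ via $\mu_t^2+\sigma_t^2=1$, and bound it below by integrating over a subinterval of length $\asymp\min\{\sigma_t,\mu_t\}$ (in $u=\mu_t z$ units) on which $|x_i-\mu_t z|\le d_i+\sigma_t$. The differences are cosmetic: you work in $u=\mu_t z$ coordinates and use an explicit splitting inequality for the $\mu_t^{-1}$ branch, where the paper uses a Gaussian-tail change of variables. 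Your hedging about ``absorbing'' $\mu_t^{-1}$ and $\sigma_t^{-1}$ as constants is unnecessary once you take the minimum, which you ultimately do.
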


\begin{proof}
For each $i$, define
\[
q_i(x_i):=\int_0^1\frac{1}{\sigma_t\sqrt{2\pi}}
\exp\left(-\frac{(x_i-\mu_t z_i)^2}{2\sigma_t^2}\right)dz_i, ~~\Delta_i:=(-x_i)_+\vee (x_i-\mu_t)_+,
\]
By Assumption~\ref{apt:diffusion-density-bound},
\[
B_l\prod_{i=1}^{d_x}q_i(x_i)\le p_t(\boldsymbol{x}|\boldsymbol{w})
\le B_u\prod_{i=1}^{d_x}q_i(x_i).
\]
For each $i$, choose
\[
I_i=
\begin{cases}
\left[0,1\wedge\frac{\sigma_t}{\mu_t}\right], & x_i<0,\\[4pt]
\left[\frac{x_i}{\mu_t},\frac{x_i}{\mu_t}+\frac12\left(1\wedge\frac{\sigma_t}{\mu_t}\right)\right], & 0\le x_i\le\frac{\mu_t}{2},\\[4pt]
\left[\frac{x_i}{\mu_t}-\frac12\left(1\wedge\frac{\sigma_t}{\mu_t}\right),\frac{x_i}{\mu_t}\right], & \frac{\mu_t}{2}<x_i\le\mu_t,\\[4pt]
\left[1-\left(1\wedge\frac{\sigma_t}{\mu_t}\right),1\right], & x_i>\mu_t.
\end{cases}
\]
Then $I_i\subset[0,1]$, $|I_i|\gtrsim1\wedge\frac{\sigma_t}{\mu_t}$, and for any $z_i\in I_i$, we conclude
\[
|x_i-\mu_tz_i|\le \Delta_i+\sigma_t.
\]
Indeed, if $x_i<0$, then
\[
|x_i-\mu_tz_i|\le -x_i+\mu_t\left(1\wedge\frac{\sigma_t}{\mu_t}\right)\le \Delta_i+\sigma_t.
\]
If $0\le x_i\le\mu_t$, then $\Delta_i=0$ and, by the definition of $I_i$,
\[
|x_i-\mu_tz_i|=\mu_t\left|z_i-\frac{x_i}{\mu_t}\right|
\le \frac{\mu_t}{2}\left(1\wedge\frac{\sigma_t}{\mu_t}\right)\le\sigma_t=\Delta_i+\sigma_t.
\]
If $x_i>\mu_t$, then
\[
|x_i-\mu_tz_i|\le x_i-\mu_t+\mu_t\left(1\wedge\frac{\sigma_t}{\mu_t}\right)\le \Delta_i+\sigma_t.
\]
Thus,
\[
\begin{aligned}
q_i(x_i)
&\ge \int_{I_i}\frac{1}{\sigma_t\sqrt{2\pi}}
\exp\left(-\frac{(x_i-\mu_t z_i)^2}{2\sigma_t^2}\right)dz_i \\
&\ge \frac{|I_i|}{\sigma_t\sqrt{2\pi}}
\exp\left(-\frac{(\Delta_i+\sigma_t)^2}{2\sigma_t^2}\right) \\
&\gtrsim \exp\left(-\frac{\Delta_i^2}{\sigma_t^2}\right),
\end{aligned}
\]
where the last inequality follows from $\frac{|I_i|}{\sigma_t}\gtrsim
\frac{1\wedge\frac{\sigma_t}{\mu_t}}{\sigma_t}
=\min\left\{\frac{1}{\sigma_t},\frac{1}{\mu_t}\right\}\gtrsim1,
\frac{(\Delta_i+\sigma_t)^2}{2\sigma_t^2}
\le \frac{\Delta_i^2}{\sigma_t^2}+1.$
Therefore,
\[
\begin{aligned}
p_t(\boldsymbol{x}|\boldsymbol{w})\ge B_l\prod_{i=1}^{d_x}q_i(x_i)\gtrsim \prod_{i=1}^{d_x}\exp\left(-\frac{\Delta_i^2}{\sigma_t^2}\right) \ge \exp\left(-\frac{d_xD_t(\boldsymbol{x})^2}{\sigma_t^2}\right).
\end{aligned}
\]
For the upper bound, since $\mu_tz_i\in[0,\mu_t]$ and
$\Delta_i=(-x_i)_+\vee (x_i-\mu_t)_+$, we have
$|x_i-\mu_tz_i|\ge\Delta_i$. Thus
\[
q_i(x_i)\le \frac{1}{\sigma_t\sqrt{2\pi}}\exp\left(-\frac{\Delta_i^2}{2\sigma_t^2}\right).
\]
Moreover, by $u=\frac{x_i-\mu_tz_i}{\sigma_t}$,
\[
q_i(x_i)=\frac{1}{\mu_t}\int_{\frac{x_i-\mu_t}{\sigma_t}}^{\frac{x_i}{\sigma_t}}\frac{1}{\sqrt{2\pi}}\exp\left(-\frac{u^2}{2}\right)du
\lesssim \frac{1}{\mu_t}\exp\left(-\frac{\Delta_i^2}{2\sigma_t^2}\right),
\]
where the last inequality follows from $|u|=\frac{|x_i-\mu_tz_i|}{\sigma_t}\ge \frac{\Delta_i}{\sigma_t}.$  Therefore,
\[
q_i(x_i)\lesssim
\left(\frac{1}{\sigma_t}\wedge\frac{1}{\mu_t}\right)
\exp\left(-\frac{\Delta_i^2}{2\sigma_t^2}\right)
\lesssim \exp\left(-\frac{\Delta_i^2}{2\sigma_t^2}\right),
\]
where the last inequality follows from $\mu_t^2+\sigma_t^2=1$. Consequently,
\[
\begin{aligned}
p_t(\boldsymbol{x}|\boldsymbol{w})\le B_u\prod_{i=1}^{d_x}q_i(x_i) \lesssim \prod_{i=1}^{d_x}\exp\left(-\frac{\Delta_i^2}{2\sigma_t^2}\right)\le \exp\left(-\frac{D_t(\boldsymbol{x})^2}{2\sigma_t^2}\right).
\end{aligned}
\]
Since $D_t(\boldsymbol{x})=\max\limits_{1\le i\le d_x}\{(-x_i)_+,(x_i-\mu_t)_+\}
\ge \max\limits_{1\le i\le d_x}(|x_i|-\mu_t)_+
=(\|\boldsymbol{x}\|_\infty-\mu_t)_+,$
we further have
\[
p_t(\boldsymbol{x}|\boldsymbol{w})\lesssim
\exp\left(-\frac{(\|\boldsymbol{x}\|_\infty-\mu_t)_+^2}{2\sigma_t^2}\right).
\]
This completes the proof.
\end{proof}

\subsection{Bounds on the Derivatives of $p_t(\boldsymbol{x}|\boldsymbol{w}), \nabla \log p_t({\boldsymbol{x}}|\boldsymbol{w})$ and Clipping}
In this section, we give the upper bounds on the derivatives of \( p_t({\boldsymbol{x}}|\boldsymbol{w}) \) and \( \nabla \log p_t({\boldsymbol{x}}|\boldsymbol{w}) \). We first state the following lemma.

\begin{lemma}[Integral Clipping]\label{lem:integral-clipping}
Under Assumption \ref{apt:diffusion-density-bound}, for any \( {\boldsymbol{x}} \in \mathbb{R}^{d_x},~ \boldsymbol{w}\in [0,1]^{d_w},~\beta \in \mathbb{N}^{d_x} \) and \( 0 < \epsilon < e^{-1} \), there exists a constant \( C > 0 \) such that
\begin{align*}
\Bigg| \int_{\mathbb{R}^{d_x}} \prod_{i=1}^{d_x} &\left( \frac{{x}_i - \mu_t z_i}{\sigma_t} \right)^{\beta_i} \frac{1}{\sigma_t^{d_x} (2\pi)^{\frac{d_x}{2}}} p(\boldsymbol{z}|\boldsymbol{w}) \exp\left( -\frac{\|{\boldsymbol{x}} - \mu_t \boldsymbol{z}\|^2}{2\sigma_t^2} \right) d\boldsymbol{z} \\
& - \int_{A_{{\boldsymbol{x}}}} \prod_{i=1}^{d_x} \left( \frac{{x}_i - \mu_t z_i}{\sigma_t} \right)^{\beta_i} \frac{1}{\sigma_t^{d_x} (2\pi)^{\frac{d_x}{2}}} p(\boldsymbol{z}|\boldsymbol{w}) \exp\left( -\frac{\|{\boldsymbol{x}} - \mu_t \boldsymbol{z}\|^2}{2\sigma_t^2} \right) d\boldsymbol{z} \Bigg| \lesssim \epsilon,
\end{align*}
where \( A_{{\boldsymbol{x}}} = \prod_{i=1}^{d_x} a_{i,{\boldsymbol{x}}} \) with \( a_{i,{\boldsymbol{x}}} = \left[ \frac{{x}_i - C\sigma_t \sqrt{\log \epsilon^{-1}}}{\mu_t}, \frac{{x}_i + C\sigma_t \sqrt{\log \epsilon^{-1}}}{\mu_t} \right] \).
\end{lemma}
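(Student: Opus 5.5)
The difference between the two integrals is the integral over the complement $A_{\boldsymbol x}^c$. I would bound it by pulling out the density and reducing to one-dimensional Gaussian tail moments. By Assumption~\ref{apt:diffusion-density-bound}, $0\le p(\boldsymbol z|\boldsymbol w)\le B_u$, so the absolute difference is at most
\[
B_u\int_{A_{\boldsymbol x}^c}\prod_{i=1}^{d_x}\left|\frac{x_i-\mu_t z_i}{\sigma_t}\right|^{\beta_i}\frac{1}{\sigma_t^{d_x}(2\pi)^{\frac{d_x}{2}}}\exp\left(-\frac{\|\boldsymbol x-\mu_t\boldsymbol z\|^2}{2\sigma_t^2}\right)d\boldsymbol z .
\]
Next I change variables $u_i=(\mu_t z_i-x_i)/\sigma_t$, so that $d\boldsymbol z=(\sigma_t/\mu_t)^{d_x}d\boldsymbol u$. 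The region $A_{\boldsymbol x}^c$ becomes $\{\boldsymbol u:\ \|\boldsymbol u\|_\infty> C\sqrt{\log\epsilon^{-1}}\}$, and the integrand becomes $\mu_t^{-d_x}\prod_i|u_i|^{\beta_i}\phi(u_i)$, where $\phi$ is the standard normal density.

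The factor $\mu_t^{-d_x}$ is the one point needing care. It is harmless because $\mu_t=e^{-2t}$ is bounded below by a constant for bounded $t$. For large $t$, where $\mu_t$ is small, I would instead use that $p(\cdot|\boldsymbol w)$ is supported on $[0,1]^{d_x}$. In that case the $z$-integration runs only over $[0,1]^{d_x}$, and in each coordinate
\[
\int_0^1 \frac{1}{\sigma_t}\phi\Bigl(\tfrac{x_i-\mu_t z_i}{\sigma_t}\Bigr)\,\Bigl|\tfrac{x_i-\mu_t z_i}{\sigma_t}\Bigr|^{\beta_i}\mathbf 1\{\cdot\}\,dz_i
\lesssim \Bigl(\tfrac1{\sigma_t}\wedge\tfrac1{\mu_t}\Bigr)\sup_{|u|>C\sqrt{\log\epsilon^{-1}}}|u|^{\beta_i}\phi(u)+\ldots .
\]
Combining with $\mu_t^2+\sigma_t^2=1$ gives the constant factor $\frac1{\sigma_t}\wedge\frac1{\mu_t}\lesssim1$, exactly as in the proof of Lemma~\ref{lem:p-bound}. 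So in either regime the prefactor is $O(1)$.

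The complement of the box is contained in the union over $j$ of $\{|u_j|>C\sqrt{\log\epsilon^{-1}}\}$. For each $j$, the integral factorizes into a product of full Gaussian moments $\int|u_i|^{\beta_i}\phi(u_i)\,du_i<\infty$ for $i\neq j$ (or the corresponding $[0,1]$-restricted versions bounded by the same constant), times one tail moment. The tail moment is
\[
\int_{|u|>L}|u|^{\beta_j}\phi(u)\,du\lesssim L^{\beta_j-1}e^{-L^2/2}+e^{-L^2/4},\qquad L=C\sqrt{\log\epsilon^{-1}} .
\]
With $L^2=C^2\log\epsilon^{-1}$, this is $\lesssim(\log\epsilon^{-1})^{\beta_j/2}\epsilon^{C^2/2}$. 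Choosing $C$ large enough (depending on $\max_j\beta_j$ and $d_x$), for instance $C^2\ge 4$, and using $\epsilon<e^{-1}$ so that $(\log\epsilon^{-1})^{k}\epsilon\lesssim1$, the bound is $\lesssim\epsilon$. Summing over the $d_x$ coordinates finishes the proof.

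The main obstacle is the uniformity of the constants in $t$, i.e. the $\mu_t^{-1}$ Jacobian. I would handle it as above, via the compact support and the $\frac1{\sigma_t}\wedge\frac1{\mu_t}$ trick.
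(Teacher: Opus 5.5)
Your proposal is correct and follows essentially the same route as the paper. Both arguments bound $p(\cdot|\boldsymbol w)$ by $B_u$ while keeping the indicator of $[0,1]^{d_x}$, cover $A_{\boldsymbol x}^c$ by the $d_x$ one-coordinate tail sets, and factorize. Each one-dimensional factor is then controlled by the minimum of a $1/\mu_t$ bound (from the change of variables) and a $1/\sigma_t$ bound (a bounded integrand over $z_i\in[0,1]$), which is $O(1)$ since $\mu_t^2+\sigma_t^2=1$. The tail coordinate contributes an extra $\epsilon^{C^2/2}(\log\epsilon^{-1})^{\beta_j/2}\lesssim\epsilon$ once $C\ge2$ is large relative to $\beta$.

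One small slip: the $e^{-L^2/4}$ term in your tail estimate equals $\epsilon^{C^2/4}$, so it is not dominated by $\epsilon^{C^2/2}(\log\epsilon^{-1})^{\beta_j/2}$ as you wrote. With $C^2\ge4$ it is still at most $\epsilon$, so the conclusion is unaffected.
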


\begin{proof}
It follows that
\begin{align*}
&\left| \int_{\mathbb{R}^{d_x} \setminus A_{{\boldsymbol{x}}}} \prod_{i=1}^{d_x} \left| \frac{{x}_i - \mu_t z_i}{\sigma_t} \right|^{\beta_i} \frac{1}{\sigma_t^{d_x} (2\pi)^{\frac{d_x}{2}}} p(\boldsymbol{z}|\boldsymbol{w}) \exp\left( -\frac{\|{\boldsymbol{x}} - \mu_t \boldsymbol{z}\|^2}{2\sigma_t^2} \right) d\boldsymbol{z} \right|\\
&\lesssim \frac{B_u}{\sigma_t^{d_x} (2\pi)^{\frac{d_x}{2}}} \int_{\mathbb{R}^{d_x} \setminus A_{{\boldsymbol{x}}}} \prod_{i=1}^{d_x} \left| \frac{{x}_i - \mu_t z_i}{\sigma_t} \right|^{\beta_i} \mathbbm{1}_{\{\boldsymbol{z}\in [0,1]^{d_x}\}} \exp\left( -\frac{\|{\boldsymbol{x}} - \mu_t \boldsymbol{z}\|^2}{2\sigma_t^2} \right) d\boldsymbol{z}\\
&\lesssim \frac{B_u}{\sigma_t^{d_x} (2\pi)^{\frac{d_x}{2}}} \sum_{j=1}^{d_x} \int_{\mathbb{R} \times \dots \times \mathbb{R} \times (\mathbb{R} \setminus a_{j,{\boldsymbol{x}}}) \times \mathbb{R} \times \dots \times \mathbb{R}} \prod_{i=1}^{d_x} \left| \frac{{x}_i - \mu_t z_i}{\sigma_t} \right|^{\beta_i} \mathbbm{1}_{\{z_i\in [0,1]\}} \exp\left( -\frac{\|{\boldsymbol{x}} - \mu_t \boldsymbol{z}\|^2}{2\sigma_t^2} \right) d\boldsymbol{z}\\
&\lesssim B_u \sum_{j=1}^{d_x} \left[ \left( \prod_{i=1, i \neq j}^{d_x} \frac{1}{\sigma_t \sqrt{2\pi}} \int_{\mathbb{R}} \left| \frac{{x}_i - \mu_t z_i}{\sigma_t} \right|^{\beta_i} \mathbbm{1}_{\{z_i\in [0,1]\}} \exp\left( -\frac{({x}_i - \mu_t z_i)^2}{2\sigma_t^2} \right) dz_i \right) \right.\\
&~~~~~~~~~~~~~~~~~~~~~ \left. \cdot \frac{1}{\sigma_t \sqrt{2\pi}} \int_{\mathbb{R} \setminus a_{j,{\boldsymbol{x}}}} \left| \frac{{x}_j - \mu_t z_j}{\sigma_t} \right|^{\beta_j} \mathbbm{1}_{\{z_j\in [0,1]\}} \exp\left( -\frac{({x}_j - \mu_t z_j)^2}{2\sigma_t^2} \right) dz_j \right].
\end{align*}
For \( i \neq j \), let \( y_i = \frac{{x}_i - \mu_t z_i}{\sigma_t} \). Then, it holds that
\begin{equation}\label{eq:apd-a.1}
\begin{aligned}
&\frac{1}{\sigma_t \sqrt{2\pi}} \int_{\mathbb{R}} \left| \frac{{x}_i - \mu_t z_i}{\sigma_t} \right|^{\beta_i} \mathbbm{1}_{\{z_i \in [0,1]\}} \exp\left( -\frac{({x}_i - \mu_t z_i)^2}{2\sigma_t^2} \right) dz_i\\
&\leq \frac{1}{\sqrt{2\pi}\mu_t} \int_{\mathbb{R}} |y_i|^{\beta_i} \exp\left( -\frac{y_i^2}{2} \right) dy_i\lesssim  \frac{1}{\mu_t}.
\end{aligned}
\end{equation}
Notice that
\begin{equation*}
\left| \frac{{x}_i - \mu_t z_i}{\sigma_t}\right|^{\beta_i}   \exp\left( -\frac{({x}_i - \mu_t z_i)^2}{2\sigma_t^2} \right)\leq (\beta_i)^{\frac{\beta_i}{2}}\exp \big(-\frac{\beta_i}{2}\big)\lesssim 1.
\end{equation*}
Then
\begin{gather*}
\frac{1}{\sigma_t \sqrt{2\pi}} \int_{\mathbb{R}} \left| \frac{{x}_i - \mu_t z_i}{\sigma_t} \right|^{\beta_i} \mathbbm{1}_{\{z_i\in [0,1]\}} \exp\left( -\frac{({x}_i - \mu_t z_i)^2}{2\sigma_t^2} \right) dz_i\lesssim \frac{1}{\sigma_t} \int_{\mathbb{R}} \mathbbm{1}_{\{z_i\in [0,1]\}}dz_i\lesssim  \frac{1}{\sigma_t}.
\end{gather*}
Thus, by \eqref{eq:apd-a.1}, we can derive
$$
\frac{1}{\sigma_t \sqrt{2\pi}} \int_{\mathbb{R}} \left| \frac{{x}_i - \mu_t z_i}{\sigma_t} \right|^{\beta_i} \mathbbm{1}_{\{z_i\in [0,1]\}} \exp\left( -\frac{({x}_i - \mu_t z_i)^2}{2\sigma_t^2} \right) dz_i \lesssim \mathcal{O}(1),
$$
since $\sigma_t^2+\mu_t^2=1$. For \( i = j \), we have
\begin{align*}
&\frac{1}{\sigma_t \sqrt{2\pi}} \int_{\mathbb{R} \setminus a_{j,{\boldsymbol{x}}}} \left| \frac{{x}_j - \mu_t z_j}{\sigma_t} \right|^{\beta_j} \mathbbm{1}_{\{z_j \in [0,1]\}} \exp\left( -\frac{({x}_j - \mu_t z_j)^2}{2\sigma_t^2} \right) dz_j\\
&\lesssim \frac{1}{\mu_t} \int_{|y_j| \geq C \sqrt{\log \epsilon^{-1}}} |y_j|^{\beta_j} \exp\left( -\frac{y_j^2}{2} \right) dy_j\\
&\lesssim \frac{1}{\mu_t} \int_{y_j \geq C \sqrt{\log \epsilon^{-1}}} y_j^{\beta_j} \exp\left( -\frac{y_j^2}{2} \right) dy_j\\
&\lesssim \frac{1}{\mu_t}\epsilon^{\frac{C^2}{2}} \left( \log \epsilon^{-1} \right)^{\frac{\beta_j}{2}}.
\end{align*}
Choosing \( C \geq 2 \), we obtain
\begin{equation}\label{eq:apd-a-2}
\frac{1}{\sigma_t \sqrt{2\pi}} \int_{\mathbb{R} \setminus a_{j,{\boldsymbol{x}}}} \left| \frac{{x}_j - \mu_t z_j}{\sigma_t} \right|^{\beta_j} \mathbbm{1}_{\{z_j \in [0,1]\}} \exp\left( -\frac{({x}_j - \mu_t z_j)^2}{2\sigma_t^2} \right) dz_j \lesssim \frac{\epsilon}{\mu_t}.
\end{equation}
By setting $C\geq \max\{2,\norm{\beta}_1\} $ and $\epsilon<e^{-1}$, we obtain that when $\Big| \frac{x_j-\mu_t z_j}{\sigma_t} \Big|\geq \beta_j$,
$$
\left| \frac{{x}_j - \mu_t z_j}{\sigma_t}\right|^{\beta_j}   \exp\left( -\frac{({x}_j - \mu_t z_j)^2}{2\sigma_t^2} \right)
$$
decreases as $\Big| \frac{x_j-\mu_t z_j}{\sigma_t} \Big|$ increases. Therefore, we obtain that
\begin{align*}
&\frac{1}{\sigma_t \sqrt{2\pi}} \int_{\mathbb{R} \setminus a_{j,{\boldsymbol{x}}}} \left| \frac{{x}_j - \mu_t z_j}{\sigma_t} \right|^{\beta_j} \mathbbm{1}_{\{z_j \in [0,1]\}} \exp\left( -\frac{({x}_j - \mu_t z_j)^2}{2\sigma_t^2} \right) dz_j\\
&\lesssim \frac{\epsilon^{\frac{C^2}{2}} \left( \log \epsilon^{-1} \right)^{\frac{\beta_j}{2}}}{\sigma_t}\int_{\mathbb{R} \setminus a_{j,{\boldsymbol{x}}}} \mathbbm{1}_{\{z_j \in [0,1]\}}  dz_j\lesssim \frac{\epsilon}{\sigma_t}.
\end{align*}
Then by \eqref{eq:apd-a-2}, we have
$$
\frac{1}{\sigma_t \sqrt{2\pi}} \int_{\mathbb{R} \setminus a_{j,{\boldsymbol{x}}}} \left| \frac{{x}_j - \mu_t z_j}{\sigma_t} \right|^{\beta_j} \mathbbm{1}_{\{z_j \in [0,1]\}} \exp\left( -\frac{({x}_j - \mu_t z_j)^2}{2\sigma_t^2} \right) dz_j\lesssim \epsilon,
$$
which implies that
\begin{align*}
&\left| \int_{\mathbb{R}^{d_x} \setminus A_{{\boldsymbol{x}}}} \prod_{i=1}^{d_x} \left( \frac{{x}_i - \mu_t z_i}{\sigma_t} \right)^{\beta_i} \frac{1}{\sigma_t^{d_x} (2\pi)^{\frac{d_x}{2}}} p(\boldsymbol{x}|\boldsymbol{w}) \exp\left( -\frac{\|{\boldsymbol{x}} - \mu_t \boldsymbol{z}\|^2}{2\sigma_t^2} \right) d\boldsymbol{z} \right|\\
&\lesssim \sum_{j=1}^{d_x} (\mathcal{O}(1))^{d_x-1} \cdot \epsilon\lesssim \epsilon.
\end{align*}
This completes the proof.
\end{proof}

\begin{lemma}[Boundedness of Derivatives]
\label{lem:derivatives-bound}
{
Under Assumptions~\ref{apt:diffusion-density-bound} and~\ref{apt:diffusion-density-derivative}, for any $k\in\mathbb{N}_0$, $0\le l\le \lfloor6\alpha+3\rfloor+1$, $1\le i_1,\ldots,i_k\le d_x$, and $1\le j_1,\ldots,j_l\le d_w$, the following bound holds:}
\begin{equation}\label{eq:Dp-bound}
\left|\partial_{x_{i_1}}\cdots\partial_{x_{i_k}}\partial_{w_{j_1}}\cdots\partial_{w_{j_l}}p_t(\boldsymbol{x}|\boldsymbol{w})\right|
\lesssim \frac{1}{\sigma_t^k}.
\end{equation}
Moreover, we have
\begin{gather}
\Vert \nabla\log p_t({\boldsymbol{x}}|\boldsymbol{w}) \Vert \lesssim \frac{1}{\sigma_t}\cdot\left(\frac{D_t(\boldsymbol{x})}{\sigma_t}\vee 1\right), \label{eq:Tlogp-bound}\\
\Vert\partial_{{x}_i}\nabla\log p_t({\boldsymbol{x}}|\boldsymbol{w})\Vert \lesssim \frac{1}{\sigma_t^2}\cdot\left(\frac{D_t(\boldsymbol{x})^2}{\sigma^2_t}\vee 1\right), \label{eq:DTlogp-r-bound}
\end{gather}
and
\begin{equation}\label{eq:DTlogp-t-bound}
\|\partial_t\nabla \log p_t(\boldsymbol{x}|\boldsymbol{w})\|
\lesssim \frac{\big|\partial_t\mu_t\big|+\big|\partial_t\sigma_t\big|}{\sigma_t^2}\cdot \left(\frac{D_t(\boldsymbol{x})^2}{\sigma^2_t}\vee 1\right)^{\frac{3}{2}}.
\end{equation}
\end{lemma}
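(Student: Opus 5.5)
We will work directly from the Gaussian convolution representation
\[
p_t(\boldsymbol x|\boldsymbol w)=\int_{[0,1]^{d_x}} p(\boldsymbol z|\boldsymbol w)\,\frac{1}{\sigma_t^{d_x}(2\pi)^{d_x/2}}\exp\Big(-\frac{\|\boldsymbol x-\mu_t\boldsymbol z\|^2}{2\sigma_t^2}\Big)d\boldsymbol z .
\]

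\textbf{Bound on the mixed derivatives.} The $\boldsymbol w$-derivatives fall only on $p(\boldsymbol z|\boldsymbol w)$. By Assumption~\ref{apt:diffusion-density-derivative} (this is why $l\le\lfloor6\alpha+3\rfloor+1$) they are uniformly bounded by $B_{\boldsymbol\gamma}$ on the compact support, so differentiation under the integral is justified by dominated convergence. The $\boldsymbol x$-derivatives fall on the Gaussian kernel. Each one produces a Hermite-type factor, so $\partial_{x_{i_1}}\cdots\partial_{x_{i_k}}$ of the kernel equals $\sigma_t^{-k}$ times a polynomial in $y_i=(x_i-\mu_t z_i)/\sigma_t$, multiplied by the kernel itself. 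Bounding $|\partial^{\boldsymbol\gamma}_{\boldsymbol w}p|$ by a constant reduces each coordinate integral to $\sigma_t^{-1}\int |y_i|^{\beta_i}e^{-y_i^2/2}\mathbbm 1_{\{z_i\in[0,1]\}}dz_i$. The estimates in the proof of Lemma~\ref{lem:integral-clipping} show this is $\mathcal O(1)$, using the $1/\mu_t$ bound and the $1/\sigma_t$ bound together with $\mu_t^2+\sigma_t^2=1$. This gives \eqref{eq:Dp-bound}.

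\textbf{Score bounds via a posterior representation.} For \eqref{eq:Tlogp-bound} we write the score as a posterior mean:
\[
\nabla\log p_t=-\frac{\boldsymbol x-\mu_t\mathbb E[\boldsymbol z|\boldsymbol x,\boldsymbol w]}{\sigma_t^2},
\]
where $\boldsymbol z$ follows the posterior proportional to $p(\boldsymbol z|\boldsymbol w)$ times the kernel. Since $\boldsymbol z\in[0,1]^{d_x}$, every coordinate satisfies $|x_i-\mu_t z_i|\le \Delta_i+\mu_t$. That bound is too crude when $\sigma_t\ll\mu_t$, so we proceed differently. By the lower bound in Lemma~\ref{lem:p-bound}, $p_t\gtrsim e^{-d_xD_t^2/\sigma_t^2}$. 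Splitting the numerator integral as in Lemma~\ref{lem:integral-clipping}, the region $|y|\lesssim D_t/\sigma_t\vee1$ contributes $\sigma_t^{-1}(D_t/\sigma_t\vee1)\,p_t$. The complementary Gaussian tail is exponentially smaller than $p_t$, because we choose the clipping level $C\sqrt{\cdot}$ larger than $\sqrt{2d_x}(D_t/\sigma_t\vee 1)$. This yields $\|\nabla\log p_t\|\lesssim\sigma_t^{-1}(D_t/\sigma_t\vee1)$.

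For \eqref{eq:DTlogp-r-bound} we use
\[
\partial_{x_i}\nabla\log p_t=\frac{\partial_{x_i}\nabla p_t}{p_t}-\frac{\nabla p_t\,\partial_{x_i}p_t}{p_t^2}.
\]
Equivalently, it equals $\sigma_t^{-2}$ times the difference between the posterior second moment of $y$ and the identity. The same truncation argument bounds the posterior second moment of $y$ by $(D_t/\sigma_t\vee1)^2$.

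For \eqref{eq:DTlogp-t-bound} we differentiate the posterior-mean formula in $t$. The kernel's $t$-derivative is $(\partial_t\mu_t\,\cdot\,\text{linear}+\partial_t\sigma_t\,\cdot\,\text{quadratic in }y/\sigma_t)$ times the kernel. This produces posterior covariances of $y$ with terms up to cubic order, each multiplied by $(|\partial_t\mu_t|+|\partial_t\sigma_t|)/\sigma_t^2$. Controlling the third posterior moment of $|y|$ by $(D_t/\sigma_t\vee1)^3$ gives the stated power $3/2$ of $(D_t^2/\sigma_t^2\vee1)$.

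\textbf{Main obstacle.} The delicate step is the uniform control of the posterior moments of $y$ when $\boldsymbol x$ lies outside $[0,\mu_t]^{d_x}$. There the ratio of the Gaussian tail mass to $p_t$ must be bounded using the matched lower and upper bounds of Lemma~\ref{lem:p-bound}, and the discrepancy between the constants $d_x/\sigma_t^2$ and $1/(2\sigma_t^2)$ in the exponents has to be absorbed by the polynomial factor. We would first try to handle this coordinatewise. The density is bounded between $B_l$ and $B_u$ times a product measure, so the posterior moments factor up to constants into one-dimensional truncated-Gaussian moments on $[x_i/\mu_t-1/\mu_t,\ x_i/\mu_t]$ in $y$-scale. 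These can be computed explicitly, and each is $\lesssim(\Delta_i/\sigma_t\vee1)^k$.
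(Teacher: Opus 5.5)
Your proposal is correct and follows essentially the same route as the paper, which also writes the score and its derivatives as ratios of posterior moments of $y=(\boldsymbol x-\mu_t\boldsymbol z)/\sigma_t$. It truncates them via Lemma~\ref{lem:integral-clipping} at level $C\sqrt{\log\epsilon^{-1}}$ with $\epsilon\asymp\exp(-d_xD_t(\boldsymbol x)^2/\sigma_t^2)$ taken from the lower bound in Lemma~\ref{lem:p-bound}, and differentiates in $t$ to obtain moments up to cubic order. (One small slip: $\partial_{x_i}\nabla\log p_t$ is $\sigma_t^{-2}$ times the posterior \emph{covariance} of $y$ minus the identity, not the second moment; this does not affect the bound since $\|\mathbb E[y]\|^2\le\mathbb E\|y\|^2$.)
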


\begin{proof}
For notational convenience, we define
\begin{gather*}
\mathcal{N}(\boldsymbol{z}|\boldsymbol{w}) := \frac{p(\boldsymbol{z}|\boldsymbol{w})}{\sigma_t^{d_x}(2\pi)^{\frac{d_x}{2}}}\exp\left(-\frac{\Vert{\boldsymbol{x}}-\mu_t \boldsymbol{z}\Vert ^2}{2\sigma_t^2}\right),\\
\mathcal{N}^{\boldsymbol{\gamma}}(\boldsymbol{z}|\boldsymbol{w}) := \frac{\partial_{\boldsymbol{w}}^{\boldsymbol{\gamma}} p(\boldsymbol{z}|\boldsymbol{w})}{\sigma_t^{d_x}(2\pi)^{\frac{d_x}{2}}}\exp\left(-\frac{\Vert{\boldsymbol{x}}-\mu_t \boldsymbol{z}\Vert ^2}{2\sigma_t^2}\right).
\end{gather*}
Turning first to the proof of \eqref{eq:Dp-bound}, we let
$$
f_1({\boldsymbol{x}}|\boldsymbol{w}) = p_t({\boldsymbol{x}}|\boldsymbol{w}) =\int \frac{p(\boldsymbol{z}|\boldsymbol{w})}{\sigma_t^{d_x}(2\pi)^{\frac{d_x}{2}}}\exp\left(-\frac{\Vert{\boldsymbol{x}}-\mu_t \boldsymbol{z}\Vert^2}{2\sigma_t^2}\right) d\boldsymbol{z}=\int \cN (\boldsymbol{z}|\boldsymbol{w})d\boldsymbol{z}.
$$
For multi-indices $\boldsymbol{\beta}\in\mathbb{N}^{d_x}$ and $\boldsymbol{\gamma}\in \mathbb{N}^{d_w}$,  let  $f_1^{(\boldsymbol{\beta},\boldsymbol{\gamma})}({\boldsymbol{x}}|\boldsymbol{w}):= \partial_{{x}_1}^{\beta_1}\cdots\partial_{{x}_{d_x}}^{\beta_{d_x}}\partial_{{w}_1}^{\gamma_1}\cdots\partial_{{w}_{d_w}}^{\gamma_{d_w}}f_1({\boldsymbol{x}}|\boldsymbol{w})$, we define $B_{\boldsymbol{\beta}}:=\{\boldsymbol{u}\in\mathbb{N}^{d_x}|u_i\leq \beta_i,~1\leq i \leq d_x\}$. Then, it holds that 
$$
\partial_{{x}_1}^{\beta_1}\partial_{{x}_2}^{\beta_2}\cdots\partial_{{x}_{d_x}}^{\beta_{d_x}} e^{-\Vert{\boldsymbol{x}}\Vert^2/2} = \sum_{\boldsymbol{u}\in B_{\boldsymbol{\beta}}}B_{\boldsymbol{u}}\partial_{{x}_1}^{u_1}\partial_{{x}_2}^{u_2}\cdots\partial_{{x}_{d_x}}^{u_{d_x}} e^{-\Vert{\boldsymbol{x}}\Vert^2/2},
$$
where $B_{\boldsymbol{u}}$ are specific constants. Consequently, $f_1^{(\boldsymbol{\beta},\boldsymbol{\gamma})}({\boldsymbol{x}}|\boldsymbol{w})$ can be expressed as
$$
f_1^{(\boldsymbol{\beta},\boldsymbol{\gamma})}({\boldsymbol{x}}|\boldsymbol{w}) = \frac{1}{\sigma_t^{\sum_{i=1}^{d_x}\beta_i}}\cdot \int \sum_{\boldsymbol{u}\in B_{\boldsymbol{\beta}}}B_{\boldsymbol{u}}\prod_{i=1}^{d_x}\left(\frac{{x}_i-\mu_t z_i}{\sigma_t}\right)^{u_i}\mathcal{N}^{\boldsymbol{\gamma}}(\boldsymbol{z}|\boldsymbol{w}) d\boldsymbol{z}.
$$
{
For multi-indices $\boldsymbol{\beta}\in\mathbb{N}_0^{d_x}$ and $\boldsymbol{\gamma}\in\mathbb{N}_0^{d_w}$ satisfying $\|\boldsymbol{\beta}\|_1=k$ and $\|\boldsymbol{\gamma}\|_1=l\le \lfloor6\alpha+3\rfloor+1$, Assumption~\ref{apt:diffusion-density-derivative} gives
}
$$
\partial_{\boldsymbol{w}}^{\boldsymbol{\gamma}} p(\boldsymbol{z}|\boldsymbol{w}) \leq B_{\boldsymbol{\gamma}},~~ \int \sum_{\boldsymbol{u}\in B_{\boldsymbol{\beta}}}B_{\boldsymbol{u}}\prod_{i=1}^{d_x}\left(\frac{{x}_i-\mu_t z_i}{\sigma_t}\right)^{u_i}\mathcal{N}^{\boldsymbol{\gamma}}(\boldsymbol{z}|\boldsymbol{w}) d\boldsymbol{z} \lesssim \frac{1}{\sigma_t ^k}\wedge \frac{1}{\mu_t^k}\lesssim 1.
$$
Hence \eqref{eq:Dp-bound} follows from the identity $\sigma_t^2+\mu_t^2=1$.

We now proceed to establish \eqref{eq:Tlogp-bound} and \eqref{eq:DTlogp-r-bound}. Without loss of generality, we analyze the first coordinate of $\nabla\log p_t({\boldsymbol{x}}|\boldsymbol{w})$,
the bounds for the remaining coordinates follow identically. Let $f_2({\boldsymbol{x}}|\boldsymbol{w}):= [\sigma_t\nabla p_t({\boldsymbol{x}}|\boldsymbol{w})]_1$. 
Then, we have
\begin{align*}
&[\nabla\log p_t({\boldsymbol{x}}|\boldsymbol{w})]_1 = \frac{1}{\sigma_t}\cdot\frac{f_2({\boldsymbol{x}}|\boldsymbol{w})}{f_1({\boldsymbol{x}}|\boldsymbol{w})},\\
& [\partial_{{x}_i}\nabla\log p_t({\boldsymbol{x}}|\boldsymbol{w})]_1 = \frac{1}{\sigma_t}\cdot\left(\frac{\partial_{{x}_i}f_2({\boldsymbol{x}}|\boldsymbol{w})}{f_1({\boldsymbol{x}}|\boldsymbol{w})} - \frac{f_2({\boldsymbol{x}}|\boldsymbol{w})(\partial_{{x}_i}f_1({\boldsymbol{x}}|\boldsymbol{w}))}{f_1^2({\boldsymbol{x}}|\boldsymbol{w})}\right).
\end{align*}
Explicitly expanding the terms yields
\begin{align*}
&\frac{f_2({\boldsymbol{x}}|\boldsymbol{w})}{f_1({\boldsymbol{x}}|\boldsymbol{w})} = -\frac{
\int\left(\frac{{x}_1-\mu_t z_1}{\sigma_t}\right) \mathcal{N}(\boldsymbol{z}|\boldsymbol{w})  d\boldsymbol{z}}
{\int \mathcal{N}(\boldsymbol{z}|\boldsymbol{w})  d\boldsymbol{z}},\\
&\frac{\partial_{{x}_i}f_1({\boldsymbol{x}}|\boldsymbol{w})}{f_1({\boldsymbol{x}}|\boldsymbol{w})} =- \frac{1}{\sigma_t}\frac{
\int\left(\frac{{x}_i-\mu_t z_i}{\sigma_t}\right) \mathcal{N}(\boldsymbol{z}|\boldsymbol{w})  d\boldsymbol{z}}
{\int \mathcal{N}(\boldsymbol{z}|\boldsymbol{w})  d\boldsymbol{z}},\\
&\frac{\partial_{{x}_i}f_2({\boldsymbol{x}}|\boldsymbol{w})}{f_1({\boldsymbol{x}}|\boldsymbol{w})} = \frac{1}{\sigma_t} \frac{
\int \left(-\mathbbm{1}_{\{i=1\}} + \frac{{x}_1-\mu_t z_1}{\sigma_t}\frac{{x}_i-\mu_t z_i}{\sigma_t}\right) \mathcal{N}(\boldsymbol{z}|\boldsymbol{w})  d\boldsymbol{z}}
{\int \mathcal{N}(\boldsymbol{z}|\boldsymbol{w})  d\boldsymbol{z}}.
\end{align*}
The integral terms in the three equations above can be expressed in the following unified form:
\begin{equation*}
\mathcal{F}(\boldsymbol{x}|\boldsymbol{w})=\frac{
\int\prod_{i=1}^{d_x}\left(\frac{{x}_i-\mu_t z_i}{\sigma_t}\right)^{\beta_i}\mathcal{N}(\boldsymbol{z}|\boldsymbol{w}) d\boldsymbol{z}}
{\int\mathcal{N}(\boldsymbol{z}|\boldsymbol{w}) d\boldsymbol{z}},
\end{equation*}
with $\sum_{i=1}^{d_x}\beta_i\leq 2$.
According to Lemma \ref{lem:integral-clipping}, for any $0 < \epsilon < e^{-1}$, there exists a constant $C > 0$ such that
\begin{align*}
\Bigg| \int_{\mathbb{R}^{d_x}} \prod_{i=1}^{d_x} \left( \frac{{x}_i - \mu_t z_i}{\sigma_t} \right)^{\beta_i} \mathcal{N}(\boldsymbol{z}|\boldsymbol{w}) d\boldsymbol{z} - \int_{A_{{\boldsymbol{x}}}} \prod_{i=1}^{d_x} \left( \frac{{x}_i - \mu_t z_i}{\sigma_t} \right)^{\beta_i} \mathcal{N}(\boldsymbol{z}|\boldsymbol{w}) d\boldsymbol{z} \Bigg| \lesssim \epsilon,
\end{align*}
where $A_{{\boldsymbol{x}}} = \prod_{i=1}^{d_x}a_{i,{\boldsymbol{x}}}$ with $a_{i,{\boldsymbol{x}}} = \big[\frac{{x}_i - C\sigma_t\sqrt{\log\epsilon^{-1}}}{\mu_t}, \frac{{x}_i + C\sigma_t\sqrt{\log\epsilon^{-1}}}{\mu_t}\big].$

Consequently, conditioned on $p_t({\boldsymbol{x}}|\boldsymbol{w})\geq \epsilon$, we have
\begin{equation}\label{eq:derivative-bound-F-bound}
\begin{aligned}
|\mathcal{F}(\boldsymbol{x}|\boldsymbol{w})| &\leq \frac{
\int_{A_{{\boldsymbol{x}}}}\prod_{i=1}^{d_x}\left|\frac{{x}_i-\mu_t z_i}{\sigma_t}\right|^{\beta_i}\mathcal{N}(\boldsymbol{z}|\boldsymbol{w}) d\boldsymbol{z}}
{\int_{A_{{\boldsymbol{x}}}}\mathcal{N}(\boldsymbol{z}|\boldsymbol{w}) d\boldsymbol{z}} + \mathcal{O}(1)\\
&\leq\max_{\boldsymbol{z}\in A_{{\boldsymbol{x}}}}\left[\prod_{i=1}^{d_x}\left|\frac{{x}_i-\mu_t z_i}{\sigma_t}\right|^{\beta_i}\right] + \mathcal{O}(1)  \\
&\leq\left(C^2\log\epsilon^{-1}\right)^{\frac{\sum_{i=1}^{d_x}\beta_i}{2}} + \mathcal{O}(1) \\
&\lesssim \left(\log\epsilon^{-1}\right)^{\frac{\sum_{i=1}^{d_x}\beta_i}{2}},
\end{aligned}
\end{equation}
which implies that
$$
\left| \frac{f_2({\boldsymbol{x}}|\boldsymbol{w})}{f_1({\boldsymbol{x}}|\boldsymbol{w})} \right| \lesssim \sqrt{\log\epsilon^{-1}},~
\left| \frac{\partial_{{x}_i}f_1({\boldsymbol{x}}|\boldsymbol{w})}{f_1({\boldsymbol{x}}|\boldsymbol{w})} \right| \lesssim \frac{\sqrt{\log\epsilon^{-1}}}{\sigma_t},~
\left| \frac{\partial_{{x}_i}f_2({\boldsymbol{x}}|\boldsymbol{w})}{f_1({\boldsymbol{x}}|\boldsymbol{w})} \right| \lesssim \frac{\log\epsilon^{-1}}{\sigma_t}.
$$
Then, we obtain
$$
\Vert\nabla\log p_t({\boldsymbol{x}}|\boldsymbol{w})\Vert \lesssim \frac{\sqrt{\log\epsilon^{-1}}}{\sigma_t}, ~ \Vert\partial_{{x}_i}\nabla\log p_t({\boldsymbol{x}}|\boldsymbol{w})\Vert \lesssim \frac{\log\epsilon^{-1}}{\sigma_t^2}.
$$
For given a constant $0<B_{l,1}<1$, we replace $\epsilon$ with $B_{l,1}\exp\left(-\frac{d_xD_t(\boldsymbol{x})^2}{\sigma_t^2}\right)$, then

$$
p_t({\boldsymbol{x}}|\boldsymbol{w}) \geq B_{l,1}\exp\left(-\frac{d_xD_t(\boldsymbol{x})^2}{\sigma_t^2}\right).
$$
Then we have
$$
\Vert \nabla\log p_t({\boldsymbol{x}}|\boldsymbol{w}) \Vert \lesssim \frac{1}{\sigma_t}\cdot\left(\frac{D_t(\boldsymbol{x})}{\sigma_t}\vee 1\right).
$$
and for $ 1\leq i\leq d_x$,
$$
\Vert\partial_{{x}_i}\nabla\log p_t({\boldsymbol{x}}|\boldsymbol{w})\Vert \lesssim \frac{1}{\sigma_t^2}\cdot\left(\frac{D_t(\boldsymbol{x})^2}{\sigma^2_t}\vee 1\right).
$$

Finally, we prove \eqref{eq:DTlogp-t-bound}. By a simple calculation, we have
\begin{align*}
&\partial_t[\nabla\log p_t({\boldsymbol{x}}|\boldsymbol{w})]_1= \partial_t\left(\frac{1}{\sigma_t}\cdot\frac{f_2({\boldsymbol{x}}|\boldsymbol{w})}{f_1({\boldsymbol{x}}|\boldsymbol{w})}\right) \\
&= \left(\partial_t\frac{1}{\sigma_t}\right)\frac{f_2({\boldsymbol{x}}|\boldsymbol{w})}{f_1({\boldsymbol{x}}|\boldsymbol{w})} - \frac{1}{\sigma_t} \cdot \frac{\partial_t f_1({\boldsymbol{x}}|\boldsymbol{w})}{f_1({\boldsymbol{x}}|\boldsymbol{w})} \cdot \frac{f_2({\boldsymbol{x}}|\boldsymbol{w})}{f_1({\boldsymbol{x}}|\boldsymbol{w})} + \frac{1}{\sigma_t} \cdot \frac{\partial_t f_2({\boldsymbol{x}}|\boldsymbol{w})}{f_1({\boldsymbol{x}}|\boldsymbol{w})}\\
&= -\frac{\partial_t\sigma_t}{\sigma_t}[\nabla\log p_t({\boldsymbol{x}}|\boldsymbol{w})]_1 \\
&~~ - \frac{\int \Big[\partial_t \sigma_t \left(\Vert{\boldsymbol{x}} - \mu_t\boldsymbol{z}\Vert^2\sigma_t^{-3}-d_x\sigma_t^{-1}\right)+(\partial_t \mu_t)\boldsymbol{z}^T (\boldsymbol{x}-\mu_t \boldsymbol{z})\sigma_t^{-2}  \Big] \mathcal{N}(\boldsymbol{z}|\boldsymbol{w})  d\boldsymbol{z}}{\sigma_t \int \mathcal{N}(\boldsymbol{z}|\boldsymbol{w})  d\boldsymbol{z}} \cdot \frac{f_2(\boldsymbol{x}|\boldsymbol{w})}{f_1(\boldsymbol{x}|\boldsymbol{w})}\\
&~~ + \frac{\int \left((\partial_t \mu_t) z_1 + (x_1 - \mu_t z_1) \Bigg[
\begin{aligned}
&(d_x+1) (\partial_t \sigma_t) \sigma_t^{-1} - \|\boldsymbol{x} - \mu_t \boldsymbol{z}\|^2 (\partial_t \sigma_t) \sigma_t^{-3} \\
&- (\partial_t \mu_t) \boldsymbol{z}^\top (\mu_t \boldsymbol{z} - \boldsymbol{x}) \sigma_t^{-2}  
\end{aligned}
\Bigg] \right)\mathcal{N}(\boldsymbol{z}|\boldsymbol{w})  d\boldsymbol{z}}{\sigma^2_t \int \mathcal{N}(\boldsymbol{z}|\boldsymbol{w})  d\boldsymbol{z}}.
\end{align*}
According to \eqref{eq:derivative-bound-F-bound} and Lemma \ref{lem:p-bound}, we obtain
$$
\Vert\partial_t\nabla\log p_t({\boldsymbol{x}}|\boldsymbol{w})\Vert \lesssim \frac{\big|\partial_t\mu_t\big|+\big|\partial_t\sigma_t\big|}{\sigma_t^2}\cdot \left(\frac{D_t(\boldsymbol{x})^2}{\sigma^2_t}\vee 1\right)^{\frac{3}{2}}.
$$
The proof is complete.
\end{proof}

\begin{lemma}[Error Bounds for Clipping]\label{lem:clipping-bound}
Under Assumptions~\ref{apt:diffusion-density-bound} and~\ref{apt:diffusion-density-derivative}, let
$0<\epsilon<e^{-1}$ and $0<\epsilon_p<1$. For all $0<t<\infty$, there exists a constant $C>0$ such that
\begin{gather}
\int_{D_t(\boldsymbol{x})\ge C\sigma_t\sqrt{\log\epsilon^{-1}}}
p_t(\boldsymbol{x}|\boldsymbol{w})\|\nabla\log p_t(\boldsymbol{x}|\boldsymbol{w})\|^2d\boldsymbol{x}
\lesssim \frac{\epsilon}{\sigma_t}, \label{eq:int-pTp2-bound}\\
\int_{D_t(\boldsymbol{x})\ge C\sigma_t\sqrt{\log\epsilon^{-1}}}
p_t(\boldsymbol{x}|\boldsymbol{w})d\boldsymbol{x}
\lesssim \sigma_t\epsilon. \notag 
\end{gather}
Moreover, for $D_t(\boldsymbol{x})\le C\sigma_t\sqrt{\log\epsilon^{-1}}$, 
it follows that 
\begin{gather*}
\int_{D_t(\boldsymbol{x})\le C\sigma_t\sqrt{\log\epsilon^{-1}}}
p_t(\boldsymbol{x}|\boldsymbol{w})\mathbbm{1}_{\{p_t(\boldsymbol{x}|\boldsymbol{w})\le\epsilon_p\}}
\|\nabla\log p_t(\boldsymbol{x}|\boldsymbol{w})\|^2d\boldsymbol{x}
\lesssim \frac{\epsilon_p}{\sigma_t^2}(\log\epsilon^{-1})^{\frac{d_x+2}{2}}, 
\\
\int_{D_t(\boldsymbol{x})\le C\sigma_t\sqrt{\log\epsilon^{-1}}}
p_t(\boldsymbol{x}|\boldsymbol{w})\mathbbm{1}_{\{p_t(\boldsymbol{x}|\boldsymbol{w})\le\epsilon_p\}}d\boldsymbol{x}
\lesssim \epsilon_p(\log\epsilon^{-1})^{\frac{d_x}{2}}. 
\end{gather*}
\end{lemma}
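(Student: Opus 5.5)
We split the integrals according to the region: the tail region $\{D_t(\boldsymbol{x})\ge C\sigma_t\sqrt{\log\epsilon^{-1}}\}$ for the first two bounds, and the low-density part of the bulk region for the last two.

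\textbf{Tail region.} We use the pointwise bounds of Lemma~\ref{lem:p-bound} and Lemma~\ref{lem:derivatives-bound}:
\[
p_t(\boldsymbol{x}|\boldsymbol{w})\lesssim \exp\bigl(-D_t(\boldsymbol{x})^2/(2\sigma_t^2)\bigr),\qquad
\|\nabla\log p_t(\boldsymbol{x}|\boldsymbol{w})\|^2\lesssim \sigma_t^{-2}\bigl(D_t(\boldsymbol{x})^2/\sigma_t^2\vee1\bigr).
\]
The set where $D_t(\boldsymbol{x})\ge r$ is the complement of the box $[-r,\mu_t+r]^{d_x}$, so it is contained in the union over $i$ and sides of slabs $\{x_i<-r\}$ or $\{x_i>\mu_t+r\}$. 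On each slab the integrand is bounded by the product bound of Lemma~\ref{lem:p-bound}, i.e.\ $\prod_i q_i(x_i)$ times a polynomial factor in $\Delta_i/\sigma_t$. Coordinates $j\ne i$ integrate to $\int q_j\,dx_j=1$. For coordinate $i$, we substitute $u=\Delta_i/\sigma_t$. Using $q_i(x_i)\lesssim \sigma_t^{-1}\exp(-\Delta_i^2/(2\sigma_t^2))$ on the tail, this gives
\[
\int_{u\ge C\sqrt{\log\epsilon^{-1}}}(u^2\vee1)e^{-u^2/2}\,du\lesssim \epsilon^{C^2/2}\log\epsilon^{-1}.
\]
The $\sigma_t^{-1}$ and $dx_i=\sigma_t\,du$ cancel, and the score factor contributes $\sigma_t^{-2}$. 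To reach the stated $\epsilon/\sigma_t$, we instead use the cruder density bound $q_i\lesssim 1$ (valid since $\mu_t^2+\sigma_t^2=1$). The $x_i$ integral then carries $dx_i=\sigma_t\,du$, which produces one factor $\sigma_t$ in the mass bound and $\sigma_t\cdot\sigma_t^{-2}=\sigma_t^{-1}$ in the score bound. Choosing $C\ge2$ absorbs the logarithm, since $\epsilon^{C^2/2}\log\epsilon^{-1}\lesssim\epsilon$. This gives $\sigma_t\epsilon$ and $\epsilon/\sigma_t$ respectively. If the $\mu_t^{-1}$ factor becomes large (large $t$), we fall back to the $\sigma_t^{-1}$ version; the minimum of the two is $O(1)$ as in Lemma~\ref{lem:integral-clipping}.

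\textbf{Bulk region.} On $\{D_t(\boldsymbol{x})\le C\sigma_t\sqrt{\log\epsilon^{-1}}\}$, $\boldsymbol{x}$ lies in the box $[-r,\mu_t+r]^{d_x}$ with $r=C\sigma_t\sqrt{\log\epsilon^{-1}}$. Its Lebesgue volume is $(\mu_t+2r)^{d_x}\lesssim(\log\epsilon^{-1})^{d_x/2}$, using $\mu_t,\sigma_t\le1$. On this set Lemma~\ref{lem:derivatives-bound} gives $\|\nabla\log p_t\|^2\lesssim\sigma_t^{-2}\log\epsilon^{-1}$. Bounding $p_t\mathbbm{1}_{\{p_t\le\epsilon_p\}}\le\epsilon_p$ and integrating over the box yields $\epsilon_p(\log\epsilon^{-1})^{d_x/2}$ for the mass. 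The same argument with the extra score factor yields $\epsilon_p\sigma_t^{-2}(\log\epsilon^{-1})^{(d_x+2)/2}$.

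\textbf{Main obstacle.} The delicate point is the tail region: getting the exact powers of $\sigma_t$ uniformly in $t\in(0,\infty)$, i.e.\ choosing between the $\sigma_t^{-1}$ and $\mu_t^{-1}$ density bounds so that no blow-up occurs at either end. The bulk estimates are routine.
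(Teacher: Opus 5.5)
Your proof is correct and follows the same overall structure as the paper: the pointwise bounds from Lemma~\ref{lem:p-bound} and Lemma~\ref{lem:derivatives-bound} control the tail, and your bulk-region argument is exactly the paper's (box volume $(\mu_t+2r)^{d_x}$ times the supremum of the integrand).

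\textbf{Tail region.} Here your route differs.
\begin{itemize}
\item The paper uses only the statement-level bound $p_t\lesssim e^{-D_t^2/(2\sigma_t^2)}$. It notes that $\{D_t\le r\}=[-r,\mu_t+r]^{d_x}$ and integrates over box shells of volume $\lesssim(\mu_t+2r)^{d_x-1}\,dr$. After $z=r/\sigma_t$ this gives $\sigma_t^{-1}\epsilon^{C^2/2}(\log\epsilon^{-1})^{(d_x+1)/2}$.
\item You go back to the product bound $p_t\le B_u\prod_j q_j(x_j)$ from inside the proof of Lemma~\ref{lem:p-bound}. You cover the tail by $2d_x$ slabs, integrate the off-coordinates exactly via $\int q_j\,dx_j=1$, and use $q_i\lesssim\min\{\sigma_t^{-1},\mu_t^{-1}\}e^{-\Delta_i^2/(2\sigma_t^2)}\lesssim e^{-\Delta_i^2/(2\sigma_t^2)}$ on the remaining coordinate.
\end{itemize}
Your version avoids the polynomial volume factor and so gets a smaller power of $\log\epsilon^{-1}$; this is immaterial once $C\ge 2$. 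The paper's shell argument is shorter and needs nothing beyond the statements of the two preceding lemmas.

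\textbf{One step to tighten.} On the slab $\{x_i<-r\}$ the score factor is $\sigma_t^{-2}(D_t^2/\sigma_t^2\vee1)$. Here $D_t$ may be attained by a different coordinate, so this factor is not a polynomial in $\Delta_i/\sigma_t$ alone, as you wrote. There are two easy fixes:
\begin{itemize}
\item partition the tail according to which coordinate attains $D_t$; or
\item use $D_t^2\le\sum_j\Delta_j^2$ together with $\int q_j(x_j)\,\Delta_j^2\,dx_j\le\sigma_t^2$. This holds because, writing $x_j=\mu_tU+\sigma_t\xi$ with $U\sim\mathrm{Unif}[0,1]$, one has $\Delta_j\le\sigma_t|\xi|$.
\end{itemize}

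\textbf{A remark on wording.} The bound without the $\sigma_t^{-1}$ prefactor is not the ``cruder'' one. It is the sharper one for small $t$, and it is the bound your argument actually uses throughout.
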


\begin{proof}
We begin by establishing \eqref{eq:int-pTp2-bound}. For the region $D_t(\boldsymbol{x})\geq C\sigma_t \sqrt{\log \epsilon^{-1}}$, applying Lemma \ref{lem:p-bound} and Lemma \ref{lem:derivatives-bound} yields
\[
p_t(\boldsymbol{x}|\boldsymbol{w})\|\nabla\log p_t(\boldsymbol{x}|\boldsymbol{w})\|^2
\lesssim \frac{1}{\sigma_t^2}\exp\left(-\frac{D_t(\boldsymbol{x})^2}{2\sigma_t^2}\right)
\frac{D_t(\boldsymbol{x})^2}{\sigma_t^2}.
\]
Since $\{\boldsymbol{x}:D_t(\boldsymbol{x})\le r\}=[-r,\mu_t+r]^{d_x}$, its volume is $(\mu_t+2r)^{d_x}$, and hence the shell volume is bounded by $(\mu_t+2r)^{d_x-1}dr$. Therefore,
\[
\begin{aligned}
&\int_{D_t(\boldsymbol{x})\ge C\sigma_t\sqrt{\log\epsilon^{-1}}}p_t(\boldsymbol{x}|\boldsymbol{w})\|\nabla\log p_t(\boldsymbol{x}|\boldsymbol{w})\|^2d\boldsymbol{x}\\
&\lesssim \frac{1}{\sigma_t^2}\int_{C\sigma_t\sqrt{\log\epsilon^{-1}}}^{\infty}
\exp\left(-\frac{r^2}{2\sigma_t^2}\right)\frac{r^2}{\sigma_t^2}(\mu_t+2r)^{d_x-1}dr\\
&\lesssim \frac{1}{\sigma_t}\int_{C\sqrt{\log\epsilon^{-1}}}^{\infty}e^{-\frac{z^2}{2}}z^2(1+\sigma_t z)^{d_x-1}dz\\
&\lesssim \frac{1}{\sigma_t}\int_{C\sqrt{\log\epsilon^{-1}}}^{\infty}e^{-\frac{z^2}{2}}z^{d_x+1}dz\\
&\lesssim  \frac{1}{\sigma_t}\epsilon^{\frac{C^2}{2}}\cdot(\log\epsilon^{-1})^{\frac{d_x+1}{2}}\\
&\lesssim \frac{\epsilon}{\sigma_t},
\end{aligned}
\]
where we used $z=\frac{r}{\sigma_t}$ and chose $C$ sufficiently large. Similarly,
\[
\begin{aligned}
\int_{D_t(\boldsymbol{x})\ge C\sigma_t\sqrt{\log\epsilon^{-1}}}p_t(\boldsymbol{x}|\boldsymbol{w})d\boldsymbol{x} \lesssim \sigma_t\int_{C\sqrt{\log\epsilon^{-1}}}^{\infty}e^{-z^2/2}z^{d_x-1}dz
\lesssim \sigma_t\epsilon.
\end{aligned}
\]
On $D_t(\boldsymbol{x})\le C\sigma_t\sqrt{\log\epsilon^{-1}}$, Lemma~\ref{lem:derivatives-bound} gives
\[
\|\nabla\log p_t(\boldsymbol{x}|\boldsymbol{w})\|^2\lesssim \frac{D_t(\boldsymbol{x})^2}{\sigma_t^4}\lesssim\frac{\log\epsilon^{-1}}{\sigma_t^2}.
\]
Therefore,
\[
\begin{aligned}
&\int_{D_t(\boldsymbol{x})\le C\sigma_t\sqrt{\log\epsilon^{-1}}}p_t(\boldsymbol{x}|\boldsymbol{w})\mathbbm{1}_{\{p_t(\boldsymbol{x}|\boldsymbol{w})\le\epsilon_p\}}\|\nabla\log p_t(\boldsymbol{x}|\boldsymbol{w})\|^2d\boldsymbol{x}\\
&\lesssim \frac{\epsilon_p\log\epsilon^{-1}}{\sigma_t^2}\left(\mu_t+2C\sigma_t\sqrt{\log\epsilon^{-1}}\right)^{d_x}
\lesssim \frac{\epsilon_p}{\sigma_t^2}(\log\epsilon^{-1})^{\frac{d_x+2}{2}}.
\end{aligned}
\]
Similarly,
\[
\begin{aligned}
\int_{D_t(\boldsymbol{x})\le C\sigma_t\sqrt{\log\epsilon^{-1}}}p_t(\boldsymbol{x}|\boldsymbol{w})\mathbbm{1}_{\{p_t(\boldsymbol{x}|\boldsymbol{w})\le\epsilon_p\}}d\boldsymbol{x}\lesssim \epsilon_p\left(\mu_t+2C\sigma_t\sqrt{\log\epsilon^{-1}}\right)^{d_x}
\lesssim \epsilon_p(\log\epsilon^{-1})^{\frac{d_x}{2}}.
\end{aligned}
\]
This completes the proof.
\end{proof}

\subsection{Approximating of $p(\boldsymbol{x}|\boldsymbol{w})$ via Local Polynomials}\label{sec:approxmate-p}

\begin{lemma}[Approximating $p(\boldsymbol{x}|\boldsymbol{w})$ via local polynomials]\label{lem:density-holder-local-polynomials}
Suppose that Assumption \ref{apt:diffusion-density-holder} holds. Let $N \gg 1$, 
there exists $g(\boldsymbol{x}|\boldsymbol{w})$ such that
$$ \big|p(\boldsymbol{x}|\boldsymbol{w}) - g(\boldsymbol{x}|\boldsymbol{w})\big| \lesssim N^{-\alpha},~ \boldsymbol{x}\in [0,1]^{d_x},~\boldsymbol{w}\in [0,1]^{d_w}. $$
\end{lemma}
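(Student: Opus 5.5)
We will build $g$ as a piecewise Taylor polynomial on a uniform grid of the joint domain $[0,1]^{d_x+d_w}$. Write $\boldsymbol v:=(\boldsymbol x,\boldsymbol w)\in[0,1]^{d_x+d_w}$ and $\alpha=p+q$ with $p\in\mathbb N_0$, $q\in(0,1]$, as in Assumption~\ref{apt:diffusion-density-holder}. Partition $[0,1]^{d_x+d_w}$ into $N^{d_x+d_w}$ half-open cubes $I_{\boldsymbol j}$ of side length $1/N$, indexed by $\boldsymbol j\in\{0,\ldots,N-1\}^{d_x+d_w}$. Let $\boldsymbol v_{\boldsymbol j}$ be the corner of $I_{\boldsymbol j}$ closest to the origin. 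Define
\[
g(\boldsymbol x|\boldsymbol w):=\sum_{\boldsymbol j}\mathbbm 1_{I_{\boldsymbol j}}(\boldsymbol v)\sum_{\|\boldsymbol\xi\|_1\le p}\frac{\partial^{\boldsymbol\xi}p(\boldsymbol v_{\boldsymbol j})}{\boldsymbol\xi!}(\boldsymbol v-\boldsymbol v_{\boldsymbol j})^{\boldsymbol\xi}.
\]
Here $\partial^{\boldsymbol\xi}p$ is taken jointly in $(\boldsymbol x,\boldsymbol w)$, and the derivatives exist up to order $p$ by the H\"older assumption. The cubes must be closed on the top boundary so that the point $\boldsymbol v=\boldsymbol 1$ is also covered. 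The degree-$p$ Taylor polynomial and the uniform grid of side $1/N$ are fixed by the statement, so the remaining work is the error bound.

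For the error, fix $\boldsymbol v\in I_{\boldsymbol j}$. Apply Taylor's theorem with the Lagrange (mean value) form of the remainder at order $p$. This gives a point $\boldsymbol v^\ast$ on the segment $[\boldsymbol v_{\boldsymbol j},\boldsymbol v]$, which lies in the cube by convexity, such that
\[
p(\boldsymbol v)-g(\boldsymbol v)=\sum_{\|\boldsymbol\xi\|_1=p}\frac{\partial^{\boldsymbol\xi}p(\boldsymbol v^\ast)-\partial^{\boldsymbol\xi}p(\boldsymbol v_{\boldsymbol j})}{\boldsymbol\xi!}(\boldsymbol v-\boldsymbol v_{\boldsymbol j})^{\boldsymbol\xi}.
\]
If $p=0$, this reduces to $p(\boldsymbol v)-p(\boldsymbol v_{\boldsymbol j})$.

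The H\"older condition on the top-order derivatives gives $|\partial^{\boldsymbol\xi}p(\boldsymbol v^\ast)-\partial^{\boldsymbol\xi}p(\boldsymbol v_{\boldsymbol j})|\le\Sigma\|\boldsymbol v^\ast-\boldsymbol v_{\boldsymbol j}\|^{q}\lesssim N^{-q}$. The monomial satisfies $|(\boldsymbol v-\boldsymbol v_{\boldsymbol j})^{\boldsymbol\xi}|\le N^{-p}$. The number of multi-indices with $\|\boldsymbol\xi\|_1=p$ depends only on $d_x+d_w$ and $\alpha$. Hence $|p-g|\lesssim\Sigma N^{-p-q}=N^{-\alpha}$ uniformly over $[0,1]^{d_x}\times[0,1]^{d_w}$, with a constant depending only on $d_x,d_w,\alpha,\Sigma$.

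The only delicate point is applying Taylor's theorem up to the boundary of $[0,1]^{d_x+d_w}$. Assumption~\ref{apt:diffusion-density-holder} is stated on the closed cube. Since the segment $[\boldsymbol v_{\boldsymbol j},\boldsymbol v]$ stays in the closed cube, the one-dimensional function $s\mapsto p(\boldsymbol v_{\boldsymbol j}+s(\boldsymbol v-\boldsymbol v_{\boldsymbol j}))$ is $C^p$ on $[0,1]$, with one-sided derivatives at the endpoints, so the Lagrange form still applies. If one prefers to avoid this, the integral form of the remainder gives the same bound.

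If a smooth (partition-of-unity) version of $g$ is needed later for the ReLU construction, replace the indicators by products of trapezoid bump functions in the style of Yarotsky. The same estimate applies to each overlapping cube, at the cost of a factor $2^{d_x+d_w}$ in the constant.
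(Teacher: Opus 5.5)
Your proof is correct and follows essentially the same route as the paper: a grid of mesh $1/N$, local degree-$p$ Taylor polynomials, the Lagrange form of the remainder, and the $q$-H\"older bound on the top-order derivatives, which together give $N^{-p}\cdot N^{-q}=N^{-\alpha}$. The only difference is cosmetic. The paper uses indicator cells only in $\boldsymbol{x}$ and piecewise-linear hat functions $\eta_{\boldsymbol{n},N}$ in $\boldsymbol{w}$, expanding around the nodes $(\boldsymbol{m}/N,\boldsymbol{n}/N)$. This is the smoothing you anticipate in your last paragraph, but applied only in $\boldsymbol{w}$, because the $\boldsymbol{x}$-indicators are later integrated against the Gaussian kernel.
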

\begin{proof}
We write $f(\boldsymbol{x},\boldsymbol{w}) := p(\boldsymbol{x}|\boldsymbol{w})$.
By Assumption \ref{apt:diffusion-density-holder}, we know that $\Vert f \Vert_{\mathcal{H}^{\alpha}([0,1]^{d_x}\times [0,1]^{d_w})} \leq 2^p\Sigma$.
For any
\begin{gather*} \boldsymbol{m} = (m_1,m_2,\cdots,m_{d_x})^{\top} \in [N]^{d_x} := \{1,\cdots,N\}^{d_x}, \boldsymbol{x} = (x_1,x_2,\cdots,x_{d_x})^{\top}\in[0,1]^{d_x},\\ \boldsymbol{n} = (n_1,n_2,\cdots,n_{d_w})^{\top} \in [N]^{d_w} := \{1,\cdots,N\}^{d_w}, \boldsymbol{w} = (w_1,w_2,\cdots,w_{d_w})^{\top}\in[0,1]^{d_w}, \end{gather*}
for $n\in[N]$, define the local weight function $\eta_{n,N}:[0,1]\to[0,1]$ by
\[ \eta_{1,N}(w):= \begin{cases} 1,&0\le w\le \frac{1}{N},\\ 2-Nw,&\frac{1}{N}<w\le\frac{2}{N},\\ 0,&\frac{2}{N}<w\le1, \end{cases} \]
\[ \eta_{n,N}(w):= \begin{cases} Nw-n+1,&\frac{n-1}{N}<w\le \frac{n}{N},\\ n+1-Nw,&\frac{n}{N}<w\le\frac{n+1}{N},\\ 0,&\text{otherwise}, \end{cases} ~~ 2\le n\le N-1, \]
and
\[ \eta_{N,N}(w):= \begin{cases} 0,&0\le w\le\frac{N-1}{N},\\ Nw-N+1,&\frac{N-1}{N}<w\le1. \end{cases} \]
For $\boldsymbol{n}\in[N]^{d_w}$, define $\eta_{\boldsymbol{n},N}(\boldsymbol{w}):=\prod_{i=1}^{d_w}\eta_{n_i,N}(w_i).$
We define
\begin{gather*}  \psi_{\boldsymbol{m},\boldsymbol{n}}(\boldsymbol{x},\boldsymbol{w}) :=\mathbbm{1}_{\big\{\boldsymbol{x}\in\left(\frac{\boldsymbol{m}-\boldsymbol{1}}{N},\frac{\boldsymbol{m}}{N}\right]\big\}} \eta_{\boldsymbol{n},N}(\boldsymbol{w}) =\prod_{i=1}^{d_x}\mathbbm{1}_{\big\{x_i\in\left(\frac{m_i-1}{N},\frac{m_i}{N}\right]\big\}} \eta_{\boldsymbol{n},N}(\boldsymbol{w}).  \end{gather*}
For $1\le i\le d_x$, the interval corresponding to $m_i=1$ is understood as $\left[0,\frac{1}{N}\right]$.
The functions $\{\eta_{n,N}\}_{n=1}^{N}$ form a partition of unity on $[0,1]$. Consequently, the functions
$\{\psi_{\boldsymbol{m},\boldsymbol{n}}\}_{\boldsymbol{m}\in[N]^{d_x},\boldsymbol{n}\in[N]^{d_w}}$
form a partition of unity on $[0,1]^{d_x}\times[0,1]^{d_w}$, that is,
$$ \sum_{\boldsymbol{m}\in[N]^{d_x},\boldsymbol{n}\in[N]^{d_w}} \psi_{\boldsymbol{m},\boldsymbol{n}}(\boldsymbol{x},\boldsymbol{w}) \equiv1,~~ \boldsymbol{x}\in[0,1]^{d_x},~\boldsymbol{w}\in[0,1]^{d_w}. $$
Denote
\begin{gather*} C_{\boldsymbol{m},\boldsymbol{n},\boldsymbol{u},\boldsymbol{v}} =\frac{1}{\boldsymbol{u}!\boldsymbol{v}!} \partial^{\boldsymbol{u}}_{\boldsymbol{x}}\partial^{\boldsymbol{v}}_{\boldsymbol{w}} f\Big(\frac{\boldsymbol{m}}{N},\frac{\boldsymbol{n}}{N}\Big),\\ 
g_{\boldsymbol{m},\boldsymbol{n},\boldsymbol{u},\boldsymbol{v}}(\boldsymbol{x},\boldsymbol{w}) =\psi_{\boldsymbol{m},\boldsymbol{n}}(\boldsymbol{x},\boldsymbol{w}) \Big(\boldsymbol{x}-\frac{\boldsymbol{m}}{N}\Big)^{\boldsymbol{u}} \Big(\boldsymbol{w}-\frac{\boldsymbol{n}}{N}\Big)^{\boldsymbol{v}}, \\ 
G_{\boldsymbol{m},\boldsymbol{n}}(\boldsymbol{x},\boldsymbol{w}) =\sum_{\norm{\boldsymbol{u}}_1+\norm{\boldsymbol{v}}_1\le p} C_{\boldsymbol{m},\boldsymbol{n},\boldsymbol{u},\boldsymbol{v}} \Big(\boldsymbol{x}-\frac{\boldsymbol{m}}{N}\Big)^{\boldsymbol{u}} \Big(\boldsymbol{w}-\frac{\boldsymbol{n}}{N}\Big)^{\boldsymbol{v}}.  \end{gather*}
Then $g_{\boldsymbol{m},\boldsymbol{n},\boldsymbol{u},\boldsymbol{v}}(\boldsymbol{x},\boldsymbol{w})$ is supported on
$$  \left\{(\boldsymbol{x},\boldsymbol{w})\in[0,1]^{d_x}\times[0,1]^{d_w}: \boldsymbol{x}\in\left(\frac{\boldsymbol{m}-\boldsymbol{1}}{N},\frac{\boldsymbol{m}}{N}\right], \ \left|w_i-\frac{n_i}{N}\right|\le \frac{1}{N},~1\le i\le d_w\right\}, $$
where the first $\boldsymbol{x}$-cell is understood coordinatewise to contain the left endpoint $0$.
For $\boldsymbol{x}\in[0,1]^{d_x}$ and $\boldsymbol{w}\in[0,1]^{d_w}$, we define
$$ g(\boldsymbol{x},\boldsymbol{w}) :=\sum_{\boldsymbol{m}\in[N]^{d_x},\boldsymbol{n}\in[N]^{d_w}} \psi_{\boldsymbol{m},\boldsymbol{n}}(\boldsymbol{x},\boldsymbol{w}) G_{\boldsymbol{m},\boldsymbol{n}}(\boldsymbol{x},\boldsymbol{w}). $$
Then,
\begin{align*} \big|f(\boldsymbol{x},\boldsymbol{w})-g(\boldsymbol{x},\boldsymbol{w})\big| &=\left|\sum_{\boldsymbol{m},\boldsymbol{n}} \psi_{\boldsymbol{m},\boldsymbol{n}}(\boldsymbol{x},\boldsymbol{w}) f(\boldsymbol{x},\boldsymbol{w}) -\sum_{\boldsymbol{m},\boldsymbol{n}} \psi_{\boldsymbol{m},\boldsymbol{n}}(\boldsymbol{x},\boldsymbol{w}) G_{\boldsymbol{m},\boldsymbol{n}}(\boldsymbol{x},\boldsymbol{w})\right|\\ &\le\sum_{\boldsymbol{m},\boldsymbol{n}} \psi_{\boldsymbol{m},\boldsymbol{n}}(\boldsymbol{x},\boldsymbol{w}) \big|f(\boldsymbol{x},\boldsymbol{w}) -G_{\boldsymbol{m},\boldsymbol{n}}(\boldsymbol{x},\boldsymbol{w})\big|. \end{align*}
Using the multivariate Taylor formula, there exists $\theta\in[0,1]$ such that
\begin{align*} f(\boldsymbol{x},\boldsymbol{w}) ={}&\sum_{\norm{\boldsymbol{u}}_1+\norm{\boldsymbol{v}}_1<p} C_{\boldsymbol{m},\boldsymbol{n},\boldsymbol{u},\boldsymbol{v}} \Big(\boldsymbol{x}-\frac{\boldsymbol{m}}{N}\Big)^{\boldsymbol{u}} \Big(\boldsymbol{w}-\frac{\boldsymbol{n}}{N}\Big)^{\boldsymbol{v}}\\ &+\sum_{p(\boldsymbol{u},\boldsymbol{v})} \frac{1}{\boldsymbol{u}!\boldsymbol{v}!} \partial^{\boldsymbol{u}}_{\boldsymbol{x}}\partial^{\boldsymbol{v}}_{\boldsymbol{w}} f(\boldsymbol{x}^{\prime},\boldsymbol{w}^{\prime}) \Big(\boldsymbol{x}-\frac{\boldsymbol{m}}{N}\Big)^{\boldsymbol{u}} \Big(\boldsymbol{w}-\frac{\boldsymbol{n}}{N}\Big)^{\boldsymbol{v}}, \end{align*}
where
\begin{gather*} p(\boldsymbol{u},\boldsymbol{v}) :=\big(\norm{\boldsymbol{u}}_1+\norm{\boldsymbol{v}}_1=p\big),\\  \boldsymbol{x}^{\prime} =(1-\theta)\frac{\boldsymbol{m}}{N}+\theta\boldsymbol{x}, ~~ \boldsymbol{w}^{\prime} =(1-\theta)\frac{\boldsymbol{n}}{N}+\theta\boldsymbol{w}. 
\end{gather*}
Subtracting $G_{\boldsymbol{m},\boldsymbol{n}}$ from this expansion gives
\begin{align*} &\psi_{\boldsymbol{m},\boldsymbol{n}}(\boldsymbol{x},\boldsymbol{w}) \big|f(\boldsymbol{x},\boldsymbol{w}) -G_{\boldsymbol{m},\boldsymbol{n}}(\boldsymbol{x},\boldsymbol{w})\big|\\ &\le \psi_{\boldsymbol{m},\boldsymbol{n}}(\boldsymbol{x},\boldsymbol{w}) \sum_{p(\boldsymbol{u},\boldsymbol{v})} \frac{1}{\boldsymbol{u}!\boldsymbol{v}!} \left| \partial^{\boldsymbol{u}}_{\boldsymbol{x}}\partial^{\boldsymbol{v}}_{\boldsymbol{w}} f(\boldsymbol{x}^{\prime},\boldsymbol{w}^{\prime}) - \partial^{\boldsymbol{u}}_{\boldsymbol{x}}\partial^{\boldsymbol{v}}_{\boldsymbol{w}} f\Big(\frac{\boldsymbol{m}}{N},\frac{\boldsymbol{n}}{N}\Big) \right|\\ &\hspace{3em}\times \Big|\boldsymbol{x}-\frac{\boldsymbol{m}}{N}\Big|^{\boldsymbol{u}} \Big|\boldsymbol{w}-\frac{\boldsymbol{n}}{N}\Big|^{\boldsymbol{v}}\\ &\le 2^p\Sigma\, \psi_{\boldsymbol{m},\boldsymbol{n}}(\boldsymbol{x},\boldsymbol{w}) \sum_{p(\boldsymbol{u},\boldsymbol{v})} \frac{1}{\boldsymbol{u}!\boldsymbol{v}!} \Big|\boldsymbol{x}-\frac{\boldsymbol{m}}{N}\Big|^{\boldsymbol{u}} \Big|\boldsymbol{w}-\frac{\boldsymbol{n}}{N}\Big|^{\boldsymbol{v}} \norm{\theta(\boldsymbol{x},\boldsymbol{w}) -\frac{\theta(\boldsymbol{m},\boldsymbol{n})}{N}}_{\infty}^{q}. \end{align*}
Whenever $\psi_{\boldsymbol{m},\boldsymbol{n}}(\boldsymbol{x},\boldsymbol{w})\neq0$, we have
$\left|x_i-\frac{m_i}{N}\right|\le \frac{1}{N}$ for $1\le i\le d_x$ and
$\left|w_i-\frac{n_i}{N}\right|\le \frac{1}{N}$ for $1\le i\le d_w$. Therefore,
\begin{align*} &\psi_{\boldsymbol{m},\boldsymbol{n}}(\boldsymbol{x},\boldsymbol{w}) \big|f(\boldsymbol{x},\boldsymbol{w}) -G_{\boldsymbol{m},\boldsymbol{n}}(\boldsymbol{x},\boldsymbol{w})\big|\\ &\le 2^p\Sigma\, \psi_{\boldsymbol{m},\boldsymbol{n}}(\boldsymbol{x},\boldsymbol{w}) \sum_{p(\boldsymbol{u},\boldsymbol{v})} \frac{1}{\boldsymbol{u}!\boldsymbol{v}!\, N^{\norm{\boldsymbol{u}}_1+\norm{\boldsymbol{v}}_1+q}}\\ &= 2^p\Sigma\, \psi_{\boldsymbol{m},\boldsymbol{n}}(\boldsymbol{x},\boldsymbol{w}) \frac{(d_x+d_w)^p}{p!\,N^{p+q}} \lesssim \psi_{\boldsymbol{m},\boldsymbol{n}}(\boldsymbol{x},\boldsymbol{w}) N^{-\alpha}. \end{align*}
It follows that
$$ \big|f(\boldsymbol{x},\boldsymbol{w})-g(\boldsymbol{x},\boldsymbol{w})\big| \lesssim \sum_{\boldsymbol{m},\boldsymbol{n}} \psi_{\boldsymbol{m},\boldsymbol{n}}(\boldsymbol{x},\boldsymbol{w}) N^{-\alpha} =N^{-\alpha}. $$
Finally, define
$$ g(\boldsymbol{x}|\boldsymbol{w}) =g(\boldsymbol{x},\boldsymbol{w}), ~~ \boldsymbol{x}\in[0,1]^{d_x},~ \boldsymbol{w}\in[0,1]^{d_w}. $$
Then
$$ \big|p(\boldsymbol{x}|\boldsymbol{w})-g(\boldsymbol{x}|\boldsymbol{w})\big| =\big|f(\boldsymbol{x},\boldsymbol{w})-g(\boldsymbol{x},\boldsymbol{w})\big| \lesssim N^{-\alpha}. $$
The proof is complete.
\end{proof}

By additionally incorporating   Assumption \ref{apt:diffusion-density-smooth}, we can derive the following results.
\begin{lemma}\label{lem:density-smooth-local-polynomials}
Suppose Assumptions \ref{apt:diffusion-density-bound}-\ref{apt:diffusion-density-smooth} hold. Let $N \gg 1$, 
there exists ${g(\boldsymbol{x}|\boldsymbol{w})}$ that satisfies
$$
\big|p(\boldsymbol{x}|\boldsymbol{w}) - g(\boldsymbol{x}|\boldsymbol{w})\big| \lesssim N^{-\alpha},~ \boldsymbol{x}\in [0,1]^{d_x},~\boldsymbol{w}\in [0,1]^{d_w}	,
$$
and
$$
\left|p(\boldsymbol{x}|\boldsymbol{w}) - g(\boldsymbol{x}|\boldsymbol{w})\right| \lesssim N^{-(3\alpha+ 2)}, ~ \boldsymbol{x} \in [0,1]^{d_x}\backslash [a_0, 1-a_0]^{d_x},~\boldsymbol{w}\in [0,1]^{d_w}.
$$
Moreover, $g(\boldsymbol{x}|\boldsymbol{w})$ has the following form
\begin{align*}
g(\boldsymbol{x}|\boldsymbol{w}) =&
\sum_{\boldsymbol{m},\boldsymbol{n}}~ \sum_{\norm{\boldsymbol{u}}_1+\norm{\boldsymbol{v}}_1< \alpha}C^{(0)}_{\boldsymbol{m},\boldsymbol{n},\boldsymbol{u},\boldsymbol{v}}g_{\boldsymbol{m},\boldsymbol{n},\boldsymbol{u},\boldsymbol{v}}(\boldsymbol{x},\boldsymbol{w}) \mathbbm{1}_{\{\boldsymbol{x}\in [a_0, 1-a_0]^{d_x}\}}\\
&+ \sum_{\boldsymbol{m},\boldsymbol{n}}~ \sum_{\norm{\boldsymbol{u}}_1+\norm{\boldsymbol{v}}_1< 3\alpha+2}C^{(1)}_{\boldsymbol{m},\boldsymbol{n},\boldsymbol{u},\boldsymbol{v}}g_{\boldsymbol{m},\boldsymbol{n},\boldsymbol{u},\boldsymbol{v}}(\boldsymbol{x},\boldsymbol{w})  \mathbbm{1}_{\{\boldsymbol{x}\in [0,1]^{d_x}\backslash [a_0, 1-a_0]^{d_x}\}},
\end{align*}
where $C^{(0)}_{\boldsymbol{m},\boldsymbol{n},\boldsymbol{u},\boldsymbol{v}}$ and $C^{(1)}_{\boldsymbol{m},\boldsymbol{n},\boldsymbol{u},\boldsymbol{v}}$ satisfy
$$
|C^{(0)}_{\boldsymbol{m},\boldsymbol{n},\boldsymbol{u},\boldsymbol{v}}| \lesssim \frac{1}{\boldsymbol{u}! \boldsymbol{v}!}, ~~ |C^{(1)}_{\boldsymbol{m},\boldsymbol{n},\boldsymbol{u},\boldsymbol{v}}| \lesssim \frac{1}{\boldsymbol{u}! \boldsymbol{v}!}.
$$
\end{lemma}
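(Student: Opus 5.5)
We build $g$ by running the construction of Lemma~\ref{lem:density-holder-local-polynomials} twice, once in the interior and once in the boundary layer, and gluing the two pieces with the indicator of $[a_0,1-a_0]^{d_x}$.

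\emph{Interior piece.} On $[a_0,1-a_0]^{d_x}\times[0,1]^{d_w}$ we reuse the partition of unity $\{\psi_{\boldsymbol m,\boldsymbol n}\}$ and the Taylor polynomials $G_{\boldsymbol m,\boldsymbol n}$ of order $p=\lceil\alpha\rceil-1$ built from $f(\boldsymbol x,\boldsymbol w)=p(\boldsymbol x|\boldsymbol w)$. We set $C^{(0)}_{\boldsymbol m,\boldsymbol n,\boldsymbol u,\boldsymbol v}=\partial_{\boldsymbol x}^{\boldsymbol u}\partial_{\boldsymbol w}^{\boldsymbol v}f(\boldsymbol m/N,\boldsymbol n/N)/(\boldsymbol u!\boldsymbol v!)$. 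The proof of Lemma~\ref{lem:density-holder-local-polynomials} then gives the pointwise error $\lesssim N^{-\alpha}$ there. Because all derivatives up to order $p$ are bounded by $\Sigma$ under Assumption~\ref{apt:diffusion-density-holder}, we get $|C^{(0)}|\lesssim 1/(\boldsymbol u!\boldsymbol v!)$.

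\emph{Boundary piece.} By Assumption~\ref{apt:diffusion-density-smooth}, the restriction of $f$ to $([0,1]^{d_x}\setminus[a_0,1-a_0]^{d_x})\times[0,1]^{d_w}$ has a Whitney extension $\tilde f\in\mathcal H^{3\alpha+2}([-1,1]^{d_x}\times[0,1]^{d_w},\Sigma_0)$.
\begin{itemize}
\item We apply the same local-polynomial construction to $\tilde f$ with smoothness index $3\alpha+2$, i.e. Taylor order $p'=\lceil 3\alpha+2\rceil-1$ at the grid points $(\boldsymbol m/N,\boldsymbol n/N)$.
\item We set $C^{(1)}=\partial^{\boldsymbol u}_{\boldsymbol x}\partial^{\boldsymbol v}_{\boldsymbol w}\tilde f(\boldsymbol m/N,\boldsymbol n/N)/(\boldsymbol u!\boldsymbol v!)$. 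These coefficients are bounded by $\Sigma_0/(\boldsymbol u!\boldsymbol v!)$.
\item Repeating the Taylor-remainder computation of Lemma~\ref{lem:density-holder-local-polynomials} with $\alpha$ replaced by $3\alpha+2$ gives $|\tilde f-g|\lesssim N^{-(3\alpha+2)}$ on the boundary layer.
\item Since $\tilde f=f$ there, this is the second claimed bound. As $N^{-(3\alpha+2)}\le N^{-\alpha}$, the first bound also holds on the layer.
\end{itemize}

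\emph{Main obstacle.} The delicate point is that the cells $\bigl(\frac{\boldsymbol m-\boldsymbol 1}{N},\frac{\boldsymbol m}{N}\bigr]$ straddle the interface $\partial[a_0,1-a_0]^{d_x}$. Two issues arise there:
\begin{itemize}
\item A grid center $\boldsymbol m/N$ may lie in the interior region even though $\boldsymbol x$ lies in the layer, or the reverse.
\item For a boundary-layer point, the segment joining $\boldsymbol x$ and $\boldsymbol m/N$ used in Taylor's formula may leave the set on which $\tilde f=f$.
\end{itemize}
This is why the boundary piece is expanded in $\tilde f$ rather than $f$. $\tilde f$ is defined and $\mathcal H^{3\alpha+2}$ on the full cube $[-1,1]^{d_x}\times[0,1]^{d_w}$, so the Taylor remainder is valid along any segment. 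We compare with $f$ only at the evaluation point $\boldsymbol x$, which lies in the layer.

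The interior piece has the mirror issue: its segments may enter the layer. This is harmless because $f$ itself is $\mathcal H^\alpha$ on all of $[0,1]^{d_x+d_w}$.

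Hence the gluing by indicators is exact, and each piece is simply a finite linear combination of the $g_{\boldsymbol m,\boldsymbol n,\boldsymbol u,\boldsymbol v}$ multiplied by the corresponding indicator. This gives the stated form of $g(\boldsymbol x|\boldsymbol w)$. The partition-of-unity identity, applied separately on each region, turns the cellwise errors into the uniform bounds.
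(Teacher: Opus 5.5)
Your proposal is correct and matches the paper's proof. The paper likewise builds $g^{(0)}$ by applying Lemma~\ref{lem:density-holder-local-polynomials} to $p$, builds $g^{(1)}$ by applying the same construction globally on $[0,1]^{d_x}\times[0,1]^{d_w}$ to the Whitney extension $g'\in\mathcal H^{3\alpha+2}$ (comparing with $p$ only at points of the boundary layer, exactly as you do to sidestep Taylor segments crossing the interface), and glues the two pieces with the indicator of $[a_0,1-a_0]^{d_x}$.
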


\begin{proof}
Since $p(\boldsymbol{x}|\boldsymbol{w})\in\mathcal{H}^{\alpha}([0,1]^{d_x}\times [0,1]^{d_w}, \Sigma)$, by Lemma \ref{lem:density-holder-local-polynomials}, we can construct
$$
g^{(0)}(\boldsymbol{x},\boldsymbol{w}) =  \sum_{\boldsymbol{m},\boldsymbol{n}}~ \sum_{\norm{\boldsymbol{u}}_1+\norm{\boldsymbol{v}}_1< \alpha}C^{(0)}_{\boldsymbol{m},\boldsymbol{n},\boldsymbol{u},\boldsymbol{v}}g_{\boldsymbol{m},\boldsymbol{n},\boldsymbol{u},\boldsymbol{v}}\big(\boldsymbol{x},\boldsymbol{w}\big) \mathbbm{1}_{\{\boldsymbol{x}\in [0,1]^{d_x}\}}
$$
such that
$$
\left|p(\boldsymbol{x}|\boldsymbol{w}) - g^{(0)}(\boldsymbol{x},\boldsymbol{w})\right| \lesssim N^{-\alpha}, ~ \boldsymbol{x} \in [0,1]^{d_x},~\boldsymbol{w}\in [0,1]^{d_w}, 
$$
where 
$$|C^{(0)}_{\boldsymbol{m},\boldsymbol{n},\boldsymbol{u},\boldsymbol{v}}| \leq \frac{2^{\alpha}\Sigma}{\boldsymbol{u}!\boldsymbol{v}!} \lesssim \frac{1}{\boldsymbol{u}!\boldsymbol{v}!}.
$$
{
By Assumption~\ref{apt:diffusion-density-smooth}, there exists a function $g^{\prime}\in\mathcal{H}^{3\alpha+2}([-1,1]^{d_x}\times[0,1]^{d_w},\Sigma_0)$ such that $g^{\prime}(\boldsymbol{x},\boldsymbol{w})=p(\boldsymbol{x}|\boldsymbol{w})$ on $\big([0,1]^{d_x}\backslash[a_0,1-a_0]^{d_x}\big)\times[0,1]^{d_w}$. Hence,
\[
\left|g^{\prime}(\boldsymbol{x},\boldsymbol{w})-p(\boldsymbol{x}|\boldsymbol{w})\right|=0\lesssim N^{-(3\alpha+2)}
\]
on $\big([0,1]^{d_x}\backslash[a_0,1-a_0]^{d_x}\big)\times[0,1]^{d_w}$.
}
By using Lemma \ref{lem:density-holder-local-polynomials} and replacing $p(\boldsymbol{x}|\boldsymbol{w}) $ with $g^{\prime}(\boldsymbol{x},\boldsymbol{w}) $, we obtain 
$$
g^{(1)}(\boldsymbol{x},\boldsymbol{w}) =  \sum_{\boldsymbol{m},\boldsymbol{n}}~ \sum_{\norm{\boldsymbol{u}}_1+\norm{\boldsymbol{v}}_1< 3\alpha+2}C^{(1)}_{\boldsymbol{m},\boldsymbol{n},\boldsymbol{u},\boldsymbol{v}}g_{\boldsymbol{m},\boldsymbol{n},\boldsymbol{u},\boldsymbol{v}}\big(\boldsymbol{x},\boldsymbol{w}\big) \mathbbm{1}_{\{\boldsymbol{x}\in [0,1]^{d_x}\}}
$$
such that
$$
\big|g^{\prime}(\boldsymbol{x},\boldsymbol{w})- g^{(1)}(\boldsymbol{x},\boldsymbol{w})\big| \lesssim N^{-(3\alpha + 2)}, ~ \boldsymbol{x} \in [0,1]^{d_x},~\boldsymbol{w}\in [0,1]^{d_w},
$$
where
$$|C^{(1)}_{\boldsymbol{m},\boldsymbol{n},\boldsymbol{u},\boldsymbol{v}}| \leq \frac{2^{3\alpha+2}\Sigma_0}{\boldsymbol{u}!\boldsymbol{v}!} \lesssim \frac{1}{\boldsymbol{u}!\boldsymbol{v}!}.
$$
Let $g(\boldsymbol{x}|\boldsymbol{w}) = g^{(0)}(\boldsymbol{x},\boldsymbol{w})\mathbbm{1}_{\{\boldsymbol{x}\in [a_0, 1-a_0]^{d_x}\}} + g^{(1)}(\boldsymbol{x},\boldsymbol{w})\mathbbm{1}_{\{\boldsymbol{x}\in [0,1]^{d_x}\backslash [a_0, 1-a_0]^{d_x}\}}$, that is,
\begin{align*}
g(\boldsymbol{x}|\boldsymbol{w}) =&
\sum_{\boldsymbol{m},\boldsymbol{n}}~ \sum_{\norm{\boldsymbol{u}}_1+\norm{\boldsymbol{v}}_1< \alpha}C^{(0)}_{\boldsymbol{m},\boldsymbol{n},\boldsymbol{u},\boldsymbol{v}}g_{\boldsymbol{m},\boldsymbol{n},\boldsymbol{u},\boldsymbol{v}}(\boldsymbol{x},\boldsymbol{w}) \mathbbm{1}_{\{\boldsymbol{x}\in [a_0, 1-a_0]^{d_x}\}}\\
&+ \sum_{\boldsymbol{m},\boldsymbol{n}}~ \sum_{\norm{\boldsymbol{u}}_1+\norm{\boldsymbol{v}}_1< 3\alpha+2}C^{(1)}_{\boldsymbol{m},\boldsymbol{n},\boldsymbol{u},\boldsymbol{v}}g_{\boldsymbol{m},\boldsymbol{n},\boldsymbol{u},\boldsymbol{v}}(\boldsymbol{x},\boldsymbol{w})  \mathbbm{1}_{\{\boldsymbol{x}\in [0,1]^{d_x}\backslash [a_0, 1-a_0]^{d_x}\}}.
\end{align*}
Then, $g(\boldsymbol{x}|\boldsymbol{w})$ satisfies that
\begin{align*}
\big| p(\boldsymbol{x}|\boldsymbol{w}) - g(\boldsymbol{x}|\boldsymbol{w}) \big|\lesssim& \big| p(\boldsymbol{x}|\boldsymbol{w}) -g^{(0)}(\boldsymbol{x},\boldsymbol{w}) \big| \mathbbm{1}_{\{\boldsymbol{x}\in [a_0, 1-a_0]^{d_x}\}}\\
+& \Big (\big| p(\boldsymbol{x}|\boldsymbol{w}) -g^{\prime}(\boldsymbol{x},\boldsymbol{w}) \big|+\big| g^{\prime}(\boldsymbol{x},\boldsymbol{w})-g^{(1)}(\boldsymbol{x},\boldsymbol{w})\big| \Big)\mathbbm{1}_{\{\boldsymbol{x}\in [0,1]^{d_x}\backslash [a_0, 1-a_0]^{d_x}\}}\\
\lesssim &  N^{-\alpha}\mathbbm{1}_{\{\boldsymbol{x}\in [a_0, 1-a_0]^{d_x}\}}+N^{-(3\alpha+2)}\mathbbm{1}_{\{\boldsymbol{x}\in [0,1]^{d_x}\backslash [a_0, 1-a_0]^{d_x}\}},
\end{align*}
which implies that
$$
\left|p(\boldsymbol{x}|\boldsymbol{w}) - g(\boldsymbol{x}|\boldsymbol{w})\right| \lesssim N^{-\alpha} + N^{-(3\alpha + 2)} \lesssim N^{-\alpha}, ~ \boldsymbol{x} \in [0,1]^{d_x},~\boldsymbol{w}\in [0,1]^{d_w},
$$
and 
$$
\left|p(\boldsymbol{x}|\boldsymbol{w}) - g(\boldsymbol{x}|\boldsymbol{w})\right| \lesssim N^{-(3\alpha + 2)}, ~ \boldsymbol{x} \in [0,1]^{d_x} \backslash [a_0, 1-a_0]^{d_x},~\boldsymbol{w}\in [0,1]^{d_w}.
$$
The proof is complete.
\end{proof}

\subsection{Approximating of $p_t(\boldsymbol{x}|\boldsymbol{w})$ and $\sigma_t\nabla p_t({\boldsymbol{x}}|\boldsymbol{w})$ via Local Polynomial Integrals}\label{subsec:c2}
In this section, we approximate $p_t(\boldsymbol{x}|\boldsymbol{w})$ and $\sigma_t \nabla p_t(\boldsymbol{x}|\boldsymbol{w})$ by means of local polynomial integrals. Under Assumptions~\ref{apt:diffusion-density-bound}--\ref{apt:diffusion-density-smooth}, the expression for $g(\boldsymbol{x}|\boldsymbol{w})$ given in Lemma~\ref{lem:density-smooth-local-polynomials} can be reformulated as follows
\begin{align*}
g(\boldsymbol{x}|\boldsymbol{w})=&
\sum_{\boldsymbol{m},\boldsymbol{n}}~ \sum_{\norm{\boldsymbol{u}}_1+\norm{\boldsymbol{v}}_1< \alpha}C^{(0)}_{\boldsymbol{m},\boldsymbol{n},\boldsymbol{u},\boldsymbol{v}}g_{\boldsymbol{m},\boldsymbol{n},\boldsymbol{u},\boldsymbol{v}}(\boldsymbol{x},\boldsymbol{w}) \mathbbm{1}_{\{\boldsymbol{x}\in [a_0, 1-a_0]^{d_x}\}}\\
&+ \sum_{\boldsymbol{m},\boldsymbol{n}}~ \sum_{\norm{\boldsymbol{u}}_1+\norm{\boldsymbol{v}}_1< 3\alpha+2}C^{(1)}_{\boldsymbol{m},\boldsymbol{n},\boldsymbol{u},\boldsymbol{v}}g_{\boldsymbol{m},\boldsymbol{n},\boldsymbol{u},\boldsymbol{v}}(\boldsymbol{x},\boldsymbol{w})  \mathbbm{1}_{\{\boldsymbol{x}\in [0,1]^{d_x}\}}\\
&- \sum_{\boldsymbol{m},\boldsymbol{n}}~ \sum_{\norm{\boldsymbol{u}}_1+\norm{\boldsymbol{v}}_1< 3\alpha+2}C^{(1)}_{\boldsymbol{m},\boldsymbol{n},\boldsymbol{u},\boldsymbol{v}}g_{\boldsymbol{m},\boldsymbol{n},\boldsymbol{u},\boldsymbol{v}}(\boldsymbol{x},\boldsymbol{w})  \mathbbm{1}_{\{\boldsymbol{x}\in [a_0, 1-a_0]^{d_x}\}}.
\end{align*}
It can be formulated as the combination of the following functional forms
\begin{equation}\label{eq:diffusion-g-unit-express}
\sum_{\boldsymbol{m},\boldsymbol{n}}~ \sum_{\alpha(\boldsymbol{u},\boldsymbol{v})< C_\alpha}C_{\boldsymbol{m},\boldsymbol{n},\boldsymbol{u},\boldsymbol{v}}g_{\boldsymbol{m},\boldsymbol{n},\boldsymbol{u},\boldsymbol{v}}(\boldsymbol{x},\boldsymbol{w})\mathbbm{1}_{\{\boldsymbol{x}\in [C_{\beta},C_{\gamma}]^{d_x}\}},
\end{equation}
where 
\begin{gather*}
C_{\alpha} \in \{\alpha, 3\alpha + 2\}, ~C_{\beta}\in \{0,a_0\},~C_{\gamma}\ \in \{1-a_0, 1\},\\
\alpha(\boldsymbol{u},\boldsymbol{v})=\norm{\boldsymbol{u}}_1 + \norm{\boldsymbol{v}}_1,~ C_{\boldsymbol{m},\boldsymbol{n},\boldsymbol{u},\boldsymbol{v}}\in \{ C^{(0)}_{\boldsymbol{m},\boldsymbol{n},\boldsymbol{u},\boldsymbol{v}},C^{(1)}_{\boldsymbol{m},\boldsymbol{n},\boldsymbol{u},\boldsymbol{v}}\}.
\end{gather*}
By replacing $p$ with $g$, we denote 
\begin{equation}\label{eq:diffusion-local-poly-g1}
g_1(t,\boldsymbol{x},\boldsymbol{w}) := \int_{\mathbb{R}^{d_x}}g(\boldsymbol{z}|\boldsymbol{w} )\mathbbm{1}_{\{\boldsymbol{z}\in [0,1]^{d_x}\}}\cdot \frac{1}{\sigma_t^{d_x}(2\pi)^{\frac{d_x}{2}}} \exp\left(-\frac{\Vert{\boldsymbol{x}}-\mu_t \boldsymbol{z}\Vert ^2}{2\sigma_t^2}\right) d\boldsymbol{z}.
\end{equation}
By virtue of Lemma \ref{lem:density-holder-local-polynomials} or Lemma \ref{lem:density-smooth-local-polynomials}, the discrepancy between  $p_t({\boldsymbol{x}}|\boldsymbol{w})$ and $g_1(t,\boldsymbol{x},\boldsymbol{w}) $ admits the following bound
\begin{align*}
\big|  p_t({\boldsymbol{x}}|\boldsymbol{w}) -g_1(t,\boldsymbol{x},\boldsymbol{w})  \big| &\leq \int_{\mathbb{R}^{d_x}}\big|  p(\boldsymbol{z}|\boldsymbol{w})-g(\boldsymbol{z}|\boldsymbol{w}) \big|\cdot \frac{\mathbbm{1}_{\{\boldsymbol{z}\in [0,1]^{d_x}\}}}{\sigma_t^{d_x}(2\pi)^{\frac{d_x}{2}}} \exp\left(-\frac{\Vert{\boldsymbol{x}}-\mu_t \boldsymbol{z}\Vert ^2}{2\sigma_t^2}\right)  d\boldsymbol{z} \\
& \lesssim N^{-\alpha}\cdot \int_{\mathbb{R}^{d_x}} \frac{\mathbbm{1}_{\{\boldsymbol{z}\in [0,1]^{d_x}\}}}{\sigma_t^{d_x}(2\pi)^{\frac{d_x}{2}}} \exp\left(-\frac{\Vert{\boldsymbol{x}}-\mu_t \boldsymbol{z}\Vert ^2}{2\sigma_t^2}\right)  d\boldsymbol{z}\lesssim N^{-\alpha}.
\end{align*}
Therefore, it remains to approximate $g_1(t,\boldsymbol{x},\boldsymbol{w})$. By Lemma~\ref{lem:integral-clipping}, for any $\epsilon>0$, there exists a constant $C>0$ such that, for all $\boldsymbol{x}\in\mathbb{R}^{d_x}$,
$$
\left|p_t({\boldsymbol{x}}|\boldsymbol{w}) - \int_{A_{{\boldsymbol{x}}}} p(\boldsymbol{z}|\boldsymbol{w})\cdot\frac{\mathbbm{1}_{\{\boldsymbol{z}\in [0,1]^{d_x}\}}}{\sigma_t^{d_x}(2\pi)^{\frac{d_x}{2}}} \exp\left(-\frac{\Vert{\boldsymbol{x}}-\mu_t \boldsymbol{z}\Vert ^2}{2\sigma_t^2}\right)  d\boldsymbol{z}\right| \lesssim \epsilon,
$$
where $A_{{\boldsymbol{x}}} = \prod_{i=1}^{d_x}a_{i,{\boldsymbol{x}}}$ with $a_{i,{\boldsymbol{x}}} = \Big[\frac{x_i - C\sigma_t\sqrt{\log\epsilon^{-1}}}{\mu_t}, \frac{x_i + C\sigma_t\sqrt{\log\epsilon^{-1}}}{\mu_t}\Big]$. We denote
$$
g_2(t,{\boldsymbol{x}},\boldsymbol{w}):= \int_{A_{{\boldsymbol{x}}}}g(\boldsymbol{z}|\boldsymbol{w})\cdot\frac{\mathbbm{1}_{\{\boldsymbol{z}\in [0,1]^{d_x}\}}}{\sigma_t^{d_x}(2\pi)^{\frac{d_x}{2}}} \exp\left(-\frac{\Vert{\boldsymbol{x}}-\mu_t \boldsymbol{z}\Vert ^2}{2\sigma_t^2}\right)  d\boldsymbol{z}.
$$
Since $p(\boldsymbol{x}|\boldsymbol{w})$ is bounded, $g(\boldsymbol{z}|\boldsymbol{w})$ inherits this boundedness. Substituting $p(\boldsymbol{z}|\boldsymbol{w})$ with $g(\boldsymbol{z}|\boldsymbol{w})$ in Lemma \ref{lem:integral-clipping},
we obtain a bound for the gap between $g_1(t,{\boldsymbol{x}},\boldsymbol{w})$ and $g_2(t,{\boldsymbol{x}},\boldsymbol{w})$
$$
\big| g_1(t,{\boldsymbol{x}},\boldsymbol{w})-g_2(t,{\boldsymbol{x}},\boldsymbol{w}) \big| \lesssim \epsilon.
$$
Note that $g_2(t,{\boldsymbol{x}},\boldsymbol{w})$ involves an integral of the exponential function, which poses non-trivial analytical challenges. To mitigate this difficulty, we leverage polynomial approximations for the exponential function.
For any $1\leq i \leq d_x$ and $\boldsymbol{z}\in A_{{\boldsymbol{x}}}$, the inequality  $\frac{|x_i - \mu_t z_i|}{\sigma_t} \leq C\sqrt{\log\epsilon^{-1}}$ holds. Exploiting this bound, we invoke the Taylor expansion to derive the following result
$$
\left|
\exp\left(-\frac{(x_i -\mu_t z_i)^2}{2\sigma_t^2}\right) - \sum_{j=0}^{k-1}\frac{1}{j!}\left(-\frac{(x_i-\mu_t z_i)^2}{2\sigma_t^2}\right)^j
\right| \leq \frac{C^{2k}\log^k\epsilon^{-1}}{k!2^k}, ~ z_i\in[C_{l}(x_i), C_{u}(x_i)],
$$
where
$$
C_l(x_i) = \max\left\{ \frac{x_i - C\sigma_t\sqrt{\log\epsilon^{-1}}}{\mu_t}, -1\right\},
$$
and
$$
C_u(x_i) = \min\left\{ \frac{x_i + C\sigma_t\sqrt{\log\epsilon^{-1}}}{\mu_t}  , 1\right\}.
$$
By choosing $k \geq \frac{3}{2}C^2 u\log\epsilon^{-1}$ and applying the inequality $k! \geq (\frac{k}{3})^k$ when $k \geq 3$, we obtain
$$
\left|
\exp\left(-\frac{(x_i -\mu_t z_i)^2}{2\sigma_t^2}\right) - \sum_{j=0}^{k-1}\frac{1}{j!}\left(-\frac{(x_i-\mu_t z_i)^2}{2\sigma_t^2}\right)^j
\right| \leq \epsilon^{\frac{3}{2}C^2 u\log u}.
$$
Accordingly, we may set
$$
u = \max\left\{e, \frac{2}{3C^2}\left(1 + \frac{\log d_x}{\log \epsilon^{-1}}\right)\right\}
$$
such that
$$
\epsilon^{\frac{3}{2}C^2u\log u} \leq \frac{\epsilon}{d_x},
$$
where $k = \mathcal{O}\left(\log \epsilon^{-1}\right)$. 
By multiplying over the $d_x$ dimensions indexed by $i$, we have
\begin{align*}
& \left|
\exp\left(-\frac{\Vert{\boldsymbol{x}} - \mu_t \boldsymbol{z}\Vert^2}{2\sigma_t^2}\right) - \prod_{i=1}^{d_x}\sum_{j=0}^{k-1}\frac{1}{j!}\left(-\frac{(x_i-\mu_t z_i)^2}{2\sigma_t^2}\right)^j
\right| \\
& = \left|
\prod_{i=1}^{d_x} \exp\left(-\frac{(x_i - \mu_t z_i)^2}{2\sigma_t^2}\right) - \prod_{i=1}^{d_x}\sum_{j=0}^{k-1}\frac{1}{j!}\left(-\frac{(x_i-\mu_t z_i)^2}{2\sigma_t^2}\right)^j
\right| \\
& \leq \sum_{i=1}^{d_x} \left|
\exp\left(-\frac{(x_i -\mu_t z_i)^2}{2\sigma_t^2}\right) - \sum_{j=0}^{k-1}\frac{1}{j!}\left(-\frac{(x_i-\mu_t z_i)^2}{2\sigma_t^2}\right)^j
\right| \\
&~~ \cdot  \prod_{j=0,j\ne i}^{d_x} \max \left\{ \Big|\exp\Big(-\frac{(x_i -\mu_t z_i)^2}{2\sigma_t^2}\Big) \Big|,\Big | \sum_{j=0}^{k-1}\frac{1}{j!}\Big(-\frac{(x_i-\mu_t z_i)^2}{2\sigma_t^2}\Big)^j \Big|  \right\} \\
& \leq d_x\cdot \frac{\epsilon}{d_x} \cdot \big(1+\frac{\epsilon}{d_x}\big)^{d_x+1}\\
& \lesssim \epsilon.
\end{align*}
Therefore, we only need to approximate
$$
g_3(t,{\boldsymbol{x}},\boldsymbol{w}):= 
\int_{A_{{\boldsymbol{x}}}}g(\boldsymbol{z}|\boldsymbol{w})\mathbbm{1}_{\{\boldsymbol{z}\in [0,1]^{d_x}\}}\cdot
\prod_{i=1}^{d_x} \frac{\mathcal{M}^k_i(\boldsymbol{z})}{\sigma_t(2\pi)^{1/2}}
d\boldsymbol{z},
$$
where
$$
\mathcal{M}^k_i(\boldsymbol{z}): =  \sum_{j=0}^{k-1}\frac{1}{j!}\left(-\frac{(x_i-\mu_t z_i)^2}{2\sigma_t^2}\right)^j.
$$
Observe that the difference between $g_2(t,{\boldsymbol{x}},\boldsymbol{w})$ and $g_3(t,{\boldsymbol{x}},\boldsymbol{w})$ can be bounded as follows
$$
\begin{aligned}
\big|g_2(t,{\boldsymbol{x}},\boldsymbol{w}) - g_3(t,{\boldsymbol{x}},\boldsymbol{w})\big| &\lesssim \epsilon \cdot \frac{1}{\sigma_t^{d_x}(2\pi)^{\frac{d_x}{2}}} \int_{A_{{\boldsymbol{x}}}} g(\boldsymbol{z}|\boldsymbol{w})\mathbbm{1}_{\{\boldsymbol{z}\in [0,1]^{d_x}\}}  d\boldsymbol{z} \\
& \lesssim \epsilon \cdot \frac{1}{\sigma_t^{d_x}(2\pi)^{\frac{d_x}{2}}} \int_{A_{{\boldsymbol{x}}}} \left(p(\boldsymbol{z}|\boldsymbol{w}) + {N}^{-\alpha}\right)\mathbbm{1}_{\{\boldsymbol{z}\in [0,1]^{d_x}\}}  d\boldsymbol{z} \\
& \lesssim \epsilon \cdot (B_u + N^{-\alpha}) \frac{\left(2C\sigma_t\sqrt{\log\epsilon^{-1}}\right)^{d_x}}{\mu_t ^{d_x}\sigma^{d_x}_t} \\
& \lesssim \epsilon \cdot  \frac{\log^{\frac{d_x}{2}} \epsilon^{-1}}{\mu_t^{d_x}}.
\end{aligned}
$$
Additionally, we derive a second bound
$$
\begin{aligned}
\big|g_2(t,{\boldsymbol{x}},\boldsymbol{w}) - g_3(t,{\boldsymbol{x}},\boldsymbol{w})\big| &\lesssim \epsilon \cdot \frac{1}{\sigma_t^{d_x}(2\pi)^{\frac{d_x}{2}}} \int_{A_{{\boldsymbol{x}}}} g(\boldsymbol{z}|\boldsymbol{w})\mathbbm{1}_{\{\boldsymbol{z}\in [0,1]^{d_x}\}}  d\boldsymbol{z} \\
& \lesssim \epsilon \cdot \frac{1}{\sigma_t^{d_x}(2\pi)^{\frac{d_x}{2}}} \int_{A_{{\boldsymbol{x}}}} \left(p(\boldsymbol{z}|\boldsymbol{w}) + {N}^{-\alpha}\right)\mathbbm{1}_{\{\boldsymbol{z}\in [0,1]^{d_x}\}}  d\boldsymbol{z} \\
& \lesssim \epsilon \cdot \frac{B_u + N^{-\alpha}}{\sigma_t^{d_x}(2\pi)^{\frac{d_x}{2}}} \\
& \lesssim \epsilon \cdot  \frac{\log^{\frac{d_x}{2}} \epsilon^{-1}}{\sigma_t^{d_x}}.
\end{aligned}
$$
Combining the two estimates and using the identity $\sigma_t^2+\mu_t^2=1$, we conclude that
$$
\big|g_2(t,{\boldsymbol{x}},\boldsymbol{w}) - g_3(t,{\boldsymbol{x}},\boldsymbol{w})\big|\lesssim \epsilon \, \log^{\frac{d_x}{2}} \epsilon^{-1}.
$$
This shows that it suffices to approximate $g_3(t,{\boldsymbol{x}},\boldsymbol{w})$ to sufficiently high accuracy using a ReLU neural network. Note that $g(\boldsymbol{z}|\boldsymbol{w})$ can be expressed as the combination of \eqref{eq:diffusion-g-unit-express}. Thus, $g_3(t,{\boldsymbol{x}},\boldsymbol{w})$ can be expressed as the combination of the following functional forms	
\begin{equation} \label{eq:diffusion-local-poly-g3}
\begin{aligned}
&g_3(t,{\boldsymbol{x}},\boldsymbol{w})= 
\int_{A_{{\boldsymbol{x}}}}g(\boldsymbol{z}|\boldsymbol{w})\mathbbm{1}_{\{\boldsymbol{z}\in [0,1]^{d_x}\}}\cdot \prod_{i=1}^{d_x}
\frac{\mathcal{M}^k_i(\boldsymbol{z})}{\sigma_t(2\pi)^{1/2}}
d\boldsymbol{z}\\
&=\sum_{\boldsymbol{m},\boldsymbol{n}}~ \sum_{\alpha(\boldsymbol{u},\boldsymbol{v})< C_\alpha}C_{\boldsymbol{m},\boldsymbol{n},\boldsymbol{u},\boldsymbol{v}}
\int_{A_{{\boldsymbol{x}}}}
g_{\boldsymbol{m},\boldsymbol{n},\boldsymbol{u},\boldsymbol{v}}(\boldsymbol{z},\boldsymbol{w})\mathbbm{1}_{\{\boldsymbol{z}\in [C_{\beta},C_{\gamma}]^{d_x}\}}  
\prod_{i=1}^{d_x} \frac{\mathcal{M}^k_i(\boldsymbol{z})}{\sigma_t(2\pi)^{1/2}}
d\boldsymbol{z} \\
&= \sum_{\boldsymbol{m},\boldsymbol{n}}~ \sum_{\alpha(\boldsymbol{u},\boldsymbol{v})< C_\alpha}C_{\boldsymbol{m},\boldsymbol{n},\boldsymbol{u},\boldsymbol{v}}
\Psi_{\boldsymbol{m},\boldsymbol{n},\boldsymbol{u},\boldsymbol{v}}(t,
\boldsymbol{x},\boldsymbol{w}),
\end{aligned}
\end{equation}
In this representation, the $\boldsymbol{z}$-cell indicator is incorporated into the integration domains $D_{i,m}$, whereas the factor $\eta_{\boldsymbol{n},N}(\boldsymbol{w})$ remains outside the integral. Here
\begin{gather*}
\Psi_{\boldsymbol{m},\boldsymbol{n},\boldsymbol{u},\boldsymbol{v}}(t,
\boldsymbol{x},\boldsymbol{w}):= \eta_{\boldsymbol{n},N}(\boldsymbol{w})\prod_{i=1}^{d_w} \Big(w_i-\frac{n_i}{N}\Big)^{v_i}\cdot \prod_{i=1}^{d_x}   \sum_{j=0}^{k-1}\psi(t,x_i,m_i,u_i,j), \\
\psi(t,r,m,u,j):=\frac{1}{\sigma_t(2\pi)^{1/2}}\int_{D_{i,m}}\mathbbm{1}_{\{z\in[C_\beta,C_\gamma]\}}\left(z-\frac{m}{N}\right)^u\frac{1}{j!}\left(-\frac{(r-\mu_tz)^2}{2\sigma_t^2}\right)^jdz\\
D_{i,m}:=a_{i,x}\cap\Big(\frac{m-1}{N},\frac{m}{N}\Big].
\end{gather*}
Similarly, we can define $\boldsymbol{h}_1(t,\boldsymbol{x},\boldsymbol{w})$ to approximate $\sigma_t\nabla p_t({\boldsymbol{x}}|\boldsymbol{w})$, where
$$
\boldsymbol{h}_1(t,{\boldsymbol{x}},\boldsymbol{w}) :=  \int_{\mathbb{R}^{d_x}}g(\boldsymbol{z}|\boldsymbol{w})\mathbbm{1}_{\{\boldsymbol{z}\in [0,1]^{d_x}\}}\cdot
\frac{\mu_t \boldsymbol{z}-\boldsymbol{x}}{\sigma_t^{d_x+1}(2\pi)^{\frac{d_x}{2}}} \exp\left(-\frac{\Vert{\boldsymbol{x}}-\mu_t \boldsymbol{z}\Vert ^2}{2\sigma_t^2}\right)  d\boldsymbol{z}.
$$
The difference between $\boldsymbol{h}_1(t,{\boldsymbol{x}},\boldsymbol{w})$ and $\sigma_t\nabla p_t({\boldsymbol{x}}|\boldsymbol{w})$ can be bounded as
$$
\norm{\boldsymbol{h}_1(t,{\boldsymbol{x}},\boldsymbol{w})-\sigma_t\nabla p_t({\boldsymbol{x}}|\boldsymbol{w})} \lesssim N^{-\alpha}.
$$
We can also define $\boldsymbol{h}_2(t,{\boldsymbol{x}},\boldsymbol{w})$ and $\boldsymbol{h}_3(t,{\boldsymbol{x}},\boldsymbol{w})$ as follows
\begin{align*}
\boldsymbol{h}_2(t,{\boldsymbol{x}},\boldsymbol{w}) :&=  \int_{A_{{\boldsymbol{x}}}}g(\boldsymbol{z}|\boldsymbol{w})\mathbbm{1}_{\{\boldsymbol{z}\in [0,1]^{d_x}\}}\cdot
\frac{\mu_t \boldsymbol{z}-\boldsymbol{x}}{\sigma_t^{d_x+1}(2\pi)^{\frac{d_x}{2}}} \exp\left(-\frac{\Vert{\boldsymbol{x}}-\mu_t \boldsymbol{z}\Vert ^2}{2\sigma_t^2}\right)  d\boldsymbol{z}, \\
\boldsymbol{h}_3(t,{\boldsymbol{x}},\boldsymbol{w}):&= \int_{A_{{\boldsymbol{x}}}}g(\boldsymbol{z}|\boldsymbol{w})\mathbbm{1}_{\{\boldsymbol{z}\in [0,1]^{d_x}\}}
\cdot 
\frac{\mu_t \boldsymbol{z}-\boldsymbol{x}}{\sigma_t^{d_x+1}(2\pi)^{\frac{d_x}{2}}} \prod_{i=1}^{d_x}
\mathcal{M}^k_i(\boldsymbol{z})
d\boldsymbol{z}.
\end{align*}
Then,  we have
$$
\norm{\boldsymbol{h}_1(t,{\boldsymbol{x}},\boldsymbol{w}) -\boldsymbol{h}_2(t,{\boldsymbol{x}},\boldsymbol{w}) } \lesssim \epsilon,
$$
and
$$
\norm{\boldsymbol{h}_2(t,{\boldsymbol{x}},\boldsymbol{w}) -\boldsymbol{h}_3(t,{\boldsymbol{x}},\boldsymbol{w}) }  \lesssim \epsilon \cdot \log^{\frac{d_x+1}{2}}\epsilon^{-1}.
$$
The $\ell$-th element of $\boldsymbol{h}_3(t,{\boldsymbol{x}},\boldsymbol{w})$ can be expressed as the combination of the following functional forms.
\begin{equation}\label{eq:diffusion-local-poly-int-h3} \begin{aligned} h_{3,\ell}(t,\boldsymbol{x},\boldsymbol{w})=&\sum_{\boldsymbol{m},\boldsymbol{n}}\sum_{\alpha(\boldsymbol{u},\boldsymbol{v})<C_\alpha}C_{\boldsymbol{m},\boldsymbol{n},\boldsymbol{u},\boldsymbol{v}}\eta_{\boldsymbol{n},N}(\boldsymbol{w})\bigg(\prod_{i=1}^{d_w}\left(w_i-\frac{n_i}{N}\right)^{v_i}\bigg)\\ &\times\bigg(\prod_{i\neq\ell}^{d_x}\sum_{j=0}^{k-1}\psi(t,x_i,m_i,u_i,j)\bigg)\bigg(\sum_{j=0}^{k-1}\psi^{\nabla}(t,x_\ell,m_\ell,u_\ell,j)\bigg). \end{aligned} \end{equation}
where
\begin{align*} \psi(t,x,m,u,j)&:=\frac{1}{\sigma_t(2\pi)^{\frac{1}{2}}}\int_{D_{i,m}}\mathbbm{1}_{\{z\in[C_\beta,C_\gamma]\}}\left(z-\frac{m}{N}\right)^u\frac{1}{j!}\left(-\frac{(x-\mu_tz)^2}{2\sigma_t^2}\right)^jdz,\\ \psi^{\nabla}(t,x,m,u,j)&:=\frac{1}{\sigma_t(2\pi)^{\frac{1}{2}}}\int_{D_{\ell,m}}\mathbbm{1}_{\{z\in[C_\beta,C_\gamma]\}}\left(z-\frac{m}{N}\right)^u\frac{\mu_tz-x}{\sigma_t}\frac{1}{j!}\left(-\frac{(x-\mu_tz)^2}{2\sigma_t^2}\right)^jdz. \end{align*}

\subsection{Approximating of Local Polynomial Integrals via ReLU Neural Networks}\label{subsec:c3}
In this section, we employ ReLU neural networks to approximate the local polynomial integrals we have derived. Specifically, we focus on the approximation of the quantity in \eqref{eq:diffusion-local-poly-g3} under the condition that $\Vert{\boldsymbol{x}}\Vert_{\infty} \leq C_0$, where $C_0 > 0$ is a fixed constant.

{
Before constructing a ReLU approximation of $\psi(t,x,m,u,j)$, we first approximate the coefficients $\mu_t=e^{-2t}$ and $\sigma_t^2=1-e^{-4t}$ appearing in its representation.

\begin{lemma}[Approximating the OU Coefficients]\label{lem:relu-ou-coefficients}
Let $T\geq1$, $a\in\{2,4\}$, and $0<\eta<1/2$. There exists a ReLU network $E_{a,T,\eta}:\mathbb{R}\rightarrow[0,1]$ such that
\[
\sup_{t\in[0,T]}\left|E_{a,T,\eta}(t)-e^{-at}\right|\leq\eta.
\]
Its parameters satisfy
\begin{gather*}
\cD=\mathcal{O}\left((T+\log\eta^{-1})^2\right),~~\cW=\mathcal{O}\left((T+\log\eta^{-1})^2\right),\\
\cS=\mathcal{O}\left((T+\log\eta^{-1})^3\right),~~\cB=\exp\left(\mathcal{O}(T+\log\eta^{-1})\right).
\end{gather*}
\end{lemma}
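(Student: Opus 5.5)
We approximate $e^{-at}$ on $[0,T]$ by combining a truncated Taylor expansion on a short interval with repeated squaring. We commit to the following construction.

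\textbf{Step 1: reduce to a bounded range.} We first note that for $t\ge t^\ast:=a^{-1}\log(2\eta^{-1})$ we have $e^{-at}\le\eta/2$. The network therefore only needs accuracy on $[0,t^\ast\wedge T]$, and it can output something in $[0,\eta/2]$ beyond that point. We enforce this with the clipping $t\mapsto\min\{t,t^\ast\}=t-\sigma(t-t^\ast)$, which costs one layer. Since $e^{-at^\ast}$ is tiny, the monotone extension is automatic. Thus it suffices to approximate $e^{-s}$ for $s\in[0,S]$ with $S:=a(T\wedge t^\ast)\lesssim T+\log\eta^{-1}$.

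\textbf{Step 2: squaring trick.} Choose $L=\lceil\log_2 S\rceil+1$ and write $e^{-s}=\bigl(e^{-s/2^L}\bigr)^{2^L}$, where $s/2^L\in[0,1/2]$.

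On $[0,1/2]$ we approximate $e^{-r}$ by its Taylor polynomial of degree $k=\mathcal O(\log\eta^{-1})$, with error $\eta'$. Each monomial is realized by the standard Yarotsky product network; alternatively we evaluate the polynomial by a Horner scheme of $k$ approximate multiplications, each accurate to $\eta'$. Since all intermediate quantities lie in $[0,1]$, this uses depth and width $\mathcal O(k\log\eta'^{-1})$.

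We then apply $L$ approximate squarings $y\mapsto y^2$ via the sawtooth construction, each with error $\eta''$ and clipped to $[0,1]$ so the output stays in $[0,1]$. The clipping is why the network maps into $[0,1]$.

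\textbf{Step 3: error propagation (the main obstacle).} Squaring on $[0,1]$ is $2$-Lipschitz, so naively the errors amplify by a factor $2^L\approx S$. We avoid a blow-up as follows. If $y=e^{-u}$, then $|(y+\delta)^2-y^2|\le 2y|\delta|+\delta^2$. Summing the errors gives a total of at most $2^L(\eta'+L\eta'')\lesssim S(\eta'+L\eta'')$. We therefore take $\eta',\eta''=\eta/(cSL)$, so that $\log(\eta')^{-1},\log(\eta'')^{-1}=\mathcal O(\log\eta^{-1}+\log T)$.

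\textbf{Step 4: sizes.} The squaring stage has depth $\mathcal O(L\log\eta''^{-1})$ and the polynomial stage has depth $\mathcal O(k\log\eta'^{-1})$. Both are $\mathcal O((T+\log\eta^{-1})^2)$, with width of the same order, and the size is bounded by depth times width squared, hence $\mathcal O((T+\log\eta^{-1})^3)$ after accounting for sparsity in the compositions. The largest weights come from the scaling $2^{-L}$, the clip threshold $t^\ast$, and the Taylor coefficients $1/j!$ or their inverses within Horner. All of these are at most $\exp(\mathcal O(T+\log\eta^{-1}))$, which gives the stated bound on $\cB$.
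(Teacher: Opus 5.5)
Your argument is correct, but it takes a genuinely different route from the paper. The paper does no range reduction and no squaring. It expands $e^{-at}$ in a single Taylor polynomial $P_M(t)=\sum_{q=0}^{M}\frac{(-aT)^q}{q!}(t/T)^q$ of degree $M=\lceil\max\{2eaT,\log_2(2/\eta)\}\rceil$ over all of $[0,T]$. It builds each monomial $(t/T)^q$ in parallel from Lemma~\ref{lem:relu-product} with accuracy $\delta=\eta e^{-aT}/2$; this choice ensures that the coefficients, whose absolute values sum to at most $e^{aT}$, amplify the monomial errors to at most $\eta/2$. It then adds the monomials in an affine output layer and clips to $[0,1]$. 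This reuses the paper's product, parallelization and clipping lemmas verbatim. Its costs are $\log\delta^{-1}=O(T+\log\eta^{-1})$, width $O(M^2)$, size $O(M^2\log\delta^{-1})$ and output weights up to $e^{aT}$, which are exactly the stated bounds and are affordable downstream because $T=C_\lambda\log N$.

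Your clamp at $t^\ast$ followed by scaling and squaring keeps all intermediate values bounded, so the error is amplified only by $2^L\le 4S$; the plain recursion $e_{j+1}\le 2e_j+\eta''$ already gives $e_L\le 2^L(\eta'+\eta'')$. You therefore obtain far more than the lemma asks: width $O(1)$, depth and size $O(\log^2\eta^{-1})$, weights $O(\log\eta^{-1})$, and no dependence on $T$, since $S\le at^\ast=\log(2/\eta)$. The price is a deep sequential network and an error-propagation argument through $L$ compositions.

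Three details should be tightened. First, Horner intermediates are not in $[0,1]$; they are bounded by an absolute constant, which is all the product network needs. Second, do not put $j!$ into the weights. Use the nested form $1-r\bigl(1-\tfrac{r}{2}(1-\tfrac{r}{3}(\cdots))\bigr)$ or the coefficients $(-1)^j/j!$ directly, because $k!$ with $k\asymp\log\eta^{-1}$ is $\exp\bigl(\Theta(\log\eta^{-1}\log\log\eta^{-1})\bigr)$ and would violate the stated $\mathcal B$ for fixed $T$. Third, the size bound should come from counting a chain of $O(k+L)$ constant-width blocks of size $O(\log\eta'^{-1})$ each, not from \emph{depth times width squared}, which would overshoot $(T+\log\eta^{-1})^3$.
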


\begin{proof}
Let $M:=\left\lceil\max\left\{2eaT,\log_2\!\left(\frac{2}{\eta}\right)\right\}\right\rceil,$ and $P_M(t):=\sum_{q=0}^{M}\frac{(-aT)^q}{q!}\left(\frac{t}{T}\right)^q.$
By Taylor's theorem and $(M+1)!\ge\left(\frac{M+1}{e}\right)^{M+1}$, we have
\[
\sup_{t\in[0,T]}|P_M(t)-e^{-at}|\le\frac{(aT)^{M+1}}{(M+1)!}\le\left(\frac{eaT}{M+1}\right)^{M+1}\le2^{-(M+1)}\le\frac{\eta}{2}.
\]
Set $\delta:=\frac{\eta e^{-aT}}{2}$. Following the convention above, put $\Pi_{0,\delta}(t):=1$ and $\Pi_{1,\delta}(t):=\frac{t}{T}$. For $2\le q\le M$, Lemma~\ref{lem:relu-product}, with $C=1$ and the shared input $\frac{t}{T}$, gives
\[
\Pi_{q,\delta}(t):=s_{{\rm prod},q}\!\Big(\underbrace{\frac{t}{T},\ldots,\frac{t}{T}}_{q\ \text{times}}\Big),~~ \sup_{t\in[0,T]}\left|\Pi_{q,\delta}(t)-\left(\frac{t}{T}\right)^q\right|\le\delta.
\]
By Lemma~\ref{lem:relu-parallelization}, 
\[
\widetilde P_M(t):=\sum_{q=0}^{M}\frac{(-aT)^q}{q!}\Pi_{q,\delta}(t),~~ \sup_{t\in[0,T]}|\widetilde P_M(t)-P_M(t)|\le\delta\sum_{q=0}^{M}\frac{(aT)^q}{q!}\le\frac{\eta}{2}.
\]
Define $E_{a,T,\eta}(t):=s_{\rm clip}(\widetilde P_M(t),0,1)$. Since $e^{-at}\in[0,1]$ and clipping is nonexpansive,
\[
\sup_{t\in[0,T]}|E_{a,T,\eta}(t)-e^{-at}|\le\eta.
\]
For $2\le q\le M$, Lemma~\ref{lem:relu-product} gives
\[
\mathcal D_q=\mathcal O\!\left(\log q\,[\log q+\log\delta^{-1}]\right),\quad \mathcal W_q=48q,\quad \mathcal S_q=\mathcal O\!\left(q[\log q+\log\delta^{-1}]\right),\quad \mathcal B_q=1.
\]
Because $M=\mathcal O(T+\log\eta^{-1})$ and $\log\delta^{-1}=\mathcal O(T+\log\eta^{-1})$, parallelization yields
\[
\mathcal D=\mathcal O\!\left((T+\log\eta^{-1})^2\right),~~ \mathcal W=\mathcal O\!\left((T+\log\eta^{-1})^2\right),~~ \mathcal S=\mathcal O\!\left((T+\log\eta^{-1})^3\right).
\]
Finally, $\max\limits_{0\le q\le M}\frac{(aT)^q}{q!}\le e^{aT}$, so the affine output layer and final clipping layer give $\mathcal B=\exp\left(\mathcal O\left(T+\log\eta^{-1}\right)\right)$.
\end{proof}

Using Lemma~\ref{lem:relu-ou-coefficients}, we now construct a ReLU neural network to approximate $\psi(t,x,m,u,j)$ over the domain $\|\boldsymbol{x}\|_{\infty}\leq C_0$.
}

\begin{lemma}\label{lem:relu-approximate-psi}
Given $N \gg 1$, $C_0 > 0$, for any $\epsilon_0 > 0$, $1\leq m \leq N$, $0\leq u < C_{\alpha}$, and $0\leq j \leq k - 1$, there exists a ReLU neural network $\psi_{\rm{R}} \in  \cG(\cD,\cW,\cS,\cB)$ with
$$
\begin{aligned}
&\cD = \mathcal{O}\left(\log^2\epsilon^{-1}_0 + \log^2C_0 + \log^2 N + \log^4 \epsilon^{-1} \right),\\ 
&\cW = \mathcal{O}\left(\log^3\epsilon^{-1}_0 + \log^3C_0 + \log^3 N + \log^6 \epsilon^{-1} \right), \\
&\cS = \mathcal{O}\left(\log^4\epsilon^{-1}_0 + \log^4C_0 + \log^4 N + \log^8 \epsilon^{-1} \right),\\ 
&\cB = \exp\left(\mathcal{O}\left(\log^2\epsilon^{-1}_0 + \log^2C_0 + \log^2 N + \log^4 \epsilon^{-1} \right)\right),
\end{aligned}
$$
such that 
$$
|\psi_{\rm{R}}(t,{x},m,u,j) - \psi (t,{x},m,u,j)| \lesssim \epsilon_0, ~ x\in[-C_0, C_0], ~ t\in[T_1, T-T_1].
$$
\end{lemma}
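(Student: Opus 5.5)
The plan is to compute $\psi(t,x,m,u,j)$ in closed form as a polynomial, and then approximate that polynomial by a ReLU network. The network is built from the OU-coefficient networks of Lemma~\ref{lem:relu-ou-coefficients}, the product network of Lemma~\ref{lem:relu-product}, clipping, and parallelization (Lemma~\ref{lem:relu-parallelization}).

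\emph{Step 1: closed form.} The integrand $(z-\frac{m}{N})^u\frac{1}{j!}\bigl(-\frac{(x-\mu_t z)^2}{2\sigma_t^2}\bigr)^j$ is a polynomial in $z$ of degree $u+2j$. Expanding it binomially gives
\[
\psi=\frac{1}{\sqrt{2\pi}\,j!\,(-2)^j}\sum_{a,b}c_{a,b}\,\frac{x^{2j-b}\mu_t^{b}}{\sigma_t^{2j+1}}\Bigl(\frac{m}{N}\Bigr)^{u-a}\,\frac{U^{a+b+1}-L^{a+b+1}}{a+b+1},
\]
where $c_{a,b}$ are signed binomial coefficients. The endpoints of the one-dimensional domain $D_{i,m}\cap[C_\beta,C_\gamma]$ are
\[
L=\max\Bigl\{\tfrac{x-C\sigma_t\sqrt{\log\epsilon^{-1}}}{\mu_t},\tfrac{m-1}{N},C_\beta\Bigr\},\qquad
U=\min\Bigl\{\tfrac{x+C\sigma_t\sqrt{\log\epsilon^{-1}}}{\mu_t},\tfrac{m}{N},C_\gamma\Bigr\},
\]
with $U$ replaced by $\max\{U,L\}$ to handle an empty interval. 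Since $u<C_\alpha=\mathcal O(1)$ and $j\le k-1=\mathcal O(\log\epsilon^{-1})$, the sum has $\mathcal O(\log^2\epsilon^{-1})$ monomials. Each monomial is a product of at most $\mathcal O(\log\epsilon^{-1})$ factors drawn from $x$, $\mu_t$, $\sigma_t^{-1}$, $L$, $U$ and constants. A cleaner route avoids $\sigma_t^{-(2j+1)}$ by substituting $y=(x-\mu_t z)/\sigma_t$. Then
\[
\psi=\frac{1}{\mu_t\sqrt{2\pi}}\int_{y_U}^{y_L}\Bigl(\frac{x-\sigma_t y}{\mu_t}-\frac mN\Bigr)^u\frac{(-y^2/2)^j}{j!}\,dy,
\]
where $|y|\le C\sqrt{\log\epsilon^{-1}}$ on the domain. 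This keeps every intermediate quantity bounded by $\mathrm{poly}(C_0,\log\epsilon^{-1})$, apart from the factors $\mu_t^{-1},\mu_t^{-u}$. Those factors are at most $e^{2(u+1)T}=N^{\mathcal O(1)}$ because $T=C_\lambda\log N$.

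\emph{Step 2: approximate the building blocks.} First, $\mu_t=e^{-2t}$ and $e^{-4t}$ are approximated by Lemma~\ref{lem:relu-ou-coefficients}. Next, $\sigma_t=\sqrt{1-e^{-4t}}$ is approximated by a ReLU approximation of $\sqrt{\cdot}$ on $[1-e^{-4T_1},1]$. This interval is bounded away from $0$ by $\gtrsim T_1=N^{-C_\mu}$, so a piecewise-linear or Newton-type network of depth $\mathcal O(\log N+\log\epsilon_0^{-1})$ suffices. The reciprocals $1/\mu_t$ and $1/\sigma_t$ are handled the same way, as ReLU reciprocals on $[N^{-\mathcal O(1)},1]$ with polylog size. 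The max and min defining $L$ and $U$ (equivalently $y_L,y_U$) are exactly ReLU-representable. Finally, all monomials are produced with $s_{{\rm prod},q}$ from Lemma~\ref{lem:relu-product} at accuracy $\delta$, after rescaling the inputs to $[-1,1]$ by their bounds $\mathrm{poly}(C_0,N,\log\epsilon^{-1})$.

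\emph{Step 3: error propagation and sizes.} Let $B$ be the worst-case magnitude of coefficients and rescaled factors. It is controlled by $j!^{-1}(C^2\log\epsilon^{-1}/2)^j\le e^{\mathcal O(\log\epsilon^{-1})}$ and $N^{\mathcal O(1)}$. Errors in the inputs propagate multiplicatively, by at most $B\cdot(\#\text{factors})\cdot(\text{input error})$. Choosing $\delta$ and the $\eta$ of Lemma~\ref{lem:relu-ou-coefficients} as $\epsilon_0/B$ gives $\log\delta^{-1}=\mathcal O(\log\epsilon_0^{-1}+\log N+\log C_0+\log^2\epsilon^{-1})$. Products of $q=\mathcal O(\log\epsilon^{-1})$ factors then have depth $\mathcal O(\log q\cdot\log\delta^{-1})$. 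Summing $\mathcal O(\log^2\epsilon^{-1})$ such products in parallel produces the stated $\cD,\cW,\cS$ polylog orders and $\cB=\exp(\mathcal O(\cdot))$.

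\textbf{Main obstacle.} The main difficulty is Step 3: showing that the large factors $\mu_t^{-1}\le N^{\mathcal O(1)}$ and the $j$-dependent coefficients do not blow up the required accuracy beyond $\exp(\mathrm{polylog})$. A secondary difficulty is the non-smooth dependence on $t$ and $x$ through the clipped endpoints. I expect this to be benign, since the antiderivative is continuous in the endpoints, so approximating $L$ and $U$ to accuracy $\delta$ costs only a Lipschitz factor bounded by $B$.
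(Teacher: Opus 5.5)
Your proposal follows the paper's proof essentially step for step. It uses the same substitution $y=(x-\mu_t z)/\sigma_t$, turning $\psi$ into a binomial sum of terms $\sigma_t^l(x-m\mu_t/N)^{u-l}\mu_t^{-(u+1)}(D_U^{l+2j+1}-D_L^{l+2j+1})$ with clipped endpoints. These are approximated by the OU-coefficient, square-root, reciprocal, clipping and product networks, with an accuracy budget $\log\eta^{-1}=\mathcal{O}(\log\epsilon_0^{-1}+\log(C_0+2)+\log N+\log^2\epsilon^{-1})$ that absorbs $\mu_T^{-(u+1)}$ and $(C\sqrt{\log\epsilon^{-1}})^{l+2j+1}$.

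The only loose spot is the square root. A plain piecewise-linear interpolant of $\sqrt{\cdot}$ needs a number of pieces polynomial in the inverse accuracy, which would break the polylogarithmic size bound. You should instead invoke Lemma~\ref{lem:relu-square-root}, and Lemma~\ref{lem:relu-reciprocal} for $1/\mu_t$ and $1/\sigma_t$, as the paper does.
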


\begin{proof}
By the definition of $\psi (t,x,m,u,j)$, we take the transformation $y = \frac{x -\mu_t z}{\sigma_t}$, then
$$
\begin{aligned}
\psi (t,x,m,u,j) & =\frac{1}{\sigma_t(2\pi)^{1/2}} \int_{D_{i,m}} \mathbbm{1}_{\{z\in [C_\beta,C_\gamma]\}}
\left(z - \frac{m}{N}\right)^{u} 
\frac{1}{j!}\left(-\frac{(x-\mu_t z)^2}{2\sigma_t^2}\right)^j  dz\\
&= \frac{1}{ j!(2\pi )^{1/2}\mu_t}\int_{D}\left(\frac{x-\sigma_t  y }{\mu_t}- \frac{m}{N}\right)^{u} 
\left(-\frac{y^2}{2}\right)^j  dy \\
& = \frac{1}{j!(2\pi)^{1/2}\mu_t} \sum_{l=0}^{u} \binom{u}{l} \left(\frac{x}{\mu_t} - \frac{m}{N}\right)^{u-l} \int_D \Big(-\frac{\sigma_t y}{\mu_t}\Big)^l \left(-\frac{y^2}{2}\right)^j  dy  \\
&= \frac{1}{j!(2\pi)^{1/2}\mu_t} \sum_{l=0}^{u} \binom{u}{l}\left(\frac{x}{\mu_t} - \frac{m}{N}\right)^{u-l} (-1)^{l+j} \frac{\sigma_t^l}{2^{j}\mu_t^l}\int_D y^{l+2j} dy  \\
& = \frac{1}{j!(2\pi)^{1/2}}  \sum_{l=0}^{u} \frac{(-1)^{l+j}}{2^{j}} \binom{u}{l} \frac{(x-\frac{m\mu_t}{N})^{u-l}\sigma_t^l}{\mu_t^{u+1}}\cdot \frac{D_U^{l+ 2j + 1}(x) - D_L^{l + 2j + 1}(x)}{l + 2j + 1},
\end{aligned}
$$
where $D = [D_L(t,x), D_U(t,x)]$ with
$$
D_L(t,x) = \psi_{\mathrm{clip}}\left(\frac{\psi_{\mathrm{clip}}\left(x -\frac{m\mu_t}{N}, x-\mu_tC_{\gamma}, x - \mu_t C_{\beta} \right)}{\sigma_t}, -C\sqrt{\log\epsilon^{-1}}, C\sqrt{\log\epsilon^{-1}} \right),
$$
and 
$$
D_U(t,x) = \psi_{\mathrm{clip}}\left(\frac{\psi_{\mathrm{clip}}\left(x -\frac{(m-1)\mu_t}{N}, x-\mu_t C_{\gamma}, x - \mu_t C_{\beta} \right)}{\sigma_t}, -C\sqrt{\log\epsilon^{-1}}, C\sqrt{\log\epsilon^{-1}} \right).
$$
Thus, we only need to approximate the function of the form
$$
\psi_{m,l,j}(t, x) =\sigma_t^l \Big(x-\frac{m\mu_t}{N}\Big)^{u-l}\cdot \frac{D_U^{l+ 2j + 1}(x) - D_L^{l + 2j + 1}(x)}{\mu_t^{u+1}}.
$$
{
For a fixed accuracy $\eta>0$, Lemmas~\ref{lem:relu-ou-coefficients} and~\ref{lem:relu-clip} provide the ReLU networks
\[
\widehat{\mu}_t:=s_{\rm clip}\left(E_{2,T,\eta}(t),\mu_T,1\right),~~ \widehat{\sigma_t^2}:=s_{\rm clip}\left(1-E_{4,T,\eta}(t),\sigma_{T_1}^2,1\right),
\]
such that
\[
\sup_{t\in[T_1,T-T_1]}\left(\left|\widehat{\mu}_t-\mu_t\right|\vee\left|\widehat{\sigma_t^2}-\sigma_t^2\right|\right)\leq\eta.
\]
For $r\in\{m-1,m\}$, define $c_r:=\min\left\{C_\gamma,\max\left\{\frac{r}{N},C_\beta\right\}\right\}.$
Since $\mu_t>0$ and $\widehat{\mu}_t\geq\mu_T>0$, we have
\[
\psi_{\rm clip}\left(x-\frac{r\mu_t}{N},x-\mu_tC_\gamma,x-\mu_tC_\beta\right)=x-\mu_tc_r,
\]
and
\[
\psi_{\rm clip}\left(x-\frac{r\widehat{\mu}_t}{N},x-\widehat{\mu}_tC_\gamma,x-\widehat{\mu}_tC_\beta\right)=x-\widehat{\mu}_tc_r.
\]
Thus, the analytic definitions of $D_L$ and $D_U$ remain unchanged. In the network constructions below, their inner clipping terms are represented exactly by $x-\widehat{\mu}_tc_m$ and $x-\widehat{\mu}_tc_{m-1}$, respectively, while $\sigma_t^2$ is replaced by $\widehat{\sigma_t^2}$.
}
We first consider the approximation of $D_L^{l+2j+1}(t,x)$. We may choose a ReLU neural network
$$
\begin{aligned}
\psi_{m,j,l}^{(L)}&(t,x)  = \psi_{\mathrm{prod,D_1}}(\underbrace{\cdot, \cdots, \cdot}_{l+2j+1~ \text{times}}) \circ \psi_{\mathrm{clip}}(\cdot, -C\sqrt{\log\epsilon^{-1}}, C\sqrt{\log\epsilon^{-1}}) \\
\circ& \psi_{{\rm prod},D_2}\!\left(x-\widehat{\mu}_t c_m,\ \psi_{\rm rec}\!\big(\psi_{\rm root}(\widehat{\sigma_t^2})\big)\right).
\end{aligned}
$$
For \(D_U^{l+2j+1}(t,x)\), we use the following ReLU neural network for approximation
$$
\begin{aligned}
\psi_{m,j,l}^{(U)}& (t,x) = \psi_{\mathrm{prod,D_1}}(\underbrace{\cdot, \cdots, \cdot}_{l+2j+1~ \text{times}}) \circ \psi_{\mathrm{clip}}(\cdot, -C\sqrt{\log\epsilon^{-1}}, C\sqrt{\log\epsilon^{-1}}) \\
\circ& \psi_{{\rm prod},D_2}\!\left(x-\widehat{\mu}_t c_{m-1},\ \psi_{\rm rec}\!\big(\psi_{\rm root}(\widehat{\sigma_t^2})\big)\right).
\end{aligned}
$$
Thus, the difference \(D_U^{l+2j+1}(t,x) - D_L^{l+2j+1}(t,x)\) can be approximated by
$$
\psi_{m,j,l}^{(D)}(t,x) = \psi_{m,j,l}^{(U)}(t,x) - \psi_{m,j,l}^{(L)}(t,x).
$$
Next, we turn to the term \(\Big(x-\frac{m\mu_t}{N}\Big)^{u-l}\). Given that \(\left|x-\frac{m\mu_t}{N}\right| \leq C_0 + 1\), by Lemma \ref{lem:relu-product},
there exists a ReLU neural network
$$
\psi_{m,j,l}^{(N)}(t,x) = \psi_{\mathrm{prod,N}}(\underbrace{\cdot,\cdots,\cdot}_{u - l ~ \text{times}}) \circ \Big(x-\frac{m\widehat{\mu}_t}{N}\Big)
$$
that approximates $\left(x-\frac{m\mu_t}{N}\right)^{u-l}$. 
For \(\sigma_t^l\), we select
$$
\psi_{m,j,l}^{(\sigma_t)}(t) = \psi_{\mathrm{prod},\sigma_t}(\underbrace{\cdot,\cdots,\cdot}_{l~ \text{times}}) \circ \psi_{\mathrm{root}}\left(\widehat{\sigma_t^2}\right).
$$
For \(\frac{1}{\mu_t^{u+1}}\), the chosen is
$$
\psi_{m,j,l}^{(\mu_t)}(t) = \psi_{\mathrm{prod},\mu_t}(\underbrace{\cdot,\cdots,\cdot}_{u+1~ \text{times}}) \circ \psi_{\mathrm{rec}}\left(\widehat{\mu}_t\right).
$$
Finally, we can construct a ReLU neural network
$$
\psi_{m,j,l}^{(F)}(t,x) = \psi_{\mathrm{prod,F}}\Big(\psi_{m,j,l}^{(\sigma_t)}(t), ~\psi_{m,j,l}^{(N)}(t,x), ~\psi_{m,j,l}^{(D)}(t,x),~ \psi_{m,j,l}^{(\mu_t)}(t)\Big)
$$
to approximate $\psi_{m,l,j}(t, x)$.

Next, we derive the error bound between $\psi_{m,j,l}^{(F)}(t,x)$ and $\psi_{m,l,j}(t,x)$. For convenience, we introduce notation for approximation errors: Let $\epsilon^{(U)}, $ $\epsilon^{(L)},$ $ \epsilon^{(D)},$ $ \epsilon^{(\mu_t)},$ $ \epsilon^{(\sigma_t)}, $ $\epsilon^{(N)}, $ $\epsilon^{(F)}$ correspond to the errors of $\psi_{m,j,l}^{(U)},$ $ \psi_{m,j,l}^{(L)},$ $ \psi_{m,j,l}^{(D)},$ $ \psi_{m,j,l}^{(\mu_t)},$ $ \psi_{m,j,l}^{(\sigma_t)},$ $ \psi_{m,j,l}^{(N)}, $ $\psi_{m,j,l}^{(F)}$, respectively. Separately, let $\epsilon_{\mathrm{prod,D_1}},$ $\epsilon_{\mathrm{prod,D_2}},$ $ \epsilon_{\mathrm{prod,D}},$ $ \epsilon_{\mathrm{prod,N}},$ $ \epsilon_{\mathrm{prod,\sigma_t}},$ $ \epsilon_{\mathrm{prod,\mu_t}},$ $ \epsilon_{\mathrm{prod,F}}$ denote the approximation errors of $\psi_{\mathrm{prod,D_1}},$ $ \psi_{\mathrm{prod,D_2}},$ $ \psi_{\mathrm{prod,D}},$ $ \psi_{\mathrm{prod,N}},$ $ \psi_{\mathrm{prod,\sigma_t}},$ $ \psi_{\mathrm{prod,\mu_t}},$ $ \psi_{\mathrm{prod,F}}$, respectively.

Since $\left|x-\frac{m\mu_t}{N}\right|^{u-l} \leq \left(C_0 + 1\right)^{u-l}$, $|D_L(t,x)| \leq C\sqrt{\log\epsilon^{-1}}$, $|D_U(t,x)| \leq C\sqrt{\log\epsilon^{-1}}$, $\sigma_t^l\leq 1$ and $\frac{1}{\mu_t^{u+1}} \leq \frac{1}{\mu_{T}^{u+1}}$, we set
$$
C_1 = \max\left\{ \left(C_0 + 1\right)^{u-l}, 1, 2(C\sqrt{\log\epsilon^{-1}})^{l + 2j + 1}, \frac{1}{\mu_{T}^{u+1}}\right\}.
$$
By Lemma \ref{lem:relu-product}, we have
$$
|\psi_{m,j,l}^{(F)}(t, x) - \psi_{m,l,j}(t, x)| \leq \epsilon_{\mathrm{prod,F}} + 4C_1^3\cdot \max\{\epsilon^{(D)}, \epsilon^{(N)}, \epsilon^{(\sigma_t)},\epsilon^{(\mu_t)}\}.
$$
By Lemmas~\ref{lem:relu-reciprocal} and \ref{lem:relu-square-root},
we can bound $\epsilon^{(U)}$ and $\epsilon^{(L)}$. Since $C_\beta,C_\gamma\in[0,1]$ and the clipping function is $1$-Lipschitz, replacing $\mu_t$ by $\widehat{\mu}_t$ changes each inner clipped factor by at most $\eta$. Let $C_2 = \max\left\{C_0 + 1, \frac{1}{\sqrt{1-e^{-4T_1}}}\right\}$. Then
$$
\epsilon^{(L)} \leq \epsilon_{\mathrm{prod,D_1}} + (l + 2j + 1)(C\sqrt{\log\epsilon^{-1}})^{l+ 2j}\left[\epsilon_{\mathrm{prod,D_2}} + 2C_2\left(\eta+\epsilon_{\mathrm{rec}} + \frac{\epsilon_{\mathrm{root}}+\frac{\eta}{\sqrt{\epsilon_{\mathrm{root}}}}}{\epsilon_{\mathrm{rec}}^2}\right)\right]
$$
and
$$
\epsilon^{(U)} \leq \epsilon_{\mathrm{prod,D_1}} + (l + 2j  + 1)(C\sqrt{\log\epsilon^{-1}})^{l + 2j}\left[\epsilon_{\mathrm{prod,D_2}} + 2C_2\left(\eta+\epsilon_{\mathrm{rec}} + \frac{\epsilon_{\mathrm{root}}+\frac{\eta}{\sqrt{\epsilon_{\mathrm{root}}}}}{\epsilon_{\mathrm{rec}}^2}\right)\right].
$$
Since
$$
\epsilon^{(D)} \leq \epsilon^{(U)} + \epsilon^{(L)}, 
$$ 
we obtain
$$
\epsilon^{(D)} \leq 2\epsilon_{\mathrm{prod,D_1}} +2 (l + 2j  + 1)(C\sqrt{\log\epsilon^{-1}})^{l + 2j}\left[\epsilon_{\mathrm{prod,D_2}} + 2C_2\left(\eta+\epsilon_{\mathrm{rec}} + \frac{\epsilon_{\mathrm{root}}+\frac{\eta}{\sqrt{\epsilon_{\mathrm{root}}}}}{\epsilon_{\mathrm{rec}}^2}\right)\right].
$$
We  can  also bound $\epsilon^{(N)}, \epsilon^{(\sigma_t)}$ and $\epsilon^{(\mu_t)}$  as
\begin{gather*}
\epsilon^{(N)} \leq \epsilon_{\mathrm{prod,N}}+(u-l)(C_0+1)^{(u-l-1)_+}\eta,~ \epsilon^{(\sigma_t)} \leq \epsilon_{\mathrm{prod},\sigma_t} + l\left(\epsilon_{\mathrm{root}}+\frac{\eta}{\sqrt{\epsilon_{\mathrm{root}}}}\right),\\
\epsilon^{(\mu_t)} \leq \epsilon_{\mathrm{prod},\mu_t} + \frac{u+1}{\mu_T^{u}}\left(\epsilon_{\mathrm{rec}}+\frac{\eta}{\epsilon_{\mathrm{rec}}^2}\right).
\end{gather*}
To ensure that $\epsilon^{(F)} \leq \epsilon_0$, we take $\epsilon_{\mathrm{prod,F}} = \frac{\epsilon_0}{2}$ and $\max\{\epsilon^{(D)}, \epsilon^{(N)}, \epsilon^{(\mu_t)} , \epsilon^{(\sigma_t)}\} \leq \frac{\epsilon_0}{8C_1^3}=:\epsilon^*$. In detail, we take
\begin{gather*}
\epsilon_{\mathrm{prod,D_1}} = \frac{\epsilon^*}{4},~  \epsilon_{\mathrm{prod,D_2}} = \frac{\epsilon^*}{8(l + 2j + 1)(C\sqrt{\log\epsilon^{-1}})^{l + 2j}}, \\
\epsilon_{\mathrm{prod,N}}=\frac{\epsilon^*}{2}, ~ \epsilon_{\mathrm{prod,\mu_t}} = \frac{\epsilon^*}{2},~ \epsilon_{\mathrm{prod,\sigma_t}} = \frac{\epsilon^*}{2}.
\end{gather*}
Moreover, we take
\begin{gather*}
\epsilon_{\mathrm{rec}} =\min \left\{\mu_T,\sigma_{T_1},\frac{\epsilon^* \mu_T^{u}}{4(u+1)},\frac{\epsilon^*}{64C_2(l + 2j + 1)(C\sqrt{\log\epsilon^{-1}})^{l + 2j}}\right\},\\ \epsilon_{\mathrm{root}}=\min\left\{\sigma_{T_1}^2,\epsilon_{\mathrm{rec}}^3,\frac{\epsilon^*}{4(l\vee 1)}\right\}.
\end{gather*}
Finally, we choose
\[
\eta:=\min\left\{\frac14,\epsilon_{\rm root}^{3/2},\epsilon_{\rm rec}^3,\frac{\epsilon^*}{2\left[1+(u-l)(C_0+1)^{(u-l-1)_+}\right]}\right\}.
\]
These choices give
$$
\epsilon_{\mathrm{root}}+\frac{\eta}{\sqrt{\epsilon_{\mathrm{root}}}}\leq2\epsilon_{\mathrm{root}},~~ \epsilon_{\mathrm{rec}}+\frac{\eta}{\epsilon_{\mathrm{rec}}^2}\leq2\epsilon_{\mathrm{rec}},~~ \eta+\epsilon_{\mathrm{rec}}+\frac{\epsilon_{\mathrm{root}}+\frac{\eta}{\sqrt{\epsilon_{\mathrm{root}}}}}{\epsilon_{\mathrm{rec}}^2}\leq4\epsilon_{\mathrm{rec}}.
$$
Therefore, $\max\{\epsilon^{(D)}, \epsilon^{(N)},\epsilon^{(\sigma_t)},  \epsilon^{(\mu_t)}\} \leq \epsilon^*$. Note that $l \leq u \leq C_{\alpha} = \mathcal{O}(1)$ and $j \leq k - 1 = \mathcal{O}(\log \epsilon^{-1})$. Moreover, the choices above imply
$$
\log\eta^{-1}=\mathcal{O}\left(\log\epsilon_0^{-1}+\log(C_0+2)+\log N+\log^2\epsilon^{-1}\right).
$$
Since $T=C_\lambda\log N$, Lemma~\ref{lem:relu-ou-coefficients} shows that the additional networks $\widehat{\mu}_t$ and $\widehat{\sigma_t^2}$ are absorbed into the parameter bounds below. Subsequently, we can obtain the network structures of $\psi_{m,j,l}^{(\cdot)} $. We now detail the network complexity for each component.
For $\psi_{m,j,l}^{(U)}, \psi_{m,j,l}^{(L)}$,  and $\psi_{m,j,l}^{(D)}$, the parameters satisfy
\begin{align*}
&\cD = \mathcal{O}\left(\log^2\epsilon^{-1}_0 + \log^2 N + \log^2C_0 + \log^4 \epsilon^{-1}\right),\\ 
&\cW = \mathcal{O}\left(\log^3\epsilon^{-1}_0 + \log^3 N + \log^3C_0 + \log^6 \epsilon^{-1}\right), \\
&\cS = \mathcal{O}\left(\log^4\epsilon^{-1}_0 + \log^4C_0 + \log^4 N + \log^8 \epsilon^{-1}\right), \\ 
&\cB =\exp\left(\mathcal{O}\left(\log^2\epsilon^{-1}_0 + \log^2 N + \log^2C_0 + \log^4 \epsilon^{-1} \right)\right).
\end{align*}
Regarding $\psi_{m,j,l}^{(N)}$, the additional composition with $\widehat{\mu}_t$ gives
\begin{align*}
&\cD = \mathcal{O}\left(\log^2\epsilon^{-1}_0 + \log^2C_0 + \log^2 N + \log^4 \epsilon^{-1}\right),\\
&\cW = \mathcal{O}\left(\log^3\epsilon^{-1}_0 + \log^3C_0 + \log^3 N + \log^6 \epsilon^{-1}\right), \\
&\cS = \mathcal{O}\left(\log^4\epsilon^{-1}_0 + \log^4C_0 + \log^4 N + \log^8 \epsilon^{-1}\right),\\
&\cB =\exp\left(\mathcal{O}\left(\log^2\epsilon^{-1}_0 + \log^2C_0 + \log^2 N + \log^4 \epsilon^{-1}\right)\right).
\end{align*}
Similarly, for $\psi_{m,j,l}^{(\sigma_t)}$, we obtain
\begin{align*}
&\cD = \mathcal{O}\left(\log^2\epsilon^{-1}_0 + \log^2C_0 + \log^2 N + \log^4 \epsilon^{-1} \right),\\
&\cW = \mathcal{O}\left(\log^3\epsilon^{-1}_0 + \log^3C_0 + \log^3 N + \log^6 \epsilon^{-1} \right), \\
&\cS = \mathcal{O}\left(\log^4\epsilon^{-1}_0 + \log^4C_0 + \log^4 N + \log^8 \epsilon^{-1} \right),\\ &\cB =\exp\left(\mathcal{O}\left(\log\epsilon^{-1}_0 + \log C_0 + \log N + \log^2 \epsilon^{-1} \right)\right).
\end{align*}
For the component $\psi_{m,j,l}^{(\mu_t)}$, the complexity is
\begin{align*}
&\cD = \mathcal{O}\left(\log^2\epsilon^{-1}_0 + \log^2 N + \log^2C_0 + \log^4 \epsilon^{-1}\right),\\ &\cW = \mathcal{O}\left(\log^3\epsilon^{-1}_0 + \log^3 N + \log^3C_0 + \log^6 \epsilon^{-1}\right), \\
&\cS = \mathcal{O}\left(\log^4\epsilon^{-1}_0 + \log^4C_0 + \log^4 N + \log^8 \epsilon^{-1}\right), \\ &\cB =\exp\left(\mathcal{O}\left(\log^2\epsilon^{-1}_0 + \log^2 N + \log^2C_0 + \log^4 \epsilon^{-1} \right)\right).
\end{align*}
Therefore, by Lemma \ref{lem:relu-concatenation} and Lemma \ref{lem:relu-parallelization}, we finally derive the network structure of $\psi_{m,j,l}^{(F)}$ as follows
$$
\begin{aligned}
&\cD = \mathcal{O}\left(\log^2\epsilon^{-1}_0 + \log^2C_0 + \log^2 N + \log^4 \epsilon^{-1} \right), \\ 
&\cW = \mathcal{O}\left(\log^3\epsilon^{-1}_0 + \log^3C_0 + \log^3 N + \log^6 \epsilon^{-1} \right), \\
&\cS = \mathcal{O}\left(\log^4\epsilon^{-1}_0 + \log^4C_0 + \log^4 N + \log^8 \epsilon^{-1} \right),\\ &\cB =\exp\left(\mathcal{O}\left(\log^2\epsilon^{-1}_0 + \log^2C_0 + \log^2 N + \log^4 \epsilon^{-1} \right)\right).
\end{aligned}
$$
Now, we consider the following ReLU neural  network
\[
\psi_{\rm{R}}(t,x,m,u,j)
=
\frac{1}{j!(2\pi)^{1/2}}
\sum_{l=0}^{u}
\frac{(-1)^{l+j}}{2^{j}}
\binom{u}{l}
\frac{1}{l+2j+1}
\psi_{m,j,l}^{(F)}(t,x).
\]
Then, we have
$$
\begin{aligned}
\big|\psi_{\rm{R}}(t,x,m,u,j) -\psi (t,x,m,u,j) \big| & \leq \frac{1}{j!(2\pi)^{1/2}}\left(\sum_{l=0}^{u} \binom{u}{l} \frac{1}{2^{j}(l + 2j + 1)}
\right) \cdot \epsilon_0 \\
& \leq \frac{\epsilon_0}{j!2^{j-u}(2\pi)^{1/2}} \lesssim \epsilon_0.
\end{aligned}
$$
By Lemma \ref{lem:relu-parallelization}, the parameters of the network $\psi_{\rm{R}}$ satisfy
$$
\begin{aligned}
&\cD = \mathcal{O}\left(\log^2\epsilon^{-1}_0 + \log^2C_0 + \log^2 N + \log^4 \epsilon^{-1} \right),\\ &\cW = \mathcal{O}\left(\log^3\epsilon^{-1}_0 + \log^3C_0 + \log^3 N + \log^6 \epsilon^{-1} \right), \\
&\cS = \mathcal{O}\left(\log^4\epsilon^{-1}_0 + \log^4C_0 + \log^4 N + \log^8 \epsilon^{-1} \right), \\ 
&\cB =\exp\left(\mathcal{O}\left(\log^2\epsilon^{-1}_0 + \log^2C_0 + \log^2 N + \log^4 \epsilon^{-1} \right)\right).
\end{aligned}
$$
The proof is complete.
\end{proof}

With Lemma \ref{lem:relu-approximate-psi}, we can construct a ReLU neural network to approximate $\Psi_{\boldsymbol{m},\boldsymbol{n},\boldsymbol{u},\boldsymbol{v}}(t,
\boldsymbol{x},\boldsymbol{w}) $.
\begin{lemma}\label{lem:relu-approximate-Psi}
Given $N \gg 1$, $C_0 > 0$, for any $\epsilon_0 > 0$, there exists a ReLU neural network $\Psi_{\rm{R}} \in  \cG(\cD,\cW,\cS,\cB)$ with
\begin{align*}
&\cD = \mathcal{O}\left(\log^2\epsilon^{-1}_0 + \log^2C_0 + \log^2 N + \log^4 \epsilon^{-1} \right), \\
&\cW = \mathcal{O}\left(\log \epsilon^{-1}(\log^3\epsilon^{-1}_0 + \log^3C_0 + \log^3 N + \log^6 \epsilon^{-1}) \right),\\
&\cS = \mathcal{O}\left(\log \epsilon^{-1} (\log^4\epsilon^{-1}_0+ \log^4C_0 + \log^4 N  + \log^8 \epsilon^{-1} \right)), \\
&\cB =\exp\left(\mathcal{O}\left(\log^2\epsilon^{-1}_0 + \log^2C_0 + \log^2 N + \log^4 \epsilon^{-1} \right)\right),
\end{align*}
such that
$$
\left|\Psi_{\rm{R}}(t,\boldsymbol{x},\boldsymbol{w})- \Psi_{\boldsymbol{m},\boldsymbol{n},\boldsymbol{u},\boldsymbol{v}}(t,
\boldsymbol{x},\boldsymbol{w}) \right| \lesssim \epsilon_0,$$
which is given by
$$
\Psi_{\boldsymbol{m},\boldsymbol{n},\boldsymbol{u},\boldsymbol{v}}(t,
\boldsymbol{x},\boldsymbol{w})= \eta_{\boldsymbol{n},N}(w) \prod_{i=1}^{d_w} \Big(w_i-\frac{n_i}{N}\Big)^{v_i}\cdot \prod_{i=1}^{d_x}   \sum_{j=0}^{k-1}\psi(t,x_i,m_i,u_i,j),
$$
for $ {\boldsymbol{x}}\in[-C_0,C_0]^{d_x}, ~ \boldsymbol{w}\in [0,1]^{d_w},~t\in[T_1, T-T_1].$

\end{lemma}

\begin{proof}
By Lemma \ref{lem:relu-parallelization}, we first construct a ReLU neural network $\psi_{\mathrm{sum}}\in \cG(\mathcal{D},\mathcal{W},\mathcal{S},\mathcal{B})$, which  satisfies
$$
\psi_{\mathrm{sum}}(t,x,m,u) = \sum_{j=0}^{k-1}\psi_{\rm{R}}(t,x,m,u,j).
$$
Here, the network parameters are
\begin{align*}
&\mathcal{D} = \mathcal{O}\left(\log^2\epsilon^{-1}_{\rm{R}} + \log^2C_0 + \log^2 N + \log^4 \epsilon^{-1} \right), \\
&\mathcal{W} = \mathcal{O}\left(\log \epsilon^{-1}(\log^3\epsilon^{-1}_{\rm{R}} + \log^3C_0 + \log^3 N + \log^6 \epsilon^{-1})\right), \\
&\mathcal{S}= \mathcal{O}\left(\log \epsilon^{-1}(\log^4\epsilon^{-1}_{\rm{R}} + \log^4C_0 + \log^4 N + \log^8 \epsilon^{-1} \right)), \\
&\mathcal{B} = \exp\left(\mathcal{O}\left(\log^2\epsilon^{-1}_{\rm{R}} + \log^2C_0 + \log^2 N + \log^4 \epsilon^{-1} \right)\right),
\end{align*}
where $\epsilon_{\rm{R}}$ is the approximation error of $\psi_{\rm{R}}$. 

Then, for any $\boldsymbol{m}\in [N]^{d_x}$ and $\boldsymbol{u}\in\mathbb{N}^{d_x}, \Vert\boldsymbol{u}\Vert_1 \leq \alpha(\boldsymbol{u},\boldsymbol{v})< C_\alpha$, we can construct the following ReLU network
$$
\psi_{\boldsymbol{m}, \boldsymbol{u}}(t,\boldsymbol{x}) = \psi_{\mathrm{prod}}(\psi_{\mathrm{sum}}(t,x_1,m_1,u_1), \cdots, \psi_{\mathrm{sum}}(t,x_{d_x}, m_{d_x}, u_{d_x}))
$$
to approximate the term $\prod\limits_{i=1}^{d_x}\sum\limits_{j=0}^{k-1}\psi_{\rm{R}}(t,x_i,m_i,u_i,j)$.  The approximation error can be written as
$$
\epsilon_{\psi_{\boldsymbol{m}, \boldsymbol{u}}} \leq  \epsilon_{\mathrm{prod}} + d_x C_3^{d_x-1} \cdot k\epsilon_{\rm{R}},
$$
where
$$
C_3 = \max_{|x_i| \leq C_0, 1\leq i \leq d_x} \sum_{j = 0}^{k-1} |\psi(t,x_i,m_i,u_i,j)|.
$$
The term $|\psi(t,x_i,m_i,u_i,j)|$ ($1 \leq i \leq d_x$) can be bounded as follows
\begin{align*}
\big|\psi(t,x_i,m_i,u_i,j)\big| &=\frac{1}{j!(2\pi)^{1/2}}  \sum_{l=0}^{u_i} \frac{(-1)^{l+j}}{2^{j}} \binom{u_i}{l} \frac{(x_i-\frac{m_i\mu_t}{N})^{u_i-l}\sigma_t^l}{\mu_t^{u_i+1}}\cdot \frac{D_U^{l+ 2j + 1}(x_i) - D_L^{l + 2j + 1}(x_i)}{l + 2j + 1}\\
& \leq \frac{1}{2^{j}j!(2\pi)^{1/2}\mu_{T}^{u_i+1}}\sum_{l=0}^{u_i} \binom{u_i}{l}(C_0 + 1)^{u_i-l}(C\sqrt{\log\epsilon^{-1}})^{l + 2j + 1} \\
& \leq \frac{(C\sqrt{\log\epsilon^{-1}})^{u_i + 2(k-1) + 1}}{2^{j}j!(2\pi)^{1/2}\mu_{T}^{u_i+1}}     \sum_{l=0}^{u_i} \binom{u_i}{l}(C_0 + 3)^{u_i-l} \\
& \leq \frac{(C\sqrt{\log\epsilon^{-1}})^{u_i + 2k - 1}}{j! (2\pi)^{1/2}\mu_{T}^{u_i+1}} \cdot (C_0 + 4)^{u_i}.
\end{align*}
It implies that
\begin{align*}
\sum_{j=0}^{k-1} \big|\psi(t,x_i,m_i,u_i,j)\big|  & \leq \frac{(C\sqrt{\log\epsilon^{-1}})^{u_i + 2k - 1}}{(2\pi)^{1/2}\mu_{T}^{u_i+1}} \cdot (C_0 + 4)^{u_i} \sum_{j=0}^{k-1}\frac{1}{j!}  \notag \\
& \leq \frac{e(C\sqrt{\log\epsilon^{-1}})^{u_i + 2k - 1}}{(2\pi)^{1/2}\mu_{T}^{u_i+1}} \cdot (C_0 + 4)^{u_i}. 
\end{align*}
Taking
$$
\epsilon_{\mathrm{prod}} = \frac{\epsilon_1}{2}, ~ \epsilon_{\rm{R}} = \frac{\epsilon_1}{2d_xC_3^{d_x-1}k},
$$
we have $\epsilon_{\psi_{\boldsymbol{m},\boldsymbol{u}}} \leq \epsilon_1$. 
Note that $k = \mathcal{O}(\log \epsilon^{-1})$, therefore, the parameters of $\psi_{\mathrm{prod}}$ satisfy
\begin{align*}
& \mathcal{D} = \mathcal{O}\left(\log\epsilon_1^{-1} + \log C_0 + \log^2 \epsilon^{-1}\right), ~ \mathcal{W}= \mathcal{O}(1),\\
& \mathcal{S} = \mathcal{O}\left(\log\epsilon_1^{-1} + \log C_0 + \log^2 \epsilon^{-1}\right), ~ \mathcal{B} = \exp\left(\mathcal{O}(\log C_0 + \log^2 \epsilon^{-1})\right).
\end{align*}
Thus we obtain the network parameters of $\psi_{\boldsymbol{m},\boldsymbol{u}}$, which satisfy
\begin{align*}
&\mathcal{D} = \mathcal{O}\left(\log^2\epsilon^{-1}_1 + \log^2C_0 + \log^2 N + \log^4 \epsilon^{-1} \right), \\
&\mathcal{W} = \mathcal{O}\left(\log \epsilon^{-1}(\log^3\epsilon^{-1}_1 + \log^3C_0 + \log^3 N + \log^6 \epsilon^{-1}) \right),\\
&\mathcal{S} = \mathcal{O}\left(\log \epsilon^{-1} (\log^4\epsilon^{-1}_1+ \log^4C_0 + \log^4 N  + \log^8 \epsilon^{-1} \right)), \\
&\mathcal{B} = \exp\left(\mathcal{O}\left(\log^2\epsilon^{-1}_1 + \log^2C_0 + \log^2 N + \log^4 \epsilon^{-1} \right)\right).
\end{align*}
{
We now approximate the complete $\boldsymbol{w}$-dependent factor
\begin{equation}\label{eq:relu-w} \eta_{\boldsymbol{n},N}(\boldsymbol{w})\prod_{i=1}^{d_w}\left(w_i-\frac{n_i}{N}\right)^{v_i}. \end{equation}
For $w\in[0,1]$, the local weights introduced above have the exact ReLU representations
\[ \eta_{1,N}(w)=1-\operatorname{ReLU}(Nw-1)+\operatorname{ReLU}(Nw-2), \]
\[ \eta_{n,N}(w)=\operatorname{ReLU}(Nw-n+1)-2\operatorname{ReLU}(Nw-n)+\operatorname{ReLU}(Nw-n-1), 2\leq n\leq N-1, \]
and
\[ \eta_{N,N}(w)=\operatorname{ReLU}(Nw-N+1). \]
Hence, each $\eta_{n_i,N}(w_i)$ is represented exactly by a fixed-depth, fixed-width ReLU subnetwork whose weights are bounded by $\mathcal{O}(N)$.
}
For each $1\leq i\leq d_w$, by Lemma \ref{lem:relu-product}, there exists a ReLU neural network
\[ \psi(w_i,n_i,v_i)=\psi_{\mathrm{prod,w}}(\underbrace{\cdot,\ldots,\cdot}_{v_i~\mathrm{times}})\circ\left(w_i-\frac{n_i}{N}\right) \]
approximating $\left(w_i-\frac{n_i}{N}\right)^{v_i}$, with approximation error $\epsilon_{\psi_{n_i,v_i,w_i}}$. 
{
Parallelizing the exact subnetworks for $\eta_{n_i,N}(w_i)$, the networks $\psi(w_i,n_i,v_i)$, and one product network, we construct
\[ \psi_{\boldsymbol{n},\boldsymbol{v}}(\boldsymbol{w})=\psi_{\mathrm{prod,W}}\left(\eta_{n_1,N}(w_1),\ldots,\eta_{n_{d_w},N}(w_{d_w}),\psi(w_1,n_1,v_1),\ldots,\psi(w_{d_w},n_{d_w},v_{d_w})\right). \]
This network approximates the factor in \eqref{eq:relu-w}, and its error satisfies
\[ \epsilon_{\psi_{\boldsymbol{n},\boldsymbol{v}}}\leq\epsilon_{\mathrm{prod,w}}+d_wC_w^{2d_w-1}\max_{1\leq i\leq d_w}\epsilon_{\psi_{n_i,v_i,w_i}}, \]
where $C_w:=\max\{1,2^{C_\alpha}\}$. Taking
\[ \epsilon_{\mathrm{prod,w}}=\frac{\epsilon_1}{2},~~\max_{1\leq i\leq d_w}\epsilon_{\psi_{n_i,v_i,w_i}}=\frac{\epsilon_1}{2d_wC_w^{2d_w-1}}\]
gives $\epsilon_{\psi_{\boldsymbol{n},\boldsymbol{v}}}\leq\epsilon_1$. The parameters of $\psi_{\boldsymbol{n},\boldsymbol{v}}(\boldsymbol{w})$ satisfy
\[ \mathcal{D}=\mathcal{O}\left(\log\epsilon_1^{-1}+\log N\right),~~\mathcal{W}=\mathcal{O}(1),~~\mathcal{S}=\mathcal{O}\left(\log\epsilon_1^{-1}+\log N\right),~~\mathcal{B}=\exp\left(\mathcal{O}(\log N)\right). \]
The exact local-weight subnetworks add only constant depth and width, and their $\mathcal{O}(N)$ weights are absorbed by the displayed bound.
}
Finally, we can construct a ReLU neural network for the term
$$
\Psi_{\rm{R}}(t,\boldsymbol{x},\boldsymbol{w})=\Psi_{\rm{prod}} \Big( \psi_{\boldsymbol{n},\boldsymbol{v}}(\boldsymbol{w}) ,~ \psi_{\boldsymbol{m}, \boldsymbol{u}}(t,\boldsymbol{x})  \Big).
$$
The approximation error of $\Psi_{\rm{R}}(t,\boldsymbol{x},\boldsymbol{w})$ satisfies 
$$
\epsilon_{\Psi}\leq  \epsilon_{\rm{prod} , \Psi} +2C\max \{\epsilon_{\psi_{\boldsymbol{n}, \boldsymbol{v}}} , ~\epsilon_{\psi_{\boldsymbol{m}, \boldsymbol{u}}} \} \leq \epsilon_{\rm{prod} , \Psi} +2C\epsilon_1,
$$
where 
$$
C=\max \left\{\prod_{i=1}^{d_w} 2^{v_i},\prod_{i=1}^{d_x} \frac{e(C\sqrt{\log\epsilon^{-1}})^{u_i + 2k - 1}}{(2\pi)^{1/2}\mu_{T}^{u_i+1}} \cdot (C_0 + 4)^{u_i}\right\}.
$$
Taking $\epsilon_{\mathrm{prod},\Psi}=\frac{\epsilon_0}{2}$ and $\epsilon_1=\frac{\epsilon_0}{4C}$ gives $\epsilon_\Psi\leq\epsilon_0$. Because the factors $\eta_{n_i,N}$ are represented exactly, they introduce no additional approximation term. The resulting network parameters are
\begin{align*}
&\mathcal{D} = \mathcal{O}\left(\log^2\epsilon^{-1}_0 + \log^2C_0 + \log^2 N + \log^4 \epsilon^{-1} \right), \\
&\mathcal{W}= \mathcal{O}\left(\log \epsilon^{-1}(\log^3\epsilon^{-1}_0 + \log^3C_0 + \log^3 N + \log^6 \epsilon^{-1}) \right),\\
&\mathcal{S}= \mathcal{O}\left(\log \epsilon^{-1} (\log^4\epsilon^{-1}_0+ \log^4C_0 + \log^4 N  + \log^8 \epsilon^{-1} \right)), \\
&\mathcal{B} = \exp\left(\mathcal{O}\left(\log^2\epsilon^{-1}_0 + \log^2C_0 + \log^2 N + \log^4 \epsilon^{-1} \right)\right).
\end{align*}
This completes the proof.
\end{proof}

With Lemma \ref{lem:relu-approximate-Psi}, we can construct a ReLU neural network to approximate \eqref{eq:diffusion-local-poly-g3}.

\begin{lemma}\label{lem:relu-approximate-g3}
Given $N \gg 1$ and $C_0>0$, for any $\epsilon_0>0$, there exists a ReLU neural network $g_{\rm R}\in\cG(\mathcal D,\mathcal W,\mathcal S,\mathcal B)$ with
\begin{align*}
&\mathcal D=\mathcal O\left(\log^2\epsilon_{\Psi}^{-1}+\log^2C_0+\log^2N+\log^4\epsilon^{-1}\right),\\
&\mathcal W=\mathcal O\left(N^{d_x+d_w}\log\epsilon^{-1}\left(\log^3\epsilon_{\Psi}^{-1}+\log^3C_0+\log^3N+\log^6\epsilon^{-1}\right)\right),\\
&\mathcal S=\mathcal O\left(N^{d_x+d_w}\log\epsilon^{-1}\left(\log^4\epsilon_{\Psi}^{-1}+\log^4C_0+\log^4N+\log^8\epsilon^{-1}\right)\right),\\
&\mathcal B=\exp\left(\mathcal O\left(\log^2\epsilon_{\Psi}^{-1}+\log^2C_0+\log^2N+\log^4\epsilon^{-1}\right)\right),
\end{align*}
such that
\[
\left|g_{\rm R}(t,\boldsymbol{x},\boldsymbol{w})-g_3(t,\boldsymbol{x},\boldsymbol{w})\right|\lesssim\epsilon_0
\]
for $\boldsymbol{x}\in[-C_0,C_0]^{d_x}$, $\boldsymbol{w}\in[0,1]^{d_w}$, and $t\in[T_1,T-T_1]$.
\end{lemma}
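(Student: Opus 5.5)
The plan is to assemble $g_{\rm R}$ as a linear combination of the networks $\Psi_{\rm R}$ from Lemma~\ref{lem:relu-approximate-Psi}, following the representation \eqref{eq:diffusion-local-poly-g3}. By that display, $g_3$ is a finite combination, over the choices $C_\alpha\in\{\alpha,3\alpha+2\}$, $C_\beta\in\{0,a_0\}$ and $C_\gamma\in\{1-a_0,1\}$ (at most three signed pieces), of sums
\[
\sum_{\boldsymbol m\in[N]^{d_x},\,\boldsymbol n\in[N]^{d_w}}\ \sum_{\alpha(\boldsymbol u,\boldsymbol v)<C_\alpha} C_{\boldsymbol m,\boldsymbol n,\boldsymbol u,\boldsymbol v}\,\Psi_{\boldsymbol m,\boldsymbol n,\boldsymbol u,\boldsymbol v}(t,\boldsymbol x,\boldsymbol w).
\]

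\textbf{Construction.} For each index tuple $(\boldsymbol m,\boldsymbol n,\boldsymbol u,\boldsymbol v)$ I invoke Lemma~\ref{lem:relu-approximate-Psi} with accuracy $\epsilon_\Psi$ to get $\Psi_{\rm R}^{\boldsymbol m,\boldsymbol n,\boldsymbol u,\boldsymbol v}$. Then I set
\[
g_{\rm R}:=\sum C_{\boldsymbol m,\boldsymbol n,\boldsymbol u,\boldsymbol v}\,\Psi_{\rm R}^{\boldsymbol m,\boldsymbol n,\boldsymbol u,\boldsymbol v},
\]
realized by Lemma~\ref{lem:relu-parallelization}. The subnetworks are placed in parallel (all with the common depth from Lemma~\ref{lem:relu-approximate-Psi}, padded by identity layers if needed) and summed by a final affine layer with weights $C_{\boldsymbol m,\boldsymbol n,\boldsymbol u,\boldsymbol v}$. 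By Lemma~\ref{lem:density-smooth-local-polynomials} these weights are $\lesssim 1/(\boldsymbol u!\boldsymbol v!)$, so they do not enlarge $\mathcal B$.

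\textbf{Counting.} The number of multi-indices $(\boldsymbol u,\boldsymbol v)$ with $\|\boldsymbol u\|_1+\|\boldsymbol v\|_1<3\alpha+2$ is $\mathcal O(1)$, and there are $N^{d_x+d_w}$ pairs $(\boldsymbol m,\boldsymbol n)$. So the parallel block has $\mathcal O(N^{d_x+d_w})$ copies. Width and size are multiplied by this factor, while depth and weight bound are unchanged. This reproduces exactly the stated $\mathcal D,\mathcal W,\mathcal S,\mathcal B$ with $\epsilon_0$ of Lemma~\ref{lem:relu-approximate-Psi} replaced by $\epsilon_\Psi$.

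\textbf{Error.} By the triangle inequality, for $\boldsymbol x\in[-C_0,C_0]^{d_x}$,
\[
|g_{\rm R}-g_3|\le\sum|C_{\boldsymbol m,\boldsymbol n,\boldsymbol u,\boldsymbol v}|\,\epsilon_\Psi .
\]
A naive count gives $N^{d_x+d_w}\epsilon_\Psi$. This is the step I expect to be the main obstacle: either absorb the factor by choosing $\epsilon_\Psi=\epsilon_0 N^{-(d_x+d_w)}$, which only adds $\log N$ terms already present in the bounds, or use locality. For locality, I would exploit that $\Psi_{\boldsymbol m,\boldsymbol n,\boldsymbol u,\boldsymbol v}$ vanishes unless $\eta_{\boldsymbol n,N}(\boldsymbol w)\neq 0$ (at most $2^{d_w}$ values of $\boldsymbol n$) and $D_{i,m_i}\neq\emptyset$ (at most $\mathcal O(N\sigma_t\sqrt{\log\epsilon^{-1}}/\mu_t+1)$ values of $m_i$). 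However, the approximants $\Psi_{\rm R}$ need not vanish there. So I would take the simpler route, $\epsilon_\Psi=\epsilon_0/(C N^{d_x+d_w})$, giving $|g_{\rm R}-g_3|\lesssim\epsilon_0$, with $\log\epsilon_\Psi^{-1}=\mathcal O(\log\epsilon_0^{-1}+\log N)$. That is consistent with the statement being expressed in terms of $\epsilon_\Psi$.
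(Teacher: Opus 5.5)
Your proposal is correct and follows essentially the same route as the paper: $g_{\rm R}$ is the coefficient-weighted parallel sum of the networks $\Psi_{\rm R}$ from Lemma~\ref{lem:relu-approximate-Psi}, with the $\mathcal O(N^{d_x+d_w})$ summands absorbed by choosing $\epsilon_\Psi=\epsilon_0/\bigl((d_x+d_w)^{C_\alpha}N^{d_x+d_w}\bigr)$, which is exactly your ``simpler route.'' The locality refinement you considered is not needed, because the stated bounds are written in terms of $\epsilon_\Psi$ and $\log\epsilon_\Psi^{-1}=\mathcal O(\log\epsilon_0^{-1}+\log N)$.
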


\begin{proof}
By Lemmas~\ref{lem:relu-approximate-psi} and \ref{lem:relu-approximate-Psi}, each network $\Psi_{\rm R}$ is constructed from the local networks containing $\widehat{\mu}_t$ and $\widehat{\sigma_t^2}$. Therefore, its approximation error $\epsilon_\Psi$ already includes the corresponding schedule approximation errors.

We can construct a ReLU neural network
\[
g_{\rm R}(t,\boldsymbol{x},\boldsymbol{w})=\sum_{\boldsymbol{m},\boldsymbol{n}}\sum_{\alpha(\boldsymbol{u},\boldsymbol{v})<C_\alpha}C_{\boldsymbol{m},\boldsymbol{n},\boldsymbol{u},\boldsymbol{v}}\Psi_{\rm R}(t,\boldsymbol{x},\boldsymbol{w})
\]
with network parameters
\begin{align*}
&\mathcal D=\mathcal O\left(\log^2\epsilon_{\Psi}^{-1}+\log^2C_0+\log^2N+\log^4\epsilon^{-1}\right),\\
&\mathcal W=\mathcal O\left(N^{d_x+d_w}\log\epsilon^{-1}\left(\log^3\epsilon_{\Psi}^{-1}+\log^3C_0+\log^3N+\log^6\epsilon^{-1}\right)\right),\\
&\mathcal S=\mathcal O\left(N^{d_x+d_w}\log\epsilon^{-1}\left(\log^4\epsilon_{\Psi}^{-1}+\log^4C_0+\log^4N+\log^8\epsilon^{-1}\right)\right),\\
&\mathcal B=\exp\left(\mathcal O\left(\log^2\epsilon_{\Psi}^{-1}+\log^2C_0+\log^2N+\log^4\epsilon^{-1}\right)\right).
\end{align*}
Taking
\[
\epsilon_\Psi=\frac{\epsilon_0}{(d_x+d_w)^{C_\alpha}N^{d_x+d_w}},
\]
the approximation error between $g_{\rm R}(t,\boldsymbol{x},\boldsymbol{w})$ and \eqref{eq:diffusion-local-poly-g3} satisfies
\begin{align*}
\left|g_{\rm R}(t,\boldsymbol{x},\boldsymbol{w})-g_3(t,\boldsymbol{x},\boldsymbol{w})\right|
&=\left|g_{\rm R}(t,\boldsymbol{x},\boldsymbol{w})-\sum_{\boldsymbol{m},\boldsymbol{n}}\sum_{\alpha(\boldsymbol{u},\boldsymbol{v})<C_\alpha}C_{\boldsymbol{m},\boldsymbol{n},\boldsymbol{u},\boldsymbol{v}}\Psi_{\boldsymbol{m},\boldsymbol{n},\boldsymbol{u},\boldsymbol{v}}(t,\boldsymbol{x},\boldsymbol{w})\right|\\
&\leq\sum_{\boldsymbol{m},\boldsymbol{n}}\sum_{\alpha(\boldsymbol{u},\boldsymbol{v})<C_\alpha}\left|C_{\boldsymbol{m},\boldsymbol{n},\boldsymbol{u},\boldsymbol{v}}\right|\epsilon_\Psi\\
&\lesssim\sum_{\boldsymbol{m},\boldsymbol{n}}\sum_{\alpha(\boldsymbol{u},\boldsymbol{v})<C_\alpha}\frac{\epsilon_\Psi}{\boldsymbol{u}!\boldsymbol{v}!}\\
&\lesssim(d_x+d_w)^{C_\alpha}N^{d_x+d_w}\frac{\epsilon_0}{(d_x+d_w)^{C_\alpha}N^{d_x+d_w}}\\
&\lesssim\epsilon_0.
\end{align*}
Each $\Psi_{\rm R}$ approximates the complete local summand containing $\eta_{\boldsymbol{n},N}(\boldsymbol{w})$, and $\epsilon_\Psi$ already accounts for the OU schedule approximation. Thus, the summation introduces no additional approximation term. This completes the proof.
\end{proof}

In the same way, we can choose a ReLU neural network $h_{\rm R}(t,\boldsymbol{x},\boldsymbol{w})\in\mathbb{R}^{d_x}$ to approximate \eqref{eq:diffusion-local-poly-int-h3}. We omit the proof and give the following lemma.

\begin{lemma}\label{lem:relu-approximate-h3}
Given $N\gg1$, $C_0>0$, and $1\leq\ell\leq d_x$. For any $\epsilon_0 > 0$, there exists a ReLU neural network $h_{{\rm R},\ell}\in\cG(\cD_\ell,\cW_\ell,\cS_\ell,\cB_\ell)$ with
\begin{align*}
&\mathcal{D}_\ell = \mathcal{O}\left(\log^2\epsilon^{-1}_{\Psi}+ \log^2C_0 + \log^2 N + \log^4 \epsilon^{-1} \right), \\
&\mathcal{W}_\ell  = \mathcal{O}\left(N^{d_x+d_w}\log \epsilon^{-1}(\log^3\epsilon^{-1}_{\Psi}+ \log^3C_0 + \log^3 N + \log^6 \epsilon^{-1})\right), \\
&\mathcal{S}_\ell = \mathcal{O}\left(N^{d_x+d_w}\log \epsilon^{-1}(\log^4\epsilon^{-1}_{\Psi} + \log^4C_0 + \log^4 N + \log^8 \epsilon^{-1} \right)), \\
&\mathcal{B}_\ell = \exp\left(\mathcal{O}\left(\log^2\epsilon^{-1}_{\Psi} + \log^2C_0 + \log^2 N + \log^4 \epsilon^{-1} \right)\right),
\end{align*}
such that
$$
\left|h_{{\rm{R}},\ell}(t,\boldsymbol{x},\boldsymbol{w})-h_{3,\ell}(t,\boldsymbol{x},\boldsymbol{w})\right| \lesssim \epsilon_0$$
for $ {\boldsymbol{x}}\in[-C_0,C_0]^{d_x}, ~ \boldsymbol{w}\in [0,1]^{d_w},~t\in[T_1, T-T_1].$
\end{lemma}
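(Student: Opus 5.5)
The plan follows the proof of Lemma~\ref{lem:relu-approximate-g3}, replacing one of the $d_x$ one-dimensional factors by its gradient analogue. By \eqref{eq:diffusion-local-poly-int-h3}, $h_{3,\ell}$ is a finite combination, with coefficients $C_{\boldsymbol m,\boldsymbol n,\boldsymbol u,\boldsymbol v}$ satisfying $|C_{\boldsymbol m,\boldsymbol n,\boldsymbol u,\boldsymbol v}|\lesssim 1/(\boldsymbol u!\boldsymbol v!)$, of products
\[
\eta_{\boldsymbol n,N}(\boldsymbol w)\prod_{i=1}^{d_w}\Big(w_i-\frac{n_i}{N}\Big)^{v_i}\prod_{i\neq\ell}\sum_{j=0}^{k-1}\psi(t,x_i,m_i,u_i,j)\,\sum_{j=0}^{k-1}\psi^{\nabla}(t,x_\ell,m_\ell,u_\ell,j).
\]
All the factors except the one containing $\psi^{\nabla}$ were already treated in Lemmas~\ref{lem:relu-approximate-psi} and~\ref{lem:relu-approximate-Psi}. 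It therefore suffices to build a ReLU approximant $\psi^{\nabla}_{\rm R}$ with the same complexity bounds as $\psi_{\rm R}$, and then reassemble.

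For $\psi^{\nabla}$, I will use the substitution $y=(x-\mu_t z)/\sigma_t$ from the proof of Lemma~\ref{lem:relu-approximate-psi}. This turns $\frac{\mu_t z-x}{\sigma_t}$ into $-y$, so
\[
\psi^{\nabla}(t,x,m,u,j)=\frac{1}{j!(2\pi)^{1/2}}\sum_{l=0}^{u}\frac{(-1)^{l+j+1}}{2^j}\binom{u}{l}\frac{(x-\frac{m\mu_t}{N})^{u-l}\sigma_t^l}{\mu_t^{u+1}}\cdot\frac{D_U^{l+2j+2}-D_L^{l+2j+2}}{l+2j+2}.
\]
This is the same closed form as before, with the exponent $l+2j+1$ raised by one. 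I will reuse the subnetworks $\widehat\mu_t$, $\widehat{\sigma_t^2}$, $\psi_{\rm rec}$, $\psi_{\rm root}$ and the clipped $D_L$, $D_U$, and take one extra factor in $\psi_{\rm prod,D_1}$. The constant $C_1$ grows by at most a factor $C\sqrt{\log\epsilon^{-1}}$, which changes $\log\epsilon^*$ only by $\mathcal O(\log\log\epsilon^{-1})$. The depth, width, size and weight bounds are therefore unchanged in order.

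Next I will parallelize over $j$ to obtain $\psi^{\nabla}_{\rm sum}$, and bound $\sum_j|\psi^{\nabla}|$ by the analogue of $C_3$, again with one extra $\sqrt{\log\epsilon^{-1}}$. I then form the $d_x$-fold product network, using $\psi^{\nabla}_{\rm sum}$ in coordinate $\ell$ and $\psi_{\rm sum}$ elsewhere, and multiply by the exact $\eta_{n_i,N}$ subnetworks and the $\boldsymbol w$-polynomial network, exactly as in Lemma~\ref{lem:relu-approximate-Psi}. Finally I sum over the $\mathcal O(N^{d_x+d_w})$ indices $(\boldsymbol m,\boldsymbol n,\boldsymbol u,\boldsymbol v)$ with per-term accuracy $\epsilon_\Psi=\epsilon_0/((d_x+d_w)^{C_\alpha}N^{d_x+d_w})$, as in Lemma~\ref{lem:relu-approximate-g3}. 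This gives error $\lesssim\epsilon_0$, the factor $N^{d_x+d_w}$ in width and size, and the depth and weight bounds stated for $h_{{\rm R},\ell}$.

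The only delicate step is the error bookkeeping in the product-network bounds, where the enlarged constants $C_1$ and $C_3$ enter polynomially. I will check that every tolerance ($\epsilon_{\rm rec}$, $\epsilon_{\rm root}$, $\eta$) stays $\exp(-\mathcal O(\log\epsilon_0^{-1}+\log N+\log C_0+\log^2\epsilon^{-1}))$. If so, the extra power of $\sqrt{\log\epsilon^{-1}}$ is absorbed into the $\log^2\epsilon^{-1}$ terms and the complexity bounds are the same as in Lemma~\ref{lem:relu-approximate-g3}.
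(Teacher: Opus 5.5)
Your proposal is correct and follows essentially the same route as the paper. The substitution $y=(x-\mu_t z)/\sigma_t$ gives $\psi^{\nabla}$ the same closed form as $\psi$, with exponent $l+2j+2$ and your sign $(-1)^{l+j+1}$. The product, $\boldsymbol w$-factor and coefficient-summation steps of Lemmas~\ref{lem:relu-approximate-Psi} and~\ref{lem:relu-approximate-g3} then carry over unchanged with $\epsilon_\Psi=\epsilon_0/((d_x+d_w)^{C_\alpha}N^{d_x+d_w})$. Your check that $C_1$ and $C_3$ grow only by a factor $\sqrt{\log\epsilon^{-1}}$, which is absorbed into the $\log^2\epsilon^{-1}$ terms, is slightly more explicit than the paper, which simply asserts that the complexity bounds do not change.
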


\begin{proof}
Repeat the construction of Lemma~\ref{lem:relu-approximate-Psi} for each local summand in \eqref{eq:diffusion-local-poly-int-h3}, replacing the last factor $\psi$ by $\psi^\nabla$. Under the same change of variables used in Lemma~\ref{lem:relu-approximate-psi}, the additional factor $\frac{\mu_tz-x}{\sigma_t}$ increases the exponent in the antiderivative from $l+2j+1$ to $l+2j+2$. Hence, $\psi^\nabla$ admits the same product-network construction as $\psi$.

The corresponding network $\psi_{\rm R}^\nabla$ uses the same schedule networks $\widehat{\mu}_t$ and $\widehat{\sigma_t^2}$ as in Lemma~\ref{lem:relu-approximate-psi}, while the target $\psi^\nabla$ retains the exact coefficients $\mu_t$ and $\sigma_t^2$. Using the same schedule-error allocation gives
\[
\left|\psi_{\rm R}^\nabla(t,x,m,u,j)-\psi^\nabla(t,x,m,u,j)\right|\lesssim\epsilon_\Psi,
\]
and the increase of the exponent by one does not change the stated network-complexity bounds. The factor $\eta_{\boldsymbol{n},N}(\boldsymbol{w})$ is represented by the same exact subnetworks.
Taking
\[
\epsilon_\Psi=\frac{\epsilon_0}{(d_x+d_w)^{C_\alpha}N^{d_x+d_w}},
\]
the same coefficient summation as in Lemma~\ref{lem:relu-approximate-g3} gives
\[
\left|h_{{\rm R},\ell}(t,\boldsymbol{x},\boldsymbol{w})-h_{3,\ell}(t,\boldsymbol{x},\boldsymbol{w})\right|\lesssim\epsilon_0.
\]
The exact representation of $\eta_{\boldsymbol{n},N}(\boldsymbol{w})$ introduces no additional approximation term, and the schedule approximation is already included in $\epsilon_\Psi$. Therefore, the network parameter bounds remain unchanged.
\end{proof}

Similarly, by applying Lemma \ref{lem:relu-approximate-h3} and using the parallelization property of ReLU neural networks, there exists a neural network
$$h_{\rm{R}}(t,{\boldsymbol{x}},\boldsymbol{w}) = [h_{{\rm{R}},1}(t,\boldsymbol{x},\boldsymbol{w}), \cdots, h_{{\rm{R}},d_x}(t,\boldsymbol{x},\boldsymbol{w})]^{\top} \in \mathbb{R}^{d_x}$$
that approximates \eqref{eq:diffusion-local-poly-int-h3}. 
The $\ell_2$-norm error between $h_{\rm{R}}(t,{\boldsymbol{x}},\boldsymbol{w}) $ and \eqref{eq:diffusion-local-poly-int-h3} is bounded by $\mathcal{O}(\epsilon_0)$.

\subsection{Error Bound for Approximating $\nabla\log p_t(\boldsymbol{x}|\boldsymbol{w})$ with ReLU Neural Networks}
In this section, we construct two ReLU neural networks to approximate the true score function $\nabla\log p_t(\boldsymbol x|\boldsymbol w)$ on different time intervals. 
The approximation error analysis is inspired by the diffusion approximation arguments of \citet{oko2023diffusion,fu2024unveil}. 

\begin{lemma} \label{lem:score-approximate-interval-1}
Let $N\gg 1$, there exists a ReLU network $ \boldsymbol{s}^{(1)}\in \cG(\mathcal{D},\mathcal{W},\mathcal{S},\mathcal{B})$ with
$$
\mathcal{D}  = \mathcal{O}(\log^4 N),~ \mathcal{W} = \mathcal{O}(N^{d_x+d_w}\log^7 N), \mathcal{S} = \mathcal{O}(N^{d_x+d_w}\log^9 N),~ \mathcal{B} = \exp\left(\mathcal{O}(\log^4 N)\right)
$$
that satisfies
$$
\int_{\mathbb{R}^{d_x}} p_t(\boldsymbol{x}|\boldsymbol{w})\Vert  \boldsymbol{s}^{(1)}(t,\boldsymbol{x}, \boldsymbol{w}) - \nabla\log p_t(\boldsymbol{x}|\boldsymbol{w})\Vert^2  d\boldsymbol{x} \lesssim \frac{N^{-2\alpha}\log N}{\sigma_t^2}, ~~ t\in [T_1,  3N^{-1}].
$$
Moreover, we can take $ \boldsymbol{s}^{(1)}$ satisfying $\Vert  \boldsymbol{s}^{(1)} (t, \boldsymbol{x},\boldsymbol{w})\Vert_{\infty} \lesssim \frac{\sqrt{\log N}}{\sigma_t}$. 
\end{lemma}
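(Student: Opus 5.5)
We build the network as a ratio of two ReLU approximants, following \citet{oko2023diffusion}. Write the true score as $\nabla\log p_t = \frac{1}{\sigma_t}\cdot\frac{\sigma_t\nabla p_t}{p_t}$. Lemma~\ref{lem:relu-approximate-g3} gives a network $g_{\rm R}$ approximating $g_3$, and Lemma~\ref{lem:relu-approximate-h3} gives a network $h_{\rm R}$ approximating $\boldsymbol h_3$, both uniformly on $\|\boldsymbol x\|_\infty\le C_0$. The chain $p_t\to g_1\to g_2\to g_3$ accumulates error $\mathcal O(N^{-\alpha}+\epsilon\log^{d_x/2}\epsilon^{-1})$, and the analogous chain for $\sigma_t\nabla p_t\to \boldsymbol h_3$ gives the same order. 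We will choose $\epsilon=N^{-(3\alpha+2)}$ (or a comparable polynomial), $C_0=C\sqrt{\log N}$ so that $\{D_t(\boldsymbol x)\le C\sigma_t\sqrt{\log\epsilon^{-1}}\}\subset[-C_0,C_0]^{d_x}$, and $\epsilon_0=\epsilon$. With these choices every logarithm in the size bounds is $\mathcal O(\log N)$, and the stated sizes $\mathcal D=\mathcal O(\log^4N)$, $\mathcal W=\mathcal O(N^{d_x+d_w}\log^7N)$, $\mathcal S=\mathcal O(N^{d_x+d_w}\log^9N)$, $\mathcal B=e^{\mathcal O(\log^4N)}$ follow from those lemmas.

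\textbf{Defining the candidate.} We set
\[
\boldsymbol s^{(1)}=\psi_{\rm clip}\Big(\frac{1}{\sigma_t}\,\psi_{\rm prod}\big(h_{\rm R},\psi_{\rm rec}(\psi_{\rm clip}(g_{\rm R},\epsilon_{\rm low},\cdot))\big),\,-\tfrac{C\sqrt{\log N}}{\sigma_t},\,\tfrac{C\sqrt{\log N}}{\sigma_t}\Big),
\]
precomposing with the clip $\boldsymbol x\mapsto\boldsymbol x_{\rm clip}(\boldsymbol x,-C_0,C_0)$. The factor $1/\sigma_t$ is realized by $\psi_{\rm rec}\circ\psi_{\rm root}$ applied to $\widehat{\sigma_t^2}$, as in Lemma~\ref{lem:relu-approximate-psi}, and the threshold is $\epsilon_{\rm low}=N^{-(\alpha+1)}$ or similar. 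The final clip gives the sup-norm bound $\|\boldsymbol s^{(1)}\|_\infty\lesssim\sqrt{\log N}/\sigma_t$ directly.

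\textbf{Error analysis.} We split $\int p_t\|\boldsymbol s^{(1)}-\nabla\log p_t\|^2$ into three regions.
\begin{enumerate}
\item On the far region $D_t(\boldsymbol x)\ge C\sigma_t\sqrt{\log\epsilon^{-1}}$, Lemma~\ref{lem:clipping-bound} bounds $\int p_t\|\nabla\log p_t\|^2$ by $\epsilon/\sigma_t$, and $\int p_t\|\boldsymbol s^{(1)}\|^2$ by $\sigma_t\epsilon\cdot\log N/\sigma_t^2$. Both are negligible.
\item On the low-density region $\{p_t\le\epsilon_p\}$ inside the near region, with $\epsilon_p\asymp N^{-\alpha}$-scale, the second part of Lemma~\ref{lem:clipping-bound} gives $\epsilon_p\log^{(d_x+2)/2}N/\sigma_t^2$. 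The $\boldsymbol s^{(1)}$ contribution is handled the same way.
\item On the remaining region $p_t\ge\epsilon_p$, the ratio error is bounded by
\[
\frac{1}{\sigma_t}\Big(\frac{\|h_{\rm R}-\sigma_t\nabla p_t\|}{p_t}+\frac{\|\sigma_t\nabla p_t\|\,|g_{\rm R}-p_t|}{p_t^2}\Big).
\]
Here $\|\sigma_t\nabla p_t\|/p_t\lesssim\sqrt{\log N}$ by Lemma~\ref{lem:derivatives-bound}.
\end{enumerate}

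\textbf{Main obstacle.} The crude bound $|g_{\rm R}-p_t|\lesssim N^{-\alpha}$ divided by $p_t$ only produces $N^{-2\alpha}/p_t^2$, which is not integrable against $p_t$ at rate $N^{-2\alpha}$ near the boundary of the support. This is exactly where the restriction $t\le 3N^{-1}$ and Assumption~\ref{apt:diffusion-density-smooth} enter. For $t\lesssim N^{-1}$ the diffused mass near $\partial[0,1]^{d_x}$ comes from the boundary layer $[0,1]^{d_x}\setminus[a_0,1-a_0]^{d_x}$, where Lemma~\ref{lem:density-smooth-local-polynomials} gives the much smaller error $N^{-(3\alpha+2)}$. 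So in the region where $p_t$ is small, which lies outside $[-\sigma_t\sqrt{\log N},\mu_t+\sigma_t\sqrt{\log N}]$-interior neighborhoods, the numerator error is $N^{-(3\alpha+2)}$. Dividing by $p_t\gtrsim\epsilon_p$ then stays below $N^{-2\alpha}$.

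In the interior, $p_t\gtrsim B_l$ by Lemma~\ref{lem:p-bound}, so the error is $N^{-\alpha}\sqrt{\log N}/\sigma_t$ pointwise. Integrating gives $N^{-2\alpha}\log N/\sigma_t^2$. Summing the three regions yields the claimed bound.

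I expect the delicate bookkeeping to be matching $\epsilon_p$, $\epsilon_{\rm low}$ and the boundary-layer width against the two error orders $N^{-\alpha}$ and $N^{-(3\alpha+2)}$ so that every term is $\lesssim N^{-2\alpha}\log N/\sigma_t^2$.
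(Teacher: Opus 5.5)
Your construction and decomposition are the paper's: a clipped ratio of the networks from Lemmas~\ref{lem:relu-approximate-g3} and~\ref{lem:relu-approximate-h3}, a split into far, low-density and high-density regions, the crude $N^{-\alpha}$ error where $p_t\gtrsim1$, and the $N^{-(3\alpha+2)}$ boundary-layer accuracy of Lemma~\ref{lem:density-smooth-local-polynomials} elsewhere. That accuracy is available because $t\le 3N^{-1}$ gives $C\sigma_t\sqrt{\log N}\le a_0$. The gap is that your thresholds do not close the bound. With $\epsilon_p\asymp N^{-\alpha}$, your Region~2 estimate from Lemma~\ref{lem:clipping-bound} is $N^{-\alpha}\log^{(d_x+2)/2}N/\sigma_t^2$, a factor $N^{\alpha}$ above the target. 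Nothing better is available on $\{p_t<\epsilon_p\}$, since there you only know $\|\boldsymbol s^{(1)}\|,\|\nabla\log p_t\|\lesssim\sqrt{\log N}/\sigma_t$. So you need $\epsilon_p\lesssim N^{-2\alpha}$; the paper takes $\epsilon_p=N^{-(2\alpha+1)}$. The denominator floor must then also be at most $\epsilon_p$; the paper uses $g_1\vee N^{-(2\alpha+1)}$. With your $\epsilon_{\rm low}=N^{-(\alpha+1)}>\epsilon_p$, on the band $\{\epsilon_p\le p_t<\epsilon_{\rm low}\}$ the network divides by $\epsilon_{\rm low}$ instead of by $p_t$. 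That band would then have to be treated like Region~2, costing $\epsilon_{\rm low}\log^{(d_x+2)/2}N/\sigma_t^2$, which is too large once $\alpha>1$.

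With $\epsilon_p=\epsilon_{\rm low}=N^{-(2\alpha+1)}$ your Region~3 argument goes through. This is exactly what the exponent $3\alpha+2$ in Assumption~\ref{apt:diffusion-density-smooth} is calibrated for: on the boundary part the relative error is $N^{-(3\alpha+2)}/N^{-(2\alpha+1)}=N^{-(\alpha+1)}$, so the squared contribution is $N^{-2\alpha-2}\log N/\sigma_t^2$. The paper does this step in $L^2$ over the set $D_2$ rather than pointwise; either works. Make the interior/boundary split by $D_t(\boldsymbol x)=0$ versus $D_t(\boldsymbol x)>0$, not by an enlarged box. On $[0,\mu_t]^{d_x}$, Lemma~\ref{lem:p-bound} gives $p_t\gtrsim1$, so the crude error suffices. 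If $D_t(\boldsymbol x)>0$, some coordinate of $\boldsymbol x$ leaves $[0,\mu_t]$, and this forces every $\boldsymbol z\in[0,1]^{d_x}$ with $\|\boldsymbol x-\mu_t\boldsymbol z\|_\infty\le C\sigma_t\sqrt{\log N}$ into $[0,1]^{d_x}\setminus[a_0,1-a_0]^{d_x}$. The points with $N^{-(2\alpha+1)}\le p_t\ll1$ all satisfy $D_t(\boldsymbol x)>0$, but many lie within $\sigma_t\sqrt{\log N}$ of $[0,\mu_t]^{d_x}$, i.e.\ inside your enlarged box rather than ``outside'' it.
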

\begin{proof}
The approximation error admits a decomposition into three terms
\begin{align*}
& \int_{\mathbb{R}^{d_x}} p_t(\boldsymbol{x}|\boldsymbol{w})\Vert  \boldsymbol{s}^{(1)}(t,\boldsymbol{x}, \boldsymbol{w}) - \nabla\log p_t(\boldsymbol{x}|\boldsymbol{w})\Vert^2  d\boldsymbol{x} \\
& = \underbrace{ \int_{D_t(\boldsymbol{x}) > C\sigma_t\sqrt{\log\epsilon^{-1}_1}} p_t(\boldsymbol{x}|\boldsymbol{w}) \Vert  \boldsymbol{s}^{(1)}(t,\boldsymbol{x}, \boldsymbol{w}) - \nabla\log p_t(\boldsymbol{x}|\boldsymbol{w})\Vert^2  d\boldsymbol{x} }_{\text{I}} \\
&~~ +  \underbrace{\int_{D_t(\boldsymbol{x}) \leq  C\sigma_t\sqrt{\log\epsilon^{-1}_1}} p_t(\boldsymbol{x}|\boldsymbol{w}) \mathbbm{1}_{\{p_t(\boldsymbol{x}|\boldsymbol{w}) < \epsilon_p\}} \Vert  \boldsymbol{s}^{(1)}(t,\boldsymbol{x}, \boldsymbol{w}) - \nabla\log p_t(\boldsymbol{x}|\boldsymbol{w})\Vert^2  d\boldsymbol{x} }_{\text{II}} \\
&~~ +  \underbrace{\int_{D_t(\boldsymbol{x}) \leq  C\sigma_t\sqrt{\log\epsilon^{-1}_1}} p_t(\boldsymbol{x}|\boldsymbol{w}) \mathbbm{1}_{\{p_t(\boldsymbol{x}|\boldsymbol{w}) \geq \epsilon_p\}}\Vert  \boldsymbol{s}^{(1)}(t,\boldsymbol{x}, \boldsymbol{w}) - \nabla\log p_t(\boldsymbol{x}|\boldsymbol{w})\Vert^2  d\boldsymbol{x} }_{\text{III}}. 
\end{align*}

\paragraph{Bound Term I.}
By Lemma~\ref{lem:clipping-bound}, there exists a constant $C > 0$ such that for any $\epsilon_1 > 0$, 
\begin{align*}
\text{Term I}  &\leq 2 \int_{D_t(\boldsymbol{x}) >  C\sigma_t\sqrt{\log\epsilon^{-1}_1}}  p_t(\boldsymbol{x}|\boldsymbol{w}) \norm{ \boldsymbol{s}^{(1)}(t,\boldsymbol{x}, \boldsymbol{w})}^2  d\boldsymbol{x} \\
&~~ +2  \int_{D_t(\boldsymbol{x}) > C\sigma_t\sqrt{\log\epsilon^{-1}_1}} p_t(\boldsymbol{x}|\boldsymbol{w}) \norm{\nabla\log p_t(\boldsymbol{x}|\boldsymbol{w}) }^2  d\boldsymbol{x} \\
& \lesssim \frac{\epsilon_1}{\sigma_t} + \sigma_t\epsilon_1\Vert  \boldsymbol{s}^{(1)}(t,
\boldsymbol{x},\boldsymbol{w})\Vert_{\infty}^2.
\end{align*}
We then invoke Lemma \ref{lem:derivatives-bound}, which yields that \( \left\Vert \nabla\log p_t(\boldsymbol{x}|\boldsymbol{w}) \right\Vert \leq \frac{C\sqrt{\log\epsilon^{-1}_1}}{\sigma_t} \) whenever \( D_t(\boldsymbol{x}) \leq   C\sigma_t\sqrt{\log\epsilon^{-1}_1} \). Motivated by this bound, we choose the function \( \boldsymbol{s}^{(1)} \) such that its supremum norm satisfies \( \left\Vert \boldsymbol{s}^{(1)}(t, \boldsymbol{x},\boldsymbol{w}) \right\Vert_{\infty} \lesssim \frac{\sqrt{\log\epsilon^{-1}_1}}{\sigma_t} \). Taking \( \epsilon_1 = N^{-(2\alpha+1)} \), we have 
\begin{equation}\label{eq:int-psTlogp2-term-I-bound}
\text{Term I} \lesssim \frac{\epsilon_1 \log \epsilon^{-1}_1}{\sigma_t}\lesssim \frac{N^{-(2\alpha+1)}}{\sigma_t}\log N.
\end{equation}

\paragraph{Bound Term II.}
Choosing $\epsilon_p = N^{-(2\alpha+1)}$ and leveraging Lemma \ref{lem:clipping-bound} as before, the following bound holds:
\begin{equation}\label{eq:int-psTlogp2-term-II-bound}
\begin{aligned}
\text{Term II}  &\leq 2 \int_{D_t(\boldsymbol{x}) \leq  C\sigma_t\sqrt{\log\epsilon^{-1}_1}} p_t(\boldsymbol{x}|\boldsymbol{w})\mathbbm{1}_{\{p_t(\boldsymbol{x}|\boldsymbol{w}) < \epsilon_p\}} \Vert 
\boldsymbol{s}^{(1)}(t,\boldsymbol{x}, \boldsymbol{w})\Vert^2  d\boldsymbol{x} \\
&~~ +2  \int_{D_t(\boldsymbol{x}) \leq C\sigma_t\sqrt{\log\epsilon^{-1}_1}} p_t(\boldsymbol{x}|\boldsymbol{w})\mathbbm{1}_{\{p_t(\boldsymbol{x}|\boldsymbol{w}) < \epsilon_p\}} \Vert  \nabla\log p_t(\boldsymbol{x}|\boldsymbol{w})\Vert^2  d\boldsymbol{x} \\
& \lesssim \frac{\epsilon_p}{\sigma_t^2} (\log\epsilon^{-1}_1)^{\frac{d_x+2}{2}} +  \epsilon_p  (\log\epsilon^{-1}_1)^{\frac{d_x}{2}} \Vert \boldsymbol{s}^{(1)}(t,\boldsymbol{x},\boldsymbol{w})\Vert_{\infty}^2\\
&\lesssim \frac{\epsilon_p}{\sigma^2_t} (\log \epsilon^{-1}_1)^{\frac{d_x+2}{2}}\\
&\lesssim \frac{N^{-(2\alpha+1)}}{\sigma^2_t}\log^{\frac{d_x+2}{2}}N.
\end{aligned}
\end{equation}

\paragraph{Bound Term III.}
Since $\Vert \nabla\log p_t(\boldsymbol{x}|\boldsymbol{w}) \Vert \lesssim \frac{\sqrt{\log N}}{\sigma_t}$ on the region of Term III, there exists a constant ${C_1} > 0$ such that $\Vert\nabla\log p_t(\boldsymbol{x}|\boldsymbol{w})\Vert_{\infty} \leq \Vert \nabla\log p_t(\boldsymbol{x}|\boldsymbol{w}) \Vert \leq \frac{{C_1}\sqrt{\log N}}{\sigma_t}$. By \eqref{eq:diffusion-local-poly-g1}, we define $$
\boldsymbol{h}^{\prime}(t,\boldsymbol{x}, \boldsymbol{w}):= \max\left\{\min\left\{ \frac{\boldsymbol{h}_1(t,\boldsymbol{x}, \boldsymbol{w})}{ g_1(t,\boldsymbol{x}, \boldsymbol{w}) \vee N^{-(2\alpha+ 1)}}, {C_1}\sqrt{\log N} \right\}, -{C_1}\sqrt{\log N}\right\}.
$$
We decompose Term III into two subterms
\begin{align*}
\text{Term III}&\lesssim \underbrace{  \int_{D_t(\boldsymbol{x})\leq C\sigma_t\sqrt{\log\epsilon^{-1}_1}} p_t(\boldsymbol{x}|\boldsymbol{w}) \mathbbm{1}_{\{p_t(\boldsymbol{x}|\boldsymbol{w}) \geq \epsilon_p\}} \left\Vert  \boldsymbol{s}^{(1)}(t,\boldsymbol{x}, \boldsymbol{w}) - \frac{\boldsymbol{h}^{\prime}(t,\boldsymbol{x}, \boldsymbol{w})}{\sigma_t} \right\Vert^2  d\boldsymbol{x}  }_{\text{III.A}}\\
&~~ +  \underbrace{ \int_{D_t(\boldsymbol{x})\leq  C\sigma_t\sqrt{\log\epsilon^{-1}_1}} p_t(\boldsymbol{x}|\boldsymbol{w}) \mathbbm{1}_{\{p_t(\boldsymbol{x}|\boldsymbol{w}) \geq \epsilon_p\}} \left\Vert \frac{\boldsymbol{h}^{\prime}(t,\boldsymbol{x}, \boldsymbol{w})}{\sigma_t}  - 
\nabla\log p_t(\boldsymbol{x}|\boldsymbol{w})
\right\Vert^2  d\boldsymbol{x} }_{\text{III.B}}.
\end{align*}
To bound the approximation error in (III.A), we construct the ReLU neural network $ \boldsymbol{s}^{(1)}$. We take $C_0 = 1 + C \sqrt{(2\alpha+1)\log N} = \mathcal{O}(\sqrt{\log N})$. In every network constructed below, we replace the state input $\boldsymbol{x}$ by $s_{\mathrm{clip}}(\boldsymbol{x},-C_0,C_0)$ and retain the same notation. On the region of Term III, $\|\boldsymbol{x}\|_\infty\leq C_0$, so this composition leaves the following estimates unchanged and ensures the required input-clipping identity on the whole space.
By Lemmas \ref{lem:relu-approximate-g3} and \ref{lem:relu-approximate-h3}, for any $\epsilon_0 > 0$, we can construct two ReLU neural networks $s_3(t,\boldsymbol{x}, \boldsymbol{w})$ and $\boldsymbol{s}_4(t,\boldsymbol{x}, \boldsymbol{w})$ such that
\begin{align*}
&|s_3(t,\boldsymbol{x}, \boldsymbol{w}) \vee N^{-(2\alpha+ 1)} - g_1(t,\boldsymbol{x}, \boldsymbol{w}) \vee N^{-(2\alpha+ 1)}| \\
& \leq |s_3(t,\boldsymbol{x}, \boldsymbol{w}) - g_1(t,\boldsymbol{x}, \boldsymbol{w})| \\ 
& \leq |s_3(t,\boldsymbol{x}, \boldsymbol{w}) - g_3(t,\boldsymbol{x}, \boldsymbol{w})| + |g_1(t,\boldsymbol{x}, \boldsymbol{w}) -g_3(t,\boldsymbol{x}, \boldsymbol{w})| \\
& \lesssim \epsilon_0 + \epsilon\log^{\frac{d_x}{2}}\epsilon^{-1},
\end{align*}
and
$$
\begin{aligned}
\Vert \boldsymbol{s}_4(t,\boldsymbol{x}, \boldsymbol{w}) - \boldsymbol{h}_1(t,\boldsymbol{x}, \boldsymbol{w})\Vert & \leq \Vert \boldsymbol{s}_4(t,\boldsymbol{x}, \boldsymbol{w}) - \boldsymbol{h}_3(t,\boldsymbol{x}, \boldsymbol{w})\Vert + \Vert \boldsymbol{h}_1(t,\boldsymbol{x}, \boldsymbol{w}) - \boldsymbol{h}_3(t,\boldsymbol{x}, \boldsymbol{w})\Vert
\\
& \lesssim \epsilon_0 + \epsilon\log^{\frac{d_x+1}{2}}\epsilon^{-1}.
\end{aligned}
$$
Note that $g_1(t,\boldsymbol{x}, \boldsymbol{w}) \vee N^{-(2\alpha+ 1)}$ can be rewritten using the ReLU function as
$$
g_1(t,\boldsymbol{x}, \boldsymbol{w}) \vee N^{-(2\alpha+ 1)} = \mathrm{ReLU}(g_1(t,\boldsymbol{x}, \boldsymbol{w}) - N^{-(2\alpha+ 1)}) + N^{-(2\alpha+ 1)},
$$ 
which is approximated by
$$
s_5 (t,\boldsymbol{x}, \boldsymbol{w}):= \mathrm{ReLU}(s_3 (t,\boldsymbol{x}, \boldsymbol{w}) - N^{-(2\alpha+ 1)}) + N^{-(2\alpha+ 1)}.
$$
For $1 \leq i \leq d_x$, we define $\boldsymbol{s}_6(t,\boldsymbol{x}, \boldsymbol{w})$ to approximate $[\boldsymbol{h}^{\prime}(t,\boldsymbol{x}, \boldsymbol{w})]_i$ as
$$
[\boldsymbol{s}_6(t,\boldsymbol{x}, \boldsymbol{w})]_i:= s_{\mathrm{clip}}\left(s_{\mathrm{prod},1}\left([\boldsymbol{s}_4(t,\boldsymbol{x}, \boldsymbol{w})]_i, s_{\mathrm{rec},1}(s_5(t,\boldsymbol{x}, \boldsymbol{w}))\right), -{C_1}\sqrt{\log N}, {C_1}\sqrt{\log N} \right).
$$
{
For a fixed $\eta_{\mathrm{sch}}>0$, Lemmas~\ref{lem:relu-ou-coefficients} and \ref{lem:relu-clip} provide the ReLU network
\[
\widehat{\sigma_t^2}:=s_{\mathrm{clip}}\left(1-E_{4,T,\eta_{\mathrm{sch}}}(t),\sigma_{T_1}^2,1\right),
~~
\sup_{t\in[T_1,3N^{-1}]}\left|\widehat{\sigma_t^2}-\sigma_t^2\right|\leq\eta_{\mathrm{sch}}.
\]
Set
\[
q_t:=s_{\mathrm{rec},2}\left(s_{\mathrm{root}}\left(\widehat{\sigma_t^2}\right)\right),~~
\delta_\sigma:=\epsilon_{\mathrm{rec},2}+\frac{\epsilon_{\mathrm{root}}+\frac{\eta_{\mathrm{sch}}}{\sqrt{\epsilon_{\mathrm{root}}}}}{\epsilon_{\mathrm{rec},2}^2}.
\]
Then
\[
\sup_{t\in[T_1,3N^{-1}]}\left|q_t-\frac{1}{\sigma_t}\right|\leq\delta_\sigma.
\]
For $b\geq0$, define
\[
\operatorname{clip}_{\mathrm R}(z,b):=-b+\mathrm{ReLU}(z+b)-\mathrm{ReLU}(z-b).
\]
}
This map is an exact ReLU network and satisfies $\operatorname{clip}_{\mathrm R}(z,b)=\max\{\min\{z,b\},-b\}$ even when $b$ is the output of another network. Subsequently, $\boldsymbol{s}^{(1)}(t,\boldsymbol{x},\boldsymbol{w})$ is defined to approximate $\frac{[\boldsymbol{h}^{\prime}(t,\boldsymbol{x},\boldsymbol{w})]_i}{\sigma_t}$ as
\[
[\boldsymbol{s}^{(1)}(t,\boldsymbol{x},\boldsymbol{w})]_i:=\operatorname{clip}_{\mathrm R}\left(s_{\mathrm{prod},2}\left([\boldsymbol{s}_6(t,\boldsymbol{x},\boldsymbol{w})]_i,q_t\right),C_1\sqrt{\log N}\left(q_t+\delta_\sigma\right)\right).
\]
Since $q_t+\delta_\sigma\geq\frac{1}{\sigma_t}$ and $|[\boldsymbol{h}^{\prime}]_i|\leq C_1\sqrt{\log N}$, the clipping interval contains the target $[\boldsymbol{h}^{\prime}]_i/\sigma_t$. Therefore,
\[
\epsilon_{\boldsymbol{s}^{(1)},i}\leq\epsilon_{\mathrm{prod},2}+2C_2\max\left\{\epsilon_{\boldsymbol{s}_6,i},\delta_\sigma\right\},
\]
where
\[
C_2:=1+\max\left\{\sup_{t\in[T_1,3N^{-1}]}\frac{1}{\sigma_t},\sup_{t,\boldsymbol{x},\boldsymbol{w}}\left|[\boldsymbol{h}^{\prime}(t,\boldsymbol{x},\boldsymbol{w})]_i\right|\right\}=\mathcal{O}\left(N^{\frac{C_\mu}{2}}\right),
\]
and
\[
\epsilon_{\boldsymbol{s}_6,i}:=\sup_{\substack{t\in[T_1,3N^{-1}],\,\boldsymbol{x}\in[-C_0,C_0]^{d_x},\\ \boldsymbol{w}\in[0,1]^{d_w}}}\left|[\boldsymbol{s}_6(t,\boldsymbol{x},\boldsymbol{w})]_i-[\boldsymbol{h}^{\prime}(t,\boldsymbol{x},\boldsymbol{w})]_i\right|.
\]
The error $\epsilon_{\boldsymbol{s}_{6},i}$ is bounded as follows
$$
\epsilon_{\boldsymbol{s}_{6},i} \lesssim \epsilon_{\mathrm{prod},1} + 2C_3\max \left\{\epsilon_0 + \epsilon\log^{\frac{d_x+1}{2}}\epsilon^{-1}, \epsilon_{\mathrm{rec},1} + \frac{\epsilon_0 + \epsilon\log^{\frac{d_x}{2}}\epsilon^{-1}}{\epsilon_{\mathrm{rec},1}^2} \right\},
$$
with $C_3 =\max \left\{ \sup\limits_{t,\boldsymbol{x},\boldsymbol{w}}|[\boldsymbol{h}_1(t,\boldsymbol{x}, \boldsymbol{w})]_i|, \sup\limits_{t,\boldsymbol{x},\boldsymbol{w}}|g_1(t,\boldsymbol{x}, \boldsymbol{w}) \vee N^{-(2\alpha+ 1)}| \right\} = \mathcal{O}(1)$.
By taking
$$
\epsilon = N^{-\left(3\alpha+\frac{3C_\mu}{2}+2\right)}, ~ \epsilon_{\mathrm{prod},1} = \frac{N^{-\alpha}}{8C_2}, ~ \epsilon_{\mathrm{prod},2} = \frac{N^{-\alpha}}{2}, ~ \epsilon_{\mathrm{rec},1} = \frac{N^{-\alpha}}{32C_2 C_3}, ~ \epsilon_{\mathrm{rec},2} = \frac{N^{-\alpha}}{8C_2},
$$
and
$$
\epsilon_0 = \min\left\{
\frac{N^{-\alpha}}{16C_2 C_3}, \left(\frac{N^{-\alpha}}{32C_2C_3}\right)^3
\right\}, ~ \epsilon_{\mathrm{root}} = \frac{1}{2}\left(\frac{N^{-\alpha}}{8C_2}\right)^3, ~ \eta_{\mathrm{sch}}=\epsilon_{\mathrm{root}}^{\frac{3}{2}},
$$
we obtain
\[
\delta_\sigma\leq\frac{N^{-\alpha}}{4C_2},~~ \epsilon_{\boldsymbol{s}_{6},i}\lesssim\frac{N^{-\alpha}}{C_2},~~ \epsilon_{\boldsymbol{s}^{(1)},i}\lesssim N^{-\alpha}.
\]
Since $\sigma_{T_1}^{-1}=\mathcal{O}(N^{\frac{C_\mu}{2}})$, the above choice gives $\log\eta_{\mathrm{sch}}^{-1}=\mathcal{O}(\log N)$. Hence, Lemma~\ref{lem:relu-ou-coefficients} and the additional exact clipping layer do not change the stated network-complexity orders. Moreover,
\[
\left|[\boldsymbol{s}^{(1)}(t,\boldsymbol{x},\boldsymbol{w})]_i\right|\leq C_1\sqrt{\log N}\left(q_t+\delta_\sigma\right)\leq C_1\sqrt{\log N}\left(\frac{1}{\sigma_t}+2\delta_\sigma\right)\lesssim\frac{\sqrt{\log N}}{\sigma_t}.
\]
Subsequently, we can obtain the network parameters of $[ \boldsymbol{s}^{(1)}(t,\boldsymbol{x},\boldsymbol{w})]_i$, $1\leq i \leq d_x$:
$$
\mathcal{D}_i = \mathcal{O}(\log^4 N),~\mathcal{W}_i = \mathcal{O}(N^{d_x+d_w}\log^7 N),~ \mathcal{S}_i = \mathcal{O}(N^{d_x+d_w}\log^9 N),~\mathcal{B}_i = \exp\left(\mathcal{O}(\log^4 N)\right).
$$
Combining $[ \boldsymbol{s}^{(1)}(t,\boldsymbol{x},\boldsymbol{w})]_i$, $1\leq i \leq d_x$, and by Lemma \ref{lem:relu-parallelization}, we construct the ReLU neural network 
$$
\boldsymbol{s}^{(1)}(t,\boldsymbol{x},\boldsymbol{w}):=\left[ [ \boldsymbol{s}^{(1)}(t,\boldsymbol{x},\boldsymbol{w})]_1, [ \boldsymbol{s}^{(1)}(t,\boldsymbol{x},\boldsymbol{w})]_2, \cdots, [ \boldsymbol{s}^{(1)}(t,\boldsymbol{x},\boldsymbol{w})]_{d_x} \right]^{\top},
$$
with network parameters
$$
\mathcal{D} = \mathcal{O}(\log^4 N),~ \mathcal{W} = \mathcal{O}(N^{d_x+d_w}\log^7 N),~ \mathcal{S} = \mathcal{O}(N^{d_x+d_w}\log^9 N),~ \mathcal{B}  = \exp\left(\mathcal{O}(\log^4 N)\right),
$$
and it satisfies $\Vert \boldsymbol{s}^{(1)}(t,\boldsymbol{x},\boldsymbol{w})\Vert_{\infty} \lesssim \frac{\sqrt{\log N}}{\sigma_t}$. Thus, the approximation error $(\text{III.A})$ is bounded by
\begin{equation}\label{eq:int-psTlogp2-term-III.A-bound}
\text{Term III.A} ~\lesssim d_xN^{-2\alpha} \lesssim N^{-2\alpha}.
\end{equation}
Next, we bound the approximation error (III.B), we always have $p_t({\boldsymbol{x}}|\boldsymbol{w}) \geq \epsilon_p=N^{-(2\alpha + 1)}$, and in this region, $\Vert \nabla\log p_t({\boldsymbol{x}}|\boldsymbol{w})\Vert \lesssim \frac{\sqrt{\log N}}{\sigma_t}$ and $\|\boldsymbol{x}\|_\infty\leq \mu_t+C\sigma_t\sqrt{(2\alpha+1)\log N}\leq 1+C\sqrt{(2\alpha+1)\log N}\lesssim C_0$ where we let $C_0=\cO(\sqrt{\log N})$. Now, we split $\{\boldsymbol{x}:D_t(\boldsymbol{x})\le C\sigma_t\sqrt{(2\alpha+1)\log N}\}$
into two parts:
\[
D_1:=\{\boldsymbol{x}:D_t(\boldsymbol{x})=0\}=[0,\mu_t]^{d_x},
~~
D_2:=\{\boldsymbol{x}:0<D_t(\boldsymbol{x})\le C\sigma_t\sqrt{(2\alpha+1)\log N}\}.
\]
\subparagraph{Region $D_1$.}
For $\boldsymbol{x}\in D_1$, we have $\boldsymbol{x}\in[0,\mu_t]^{d_x}$ and Lemma~\ref{lem:p-bound} gives $p_t(\boldsymbol{x}|\boldsymbol{w})\gtrsim1$. For $1\le i\le d_x$, we obtain
\begin{align*}
& \left|
\frac{[\boldsymbol{h}^{\prime}(t,\boldsymbol{x}, \boldsymbol{w})]_i}{\sigma_t} - [\nabla\log p_t(\boldsymbol{x}|\boldsymbol{w})]_i
\right| =
\left| \frac{[\boldsymbol{h}^{\prime}(t,\boldsymbol{x}, \boldsymbol{w})]_i-\sigma_t [\nabla\log p_t(\boldsymbol{x}|\boldsymbol{w})]_i}{\sigma_t}
\right| \\
& \lesssim \frac{1}{\sigma_t} 
\left| \frac{[\boldsymbol{h}_1(t,\boldsymbol{x}, \boldsymbol{w})]_i}{g_1(t,\boldsymbol{x}, \boldsymbol{w}) \vee N^{-(2\alpha+ 1)}} - \frac{\sigma_t[\nabla p_t(\boldsymbol{x}|\boldsymbol{w})]_i}{p_t(\boldsymbol{x}|\boldsymbol{w})}\right|
 \\
& \lesssim \frac{1}{\sigma_t} \cdot
\left| 
\frac{
[\boldsymbol{h}_1(t,\boldsymbol{x}, \boldsymbol{w})]_i ~p_t(\boldsymbol{x}|\boldsymbol{w}) - \sigma_t[\nabla p_t(\boldsymbol{x}|\boldsymbol{w})]_i (g_1(t,\boldsymbol{x}, \boldsymbol{w}) \vee N^{-(2\alpha+ 1)})
}{(g_1(t,\boldsymbol{x}, \boldsymbol{w}) \vee N^{-(2\alpha+ 1)}) p_t(\boldsymbol{x}|\boldsymbol{w})}
\right| \\
& \lesssim \frac{\Big|[\boldsymbol{h}_1(t,\boldsymbol{x}, \boldsymbol{w})]_i \Big| \cdot \Big|p_t(\boldsymbol{x}|\boldsymbol{w}) - g_1(t,\boldsymbol{x}, \boldsymbol{w}) \vee N^{-(2\alpha+ 1)}\Big|}{\sigma_t\cdot (g_1(t,\boldsymbol{x}, \boldsymbol{w}) \vee N^{-(2\alpha+ 1)})} + 
\frac{
\Big|[\boldsymbol{h}_1(t,\boldsymbol{x}, \boldsymbol{w})]_i - \sigma_t[\nabla p_t(\boldsymbol{x}|\boldsymbol{w})]_i\Big|
}{\sigma_t \cdot p_t(\boldsymbol{x}|\boldsymbol{w})}.
\end{align*}
Using $p_t(\boldsymbol{x}|\boldsymbol{w})\gtrsim1$, 
$g_1(t,\boldsymbol{x}, \boldsymbol{w}) \vee N^{-(2\alpha+ 1)}\gtrsim1$, and $|[\boldsymbol{h}_1(t,\boldsymbol{x},\boldsymbol{w})]_i|\lesssim\sqrt{\log N}$, we obtain
\begin{equation}\label{eq:h'sigma-Tp-bound-r-in-mut}
\begin{aligned}
\left|
\frac{[\boldsymbol{h}^{\prime}(t,\boldsymbol{x}, \boldsymbol{w})]_i}{\sigma_t} - [\nabla\log p_t(\boldsymbol{x}|\boldsymbol{w})]_i
\right| &\lesssim \frac{\sqrt{\log N}}{\sigma_t} \big|p_t(\boldsymbol{x}|\boldsymbol{w}) - g_1(t,\boldsymbol{x}, \boldsymbol{w}) \vee N^{-(2\alpha+ 1)}\big| \\
&~~ + \frac{1}{\sigma_t}\cdot \big|[\boldsymbol{h}_1(t,\boldsymbol{x}, \boldsymbol{w})]_i - \sigma_t[\nabla p_t(\boldsymbol{x}|\boldsymbol{w})]_i\big|\\
&\lesssim \frac{\sqrt{\log N}}{\sigma_t} \Big(
\big| p_t(\boldsymbol{x}|\boldsymbol{w}) - g_1(t,\boldsymbol{x}, \boldsymbol{w})\big| \\
&~~ + \Vert \boldsymbol{h}_1(t,\boldsymbol{x}, \boldsymbol{w}) - \sigma_t \nabla p_t(\boldsymbol{x}|\boldsymbol{w})\Vert
\Big).
\end{aligned}
\end{equation}
Here,  we use $p_t(\boldsymbol{x}|\boldsymbol{w}) \geq N^{-(2\alpha+ 1)}$ and $|p_t(\boldsymbol{x}|\boldsymbol{w}) \vee N^{-(2\alpha+ 1)} - g_1(t,\boldsymbol{x}, \boldsymbol{w}) \vee N^{-(2\alpha+ 1)}| \leq |p_t(\boldsymbol{x}|\boldsymbol{w}) - g_1(t,\boldsymbol{x}, \boldsymbol{w})|$. Substituting \eqref{eq:h'sigma-Tp-bound-r-in-mut} into the integral gives
$$
\begin{aligned}
& \int_{D_1} p_t(\boldsymbol{x}|\boldsymbol{w}) \mathbbm{1}_{\{p_t(\boldsymbol{x}|\boldsymbol{w}) \geq N^{-(2\alpha+ 1)}\}} \left\Vert
\frac{\boldsymbol{h}^{\prime}(t,\boldsymbol{x}, \boldsymbol{w})}{\sigma_t} - \nabla\log p_t(\boldsymbol{x}|\boldsymbol{w})
\right\Vert^2  d\boldsymbol{x} \\
& \lesssim \int_{D_1} \frac{\log N}{\sigma_t^2} \left(
| p_t(\boldsymbol{x}|\boldsymbol{w}) - g_1(t,\boldsymbol{x}, \boldsymbol{w})|^2 + \Vert \boldsymbol{h}_1(t,\boldsymbol{x}, \boldsymbol{w}) - \sigma_t \nabla p_t(\boldsymbol{x}|\boldsymbol{w})\Vert^2
\right)  d\boldsymbol{x} \\
& \lesssim \frac{\log N}{\sigma_t^2} \int_{D_1 } \Bigg(\left|
\int_{\mathbb{R}^{d_x}}\frac{1}{\sigma_t^{d_x}(2\pi)^{\frac{d_x}{2}}} [p(\boldsymbol{z}|\boldsymbol{w})-g(\boldsymbol{z}|\boldsymbol{w})]\exp\left(-\frac{\Vert\boldsymbol{x}-\mu_t \boldsymbol{z}\Vert^2}{2\sigma_t^2}\right) d\boldsymbol{z} \right|^2 \\
& ~~~~~~~~~ + \left\Vert \int_{\mathbb{R}^{d_x}}\frac{\boldsymbol{x}-\mu_t \boldsymbol{z}}{\sigma_t^{d_x+1}(2\pi)^{\frac{d_x}{2}}} [p(\boldsymbol{z}|\boldsymbol{w})-g(\boldsymbol{z}|\boldsymbol{w})]\exp\left(-\frac{\Vert\boldsymbol{x}-\mu_t \boldsymbol{z}\Vert^2}{2\sigma_t^2}\right) d\boldsymbol{z} \right\Vert^2 \Bigg)  d\boldsymbol{x} \\
& \lesssim \frac{\log N}{\sigma_t^2} \int_{D_1} \int_{\mathbb{R}^{d_x}}\frac{1+\frac{\Vert\boldsymbol{x}-\mu_t\boldsymbol{z}\Vert^2}{\sigma_t^2}}{\sigma_t^{d_x}(2\pi)^{\frac{d_x}{2}}} | p(\boldsymbol{z}|\boldsymbol{w})-g(\boldsymbol{z}|\boldsymbol{w}) |^2\exp\left(-\frac{\Vert\boldsymbol{x}-\mu_t\boldsymbol{z}\Vert^2}{2\sigma_t^2}\right) d\boldsymbol{z} d\boldsymbol{x}\\
& \lesssim \frac{\log N}{\sigma_t^2 \mu_t^{d_x}} \int_{D_1}N^{-2\alpha}  d\boldsymbol{x} \\
&\lesssim \frac{N^{-2\alpha}\log N}{\sigma_t^2}.
\end{aligned}
$$

\subparagraph{Region $D_2$.}
For $1 \leq i \leq d_x$, we have
$$
\begin{aligned}
\left| 
\frac{[\boldsymbol{h}^{\prime}(t,\boldsymbol{x}, \boldsymbol{w})]_i}{\sigma_t} - [\nabla\log p_t(\boldsymbol{x}|\boldsymbol{w})]_i
\right| &\leq \left| 
\frac{[\boldsymbol{h}_1(t,\boldsymbol{x}, \boldsymbol{w})]_i}{\sigma_t\cdot (g_1(t,\boldsymbol{x}, \boldsymbol{w}) \vee N^{-(2\alpha+ 1)})} - [\nabla\log p_t(\boldsymbol{x}|\boldsymbol{w})]_i
\right| \\
& \lesssim \frac{\left|
[\boldsymbol{h}_1(t,\boldsymbol{x}, \boldsymbol{w})]_i - \sigma_t [\nabla p_t(\boldsymbol{x}|\boldsymbol{w})]_i
\right|}{\sigma_t\cdot (g_1(t,\boldsymbol{x},\boldsymbol{w}) \vee N^{-(2\alpha+ 1)})}  \\
&~~ + |[\nabla p_t(\boldsymbol{x}|\boldsymbol{w})]_i|\cdot \left|\frac{1}{g_1(t,\boldsymbol{x}, \boldsymbol{w}) \vee N^{-(2\alpha+ 1)}} - \frac{1}{p_t(\boldsymbol{x}|\boldsymbol{w})}\right|.
\end{aligned}
$$
For the first term on the right-hand side, we use $g_1(t,\boldsymbol{x}, \boldsymbol{w}) \vee N^{-(2\alpha+ 1)} \geq N^{-(2\alpha+ 1)}$ to get
$$
\frac{\left|
[\boldsymbol{h}_1(t,\boldsymbol{x}, \boldsymbol{w})]_i - \sigma_t [\nabla p_t(\boldsymbol{x}|\boldsymbol{w})]_i
\right|}{\sigma_t \cdot (g_1(t,\boldsymbol{x}, \boldsymbol{w}) \vee N^{-(2\alpha+ 1)})} \lesssim \frac{\left|
[\boldsymbol{h}_1(t,\boldsymbol{x}, \boldsymbol{w})]_i - \sigma_t [\nabla p_t(\boldsymbol{x}|\boldsymbol{w})]_i
\right|}{\sigma_t N^{-(2\alpha+ 1)}}.
$$
For the second term, using $\frac{|[\nabla p_t(\boldsymbol{x}|\boldsymbol{w})]_i|}{p_t(\boldsymbol{x}|\boldsymbol{w})}=|[\nabla\log p_t(\boldsymbol{x}|\boldsymbol{w})]_i|\lesssim\frac{\sqrt{\log N}}{\sigma_t}$ and $g_1(t,\boldsymbol{x}, \boldsymbol{w}) \vee N^{-(2\alpha+ 1)} \geq N^{-(2\alpha+ 1)}$ yields
\begin{align*}
&|[\nabla p_t(\boldsymbol{x}|\boldsymbol{w})]_i|\cdot \left|\frac{1}{g_1(t,\boldsymbol{x}, \boldsymbol{w}) \vee N^{-(2\alpha+ 1)}} - \frac{1}{p_t(\boldsymbol{x}|\boldsymbol{w})}\right| \\
&\lesssim \frac{\sqrt{\log N}}{\sigma_t} \cdot \frac{|p_t(\boldsymbol{x}|\boldsymbol{w}) - g_1(t,\boldsymbol{x}, \boldsymbol{w}) \vee N^{-(2\alpha+ 1)}|}{g_1(t,\boldsymbol{x}, \boldsymbol{w}) \vee N^{-(2\alpha+ 1)}} \\
& \lesssim \frac{N^{2\alpha+ 1}\sqrt{\log N}}{\sigma_t} |p_t(\boldsymbol{x}|\boldsymbol{w}) - g_1(t,\boldsymbol{x}, \boldsymbol{w})|.
\end{align*}
Combining these two bounds gives
\begin{equation} \label{eq:h'sigma-Tp-bound-r-notin-mut}
\begin{aligned}
\left\Vert
\frac{\boldsymbol{h}^{\prime}(t,\boldsymbol{x}, \boldsymbol{w})}{\sigma_t} - \nabla\log p_t(\boldsymbol{x}|\boldsymbol{w})
\right\Vert \lesssim  & \frac{N^{2\alpha+ 1}\sqrt{\log N}}{\sigma_t} \Big(
|p_t(\boldsymbol{x}|\boldsymbol{w}) - g_1(t,\boldsymbol{x}, \boldsymbol{w})|\\
& + \Vert \boldsymbol{h}_1(t,\boldsymbol{x}, \boldsymbol{w}) - \sigma_t \nabla p_t(\boldsymbol{x}|\boldsymbol{w})\Vert
\Big).
\end{aligned}
\end{equation}
Substituting \eqref{eq:h'sigma-Tp-bound-r-notin-mut} into the integral gives
$$
\begin{aligned}
&  \int_{D_2} p_t(\boldsymbol{x}|\boldsymbol{w}) \mathbbm{1}_{\{p_t(\boldsymbol{x}|\boldsymbol{w}) \geq N^{-(2\alpha+ 1)}\}} \left\Vert
\frac{\boldsymbol{h}^{\prime}(t,\boldsymbol{x}, \boldsymbol{w})}{\sigma_t} - \nabla\log p_t(\boldsymbol{x}|\boldsymbol{w})
\right\Vert^2  d\boldsymbol{x} \\
& \lesssim \int_{D_2} \frac{N^{4\alpha + 2}\log N}{\sigma_t^2} \Big(
| p_t(\boldsymbol{x}|\boldsymbol{w}) - g_1(t,\boldsymbol{x}, \boldsymbol{w})|^2+ \Vert \boldsymbol{h}_1(t,\boldsymbol{x}, \boldsymbol{w}) - \sigma_t \nabla p_t(\boldsymbol{x}|\boldsymbol{w})\Vert^2
\Big)  d\boldsymbol{x} \\
& \lesssim \frac{N^{4\alpha + 2}\log N}{\sigma_t^2} \int_{D_2} \Bigg(\Bigg|
\int_{\mathbb{R}^{d_x}}\frac{1}{\sigma_t^{d_x}(2\pi)^{\frac{d_x}{2}}} [p(\boldsymbol{z}|\boldsymbol{w})-g(\boldsymbol{z}|\boldsymbol{w})]\exp\Bigg(-\frac{\Vert\boldsymbol{x}-\mu_t\boldsymbol{z}\Vert^2}{2\sigma_t^2}\Bigg) d\boldsymbol{z} \Bigg|^2 \\
& ~~~ + \left\Vert \int_{\mathbb{R}^{d_x}}\frac{\boldsymbol{x}-\mu_t\boldsymbol{z}}{\sigma_t^{d_x+1}(2\pi)^{\frac{d_x}{2}}} [p(\boldsymbol{z}|\boldsymbol{w})-g(\boldsymbol{z}|\boldsymbol{w})]\exp\left(-\frac{\Vert\boldsymbol{x}-\mu_t\boldsymbol{z}\Vert^2}{2\sigma_t^2}\right) d\boldsymbol{z} \right\Vert^2 \Bigg)  d\boldsymbol{x} \\
& \lesssim \frac{N^{4\alpha + 2}\log N}{\sigma_t^2} \int_{D_2} \Bigg(
\int_{\mathbb{R}^{d_x}}\frac{1}{\sigma_t^{d_x}(2\pi)^{\frac{d_x}{2}}} |p(\boldsymbol{z}|\boldsymbol{w})-g(\boldsymbol{z}|\boldsymbol{w})|^2\exp\left(-\frac{\Vert\boldsymbol{x}-\mu_t\boldsymbol{z}\Vert^2}{2\sigma_t^2}\right) d\boldsymbol{z} \\
& ~~~+  \int_{\mathbb{R}^{d_x}}\frac{\Vert\boldsymbol{x}-\mu_t\boldsymbol{z}\Vert^2}{\sigma_t^{d_x+2}(2\pi)^{\frac{d_x}{2}}} |p(\boldsymbol{z}|\boldsymbol{w})-g(\boldsymbol{z}|\boldsymbol{w})|^2\exp\left(-\frac{\Vert\boldsymbol{x}-\mu_t\boldsymbol{z}\Vert^2}{2\sigma_t^2}\right) d\boldsymbol{z}  \Bigg)  d\boldsymbol{x}. 
\end{aligned}
$$
By setting  $\epsilon = N^{-(6\alpha + 2)}$ in Lemma \ref{lem:integral-clipping} and replacing $p(\boldsymbol{z}|\boldsymbol{w})$ with $|p(\boldsymbol{z}|\boldsymbol{w})-g(\boldsymbol{z}|\boldsymbol{w})|^2$, we obtain 
$$
\begin{aligned}
& \int_{D_2}
\int_{\mathbb{R}^{d_x}}\frac{|p(\boldsymbol{z}|\boldsymbol{w})-g(\boldsymbol{z}|\boldsymbol{w})|^2}{\sigma_t^{d_x}(2\pi)^{\frac{d_x}{2}}}\exp\left(-\frac{\Vert\boldsymbol{x}-\mu_t\boldsymbol{z}\Vert^2}{2\sigma_t^2}\right) d\boldsymbol{z}  d\boldsymbol{x} \\
&	\lesssim \int_{D_2}
\int_{\Vert\boldsymbol{x} - \mu_t\boldsymbol{z}\Vert_{\infty} \leq \mathcal{O}(1)\sigma_t\sqrt{\log N}}\frac{|p(\boldsymbol{z}|\boldsymbol{w})-g(\boldsymbol{z}|\boldsymbol{w})|^2}{\sigma_t^{d_x}(2\pi)^{\frac{d_x}{2}}} \exp\left(-\frac{\Vert\boldsymbol{x}-\mu_t\boldsymbol{z}\Vert^2}{2\sigma_t^2}\right) d\boldsymbol{z} d\boldsymbol{x} + N^{-(6\alpha + 2)},
\end{aligned}
$$
and
$$
\begin{aligned}
& \int_{D_2}
\int_{\mathbb{R}^{d_x}}\frac{\Vert\boldsymbol{x} - \mu_t \boldsymbol{z}\Vert^2 |p(\boldsymbol{z}|\boldsymbol{w})-g(\boldsymbol{z}|\boldsymbol{w})|^2}{\sigma_t^{d_x+2}(2\pi)^{\frac{d_x}{2}}}\exp\left(-\frac{\Vert\boldsymbol{x}-\mu_t\boldsymbol{z}\Vert^2}{2\sigma_t^2}\right) d\boldsymbol{z} d\boldsymbol{x} \\
&\lesssim \int_{D_2}
\int_{\Vert\boldsymbol{x} - \mu_t \boldsymbol{z}\Vert_{\infty} \leq \mathcal{O}(1)\sigma_t\sqrt{\log N}}\frac{\log N |p(\boldsymbol{z}|\boldsymbol{w})-g(\boldsymbol{z}|\boldsymbol{w})|^2}{\sigma_t^{d_x}(2\pi)^{\frac{d_x}{2}}} \exp\left(-\frac{\Vert\boldsymbol{x}-\mu_t\boldsymbol{z}\Vert^2}{2\sigma_t^2}\right) d\boldsymbol{z} d\boldsymbol{x}+ N^{-(6\alpha + 2)}.
\end{aligned}
$$
On the clipped region, we have
$\|\boldsymbol{x}-\mu_t\boldsymbol{z}\|_\infty\le C\sigma_t\sqrt{\log N}$, and hence
$\|\boldsymbol{x}-\mu_t\boldsymbol{z}\|_\infty^2\lesssim\sigma_t^2\log N\lesssim\log N$. 
Now take $\boldsymbol{x}\in D_2$. Since $D_t(\boldsymbol{x})>0$, there exists an index $j$ such that either $x_j<0$ or $x_j>\mu_t$. Since $\boldsymbol{z}\in[0,1]^{d_x}$, if $x_j<0$, then
\[
\mu_tz_j\le |x_j-\mu_tz_j|\le C\sigma_t\sqrt{\log N},
\]
and if $x_j>\mu_t$, then
\[
\mu_t(1-z_j)\le |x_j-\mu_tz_j|\le C\sigma_t\sqrt{\log N}.
\]
For sufficiently large $N$, $\mu_t=e^{-2t}\asymp1$ for $t\in[T_1,3N^{-1}]$. Hence $z_j\le C\sigma_t\sqrt{\log N}$ or  $1-z_j\le C\sigma_t\sqrt{\log N}.$
For sufficiently large $N$, $C\sigma_t\sqrt{\log N}\le a_0$. Hence, for this index $j$, either $z_j\le a_0$ or $z_j\ge 1-a_0$, which implies $\boldsymbol{z}\in[0,1]^{d_x}\backslash[a_0,1-a_0]^{d_x}.$
Thus, by Lemma~\ref{lem:density-smooth-local-polynomials}, it holds that
$$
\begin{aligned}
& \int_{D_2}
\int_{\Vert\boldsymbol{x} -\mu_t \boldsymbol{z}\Vert_{\infty} \leq \mathcal{O}(1)\sigma_t\sqrt{\log N}}\frac{|p(\boldsymbol{z}|\boldsymbol{w})-g(\boldsymbol{z}|\boldsymbol{w})|^2}{\sigma_t^{d_x}(2\pi)^{\frac{d_x}{2}}} \exp\left(-\frac{\Vert\boldsymbol{x}-\mu_t\boldsymbol{z}\Vert^2}{2\sigma_t^2}\right) d\boldsymbol{z} d\boldsymbol{x} \\
&\lesssim \int_{D_2}
\int_{\{\boldsymbol{z}\in [0,1]^{d_x}\backslash [a_0, 1-a_0]^{d_x}\}} \frac{|p(\boldsymbol{z}|\boldsymbol{w})-g(\boldsymbol{z}|\boldsymbol{w})|^2}{\sigma_t^{d_x}(2\pi)^{\frac{d_x}{2}}} \exp\left(-\frac{\Vert\boldsymbol{x}-\mu_t\boldsymbol{z}\Vert^2}{2\sigma_t^2}\right) d\boldsymbol{z} d\boldsymbol{x} \\
& \lesssim \int_{\{\boldsymbol{z}\in [0,1]^{d_x}\backslash [a_0, 1-a_0]^{d_x}\}}|p(\boldsymbol{z}|\boldsymbol{w})-g(\boldsymbol{z}|\boldsymbol{w})|^2  d\boldsymbol{z} \\
&\lesssim N^{-(6\alpha + 4)}.
\end{aligned}
$$
Thus, we have
\begin{equation}\label{eq:int-psTlogp2-term-III.B-bound}
\begin{aligned}
& \int_{D_2} p_t(\boldsymbol{x}|\boldsymbol{w}) \mathbbm{1}_{\{p_t(\boldsymbol{x}|\boldsymbol{w}) \geq N^{-(2\alpha+ 1)}\}} \left\Vert
\frac{\boldsymbol{h}^{\prime}(t,\boldsymbol{x}, \boldsymbol{w})}{\sigma_t} - \nabla\log p_t(\boldsymbol{x}|\boldsymbol{w})
\right\Vert^2  d\boldsymbol{x} \\
& \lesssim  \frac{N^{4\alpha + 2}\log ^2 N}{\sigma_t^2} \cdot N^{-(6\alpha + 4)} + \frac{N^{4\alpha + 2}\log N}{\sigma_t^2} \cdot N^{-(6\alpha + 2)} \\
& \lesssim \frac{N^{-2\alpha}\log N}{\sigma_t^2}.
\end{aligned}
\end{equation}
Combining \eqref{eq:int-psTlogp2-term-III.A-bound} and \eqref{eq:int-psTlogp2-term-III.B-bound}, we finally establish that Term III $\lesssim \frac{N^{-2\alpha}\log N}{\sigma_t^2}$. Together with the bounds for Term I and Term II (cf. \eqref{eq:int-psTlogp2-term-I-bound}, \eqref{eq:int-psTlogp2-term-II-bound}), this completes the proof of the lemma.
\end{proof}

Next, we employ ReLU neural networks to approximate $\nabla\log p_t(\boldsymbol{x}|\boldsymbol{w})$ for all $t\in [2N^{-1}, T]$. Set $t^{\prime}_0=N^{-1}$. By virtue of the Markov property, the probability density function admits the following integral representation:
$$
p_t(\boldsymbol{x}|\boldsymbol{w}) = \frac{1}{(2\pi)^{\frac{d_x}{2}}\sigma_{t-t^{\prime}_0}^{d_x}}\int_{\mathbb{R}^{d_x}} p_{t^{\prime}_0}(\boldsymbol{z}|\boldsymbol{w}) \exp\left(-\frac{\Vert\boldsymbol{x} -\mu_{t-t^{\prime}_0}  \boldsymbol{z}\Vert^2}{2\sigma_{t-t^{\prime}_0}^2}\right)  d\boldsymbol{z}.
$$
For $t\in[2N^{-1},T]$ and $t^{\prime}_0=N^{-1}$, we have
\[
1\ge \frac{\sigma^2_{t-t^{\prime}_0}}{\sigma_t^2}
=1-\frac{e^{4t^{\prime}_0}-1}{e^{4t}-1}
\ge 1-\frac{e^{4\cdot N^{-1}}-1}{e^{4\cdot 2N^{-1}}-1}.
\]
Since the last quantity is bounded away from zero for sufficiently large $N$. Hence
\begin{equation}\label{eq:sigma-mu-equivalance}
\mu_{t-t^{\prime}_0}\asymp \mu_t,\quad \sigma_{t-t^{\prime}_0}\asymp \sigma_t.
\end{equation}
We first approximate $g_{t^{\prime}_0}$ and present the following lemma.

\begin{lemma}\label{lem:density-pT0-local-polynomials}
Let $N\gg 1$, there exists a constant $C > 0$ and a function $g_{t^{\prime}_0}$ such that
$$
\left(\int_{\mathbb{R}^{d_x}} \left|p_{t^{\prime}_0}(\boldsymbol{x}|\boldsymbol{w}) - g_{t^{\prime}_0}(\boldsymbol{x}|\boldsymbol{w}) \right|^2  d\boldsymbol{x} \right)^{\frac{1}{2}} \lesssim N^{-(3\alpha + 1)},
$$
where $g_{t^{\prime}_0}$ has the following form
\begin{align*}
g_{{t^{\prime}_0}}(\boldsymbol{x}|\boldsymbol{w}) =& {t^{\prime}_0}^{-\frac{k}{2}}\sum_{\boldsymbol{m},\boldsymbol{n}}~ \sum_{\norm{\boldsymbol{u}}_1+\norm{\boldsymbol{v}}_1< k}C_{\boldsymbol{m},\boldsymbol{n},\boldsymbol{u},\boldsymbol{v}}\\
&\cdot g_{\boldsymbol{m},\boldsymbol{n},\boldsymbol{u},\boldsymbol{v}}\left(\frac{\boldsymbol{x}}{2(1+ C\sqrt{\log N})}+\frac{1}{2},\boldsymbol{w}\right) \mathbbm{1}_{\{\Vert \boldsymbol{x} \Vert_\infty \leq 1+ C\sqrt{\log N}\}},
\end{align*}
with $k = \lfloor 6\alpha+3 \rfloor + 1 $.
\end{lemma}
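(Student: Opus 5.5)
The plan is to view $p_{t'_0}(\cdot|\boldsymbol{w})$, with $t'_0=N^{-1}$, as a smooth function of $(\boldsymbol{x},\boldsymbol{w})$. We would approximate it by the same local-Taylor construction as in Lemma~\ref{lem:density-holder-local-polynomials}, applied after localizing and rescaling $\boldsymbol{x}$ to the unit cube. The construction has three steps.

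\textbf{Step 1: localization.} By Lemma~\ref{lem:p-bound}, $p_{t'_0}(\boldsymbol{x}|\boldsymbol{w})\lesssim\exp(-(\|\boldsymbol{x}\|_\infty-\mu_{t'_0})_+^2/(2\sigma_{t'_0}^2))$. Take $R:=1+C\sqrt{\log N}$ with $C$ large. Then
\[
\int_{\|\boldsymbol{x}\|_\infty>R}p_{t'_0}^2\,d\boldsymbol{x}\lesssim N^{-2(3\alpha+1)},
\]
since $\sigma_{t'_0}\le 1$ and the Gaussian tail beats any polynomial once $C$ is large. So on $\{\|\boldsymbol{x}\|_\infty>R\}$ we may simply set $g_{t'_0}=0$, which is exactly what the indicator in the stated form does.

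\textbf{Step 2: derivative bounds.} On the box $[-R,R]^{d_x}$, apply the change of variables $\boldsymbol{y}=\frac{\boldsymbol{x}}{2R}+\frac12\in[0,1]^{d_x}$, and set $f(\boldsymbol{y},\boldsymbol{w}):=p_{t'_0}(2R(\boldsymbol{y}-\tfrac12)|\boldsymbol{w})$. By Lemma~\ref{lem:derivatives-bound}, every mixed derivative of total order $k=\lfloor6\alpha+3\rfloor+1$ satisfies
\[
|\partial^{\boldsymbol{u}}_{\boldsymbol{x}}\partial^{\boldsymbol{v}}_{\boldsymbol{w}}p_{t'_0}|\lesssim\sigma_{t'_0}^{-\|\boldsymbol{u}\|_1}\lesssim t_0'^{-k/2},
\]
using $\sigma_{t'_0}^2\asymp t'_0$. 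The $\boldsymbol{w}$-derivatives up to this order are allowed precisely because Assumption~\ref{apt:diffusion-density-derivative} covers orders up to $\lfloor6\alpha+3\rfloor+1$. Under the rescaling, $f\in\mathcal{H}^k$ with norm $\lesssim R^k t_0'^{-k/2}$, and the factor $R^k$ is only polylogarithmic.

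\textbf{Step 3: local Taylor approximation.} Repeat the partition-of-unity and local-Taylor construction of Lemma~\ref{lem:density-holder-local-polynomials} on the $N$-grid with polynomial degree $<k$. Pull the normalization $t_0'^{-k/2}$ out of the coefficients, so that $C_{\boldsymbol{m},\boldsymbol{n},\boldsymbol{u},\boldsymbol{v}}$ are bounded up to the polylog factor. This gives a uniform error
\[
\lesssim R^k\,t_0'^{-k/2}N^{-k}=R^kN^{-k/2}.
\]
Since $k/2>3\alpha+1.5$, this is $\le N^{-(3\alpha+1)}$ up to logs, and the extra half power absorbs $R^k$ and the $L^2$ volume factor $(2R)^{d_x/2}$. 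Summing the Step 3 error over the box with the Step 1 tail bound gives the stated $L^2$ estimate.

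The main obstacle is the bookkeeping of the interplay between $t_0'^{-1/2}=N^{1/2}$ blowing up the derivatives and $N^{-k}$ from the grid. The choice of $k$ is what guarantees a net gain of $N^{-(3\alpha+1)}$. A secondary subtlety is that any polylog factors $R^k$ must be absorbed by the slack in the exponent, or folded into the coefficients, since the statement hides them.
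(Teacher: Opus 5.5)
Your proposal is correct and follows essentially the same route as the paper: Gaussian-tail localization to $\|\boldsymbol{x}\|_\infty\le 1+C\sqrt{\log N}$, the derivative bound $\sigma_{t'_0}^{-k}\lesssim t_0'^{-k/2}$ from Lemma~\ref{lem:derivatives-bound}, rescaling to the unit cube, and the local Taylor construction of Lemma~\ref{lem:density-holder-local-polynomials}, which gives the uniform error $R^kN^{-k/2}$ once the factor $t_0'^{-k/2}$ is pulled out of the coefficients. Your exponent check ($k/2>3\alpha+\tfrac32$, which leaves room to absorb $R^{k+d_x/2}$) is in fact more precise than the paper's stated justification $k\ge 3\alpha+2$.
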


\begin{proof}
Let $k \geq 1,0\leq l\leq k$. According to Lemma \ref{lem:derivatives-bound}, for any $\boldsymbol{x}\in\mathbb{R}^{d_x},\boldsymbol{w}\in [0,1]^{d_w}$, we have
$$
\Vert \partial_{x_{i_1}x_{i_2}\cdots x_{i_{k-l}}}\partial_{w_{j_1}\cdots w_{j_l}} p_{{t^{\prime}_0}} (\boldsymbol{x}|\boldsymbol{w})\Vert \lesssim \frac{1}{\sigma_{{t^{\prime}_0}}^k} \lesssim {t^{\prime}_0}^{-\frac{k}{2}}.
$$
Thus $p_{{t^{\prime}_0}} {t^{\prime}_0}^{\frac{k}{2}}$ is $k$-H\"older continuous on $\mathbb{R}^{d_x}$ with some constant $C_k$. By replacing $p_{{t^{\prime}_0}}$ with $p_{{t^{\prime}_0}}^2$ in Lemma \ref{lem:clipping-bound}, we can claim that there exists a constant $C$ such that $$\left(\int_{\Vert\boldsymbol{x}\Vert_\infty > 1+ C\sqrt{\log N}}p_{{t^{\prime}_0}}^2(\boldsymbol{x}|\boldsymbol{w})  d\boldsymbol{x} \right)^{\frac{1}{2}}\leq \left(\int_{\Vert\boldsymbol{x}\Vert_\infty > \mu_{{t^{\prime}_0}}+ C\sigma_{{t^{\prime}_0}}\sqrt{\log N}}p_{{t^{\prime}_0}}^2(\boldsymbol{x}|\boldsymbol{w})  d\boldsymbol{x} \right)^{\frac{1}{2}} \lesssim N^{-(3\alpha + 1)}.$$
Following a similar approach to Lemma \ref{lem:density-holder-local-polynomials}, we let $$
f(\boldsymbol{x},\boldsymbol{w}) = {t^{\prime}_0}^{\frac{k}{2}}p_{{t^{\prime}_0}}\left((1+ C\sqrt{\log N})(2\boldsymbol{x}-1)|\boldsymbol{w} \right), ~ \boldsymbol{x} \in [0,1]^{d_x},~\boldsymbol{w}\in [0,1]^{d_w}.
$$ 
Here, \( \Vert f \Vert_{\mathcal{H}([0,1]^{d_x+d_w})} \lesssim 2^k(1+ C\sqrt{\log N})^k = \mathcal{O}(\log^{\frac{k}{2}} N) \). Thus, there exists a function \( g_{t,0} \) satisfying
$$
\left| g_{t,0}(\boldsymbol{x}|\boldsymbol{w}) - {t^{\prime}_0}^{\frac{k}{2}}p_{{t^{\prime}_0}}(\boldsymbol{x}|\boldsymbol{w}) \right| \lesssim N^{-k}\log^{\frac{k}{2}}N, ~ \Vert\boldsymbol{x}\Vert_\infty \leq 1 + C\sqrt{\log N},
$$
where 
\begin{align*}
g_{t,0}(\boldsymbol{x}|\boldsymbol{w}) =& \sum_{\boldsymbol{m},\boldsymbol{n}}~ \sum_{\norm{\boldsymbol{u}}_1+\norm{\boldsymbol{v}}_1< k}C^\prime_{\boldsymbol{m},\boldsymbol{n},\boldsymbol{u},\boldsymbol{v}}\\
&\cdot g_{\boldsymbol{m},\boldsymbol{n},\boldsymbol{u},\boldsymbol{v}}\left(\frac{\boldsymbol{x}}{2(1+ C\sqrt{\log N})}+\frac{1}{2},\boldsymbol{w}\right) \mathbbm{1}_{\{\Vert \boldsymbol{x} \Vert_\infty \leq 1+ C\sqrt{\log N}\}},
\end{align*}
and $C^\prime_{\boldsymbol{m},\boldsymbol{n},\boldsymbol{u},\boldsymbol{v}}=\cO(\log^{\frac{k}{2}}N)$.
We then set \( g_{{t^{\prime}_0}}(\boldsymbol{x}|\boldsymbol{w}) = {t^{\prime}_0}^{-\frac{k}{2}}g_{t,0}(\boldsymbol{x}|\boldsymbol{w}) \), yielding
$$
\left|p_{{t^{\prime}_0}}(\boldsymbol{x}|\boldsymbol{w}) - g_{{t^{\prime}_0}}(\boldsymbol{x}|\boldsymbol{w})\right| \lesssim N^{-\frac{k}{2}}\log^{\frac{k}{2}}N.
$$
Choosing \( k = \lfloor 6\alpha+3 \rfloor + 1 \geq 3\alpha + 2 \), we find
$$
\begin{aligned}
\left(\int_{\Vert\boldsymbol{x}\Vert_\infty \leq 1+ C\sqrt{\log N}}\left|p_{{t^{\prime}_0}}(\boldsymbol{x}|\boldsymbol{w}) - g_{{t^{\prime}_0}}(\boldsymbol{x}|\boldsymbol{w})\right|^2  d\boldsymbol{x} \right) ^{\frac{1}{2}} \lesssim N^{-(3\alpha + 1)}
\end{aligned}
$$
for sufficient large $N$. Finally, \( g_{{t^{\prime}_0}} \) satisfies
$$
\begin{aligned}
&\left(\int_{\mathbb{R}^{d_x}} \left| p_{{t^{\prime}_0}}(\boldsymbol{x}|\boldsymbol{w}) - g_{{t^{\prime}_0}}(\boldsymbol{x}|\boldsymbol{w}) \right|^2  d\boldsymbol{x} \right)^{\frac{1}{2}}\\
& \leq \left(\int_{\Vert\boldsymbol{x}\Vert_\infty \leq 1+C\sqrt{\log N}}\left|p_{{t^{\prime}_0}}(\boldsymbol{x}|\boldsymbol{w}) - g_{{t^{\prime}_0}}(\boldsymbol{x}|\boldsymbol{w})\right|^2  d\boldsymbol{x} + \int_{\Vert\boldsymbol{x}\Vert_\infty > 1 + C\sqrt{\log N}}p_{{t^{\prime}_0}}^2(\boldsymbol{x}|\boldsymbol{w})  d\boldsymbol{x} \right)^{\frac{1}{2}} \\
&\lesssim {N}^{-(3\alpha + 1)}.
\end{aligned}
$$
We have therefore established the desired result.
\end{proof}

Based on Lemma \ref{lem:density-pT0-local-polynomials}, we can construct a ReLU neural network to approximate $\nabla\log p_t(\boldsymbol{x}|\boldsymbol{w})$ on $[2N^{-1}, T]$.

\begin{lemma} \label{lem:score-approximate-interval-2}
Let $N \gg 1$, there exists a ReLU neural network $ \boldsymbol{s}^{(2)}\in \cG(\mathcal{D},\mathcal{W},\mathcal{S},\mathcal{B})$ with
$$
\mathcal{D} = \mathcal{O}(\log^4 N),~\mathcal{W} = \mathcal{O}(N^{d_x+d_w}\log^7 N),~\mathcal{S}= \mathcal{O}({N}^{d_x+d_w}\log^9 N),~ \mathcal{B} = \exp\left(\mathcal{O}(\log^4 N)\right)
$$
that satisfies
$$
\int_{\mathbb{R}^{d_x}} p_t(\boldsymbol{x}|\boldsymbol{w})\Vert  \boldsymbol{s}^{(2)}(t,\boldsymbol{x}, \boldsymbol{w}) - \nabla\log p_t(\boldsymbol{x}|\boldsymbol{w})\Vert^2  d\boldsymbol{x} \lesssim \frac{N^{-2\alpha}\log N}{\sigma_t^2}, ~~ t\in [2N^{-1}, T].
$$
Moreover, we can take $ \boldsymbol{s}^{(2)}$ satisfying $\Vert  \boldsymbol{s}^{(2)} (t, \boldsymbol{x},\boldsymbol{w})\Vert_{\infty} \lesssim \frac{\sqrt{\log N}}{\sigma_t}$. 
\end{lemma}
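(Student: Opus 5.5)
The plan mirrors the proof of Lemma~\ref{lem:score-approximate-interval-1}. The one change is that the raw density $p(\cdot|\boldsymbol w)$ is replaced by the smoothed density $p_{t_0'}(\cdot|\boldsymbol w)$, using the Markov representation
\[
p_t(\boldsymbol x|\boldsymbol w)=\int p_{t_0'}(\boldsymbol z|\boldsymbol w)\,\phi_{\sigma_{t-t_0'}}(\boldsymbol x-\mu_{t-t_0'}\boldsymbol z)\,d\boldsymbol z .
\]
Set $g_1(t,\boldsymbol x,\boldsymbol w):=\int g_{t_0'}(\boldsymbol z|\boldsymbol w)\phi_{\sigma_{t-t_0'}}(\boldsymbol x-\mu_{t-t_0'}\boldsymbol z)\,d\boldsymbol z$, and let $\boldsymbol h_1$ be the analogous quantity with the extra factor $(\mu_{t-t_0'}\boldsymbol z-\boldsymbol x)/\sigma_{t-t_0'}$. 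Here $g_{t_0'}$ is the local polynomial from Lemma~\ref{lem:density-pT0-local-polynomials}.

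\textbf{Decomposition.} As before, split the $L^2(p_t)$ error into three terms:
\begin{itemize}
\item Term I: the tail $D_t(\boldsymbol x)>C\sigma_t\sqrt{\log\epsilon_1^{-1}}$;
\item Term II: the low-density region $p_t<\epsilon_p$;
\item Term III: the remaining bulk.
\end{itemize}
Terms I and II are handled verbatim by Lemma~\ref{lem:clipping-bound}, with $\epsilon_1=\epsilon_p=N^{-(2\alpha+1)}$ and the sup-norm constraint $\|\boldsymbol s^{(2)}\|_\infty\lesssim\sqrt{\log N}/\sigma_t$ enforced by a final clipping layer. On the bulk, Lemma~\ref{lem:derivatives-bound} gives $\|\nabla\log p_t\|\lesssim\sqrt{\log N}/\sigma_t$. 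This justifies defining
\[
\boldsymbol h'=\mathrm{clip}\!\left(\frac{\boldsymbol h_1}{g_1\vee N^{-(2\alpha+1)}},\,\pm C_1\sqrt{\log N}\right),
\]
and splitting Term III into III.A (network versus $\boldsymbol h'/\sigma_t$) and III.B ($\boldsymbol h'/\sigma_t$ versus the true score).

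\textbf{Term III.B.} Use the pointwise bound \eqref{eq:h'sigma-Tp-bound-r-notin-mut}, which costs a factor $N^{4\alpha+2}$. Then apply Cauchy--Schwarz against the Gaussian kernel, together with \eqref{eq:sigma-mu-equivalance} so that $\sigma_{t-t_0'}\asymp\sigma_t$ and $\mu_{t-t_0'}\asymp\mu_t$. This yields
\[
\int(|p_t-g_1|^2+\|\boldsymbol h_1-\sigma_t\nabla p_t\|^2)\,d\boldsymbol x\lesssim \mu_t^{-d_x}\log N\,\|p_{t_0'}-g_{t_0'}\|_{L^2}^2\lesssim N^{-(6\alpha+2)}\log N .
\]
The factor $\mu_t^{-d_x}$ is harmless because it is polynomial in $N$ for $t\le T=\alpha\log N$. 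If needed, I would absorb it by slightly enlarging $k$, or by working with the rescaled variable $\boldsymbol x/\mu_t$.

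Multiplying by $N^{4\alpha+2}\log N/\sigma_t^2$ gives $N^{-2\alpha}\log^2N/\sigma_t^2$. This is one log off the target, and I would recover it by taking $k$ one larger in Lemma~\ref{lem:density-pT0-local-polynomials}. Unlike Lemma~\ref{lem:score-approximate-interval-1}, no interior/boundary split into $D_1$ and $D_2$ is needed, because the $L^2$ error of $g_{t_0'}$ is already of order $N^{-(3\alpha+1)}$ globally.

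\textbf{Term III.A.} Rerun the integral-approximation pipeline of Section~\ref{subsec:c2}--\ref{subsec:c3} with $g_{t_0'}$ in place of $g$:
\begin{itemize}
\item localize the integral (Lemma~\ref{lem:integral-clipping});
\item Taylor-expand the Gaussian factor;
\item compute the one-dimensional $\psi$-integrals in closed form and approximate them with product, reciprocal and root networks, as in Lemmas~\ref{lem:relu-approximate-psi}--\ref{lem:relu-approximate-h3}.
\end{itemize}
The inputs $\mu_{t-t_0'},\sigma_{t-t_0'}$ come from Lemma~\ref{lem:relu-ou-coefficients} applied to $t-t_0'$. The affine rescaling $\boldsymbol x\mapsto \boldsymbol x/(2(1+C\sqrt{\log N}))+1/2$ and the coefficient bound $t_0'^{-k/2}=N^{k/2}$ only add $\mathcal O(\log N)$ to the logarithms of the accuracies. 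The number of local pieces is still $N^{d_x+d_w}$, which gives the stated $\mathcal D,\mathcal W,\mathcal S,\mathcal B$. Then compose the quotient-and-clip construction used for $\boldsymbol s^{(1)}$, with accuracy $N^{-\alpha}$ per coordinate.

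\textbf{Main obstacle.} The delicate part is the error budget in III.A. The reciprocal is taken at level $N^{-(2\alpha+1)}$, and the coefficients now carry the large factor $N^{k/2}\log^{k/2}N$. So the intermediate accuracies $\epsilon_0,\epsilon$ must be polynomially small, about $N^{-\mathcal O(\alpha)}$. This keeps every $\log\epsilon^{-1}=\mathcal O(\log N)$ and hence preserves the $\log^4N$, $\log^7N$ and $\log^9N$ orders; checking that bookkeeping is the step I expect to require the most care.
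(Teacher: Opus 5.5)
Your architecture matches the paper's. You handle Terms I--II via Lemma~\ref{lem:clipping-bound}, replace $p$ and $g$ by $p_{t_0'}$ and $g_{t_0'}$ from Lemma~\ref{lem:density-pT0-local-polynomials}, skip the $D_1/D_2$ split, and get III.A by rerunning the network construction. However, Term III.B does not close as you wrote it.

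Put $\mu=\mu_{t-t_0'}$, $\sigma=\sigma_{t-t_0'}$ and $f=p_{t_0'}-g_{t_0'}$. Cauchy--Schwarz against the kernel gives $\int|p_t-g_1|^2\,d\boldsymbol x\le\mu^{-d_x}\|f\|_{L^2}^2$. You then drop $\mu^{-d_x}$ as ``harmless because polynomial in $N$''. For $t=\mathcal O(1)$ that is fine, since $\mu\asymp1$. At $t\approx T=\alpha\log N$, however, $\mu^{-d_x}\approx N^{2\alpha d_x}$, and polynomial factors are exactly what you cannot afford. After the $N^{4\alpha+2}$ coming from \eqref{eq:h'sigma-Tp-bound-r-notin-mut}, the only room left is the decay $\|f\|_{L^2}^2\lesssim N^{-k}$ (up to logarithms), with $k=\lfloor6\alpha+3\rfloor+1$. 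Your bound is therefore $N^{2\alpha d_x+4\alpha+2-k}\sigma_t^{-2}$ up to logarithms. This is worse than $N^{-2\alpha}\sigma_t^{-2}$ as soon as $2\alpha d_x>k-6\alpha-2$, and $k-6\alpha-2\le2$.

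Neither of your remedies works.
\begin{itemize}
\item Absorbing the factor would need $k\ge6\alpha+2+2\alpha d_x$. In fact $k$ cannot be raised at all: the local polynomial in Lemma~\ref{lem:density-pT0-local-polynomials} is joint in $(\boldsymbol x,\boldsymbol w)$, and Assumption~\ref{apt:diffusion-density-derivative} supplies $\boldsymbol w$-derivatives only up to order exactly this $k$. The same objection applies to your ``one larger $k$'' for the extra logarithm.
\item A change of variables ($\boldsymbol x/\mu_t$ or $\mu\boldsymbol z$) only moves the Jacobian around.
\end{itemize}
Note also that the paper's displayed chain passes from $|\int f\phi\,d\boldsymbol z|^2$ to $\int|f|^2\phi\,d\boldsymbol z$ with no $\mu^{-d_x}$. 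Copying it would not close the gap either.

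The missing idea is a second estimate that is good exactly when $\mu$ is small, combined with the first via $\mu^2+\sigma^2=1$. This is the same device the paper uses for $|g_2-g_3|$. Write $p_t-g_1=f_\mu*\phi_\sigma$, where $f_\mu(\boldsymbol y)=\mu^{-d_x}f(\boldsymbol y/\mu)$ and $\phi_\sigma$ is the $\mathcal N(\boldsymbol 0,\sigma^2\boldsymbol I_{d_x})$ density. Young's inequality gives both
\[
\|p_t-g_1\|_{L^2}^2\le\mu^{-d_x}\|f\|_{L^2}^2
\quad\text{and}\quad
\|p_t-g_1\|_{L^2}^2\le\|f\|_{L^1}^2\|\phi_\sigma\|_{L^2}^2\lesssim\sigma^{-d_x}\|f\|_{L^1}^2 .
\]
The same holds for $\boldsymbol h_1-\sigma\nabla p_t$, with kernel $(\boldsymbol y/\sigma)\phi_\sigma(\boldsymbol y)$. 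Since $\min\{\mu^{-d_x},\sigma^{-d_x}\}\le2^{d_x/2}$, the prefactor is bounded.

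Because $g_{t_0'}$ vanishes off a box of side $\mathcal O(\sqrt{\log N})$, you have $\|f\|_{L^1}\lesssim(\log N)^{d_x/4}\|f\|_{L^2}$ plus a Gaussian tail. The tail can be made polynomially small by enlarging the box constant. This leaves only polylogarithmic losses, and they are absorbed by slack already present, with no change of $k$. The proof of Lemma~\ref{lem:density-pT0-local-polynomials} gives pointwise error $N^{-k/2}\log^{k/2}N$ with $k/2>3\alpha+\tfrac32$, half a power better than the stated $N^{-(3\alpha+1)}$. Your extra $\log N$ also disappears if you integrate in $\boldsymbol x$ first, since $\int(1+\|\boldsymbol x-\mu\boldsymbol z\|^2/\sigma^2)\phi_\sigma(\boldsymbol x-\mu\boldsymbol z)\,d\boldsymbol x=1+d_x$.

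Separately, keep the kernel's own scale in the target. Your $\boldsymbol h_1$ approximates $\sigma_{t-t_0'}\nabla p_t$. So III.A and III.B must use $\boldsymbol h'/\sigma_{t-t_0'}$ and $\boldsymbol h_1-\sigma_{t-t_0'}\nabla p_t$, as the paper does, not $\boldsymbol h'/\sigma_t$ and $\boldsymbol h_1-\sigma_t\nabla p_t$. With $\sigma_t$, the target becomes $(\sigma_{t-t_0'}/\sigma_t)\nabla\log p_t$. At $t=2N^{-1}$ this ratio is about $1/\sqrt2$, which is a constant relative error in the score. The equivalence \eqref{eq:sigma-mu-equivalance} may be used only in prefactors and in the sup-norm bound.
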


\begin{proof}
The proof is similar to the proof of Lemma \ref{lem:score-approximate-interval-1}.
We also decompose the approximation error into three terms:
\begin{align*}
& \int_{\mathbb{R}^{d_x}} p_t(\boldsymbol{x}|\boldsymbol{w})\Vert  \boldsymbol{s}^{(2)}(t,\boldsymbol{x}, \boldsymbol{w}) - \nabla\log p_t(\boldsymbol{x}|\boldsymbol{w})\Vert^2  d\boldsymbol{x} \\
& = \underbrace{ \int_{D_t(\boldsymbol{x})>C\sigma_t\sqrt{\log\epsilon_1^{-1}},} p_t(\boldsymbol{x}|\boldsymbol{w}) \Vert  \boldsymbol{s}^{(2)}(t,\boldsymbol{x}, \boldsymbol{w}) - \nabla\log p_t(\boldsymbol{x}|\boldsymbol{w})\Vert^2  d\boldsymbol{x} }_{\text{I}} \\
&~~ +  \underbrace{\int_{D_t(\boldsymbol{x})\le C\sigma_t\sqrt{\log\epsilon_1^{-1}}} p_t(\boldsymbol{x}|\boldsymbol{w}) \mathbbm{1}_{\{p_t(\boldsymbol{x}|\boldsymbol{w}) < \epsilon_p\}} \Vert  \boldsymbol{s}^{(2)}(t,\boldsymbol{x}, \boldsymbol{w}) - \nabla\log p_t(\boldsymbol{x}|\boldsymbol{w})\Vert^2  d\boldsymbol{x} }_{\text{II}} \\
&~~ +  \underbrace{\int_{D_t(\boldsymbol{x})\le C\sigma_t\sqrt{\log\epsilon_1^{-1}}} p_t(\boldsymbol{x}|\boldsymbol{w}) \mathbbm{1}_{\{p_t(\boldsymbol{x}|\boldsymbol{w}) \geq \epsilon_p\}}\Vert  \boldsymbol{s}^{(2)}(t,\boldsymbol{x}, \boldsymbol{w}) - \nabla\log p_t(\boldsymbol{x}|\boldsymbol{w})\Vert^2  d\boldsymbol{x} }_{\text{III}}. 
\end{align*}

\noindent	\textbf{Bound Term I.}  To begin with, we analyze Term I using Lemma 
\ref{lem:clipping-bound}, there exists a constant $C > 0$ such that for any $\epsilon_1 > 0$, 
\begin{align*}
\text{Term I}  &\leq 2 \int_{D_t(\boldsymbol{x})>C\sigma_t\sqrt{\log\epsilon_1^{-1}},}  p_t(\boldsymbol{x}|\boldsymbol{w}) \norm{ \boldsymbol{s}^{(2)}(t,\boldsymbol{x}, \boldsymbol{w})}^2  d\boldsymbol{x} \\
&~~ +2  \int_{D_t(\boldsymbol{x})>C\sigma_t\sqrt{\log\epsilon_1^{-1}},} p_t(\boldsymbol{x}|\boldsymbol{w}) \norm{\nabla\log p_t(\boldsymbol{x}|\boldsymbol{w}) }^2  d\boldsymbol{x} \\
& \lesssim \frac{\epsilon_1}{\sigma_t} + \sigma_t\epsilon_1\Vert  s^{(2)}(t,
\boldsymbol{x},\boldsymbol{w})\Vert_{\infty}^2.
\end{align*}
We then invoke Lemma \ref{lem:derivatives-bound}, which yields that \( \left\Vert \nabla\log p_t(\boldsymbol{x}|\boldsymbol{w}) \right\Vert \leq \frac{C\sqrt{\log\epsilon^{-1}_1}}{\sigma_t} \) whenever \( D_t(\boldsymbol{x})\leq  C\sigma_t\sqrt{\log\epsilon^{-1}_1} \). Motivated by this bound, we choose the function \( \boldsymbol{s}^{(2)} \) such that its supremum norm satisfies \( \left\Vert \boldsymbol{s}^{(2)}(t, \boldsymbol{x},\boldsymbol{w}) \right\Vert_{\infty} \lesssim \frac{\sqrt{\log\epsilon^{-1}_1}}{\sigma_t} \). Taking \( \epsilon_1 = N^{-(2\alpha+1)} \), we have 
\begin{equation*}
\text{Term I} \lesssim \frac{\epsilon_1 \log \epsilon^{-1}_1}{\sigma_t}\lesssim \frac{N^{-(2\alpha+1)}}{\sigma_t}\log N.
\end{equation*}

\noindent \textbf{Bound Term II.}  Choosing $\epsilon_p = N^{-(2\alpha+1)}$, Lemma \ref{lem:clipping-bound} gives
\begin{equation*}
\begin{aligned}
\text{Term II}  &\leq 2 \int_{D_t(\boldsymbol{x})\leq  C\sigma_t\sqrt{\log\epsilon^{-1}_1}} p_t(\boldsymbol{x}|\boldsymbol{w})\mathbbm{1}_{\{p_t(\boldsymbol{x}|\boldsymbol{w}) < \epsilon_p\}} \Vert 
\boldsymbol{s}^{(2)}(t,\boldsymbol{x}, \boldsymbol{w})\Vert^2  d\boldsymbol{x} \\
&~~ +2  \int_{D_t(\boldsymbol{x})\leq  C\sigma_t\sqrt{\log\epsilon^{-1}_1}} p_t(\boldsymbol{x}|\boldsymbol{w})\mathbbm{1}_{\{p_t(\boldsymbol{x}|\boldsymbol{w}) < \epsilon_p\}} \Vert  \nabla\log p_t(\boldsymbol{x}|\boldsymbol{w})\Vert^2  d\boldsymbol{x} \\
& \lesssim \frac{\epsilon_p}{\sigma_t^2} (\log\epsilon^{-1}_1)^{\frac{d_x+2}{2}} +  \epsilon_p  (\log\epsilon^{-1}_1)^{\frac{d_x}{2}} \Vert \boldsymbol{s}^{(2)}(t,\boldsymbol{x},\boldsymbol{w})\Vert_{\infty}^2\\
&\lesssim \frac{\epsilon_p}{\sigma^2_t} (\log \epsilon^{-1}_1)^{\frac{d_x+2}{2}}\\
&\lesssim \frac{N^{-(2\alpha+1)}}{\sigma^2_t}\log^{\frac{d_x+2}{2}}N.
\end{aligned}
\end{equation*}

\noindent \textbf{Bound Term III.}   We first take $C_1 \geq C\sqrt{2\alpha+ 1}$ (Replace $C$ in Lemma \ref{lem:density-pT0-local-polynomials} with $C_1$) and $C_0 =1 + C_1\sqrt{\log N} = \mathcal{O}(\sqrt{\log N})$. 
Then, we replace $\boldsymbol{z} - \frac{m}{N}$ with
$\frac{\boldsymbol{z}}{2(1+ C_1\sqrt{\log N})} +\frac{1}{2}- \frac{m}{N}$ in $\psi(t,x,m,u,j)$, 
and replace $\alpha(\boldsymbol{u},\boldsymbol{v})< C_{\alpha}$ and $C_\beta,C_\gamma$ with $\alpha(\boldsymbol{u},\boldsymbol{v})\leq k=\mathcal{O}(1)$  
and $1+ C_1\sqrt{\log N}$.  We keep the notations
\begin{align*}
&g_1(t,\boldsymbol{x}, \boldsymbol{w}) := \int_{\mathbb{R}^{d_x}}g_{{t^{\prime}_0}}(\boldsymbol{z}|\boldsymbol{w})\mathbbm{1}_{\{\Vert\boldsymbol{z}\Vert_{\infty}\leq 1+C_1\sqrt{\log N}\}}\cdot
\frac{1}{\sigma_{t-{t^{\prime}_0}}^{d_x}(2\pi)^{\frac{d_x}{2}}}\exp\left(-\frac{\Vert\boldsymbol{x} - \mu_{t-{t^{\prime}_0}} \boldsymbol{z}\Vert^2}{2\sigma_{t-{t^{\prime}_0}}^2}\right)  d\boldsymbol{z},\\
&\boldsymbol{h}_1(t,\boldsymbol{x}, \boldsymbol{w}):= \int_{\mathbb{R}^{d_x}}g_{{t^{\prime}_0}}(\boldsymbol{z}|\boldsymbol{w}) \mathbbm{1}_{\{\Vert\boldsymbol{z}\Vert_{\infty}\leq 1+C_1\sqrt{\log N}\}}\cdot
\frac{\mu_{t-{t^{\prime}_0}}\boldsymbol{z}-\boldsymbol{x}}{\sigma_{t-{t^{\prime}_0}}^{d_x+1}(2\pi)^{\frac{d_x}{2}}}\exp\left(-\frac{\Vert\boldsymbol{x} -\mu_{t-{t^{\prime}_0}} \boldsymbol{z}\Vert^2}{2\sigma_{t-{t^{\prime}_0}}^2}\right)  d\boldsymbol{z},\\
&\boldsymbol{h}^{\prime}(t,\boldsymbol{x}, \boldsymbol{w}):= \max\left\{\min\left\{ \frac{\boldsymbol{h}_1(t,\boldsymbol{x}, \boldsymbol{w})}{ g_1(t,\boldsymbol{x}, \boldsymbol{w}) \vee {N}^{-(2\alpha+ 1)}}, \mathcal{O}(\sqrt{\log N}) \right\}, -\mathcal{O}(\sqrt{\log N})\right\}.
\end{align*}

Applying the argument of Lemma \ref{lem:score-approximate-interval-1}, we can construct a ReLU neural network $ \boldsymbol{s}^{(2)}$ with network parameters
$$
\mathcal{D} = \mathcal{O}(\log^4 N),~ \mathcal{W} = \mathcal{O}({N}^{d_x+d_w}\log^7 N),~ \mathcal{S} = \mathcal{O}({N}^{d_x+d_w}\log^9 N),~ \mathcal{B} = \exp\left(\mathcal{O}(\log^4 N)\right)
$$
that satisfy 
$$  
\left\Vert  \boldsymbol{s}^{(2)}(t,\boldsymbol{x}, \boldsymbol{w}) - \frac{\boldsymbol{h}^{\prime}(t,\boldsymbol{x}, \boldsymbol{w})}{\sigma_{t-{t^{\prime}_0}}} \right\Vert_\infty \lesssim {N}^{-\alpha}, ~ \Vert\boldsymbol{x}\Vert_{\infty} \leq 1 + C_1\sqrt{\log N},
$$
and
$$
\Vert  \boldsymbol{s}^{(2)}(t, \boldsymbol{x},\boldsymbol{w}) \Vert_{\infty} \lesssim \frac{\sqrt{\log N}}{\sigma_{t-{t^{\prime}_0}}}\lesssim \frac{\sqrt{\log N}}{\sigma_t}~~\text{(By \eqref{eq:sigma-mu-equivalance})}.
$$
Therefore, we have
\begin{align*}
\int_{D_t(\boldsymbol{x})\le C\sigma_t\sqrt{\log\epsilon_1^{-1}}} p_t(\boldsymbol{x}|\boldsymbol{w})\mathbbm{1}_{\{p_t(\boldsymbol{x}|\boldsymbol{w})\geq {N}^{-(2\alpha+ 1)}\}} \left\Vert  \boldsymbol{s}^{(2)}(t,\boldsymbol{x}, \boldsymbol{w}) - \frac{\boldsymbol{h}^{\prime}(t,\boldsymbol{x}, \boldsymbol{w})}{\sigma_{t-{t^{\prime}_0}}} \right\Vert^2  d \boldsymbol{x} \lesssim d_x{N}^{-2\alpha} \lesssim {N}^{-2\alpha}.
\end{align*}
By \eqref{eq:h'sigma-Tp-bound-r-notin-mut} in Lemma \ref{lem:score-approximate-interval-1},
\begin{align*}
\left\Vert
\frac{\boldsymbol{h}^{\prime}(t,\boldsymbol{x}, \boldsymbol{w})}{\sigma_{t-{t^{\prime}_0}}} - \nabla\log p_t(\boldsymbol{x}|\boldsymbol{w})
\right\Vert \lesssim  & \frac{N^{2\alpha+ 1}\sqrt{\log N}}{\sigma_{t}} \Big(
|p_t(\boldsymbol{x}|\boldsymbol{w}) - g_1(t,\boldsymbol{x}, \boldsymbol{w})|\\
& + \Vert \boldsymbol{h}_1(t,\boldsymbol{x}, \boldsymbol{w}) - \sigma_{t-{t^{\prime}_0}} \nabla p_t(\boldsymbol{x}|\boldsymbol{w})\Vert
\Big).
\end{align*}
Then, we have
$$
\begin{aligned}
& \int_{D_t(\boldsymbol{x})\le C\sigma_t\sqrt{\log\epsilon_1^{-1}}} p_t(\boldsymbol{x}|\boldsymbol{w})\mathbbm{1}_{\{p_t(\boldsymbol{x}|\boldsymbol{w})\geq {N}^{-(2\alpha+ 1)}\}} \left\Vert \frac{\boldsymbol{h}^{\prime}(t,\boldsymbol{x}, \boldsymbol{w})}{\sigma_{t-{t^{\prime}_0}}} - \nabla\log p_t(\boldsymbol{x}|\boldsymbol{w})\right\Vert^2  d \boldsymbol{x} \\
& \lesssim \frac{N^{4\alpha+2}\log N}{\sigma_{t-{t^{\prime}_0}}^2}
\int_{\mathbb{R}^{d_x}}\left(
|p_t(\boldsymbol{x}|\boldsymbol{w})-g_1(t,\boldsymbol{x},\boldsymbol{w})|^2
+\|\boldsymbol{h}_1(t,\boldsymbol{x},\boldsymbol{w})-\sigma_{t-{t^{\prime}_0}}\nabla p_t(\boldsymbol{x}|\boldsymbol{w})\|^2
\right)d\boldsymbol{x}\\
&\lesssim \frac{N^{4\alpha+2}\log N}{\sigma_{t-{t^{\prime}_0}}^2}
\int_{\mathbb{R}^{d_x}}\int_{\mathbb{R}^{d_x}}
\left(1+\frac{\|\boldsymbol{x}-\mu_{t-{t^{\prime}_0}}\boldsymbol{z}\|^2}{\sigma_{t-{t^{\prime}_0}}^2}\right)
\frac{|p_{{t^{\prime}_0}}(\boldsymbol{z}|\boldsymbol{w})-g_{{t^{\prime}_0}}(\boldsymbol{z}|\boldsymbol{w})|^2}
{\sigma_{t-{t^{\prime}_0}}^{d_x}(2\pi)^{\frac{d_x}{2}}}\\
&\quad\quad\quad\quad\quad \times
\exp{\bigg(-\frac{\|\boldsymbol{x}-\mu_{t-{t^{\prime}_0}}\boldsymbol{z}\|^2}{2\sigma_{t-{t^{\prime}_0}}^2}\bigg)}
d\boldsymbol{z}d\boldsymbol{x}\\
& \lesssim \frac{{N}^{4\alpha + 2}\log N}{\sigma_{t-{t^{\prime}_0}}^2} \int_{\mathbb{R}^{d_x}} \left|p_{{t^{\prime}_0}}(\boldsymbol{z}|\boldsymbol{w})-g_{{t^{\prime}_0}}(\boldsymbol{z}|\boldsymbol{w}) \right|^2  d\boldsymbol{z} \\
&\lesssim \frac{{N}^{4\alpha + 2}\log N}{\sigma_t^2} \cdot {N}^{-(6\alpha + 2)} \\
&\lesssim \frac{{N}^{-2\alpha}\log N}{\sigma_t^2}.
\end{aligned}
$$
Therefore, we finally obtain
$$
\text{Term III} \lesssim \frac{{N}^{-2\alpha}\log N}{\sigma_t^2},
$$
which implies that
$$
\int_{\mathbb{R}^{d_x}} p_t(\boldsymbol{x}|\boldsymbol{w})\Vert  \boldsymbol{s}^{(2)}(t,\boldsymbol{x}, \boldsymbol{w}) - \nabla\log p_t(\boldsymbol{x}|\boldsymbol{w})\Vert^2  d\boldsymbol{x} \lesssim \frac{{N}^{-2\alpha}\log N}{\sigma_t^2}.
$$
The proof is complete.
\end{proof}

Combining Lemma \ref{lem:score-approximate-interval-1} and Lemma \ref{lem:score-approximate-interval-2}, we immediately obtain Lemma \ref{lem:score-approximation}.

\begin{proof}[Proof of Lemma \ref{lem:score-approximation}]
By 
Lemma \ref{lem:score-approximate-interval-1} and Lemma \ref{lem:score-approximate-interval-2}, there exist two ReLU neural networks $\boldsymbol{s}^{(1)}
(t,\boldsymbol{x}, \boldsymbol{w})$ and $\boldsymbol{s}^{(2)}(t,\boldsymbol{x}, \boldsymbol{w})$ approximating the score function $\nabla\log p_t(\boldsymbol{x}|\boldsymbol{w})$ over the time intervals $[T_1,3N^{-1}]$ and $[2N^{-1},T]$, respectively. 
Combined with Lemma \ref{lem:relu-product} and \ref{lem:relu-switch}, we are able to assemble a piecewise ReLU neural network via switching weighting:
\begin{align*}
\boldsymbol{s}(t,\boldsymbol{x}, \boldsymbol{w}):&=s_{\mathrm{prod}}\Big( s_{\mathrm{switch},1}(t,2N^{-1},3N^{-1}) , \boldsymbol{s}^{(1)}
(t,\boldsymbol{x}, \boldsymbol{w}) \Big)\\
&~~ + s_{\mathrm{prod}}\Big(s_{\mathrm{switch},2}(t,2N^{-1},3N^{-1}) , \boldsymbol{s}^{(2)}(t,\boldsymbol{x}, \boldsymbol{w})\Big).
\end{align*}
Lemma \ref{lem:relu-product} with $\epsilon=N^{-\alpha}$ and $\epsilon_0=0$, gives
\[
\left\|\boldsymbol{s}-s_{\mathrm{switch},1}\boldsymbol{s}^{(1)}-s_{\mathrm{switch},2}\boldsymbol{s}^{(2)}\right\|_\infty\leq2N^{-\alpha}.
\]
The product networks add only $\mathcal{O}(\log N)$ depth and size per coordinate, which are dominated by the original network complexities. Hence,
\[
\mathcal{D}=\mathcal{O}(\log^4N),\quad \mathcal{W}=\mathcal{O}(N^{d_x+d_w}\log^7N),\quad \mathcal{S}=\mathcal{O}(N^{d_x+d_w}\log^9N),\quad \mathcal{B}=\exp\!\left(\mathcal{O}(\log^4N)\right).
\]

We next establish the approximation error. Since $s_{\mathrm{switch},1}+s_{\mathrm{switch},2}=1$,
\[
\begin{aligned}
&\int_{\mathbb{R}^{d_x}}p_t(\boldsymbol{x}|\boldsymbol{w})\left\|\boldsymbol{s}-\nabla\log p_t(\boldsymbol{x}|\boldsymbol{w})\right\|^2d\boldsymbol{x}\\
&\lesssim s_{\mathrm{switch},1}^2\int_{\mathbb{R}^{d_x}}p_t(\boldsymbol{x}|\boldsymbol{w})\left\|\boldsymbol{s}^{(1)}-\nabla\log p_t(\boldsymbol{x}|\boldsymbol{w})\right\|^2d\boldsymbol{x}\\
&\quad+s_{\mathrm{switch},2}^2\int_{\mathbb{R}^{d_x}}p_t(\boldsymbol{x}|\boldsymbol{w})\left\|\boldsymbol{s}^{(2)}-\nabla\log p_t(\boldsymbol{x}|\boldsymbol{w})\right\|^2d\boldsymbol{x}+N^{-2\alpha}\\
&\lesssim\left(s_{\mathrm{switch},1}^2+s_{\mathrm{switch},2}^2\right)\frac{N^{-2\alpha}\log N}{\sigma_t^2}+N^{-2\alpha}\lesssim\frac{N^{-2\alpha}\log N}{\sigma_t^2}.
\end{aligned}
\]
Here, we use $0\leq s_{\mathrm{switch},1},s_{\mathrm{switch},2}\leq1$, $s_{\mathrm{switch},1}+s_{\mathrm{switch},2}=1$, and $\sigma_t^2\leq1$.
This completes the proof.
\end{proof}

\subsection{Statistical Error of the Score Function}\label{sec:score-decompose}
Recall that in Section~\ref{sec:er1}, the auxiliary losses are given by
\[
{\mathcal{L}}_{\bS_x}(\boldsymbol{s})
=
\frac{1}{n_x}\sum_{i=1}^{n_x}
\ell(\boldsymbol{s},\bar{\boldsymbol{x}}_{0,i},\boldsymbol{W}_i),
~~
\widehat{\mathcal{L}}_{\bS_x,\bT,\bZ}(\boldsymbol{s})
=
\frac{1}{n_x}\sum_{i=1}^{n_x}
\widehat{\ell}(\boldsymbol{s},\bar{\boldsymbol{x}}_{0,i},\boldsymbol{W}_i),
\]
where
\[
\ell(\boldsymbol{s},\bar{\boldsymbol{x}}_{0},\boldsymbol{W})
=
\frac{1}{T-2T_1}
\int_{T_1}^{T-T_1}
\mathbb{E}_{Z}
\left\|
\boldsymbol{s}\big(t,\bar{\boldsymbol{x}}_{0}\mu_t+Z\sigma_t,\boldsymbol{W}\big)
+\frac{Z}{\sigma_t}
\right\|^2dt,
\]
and
\[
\widehat{\ell}(\boldsymbol{s},\bar{\boldsymbol{x}}_{0},\boldsymbol{W})
=
\frac{1}{m_x}
\sum_{j=1}^{m_x}
\left\|
\boldsymbol{s}\big(t_j,\bar{\boldsymbol{x}}_{0}\mu_{t_j}+Z_j\sigma_{t_j},\boldsymbol{W}\big)
+\frac{Z_j}{\sigma_{t_j}}
\right\|^2.
\]
Then, for any
\[
\begin{aligned}
\mathcal{L}(\widehat{\boldsymbol{s}}) - \mathcal{L}(\boldsymbol{s}^*)
&= \mathcal{L}(\widehat{\boldsymbol{s}}) - 2{\mathcal{L}}_{\bS_x}(\widehat{\boldsymbol{s}}) + \mathcal{L}(\boldsymbol{s}^*) + 2\Big( {\mathcal{L}}_{\bS_x}(\widehat{\boldsymbol{s}}) - \mathcal{L}(\boldsymbol{s}^*) \Big) \\
&= \mathcal{L}(\widehat{\boldsymbol{s}}) - 2{\mathcal{L}}_{\bS_x}(\widehat{\boldsymbol{s}}) + \mathcal{L}(\boldsymbol{s}^*) + 2\left( {\mathcal{L}}_{\bS_x}(\widehat{\boldsymbol{s}}) - \widehat{\mathcal{L}}_{\bS_x,\bT,\bZ}(\widehat{\boldsymbol{s}}) \right) + 2\Big( \widehat{\mathcal{L}}_{\bS_x,\bT,\bZ}(\widehat{\boldsymbol{s}}) - \mathcal{L}(\boldsymbol{s}^*) \Big) \\
&\leq \mathcal{L}(\widehat{\boldsymbol{s}}) - 2{\mathcal{L}}_{\bS_x}(\widehat{\boldsymbol{s}}) + \mathcal{L}(\boldsymbol{s}^*) + 2\left( {\mathcal{L}}_{\bS_x}(\widehat{\boldsymbol{s}}) - \widehat{\mathcal{L}}_{\bS_x,\bT,\bZ}(\widehat{\boldsymbol{s}}) \right) + 2\inf_{\boldsymbol{s}\in \cG}\Big( \widehat{\mathcal{L}}_{\bS_x,\bT,\bZ}({\boldsymbol{s}}) - \mathcal{L}(\boldsymbol{s}^*) \Big).
\end{aligned}
\]
Taking expectations, followed by taking the infimum over $\boldsymbol{s} \in  \cG$ on both sides of the above inequality, it holds that
\[
\begin{aligned}
&\mathbb{E}_{\bS_x,\bT,\bZ} \bigg[ \frac{1}{T - 2T_1} \int_{T_1}^{T-T_1} \mathbb{E}_{\bar{\boldsymbol{x}}_t, \boldsymbol{W}} \| \widehat{\boldsymbol{s}}(t, \bar{\boldsymbol{x}}_t, \boldsymbol{W}) - \nabla_{\bar{\boldsymbol{x}}_t} \log p_t(\boldsymbol{x}_t|\boldsymbol{W}) \|^2 dt \bigg]\\
&= \mathbb{E}_{\bS_x,\bT,\bZ} \Big[ \mathcal{L}(\widehat{\boldsymbol{s}}) - \mathcal{L}(\boldsymbol{s}^*) \Big] \\
&\leq \mathbb{E}_{\bS_x,\bT,\bZ} \Big[ \mathcal{L}(\widehat{\boldsymbol{s}}) - 2{\mathcal{L}}_{\bS_x}(\widehat{\boldsymbol{s}}) + \mathcal{L}(\boldsymbol{s}^*) \Big] + 2\mathbb{E}_{\bS_x,\bT,\bZ} \Big[ {\mathcal{L}}_{\bS_x}(\widehat{\boldsymbol{s}}) - \widehat{\mathcal{L}}_{\bS_x,\bT,\bZ}(\widehat{\boldsymbol{s}}) \Big] + 2 \inf_{\boldsymbol{s} \in  \cG} \Big[ \mathcal{L}(\boldsymbol{s}) - \mathcal{L}(\boldsymbol{s}^*) \Big].
\end{aligned}
\]
In the above inequality, the terms
\[
\mathbb{E}_{\bS_x,\bT,\bZ} \Big[ \mathcal{L}(\widehat{\boldsymbol{s}}) - 2{\mathcal{L}}_{\bS_x}(\widehat{\boldsymbol{s}}) + \mathcal{L}(\boldsymbol{s}^*) \Big],~ 2\mathbb{E}_{\bS_x,\bT,\bZ} \Big[ {\mathcal{L}}_{\bS_x}(\widehat{\boldsymbol{s}}) - \widehat{\mathcal{L}}_{\bS_x,\bT,\bZ}(\widehat{\boldsymbol{s}}) \Big] 
\]
and
\[
\inf_{\boldsymbol{s} \in  \cG} \Big[ \mathcal{L}(\boldsymbol{s}) - \mathcal{L}(\boldsymbol{s}^*) \Big]
\]
denote the statistical error and approximation error, respectively. We begin by providing an upper bound for $\ell(\boldsymbol{s},\bar{\boldsymbol{x}}_0,\boldsymbol{W})$.

\begin{paragraph}{Upper bound for $\ell(\boldsymbol{s},\bar{\boldsymbol{x}}_0,\boldsymbol{W})$}
First, we have
\begin{align*}
\mathbb{E}_{{Z}}\left\Vert \boldsymbol{s}(t,\bar{\boldsymbol{x}}_0\mu_t + {Z}\sigma_t,\boldsymbol{W}) + \frac{{Z}}{\sigma_{t}}\right\Vert^2 
\lesssim \frac{\log N}{\sigma^2_t} + \frac{\mathbb{E}\|{Z}\|^2}{\sigma^2_t} 
\lesssim \frac{\log N}{1-e^{-4t}}.
\end{align*}
Then, substituting the above estimate yields
\begin{align*}
\ell(\boldsymbol{s},\bar{\boldsymbol{x}}_0,\boldsymbol{W})
&=\frac{1}{T-2T_1} \int_{T_1}^{T-T_1} \mathbb{E}_{{Z}} \left\| \boldsymbol{s}\big(t, \bar{\boldsymbol{x}}_0 \mu_t + {Z}\sigma_t, \boldsymbol{W}\big) +\frac{{Z}}{\sigma_t} \right\|^2dt\\
&\lesssim \frac{\log N}{T-2T_1} \int_{T_1}^{T-T_1} \frac{1}{1-e^{-4t}} dt\\
&\lesssim \log N \cdot \frac{\log(e^{4(T-T_1)}-1)-\log(e^{4T_1}-1)}{T-2T_1}\\
& \lesssim  \log N  \cdot \frac{C_\lambda \log N}{C_\lambda \log N-2N^{-C_\mu}}\\
& \leq \log N
\end{align*}
for all sufficiently large $N$.
\end{paragraph}

\begin{paragraph}{Lipschitz continuity for $\ell(\boldsymbol{s},\bar{\boldsymbol{x}}_0,\boldsymbol{W})$.}
Now we derive the Lipschitz continuity for $\ell(\boldsymbol{s},\bar{\boldsymbol{x}}_0,\boldsymbol{W})$. Note that we restrict ReLU neural networks into class $\mathcal{G}$. For any $\boldsymbol{s}_1$, $\boldsymbol{s}_2 \in \mathcal{G}$, by the construction structure of $\boldsymbol{s}_1$, $\boldsymbol{s}_2$,
we have
\begin{equation*}
\begin{aligned}
&\big|\ell({\boldsymbol{s}_1},\bar{\boldsymbol{x}}_0,\boldsymbol{W}) - \ell(\boldsymbol{s}_2,\bar{\boldsymbol{x}}_0,\boldsymbol{W})\big|\\
&\leq\frac{1}{T-2T_1}\int_{T_1}^{T-T_1}
\mathbb{E}_{{Z}} \left[\Vert \boldsymbol{s}_1 - \boldsymbol{s}_2\Vert \cdot \left\Vert \boldsymbol{s}_1 + \boldsymbol{s}_2 + \frac{2{Z}}{\sigma_t}\right\Vert\right]  dt\\
&\leq\frac{1}{T-2T_1}\int_{T_1}^{T-T_1}\left(\mathbb{E}_{{Z}}\Vert \boldsymbol{s}_1 - \boldsymbol{s}_2 \Vert^2\right)^{\frac{1}{2}}\left(\mathbb{E}_{{Z}}\left\Vert \boldsymbol{s}_1 + \boldsymbol{s}_2 + \frac{2{Z}}{\sigma_t}\right\Vert^2\right)^{\frac{1}{2}} dt\\
&\lesssim \frac{1}{T-2T_1}\int_{T_1}^{T-T_1}\left(\mathbb{E}_{{Z}}\Vert \boldsymbol{s}_1 - \boldsymbol{s}_2 \Vert^2\right)^{\frac{1}{2}}\left(\frac{\log N}{\sigma_t^2} + \frac{d_x}{\sigma_t^2}\right)^{\frac{1}{2}} dt\\
&\lesssim \left(\frac{1}{T-2T_1}\int_{T_1}^{T-T_1}\mathbb{E}_{{Z}}\left\Vert \boldsymbol{s}_1 - \boldsymbol{s}_2 \right\Vert^2  dt\right)^{\frac{1}{2}}\left( \frac{1}{T-2T_1}\int_{T_1}^{T-T_1} \frac{\log N}{\sigma^2_t}  dt\right)^{\frac{1}{2}}\\
&\lesssim  \log^{\frac{1}{2}} N \cdot \left(\frac{1}{T-2T_1}\int_{T_1}^{T-T_1}\mathbb{E}_{{Z}}\Vert \boldsymbol{s}_1 - \boldsymbol{s}_2 \Vert^2  dt\right)^{\frac{1}{2}}\\
&\lesssim  \log^{\frac{1}{2}} N \cdot
\Vert \boldsymbol{s}_1 - \boldsymbol{s}_2 \Vert_{L^{\infty}([T_1,T-T_1]\times\mathbb{R}^{d_x}\times [0,1]^{d_w})}\\
& \lesssim \log^{\frac{1}{2}} N \cdot \Vert \boldsymbol{s}_1 - \boldsymbol{s}_2 \Vert_{L^{\infty}([N^{-C_{\mu}},C_{\lambda}{\log N}]\times[-\mathcal{O}(1)\sqrt{\log N}, \mathcal{O}(1) \sqrt{\log N}]^{d_x}\times [0,1]^{d_w})}.
\end{aligned}
\end{equation*}
\end{paragraph}

\begin{paragraph}{Covering number evaluation.} 
The covering number of the neural network class is evaluated as follows:
\begin{equation*}
\begin{aligned}
&\log\mathcal{N}\left(\mathcal{G}, \delta,\Vert\cdot\Vert_{L^{\infty}([T_1,T-T_1]\times\mathbb{R}^{d_x}\times [0,1]^{d_w})}\right) \\
&\lesssim \log\mathcal{N}\left(\mathcal{G}, \delta,\Vert\cdot\Vert_{L^{\infty}([N^{-C_{\mu}},C_{\lambda}\log N]\times[-\mathcal{O}(1)\sqrt{\log N}, \mathcal{O}(1) \sqrt{\log N}]^{d_x}\times [0,1]^{d_w})}\right)\\
&\lesssim \cS \cD\log{\left(\frac{\cD \cW \cB (C_{\lambda}\log N \vee \mathcal{O}(1)\sqrt{\log N}\vee 1)}{\delta}\right)}\\
& \lesssim {N}^{d_x+d_w}\log^{13}N\left(\log^4 N + \log \frac{1}{\delta} \right),
\end{aligned}
\end{equation*} 
where we use 
\begin{gather*}
\cD = \mathcal{O}(\log^4 N), \cW = \mathcal{O}({N}^{d_x+d_w}\log^7 N), \cS = \mathcal{O}(N^{d_x+d_w}\log^9 N), \cB = \exp\left(\mathcal{O}(\log^4 N)\right).
\end{gather*}
The details of this derivation can be found in \citet[Lemma 5.3]{CJLZ2022nonparametric}.
\end{paragraph}

\begin{proof}[Proof of Lemma \ref{lem:score-statistical-error}]
Let $\tilde{\ell}(\boldsymbol{s},\bar{\boldsymbol{x}}_0,\boldsymbol{W}) = \ell(\boldsymbol{s},\bar{\boldsymbol{x}}_0,\boldsymbol{W}) - \ell ({\boldsymbol{s}^*},\bar{\boldsymbol{x}}_0,\boldsymbol{W})$ and $\bS_x^{\prime} = \{\boldsymbol{W}^{\prime}_i,\bar{\boldsymbol{x}}_{0,i}^{\prime}\}_{i=1}^{n_x}$ be an independent copy of $\bS_x$, 
then for any $\boldsymbol{s}_1$, $\boldsymbol{s}_2\in\mathcal{G}$, there exists a constant $C_1 > 0$ such that
\begin{align*}
\big|\tilde{\ell}(\boldsymbol{s}_1,\bar{\boldsymbol{x}}_0,\boldsymbol{W}) - \tilde{\ell}(\boldsymbol{s}_2,\bar{\boldsymbol{x}}_0,\boldsymbol{W})\big| &= \big|\ell(\boldsymbol{s}_1,\bar{\boldsymbol{x}}_0,\boldsymbol{W}) - \ell(\boldsymbol{s}_2,\bar{\boldsymbol{x}}_0,\boldsymbol{W})\big|\\
&\leq C_1\log^{\frac{1}{2}} N \Vert \boldsymbol{s}_1 - \boldsymbol{s}_2 \Vert_{L^{\infty}([T_1,T-T_1]\times\mathbb{R}^{d_x}\times[0,1]^{d_w})}. 
\end{align*}
We first estimate $\mathbb{E}_{\bS_x,\bT,\bZ} \Big[ \mathcal{L}(\widehat{\boldsymbol{s}}) - 2{\mathcal{L}}_{\bS_x}(\widehat{\boldsymbol{s}}) + \mathcal{L}(\boldsymbol{s}^*) \Big]$. 
It follows that
\begin{equation*}
\begin{aligned}
&\mathbb{E}_{\bS_x,\bT,\bZ} \Big[ \mathcal{L}(\widehat{\boldsymbol{s}}) - 2{\mathcal{L}}_{\bS_x}(\widehat{\boldsymbol{s}}) + \mathcal{L}(\boldsymbol{s}^*) \Big]\\ 
&= \mathbb{E}_{\bS_x,\bT,\bZ} \Bigg(\mathbb{E}_{\bS_x^{\prime}}\left[\frac{1}{n_x}\sum_{i=1}^{n_x}\Big(\ell({\widehat{\boldsymbol{s}}},\bar{\boldsymbol{x}}_{0,i}^{\prime},\boldsymbol{W}^{\prime}_i) - \ell(\boldsymbol{s}^*,\bar{\boldsymbol{x}}_{0,i}^{\prime},\boldsymbol{W}^{\prime}_i)\Big)\right]\\
&~~~~~~~~~~~~~~~~ -\frac{2}{n_x}\sum_{i=1}^{n_x}\Big(\ell({\widehat{\boldsymbol{s}}},\bar{\boldsymbol{x}}_{0,i},\boldsymbol{W}_i) - \ell({\boldsymbol{s}^*},\bar{\boldsymbol{x}}_{0,i},\boldsymbol{W}_i)\Big)
\Bigg)\\
&=\mathbb{E}_{\bS_x,\bT,\bZ}\left[\frac{1}{n_x}\sum_{i=1}^{n_x}G(\widehat{\boldsymbol{s}},\bar{\boldsymbol{x}}_{0,i},\boldsymbol{W}_i)\right],
\end{aligned}
\end{equation*}
where, for any $\boldsymbol{s}\in\mathcal G$, define
\[
G(\boldsymbol{s},\bar{\boldsymbol{x}}_{0,i},\boldsymbol{W}_i):=\mathbb{E}_{\bS_x^{\prime}}\!\left[\tilde{\ell}(\boldsymbol{s},\bar{\boldsymbol{x}}_{0,i}^{\prime},\boldsymbol{W}_i^{\prime})\right]-2\tilde{\ell}(\boldsymbol{s},\bar{\boldsymbol{x}}_{0,i},\boldsymbol{W}_i).
\]
Let $\mathcal{G}_{\delta}$ be the $\delta$-covering of $\mathcal{G}$ with minimum cardinality 
$$\mathcal{N}_\delta:=\mathcal{N}( \cG,\delta,\Vert\cdot\Vert_{L^{\infty}([T_1,T-T_1]\times\mathbb{R}^{d_x}\times [0,1]^{d_w})}),$$
then for any $\boldsymbol{s}\in\mathcal{G}$, there exists a $\boldsymbol{s}_{\delta}\in\mathcal{G}_{\delta}$ such that
\begin{align*}
|\tilde{\ell}(\boldsymbol{s},\bar{\boldsymbol{x}}_{0},\boldsymbol{W}) - \tilde{\ell}(\boldsymbol{s}_{\delta},\bar{\boldsymbol{x}}_{0},\boldsymbol{W})|&\leq C_1 \log^{\frac{1}{2}} N \Vert \boldsymbol{s} - \boldsymbol{s}_{\delta}\Vert_{L^{\infty}([T_1,T-T_1]\times\mathbb{R}^{d_x}\times[0,1]^{d_w})}\\
&\leq C_1 \delta \log^{\frac{1}{2}} N .
\end{align*}
Therefore, 
$$
G(\widehat{\boldsymbol{s}},\bar{\boldsymbol{x}}_{0,i},\boldsymbol{W}_i)\leq G(\boldsymbol{s}_{\delta},\bar{\boldsymbol{x}}_{0,i},\boldsymbol{W}_i) + 3C_1 \delta \log^{\frac{1}{2}} N.
$$
Since $|\tilde{\ell}(\boldsymbol{s}_{\delta},\bar{\boldsymbol{x}}_{0,i},\boldsymbol{W}_i)|\lesssim \log N$, there exists a constant $C_2 > 0$ such that
$|\tilde{\ell}(\boldsymbol{s}_{\delta},\bar{\boldsymbol{x}}_{0,i},\boldsymbol{W}_i)| \leq C_2 \log N$.
We have that $|\tilde{\ell}(\boldsymbol{s}_{\delta}, \bar{\boldsymbol{x}}_{0,i},\boldsymbol{W}_i) - \mathbb{E}\tilde{\ell}(\boldsymbol{s}_{\delta},\bar{\boldsymbol{x}}_{0,i},\boldsymbol{W}_i)|\leq 2C_2\log N$. Let $\sigma^2 = \mathrm{Var}[\tilde{\ell}(\boldsymbol{s}_{\delta},\bar{\boldsymbol{x}}_{0,i},\boldsymbol{W}_i)]$, then 
\begin{equation*}
\begin{aligned}
\sigma^2&\leq\mathbb{E}_{\bS_x}\big[\tilde{\ell}(\boldsymbol{s}_{\delta},\bar{\boldsymbol{x}}_{0,i},\boldsymbol{W}_i)\big]^2\\
&\leq C_1^2  \log N \cdot \bE_{\bS_x} \left[\frac{1}{T-2T_1}\int_{T_1}^{T-T_1}\mathbb{E}_{{Z}}\Vert \boldsymbol{s}_\delta - \boldsymbol{s}^*  \Vert^2  dt\right]\\
&= C_1^2   \log N \cdot  \big[\mathcal{L}(\boldsymbol{s}_\delta)-\cL (\boldsymbol{s}^*)\big]\\
&= C_1^2  \log N \mathbb{E}_{\bS_x}\big[\ell({\boldsymbol{s}_{\delta}},\bar{\boldsymbol{x}}_{0,i},\boldsymbol{W}_i)- \ell({\boldsymbol{s}^*},\bar{\boldsymbol{x}}_{0,i},\boldsymbol{W}_i)\big]\\
&= C_1^2  \log N \mathbb{E}_{\bS_x}[\tilde{\ell}(\boldsymbol{s}_{\delta}, \bar{\boldsymbol{x}}_{0,i},\boldsymbol{W}_i)].
\end{aligned}
\end{equation*}
We obtain
$$
\mathbb{E}_{\bS_x}[\tilde{\ell}(\boldsymbol{s}_{\delta},\bar{\boldsymbol{x}}_{0,i},\boldsymbol{W}_i)]\geq\frac{\sigma^2}{C_1^2 \log N}.
$$
For each fixed $\boldsymbol{s}_\delta\in\mathcal G_\delta$, define
$$
F(\bS_x,\bar{\boldsymbol{x}}_{0,i},\boldsymbol{W}):=\mathbb{E}_{\bS_x}\left[\frac{1}{n_x}\sum_{i=1}^{n_x}\tilde{\ell}(\boldsymbol{s}_{\delta}, \bar{\boldsymbol{x}}_{0,i},\boldsymbol{W})\right] - \frac{1}{n_x}\sum_{i=1}^{n_x}\tilde{\ell}(\boldsymbol{s}_{\delta},\bar{\boldsymbol{x}}_{0,i},\boldsymbol{W}_i).
$$
By Bernstein's inequality  (See Lemma \ref{lem:bernstein-hoeffding-inequality}), we have
\begin{equation*}
\begin{aligned}
&\mathbb{P}_{\bS_x,\bT,\bZ}\left[\frac{1}{n_x}\sum_{i=1}^{n_x}G(\boldsymbol{s}_{\delta},\bar{\boldsymbol{x}}_{0,i},\boldsymbol{W}_i) > t\right]\\ 
&=\mathbb{P}_{\bS_x,\bT,\bZ}\Bigg( \mathbb{E}_{\bS_x^{\prime}}\Big[\tilde{\ell}(\boldsymbol{s}_{\delta},\bar{\boldsymbol{x}}^{\prime}_{0,i},\boldsymbol{W}^{\prime}) - 2\tilde{\ell}(\boldsymbol{s}_{\delta},{\bar{\boldsymbol{x}}_{0,i}},\boldsymbol{W})\Big]>t\Bigg)\\
&=\mathbb{P}_{\bS_x,\bT,\bZ}\left(F(\bS_x^{\prime},\bar{\boldsymbol{x}}^{\prime}_{0,i},\boldsymbol{W}^{\prime}_i) > \frac{t}{2} + \mathbb{E}_{\bS_x^{\prime}}\left[\frac{1}{2n}\sum_{i=1}^{n_x}\tilde{\ell}(\boldsymbol{s}_{\delta}, \bar{\boldsymbol{x}}^{\prime}_{0,i},\boldsymbol{W}^{\prime}_i) \right]\right)\\
&=\mathbb{P}_{\bS_x,\bT,\bZ}\left(F(\bS_x,\bar{\boldsymbol{x}}^{\prime}_{0,i},\boldsymbol{W}_i)> \frac{t}{2} + \mathbb{E}_{\bS_x}\left[\frac{1}{2n}\sum_{i=1}^{n_x}\tilde{\ell}(\boldsymbol{s}_{\delta}, \bar{\boldsymbol{x}}_{0,i},\boldsymbol{W}_i) \right]\right)\\
&\leq\mathbb{P}_{\bS_x,\bT,\bZ}\left(F(\bS_x,\bar{\boldsymbol{x}}_{0,i},\boldsymbol{W}_i)> \frac{t}{2} + \frac{\sigma^2}{2C_1^2\log N}\right)\\
&\leq\exp\left(-\frac{nv^2}{2\sigma^2 + \frac{4vC_2 \log N}{3}}\right)\\
&\leq\exp\left(-\frac{nt}{8\left(C_1^2 + \frac{C_2}{3}\right)\log N}\right),
\end{aligned}
\end{equation*}
where $v = \frac{t}{2} + \frac{\sigma^2}{2C_1^2 \log N}$, and we use $v\geq\frac{t}{2}$ and $\sigma^2\leq 2vC_1^2 \log N $. 
Hence,  for any $t > 3C_1 \delta \log^{\frac{1}{2}} N $, we have
\begin{equation*}
\begin{aligned}
& \mathbb{P}_{\bS_x,\bT,\bZ}\left[\frac{1}{n_x}\sum_{i=1}^{n_x}G(\widehat{\boldsymbol{s}}, \bar{\boldsymbol{x}}_{0,i},\boldsymbol{W}_i) > t\right] \\
&\leq\mathbb{P}_{\bS_x,\bT,\bZ}\left[\mathop{\mathrm{sup}}_{\boldsymbol{s}\in\mathcal{G}}\frac{1}{n_x}\sum_{i=1}^{n_x}G(\boldsymbol{s}, \bar{\boldsymbol{x}}_{0,i},\boldsymbol{W}_i) > t\right]\\
&\leq\mathbb{P}_{\bS_x,\bT,\bZ}\left[\mathop{\mathrm{max}}_{\boldsymbol{s}_{\delta}\in \mathcal{G}_{\delta}}\frac{1}{n_x}\sum_{i=1}^{n_x}G(\boldsymbol{s}_{\delta},\bar{\boldsymbol{x}}_{0,i},\boldsymbol{W}_i) > t- 3C_1\delta \log^{\frac{1}{2}}  N\right]\\
&\leq\mathcal{N}_{\delta}\mathop{\mathrm{max}}_{\boldsymbol{s}_{\delta}\in \mathcal{G}_{\delta}}\mathbb{P}_{\bS_x,\bT,\bZ}\left[\frac{1}{n_x}\sum_{i=1}^{n_x}G(\boldsymbol{s}_{\delta}, \bar{\boldsymbol{x}}_{0,i},\boldsymbol{W}_i) > t- 3C_1\delta \log^{\frac{1}{2}}  N\right]\\
&\leq\mathcal{N}_{\delta}\exp\left(-\frac{n_x (t-3C_1  \delta \log^{\frac{1}{2}}  N )}{8\left(C_1^2 + \frac{C_2}{3}\right)\log N}\right).
\end{aligned}
\end{equation*}
By setting $\epsilon=3C_1 \delta \log^{\frac{1}{2}}  N  + \epsilon_0, \delta = \frac{1}{n_x}$ and $\epsilon_0=\frac{8(C_1^2+\frac{C_2}{3}) \log N\log \cN_\delta}{n_x}$, then we obtain
\begin{equation}\label{eq:score-statsitical-error-1}
\begin{aligned}
&\mathbb{E}_{\bS_x,\bT,\bZ}\left[\frac{1}{n_x}\sum_{i=1}^{n_x}G(\widehat{\boldsymbol{s}}, \bar{\boldsymbol{x}}_{0,i},\boldsymbol{W}_i) \right]\\
&\leq \epsilon + \mathcal{N}_{\delta}\int_{\epsilon}^{\infty}\exp\left(-\frac{n_x(t-3C_1\delta \log^{\frac{1}{2}}  N)}{8\left(C_1^2 + \frac{C_2}{3}\right)\log N}\right) dt\\
& \leq \frac{3C_1  \log^{\frac{1}{2}}  N + 8\left(C_1^2 + \frac{C_2}{3} \right)  \log N\log\mathcal{N}_{\delta}}{n_x}\\
&~~~~ + \frac{8\left(C_1^2 + \frac{C_2}{3} \right)\log N}{n_x} \\
& \lesssim {N}^{d_x+d_w}\log^{14} N\left(\log^4 N + \log\frac{1}{\delta}\right) \delta + \frac{\log N}{n_x} \\
& \lesssim \frac{{N}^{d_x+d_w}\log^{14} N\left(\log^4 N + \log n_x\right)}{n_x}.
\end{aligned}
\end{equation}
Next, we bound
$$
\mathbb{E}_{\bS_x,\bT,\bZ} \Big[ {\mathcal{L}}_{\bS_x}(\widehat{\boldsymbol{s}}) - \widehat{\mathcal{L}}_{\bS_x,\bT,\bZ}(\widehat{\boldsymbol{s}}) \Big]=\mathbb{E}_{\bS_x,\bT,\bZ} \Big[ \frac{1}{n_x}\sum_{i=1}^{n_x}\left(\ell({\widehat{\boldsymbol{s}}},\bar{\boldsymbol{x}}_{0,i},\boldsymbol{W}_i) - \widehat{\ell}({\widehat{\boldsymbol{s}}},\bar{\boldsymbol{x}}_{0,i},\boldsymbol{W}_i)\right) \Big].
$$
We decompose $\frac{1}{n_x}\sum_{i=1}^{n_x}\left(\ell({\widehat{\boldsymbol{s}}},\bar{\boldsymbol{x}}_{0,i},\boldsymbol{W}_i) - \widehat{\ell}({\widehat{\boldsymbol{s}}},\bar{\boldsymbol{x}}_{0,i},\boldsymbol{W}_i)\right)$ into 
the following three terms
\begin{align*}
\frac{1}{n_x}\sum_{i=1}^{n_x}\left(\ell({\widehat{\boldsymbol{s}}},\bar{\boldsymbol{x}}_{0,i},\boldsymbol{W}_i) - \widehat{\ell}({\widehat{\boldsymbol{s}}},\bar{\boldsymbol{x}}_{0,i},\boldsymbol{W}_i)\right)=&\underbrace{\frac{1}{n_x}\sum_{i=1}^{n_x}\Big(\ell({\widehat{\boldsymbol{s}}},\bar{\boldsymbol{x}}_{0,i},\boldsymbol{W}_i) - \ell^{\mathrm{trunc}}({\widehat{\boldsymbol{s}}},\bar{\boldsymbol{x}}_{0,i},\boldsymbol{W}_i)
\Big)}_{\text{I}}\\
& +\underbrace{\frac{1}{n_x}\sum_{i=1}^{n_x}\Big(\ell^{\mathrm{trunc}}({\widehat{\boldsymbol{s}}},\bar{\boldsymbol{x}}_{0,i},\boldsymbol{W}_i)- \widehat{\ell}^{\mathrm{trunc}}({\widehat{\boldsymbol{s}}},\bar{\boldsymbol{x}}_{0,i},\boldsymbol{W}_i)
\Big)}_{\text{II}}\\
& +\underbrace{\frac{1}{n_x}\sum_{i=1}^{n_x}
\Big(\widehat{\ell}^{\mathrm{trunc}}({\widehat{\boldsymbol{s}}},\bar{\boldsymbol{x}}_{0,i},\boldsymbol{W}_i) - \widehat{\ell}({\widehat{\boldsymbol{s}}},\bar{\boldsymbol{x}}_{0,i},\boldsymbol{W}_i)
\Big)}_{\text{III}},
\end{align*}
where 
$$
\ell^{\mathrm{trunc}}({\widehat{\boldsymbol{s}}},{\bar{\boldsymbol{x}}}_0,\boldsymbol{W}) = \mathbb{E}_{{Z}}\left(\frac{1}{T-2T_1}\int_{T_1}^{T-T_1}\left\Vert\widehat{\boldsymbol{s}}(t,{\bar{\boldsymbol{x}}}_0\mu_t + {Z}\sigma_t,\boldsymbol{W}) + \frac{{Z}}{\sigma_t}\right\Vert^2 dt\cdot\mathbbm{1}_{\{\Vert{Z}\Vert_{\infty} \leq M\}}\right),
$$
and
$$
\widehat{\ell}^{\mathrm{trunc}}({\widehat{\boldsymbol{s}}},{\bar{\boldsymbol{x}}}_0,\boldsymbol{W}) = \frac{1}{m_x}\sum_{j=1}^{m_x}\left\Vert\widehat{\boldsymbol{s}}(t_j,{\bar{\boldsymbol{x}}}_0\mu_{t_j}+ {Z}_j\sigma_{t_j},\boldsymbol{W}) + \frac{{Z}_j}{\sigma_{t_j}}\right\Vert^2\mathbbm{1}_{\{\Vert{Z}_j\Vert_{\infty} \leq M\}}.
$$
We analyze the three terms individually. To begin with, we have
\begin{equation*}
\begin{aligned}
\text{Term I}&=\frac{1}{n_x}\sum_{i=1}^{n_x}\Big(\ell({\widehat{\boldsymbol{s}}},\bar{\boldsymbol{x}}_{0,i},\boldsymbol{W}_i) - \ell^{\mathrm{trunc}}({\widehat{\boldsymbol{s}}},\bar{\boldsymbol{x}}_{0,i},\boldsymbol{W}_i)
\Big)\\
&= \frac{1}{n_x}\sum_{i=1}^{n_x}\mathbb{E}_{{Z}}\left(\frac{1}{T-2T_1}\int_{T_1}^{T-T_1}\left\Vert\widehat{\boldsymbol{s}}(t,\bar{\boldsymbol{x}}_{0,i}\mu_t + {Z}\sigma_t ,\boldsymbol{W}_i)+ \frac{{Z}}{\sigma_t}\right\Vert^2 dt\cdot\mathbbm{1}_{\{\Vert{Z}\Vert_{\infty} > M\}}\right)\\
&\lesssim \Big(\log N\mathbb{P}(\Vert{Z}\Vert_{\infty} > M) + \mathbb{E}\left(
\Vert{Z}\Vert^2\mathbbm{1}_{\{\Vert{Z}\Vert_{\infty}>M\}}\right)\Big) \cdot \frac{1}{T-2T_1}\int_{T_1}^{T-T_1}\frac{1}{\sigma^2_t} dt\\
&\lesssim \log N \mathbb{P}(\Vert{Z}\Vert_{\infty} > M) + [\mathbb{E}
(\Vert{Z}\Vert^4)]^{\frac{1}{2}}\mathbb{P}({\Vert Z \Vert_{\infty}>M})^{\frac{1}{2}} \\
&\lesssim \log N \mathbb{P}({\Vert{Z}\Vert_{\infty}>M})^{\frac{1}{2}}\\
&\lesssim \log N\exp\left(-\frac{M^2}{4}\right).
\end{aligned}
\end{equation*}
Therefore, there exists a constant $C_3 > 0$ such that
$$
\mathbb{E}_{\bS_x,\bT,\bZ} \left(\frac{1}{n_x}\sum_{i=1}^{n_x}\Big(\ell({\widehat{\boldsymbol{s}}},\bar{\boldsymbol{x}}_{0,i},\boldsymbol{W}_i) - \ell^{\mathrm{trunc}}({\widehat{\boldsymbol{s}}},\bar{\boldsymbol{x}}_{0,i},\boldsymbol{W}_i)\Big)\right) \leq C_{3} \log N\exp\left(-\frac{M^2}{4}\right).
$$
Let $h({\boldsymbol{s}},t,\bar{\boldsymbol{x}}_{0,i},{Z}):= \Vert \boldsymbol{s}(t,\bar{\boldsymbol{x}}_{0,i} \mu_t+ {Z}\sigma_t ,\boldsymbol{W}_i)+ \frac{{Z}}{\sigma_t}\Vert^2\mathbbm{1}_{\{\Vert{Z}\Vert_{\infty}\leq M\}}$, then there exists a constant $C_{4} > 0$ such that
$$
0 \leq h({\boldsymbol{s}},t,\bar{\boldsymbol{x}}_{0,i},{Z})\lesssim \frac{M^2 + \log N}{\sigma^2_t} \leq C_{4} {N}^{C_\mu+4C_\lambda}(M^2 + \log N),
$$
where we use $\sigma^2_t=1-e^{-4t}=\frac{e^{4t}-1}{e^{4t}}\geq \frac{e^{4T_1}-1}{e^{4T}}$.
For any $\delta_1 > 0$, $\boldsymbol{s}\in \cG$, there exists a $\boldsymbol{s}_{\delta_1}\in \mathcal{G}_{\delta_1}$ and a constant $C_{5} > 0$ such that
\begin{equation*}
\begin{aligned}
\big|h({\boldsymbol{s}},t,\bar{\boldsymbol{x}}_{0,i},{Z})- h({\boldsymbol{s}}_{\delta_1},t,\bar{\boldsymbol{x}}_{0,i},{Z})\big|
&\lesssim \delta_1 \left\Vert \boldsymbol{s} + \boldsymbol{s}_{\delta_1} + \frac{2{Z}}{\sigma_t}\right\Vert\mathbbm{1}_{\{\Vert{Z}\Vert_{\infty}\leq M\}}\\
& \lesssim \frac{M + \sqrt{\log N}}{\sigma_t} \cdot \delta_1 \\
&\leq C_{5} {N}^{\frac{C_\mu}{2}+2C_\lambda}(M + \sqrt{\log N})\delta_1.
\end{aligned}
\end{equation*}
Then, for fixed $\bar{\boldsymbol{x}}_{0,i}$ and $\boldsymbol{W}_i$, we have
\begin{equation*}
\begin{aligned}
&\frac{1}{T-2T_1}\int_{T_1}^{T-T_1}\mathbb{E}_{{Z}}h(\widehat{\boldsymbol{s}},t,\bar{\boldsymbol{x}}_{0,i},{Z}) dt - \frac{1}{m_x}\sum_{j=1}^{m_x}h(\widehat{\boldsymbol{s}},t_j,\bar{\boldsymbol{x}}_{0,i},{Z}_j)\\
&\leq\mathop{\mathrm{sup}}_{\boldsymbol{s}\in\mathcal{G}}\left(\frac{1}{T-2T_1}\int_{T_1}^{T-T_1}\mathbb{E}_{{Z}}h({\boldsymbol{s}},t,\bar{\boldsymbol{x}}_{0,i},{Z}) dt - \frac{1}{m_x}\sum_{j=1}^{m_x}h({\boldsymbol{s}},t_j,\bar{\boldsymbol{x}}_{0,i},{Z}_j)\right)\\
&\leq\mathop{\mathrm{max}}_{\boldsymbol{s}_{\delta}\in \mathcal{G}_{\delta_1}}\left(\frac{1}{T-2T_1}\int_{T_1}^{T-T_1}\mathbb{E}_{{Z}}h({\boldsymbol{s}}_{\delta_1},t,\bar{\boldsymbol{x}}_{0,i},{Z}) dt - \frac{1}{m_x}\sum_{j=1}^{m_x}h({\boldsymbol{s}}_{\delta_1},t_j,\bar{\boldsymbol{x}}_{0,i},{Z}_j)\right) \\
&~~~~ + 2C_{5}{N}^{\frac{C_\mu}{2}+2C_\lambda}(M + \sqrt{\log N})\delta_1.
\end{aligned}
\end{equation*}
Let $$f(M,N):=C_{4} {N}^{C_\mu+4C_\lambda}(M^2 + \log N)  ,~  g(M,N) := 2C_{5} {N}^{\frac{C_\mu}{2}+2C_\lambda}(M + \sqrt{\log N})\delta_1.$$ 
For $t > g(M,N)$,  using Hoeffding's inequality (See Lemma \ref{lem:bernstein-hoeffding-inequality}) implies
$$
\begin{aligned}
&\mathbb{P}_{\bT,\bZ}\left(\frac{1}{T-2T_1}\int_{T_1}^{T-T_1}\mathbb{E}_{{Z}}h(\widehat{\boldsymbol{s}},t,\bar{\boldsymbol{x}}_{0,i},{Z}) dt - \frac{1}{m_x}\sum_{j=1}^{m_x}h(\widehat{\boldsymbol{s}},t_j,\bar{\boldsymbol{x}}_{0,i},{Z}_j)> t
\right)\\
&\leq\mathbb{P}_{\bT,\bZ}\left(\mathop{\mathrm{max}}_{\boldsymbol{s}_{\delta_1}\in \mathcal{G}_{\delta_1}}\left( \frac{1}{T-2T_1}\int_{T_1}^{T-T_1}\mathbb{E}_{{Z}}h({\boldsymbol{s}}_{\delta_1},t,\bar{\boldsymbol{x}}_{0,i},{Z}) dt - \frac{1}{m_x}\sum_{j=1}^{m_x}h({\boldsymbol{s}}_{\delta_1},t_j,\bar{\boldsymbol{x}}_{0,i},{Z}_j) \right) > t - g(M,N)
\right)\\
&\leq\mathcal{N}_{\delta_1}\mathop{\mathrm{max}}_{\boldsymbol{s}_{\delta_1}\in \mathcal{G}_{\delta_1}}\mathbb{P}_{\bT,\bZ}\left(\frac{1}{T-2T_1}\int_{T_1}^{T-T_1}\mathbb{E}_{{Z}}h({\boldsymbol{s}}_{\delta_1},t,\bar{\boldsymbol{x}}_{0,i},{Z}) dt - \frac{1}{m_x}\sum_{j=1}^{m_x}h({\boldsymbol{s}}_{\delta_1},t_j,\bar{\boldsymbol{x}}_{0,i},{Z}_j) > t - g(M,N)
\right)\\
&\leq \mathcal{N}_{\delta_1}\exp{\left(-\frac{2{m_x} \Big(t-g(M,N)\Big)^2}{f^2(M,N)}\right)}.
\end{aligned}
$$
Thus, by taking the expectation with respect to $\bT$ and $\bZ$, we deduce that for any $\epsilon > 0$, Term II satisfies the following bound
$$
\begin{aligned}
&\mathbb{E}_{\bT,\bZ}\left(\ell^{\mathrm{trunc}}({\widehat{\boldsymbol{s}}},\bar{\boldsymbol{x}}_{0,i},\boldsymbol{W}_i)- \widehat{\ell}^{\mathrm{trunc}}({\widehat{\boldsymbol{s}}},\bar{\boldsymbol{x}}_{0,i},\boldsymbol{W}_i)\right) \\ &=\int_{0}^{+\infty}\mathbb{P}_{\bT,\bZ}\left(
\ell^{\mathrm{trunc}}({\widehat{\boldsymbol{s}}},\bar{\boldsymbol{x}}_{0,i},\boldsymbol{W}_i)- \widehat{\ell}^{\mathrm{trunc}}({\widehat{\boldsymbol{s}}},\bar{\boldsymbol{x}}_{0,i},\boldsymbol{W}_i)> t
\right) dt\\
&\leq g(M,N) + \epsilon+ \mathcal{N}_{\delta_1}\int_{\epsilon}^{+\infty}\exp{\left(-\frac{2{m_x}t^2}{f^2(M,N)}\right)} dt\\
&\leq g(M,N) + \epsilon+ \frac{\sqrt{\pi}}{2}\mathcal{N}_{\delta_1}\exp{\left(-\frac{2{m_x}\epsilon^2}{f^2(M,N)}\right)}\frac{f(M,N)}{\sqrt{2{m_x}}}.
\end{aligned}
$$
Thus, we have
$$
\begin{aligned}
& \mathbb{E}_{\bS_x,\bT,\bZ}\left(\frac{1}{n_x}\sum_{i=1}^{n_x}\left(\ell^{\mathrm{trunc}}({\widehat{\boldsymbol{s}}},\bar{\boldsymbol{x}}_{0,i},\boldsymbol{W}_i)- \widehat{\ell}^{\mathrm{trunc}}({\widehat{\boldsymbol{s}}},\bar{\boldsymbol{x}}_{0,i},\boldsymbol{W}_i)\right)\right) \\
&= \frac{1}{n_x}\sum_{i=1}^{n_x}\mathbb{E}_{\bS_x}\left[\mathbb{E}_{\bT,\bZ}\left(\ell^{\mathrm{trunc}}({\widehat{\boldsymbol{s}}},\bar{\boldsymbol{x}}_{0,i},\boldsymbol{W}_i)- \widehat{\ell}^{\mathrm{trunc}}({\widehat{\boldsymbol{s}}},\bar{\boldsymbol{x}}_{0,i},\boldsymbol{W}_i)\right)\right]\\
&\leq g(M,N) + \epsilon+ \frac{\sqrt{\pi}}{2}\mathcal{N}_{\delta_1}\exp{\left(-\frac{2{m_x}\epsilon^2}{f^2(M,N)}\right)}\frac{f(M,N)}{\sqrt{2{m_x}}}.
\end{aligned}
$$

The last term can be expressed as
$$
\text{Term III} = -\frac{1}{{m_x}n_x}\sum_{i=1}^{n_x}\sum_{j=1}^{m_x}\left\Vert\widehat{\boldsymbol{s}}(t_j,\bar{\boldsymbol{x}}_{0,i} \mu_{t_j} + {Z}_j\sigma_{t_j} ,\boldsymbol{W}_i)+ \frac{{Z}_j}{\sigma_{t_j}}\right\Vert^2\mathbbm{1}_{\{\Vert{Z}_j\Vert_{\infty} > M\}}\leq 0,
$$
which implies
$$
\mathbb{E}_{\bS_x,\bT,\bZ}\left(
\widehat{\ell}^{\mathrm{trunc}}({\widehat{\boldsymbol{s}}},\bar{\boldsymbol{x}}_{0,i},\boldsymbol{W}_i) - \widehat{\ell}({\widehat{\boldsymbol{s}}},\bar{\boldsymbol{x}}_{0,i},\boldsymbol{W}_i)
\right)\leq 0.
$$
Combining the above inequalities, by setting $\epsilon = f(M,N)\sqrt{\frac{\log{\mathcal{N}_{\delta_1}}}{2{m_x}}}$, $M = 2\sqrt{\log{m_x}}$, and $\delta_1 = \frac{1}{m_x}$, we have
\begin{equation}\label{eq:score-statsitical-error-2}
\begin{aligned}
& \mathbb{E}_{\bS_x,\bT,\bZ}\left[\frac{1}{n_x}\sum_{i=1}^{n_x}\left( \ell({\widehat{\boldsymbol{s}}},\bar{\boldsymbol{x}}_{0,i},\boldsymbol{W}_i) - \widehat{\ell}({\widehat{\boldsymbol{s}}},\bar{\boldsymbol{x}}_{0,i},\boldsymbol{W}_i) \right)\right]\\
& \leq  C_{3}N^{C_\mu+4C_\lambda}\log N\exp\left(-\frac{M^2}{4}\right) + 2C_{5}{N}^{\frac{C_\mu}{2}+2C_\lambda}(M + \sqrt{\log N})\delta_1\\
&~~~~ + \frac{\sqrt{\pi}}{2}\mathcal{N}_{\delta_1}\exp{\left(-\frac{2{m_x}\epsilon^2}{f^2(M,N)}\right)}\frac{f(M,N)}{\sqrt{2{m_x}}}+\epsilon\\
& \lesssim  \frac{N^{C_\mu+4C_\lambda}\log N + {N}^{\frac{C_\mu}{2}+2C_\lambda}(\sqrt{\log {m_x}} + \sqrt{\log N})}{m_x} \\
&~~~~ + {N}^{C_\mu+4C_\lambda}(\log {m_x} + \log N) \cdot \frac{\sqrt{\log\mathcal{N}_{1/{m_x}}} + 1}{\sqrt{2{m_x}}}\\
& \lesssim  {N}^{C_\mu+4C_\lambda}(\log {m_x} + \log N) \cdot \frac{{N}^{\frac{d_x+d_w}{2}}\log^{\frac{13}{2}}N(\log^2 N + \sqrt{\log {m_x}})}{\sqrt{m_x}}\\
& \lesssim \frac{ {N}^{C_\mu+4C_\lambda+\frac{d_x+d_w}{2}} \log^{\frac{13}{2}}N(\log^3 N + \log^2 N \log {m_x} +\log ^{\frac{3}{2}} {m_x} )}{\sqrt{m_x}}.
\end{aligned}
\end{equation}
The proof is complete.
\end{proof}

\subsection{Error Bound for the Score Function}
Using Lemma \ref{lem:score-approximation} and Lemma \ref{lem:score-statistical-error}, we    prove Theorem \ref{thm:score-error}.

\begin{proof}[Proof of Theorem \ref{thm:score-error}]
With the choice $N = \lfloor n_x^{\frac{1}{d_x+d_w + 2\alpha}} \rfloor + 1 \lesssim n_x^{\frac{1}{d_x+d_w + 2\alpha}}$ from Lemma \ref{lem:score-approximation}, one can bound the approximation error as
$$
\begin{aligned}
\inf_{\boldsymbol{s} \in \mathcal{G}} \Big[ \mathcal{L}(\boldsymbol{s}) - \mathcal{L}(\boldsymbol{s}^*) \Big]
& \lesssim \frac{1}{T-2T_1}\int_{T_1}^{T-T_1} \frac{{N}^{-2\alpha}\log N}{\sigma^2_t}  d t \\
&\lesssim {N}^{-2\alpha} \log N\\
& \lesssim n_x^{-\frac{2\alpha}{d_x+d_w + 2\alpha}} \log n_x.
\end{aligned}
$$
Substituting the choices of $N$, ${m_x}=n_x^{\frac{(d_x+d_w + 12\alpha)(\alpha \wedge 1)+4\alpha}{(d_x+d_w + 2\alpha)(\alpha \wedge 1)}}$, $C_{\mu}=\frac{2\alpha}{\alpha \wedge 1}$ and $C_{\lambda}=\alpha$ into \eqref{eq:score-statsitical-error-1}–\eqref{eq:score-statsitical-error-2},
we derive the final bound on the statistical error
\begin{align*}
&\mathbb{E}_{\bS_x,\bT,\bZ} \Big[ \mathcal{L}(\widehat{\boldsymbol{s}}) - 2{\mathcal{L}}_{\bS_x}(\widehat{\boldsymbol{s}}) + \mathcal{L}(\boldsymbol{s}^*) \Big] + 2\mathbb{E}_{\bS_x,\bT,\bZ} \Big[ {\mathcal{L}}_{\bS_x}(\widehat{\boldsymbol{s}}) - \widehat{\mathcal{L}}_{\bS_x,\bT,\bZ}(\widehat{\boldsymbol{s}}) \Big] \\
&\lesssim n_x^{-\frac{2\alpha}{d_x+d_w + 2\alpha}}\log^{18}n_x + n_x^{-\frac{2\alpha}{d_x+d_w + 2\alpha}}\log^{\frac{19}{2}}n_x.
\end{align*}
Thus, we finally have
\begin{align*}
&\mathbb{E}_{\bS_x,\bT,\bZ} \bigg[ \frac{1}{T - 2T_1} \int_{T_1}^{T-T_1} \mathbb{E}_{\boldsymbol{x}_t, \boldsymbol{W}} \| \widehat{\boldsymbol{s}}(T -t, \boldsymbol{x}_t, \boldsymbol{W}) - \nabla \log p_{T -t}(\boldsymbol{x}_t|\boldsymbol{W}) \|^2 dt \bigg]\\
&=\mathbb{E}_{\bS_x,\bT,\bZ} \bigg[ \frac{1}{T - 2T_1} \int_{T_1}^{T-T_1} \mathbb{E}_{{\bar{\boldsymbol{x}}}_t, \boldsymbol{W}} \| \widehat{\boldsymbol{s}}(t, {\bar{\boldsymbol{x}}}_t, \boldsymbol{W}) - \nabla \log p_{t}({\bar{\boldsymbol{x}}}_t|\boldsymbol{W}) \|^2 dt \bigg]\\
&\lesssim n_x^{-\frac{2\alpha}{d_x+d_w + 2\alpha}}\log^{18}n_x + n_x^{-\frac{2\alpha}{d_x+d_w + 2\alpha}}\log^{\frac{19}{2}}n_x+ n_x^{-\frac{2\alpha}{d_x+d_w + 2\alpha}} \log n_x\\
&\lesssim  n_x^{-\frac{2\alpha}{d_x+d_w + 2\alpha}}\log^{18}n_x. 
\end{align*}
The proof is complete.
\end{proof}

\section{Bounds on Optimal Bellman Operator Learning}
\label{sec:apd-bounds-T*}

\subsection{Proof of Lemma \ref{lem:TV-time-discrete-error}}
With Lemma \ref{lem:Girsanov}, we can prove Lemma  \ref{lem:TV-time-discrete-error}.

\begin{proof}[Proof of Lemma \ref{lem:TV-time-discrete-error}]
Let $\mathbb P_w$ and $\widetilde{\mathbb P}_w$ denote the path laws of the true reverse process and the continuous time EM interpolation, respectively. The estimates below imply
\[
\mathbb E_{\mathbb P_w}\sum_{k=0}^{K-1}\int_{t_k}^{t_{k+1}}\|\widehat b(T-t_k,X_{t_k},w)-b(T-t,X_t,w)\|^2\,\mathrm dt<\infty.
\]
For $m\geq1$, define
\[
\begin{aligned}
\tau_m:=\inf\Bigg\{t\in[T_1,T-T_1]:\;
&\sum_{k=0}^{K-1}\int_{t_k}^{t_{k+1}}\mathbbm 1_\{u\leq t\}\\
& \times\|\widehat b(T-t_k,X_{t_k},w)-b(T-u,X_u,w)\|^2\,\mathrm du\geq m
\Bigg\}\wedge(T-T_1).
\end{aligned}
\]
Since the common diffusion coefficient is $2I_{d_x}$, the drift difference stopped at $\tau_m$ satisfies the exponential moment condition in Lemma~\ref{lem:Girsanov}. Applying Lemma~\ref{lem:Girsanov} up to $\tau_m$ and then letting $m\to\infty$, the lower semicontinuity of relative entropy and monotone convergence give
\[
\mathrm{KL}(\mathbb P_w\|\widetilde{\mathbb P}_w)\leq\frac18\mathbb E_{\mathbb P_w}\sum_{k=0}^{K-1}\int_{t_k}^{t_{k+1}}\|\widehat b(T-t_k,X_{t_k},w)-b(T-t,X_t,w)\|^2\,\mathrm dt.
\]
Consequently, the data processing inequality and Pinsker's inequality yield
\[
\mathrm{TV}^2(\widetilde p_{T_1},p_{T_1})\leq\frac1{16}\mathbb E_{\mathbb P_w}\sum_{k=0}^{K-1}\int_{t_k}^{t_{k+1}}\|\widehat b(T-t_k,X_{t_k},w)-b(T-t,X_t,w)\|^2\,\mathrm dt.
\]
For $t\in[T_1,T-T_1]$, the Gaussian convolution score identity and Jensen's inequality give
\[
\nabla\log p_{T-t}(\boldsymbol{x}_t|\boldsymbol{w})
=-\frac{1}{\sigma_{T-t}}\mathbb{E}\left[\boldsymbol{Z}|\boldsymbol{x}_t,\boldsymbol{w}\right],
~~
\mathbb{E}_{\mathbb{P}}\left\|\nabla\log p_{T-t}(\boldsymbol{x}_t|\boldsymbol{w})\right\|^2
\leq\frac{d_x}{\sigma_{T-t}^2}.
\]
The network class also gives $\left\|\widehat{\boldsymbol{s}}(T-t_k,\cdot,\boldsymbol{w})\right\|_\infty^2
\lesssim\frac{\log N}{\sigma_{T-t_k}^2}.$ Since
\[
\widehat{\boldsymbol b}(T-t_k,\boldsymbol{x}_{t_k},\boldsymbol w)
-\boldsymbol b(T-t,\boldsymbol{x}_t,\boldsymbol w)
=2(\boldsymbol{x}_{t_k}-\boldsymbol{x}_t)
+4\Big(\widehat{\boldsymbol s}(T-t_k,\boldsymbol{x}_{t_k},\boldsymbol w)
-\nabla\log p_{T-t}(\boldsymbol{x}_t|\boldsymbol w)\Big),
\]
we have
\[
\mathbb{E}_{\mathbb{P}}\left\|\widehat{\boldsymbol b}(T-t_k,\boldsymbol{x}_{t_k},\boldsymbol w)-\boldsymbol b(T-t,\boldsymbol{x}_t,\boldsymbol w)\right\|^2
\lesssim\mathbb{E}_{\mathbb{P}}\|\boldsymbol{x}_{t_k}-\boldsymbol{x}_t\|^2
+\frac{\log N}{\sigma_{T-t_k}^2}
+\frac{d_x}{\sigma_{T-t}^2}.
\]
Here $T-t,T-t_k\geq T_1$, so $\sigma_{T-t},\sigma_{T-t_k}\geq\sigma_{T_1}>0$. The OU representation and the bounded support of the initial distribution give $\sup\limits_{t\in[T_1,T-T_1]}\mathbb{E}_{\mathbb{P}}\|\boldsymbol{x}_t\|^2<\infty.$
It follows that
\[
\mathbb{E}_{\mathbb{P}}\left[\sum_{k=0}^{K-1}\int_{t_k}^{t_{k+1}}
\left\|(2I_{d_x})^{-1}
\left(\widehat{\boldsymbol b}(T-t_k,\boldsymbol{x}_{t_k},\boldsymbol w)
-\boldsymbol b(T-t,\boldsymbol{x}_t,\boldsymbol w)\right)\right\|^2dt\right]<\infty,
\]
The preceding bounds imply
\[
\sup_{0\leq k\leq K-1}\sup_{t\in[t_k,t_{k+1}]}\mathbb E_{\mathbb P}\left\|(2I_{d_x})^{-1}\left(\widehat{\boldsymbol b}(T-t_k,\boldsymbol{x}_{t_k},\boldsymbol w)-\boldsymbol b(T-t,\boldsymbol{x}_t,\boldsymbol w)\right)\right\|^2<\infty.
\]
Moreover, the true reverse process and the continuous time EM interpolation are both initialized from $p_{T-T_1}(\cdot | \boldsymbol w)$ and have the same diffusion coefficient $2I_{d_x}$. Therefore, by Pinsker's inequality, the TV distance between the true terminal distribution $p_{T_1}$ and the discretized terminal distribution $\widetilde p_{T_1}$ satisfies
\[
\mathrm{TV}^2(\widetilde p_{T_1},p_{T_1})\lesssim \mathrm{KL}(p_{T_1}\|\widetilde p_{T_1})\lesssim \mathbb{E}_{\mathbb P}\left[\sum_{k=0}^{K-1}\int_{t_k}^{t_{k+1}}
\left\|\widehat{\boldsymbol b}(T-t_k,\boldsymbol{x}_{t_k},\boldsymbol w)
-\boldsymbol b(T-t,\boldsymbol{x}_t,\boldsymbol w)\right\|^2dt\right].
\]
Hence,
\[
\begin{aligned}
&\mathbb{E}_{\bS,\bT,\bZ,\boldsymbol{W}} \left[\mathrm{TV}^2(\widetilde p_{T_1},p_{T_1})\right]\\
&\lesssim \mathbb{E}_{\bS,\bT,\bZ,\boldsymbol{W}} \left[\sum_{k=0}^{K-1}\int_{t_k}^{t_{k+1}} \mathbb{E}_{\mathbb{P}} \bigl\|\widehat{\boldsymbol{s}}(T-t_k,{\boldsymbol{x}}_{t_k},\boldsymbol{W}) -\nabla\log p_{T-t}({\boldsymbol{x}}_t|\boldsymbol{W})\bigr\|^2dt\right]\\
&\quad+ \mathbb{E}_{\bS,\bT,\bZ,\boldsymbol{W}} \left[\sum_{k=0}^{K-1}\int_{t_k}^{t_{k+1}} \mathbb{E}_{\mathbb{P}} \bigl\|\boldsymbol{x}_{t_k}-\boldsymbol{x}_t\bigr\|^2dt\right].
\end{aligned}
\]
First, we have 
\begin{align*}
\mathbb{E}_{\mathbb{P}}\Vert\boldsymbol{x}_{t_k}-\boldsymbol{x}_t\Vert^2
&=\mathbb{E}_{\mathbb{P}}\Vert\bar{\boldsymbol{x}}_{T-t_k}-\bar{\boldsymbol{x}}_{T-t}\Vert^2\\
&=\mathbb{E}_{\mathbb{P}}\left\Vert\left(e^{-2(t-t_k)}-1\right)\bar{\boldsymbol{x}}_{T-t}
+2\int_{T-t}^{T-t_k}e^{-2(T-t_k-u)}\,d\boldsymbol{B}_u\right\Vert^2\\
&=\left(1-e^{-2(t-t_k)}\right)^2\mathbb{E}_{\mathbb{P}}\Vert\bar{\boldsymbol{x}}_{T-t}\Vert^2
+4d_x\int_{T-t}^{T-t_k}e^{-4(T-t_k-u)}\,du\\
&=\left(1-e^{-2(t-t_k)}\right)^2\mathbb{E}_{\mathbb{P}}\Vert\bar{\boldsymbol{x}}_{T-t}\Vert^2
+d_x\left(1-e^{-4(t-t_k)}\right)\\
&\lesssim t-t_k.
\end{align*}
The same representation and the Burkholder--Davis--Gundy inequality give, for
$q\in\{2,4,6\}$,
\begin{equation}\label{eq:score-term-2.1}
\mathbb{E}_{\mathbb{P}} \bigl\|\boldsymbol{x}_{t_k}-\boldsymbol{x}_t\bigr\|^q \lesssim (t-t_k)^{\frac{q}{2}}. 
\end{equation}
Thus, with
$\Delta=\max\limits_{0\leq k\leq K-1}(t_{k+1}-t_k)$,
\begin{equation}\label{eq:score-term-int.3}
\mathbb{E}_{\bS,\bT,\bZ,\boldsymbol{W}} \left[\sum_{k=0}^{K-1}\int_{t_k}^{t_{k+1}} \mathbb{E}_{\mathbb{P}} \bigl\|\boldsymbol{x}_{t_k}-\boldsymbol{x}_t\bigr\|^2dt\right]
\lesssim \sum_{k=0}^{K-1}(t_{k+1}-t_k)^2 \lesssim T\Delta \lesssim \Delta\log N.
\end{equation}
Moreover,
\[
\begin{aligned}
&\mathbb{E}_{\bS,\bT,\bZ} \bE_{\boldsymbol{W}} \left[\sum_{k=0}^{K-1}\int_{t_k}^{t_{k+1}} \mathbb{E}_{\mathbb{P}} \bigl\|\widehat{\boldsymbol{s}}(T-t_k,{\boldsymbol{x}}_{t_k},\boldsymbol{W}) -\nabla\log p_{T-t}({\boldsymbol{x}}_t|\boldsymbol{W})\bigr\|^2dt\right]\\
&\lesssim \underbrace{\mathbb{E}_{\bS,\bT,\bZ} \bE_{\boldsymbol{W}} \left[\sum_{k=0}^{K-1}\int_{t_k}^{t_{k+1}} \mathbb{E}_{\mathbb{P}} \bigl\|\widehat{\boldsymbol{s}}(T-t_k,{\boldsymbol{x}}_{t_k},\boldsymbol{W}) -\nabla\log p_{T-t_k}({\boldsymbol{x}}_{t_k}|\boldsymbol{W})\bigr\|^2dt\right]}_{\mathrm{I}}\\
&~~ + \underbrace{\mathbb{E}_{\bS,\bT,\bZ} \bE_{\boldsymbol{W}}  \left[\sum_{k=0}^{K-1}\int_{t_k}^{t_{k+1}} \mathbb{E}_{\mathbb{P}} \bigl\|\nabla\log p_{T-t_k}({\boldsymbol{x}}_{t_k}|\boldsymbol{W}) -\nabla\log p_{T-t_k}({\boldsymbol{x}}_t|\boldsymbol{W})\bigr\|^2dt\right]}_{\mathrm{II}}\\
&~~ + \underbrace{\mathbb{E}_{\bS,\bT,\bZ} \bE_{\boldsymbol{W}}  \left[\sum_{k=0}^{K-1}\int_{t_k}^{t_{k+1}} \mathbb{E}_{\mathbb{P}} \bigl\|\nabla\log p_{T-t_k}({\boldsymbol{x}}_t|\boldsymbol{W}) -\nabla\log p_{T-t}({\boldsymbol{x}}_t|\boldsymbol{W})\bigr\|^2dt\right]}_{\mathrm{III}}.
\end{aligned}
\]

\paragraph{Bound I.}
By \eqref{eq:discrete-generalization}, for any $\epsilon>0$, there exists
$\Delta_\epsilon>0$ such that, if
$\max\limits_{0\leq k\leq K-1}(t_{k+1}-t_k)\leq\Delta_\epsilon$, then
\begin{equation}\label{eq:score-term-int.0}
\text{Term I}\lesssim n_x^{-\frac{2\alpha}{d_x+d_w + 2\alpha}}\log^{19}n_x+\epsilon.
\end{equation}

{
Let $\mathcal E$ denote the score error integrated over time in \eqref{eq:discrete-generalization}, and let $\mathcal V_\Delta$ denote the integrated mean square variation of the same score residual between $t_k$ and $t$. All expectations are taken over the training randomness, $\boldsymbol W$, and the same OU path. By the Lipschitz bound in the preceding remark and \eqref{eq:score-term-2.1},
\[
\mathbb E_{\mathbb P}\left\|\widehat{\boldsymbol s}(T-t_k,\boldsymbol x_{t_k},\boldsymbol W)-\widehat{\boldsymbol s}(T-t,\boldsymbol x_t,\boldsymbol W)\right\|^2\lesssim\mathcal L_N^2(t-t_k).
\]
Combining this estimate with the bounds for the true score in \eqref{eq:score-term-int.1} and \eqref{eq:score-term-int.2} gives
\[
\mathcal V_\Delta\lesssim\log N\left[\left(\mathcal L_N^2+N^{8C_\lambda+2C_\mu}\right)\Delta+N^{12C_\lambda+3C_\mu}\Delta^2+N^{16C_\lambda+4C_\mu}\Delta^3+N^{24C_\lambda+6C_\mu}\Delta^8\right].
\]
For $\Delta\le\Delta_{n_x}$, the terms involving $\Delta^2$, $\Delta^3$, and $\Delta^8$ are bounded by the first term. Hence,
\[
\mathcal V_\Delta\lesssim N^{-2\alpha}(\log N)^{-20}.
\]
Moreover, Theorem~\ref{thm:score-error} gives
\[
\mathcal E\lesssim N^{-2\alpha}\log^{19}N.
\]
Expanding the squared norm and applying the Cauchy--Schwarz inequality yield
\[
\mathrm{Term~I}-\mathcal E\le2\sqrt{\mathcal E\mathcal V_\Delta}+\mathcal V_\Delta\lesssim N^{-2\alpha}(\log N)^{-\frac12}+N^{-2\alpha}(\log N)^{-20}.
\]
For all sufficiently large $N$, the last expression is bounded by
\[
N^{-2\alpha}\le n_x^{-\frac{2\alpha}{d_x+d_w+2\alpha}}=\epsilon.
\]
Therefore, the explicit value of $\Delta_{n_x}$ given in the preceding remark provides the deterministic threshold required in \eqref{eq:discrete-generalization}.
}

\paragraph{Bound II.}
Under $\mathbb P$, $\boldsymbol{x}_t\sim p_{T-t}(\cdot|\boldsymbol w)$. By the Gaussian convolution representation,
\[
\boldsymbol{x}_t\overset{d}{=}\mu_{T-t}\bar{\boldsymbol{x}}_0+\sigma_{T-t}\boldsymbol Z,\qquad \bar{\boldsymbol{x}}_0\sim p(\cdot|\boldsymbol w),\qquad \boldsymbol Z\sim N(0,I_{d_x}),
\]
where $\bar{\boldsymbol{x}}_0$ and $\boldsymbol Z$ are independent and $\bar{\boldsymbol{x}}_0\in[0,1]^{d_x}$. Hence, for each $1\leq i\leq d_x$,
\[
(-x_{t,i})_+\leq \sigma_{T-t}(-Z_i)_+,\qquad (x_{t,i}-\mu_{T-t})_+\leq \sigma_{T-t}(Z_i)_+.
\]
It follows that $D_{T-t}(\boldsymbol{x}_t)\leq \sigma_{T-t}\|\boldsymbol Z\|_\infty,$
and therefore, for every fixed positive integer $q$,
\begin{equation}\label{eq:score-term-2.2}
\mathbb E_{\mathbb P}\left[D_{T-t}(\boldsymbol{x}_t)^q\right]\leq \sigma_{T-t}^q\mathbb E\|\boldsymbol Z\|_\infty^q\lesssim \sigma_{T-t}^q.
\end{equation}
For $\theta\in[0,1]$, let $\boldsymbol{x}_\theta=(1-\theta)\boldsymbol{x}_t+\theta\boldsymbol{x}_{t_k}$. Since $D_{T-t_k}$ is $1$-Lipschitz,
\[
D_{T-t_k}(\boldsymbol{x}_\theta)\leq D_{T-t_k}(\boldsymbol{x}_{t_k})+\|\boldsymbol{x}_\theta-\boldsymbol{x}_{t_k}\|\leq D_{T-t_k}(\boldsymbol{x}_{t_k})+\|\boldsymbol{x}_{t_k}-\boldsymbol{x}_t\|.
\]
By the fundamental theorem of calculus, Jensen's inequality, and Lemma~\ref{lem:derivatives-bound},
\[
\begin{aligned}
&\left\|\nabla\log p_{T-t_k}(\boldsymbol{x}_{t_k}|\boldsymbol w)-\nabla\log p_{T-t_k}(\boldsymbol{x}_t|\boldsymbol w)\right\|^2\\
&=\left\|\int_0^1\partial_{\boldsymbol{x}}\nabla\log p_{T-t_k}(\boldsymbol{x}_\theta|\boldsymbol w)(\boldsymbol{x}_{t_k}-\boldsymbol{x}_t)\,d\theta\right\|^2\\
&\leq \int_0^1\left\|\partial_{\boldsymbol{x}}\nabla\log p_{T-t_k}(\boldsymbol{x}_\theta|\boldsymbol w)\right\|^2d\theta\,\|\boldsymbol{x}_{t_k}-\boldsymbol{x}_t\|^2\\
&\lesssim \frac{1}{\sigma_{T-t_k}^4}\int_0^1\left(\frac{D_{T-t_k}(\boldsymbol{x}_\theta)^2}{\sigma_{T-t_k}^2}\vee1\right)^2d\theta\,\|\boldsymbol{x}_{t_k}-\boldsymbol{x}_t\|^2\\
&\lesssim \frac{1}{\sigma_{T-t_k}^4}\left(1+\frac{D_{T-t_k}(\boldsymbol{x}_{t_k})^4}{\sigma_{T-t_k}^4}+\frac{\|\boldsymbol{x}_{t_k}-\boldsymbol{x}_t\|^4}{\sigma_{T-t_k}^4}\right)\|\boldsymbol{x}_{t_k}-\boldsymbol{x}_t\|^2.
\end{aligned}
\]
By \eqref{eq:score-term-2.1}, \eqref{eq:score-term-2.2} and the Cauchy--Schwarz inequality,
\[
\mathbb E_{\mathbb P}\Big[D_{T-t_k}(\boldsymbol{x}_{t_k})^4\|\boldsymbol{x}_{t_k}-\boldsymbol{x}_t\|^2\Big]\leq \left(\mathbb E_{\mathbb P}D_{T-t_k}(\boldsymbol{x}_{t_k})^8\right)^{\frac{1}{2}}\left(\mathbb E_{\mathbb P}\|\boldsymbol{x}_{t_k}-\boldsymbol{x}_t\|^4\right)^{\frac{1}{2}}\lesssim \sigma_{T-t_k}^4(t-t_k).
\]
Taking expectations, we obtain
\[
\begin{aligned}
&\mathbb E_{\mathbb P}\left\|\nabla\log p_{T-t_k}(\boldsymbol{x}_{t_k}|\boldsymbol w)-\nabla\log p_{T-t_k}(\boldsymbol{x}_t|\boldsymbol w)\right\|^2\\
&\lesssim \frac{1}{\sigma_{T-t_k}^4}\left((t-t_k)+\frac{\left(\mathbb E_{\mathbb P}D_{T-t_k}(\boldsymbol{x}_{t_k})^8\right)^{\frac{1}{2}}\left(\mathbb E_{\mathbb P}\|\boldsymbol{x}_{t_k}-\boldsymbol{x}_t\|^4\right)^{\frac{1}{2}}}{\sigma_{T-t_k}^4}+\frac{\mathbb E_{\mathbb P}\|\boldsymbol{x}_{t_k}-\boldsymbol{x}_t\|^6}{\sigma_{T-t_k}^4}\right)\\
&\lesssim \frac{1}{\sigma_{T-t_k}^4}\left((t-t_k)+\frac{\sigma_{T-t_k}^4(t-t_k)}{\sigma_{T-t_k}^4}+\frac{(t-t_k)^3}{\sigma_{T-t_k}^4}\right)\\
&\lesssim \frac{t-t_k}{\sigma_{T-t_k}^4}+\frac{(t-t_k)^3}{\sigma_{T-t_k}^8}.
\end{aligned}
\]
Consequently,
\[
\begin{aligned}
\text{Term II}
&\lesssim \sum_{k=0}^{K-1}\int_{t_k}^{t_{k+1}}\left(\frac{t-t_k}{\sigma_{T-t_k}^4}+\frac{(t-t_k)^3}{\sigma_{T-t_k}^8}\right)dt\\
&=\frac12\sum_{k=0}^{K-1}\frac{(t_{k+1}-t_k)^2}{\sigma_{T-t_k}^4}+\frac14\sum_{k=0}^{K-1}\frac{(t_{k+1}-t_k)^4}{\sigma_{T-t_k}^8}\\
&\leq \frac{\Delta}{2}\sum_{k=0}^{K-1}\frac{t_{k+1}-t_k}{\sigma_{T-t_k}^4}+\frac{\Delta^3}{4}\sum_{k=0}^{K-1}\frac{t_{k+1}-t_k}{\sigma_{T-t_k}^8}.
\end{aligned}
\]
Then
\begin{equation}\label{eq:score-term-int.1}
\text{Term II}\lesssim T\Delta\sigma_{T_1}^{-4}+T\Delta^3\sigma_{T_1}^{-8}\lesssim \Delta N^{8C_\lambda+2C_\mu}\log N
+\Delta^3N^{16C_\lambda+4C_\mu}\log N.
\end{equation}

\paragraph{Bound III.}
For every $u\in[T-t,T-t_k]$, $\sigma_u\geq\sigma_{T-t}$. Moreover, the bounded support of the initial distribution and $|\partial_u\mu_u|\lesssim1$ imply
\[
D_u(\boldsymbol{x}_t)\lesssim D_{T-t}(\boldsymbol{x}_t)+|\mu_u-\mu_{T-t}|\lesssim D_{T-t}(\boldsymbol{x}_t)+(t-t_k).
\]
Hence, by the preceding moment bound,
\[
\mathbb E_{\mathbb P}\left[\left(\frac{D_u(\boldsymbol{x}_t)^2}{\sigma_u^2}\vee1\right)^3\right]\lesssim 1+\frac{\mathbb E_{\mathbb P}\big[D_{T-t}(\boldsymbol{x}_t)^6\big]+(t-t_k)^6}{\sigma_{T-t}^6}\lesssim1+\frac{(t-t_k)^6}{\sigma_{T-t}^6}.
\]
Since $|\partial_u\mu_u|\lesssim1$ and $|\partial_u\sigma_u|\lesssim\sigma_u^{-1}$, by Lemma~\ref{lem:derivatives-bound}, we obtain
\[
\mathbb E_{\mathbb P}\bigl\|\partial_u\nabla\log p_u(\boldsymbol{x}_t | \boldsymbol w)\bigr\|^2\lesssim\frac{1}{\sigma_{T-t}^6}+\frac{(t-t_k)^6}{\sigma_{T-t}^{12}}.
\]
By the Cauchy--Schwarz inequality,
\[
\begin{aligned}
&\mathbb E_{\mathbb P}\bigl\|\nabla\log p_{T-t_k}(\boldsymbol{x}_t | \boldsymbol w)-\nabla\log p_{T-t}(\boldsymbol{x}_t | \boldsymbol w)\bigr\|^2\\
&=\mathbb E_{\mathbb P}\left\|\int_{T-t}^{T-t_k}\partial_u\nabla\log p_u(\boldsymbol{x}_t | \boldsymbol w)\,du\right\|^2\\
&\leq(t-t_k)\int_{T-t}^{T-t_k}\mathbb E_{\mathbb P}\bigl\|\partial_u\nabla\log p_u(\boldsymbol{x}_t | \boldsymbol w)\bigr\|^2du\\
&\lesssim\frac{(t-t_k)^2}{\sigma_{T-t}^6}+\frac{(t-t_k)^8}{\sigma_{T-t}^{12}}.
\end{aligned}
\]
Thus, we conclude that
\begin{equation}\label{eq:score-term-int.2}
\begin{aligned}
 \text{Term III}
&\lesssim\sum_{k=0}^{K-1}\int_{t_k}^{t_{k+1}}\left(\frac{(t-t_k)^2}{\sigma_{T-t}^6}+\frac{(t-t_k)^8}{\sigma_{T-t}^{12}}\right)dt \\
&\lesssim\Delta^2N^{12C_\lambda+3C_\mu}\log N+\Delta^8N^{24C_\lambda+6C_\mu}\log N.
\end{aligned}
\end{equation}
Combining \eqref{eq:score-term-int.3}, \eqref{eq:score-term-int.0}, \eqref{eq:score-term-int.1}, and \eqref{eq:score-term-int.2}, we obtain
\begin{align*}
\mathbb E_{\bS,\bT,\bZ} \bE_{\boldsymbol W}\Big[\mathrm{TV}^2\Big(\widetilde p_{T_1}(\cdot|\boldsymbol{W}), p_{T_1}(\cdot|\boldsymbol{W})\Big)\Big]&\lesssim n_x^{-\frac{2\alpha}{d_x+d_w+2\alpha}}\log^{19}n_x+\epsilon+\Delta\log N\\
&~~ +\Delta N^{8C_\lambda+2C_\mu}\log N+\Delta^3N^{16C_\lambda+4C_\mu}\log N\\
&~~ +\Delta^2N^{12C_\lambda+3C_\mu}\log N+\Delta^8N^{24C_\lambda+6C_\mu}\log N.
\end{align*}
Set $\epsilon=n_x^{-\frac{2\alpha}{d_x+d_w+2\alpha}},\Delta_{n_x}=\Delta_\epsilon$
and 
\[
\Delta=\mathcal O\left(\min\left\{\Delta_{n_x},N^{-(2\alpha+8C_\lambda+2C_\mu)}\right\}\right)=\mathcal O\Big(\min\Big\{\Delta_{n_x},n_x^{-\frac{10\alpha(\alpha\wedge1)+4\alpha}{(d_x+d_w+2\alpha)(\alpha\wedge1)}}\Big\}\Big).
\]
Consequently,
\[
\mathbb E_{\bS,\bT,\bZ} \bE_{\boldsymbol W}\left[\mathrm{TV}^2\bigl(\widetilde p_{T_1}(\cdot | \boldsymbol W),p_{T_1}(\cdot | \boldsymbol W)\bigr)\right]\lesssim n_x^{-\frac{2\alpha}{d_x+d_w+2\alpha}}\log^{19}n_x.
\]
\end{proof}

\subsection{Proof of Lemma \ref{lem:TV-initial-distribution-error}}
\begin{proof}
	By the convexity of the KL divergence and Lemma \ref{lem:kl-guass}, we have
	\begin{align*}
		\mathrm{KL} \Big( p_{T-T_1}(\cdot|\boldsymbol{w}) \big\| \mathcal{N}(\boldsymbol{0}, \boldsymbol{I}_{d_x}) \Big)
		&= \mathrm{KL} \left( \int_{\mathbb{R}^{d_x}} p_{T-T_1}(\cdot|{\bar{\boldsymbol{x}}}_0, \boldsymbol{w}) p({\bar{\boldsymbol{x}}}_0|\boldsymbol{w}) d{\bar{\boldsymbol{x}}}_0 \big\| \mathcal{N}(\boldsymbol{0}, \boldsymbol{I}_{d_x}) \right)\\
		&\lesssim \int_{\mathbb{R}^{d_x}} \mathrm{KL} \Big( p_{T-T_1}(\cdot|{\bar{\boldsymbol{x}}}_0, \boldsymbol{w}) \big\| \mathcal{N}(\boldsymbol{0}, \boldsymbol{I}_{d_x}) \Big) p({\bar{\boldsymbol{x}}}_0|\boldsymbol{w}) d{\bar{\boldsymbol{x}}}_0\\
		&=\frac{1}{2} \left[ e^{-4(T-T_1)} \mathbb{E} \|{\bar{\boldsymbol{x}}}_0\|^2 - d_x \log(1 - e^{-4(T-T_1)}) - d_x e^{-4(T-T_1)} \right]\\
		&\lesssim e^{-4T}.
	\end{align*}
	By the Pinsker's inequality, we have 	$$\mathrm{TV}^2\Big( p_{T-T_1}(\cdot|\boldsymbol{w}) , \mathcal{N}(\boldsymbol{0}, \boldsymbol{I}_{d_x}) \Big) \leq \frac{1}{2}\mathrm{KL}\Big( p_{T-T_1}(\cdot|\boldsymbol{w}) \big\| \mathcal{N}(\boldsymbol{0}, \boldsymbol{I}_{d_x}) \Big).$$
	By \eqref{eq:data-processing-inequality}, we obtain
	\begin{align*}
		\mathbb{E}_{\bS,\bT,\bZ, \boldsymbol{W}}\Big[\TV^2 \Big(\check{p}_{T_1}(\cdot|\boldsymbol{W}),\widetilde{p}_{T_1}(\cdot|\boldsymbol{W})\Big)\Big] \lesssim e^{-4T}
		\lesssim n_x^{-\frac{2\alpha}{d_x+d_w+2\alpha}}.
	\end{align*}
	The proof is complete.
\end{proof}

\subsection{Proof of Lemma \ref{lem:TV-time-truncation-error}}
\begin{proof}
Since $p(\cdot|\boldsymbol w)$ is supported on $D=[0,1]^{d_x}$, then
\[
\begin{aligned}
\int_{\mathbb R^{d_x}}
|p_{T_1}(\boldsymbol x|\boldsymbol w)-p(\boldsymbol x|\boldsymbol w)|d\boldsymbol x
=
\underbrace{ \int_D|p_{T_1}(\boldsymbol x|\boldsymbol w)-p(\boldsymbol x|\boldsymbol w)|d\boldsymbol x}_{\text{A}}
+
\underbrace{ \int_{D^c}p_{T_1}(\boldsymbol x|\boldsymbol w)d\boldsymbol x}_{\text{B}} .
\end{aligned}
\]

\paragraph{Bound term A.}
For $\boldsymbol x\in D$, since
\[
\int_{\mathbb R^{d_x}}
\frac{\mu_{T_1}^{d_x}}
{\sigma_{T_1}^{d_x}(2\pi)^{d_x/2}}
\exp\left(
-\frac{\|\boldsymbol x-\mu_{T_1}\boldsymbol z\|^2}{2\sigma_{T_1}^2}
\right)d\boldsymbol z=1,
\]
we have
\[
\begin{aligned}
 p(\boldsymbol x|\boldsymbol w)-p_{T_1}(\boldsymbol x|\boldsymbol w)
&=
 \underbrace{ \int_D
\frac{\mu_{T_1}^{d_x} \big( p(\boldsymbol x|\boldsymbol w)-p(\boldsymbol z|\boldsymbol w)\big)}
{\sigma_{T_1}^{d_x}(2\pi)^{d_x/2}}
\exp\left(
-\frac{\|\boldsymbol x-\mu_{T_1}\boldsymbol z\|^2}{2\sigma_{T_1}^2}
\right)d\boldsymbol z}_{\text{I}}\\
&\quad+
 \underbrace{ (\mu_{T_1}^{d_x}-1)p_{T_1}(\boldsymbol x|\boldsymbol w)}_{\text{II}}\\
&\quad+
 \underbrace{ p(\boldsymbol x|\boldsymbol w)
\int_{D^c}
\frac{\mu_{T_1}^{d_x}}
{\sigma_{T_1}^{d_x}(2\pi)^{d_x/2}}
\exp\left(
-\frac{\|\boldsymbol x-\mu_{T_1}\boldsymbol z\|^2}{2\sigma_{T_1}^2}
\right)d\boldsymbol z}_{\text{III}}.
\end{aligned}
\]
For the Term I, let $\boldsymbol z=\frac{\boldsymbol x+\sigma_{T_1}\boldsymbol u}{\mu_{T_1}}$, by Assumption \ref{apt:diffusion-density-holder}, we have
\[
\begin{aligned}
&
\int_D \left| \int_D
\frac{\mu_{T_1}^{d_x} \big( p(\boldsymbol x|\boldsymbol w)-p(\boldsymbol z|\boldsymbol w)\big)}
{\sigma_{T_1}^{d_x}(2\pi)^{d_x/2}}
\exp\left(
-\frac{\|\boldsymbol x-\mu_{T_1}\boldsymbol z\|^2}{2\sigma_{T_1}^2}
\right)d\boldsymbol z \right| d\boldsymbol x
\\
&\le\int_D\int_D
\frac{\mu_{T_1}^{d_x}
|p(\boldsymbol x|\boldsymbol w)-p(\boldsymbol z|\boldsymbol w)|}
{\sigma_{T_1}^{d_x}(2\pi)^{d_x/2}}
\exp\left(
-\frac{\|\boldsymbol x-\mu_{T_1}\boldsymbol z\|^2}{2\sigma_{T_1}^2}
\right)d\boldsymbol z d\boldsymbol x\\
&\lesssim\int_D\int_D
\frac{\mu_{T_1}^{d_x}\|\boldsymbol x-\boldsymbol z\|^{\alpha\wedge1}}
{\sigma_{T_1}^{d_x}(2\pi)^{d_x/2}}
\exp\left(
-\frac{\|\boldsymbol x-\mu_{T_1}\boldsymbol z\|^2}{2\sigma_{T_1}^2}
\right)d\boldsymbol z d\boldsymbol x\\
&=\int_D
\int_{\frac{\mu_{T_1}D-\boldsymbol x}{\sigma_{T_1}}}
\left\|
\boldsymbol x-\frac{\boldsymbol x+\sigma_{T_1}\boldsymbol u}{\mu_{T_1}}
\right\|^{\alpha\wedge1}
(2\pi)^{-d_x/2}e^{-\|\boldsymbol u\|^2/2}
d\boldsymbol u d\boldsymbol x\\
&\le
\int_D\int_{\mathbb R^{d_x}}
\left\|
\boldsymbol x-
\frac{\boldsymbol x+\sigma_{T_1}\boldsymbol u}{\mu_{T_1}}
\right\|^{\alpha\wedge1}
(2\pi)^{-d_x/2}e^{-\|\boldsymbol u\|^2/2}
d\boldsymbol u d\boldsymbol x\\
&\lesssim
\int_D\int_{\mathbb R^{d_x}}
\left[
\left(\frac{\sigma_{T_1}}{\mu_{T_1}}\right)^{\alpha\wedge1}
\|\boldsymbol u\|^{\alpha\wedge1}
+
\left(\frac{1-\mu_{T_1}}{\mu_{T_1}}\right)^{\alpha\wedge1}
\|\boldsymbol x\|^{\alpha\wedge1}
\right]
(2\pi)^{-d_x/2}e^{-\|\boldsymbol u\|^2/2}
d\boldsymbol u d\boldsymbol x\\
&\lesssim
\left(\frac{\sigma_{T_1}}{\mu_{T_1}}\right)^{\alpha\wedge1}+\left(\frac{1-\mu_{T_1}}{\mu_{T_1}}\right)^{\alpha\wedge1}.
\end{aligned}
\]
For the Term II, since \(p_{T_1}(\cdot|\boldsymbol w)\) is a probability density, we have
\[
\begin{aligned}
\int_D\left|(\mu_{T_1}^{d_x}-1)p_{T_1}(\boldsymbol x|\boldsymbol w)
\right|d\boldsymbol x
&=|\mu_{T_1}^{d_x}-1|
\int_D p_{T_1}(\boldsymbol x|\boldsymbol w)d\boldsymbol x \\
&\le|\mu_{T_1}^{d_x}-1|
\int_{\mathbb R^{d_x}} p_{T_1}(\boldsymbol x|\boldsymbol w)d\boldsymbol x \\
&=|\mu_{T_1}^{d_x}-1| \\
&\lesssim 1-\mu_{T_1}.
\end{aligned}
\]
The last inequality follows from \(\mu_{T_1}\in(0,1]\) and
\[1-\mu_{T_1}^{d_x}=
(1-\mu_{T_1})(1+\mu_{T_1}+\cdots+\mu_{T_1}^{d_x-1})\le d_x(1-\mu_{T_1}).
\]
Moreover, by Assumption \ref{apt:diffusion-density-bound}, the boundary Term III satisfies
\[
\begin{aligned}
&\int_D p(\boldsymbol x|\boldsymbol w)
\int_{D^c} \frac{\mu_{T_1}^{d_x}}{\sigma_{T_1}^{d_x}(2\pi)^{d_x/2}}
\exp\!\left(-\frac{\|\boldsymbol x-\mu_{T_1}\boldsymbol z\|^2}{2\sigma_{T_1}^2}\right)
d\boldsymbol z d\boldsymbol x\\
&\le B_u\int_D
\int_{D^c} \frac{\mu_{T_1}^{d_x}}{\sigma_{T_1}^{d_x}(2\pi)^{d_x/2}}
\exp\!\left(-\frac{\|\boldsymbol x-\mu_{T_1}\boldsymbol z\|^2}{2\sigma_{T_1}^2}\right)
d\boldsymbol z d\boldsymbol x \\
&= B_u\int_D \mathbb P\!\left(
\frac{\boldsymbol x}{\mu_{T_1}}
+\frac{\sigma_{T_1}}{\mu_{T_1}}\boldsymbol\xi
\notin D
\right)d\boldsymbol x,\quad \boldsymbol\xi\sim N(0,I_{d_x}) \\
&\leq B_u\sum_{i=1}^{d_x}\int_D \left[ \mathbb \mathbb{P}\left(\boldsymbol\xi_i < -\frac{\boldsymbol{x}_i}{\sigma_{T_1}}
\right)+P\left(\boldsymbol\xi_i > \frac{\mu_{T_1}-\boldsymbol{x}_i}{\sigma_{T_1}}
\right) \right]d\boldsymbol x \\
&\le B_u\sum_{i=1}^{d_x} \int_0^1
\left[
\Phi\!\left(-\frac{s}{\sigma_{T_1}}\right)
+ \Phi\!\left(\frac{s-\mu_{T_1}}{\sigma_{T_1}}\right)
\right]ds\\
&\le B_u d_x \left[
\int_0^\infty \Phi\!\left(-\frac{s}{\sigma_{T_1}}\right)ds
+ \int_0^{\mu_{T_1}} \Phi\!\left(-\frac{\mu_{T_1}-s}{\sigma_{T_1}}\right)ds
+ \int_{\mu_{T_1}}^1 1 ds
\right] \\
&\lesssim \sigma_{T_1}+(1-\mu_{T_1}).
\end{aligned}
\]

\paragraph{Bound Term B.}
By Assumption \ref{apt:diffusion-density-bound}, we have
\[
\begin{aligned}
\int_{D^c}p_{T_1}(\boldsymbol x|\boldsymbol w)d\boldsymbol x
&= \int_D p(\boldsymbol z|\boldsymbol w)
\int_{D^c} \frac{1}{\sigma_{T_1}^{d_x}(2\pi)^{d_x/2}}
\exp\!\left(-\frac{\|\boldsymbol x-\mu_{T_1}\boldsymbol z\|^2}{2\sigma_{T_1}^2}\right)
d\boldsymbol x d\boldsymbol z \\
&\le B_u\int_D \mathbb P\!\left(
\mu_{T_1}\boldsymbol z+\sigma_{T_1}\boldsymbol\xi \notin D
\right)d\boldsymbol z,\quad \boldsymbol\xi\sim N(0,I_{d_x})\\
&\leq B_u\sum_{i=1}^{d_x}\int_D \left[ \mathbb \mathbb{P}\left(\boldsymbol\xi_i < -\frac{\mu_{T_1}\boldsymbol{z}_i}{\sigma_{T_1}}
\right)+P\left(\boldsymbol\xi_i > \frac{1-\mu_{T_1}\boldsymbol{z}_i}{\sigma_{T_1}}
\right) \right]d\boldsymbol z \\
&\le B_u\sum_{i=1}^{d_x} \int_0^1
\left[
\Phi\!\left(-\frac{\mu_{T_1}s}{\sigma_{T_1}}\right)
+ \Phi\!\left(\frac{\mu_{T_1}s-1}{\sigma_{T_1}}\right)
\right]ds \\
&\le B_u d_x \left[
\int_0^\infty \Phi\!\left(-\frac{\mu_{T_1}s}{\sigma_{T_1}}\right)ds
+ \frac{1}{\mu_{T_1}}
\int_{1-\mu_{T_1}}^1 \Phi\!\left(-\frac{u}{\sigma_{T_1}}\right)du
\right] \\
&\lesssim \frac{\sigma_{T_1}}{\mu_{T_1}}.
\end{aligned}
\]
{
Combining the preceding inequalities, for every $\boldsymbol w$, we have
\[
\TV\Bigl(p_{T_1}(\cdot|\boldsymbol w),p(\cdot|\boldsymbol w)\Bigr)
\lesssim\left(\frac{\sigma_{T_1}}{\mu_{T_1}}\right)^{\alpha\wedge1}+\left(\frac{1-\mu_{T_1}}{\mu_{T_1}}\right)^{\alpha\wedge1}+\frac{\sigma_{T_1}}{\mu_{T_1}}+(1-\mu_{T_1}).
\]
Therefore,
\[
\begin{aligned}
\bE_{\boldsymbol W}\left[\TV^2\Bigl(p_{T_1}(\cdot|\boldsymbol W),p(\cdot|\boldsymbol W)\Bigr)\right]
&\leq\bE_{\boldsymbol W}\left[\left(\int_{\mathbb R^{d_x}}|p_{T_1}(\boldsymbol x|\boldsymbol W)-p(\boldsymbol x|\boldsymbol W)|\,d\boldsymbol x\right)^2\right]\\
&\lesssim\left[\left(\frac{\sigma_{T_1}}{\mu_{T_1}}\right)^{\alpha\wedge1}+\left(\frac{1-\mu_{T_1}}{\mu_{T_1}}\right)^{\alpha\wedge1}+\frac{\sigma_{T_1}}{\mu_{T_1}}+(1-\mu_{T_1})\right]^2\\
&\lesssim T_1^{\alpha\wedge1}\\
&\lesssim N^{-C_\mu(\alpha\wedge1)}\\
&\lesssim n_x^{-\frac{2\alpha}{d_x+d_w+2\alpha}}.
\end{aligned}
\]
Hence, the proof is complete.
}
\end{proof}

\subsection{Proofs of Theorem \ref{thm:TV-diffusion-density-error} and Corollary~\ref{cor:rl-diffusion-bound}}
\begin{proof}[Proof of Theorem \ref{thm:TV-diffusion-density-error}]

By applying Lemmas~\ref{lem:TV-time-discrete-error}--\ref{lem:TV-time-truncation-error} to the three terms, respectively, and replacing 
$\check p_{T_1}(\cdot|X,A)$ with 
$
{\mathcal P}(\cdot|X,A)$, we obtain the desired bound.
\end{proof}

Similarly, we obtain the following error bound for reward law estimation.
\begin{proof}[Proof of Corollary~\ref{cor:rl-diffusion-bound}]
The proof follows the same argument as that of Theorem~\ref{thm:TV-diffusion-density-error}, with the  next-state variable  replaced by the reward variable.
\end{proof}

\subsection{Proof of Theorem \ref{lem:rl-diffusion-bound}}\label{sec:B.5}

{
\begin{proof}
By \eqref{eq:T-hat}, we obtain
\begin{equation}\label{eq:berman-1}
\begin{aligned}
&\Big(\mathcal{T}^* Q (X,A)-\widehat{\mathcal T}^* Q (X,A)\Big)^2\\
&\lesssim \Big|\Ebb [R(X, A)]-\frac{1}{m}\sum_{j=1}^m\widehat{R}_j(X,A)\Big|^2\\
&~~ +\gamma^2 \Big|\Big(\mathbb{E}_{X^{\prime}\sim \cP(\cdot|X,A)} \max _{a^{\prime} \in \cA}  Q \left(X^{\prime}, a^{\prime}\right)-\frac{1}{m}
	\sum_{j=1}^m\max_{a^{\prime}\in\mathcal A}  Q (\widehat{X }^{\prime}_j,a^{\prime})\Big)\Big|^2\\
&\lesssim \Big|\Ebb_{R\sim\mathcal R}[R(X,A)]-\Ebb_{R\sim\widehat{\mathcal R}}[R(X,A)]\Big|^2+\Big|\Ebb_{R\sim\widehat{\mathcal R}}[R(X,A)]-\frac{1}{m}\sum_{j=1}^m\widehat R_j(X,A)\Big|^2\\
&~~ +\Big|\mathbb{E}_{X^\prime\sim\cP(\cdot|X,A)}\max_{a^\prime\in\cA} Q (X^\prime,a^\prime)-\mathbb{E}_{X^\prime\sim\widehat{\cP}(\cdot|X,A)}\max_{a^\prime\in\cA} Q (X^\prime,a^\prime)\Big|^2\\
&~~ +\Big|\mathbb{E}_{X^\prime\sim\widehat{\cP}(\cdot|X,A)}\max_{a^\prime\in\cA} Q (X^\prime,a^\prime)-\frac{1}{m}\sum_{j=1}^m\max_{a^\prime\in\cA} Q (\widehat X^\prime_j,a^\prime)\Big|^2.
\end{aligned}
\end{equation}
Since the reward is bounded by $R_{\max}$, we have 
\begin{equation}\label{eq:berman-2}
\begin{aligned}
\Ebb_{\bS_r}\left[\left|\Ebb_{R\sim\widehat{\mathcal R}}[R(X,A)]-\frac{1}{m}\sum_{j=1}^m\widehat R_j(X,A)\right|^2\right]
=\frac{1}{m}\operatorname{Var}\left(\widehat R_1(X,A)\right)
\leq\frac{R_{\max}^2}{4m}.
\end{aligned}
\end{equation}
and
\begin{equation}\label{eq:berman-3}
\begin{aligned}
&\left|\Ebb_{R\sim\mathcal R(\cdot|X,A)}[R]-\Ebb_{R\sim\widehat{\mathcal R}(\cdot|X,A)}[R]\right|^2\\
&=\left|\int_{[0,R_{\max}]}R\,\mathcal R(dR|X,A)-\int_{[0,R_{\max}]}R\,\widehat{\mathcal R}(dR|X,A)\right|^2\\
&=\left|\int_{[0,R_{\max}]}R\,\left[\mathcal R(dR|X,A)-\widehat{\mathcal R}(dR|X,A)\right]\right|^2\\
&\lesssim R_{\max}^2\TV^2\left(\mathcal R(\cdot|X,A),\widehat{\mathcal R}(\cdot|X,A)\right).
\end{aligned}
\end{equation}
For $Q=\widehat{Q}$, since $\widehat Q\in\cQ$, we have
\begin{align*}
&\Ebb_{\bS_x}\left[\left|\mathbb E_{X^\prime\sim\widehat{\cP}(\cdot|X,A)}\max_{a^\prime\in\cA}\widehat Q(X^\prime,a^\prime)-\frac{1}{m}\sum_{j=1}^m\max_{a^\prime\in\cA}\widehat Q(\widehat X_j^\prime,a^\prime)\right|^2\right]\\
&\leq\Ebb_{\bS_x}\left[\sup_{\widehat Q\in\cQ}\left|\mathbb E_{X^\prime\sim\widehat{\cP}(\cdot|X,A)}\max_{a^\prime\in\cA}\widehat Q(X^\prime,a^\prime)-\frac{1}{m}\sum_{j=1}^m\max_{a^\prime\in\cA}\widehat Q(\widehat X_j^\prime,a^\prime)\right|^2\right].
\end{align*}
Let $\cQ_\varepsilon$ be an $\varepsilon$-cover of $\cQ$ under $\|\cdot\|_\infty$. For every $\widehat Q\in\cQ$, there exists a $\widehat Q_\varepsilon\in\cQ_\varepsilon$ such that $\|\widehat Q-\widehat Q_\varepsilon\|_\infty\leq\varepsilon$. Since
\[
\left|\max_{a^\prime\in\cA}\widehat Q(x^\prime,a^\prime)-\max_{a^\prime\in\cA}\widehat Q_\varepsilon(x^\prime,a^\prime)\right|\leq\max_{a^\prime\in\cA}|\widehat Q(x^\prime,a^\prime)-\widehat Q_\varepsilon(x^\prime,a^\prime)|\leq\varepsilon,
\]
we obtain
\begin{align*}
&\sup_{\widehat Q\in\cQ}\left|\mathbb E_{X^\prime\sim\widehat{\cP}(\cdot|X,A)}\max_{a^\prime\in\cA}\widehat Q(X^\prime,a^\prime)-\frac{1}{m}\sum_{j=1}^m\max_{a^\prime\in\cA}\widehat Q(\widehat X_j^\prime,a^\prime)\right|^2\\
&\lesssim\max_{\widehat Q_\varepsilon\in\cQ_\varepsilon}\left|\mathbb E_{X^\prime\sim\widehat{\cP}(\cdot|X,A)}\max_{a^\prime\in\cA}\widehat Q_\varepsilon(X^\prime,a^\prime)-\frac{1}{m}\sum_{j=1}^m\max_{a^\prime\in\cA}\widehat Q_\varepsilon(\widehat X_j^\prime,a^\prime)\right|^2+\varepsilon^2.
\end{align*}
The Hoeffding's lemma implies that the centered Monte Carlo fluctuation associated with each $\widehat Q_\varepsilon\in\cQ_\varepsilon$ is sub-Gaussian with variance factor $v=\frac{R_{\max}^2}{4m(1-\gamma)^2}$. Therefore,
\begin{align*}
&\mathbb P_{\bS_x}\left(\max_{\widehat Q_\varepsilon\in\cQ_\varepsilon}\left|\mathbb E_{X^\prime\sim\widehat{\cP}(\cdot|X,A)}\max_{a^\prime\in\cA}\widehat Q_\varepsilon(X^\prime,a^\prime)-\frac{1}{m}\sum_{j=1}^m\max_{a^\prime\in\cA}\widehat Q_\varepsilon(\widehat X_j^\prime,a^\prime)\right|^2>t\right)\\
&\leq 2\cN(\cQ,\varepsilon,\|\cdot\|_\infty)\exp\left(-\frac{t}{2v}\right).
\end{align*}
Integrating the preceding tail bound yields
\begin{align*}
&\Ebb_{\bS_x}\left[\max_{\widehat Q_\varepsilon\in\cQ_\varepsilon}\left|\mathbb E_{X^\prime\sim\widehat{\cP}(\cdot|X,A)}\max_{a^\prime\in\cA}\widehat Q_\varepsilon(X^\prime,a^\prime)-\frac{1}{m}\sum_{j=1}^m\max_{a^\prime\in\cA}\widehat Q_\varepsilon(\widehat X_j^\prime,a^\prime)\right|^2\right]\\
&\lesssim\frac{R_{\max}^2}{m(1-\gamma)^2}\log\Big(2\cN(\cQ,\varepsilon,\|\cdot\|_\infty)\Big).
\end{align*}
Combining the preceding inequalities and setting $\varepsilon=\frac{1}{m}$, we obtain
\begin{align*}
&\Ebb_{\bS_x}\left[\Big|\mathbb E_{X^\prime\sim\widehat{\cP}(\cdot|X,A)}\max_{a^\prime\in\cA}\widehat Q(X^\prime,a^\prime)-\frac{1}{m}\sum_{j=1}^m\max_{a^\prime\in\cA}\widehat Q(\widehat X_j^\prime,a^\prime)\Big|^2\right]\\
&\lesssim \frac{1}{m^2}+\frac{R_{\max}^2}{m(1-\gamma)^2}\log\Big(\cN\big(\cQ,\tfrac{1}{m},\|\cdot\|_\infty\big)\Big).
\end{align*}
With the size, depth, width, and weight bound of the ReLU DNN class $\cQ$ as
\[
\cS=O\left(n^{\frac{d_x+d_a}{d_x+d_a+2\beta}}\log^5 n\right),~~\cD=O(\log n),~~\cW=O\left(n^{\frac{d_x+d_a}{d_x+d_a+2\beta}}\log^3 n\right),~~\cB=O\left(n^{\frac{d_x+d_a}{d_x+d_a+2\beta}}\right).
\]
By Lemma \ref{lem:NBound}, we have
\[
\log\Big(\cN\big(\cQ,\tfrac{1}{m},\|\cdot\|_\infty\big)\Big)\lesssim \cS\cD\log\left(m\cB\cW\cD\right)\lesssim n^{\frac{d_x+d_a}{d_x+d_a+2\beta}}\log^6 n(\log m+\log n).
\]
Substituting this bound into the preceding estimate yields
\begin{equation}\label{eq:berman-6}
\begin{aligned}
&\Ebb_{\bS_x}\left[\left|\mathbb E_{X^\prime\sim\widehat{\cP}(\cdot|X,A)}\max_{a^\prime\in\cA}\widehat Q(X^\prime,a^\prime)-\frac{1}{m}\sum_{j=1}^m\max_{a^\prime\in\cA}\widehat Q(\widehat X_j^\prime,a^\prime)\right|^2\right]\\
&\lesssim\frac{1}{m^2}+\frac{R_{\max}^2}{(1-\gamma)^2}n^{\frac{d_x+d_a}{d_x+d_a+2\beta}}\frac{\log^6 n(\log m+\log n)}{m}.
\end{aligned}
\end{equation}
Since $0\leq\max\limits_{a^\prime\in\mathcal A}\widehat Q(X^\prime,a^\prime)\leq \frac{R_{\max}}{1-\gamma}$, we have
\begin{equation}\label{eq:berman-7}
\begin{aligned}
&\left|\Ebb_{X^\prime\sim\mathcal P(\cdot|X,A)}\max_{a^\prime\in\mathcal A}\widehat Q(X^\prime,a^\prime)-\Ebb_{X^\prime\sim\widehat{\mathcal P}(\cdot|X,A)}\max_{a^\prime\in\mathcal A}\widehat Q(X^\prime,a^\prime)\right|^2\\
&=\left|\int_{[0,1]^{d_x}}\max_{a^\prime\in\mathcal A}\widehat Q(X^\prime,a^\prime)\,\mathcal P(dX^\prime|X,A)-\int_{[0,1]^{d_x}}\max_{a^\prime\in\mathcal A}\widehat Q(X^\prime,a^\prime)\,\widehat{\mathcal P}(dX^\prime|X,A)\right|^2\\
&=\left|\int_{[0,1]^{d_x}}\max_{a^\prime\in\mathcal A}\widehat Q(X^\prime,a^\prime)\left[\mathcal P(dX^\prime|X,A)-\widehat{\mathcal P}(dX^\prime|X,A)\right]\right|^2\\
&\lesssim \frac{R_{\max}^2}{(1-\gamma)^2}\TV^2\left(\mathcal P(\cdot|X,A),\widehat{\mathcal P}(\cdot|X,A)\right),
\end{aligned}
\end{equation}
For $Q=Q^*$, since $0\leq\max\limits_{a^\prime\in\mathcal A}Q^*(X^\prime,a^\prime)\leq \frac{R_{\max}}{1-\gamma}$, we have
\begin{equation}\label{eq:berman-4}
\begin{aligned}
&\Ebb_{\bS_x}\left[\left|\mathbb E_{X^\prime\sim\widehat{\cP}(\cdot|X,A)}\max_{a^\prime\in\cA}Q^*(X^\prime,a^\prime)-\frac{1}{m}\sum_{j=1}^m\max_{a^\prime\in\cA}Q^*(\widehat X^\prime_j,a^\prime)\right|^2\right]\\
&=\frac{1}{m}\Ebb_{\bS_x}\left[\operatorname{Var}_{X^\prime\sim\widehat{\cP}(\cdot|X,A)}\left(\max_{a^\prime\in\cA}Q^*(X^\prime,a^\prime)\right)\right]\\
&\leq\frac{R_{\max}^2}{4(1-\gamma)^2m}.
\end{aligned}
\end{equation}
and
\begin{equation}\label{eq:berman-5}
\begin{aligned}
&\left|\Ebb_{X^\prime\sim\mathcal P(\cdot|X,A)}\max_{a^\prime\in\mathcal A}Q^*(X^\prime,a^\prime)
-\Ebb_{X^\prime\sim\widehat{\mathcal P}(\cdot|X,A)}\max_{a^\prime\in\mathcal A}Q^*(X^\prime,a^\prime)\right|^2\\
&=\left|\int_{[0,1]^{d_x}}\max_{a^\prime\in\mathcal A}Q^*(X^\prime,a^\prime)\,\mathcal P(dX^\prime|X,A)
-\int_{[0,1]^{d_x}}\max_{a^\prime\in\mathcal A}Q^*(X^\prime,a^\prime)\,\widehat{\mathcal P}(dX^\prime|X,A)\right|^2\\
&=\left|\int_{[0,1]^{d_x}}\max_{a^\prime\in\mathcal A}Q^*(X^\prime,a^\prime)
\left[\mathcal P(dX^\prime|X,A)-\widehat{\mathcal P}(dX^\prime|X,A)\right]\right|^2\\
&\lesssim \frac{R_{\max}^2}{(1-\gamma)^2}\TV^2\left(\mathcal P(\cdot|X,A),\widehat{\mathcal P}(\cdot|X,A)\right),
\end{aligned}
\end{equation}
Therefore, by \eqref{eq:berman-1}--\eqref{eq:berman-7}, we have \eqref{eq:operator-hat-Q} and \eqref{eq:operator-Q*}.
\begin{align*}
\bE_{\bS} \|\mathcal T^*\widehat{Q}-\widehat{\mathcal T}^*\widehat{Q}\|_{L^2(\mu)}^2 &\lesssim R_{\max}^2\Ebb_{\bS}\Ebb_{(X,A)}\left[\TV^2\Big(\cR(\cdot|X,A),\widehat{\cR}(\cdot|X,A)\Big)\right]\\
&~~+\frac{R_{\max}^2}{(1-\gamma)^2}n^{\frac{d_x+d_a}{d_x+d_a+2\beta}}\frac{\log^6 n(\log m+\log n)}{m}\\
&~~ +\frac{R_{\max}^2}{(1-\gamma)^2}\Ebb_{\bS}\Ebb_{(X,A)}\left[\TV^2\big(\cP(\cdot|X,A),\widehat{\cP}(\cdot|X,A)\big)\right].
\end{align*}
and  by \eqref{eq:berman-1}--\eqref{eq:berman-3}, \eqref{eq:berman-4}--\eqref{eq:berman-5}
\begin{align*}
\bE_{\bS} \|\widehat{\mathcal T}^*Q^*-\mathcal T^*Q^*\|_{L^2(\mu)}^2&\lesssim  R_{\max}^2 \bE_{\bS}\bE_{(X,A)} \Big[\TV^2\big(\mathcal R(\cdot|X,A),\widehat{\mathcal R}(\cdot|X,A)\big)\Big]+ \frac{R_{\max}^2}{(1-\gamma)^2m}\\
&~~+ \frac{R_{\max}^2}{(1-\gamma)^2} \bE_{\bS} \bE_{(X,A)}\left[\TV^2\bigl(\mathcal P(\cdot|X,A),\widehat{\mathcal P}(\cdot|X,A)\bigr)\right]
\end{align*}
With \eqref{eq:operator-hat-Q}, \eqref{eq:operator-Q*}, Theorem \ref{thm:TV-diffusion-density-error} and Corollary \ref{cor:rl-diffusion-bound}, we complete the proof.
\end{proof}
}

\section{Bounds on Value Function Learning}\label{sec:apd-bounds-on-rl}
\subsection{Proof of Lemma \ref{lem:LQ-sta}} \label{sec:proof-LQ-sta}
\begin{proof}
	By Lemma \ref{lem:T-hat-fixed-point}, for any $(x,a)\in [0,1]^{d_w}$, we have
	\begin{align*}
		\lvert {\widehat{\cT}^*}  Q_1(x,a) - \widehat{\cT}^*  Q_2(x,a) \rvert 
        \leq \gamma\norm{ Q_1-  Q_2}_{\infty}.
	\end{align*}
	By \eqref{eq:lwhat-Q-Z}, we establish
	\begin{align*}
		\left|\ell_{ Q_1}(x,a )-\ell_{ Q_2}(x,a )\right|&=\bigg| \left|\widehat{\mathcal T}^* Q_1(x,a)- Q_1(x,a)\right|^2-\left|\widehat{\mathcal T}^* Q_2(x,a)- Q_2(x,a)\right|^2 \bigg| \\
		&\leq \Big|  \Big(\widehat{\mathcal T}^* Q_1(x,a)- Q_1(x,a)-\widehat{\mathcal T}^* Q_2(x,a)+ Q_2(x,a)\Big)\\
		&~~~~\cdot\Big( \widehat{\mathcal T}^* Q_1(x,a)- Q_1(x,a)+\widehat{\mathcal T}^* Q_2(x,a)- Q_2(x,a) \Big)\Big|\\
		&\leq \frac{4R_{\max}}{1-\gamma}\Big| \widehat{\mathcal T}^* Q_1(x,a)-\widehat{\mathcal T}^* Q_2(x,a) \Big|+ \frac{4R_{\max}}{1-\gamma}\Big|  Q_1(x,a)- Q_2(x,a) \Big|\\
		&\leq \frac{4R_{\max}(1+\gamma)}{1-\gamma} \norm{ Q_1-  Q_2}_{\infty}.
	\end{align*}
	To bound the statistical error, we employ the technique of offset Rademacher complexity. Starting with the expectation over the distribution $\bD$, we obtain that
	\begin{align*}
&\mathbb{E}_{\bD}\sup_{ Q\in  \cQ}\Big(\cL_{\widehat{\mathcal{T}}^*}( Q)-2\widehat{\cL}_{\widehat{\mathcal{T}}^*}( Q)\Big)\\		&=\mathbb{E}_{\bD}\sup_{ Q \in  \cQ}\left(\mathbb{E}_{(X,A)  }[\ell_{ Q} (X, A )]-\frac{2}{n}\sum_{i = 1}^n
\ell_{ Q} (X_i, A_i )
\right)\\
&=\mathbb{E}_{\bD}\sup_{ Q \in  \cQ}\Bigg(\frac{3}{2}\mathbb{E}_{(X,A)}[\ell_{ Q} (X, A )]-\frac{1}{2}\mathbb{E}_{(X,A)}[\ell_{ Q} (X, A )]-\frac{3}{2n}\sum_{i = 1}^n\ell_{ Q} (X_i, A_i )-\frac{1}{2n}\sum_{i = 1}^n\ell_{ Q} (X_i, A_i )\Bigg)\\
		&\leq \mathbb{E}_{\bD}\sup_{ Q \in  \cQ}\Bigg(\frac{3}{2}\mathbb{E}_{(X,A)}[\ell_{ Q} (X, A )]-\frac{1}{2C}\mathbb{E}_{(X,A)}[\ell^2_{ Q} (X, A )]-\frac{3}{2n}\sum_{i = 1}^n\ell_{ Q} (X_i, A_i )-\frac{1}{2Cn}\sum_{i = 1}^n\ell^2_{ Q} (X_i, A_i )\Bigg)
	\end{align*}
	where we use the fact that $|Q(X,A)| \leq \frac{R_{\max}}{1-\gamma}$ and $\ell_{ Q}(X, A ) \leq \frac{4R^2_{\max}}{(1-\gamma)^2} =: {{C}}.$
	Now, we introduce a ghost sample
	$\bD^{*} =\{(X_i^{*}, A_i^{*} )\}_{i=1}^n$ 
    independent of $\bD$. Further, let $\tau=\{\tau_i\}_{i=1}^n$ denote a set of Rademacher variables. Substituting the expectation with the empirical mean derived from the ghost sample $\bD^{*}$, we obtain
\begin{align*}
&\mathbb{E}_{\bD}\sup_{ Q \in  \cQ}\Bigg(\frac{3}{2}\mathbb{E}_{(X,A)}[\ell_{ Q} (X, A )]-\frac{1}{2C}\mathbb{E}_{(X,A)}[\ell^2_{ Q} (X, A )]-\frac{3}{2n}\sum_{i = 1}^n\ell_{ Q} (X_i, A_i )-\frac{1}{2Cn}\sum_{i = 1}^n\ell^2_{ Q} (X_i, A_i )\Bigg)\\
&=\mathbb{E}_{\bD}\sup_{ Q \in  \cQ}\Bigg(\frac{3}{2}\mathbb{E}_{\mathbb{D}^*}[\frac{1}{n}\sum_{i=1}^n\ell_{ Q} (X^*_i, A^*_i )]-\frac{1}{2C}\mathbb{E}_{\mathbb{D}^*}[\frac{1}{n}\sum_{i=1}^n\ell^2_{ Q} (X^*_i, A^*_i )]\\
		&~~~~~~~~~~~~~~~~ -\frac{3}{2n}\sum_{i = 1}^n\ell_{ Q} (X_i, A_i )-\frac{1}{2Cn}\sum_{i = 1}^n\ell^2_{ Q} (X_i, A_i )\Bigg)\\
&\leq\mathbb{E}_{\bD,\bD^*}\sup_{ Q \in  \cQ}\Bigg(\frac{3}{2n}\sum_{i=1}^n\ell_{ Q} (X^*_i, A^*_i )-\frac{1}{2Cn}\sum_{i=1}^n\ell^2_{ Q} (X^*_i, A^*_i )\\
		&~~~~~~~~~~~~~~~~ -\frac{3}{2n}\sum_{i = 1}^n\ell_{ Q} (X_i, A_i )-\frac{1}{2Cn}\sum_{i = 1}^n\ell^2_{ Q} (X_i, A_i )\Bigg)\\
		&\leq\mathbb{E}_{\bD,\bD^*,\tau}\sup_{ Q \in  \cQ}\left(\frac{3}{2n}\sum_{i = 1}^n\tau_i\ell_{ Q} (X^*_i, A^*_i )-\frac{1}{2{{C}}n}\sum_{i = 1}^n\ell^2_{ Q} (X^*_i, A^*_i )\right)\\
		&~~~~+\mathbb{E}_{\bD,\bD^*,\tau}\sup_{ Q \in  \cQ}\left(-\frac{3}{2n}\sum_{i = 1}^n\tau_i\ell_{ Q} (X_i, A_i )-\frac{1}{2{{C}}n}\sum_{i = 1}^n\ell^2_{ Q} (X_i, A_i )\right)\\
		&\leq \mathbb{E}_{\bD,\tau}\sup_{ Q \in  \cQ}\left(\frac{3}{n}\sum_{i = 1}^n\tau_i\ell_{ Q} (X_i, A_i )-\frac{1}{{{C}} n}\sum_{i = 1}^n\ell^2_{ Q} (X_i, A_i )\right)\\
		&=3\mathfrak{R}_n^{\text{off}}(\mathcal{Q},\frac{1}{3C}).
	\end{align*}
	Let $\mathcal{N}(\mathcal{Q},~\delta,~\|\cdot\|_{\infty})$ be the covering number of $\mathcal{Q}$ with cover $\mathcal{Q}_{\delta}$, there exists a $ Q_{\delta}\in\mathcal{Q}_\delta$ such that $\norm{ Q- Q_{\delta}}_{\infty}\leq \delta$, that is, 
	\begin{align*}
		|\ell_{ Q} (X_i, A_i )-\ell_{ Q_{\delta}} (X_i, A_i )| &\leq \frac{4R_{\max}(1+\gamma)}{1-\gamma} \norm{ Q- Q_{\delta}}_{\infty}\leq  \frac{4R_{\max}(1+\gamma)}{1-\gamma} \delta.
	\end{align*}
	Thus,  
	\begin{align*}
		\frac{1}{n}\sum_{i = 1}^n\tau_i\ell_{ Q} (X_i, A_i )&\leq\frac{1}{n}\sum_{i = 1}^n\tau_i\ell_{ Q_{\delta}} (X_i, A_i )+\frac{1}{n}\sum_{i = 1}^n|\tau_i||\ell_{ Q} (X_i, A_i )-\ell_{ Q_{\delta}} (X_i, A_i )|\\
		&\leq\frac{1}{n}\sum_{i = 1}^n\tau_i\ell_{ Q_{\delta}} (X_i, A_i )+\frac{4R_{\max}(1+\gamma)}{1-\gamma}\Sigma\frac{1}{n}\sum_{i = 1}^n|\tau_i| \norm{Q-Q_\delta}_\infty \\
		&\leq\frac{1}{n}\sum_{i = 1}^n\tau_i\ell_{ Q_\delta}(X_i,A_i )+\frac{4R_{\max}(1+\gamma)}{1-\gamma}\delta.
	\end{align*}
	Using $0\leq\ell_Q(X,A)\leq C$ and $0\leq\ell_{Q_\delta}(X,A)\leq C$, we also have
	\begin{align*}
		-\frac{1}{n}\sum_{i = 1}^n\ell^2_{ Q}(X_i,A_i )&=-\frac{1}{n}\sum_{i = 1}^n\ell^2_{ Q_{\delta}}(X_i,A_i )+\frac{1}{n}\sum_{i = 1}^n \Big(\ell^2_{ Q_{\delta}}(X_i,A_i )-\ell^2_{ Q}(X_i,A_i )\Big)\\
		&\leq-\frac{1}{n}\sum_{i = 1}^n\ell^2_{ Q_{\delta}}(X_i,A_i )+2{{C}}\frac{1}{n}\sum_{i = 1}^n|\ell_{ Q_{\delta}} (X_i, A_i )-\ell_{ Q} (X_i, A_i )|\\
		&\leq-\frac{1}{n}\sum_{i = 1}^n\ell^2_{ Q_{\delta}}(X_i,A_i )+\frac{8R_{\max}(1+\gamma)C}{1-\gamma}\frac{1}{n}\sum_{i = 1}^n \norm{Q-Q_\delta}_\infty\\
		&\leq-\frac{1}{n}\sum_{i = 1}^n\ell^2_{ Q_{\delta}}(X_i,A_i )+\frac{8R_{\max}(1+\gamma)C}{1-\gamma}\delta.
	\end{align*}
	Since $\tau=\{\tau_i\}_{i = 1}^n$ is a sequence of i.i.d. Rademacher variables independent of 
	$\bD$, 
	then conditionally on $\bD$ we have
	\begin{align*}
		\mathfrak{R}_n^{\text{off}}(\mathcal{Q},\eta  |  \mathbb{D})&\leq \bE_\tau\bigg[\max_{Q_\delta\in\cQ_\delta}\Big(\frac{1}{n}\sum_{i=1}^n\tau_i\ell_{Q_\delta}(X_i,A_i)-\frac{\eta}{n}\sum_{i=1}^n\ell_{Q_\delta}^2(X_i,A_i)\Big)\bigg]\\
		&~~~~ +\frac{4R_{\max}(1+\gamma)}{1-\gamma}(1 + 2\eta {C})\delta,
	\end{align*}
	for any $\eta>0$. 
	Recall that $\{\tau_i\ell_{ Q_{\delta}} (X_i, A_i )\}_{i = 1}^n$ are independent random variables conditioning on $\bD$,
	$$
	\mathbb{E}_{\tau}[\tau_i\ell_{ Q_{\delta}} (X_i, A_i )] = 0,
	$$
	and
	$$-\ell_{ Q_{\delta}} (X_i, A_i )\leq\tau_i\ell_{ Q_{\delta}} (X_i, A_i )\leq\ell_{ Q_{\delta}} (X_i, A_i ),\quad i = 1,\ldots,n.
	$$
	By Hoeffding's inequality (See Lemma \ref{lem:bernstein-hoeffding-inequality}), it yields that for any $ Q_{\delta}\in\mathcal{Q}_{\delta}$ and $\xi>0$,
	\begin{align*}
		&\mathbb{P}_{\tau}\left\{\frac{1}{n}\sum_{i = 1}^n\tau_i\ell_{ Q_{\delta}} (X_i, A_i )>\xi+\frac{\eta}{n}\sum_{i = 1}^n\ell^2_{ Q_{\delta}}(X_i,A_i )\right\}\\
		&\leq\exp\left(-\frac{\Big(n\xi+\eta\sum\limits_{i = 1}^n\ell^2_{ Q_{\delta}}(X_i,A_i )\Big)^2}{2\sum\limits_{i = 1}^n\ell^2_{ Q_{\delta}}(X_i,A_i )}\right)\\
		&\leq\exp(-2\eta n\xi).
	\end{align*}
	For simplicity, we denote $\mathcal{N}(\mathcal{Q},\delta,\|\cdot\|_{\infty})$ as $ \cN $. Therefore, setting $A = \frac{\log \cN}{2\eta n}$, we have
	\begin{align*}
		&\mathbb{E}_{\tau}\max_{ Q_{\delta}\in \mathcal{Q}_{\delta}}\left(\frac{1}{n}\sum_{i = 1}^n\tau_i\ell_{ Q_{\delta}} (X_i, A_i )-\frac{\eta}{n}\sum_{i = 1}^n\ell^2_{ Q_{\delta}}(X_i,A_i )\right)\\
		&\leq\int_{0}^{\infty} \mathbb{P}_{\tau}\left\{\max_{ Q_{\delta}\in\mathcal{Q}_{\delta}}\left(\frac{1}{n}\sum_{i = 1}^n\tau_i\ell_{ Q_{\delta}} (X_i, A_i )-\frac{\eta}{n}\sum_{i = 1}^n\ell^2_{ Q_{\delta}}(X_i,A_i )>\xi\right)\right\}d\xi\\
		&\leq\int_{0}^{\infty} \cN\max_{ Q_{\delta}\in\mathcal{Q}_{\delta}} \mathbb{P}_{\tau}\left\{\frac{1}{n}\sum_{i = 1}^n\tau_i\ell_{ Q_{\delta}} (X_i, A_i )>\xi+\frac{\eta}{n}\sum_{i = 1}^n\ell^2_{ Q_{\delta}}(X_i,A_i )\right\} d\xi\\
		&\leq A+\int_{A}^{\infty}\cN\exp(-2\eta n\xi)d\xi\\
		&\leq \frac{\log \cN}{2\eta n}+\frac{\cN}{2\eta n}\exp(-2\eta n\frac{\log \cN}{2\eta n})\\
		&=\frac{1+\log \cN}{2\eta n}.
	\end{align*}
	By setting $\delta = \frac{1}{n}$, applying Lemma \ref{lem:NBound}, and choosing $\eta = \frac{1}{3{{C}}}$, we obtain the following bound
	\begin{align*}
		\mathbb{E}_{\tau  |  \bD}\sup_{ Q \in  \cQ}\left(\frac{3}{n}\sum_{i = 1}^n\tau_i\ell_{ Q} (X_i, A_i )-\frac{1}{{{C}}n}\sum_{i = 1}^n\ell^2_{ Q}(X_i,A_i )\right)&\leq\frac{9{{C}}\Big(1+\log\cN\Big)}{2n}+\frac{20R_{\max}(1+\gamma)}{1-\gamma}\delta\\
&=\frac{18R^2_{\max}\Big(1+\log\cN\Big)}{(1-\gamma) ^2n}+\frac{20R_{\max}(1+\gamma)}{1-\gamma}\delta\\
		&\leq \cO \Big( \frac{R^2_{\max} \cS\cD \log (n\cB\cW\cD)}{(1-\gamma)^2n}\Big).
	\end{align*}
	Considering the size $\mathcal{S} = \mathcal{O}\big(n^{\frac{d_x+d_a}{d_x+d_a + 2\beta}} \log^5 n\big)$, network depth $\mathcal{D} = \mathcal{O}(\log n)$, 
    width $\mathcal{W}=\mathcal{O}(n^{\frac{d_x+d_a}{d_x+d_a + 2\beta}} \log^3n )$,
    and weight bound $\mathcal{B} = \mathcal{O}\big(n^{\frac{d_x+d_a}{d_x+d_a + 2\beta}}\big)$, we derive the statistical error bound \eqref{eq:RL-statistical-error}.
\end{proof}

\subsection{Proof of Theorem \ref{lem:RL-error}}\label{sec:rl-approximation}

We first show that the empirical optimal Bellman operator $\widehat{\mathcal{T}}^*$  is a $\gamma$-contraction and then give the proof of Theorem \ref{lem:RL-error}.
\begin{lemma}\label{lem:T-hat-fixed-point}
The operator $\widehat{\mathcal T}^{*}$ defined in \eqref{eq:T-hat} is a $\gamma$-contraction with respect to the supremum norm. 
\end{lemma}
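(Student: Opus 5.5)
The plan is to verify the contraction property directly from the definition \eqref{eq:T-hat}, pointwise in $(x,a)$, and then take a supremum. Fix a state--action pair $(x,a)$ and take two bounded functions $Q_1,Q_2$ on $\cX\times\cA$. By construction, the empirical operator uses the \emph{same} generated samples $\{\widehat R_j\}_{j=1}^m\sim\widehat{\cR}(\cdot|x,a)$ and $\{\widehat X_j'\}_{j=1}^m\sim\widehat{\cP}(\cdot|x,a)$ for every candidate $Q$. Consequently, in the difference $\widehat{\mathcal T}^*Q_1(x,a)-\widehat{\mathcal T}^*Q_2(x,a)$ the reward average $\frac1m\sum_j\widehat R_j$ cancels exactly, and only the discounted term survives:
\[
\widehat{\mathcal T}^*Q_1(x,a)-\widehat{\mathcal T}^*Q_2(x,a)=\frac{\gamma}{m}\sum_{j=1}^m\Big(\max_{a'\in\cA}Q_1(\widehat X_j',a')-\max_{a'\in\cA}Q_2(\widehat X_j',a')\Big).
\]

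The key elementary step is the non-expansiveness of the max operator:
\[
\big|\max_{a'}f(a')-\max_{a'}g(a')\big|\le\max_{a'}|f(a')-g(a')|.
\]
To prove it, take $a_1$ (or a near-maximizer, if the max is not attained; for continuous networks on the compact set $\cA$ it is attained) with $\max f=f(a_1)$. Then $\max f-\max g\le f(a_1)-g(a_1)\le\sup|f-g|$, and the symmetric argument gives the other direction. Applying this with $f=Q_1(\widehat X_j',\cdot)$ and $g=Q_2(\widehat X_j',\cdot)$ bounds each summand by $\|Q_1-Q_2\|_\infty$. Averaging over $j$ and multiplying by $\gamma$ then gives $|\widehat{\mathcal T}^*Q_1(x,a)-\widehat{\mathcal T}^*Q_2(x,a)|\le\gamma\|Q_1-Q_2\|_\infty$. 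Since the right-hand side does not depend on $(x,a)$, taking the supremum over $(x,a)$ yields $\|\widehat{\mathcal T}^*Q_1-\widehat{\mathcal T}^*Q_2\|_\infty\le\gamma\|Q_1-Q_2\|_\infty$.

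The only genuine subtlety, and the step I would state carefully, is the convention that the generated samples depend on $(x,a)$ and the generative randomness but not on $Q$. Without this coupling the reward terms would not cancel, and the operator would not even be well defined as a deterministic map on functions. Everything else is routine.

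As a byproduct I would note the consequences used later. First, $\widehat{\mathcal T}^*$ maps functions bounded by $R_{\max}/(1-\gamma)$ into themselves, because $\widehat R_j\in[0,R_{\max}]$ after clipping. Second, by Banach's fixed point theorem on the bounded measurable functions, $\widehat{\mathcal T}^*$ has a unique fixed point. The bound on the Lipschitz constant of $\ell_Q$ in Lemma \ref{lem:LQ-sta} relies on the contraction property itself.
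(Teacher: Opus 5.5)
Your proposal is correct and follows essentially the same argument as the paper: because the same generated samples are used at each $(x,a)$ for every $Q$, the reward averages cancel; the non-expansiveness of the max bounds each summand by $\|Q_1-Q_2\|_\infty$; and averaging over $j$ and taking the supremum over $(x,a)$ gives the factor $\gamma$. Your added remarks (the near-maximizer caveat, the self-map property under clipping, and the use of the lemma in Lemma~\ref{lem:LQ-sta}) are consistent with the paper.
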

\begin{proof}
Let $Q_1,Q_2$ be arbitrary measurable functions satisfying
$0\leq Q_1,Q_2\leq \frac{R_{\max}}{1-\gamma}$.
For any $(x,a)\in[0,1]^{d_x}\times[0,1]^{d_a}$, $\widehat{\mathcal T}^{*}$ uses the same samples at $(x,a)$ for $Q_1$ and $Q_2$, so the reward terms cancel and
\[
\begin{aligned}
\left|\widehat{\mathcal T}^{*}Q_1(x,a)-\widehat{\mathcal T}^{*}Q_2(x,a)\right|
&=\left|\frac{\gamma}{m}\sum_{j=1}^m\left[\max_{a^\prime\in\cA}Q_1\bigl(\widehat X^\prime_j(x,a),a^\prime\bigr)-\max_{a^\prime\in\cA}Q_2\bigl(\widehat X^\prime_j(x,a),a^\prime\bigr)\right]\right|\\
&\leq\frac{\gamma}{m}\sum_{j=1}^m\left|\max_{a^\prime\in\cA}Q_1\bigl(\widehat X^\prime_j(x,a),a^\prime\bigr)-\max_{a^\prime\in\cA}Q_2\bigl(\widehat X^\prime_j(x,a),a^\prime\bigr)\right|\\
&\leq\frac{\gamma}{m}\sum_{j=1}^m\max_{a^\prime\in\cA}\left|Q_1\bigl(\widehat X^\prime_j(x,a),a^\prime\bigr)-Q_2\bigl(\widehat X^\prime_j(x,a),a^\prime\bigr)\right|\\
&\leq\gamma\|Q_1-Q_2\|_\infty.
\end{aligned}
\]
Taking the supremum over $(x,a)\in[0,1]^{d_x}\times[0,1]^{d_a}$ gives
\[
\left\|\widehat{\mathcal T}^{*}Q_1-\widehat{\mathcal T}^{*}Q_2\right\|_\infty\leq\gamma\|Q_1-Q_2\|_\infty.
\]
Since $\gamma<1$, $\widehat{\mathcal T}^{*}$ is a $\gamma$-contraction. 
\end{proof}

\begin{proof}[Proof of Theorem \ref{lem:RL-error}]
By Lemma \ref{lem:ApproximationError}, with $\mathcal{S} = \mathcal{O}\big(n^{\frac{d_x+d_a}{d_x+d_a + 2\beta}} \log^5 n\big)$, $\mathcal{D} = \mathcal{O}(\log n)$, 
$\mathcal{W}=\mathcal{O}(n^{\frac{d_x+d_a}{d_x+d_a + 2\beta}}\log ^3 n)$,
and $\mathcal{B} = \mathcal{O}\big(n^{\frac{d_x+d_a}{d_x+d_a + 2\beta}}\big)$, we have
\begin{equation}\label{eq:RL-approximation-error}
\inf\limits_{Q\in \cQ}\norm{Q-{Q}^*}^2_\infty\lesssim n^{-\frac{2\beta}{d_x+d_a+2\beta}}. 
\end{equation}
Therefore, incorporating the statistical error bound \eqref{eq:RL-statistical-error} together with the approximation error bound \eqref{eq:RL-approximation-error} yields the theorem.
\end{proof}
\section{Main Results}\label{sec:main-results}
{
In this section, we present the proof of error decomposition and our main results.

\begin{proof}[Proof of Lemma~\ref{lem:error-dec}]
Since $Q^*=\mathcal T^*Q^*$, we have $\mathcal L_{\mathcal T^*}(Q^*)=\|\mathcal T^*Q^*-Q^*\|_{L^2(\mu)}^2=0.$
Therefore,
\[
\begin{aligned}
\mathcal L_{\mathcal T^*}(\widehat Q)-\mathcal L_{\mathcal T^*}(Q^*)
&=\|\mathcal T^*\widehat Q-\widehat{\mathcal T}^*\widehat Q+\widehat{\mathcal T}^*\widehat Q-\widehat Q\|_{L^2(\mu)}^2\\
&\leq2\|\mathcal T^*\widehat Q-\widehat{\mathcal T}^*\widehat Q\|_{L^2(\mu)}^2+2\mathcal L_{\widehat{\mathcal T}^*}(\widehat Q)\\
&=2\|\mathcal T^*\widehat Q-\widehat{\mathcal T}^*\widehat Q\|_{L^2(\mu)}^2+2\left(\mathcal L_{\widehat{\mathcal T}^*}(\widehat Q)-2\widehat{\mathcal L}_{\widehat{\mathcal T}^*}(\widehat Q)\right)+4\widehat{\mathcal L}_{\widehat{\mathcal T}^*}(\widehat Q)\\
&\leq2\|\mathcal T^*\widehat{Q}-\widehat{\mathcal T}^*\widehat{Q}\|_{L^2(\mu)}^2+2\sup_{Q\in\mathcal Q}\left(\mathcal L_{\widehat{\mathcal T}^*}(Q)-2\widehat{\mathcal L}_{\widehat{\mathcal T}^*}(Q)\right)\\
&~~ +4\inf_{Q\in\mathcal Q}\widehat{\mathcal L}_{\widehat{\mathcal T}^*}(Q),
\end{aligned}
\]
where the last inequality follows from $\widehat{\mathcal L}_{\widehat{\mathcal T}^*}(\widehat Q)=\inf_{Q\in\mathcal Q}\widehat{\mathcal L}_{\widehat{\mathcal T}^*}(Q).$
Taking expectation over $\bD$ gives
\[
\begin{aligned}
\bE_{\bD}\left[\mathcal L_{\mathcal T^*}(\widehat Q)-\mathcal L_{\mathcal T^*}(Q^*)\right]
&\leq 2
\bE_{\bD}\left[\|\mathcal T^*\widehat{Q}-\widehat{\mathcal T}^*\widehat{Q}\|_{L^2(\mu)}^2\right]\\
&\quad+2\bE_{\bD}\sup_{Q\in\mathcal Q}\left(\mathcal L_{\widehat{\mathcal T}^*}(Q)-2\widehat{\mathcal L}_{\widehat{\mathcal T}^*}(Q)\right)+4\bE_{\bD}\inf_{Q\in\mathcal Q}\widehat{\mathcal L}_{\widehat{\mathcal T}^*}(Q).
\end{aligned}
\]
Conditional on the learned operator $\widehat{\mathcal T}^*$, the value sample $\bD$ remains independent and identically distributed according to $\mu$. Therefore,
\[
\bE_{\bD}\inf_{Q\in\mathcal Q}\widehat{\mathcal L}_{\widehat{\mathcal T}^*}(Q)\leq\inf_{Q\in\mathcal Q}\bE_{\bD}\widehat{\mathcal L}_{\widehat{\mathcal T}^*}(Q)=\inf_{Q\in\mathcal Q}\mathcal L_{\widehat{\mathcal T}^*}(Q).
\]
For every $Q\in\mathcal Q$, the identity $\mathcal T^*Q^*=Q^*$ gives
\[
\widehat{\mathcal T}^*Q-Q=\widehat{\mathcal T}^*Q-\widehat{\mathcal T}^*Q^*+\widehat{\mathcal T}^*Q^*-\mathcal T^*Q^*+Q^*-Q.
\]
Hence,
\[
\begin{aligned}
\mathcal L_{\widehat{\mathcal T}^*}(Q)
&=\left\|\widehat{\mathcal T}^*Q-\widehat{\mathcal T}^*Q^*+\widehat{\mathcal T}^*Q^*-\mathcal T^*Q^*+Q^*-Q\right\|_{L^2(\mu)}^2\\
&\leq2\|\widehat{\mathcal T}^*Q^*-\mathcal T^*Q^*\|_{L^2(\mu)}^2+2\left(\|\widehat{\mathcal T}^*Q-\widehat{\mathcal T}^*Q^*\|_{L^2(\mu)}+\|Q-Q^*\|_{L^2(\mu)}\right)^2\\
&\leq2\|\widehat{\mathcal T}^*Q^*-\mathcal T^*Q^*\|_{L^2(\mu)}^2+2(1+\gamma)^2\|Q-Q^*\|_\infty^2.
\end{aligned}
\]
The last inequality uses
\[
\|\widehat{\mathcal T}^*Q-\widehat{\mathcal T}^*Q^*\|_{L^2(\mu)}\leq\|\widehat{\mathcal T}^*Q-\widehat{\mathcal T}^*Q^*\|_\infty\leq\gamma\|Q-Q^*\|_\infty
\]
and $\|Q-Q^*\|_{L^2(\mu)}\leq\|Q-Q^*\|_\infty,$
since $\mu$ is a probability measure. Taking the infimum over $Q\in\mathcal Q$ gives
\[
4\inf_{Q\in\mathcal Q}\mathcal L_{\widehat{\mathcal T}^*}(Q)\leq8\|\widehat{\mathcal T}^*Q^*-\mathcal T^*Q^*\|_{L^2(\mu)}^2+8(1+\gamma)^2\inf_{Q\in\mathcal Q}\|Q-Q^*\|_\infty^2.
\]
Combining the preceding bounds, we obtain
\[
\begin{aligned}
\bE_{\bD}\left[\mathcal L_{\mathcal T^*}(\widehat Q)-\mathcal L_{\mathcal T^*}(Q^*)\right]
&\leq 2
\bE_{\bD}\left[\|\mathcal T^*\widehat{Q}-\widehat{\mathcal T}^*\widehat{Q}\|_{L^2(\mu)}^2\right]+8\|\widehat{\mathcal T}^*Q^*-\mathcal T^*Q^*\|_{L^2(\mu)}^2\\
&~~ +2\bE_{\bD}\sup_{Q\in\mathcal Q}\left(\mathcal L_{\widehat{\mathcal T}^*}(Q)-2\widehat{\mathcal L}_{\widehat{\mathcal T}^*}(Q)\right)+8(1+\gamma)^2\inf_{Q\in\mathcal Q}\|Q-Q^*\|_\infty^2.
\end{aligned}
\]
This completes the proof.
\end{proof}
}

\begin{proof}[Proof of Theorem \ref{thm:main-result}]
By the error decomposition of Lemma \ref{lem:error-dec}, together with Theorems \ref{lem:rl-diffusion-bound}--\ref{lem:RL-error}, the desired theorem follows directly.
\end{proof}

Using the similar argument of \citet{chen2019information}, we can prove Theorem \ref{thm:value-est} below.

{
\begin{proof}[Proof of Theorem \ref{thm:value-est}]
Since $\mathcal A=[0,1]^{d_a}$ is compact, every
$Q\in\mathcal Q$ is continuous, and $Q^\star\in\mathcal H^\beta$,
the positive-measure condition in Assumption~4.6 ensures that
$\pi_{Q,\delta}$ is a well defined measurable stochastic kernel.
For every $a\in G_{Q,\delta}(x)$, we first show that
\begin{equation}\label{eq:comparison-action}\left|\max_{\widetilde a\in\cA}Q(x,\widetilde a)-\max_{\widetilde a\in\cA}Q^*(x,\widetilde a)\right|\leq|Q(x,a)-Q^*(x,a)|+\delta.\end{equation}
Suppose first that
\[
\max_{\widetilde a\in\cA}Q(x,\widetilde a)\geq\max_{\widetilde a\in\cA}Q^*(x,\widetilde a).
\]
For $a\in G_{Q,\delta}(x)$, the definition of $G_{Q,\delta}(x)$ gives
\[
\max\{Q(x,a),Q^*(x,a)\}\geq\max_{\widetilde a\in\cA}Q(x,\widetilde a)-\delta.
\]
If $Q(x,a)\geq Q^*(x,a)$, then
\[
\begin{aligned}\max_{\widetilde a\in\cA}Q(x,\widetilde a)-\max_{\widetilde a\in\cA}Q^*(x,\widetilde a)&\leq\max_{\widetilde a\in\cA}Q(x,\widetilde a)-Q^*(x,a)\\&=\max_{\widetilde a\in\cA}Q(x,\widetilde a)-Q(x,a)+Q(x,a)-Q^*(x,a)\\&\leq\delta+|Q(x,a)-Q^*(x,a)|.\end{aligned}
\]
If $Q^*(x,a)>Q(x,a)$, then
\[
\max_{\widetilde a\in\cA}Q(x,\widetilde a)-\max_{\widetilde a\in\cA}Q^*(x,\widetilde a)\leq\max_{\widetilde a\in\cA}Q(x,\widetilde a)-Q^*(x,a)\leq\delta.
\]
The opposite ordering follows by interchanging $Q$ and $Q^*$. This proves \eqref{eq:comparison-action}. For every $\nu\in\mathfrak M_{\delta}$, Jensen's inequality and \eqref{eq:comparison-action} give
\[
\begin{aligned}\|\cT^*Q-\cT^*Q^*\|_{L^2(\nu)}^2&=\gamma^2\int_{\cX\times\cA}\left(\int_{\cX}\left[\max_{\widetilde a\in\cA}Q(x',\widetilde a)-\max_{\widetilde a\in\cA}Q^*(x',\widetilde a)\right]\cP(\mathrm dx' |  x,a)\right)^2\nu(\mathrm dx,\mathrm da)\\
&\leq\gamma^2\int_{\cX\times\cA}\int_{\cX}\left|\max_{\widetilde a\in\cA}Q(x',\widetilde a)-\max_{\widetilde a\in\cA}Q^*(x',\widetilde a)\right|^2\cP(\mathrm dx' |  x,a)\nu(\mathrm dx,\mathrm da)\\
&\leq\gamma^2\int_{\cX\times\cA}\int_{\cX}\int_{\cA}\left(|Q(x',a')-Q^*(x',a')|+\delta\right)^2\\
&~~~~~~~~~~ \times \pi_{Q,\delta}(\mathrm da' |  x')\cP(\mathrm dx' |  x,a)\nu(\mathrm dx,\mathrm da)\\
&=\gamma^2\left\|\,|Q-Q^*|+\delta\,\right\|_{L^2(\cP(\nu)\otimes\pi_{Q,\delta})}^2.
\end{aligned}
\]
Taking square roots and applying Minkowski's inequality give
\begin{equation}\label{eq:bellman-comparison}\|\cT^*Q-\cT^*Q^*\|_{L^2(\nu)}\leq\gamma\|Q-Q^*\|_{L^2(\cP(\nu)\otimes\pi_{Q,\delta})}+\gamma\delta.\end{equation}

The density ratio condition in Assumption~\ref{apt:concentrability} gives
\[
\begin{aligned}\|Q-\cT^*Q\|_{L^2(\nu)}^2&=\int_{\cX\times\cA}|Q(x,a)-\cT^*Q(x,a)|^2\frac{\mathrm d\nu}{\mathrm d\mu}(x,a)\mu(\mathrm dx,\mathrm da)\\
&\leq C_\delta\int_{\cX\times\cA}|Q(x,a)-\cT^*Q(x,a)|^2\mu(\mathrm dx,\mathrm da)\\
&= C_\delta\|Q-\cT^*Q\|_{L^2(\mu)}^2.\end{aligned}
\]
Hence,
\begin{equation}\label{eq:residual-transfer}\|Q-\cT^*Q\|_{L^2(\nu)}\leq\sqrt{ C_\delta}\|Q-\cT^*Q\|_{L^2(\mu)}.\end{equation}
Using $Q^*=\cT^*Q^*$ together with \eqref{eq:bellman-comparison} and \eqref{eq:residual-transfer}, we obtain
\[
\begin{aligned}\|Q-Q^*\|_{L^2(\nu)}&\leq\|Q-\cT^*Q\|_{L^2(\nu)}+\|\cT^*Q-\cT^*Q^*\|_{L^2(\nu)}\\&\leq\sqrt{ C_\delta}\|Q-\cT^*Q\|_{L^2(\mu)}+\gamma\|Q-Q^*\|_{L^2(\cP(\nu)\otimes\pi_{Q,\delta})}+\gamma\delta.\end{aligned}
\]
By the definition of $\mathfrak M_{\delta}$, $\cP(\nu)\otimes\pi_{Q,\delta}\in\mathfrak M_{\delta}$. Taking the supremum over $\nu\in\mathfrak M_{\delta}$ gives
\[
\sup_{\nu\in\mathfrak M_{\delta}}\|Q-Q^*\|_{L^2(\nu)}\leq\sqrt{ C_\delta}\|Q-\cT^*Q\|_{L^2(\mu)}+\gamma\sup_{\nu\in\mathfrak M_{\delta}}\|Q-Q^*\|_{L^2(\nu)}+\gamma\delta.
\]
The supremum is finite because $Q$ and $Q^*$ are bounded. Rearranging gives
\[
\sup_{\nu\in\mathfrak M_{\delta}}\|Q-Q^*\|_{L^2(\nu)}\leq\frac{\sqrt{ C_\delta}}{1-\gamma}\|Q-\cT^*Q\|_{L^2(\mu)}+\frac{\gamma\delta}{1-\gamma}.
\]
Since $\mathfrak M\subseteq\mathfrak M_{\delta}$, for every $\rho\in\mathfrak M$,
\[
\|Q-Q^*\|_{L^2(\rho)}\leq\frac{\sqrt{ C_\delta}}{1-\gamma}\|Q-\cT^*Q\|_{L^2(\mu)}+\frac{\gamma\delta}{1-\gamma}.
\]
Applying Jensen's inequality, and using $Q^*=\cT^*Q^*$ yield
\[
\begin{aligned}\mathbb E\|\widehat Q-Q^*\|_{L^2(\rho)}&\leq\frac{\sqrt{ C_\delta}}{1-\gamma}\mathbb E\|\widehat Q-\cT^*\widehat Q\|_{L^2(\mu)}+\frac{\gamma\delta}{1-\gamma}\\&\leq\frac{\sqrt{ C_\delta}}{1-\gamma}\left(\mathbb E\|\widehat Q-\cT^*\widehat Q\|_{L^2(\mu)}^2\right)^{\frac12}+\frac{\gamma\delta}{1-\gamma}\\&=\frac{\sqrt{ C_\delta}}{1-\gamma}\left(\mathbb E\left[\cL_{\cT^*}(\widehat Q)-\cL_{\cT^*}(Q^*)\right]\right)^{\frac12}+\frac{\gamma\delta}{1-\gamma}\\&\lesssim\frac{R_{\max}\sqrt{ C_\delta}}{(1-\gamma)^2}n^{-\frac{\beta}{d_x+d_a+2\beta}}\log^{\frac{19}{2}}n+\frac{\gamma\delta}{1-\gamma},\end{aligned}
\]
where the last inequality follows from Theorem~\ref{thm:main-result}. 
Taking $\delta=\delta_n:=n^{-\frac{\beta}{d_x+d_a+2\beta}}$ and using $C_{\delta_n}\leq\widetilde C$, we obtain
\[
\begin{aligned}
\mathbb E\|\widehat Q-Q^*\|_{L^2(\rho)}
&\lesssim\frac{R_{\max}\sqrt{\widetilde C}}{(1-\gamma)^2}n^{-\frac{\beta}{d_x+d_a+2\beta}}\log^{\frac{19}{2}}n
+\frac{\gamma}{1-\gamma}n^{-\frac{\beta}{d_x+d_a+2\beta}}\\
&\lesssim\frac{R_{\max}\sqrt{\widetilde C}}{(1-\gamma)^2}n^{-\frac{\beta}{d_x+d_a+2\beta}}\log^{\frac{19}{2}}n.
\end{aligned}
\]
This completes the proof.
\end{proof}

\begin{proof}[Proof of Theorem \ref{thm:weighted}]
Since $\omega_n$, $\omega_r$, and $\omega_x$ are fixed and positive, $n$, $n_r$, and $n_x$ are all of order $\widetilde n$. Moreover, $d_x\geq1$ implies
\[ \frac{R_{\max}^2}{1-\gamma}n_r^{-\frac{2\alpha}{1+d_x+d_a+2\alpha}}\log^{\frac{19}{2}}n_r+\frac{R_{\max}^2}{(1-\gamma)^2}n_x^{-\frac{2\alpha}{2d_x+d_a+2\alpha}}\log^{19}n_x\lesssim\frac{R_{\max}^2}{(1-\gamma)^2}\widetilde n^{-\frac{2\alpha}{2d_x+d_a+2\alpha}}\log^{19}\widetilde n. \]
The choice of $m$ gives 
\[ \frac{\widetilde n^{\frac{d_x+d_a}{d_x+d_a+2\beta}}}{m}
\lesssim\widetilde n^{-\frac{2\beta}{d_x+d_a+2\beta}}. \]
Substituting these bounds into Theorem~\ref{thm:main-result} and Theorem \ref{thm:value-est} proves Theorem \ref{thm:weighted}.
\end{proof}
}

{
\begin{proof}[Proof of Theorem \ref{thm:Q-minimax-lower}]
Set $s=\max\{\alpha,\beta\}$ and $\widetilde n=n+n_r+n_x$. We construct a finite family of MDPs contained in $\mathfrak E_{\alpha,\beta}$. Let every MDP in this family have the common uniform transition density
\[
p_X(x'| x,a)=1,~~ x'\in\cX.
\]
Choose a nonzero nonnegative infinitely differentiable function $\psi:\mathbb R^{d_x+d_a}\to[0,\infty)$ whose support lies strictly inside $\left(-\frac12,\frac12\right)^{d_x+d_a}$. For sufficiently small $h>0$, choose $J_h\asymp h^{-(d_x+d_a)}$ centers $z_1,\ldots,z_{J_h}\in[h,1-h]^{d_x+d_a}$ such that $\|z_j-z_k\|_\infty\geq2h$ whenever $j\neq k$. Then each translated and rescaled function $z\mapsto\psi \Big(\frac{z-z_j}{h}\Big)$ vanishes outside $[0,1]^{d_x+d_a}$, and no two of these functions are nonzero at the same point.
For $v\in\{0,1\}^{J_h}$ and $z=(x,a)$, define
\[
f_v(z)=c_0h^s\sum_{j=1}^{J_h}v_j\psi\left(\frac{z-z_j}{h}\right),~~ b_v=\int_{\cX}\max_{a\in\cA}f_v(x,a)\,\mathrm dx,
\]
where $c_0>0$ is a fixed constant chosen below. Since $\max\limits_{a\in\cA}f_v(x,a)\leq \|f_v\|_\infty$, then
\[
0\leq b_v\leq\|f_v\|_\infty\leq c_0h^s\|\psi\|_\infty.
\]
For each $t\in\{\alpha,\beta\}$, write $t=k+\tau$, where $k\in\mathbb N_0$ and $\tau\in(0,1]$. By the chain rule, the disjointness of the bump regions, and their mutual separation,
\begin{align*}
&\max_{\|\xi\|_1\leq k}\|\partial^\xi f_v\|_\infty
\leq c_0\max_{\|\xi\|_1\leq k}h^{s-\|\xi\|_1}\|\partial^\xi\psi\|_\infty
\lesssim c_0h^{s-t}
\lesssim c_0,\\
&\max_{\|\xi\|_1=k}\sup_{z\neq z'}\frac{|\partial^\xi f_v(z)-\partial^\xi f_v(z')|}{\|z-z'\|^\tau}
\lesssim c_0h^{s-k-\tau}
=c_0h^{s-t}
\lesssim c_0.
\end{align*}
The last bound follows from $s=\alpha\vee\beta$ and $0<h\leq1$. Hence, by choosing $c_0$ sufficiently small, the functions $f_v$ belong uniformly to the prescribed $\alpha$-H\"older and $\beta$-H\"older balls. Next, choose an infinitely differentiable function $\varphi:\mathbb R\to\mathbb R$ with compact support such that, for $r\in[0,R_{\max}]$,
\[
\varphi(r)=\frac{12\left(r-\frac{R_{\max}}{2}\right)}{R_{\max}^3}.
\]
Direct integration gives
\[
\int_0^{R_{\max}}\varphi(r)\,dr=0,~~ \int_0^{R_{\max}}r\varphi(r)\,dr=1,~~ \int_0^{R_{\max}}\varphi(r)^2\,dr=\frac{12}{R_{\max}^3},~~ \sup_{r\in[0,R_{\max}]}|\varphi(r)|=\frac6{R_{\max}^2}.
\]
Define
\[
Q_v(x,a)=\frac{R_{\max}}{2(1-\gamma)}+f_v(x,a)
\]
and
\[
p_{R,v}(r| x,a)=\frac{1}{R_{\max}}+\bigl(f_v(x,a)-\gamma b_v\bigr)\varphi(r),~~ r\in[0,R_{\max}].
\]
Let $M_v$ denote the resulting MDP, where $R$ and $X'$ are conditionally independent given $(X,A)$. The identities for $\varphi$ imply
\[
\int_0^{R_{\max}}p_{R,v}(r| x,a)\,\mathrm dr=1,~~ \int_0^{R_{\max}}r p_{R,v}(r| x,a)\,\mathrm dr=\frac{R_{\max}}{2}+f_v(x,a)-\gamma b_v.
\]
Moreover,
\[
\left|\bigl(f_v(x,a)-\gamma b_v\bigr)\varphi(r)\right|\leq\frac{6(1+\gamma)c_0h^s\|\psi\|_\infty}{R_{\max}^2}.
\]
Hence, for sufficiently small fixed $c_0$,
\[
\frac{1}{2R_{\max}}\leq p_{R,v}(r| x,a)\leq\frac{3}{2R_{\max}}.
\]
Therefore, $p_{R,v}$ is a valid conditional density supported on $[0,R_{\max}]$ and satisfies fixed positive density bounds. The product rule and the preceding H\"older bounds show that $p_{R,v}$ belongs uniformly to a fixed $\alpha$-H\"older ball. Reducing $c_0$ if necessary also ensures that $Q_v$ belongs to the prescribed $\beta$-H\"older ball and satisfies
\[
0\leq Q_v(x,a)\leq\frac{R_{\max}}{1-\gamma}.
\]
Thus, $M_v\in\mathfrak E_{\alpha,\beta}$. The conditional mean reward under $M_v$ is
\[
\mathbb E[R| X=x,A=a]=\frac{R_{\max}}{2}+f_v(x,a)-\gamma b_v.
\]
Since the transition density is uniform,
\[
\mathbb E\left[\max_{a'\in\cA}Q_v(X',a')| X=x,A=a\right]=\frac{R_{\max}}{2(1-\gamma)}+b_v.
\]
Consequently,
\[
\begin{aligned}
(\mathcal T_{M_v}^*Q_v)(x,a)&=\frac{R_{\max}}{2}+f_v(x,a)-\gamma b_v+\gamma\left(\frac{R_{\max}}{2(1-\gamma)}+b_v\right)\\
&=\frac{R_{\max}}{2(1-\gamma)}+f_v(x,a)=Q_v(x,a).
\end{aligned}
\]
The optimal Bellman operator is a $\gamma$-contraction under the supremum norm. Its fixed point is unique, and hence
\[
Q_{M_v}^*=Q_v.
\]
By Lemma \ref{lem:information-tools}, there exists a set $\mathcal V_h\subset\{0,1\}^{J_h}$ containing the zero vector such that
\[
\log|\mathcal V_h|\geq\frac{J_h}{8},~~ d_{\mathrm H}(v,v')\geq\frac{J_h}{4},\quad v\neq v'.
\]
Since $Q_{M_v}^*(z)=\frac{R_{\max}}{2(1-\gamma)}+f_v(z)$, the common constant term cancels when $Q_{M_v}^*$ and $Q_{M_{v'}}^*$ are subtracted. Moreover, $\rho=\operatorname{Unif}(\cX\times\cA)$ and $\cX\times\cA$ has unit volume. The disjointness of the bump regions eliminates all cross terms. Using the change of variables $u=\frac{z-z_j}{h}$ and $\sum\limits_{j=1}^{J_h}(v_j-v_j')^2=d_{\mathrm H}(v,v')$, we obtain, for distinct $v,v'\in\mathcal V_h$,
\begin{align*}
\|Q_{M_v}^*-Q_{M_{v'}}^*\|_{L^2(\rho)}^2
&=\int_{\cX\times\cA}|f_v(z)-f_{v'}(z)|^2\,dz\\
&=c_0^2h^{2s}\sum_{j=1}^{J_h}(v_j-v_j')^2\int_{\cX\times\cA}\psi^2\left(\frac{z-z_j}{h}\right)dz\\
&=c_0^2h^{2s+d_x+d_a}\|\psi\|_{L^2(\mathbb R^{d_x+d_a})}^2d_{\mathrm H}(v,v')\\
&\gtrsim h^{2s+d_x+d_a}J_h
\asymp h^{2s},
\end{align*}
where the last line uses $d_{\mathrm H}(v,v')\gtrsim J_h$ and $J_h\asymp h^{-(d_x+d_a)}$. Consequently, there exists a constant $c_1>0$, independent of $h$ and $v$, such that
\[
\|Q_{M_v}^*-Q_{M_{v'}}^*\|_{L^2(\rho)}\geq2c_1h^s,\qquad v\neq v'.
\]
Let $P_v$ denote the distribution of one observation $(X,A,R,X')$ under $M_v$. Since $f_0=b_0=0$,
\[
p_{R,0}(r| x,a)=\frac{1}{R_{\max}}.
\]
By Lemma \ref{lem:information-tools},
\[
\begin{aligned}
\mathrm{KL}(P_v\|P_0)&=\int_{\cX\times\cA}\mathrm{KL}\left(p_{R,v}(\cdot |  x,a) \middle\|p_{R,0}(\cdot |  x,a)\right)\mu(dx,da)\\
&\leq R_{\max}\int_{\cX\times\cA}\int_0^{R_{\max}}|f_v(x,a)-\gamma b_v|^2\varphi(r)^2\,\mathrm dr\,\mathrm dx\,\mathrm da\\
&=\frac{12}{R_{\max}^2}\int_{\cX\times\cA}|f_v(x,a)-\gamma b_v|^2\,\mathrm dx\,\mathrm da.
\end{aligned}
\]
Disjointness of the bump supports gives
\[
\int_{\cX\times\cA}f_v(x,a)^2\,\mathrm dx\,\mathrm da=c_0^2h^{2s+d_x+d_a}\|\psi\|_2^2\sum_{j=1}^{J_h}v_j\lesssim h^{2s},
\]
and $b_v^2\lesssim h^{2s}$. Hence
\[
\mathrm{KL}(P_v\|P_0)\lesssim h^{2s}.
\]
Let $\mathbb P_v$ denote the joint distribution of $\mathbb S_r,\mathbb S_x,\bD$ under $M_v$. Since the three batches are mutually independent and contain a total of $\widetilde n$ independent observations, Lemma \ref{lem:information-tools} gives
\[
\mathrm{KL}(\mathbb P_v\|\mathbb P_0)=n_r\mathrm{KL}(P_v\|P_0)+n_x\mathrm{KL}(P_v\|P_0)+n\mathrm{KL}(P_v\|P_0)=\widetilde n\,\mathrm{KL}(P_v\|P_0)\lesssim\widetilde n h^{2s}.
\]
Choose $h=c_h\widetilde n^{-\frac{1}{d_x+d_a+2s}},$
where $c_h>0$ is sufficiently small. Since $J_h\asymp h^{-(d_x+d_a)}$, $\log|\mathcal V_h|\gtrsim h^{-(d_x+d_a)}.$
Therefore,
\[
\frac{\widetilde n h^{2s}}{\log|\mathcal V_h|}\lesssim\widetilde n h^{d_x+d_a+2s}=c_h^{d_x+d_a+2s}.
\]
Choosing $c_h$ sufficiently small gives
\[
\begin{aligned}
\frac1{|\mathcal V_h|}\sum_{v\in\mathcal V_h}
\mathrm{KL}(\mathbb P_v\|\mathbb P_0)
&\leq\frac1{|\mathcal V_h|}
\sum_{v\in\mathcal V_h}\frac1{16}\log|\mathcal V_h|\\
&=\frac1{|\mathcal V_h|}\,|\mathcal V_h|\,
\frac1{16}\log|\mathcal V_h|\\
&=\frac1{16}\log|\mathcal V_h|.
\end{aligned}
\]
Let $V$ be uniformly distributed on $\mathcal V_h$. Conditionally on $V=v$, let $Y=(\mathbb S_r,\mathbb S_x,\bD)$ have distribution $\mathbb P_v$. The marginal distribution of $Y$ is therefore
\[
\overline{\mathbb P}:=\frac1{|\mathcal V_h|}\sum_{v\in\mathcal V_h}\mathbb P_v.
\]
The mutual information between $V$ and $Y$ is
\[
I(V;Y):=\frac1{|\mathcal V_h|}\sum_{v\in\mathcal V_h}\mathrm{KL}(\mathbb P_v\|\overline{\mathbb P}).
\]
Then, we obtain
\[
\begin{aligned}
\frac{1}{|\mathcal V_h|}\sum_{v\in\mathcal V_h}\mathrm{KL}(\mathbb P_v\|\mathbb P_0)&=\frac{1}{|\mathcal V_h|}\sum_{v\in\mathcal V_h}\mathrm{KL}(\mathbb P_v\|\overline{\mathbb P})+\mathrm{KL}(\overline{\mathbb P}\|\mathbb P_0)\\
&=I(V;Y)+\mathrm{KL}(\overline{\mathbb P}\|\mathbb P_0)\geq I(V;Y).
\end{aligned}
\]
Consequently,
\[
I(V;Y)\leq\frac1{16}\log|\mathcal V_h|.
\]
Since $J_h\asymp h^{-(d_x+d_a)}$ and $\log|\mathcal V_h|\geq\frac{J_h}{8}$, there exist constants $c_{\mathcal V}>0$ and $h_0>0$ such that
\begin{equation}\label{eq:packing-cardinality}
\log|\mathcal V_h|\geq c_{\mathcal V}h^{-(d_x+d_a)},\qquad 0<h\leq h_0.
\end{equation}
Applying Lemma~\ref{lem:information-tools} with $\theta_v=Q_{M_v}^*$ and $\delta=c_1h^s$, together with $I(V;Y)\leq\frac{1}{16}\log|\mathcal V_h|$, gives
\[
\inf_{\widehat Q}\sup_{v\in\mathcal V_h}\mathbb P_{M_v}\left(\|\widehat Q-Q_{M_v}^*\|_{L^2(\rho)}\geq c_1h^s\right)\geq\frac{15}{16}-\frac{\log 2}{\log|\mathcal V_h|}.
\]
Fix $c_2\in\left(0,\frac{15}{16}\right)$ and define
\[
N_0:=\left\lceil\max\left\{\left(\frac{c_h}{h_0}\right)^{d_x+d_a+2s},\ c_h^{d_x+d_a+2s}\left(\frac{\log 2}{c_{\mathcal V}(\frac{15}{16}-c_2)}\right)^{\frac{d_x+d_a+2s}{d_x+d_a}}\right\}\right\rceil.
\]
By \eqref{eq:packing-cardinality} and $h=c_h\widetilde n^{-\frac{1}{d_x+d_a+2s}}$, for every $\widetilde n\geq N_0$,
\[
\inf_{\widehat Q}\sup_{v\in\mathcal V_h}\mathbb P_{M_v}\left(\|\widehat Q-Q_{M_v}^*\|_{L^2(\rho)}\geq c_1h^s\right)\geq c_2.
\]
Finally, $\mathbb E[Z]\geq t\mathbb P(Z\geq t)$ and $\{M_v:v\in\mathcal V_h\}\subset\mathfrak E_{\alpha,\beta}$ yield
\[
\begin{aligned}
\inf_{\widehat Q}\sup_{M\in\mathfrak E_{\alpha,\beta}}\mathbb E_M\|\widehat Q-Q_M^*\|_{L^2(\rho)}
&\geq c_1c_2h^s\\
&\asymp\widetilde n^{-\frac{\max\{\alpha,\beta\}}{d_x+d_a+2\max\{\alpha,\beta\}}}\\
&=\widetilde n^{-\max\left\{\frac{\alpha}{d_x+d_a+2\alpha},\frac{\beta}{d_x+d_a+2\beta}\right\}}.
\end{aligned}
\]
This completes the proof.
\end{proof}
} 
\section{Auxiliary Results}\label{sec:apd-auxiliary-results}

\begin{lemma}[Theorem 2.5 of \cite{boucheron2013concentration}]
\label{lem:maximal-inequality}
Let $Z_1,\ldots,Z_N$ be real-valued random variables such that for every $\lambda\in(0,b)$ and $i=1,\ldots,N$, the logarithm of the moment generating function of $Z_i$ satisfies $\psi_{Z_i}(\lambda)\leq\psi(\lambda)$ where $\psi$ is a convex and continuously differentiable function on $[0,b)$ with $0<b\leq\infty$ such that $\psi(0)=\psi^\prime(0)=0$. Then
\[
\mathbb{E}\max_{i=1,\ldots,N}Z_i\leq\psi^{*-1}(\log N).
\]
In particular, if the $Z_i$ are sub-Gaussian with variance factor $v$, that is, $\psi(\lambda)=\lambda^2v/2$ for every $\lambda\in(0,\infty)$, then
\[
\mathbb{E}\max_{i=1,\ldots,N}Z_i\leq\sqrt{2v\log N}.
\]
\end{lemma}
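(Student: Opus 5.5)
The statement is the maximal inequality of Theorem 2.5 in \cite{boucheron2013concentration}. I would prove it by the standard Chernoff-type exponential moment argument, working with the Cramér transform $\psi^*(t)=\sup_{\lambda\in(0,b)}(\lambda t-\psi(\lambda))$.

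\textbf{General bound.} Fix $\lambda\in(0,b)$. By Jensen's inequality applied to the convex map $u\mapsto e^{\lambda u}$, and then bounding the maximum of nonnegative terms by their sum,
\[
e^{\lambda\mathbb E\max_i Z_i}\leq \mathbb E e^{\lambda\max_i Z_i}=\mathbb E\max_i e^{\lambda Z_i}\leq\sum_{i=1}^N\mathbb E e^{\lambda Z_i}\leq N e^{\psi(\lambda)}.
\]
Taking logarithms and dividing by $\lambda$ gives
\[
\mathbb E\max_i Z_i\leq\inf_{\lambda\in(0,b)}\frac{\log N+\psi(\lambda)}{\lambda}.
\]

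\textbf{Identifying the infimum with $\psi^{*-1}(\log N)$.} This is the only delicate step. The hypotheses $\psi$ convex, $C^1$ on $[0,b)$, $\psi(0)=\psi'(0)=0$ ensure that $\psi^*$ is nonnegative, convex, nondecreasing on $[0,\infty)$ with $\psi^*(0)=0$, so the generalized inverse $\psi^{*-1}(y)=\inf\{t\geq0:\psi^*(t)>y\}$ is well defined. Now fix $t$ with $\psi^*(t)>\log N$. By definition of $\psi^*$ there is some $\lambda\in(0,b)$ with $\lambda t-\psi(\lambda)>\log N$, that is, $\frac{\log N+\psi(\lambda)}{\lambda}<t$. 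Hence the infimum above is at most every such $t$, and therefore at most $\psi^{*-1}(\log N)$. This inequality is all that is needed, so I would not prove the reverse equality.

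\textbf{Sub-Gaussian case.} With $\psi(\lambda)=\lambda^2v/2$ and $b=\infty$, I would compute the infimum directly instead of going through $\psi^*$. Minimizing $\frac{\log N}{\lambda}+\frac{\lambda v}{2}$ at $\lambda=\sqrt{2\log N/v}$ gives $\sqrt{2v\log N}$. Equivalently, $\psi^*(t)=t^2/(2v)$ and $\psi^{*-1}(y)=\sqrt{2vy}$.

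When $N=1$ we have $\log N=0$, and the bound $\mathbb E Z_1\leq0$ follows from Jensen's inequality together with $\psi(\lambda)/\lambda\to0$ as $\lambda\to0$; this edge case needs only a one-line remark.
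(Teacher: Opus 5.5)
Your proof is correct and follows essentially the same route as the cited source; the paper itself gives no proof and simply imports Theorem 2.5 of Boucheron--Lugosi--Massart. There, too, Jensen's inequality plus bounding the maximum by the sum gives $\mathbb{E}\max_i Z_i\le\inf_{\lambda\in(0,b)}(\log N+\psi(\lambda))/\lambda$, and the identification of this infimum with $\psi^{*-1}(\log N)$ is their Lemma 2.4, of which you correctly prove directly the one direction actually needed.
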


\begin{lemma}[Lemma 3 of \cite{jiao2025model}]\label{lem:kl-guass}
	Let $p(\boldsymbol{y}) = \mathcal{N}(\boldsymbol{\mu}_1, \boldsymbol{\Sigma}_1)$, $q(\boldsymbol{y}) = \mathcal{N}(\boldsymbol{\mu}_2, \boldsymbol{\Sigma}_2)$, then the
	KL divergence between $p$ and $q$ satisfies
	\[
	\mathrm{KL}(p\|q) = \frac{1}{2}\left[ (\boldsymbol{\mu}_1 - \boldsymbol{\mu}_2)^\top \boldsymbol{\Sigma}_2^{-1} (\boldsymbol{\mu}_1 - \boldsymbol{\mu}_2) - \log\left| \boldsymbol{\Sigma}_2^{-1}\boldsymbol{\Sigma}_1 \right| + \mathrm{Tr}\left( \boldsymbol{\Sigma}_2^{-1}\boldsymbol{\Sigma}_1 \right) - d_y \right].
	\]
	In particular, when $q(\boldsymbol{y}) = \mathcal{N}(\boldsymbol{0}, \boldsymbol{I}_{d_y})$, the result simplifies to
	\[
	\mathrm{KL}(p\|q) = \frac{1}{2}\left[ \|\boldsymbol{\mu}_1\|^2 - \log|\boldsymbol{\Sigma}_1| + \mathrm{Tr}(\boldsymbol{\Sigma}_1) - d_y \right].
	\]
\end{lemma}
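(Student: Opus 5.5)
The statement is Lemma~\ref{lem:kl-guass}, the closed form for the KL divergence between two Gaussians, so I will compute it directly from the definition $\mathrm{KL}(p\|q)=\mathbb E_{\boldsymbol y\sim p}[\log p(\boldsymbol y)-\log q(\boldsymbol y)]$. First I write out the log-densities:
\[
\log p(\boldsymbol y)=-\tfrac{d_y}{2}\log(2\pi)-\tfrac12\log|\boldsymbol\Sigma_1|-\tfrac12(\boldsymbol y-\boldsymbol\mu_1)^\top\boldsymbol\Sigma_1^{-1}(\boldsymbol y-\boldsymbol\mu_1),
\]
and similarly for $q$. The $2\pi$ terms cancel. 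The determinant terms give $\tfrac12\log|\boldsymbol\Sigma_2|-\tfrac12\log|\boldsymbol\Sigma_1|=-\tfrac12\log|\boldsymbol\Sigma_2^{-1}\boldsymbol\Sigma_1|$.

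For the quadratic forms I use the trace identity $\mathbb E[\boldsymbol u^\top A\boldsymbol u]=\mathrm{Tr}(A\,\mathbb E[\boldsymbol u\boldsymbol u^\top])$. Under $p$, $\mathbb E[(\boldsymbol y-\boldsymbol\mu_1)^\top\boldsymbol\Sigma_1^{-1}(\boldsymbol y-\boldsymbol\mu_1)]=\mathrm{Tr}(\boldsymbol I_{d_y})=d_y$. For the other quadratic form I write $\boldsymbol y-\boldsymbol\mu_2=(\boldsymbol y-\boldsymbol\mu_1)+(\boldsymbol\mu_1-\boldsymbol\mu_2)$. The cross term has zero mean, so
\[
\mathbb E_p[(\boldsymbol y-\boldsymbol\mu_2)^\top\boldsymbol\Sigma_2^{-1}(\boldsymbol y-\boldsymbol\mu_2)]=\mathrm{Tr}(\boldsymbol\Sigma_2^{-1}\boldsymbol\Sigma_1)+(\boldsymbol\mu_1-\boldsymbol\mu_2)^\top\boldsymbol\Sigma_2^{-1}(\boldsymbol\mu_1-\boldsymbol\mu_2).
\]
Adding these contributions with their signs gives the stated formula. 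The special case follows by setting $\boldsymbol\mu_2=\boldsymbol 0$ and $\boldsymbol\Sigma_2=\boldsymbol I_{d_y}$, so that $\log|\boldsymbol\Sigma_2^{-1}\boldsymbol\Sigma_1|=\log|\boldsymbol\Sigma_1|$.

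There is no real obstacle here. The only points needing care are the sign bookkeeping on the determinant term and keeping track of the expectation of the cross term. Implicitly the covariance matrices must be positive definite so that the densities and the inverse $\boldsymbol\Sigma_2^{-1}$ exist, and I would state this assumption explicitly.
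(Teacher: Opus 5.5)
Your computation is correct: the determinant terms give $-\tfrac12\log|\boldsymbol\Sigma_2^{-1}\boldsymbol\Sigma_1|$, the $p$-quadratic form contributes $-\tfrac{d_y}{2}$, and the $q$-quadratic form contributes $\tfrac12\bigl[\mathrm{Tr}(\boldsymbol\Sigma_2^{-1}\boldsymbol\Sigma_1)+(\boldsymbol\mu_1-\boldsymbol\mu_2)^\top\boldsymbol\Sigma_2^{-1}(\boldsymbol\mu_1-\boldsymbol\mu_2)\bigr]$, which together give exactly the stated formula. The paper gives no proof of its own, since it imports the lemma from an external reference, and your direct derivation is the standard argument behind it. Your positive-definiteness caveat is appropriate and holds in the paper's only use, where the covariance is $(1-e^{-4(T-T_1)})\boldsymbol I_{d_x}$.
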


\begin{lemma}[Theorem 6 of \cite{chen2023sampling}]
\label{lem:Girsanov}
Let $\mathbb P$ and $\mathbb P'$ be two probability measures on $C([0,T];\mathbb R^d)$. Suppose that under $\mathbb P$ and $\mathbb P'$, respectively, the canonical process $\boldsymbol z=(\boldsymbol z_t)_{t\in[0,T]}$ satisfies
\[
\mathrm{d}\boldsymbol z_t=\boldsymbol s_t\,\mathrm{d}t+\sigma(t)\,\mathrm{d}\boldsymbol B_t,\qquad \mathrm{d}\boldsymbol z_t=\boldsymbol s'_t\,\mathrm{d}t+\sigma(t)\,\mathrm{d}\boldsymbol B'_t,\qquad \boldsymbol z_0\sim p_0,
\]
where $\boldsymbol s_t$ and $\boldsymbol s'_t$ are progressively measurable processes and $\sigma(t)>0$ is deterministic. Suppose that
\[
\mathbb E_{\mathbb P}\left[\exp\left(\frac12\int_0^T\frac{\|\boldsymbol s_t-\boldsymbol s'_t\|^2}{\sigma^2(t)}\,\mathrm{d}t\right)\right]<\infty.
\]
Then
\[
\frac{\mathrm{d}\mathbb P'}{\mathrm{d}\mathbb P}=\exp\left(-\int_0^T\frac{(\boldsymbol s_t-\boldsymbol s'_t)^\top}{\sigma(t)}\,\mathrm{d}\boldsymbol B_t-\frac12\int_0^T\frac{\|\boldsymbol s_t-\boldsymbol s'_t\|^2}{\sigma^2(t)}\,\mathrm{d}t\right).
\]
Consequently, if $p_T$ and $p'_T$ are the time-$T$ marginal distributions under $\mathbb P$ and $\mathbb P'$, respectively, then
\[
\mathrm{KL}(p_T\|p'_T)\leq\mathrm{KL}(\mathbb P\|\mathbb P')=\frac12\mathbb E_{\mathbb P}\int_0^T\frac{\|\boldsymbol s_t-\boldsymbol s'_t\|^2}{\sigma^2(t)}\,\mathrm{d}t.
\]
\end{lemma}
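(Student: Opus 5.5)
The plan is to build the candidate density explicitly with Girsanov's theorem, identify the resulting measure with $\mathbb P'$, and then read off the relative entropy by taking $\mathbb E_{\mathbb P}$ of the log-density. Define
\[
\boldsymbol u_t:=\frac{\boldsymbol s_t-\boldsymbol s'_t}{\sigma(t)},\qquad
\mathcal E_t:=\exp\left(-\int_0^t\boldsymbol u_r^\top\,\mathrm d\boldsymbol B_r-\frac12\int_0^t\|\boldsymbol u_r\|^2\,\mathrm dr\right),
\]
where $\boldsymbol B$ is the $\mathbb P$-Brownian motion driving $\boldsymbol z$. The assumed exponential moment is exactly Novikov's condition for $-\boldsymbol u$. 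Hence $(\mathcal E_t)_{t\le T}$ is a true (not merely local) $\mathbb P$-martingale with $\mathbb E_{\mathbb P}\mathcal E_T=1$, and $\mathrm d\mathbb Q:=\mathcal E_T\,\mathrm d\mathbb P$ is a probability measure on $C([0,T];\mathbb R^d)$.

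By Girsanov's theorem, the process $\widetilde{\boldsymbol B}_t:=\boldsymbol B_t+\int_0^t\boldsymbol u_r\,\mathrm dr$ is a $\mathbb Q$-Brownian motion. Substituting into the $\mathbb P$-dynamics gives
\[
\mathrm d\boldsymbol z_t=\boldsymbol s_t\,\mathrm dt+\sigma(t)\bigl(\mathrm d\widetilde{\boldsymbol B}_t-\boldsymbol u_t\,\mathrm dt\bigr)=\boldsymbol s'_t\,\mathrm dt+\sigma(t)\,\mathrm d\widetilde{\boldsymbol B}_t .
\]
Since $\mathcal F_0$ is unchanged, $\boldsymbol z_0\sim p_0$ under $\mathbb Q$ as well. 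So under $\mathbb Q$ the canonical process solves the primed equation with the same initial law. I would then conclude $\mathbb Q=\mathbb P'$ by uniqueness in law of that equation. The lemma implicitly assumes this, since $\mathbb P'$ is described as "the" law of the primed dynamics. In our application the drifts are of Markov/Euler type with the locally Lipschitz score networks, so weak uniqueness holds. This yields the stated formula for $\mathrm d\mathbb P'/\mathrm d\mathbb P$.

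For the entropy identity, write
\[
\mathrm{KL}(\mathbb P\|\mathbb P')=\mathbb E_{\mathbb P}\bigl[-\log\mathcal E_T\bigr]=\mathbb E_{\mathbb P}\int_0^T\boldsymbol u_r^\top\mathrm d\boldsymbol B_r+\frac12\,\mathbb E_{\mathbb P}\int_0^T\|\boldsymbol u_r\|^2\,\mathrm dr .
\]
The exponential moment implies $\mathbb E_{\mathbb P}\int_0^T\|\boldsymbol u_r\|^2\,\mathrm dr<\infty$, because $x\le e^{x/2}\cdot 2$ up to constants. Hence the stochastic integral is an $L^2$-martingale under $\mathbb P$ with zero mean, which gives the equality. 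The marginal inequality $\mathrm{KL}(p_T\|p'_T)\le\mathrm{KL}(\mathbb P\|\mathbb P')$ is the data-processing inequality for the measurable map $\boldsymbol z\mapsto\boldsymbol z_T$.

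The main obstacle is the identification $\mathbb Q=\mathbb P'$, which needs weak uniqueness for the primed SDE. If that cannot be assumed directly, I would instead define $\mathbb P'$ as $\mathbb Q$. Alternatively, I would localize with stopping times as in the proof of Lemma~\ref{lem:TV-time-discrete-error} and pass to the limit using lower semicontinuity of relative entropy, which gives at least the inequality $\le$ needed downstream.
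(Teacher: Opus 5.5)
Your proposal is correct and matches the standard argument behind this result. The paper gives no proof of its own and simply cites Chen et al.\ (2023), where Girsanov's theorem, applied once the exponential martingale is a true martingale (which Novikov's condition guarantees), is combined with the same log-density computation, the zero-mean $L^2$ stochastic integral, and data processing. You are also right that identifying $\mathbb Q$ with $\mathbb P'$ needs uniqueness in law for the primed equation, which the statement leaves implicit; in the paper's application $\mathbb P'$ is the Euler--Maruyama interpolation, whose drift is frozen on each step, so the solution is an explicit functional of the driving Brownian path and uniqueness holds without any Lipschitz assumption.
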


\begin{lemma}[Lemma 6.1 of \cite{jiao2025deep}]\label{lem:ApproximationError}
	Assume that $ f \in \mathcal{H}^\varsigma $ with $ \varsigma = s + r $, $ s \in \mathbb{N}_0 $ and $ r \in (0,1] $. For any $ \varepsilon \in (0,1) $, there exists a ReLU DNN function $ \psi $ with depth $ \cD \leq \mathcal{O}\left( \log(1/\varepsilon) \right) $, size $ \mathcal{S} \leq \mathcal{O}\left( \varepsilon^{-d/\varsigma} \log(1/\varepsilon) \right) $, and weight bound $ \mathcal{B} \leq \mathcal{O}\left( \varepsilon^{-d/\varsigma} \right) $ such that
	$$
	\| f - \psi \|_{\infty} \leq \varepsilon.
	$$
\end{lemma}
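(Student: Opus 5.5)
The plan is the standard local-Taylor construction of Yarotsky type, reusing the ingredients already assembled in Appendix~\ref{sec:apd-bounds-on-the-score}. Fix $\varepsilon\in(0,1)$ and set $N:=\lceil \varepsilon^{-1/\varsigma}\rceil$. On $[0,1]^d$ I would use the piecewise linear partition of unity $\eta_{\boldsymbol n,N}(\boldsymbol x)=\prod_{i=1}^d\eta_{n_i,N}(x_i)$ from the proof of Lemma~\ref{lem:density-holder-local-polynomials}. Each factor $\eta_{n,N}$ is an exact ReLU network with $\mathcal O(1)$ neurons and weights $\mathcal O(N)$. Exactly as in that lemma, the function
\[
g(\boldsymbol x)=\sum_{\boldsymbol n\in[N]^d}\eta_{\boldsymbol n,N}(\boldsymbol x)\sum_{\|\boldsymbol u\|_1\le s}\frac{\partial^{\boldsymbol u}f(\boldsymbol n/N)}{\boldsymbol u!}\Big(\boldsymbol x-\frac{\boldsymbol n}{N}\Big)^{\boldsymbol u}
\]
satisfies $\|f-g\|_\infty\lesssim N^{-\varsigma}\le\varepsilon/2$. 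This uses Taylor's formula with the $r$-H\"older remainder, together with the facts that $\sum_{\boldsymbol n}\eta_{\boldsymbol n,N}\equiv1$ and that each point lies in the support of at most $2^d$ bumps.

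Next I would realize $g$ by a network up to error $\varepsilon/2$. Every summand is a product of at most $d+s$ bounded factors: the $d$ hat functions and at most $s$ linear terms $x_i-n_i/N$, each in $[-1,1]$. Lemma~\ref{lem:relu-product} provides a product network for such factors with accuracy $\delta$, depth and size $\mathcal O(\log\delta^{-1})$, constant width and bounded weights. The Taylor coefficients are bounded by $\Sigma$, and at most $2^d$ summands are nonzero at any point. Hence choosing $\delta\asymp\varepsilon$ (times a constant depending on $d,s$) keeps the total error at most $\varepsilon/2$. Parallelizing the $N^d\binom{d+s}{d}\asymp\varepsilon^{-d/\varsigma}$ summands (Lemma~\ref{lem:relu-parallelization}) and summing them in the output layer then gives a network with depth $\mathcal O(\log(1/\varepsilon))$ and size $\mathcal O(\varepsilon^{-d/\varsigma}\log(1/\varepsilon))$.

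The weight bound is the step that needs the most care. The hat functions carry weights of order $N$. I would bound everything else by the Taylor coefficients ($\le\Sigma$) and the $\mathcal O(1)$ weights of the product gadgets, so that $\mathcal B=\mathcal O(N)$; since $N\le\varepsilon^{-d/\varsigma}$ for $d\ge1$, this is within the stated $\mathcal O(\varepsilon^{-d/\varsigma})$. The point that could go wrong is the localization error: the product networks must output exactly $0$ when one of the hat factors vanishes, which the standard squaring-based construction guarantees. Without this, the error bound would pick up a factor $N^d$ from summing over all cells rather than $2^d$. I would check this property of Lemma~\ref{lem:relu-product} explicitly, and if it fails, compose each product with an exact clipping $s_{\mathrm{clip}}$ gated by the hat value to enforce it.
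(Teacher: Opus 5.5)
Your construction is correct. The paper itself contains no proof of this statement: it is imported verbatim as Lemma~6.1 of \citet{jiao2025deep} and used as a black box in Theorem~\ref{lem:RL-error}. What you give is a self-contained proof built from the paper's own appendix toolkit: the hat partition of unity and Taylor estimate of Lemma~\ref{lem:density-holder-local-polynomials}, the product gadget of Lemma~\ref{lem:relu-product}, and Lemma~\ref{lem:relu-parallelization}. In effect it is the score construction of Appendix~\ref{subsec:c3} with the time variable, the Gaussian-kernel expansion and the reciprocal/square-root gadgets removed, so only $\mathcal{O}(\log(1/\varepsilon))$-depth product networks remain.

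The citation keeps the paper short. Your route makes the value-stage approximation step checkable inside the paper, and it yields slightly more than stated: $\mathcal{B}=\mathcal{O}(\max\{N,\Sigma\})=\mathcal{O}(\varepsilon^{-1/\varsigma})$ and an explicit width $\mathcal{O}(\varepsilon^{-d/\varsigma})$. Both are consistent with the class $\mathcal{Q}$ prescribed in Theorem~\ref{lem:RL-error}.

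Two small adjustments. First, choose $N=\lceil (c/\varepsilon)^{1/\varsigma}\rceil$, with $c$ the $(d,s,\Sigma)$-dependent Taylor constant, so that $\|f-g\|_\infty\lesssim N^{-\varsigma}$ genuinely gives $\varepsilon/2$.

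Second, the localization issue you single out is not delicate, for two reasons:
\begin{itemize}
\item Lemma~\ref{lem:relu-product} already asserts the exact-zero property.
\item Even without that property, you can budget $\delta\asymp\varepsilon N^{-d}$ over all $N^d\binom{d+s}{d}$ summands, as the paper does with $\epsilon_\Psi$ in Lemma~\ref{lem:relu-approximate-g3}. Since $\log N\asymp\log(1/\varepsilon)$, this still gives $\log(1/\delta)=\mathcal{O}(\log(1/\varepsilon))$.
\end{itemize}
So depth and size are unchanged, and the clipping fallback is unnecessary.
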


\begin{lemma}[Lemma 20 of \cite{feng2024deep}]\label{lem:NBound}
	Let $\mathcal{F}$ be the ReLU DNNs with width $\mathcal{W}$, depth $\mathcal{D}$, and size $\mathcal{S}$. Assume that the parameters of $\mathcal{F}$ are bounded by a constant $\mathcal{B}>0$, then for each $\delta > 0$,
	$$
	\log\mathcal{N}(\mathcal{F},~\delta,~\|\cdot\|_{\infty})\leq\mathcal{O}(\mathcal{S}\mathcal{D}\log(\mathcal{B}\mathcal{W}\mathcal{D}/\delta)).
	$$
\end{lemma}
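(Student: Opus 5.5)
This is the covering-number bound for ReLU networks, as used in Lemma 20 of \cite{feng2024deep}. The plan is a parameter-discretization argument. The idea is to show that the realization map $\boldsymbol\theta\mapsto h_{\boldsymbol\theta}$ is Lipschitz in $\|\cdot\|_\infty$ on the unit cube, with a controlled constant. A fine grid on the parameter cube, restricted to each sparsity pattern, then gives a $\delta$-cover.

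\textbf{Step 1: Lipschitz bound for the realization map.} Fix a sparsity pattern. Take two parameter vectors $\boldsymbol\theta,\boldsymbol\theta'$ with the same support, all entries bounded by $\mathcal B$, and $\|\boldsymbol\theta-\boldsymbol\theta'\|_\infty\le\eta$. Propagate layer by layer on $x\in[0,1]^d$, using that ReLU is $1$-Lipschitz.
\begin{itemize}
\item The hidden activations obey $\|z_{i+1}\|_\infty\le \mathcal W\mathcal B\|z_i\|_\infty+\mathcal B$, so $\|z_i\|_\infty\lesssim(\mathcal W\mathcal B+1)^{i}$.
\item The differences obey $\|z_{i+1}-z'_{i+1}\|_\infty\le \mathcal W\mathcal B\|z_i-z_i'\|_\infty+\eta(\mathcal W\|z_i\|_\infty+1)$.
\item Unrolling over $\mathcal D+1$ layers gives $\|h_{\boldsymbol\theta}-h_{\boldsymbol\theta'}\|_\infty\le \eta\,(\mathcal D+1)(\mathcal W\mathcal B+1)^{\mathcal D+1}$.
\end{itemize}
Any output clipping, or the bound $C_{\rm NN}$, is $1$-Lipschitz and does not change this estimate.

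\textbf{Step 2: grid and counting.} Choose the mesh $\eta=\delta/[(\mathcal D+1)(\mathcal W\mathcal B+1)^{\mathcal D+1}]$ and discretize $[-\mathcal B,\mathcal B]$ with mesh $\eta$. This uses $\lceil 2\mathcal B/\eta\rceil$ values per nonzero entry. The total parameter count is $P\le(\mathcal D+1)(\mathcal W^2+\mathcal W)+\mathcal W(d+q)$, and at most $\mathcal S$ entries are nonzero. So there are at most $\binom{P}{\mathcal S}\le P^{\mathcal S}$ supports. The covering number is therefore at most $P^{\mathcal S}(2\mathcal B/\eta+1)^{\mathcal S}$. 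Taking logarithms,
\[
\log\mathcal N\le \mathcal S\log P+\mathcal S\log\!\big(3\mathcal B(\mathcal D+1)(\mathcal W\mathcal B+1)^{\mathcal D+1}/\delta\big)\lesssim \mathcal S\mathcal D\log(\mathcal B\mathcal W\mathcal D/\delta).
\]
Here $\log P\lesssim\log(\mathcal W\mathcal D)$, and the dominant term $(\mathcal D+1)\log(\mathcal W\mathcal B+1)$ produces the factor $\mathcal S\mathcal D$. I assume $\mathcal B\ge1$ and $\delta\le1$ for the absorptions; otherwise the statement holds trivially after adjusting constants.

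\textbf{Main obstacle.} The only delicate step is the Lipschitz propagation in Step 1. One must track the product of weight-norm growth across layers carefully enough that it contributes only $\mathcal D\log(\mathcal W\mathcal B)$, not something larger, after taking logarithms. Once that bound is in hand, the rest is the counting above.
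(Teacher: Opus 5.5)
Your proof is correct and follows the standard argument for this bound. The paper gives no proof of its own and simply imports the result from \cite{feng2024deep}, and bounds of this type (e.g.\ Lemma~3 of \cite{suzuki2019adaptivity}) are proved exactly by your Lipschitz-in-the-weights perturbation estimate followed by a grid over each sparsity pattern. The remaining slips are harmless bookkeeping: the first layer has fan-in $d$ rather than $\mathcal W$, and the activation bound is really $\|z_i\|_\infty\le\bigl((\mathcal W\vee d)+1\bigr)^{i}(\mathcal B\vee 1)^{i}$ rather than $\lesssim(\mathcal W\mathcal B+1)^{i}$ with an absolute constant, which only adds $\mathcal O(\mathcal D+\log d)$ to the logarithm and is absorbed; and if $C_{\rm NN}$ is imposed as a constraint rather than by clipping, grid networks may leave the class, so you should pass from an external $\delta/2$-cover to an internal $\delta$-cover, which again costs only a constant.
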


\begin{lemma}[Theorem 2.8 and Equation 2.10  of \cite{boucheron2013concentration}]\label{lem:bernstein-hoeffding-inequality}
Let $X_1,\ldots,X_m$ be independent real-valued random variables, $\overline{X}:=\frac{\sum_{i=1}^m X_i}{m}$, and $\sigma^2:=\frac{\sum_{i=1}^m\operatorname{Var}(X_i)}{m}$.
If $a_i\le X_i\le b_i$ almost surely, then, for every $\epsilon>0$,
\begin{equation}
\Pr\!\left(\left|\overline{X}-\mathbb{E}[\overline{X}]\right|\ge\epsilon\right)\le 2\exp\!\left(-\frac{2m^2\epsilon^2}{\sum_{i=1}^m(b_i-a_i)^2}\right). \tag{Hoeffding's inequality}
\end{equation}
If $|X_i-\mathbb{E}[X_i]|\le M$ almost surely, then, for every $\epsilon>0$,
\begin{equation}
\Pr\!\left(\left|\overline{X}-\mathbb{E}[\overline{X}]\right|\ge\epsilon\right)\le 2\exp\!\left(-\frac{m\epsilon^2}{2\sigma^2+\frac{2}{3}M\epsilon}\right). \tag{Bernstein's inequality}
\end{equation}
\end{lemma}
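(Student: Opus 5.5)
Both inequalities will be proved by the Cramér--Chernoff method applied to the centered sum $S:=\sum_{i=1}^m (X_i-\mathbb E X_i)=m(\overline X-\mathbb E\overline X)$. The plan has four steps: bound the upper tail $\Pr(S\ge m\epsilon)$, obtain the lower tail by applying the same argument to $-X_i$, add the two tails to get the factor $2$, and use independence so that the log-moment generating function of $S$ is the sum of those of the $Y_i:=X_i-\mathbb E X_i$. Concretely, for every $\lambda>0$,
\[
\Pr(S\ge m\epsilon)\le \exp\Bigl(-\lambda m\epsilon+\sum_{i=1}^m\log\mathbb E e^{\lambda Y_i}\Bigr),
\]
so everything reduces to a per-variable bound on $\log\mathbb E e^{\lambda Y_i}$, followed by optimizing over $\lambda$.

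\emph{Hoeffding.} The per-variable bound is Hoeffding's lemma: if $a_i\le X_i\le b_i$, then $\log\mathbb E e^{\lambda Y_i}\le \lambda^2(b_i-a_i)^2/8$. To prove it, set $\phi(\lambda)=\log\mathbb E e^{\lambda Y_i}$. Then $\phi(0)=\phi'(0)=0$. Moreover, $\phi''(\lambda)$ is the variance of $Y_i$ under the exponentially tilted law, which is still supported on an interval of length $b_i-a_i$. Hence $\phi''(\lambda)\le (b_i-a_i)^2/4$, and Taylor's theorem with remainder gives the claim. Summing over $i$, the exponent becomes $-\lambda m\epsilon+\lambda^2\sum_i(b_i-a_i)^2/8$. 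Choosing $\lambda=4m\epsilon/\sum_i(b_i-a_i)^2$ yields $\exp\bigl(-2m^2\epsilon^2/\sum_i(b_i-a_i)^2\bigr)$.

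\emph{Bernstein.} Here $|Y_i|\le M$ and $\mathbb E Y_i=0$. Expanding $e^{x}-1-x=\sum_{k\ge2}x^k/k!$ and using $\mathbb E|Y_i|^k\le M^{k-2}\operatorname{Var}(X_i)$ together with $k!\ge 2\cdot 3^{k-2}$, I will show that for $0<\lambda<3/M$,
\[
\log\mathbb E e^{\lambda Y_i}\le \mathbb E e^{\lambda Y_i}-1\le \frac{\lambda^2\operatorname{Var}(X_i)}{2(1-\lambda M/3)}.
\]
Summing over $i$ gives $\log\mathbb E e^{\lambda S}\le \frac{\lambda^2 m\sigma^2}{2(1-\lambda M/3)}$. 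I will then take $\lambda=\epsilon/(\sigma^2+M\epsilon/3)$, which automatically satisfies $\lambda<3/M$. Substituting, the exponent becomes exactly $-\frac{m\epsilon^2}{2\sigma^2+\frac23 M\epsilon}$.

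The main obstacle I expect is the sub-gamma moment bound in the Bernstein case. The factorial comparison $k!\ge 2\cdot3^{k-2}$ and the geometric summation must be arranged so that the constants come out as $2\sigma^2+\frac23M\epsilon$. After that, checking the chosen $\lambda$ is routine algebra: one verifies $1-\lambda M/3=\sigma^2/(\sigma^2+M\epsilon/3)$ and simplifies. The Hoeffding part is standard once the tilted-variance bound $\phi''\le(b-a)^2/4$ is in hand, and that bound follows from the fact that any random variable supported on an interval of length $L$ has variance at most $L^2/4$.
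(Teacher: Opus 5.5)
Your proposal is correct, and both exponents come out with exactly the stated constants: the optimizer $\lambda=4m\epsilon/\sum_i(b_i-a_i)^2$ gives the Hoeffding bound, and the identity $1-\lambda M/3=\sigma^2/(\sigma^2+M\epsilon/3)$ gives the Bernstein bound. The paper does not prove this lemma; it only cites Boucheron--Lugosi--Massart. Your Hoeffding half is the argument in that reference: Cram\'er--Chernoff plus Hoeffding's lemma, proved via the variance of the exponentially tilted law.

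For Bernstein your route differs mildly from the usual textbook one. There the form with $2\sigma^2+\frac23M\epsilon$ is typically read off from Bennett's inequality, which goes in three steps:
\begin{enumerate}
\item prove $\log\mathbb E e^{\lambda S}\le \frac{m\sigma^2}{M^2}\bigl(e^{\lambda M}-\lambda M-1\bigr)$;
\item compute the Cram\'er transform exactly as $\frac{m\sigma^2}{M^2}h\bigl(\frac{M\epsilon}{\sigma^2}\bigr)$, where $h(u)=(1+u)\log(1+u)-u$;
\item apply $h(u)\ge u^2/(2(1+u/3))$.
\end{enumerate}
You instead bound the moment series directly using $k!\ge 2\cdot 3^{k-2}$, obtaining the sub-gamma bound $\frac{\lambda^2 m\sigma^2}{2(1-\lambda M/3)}$, and then plug in an explicit, not exactly optimal, $\lambda$. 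Your version is shorter and avoids both the exact Legendre transform and the inequality for $h$. The Bennett route additionally yields a sharper intermediate bound, which is not needed here.

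One small loose end: if $\sigma^2=0$, your choice gives $\lambda=3/M$, and it is undefined if also $M=0$. This lies outside the range $\lambda<3/M$ where your mgf bound holds. In that case every $X_i$ is almost surely constant, so the probability is $0$; just dispatch this case separately.
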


\begin{lemma}[Section 2 and Section 9 of \cite{duchi2026statistics}]
\label{lem:information-tools}
The following results hold.
\begin{enumerate}
\item[(i)] For every sufficiently large integer $J$, there exists a set $\mathcal V\subset\{0,1\}^J$ containing the zero vector such that
\[
\log|\mathcal V|\geq\frac{J}{8},\qquad d_{\mathrm H}(v,v')\geq\frac{J}{4},\quad v\neq v'.
\]
\item[(ii)] Let $P$ and $Q$ be probability measures satisfying $P\ll Q$. Then
\[
\mathrm{KL}(P\|Q)\leq D_{\chi^2}(P\|Q).
\]
If $P_{X,Y}$ and $Q_{X,Y}$ have the same marginal distribution for $X$, then
\[
\mathrm{KL}(P_{X,Y}\|Q_{X,Y})=\mathbb E_X\left[\mathrm{KL}(P_{Y |  X}\|Q_{Y |  X})\right].
\]
Moreover, if $Y_1,\ldots,Y_N$ are independent under both $\mathbb P$ and $\mathbb Q$, with respective marginal distributions $P_i$ and $Q_i$, then
\[
\mathrm{KL}(\mathbb P\|\mathbb Q)=\sum_{i=1}^N\mathrm{KL}(P_i\|Q_i).
\]
\item[(iii)] Let $\{\theta_v:v\in\mathcal V\}$ satisfy $d(\theta_v,\theta_{v'})\geq2\delta$ for every $v\neq v'$. Let $V$ be uniformly distributed on $\mathcal V$, and conditionally on $V=v$, let $Y$ have distribution $\mathbb P_v$. Then
\[
\inf_{\widehat\theta}\sup_{v\in\mathcal V}\mathbb P_v\left(d(\widehat\theta,\theta_v)\geq\delta\right)\geq1-\frac{I(V;Y)+\log2}{\log|\mathcal V|}.
\]
The same conclusion holds for randomized estimators.
\end{enumerate}
\end{lemma}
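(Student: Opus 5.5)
The three parts of Lemma~\ref{lem:information-tools} are standard and independent, so I would prove them separately, each from first principles.

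For (i) I would use a Gilbert--Varshamov greedy argument. Start with $\mathcal V=\{\mathbf 0\}$ and repeatedly add any $v\in\{0,1\}^J$ whose Hamming distance to every element already chosen is at least $J/4$. When the procedure stops, the Hamming balls of radius $J/4$ around the chosen points cover the whole cube, so
\[
|\mathcal V|\,\sum_{k\le J/4}\binom{J}{k}\ \ge\ 2^J .
\]
A Chernoff (Hoeffding) bound for a $\mathrm{Bin}(J,1/2)$ variable gives $2^{-J}\sum_{k\le J/4}\binom{J}{k}\le \exp(-2J(1/4)^2)=e^{-J/8}$. Hence $\log|\mathcal V|\ge J/8$, and the zero vector is in $\mathcal V$ by construction. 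I expect this to be the only step with real bookkeeping: the requirement that $J$ be sufficiently large is used to absorb floor effects in $J/4$. If the constant $1/8$ turns out slightly too tight, I would instead build the packing at radius $J/4$ from a slightly larger cube fraction and check the exponent.

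For (ii), write $r=dP/dQ$.
\begin{itemize}
\item The inequality $\mathrm{KL}(P\|Q)=\mathbb E_P\log r\le \log\mathbb E_P r=\log(1+D_{\chi^2}(P\|Q))\le D_{\chi^2}(P\|Q)$ follows from Jensen's inequality and $\log(1+u)\le u$.
\item For the chain rule, factor the densities when the $X$-marginals agree. Then $dP_{X,Y}/dQ_{X,Y}=dP_{Y|X}/dQ_{Y|X}$, and integrating $\log$ of this ratio first over $Y$ given $X$ yields $\mathbb E_X\,\mathrm{KL}(P_{Y|X}\|Q_{Y|X})$.
\item For tensorization, the product density ratio is $\prod_i dP_i/dQ_i$, so the log splits into a sum, and each summand's expectation under $\mathbb P$ is $\mathrm{KL}(P_i\|Q_i)$.
\end{itemize}

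For (iii) I would use Fano's method. Given any (possibly randomized) estimator $\widehat\theta$, define the test $\widehat V=\arg\min_v d(\widehat\theta,\theta_v)$. By the $2\delta$-separation and the triangle inequality, $\widehat V\neq v$ implies $d(\widehat\theta,\theta_v)\ge\delta$. Therefore
\[
\sup_v\mathbb P_v\big(d(\widehat\theta,\theta_v)\ge\delta\big)\ \ge\ \mathbb P(\widehat V\ne V),
\]
where $V$ is uniform. Fano's inequality, $H(V\mid \widehat V)\le \log 2+\mathbb P(\widehat V\neq V)\log|\mathcal V|$, combined with the data-processing inequality $I(V;\widehat V)\le I(V;Y)$, gives $\mathbb P(\widehat V\ne V)\ge 1-(I(V;Y)+\log 2)/\log|\mathcal V|$. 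For randomized estimators, include the independent randomization $U$ in the observation. Since $U$ is independent of $V$ and of $Y$ given $V$, we have $I(V;(Y,U))=I(V;Y)$, and the bound is unchanged.
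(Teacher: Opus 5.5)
Your proposal is correct. It reproduces the standard arguments of the cited notes, which the paper invokes without proof: a greedy Hamming packing with the Hoeffding bound $\Pr(\mathrm{Bin}(J,1/2)\le J/4)\le e^{-J/8}$ for (i), Jensen's inequality and the factorization of density ratios for (ii), and the estimation-to-testing reduction with Fano's inequality and data processing for (iii), where independent randomization gives $I(V;(Y,U))=I(V;Y)$. Your hedges in (i) are unnecessary, since the covering-plus-Hoeffding count already yields $|\mathcal V|\ge e^{J/8}$ for every $J\ge 1$, with no floor effects to absorb.
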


\subsection{Construction of a Large ReLU Network}
In this and the subsequent sections, we summarize existing results and fundamental tools for function approximation using neural networks. See \cite{nakada2020adaptive,oko2023diffusion,fu2024unveil} for more details.

\begin{lemma}[Neural Network Concatenation]\label{lem:relu-concatenation}
	For a series of ReLU networks $\boldsymbol{s}_1: \mathbb{R}^{d_1}\rightarrow\mathbb{R}^{d_2}$, $\boldsymbol{s}_2:\mathbb{R}^{d_2}\rightarrow\mathbb{R}^{d_3}$, $\cdots$, $\boldsymbol{s}_k:\mathbb{R}^{d_k}\rightarrow\mathbb{R}^{d_{k+1}}$ with $\boldsymbol{s}_i\in \cS(\cD_i,\cW_i,\cS_i,\cB_i)~(i=1,2,\cdots,k)$, there exists a neural network $\boldsymbol{s}\in \cG(\cD,\cW,\cS,\cB)$ satisfying $\boldsymbol{s}(\boldsymbol{x}) = \boldsymbol{s}_k\circ\boldsymbol{s}_{k-1}\circ\cdots\circ\boldsymbol{s}_1(\boldsymbol{x})$ for all $\boldsymbol{x}\in\mathbb{R}^{d_1}$, with
	$$
	\cD = \sum_{i=1}^{k}\cD_i, ~~ \cW\leq 2\sum_{i=1}^{k}\cW_i, ~~ \cS\leq 2\sum_{i=1}^{k}\cS_i, ~~\text{and} ~ \cB\leq\max_{1\leq i \leq k}\cB_i.
	$$
	
\end{lemma}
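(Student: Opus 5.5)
We prove Lemma~\ref{lem:relu-concatenation} by stacking the layers of the given networks and fusing each junction between consecutive networks. The fused junctions are what produce the factor-of-two bounds on width and size.

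Write each $\boldsymbol{s}_i$ as $\mathcal{A}^{(i)}_{\cD_i}\circ\sigma\circ\cdots\circ\sigma\circ\mathcal{A}^{(i)}_0$. Consider the junction between $\boldsymbol{s}_i$ and $\boldsymbol{s}_{i+1}$. The last affine map $\mathcal{A}^{(i)}_{\cD_i}$ of $\boldsymbol{s}_i$ is followed by the first affine map $\mathcal{A}^{(i+1)}_0$ of $\boldsymbol{s}_{i+1}$, with no activation between them. Multiplying these two matrices out could destroy sparsity and inflate the weights. We avoid this by inserting a ReLU through the identity $z=\sigma(z)-\sigma(-z)$, applied coordinatewise to the $d_{i+1}$-dimensional output of $\boldsymbol{s}_i$.

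Concretely, we replace the last layer of $\boldsymbol{s}_i$ by
\[
z\mapsto \begin{pmatrix}\mathbf{M}^{(i)}_{\cD_i}\\-\mathbf{M}^{(i)}_{\cD_i}\end{pmatrix}z+\begin{pmatrix}\boldsymbol b^{(i)}_{\cD_i}\\-\boldsymbol b^{(i)}_{\cD_i}\end{pmatrix}.
\]
We follow it with $\sigma$, and then replace the first layer of $\boldsymbol{s}_{i+1}$ by
\[
y\mapsto \bigl(\mathbf{M}^{(i+1)}_0,\,-\mathbf{M}^{(i+1)}_0\bigr)\,y+\boldsymbol b^{(i+1)}_0.
\]
On the two stacked copies this reproduces $\mathcal{A}^{(i+1)}_0\circ\mathcal{A}^{(i)}_{\cD_i}$ exactly. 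Doing this at every junction gives a single ReLU network $\boldsymbol{s}$ computing $\boldsymbol{s}_k\circ\cdots\circ\boldsymbol{s}_1$ for all inputs.

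We then check the four complexity bounds.

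\textbf{Depth.} Each $\boldsymbol{s}_i$ contributes its $\cD_i$ hidden layers, and each junction adds exactly one hidden layer of width $2d_{i+1}$. In the paper's convention the hidden layers of the concatenation are therefore counted as $\sum_i \cD_i$, up to the $k-1$ junction layers. As in \citet{nakada2020adaptive} and \citet{oko2023diffusion}, we absorb each junction layer into the depth of the adjacent network: an output dimension counts as a layer of width $d_{i+1}\le \cW_i$. This gives $\cD=\sum_i\cD_i$.

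\textbf{Width.} Every hidden layer is either an original layer of some $\boldsymbol{s}_i$, of width at most $\cW_i$, or a junction layer of width $2d_{i+1}\le 2\cW_{i}$. Hence $\cW\le 2\max_i\cW_i\le 2\sum_i\cW_i$.

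\textbf{Size.} The doubled matrices have at most twice the nonzeros of the originals, so $\cS\le 2\sum_i\cS_i$.

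\textbf{Weight bound.} Only signs and copies of existing entries appear, so $\cB\le\max_i\cB_i$.

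The only delicate point is the depth bookkeeping. Whether the $k-1$ junction layers are counted separately is a matter of convention. We state the convention explicitly, following the cited references, and note that any alternative changes $\cD$ by at most $k-1$, which is harmless in every application in the paper since $k=\mathcal{O}(1)$.
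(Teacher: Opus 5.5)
The paper does not prove this lemma itself; it imports it from \cite{nakada2020adaptive,oko2023diffusion}. Your junction construction is exactly their argument: you double the last affine map of $\boldsymbol{s}_i$ and feed the pair $(\sigma(z),\sigma(-z))$ into $(\mathbf{M}^{(i+1)}_0,-\mathbf{M}^{(i+1)}_0)$. Your checks of the composed function, the size bound and the weight bound are correct.

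The gap is in the depth and width bookkeeping. The paper defines $\mathcal{D}$ as the number of hidden layers. It defines $\mathcal{W}=\max\{k_1,\ldots,k_{\mathcal{D}}\}$, which excludes the output dimension. In that convention your network has $\sum_i\cD_i+(k-1)$ hidden layers. Its junction layers have width $2d_{i+1}$, and $d_{i+1}$, the output dimension of $\boldsymbol{s}_i$, is not controlled by $\cW_i$. ``Absorbing'' a hidden layer into a neighbour is a relabeling, not an argument, and $d_{i+1}\le\cW_i$ is an extra hypothesis.

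No rearrangement can repair this, because the statement read literally in the paper's convention is false. Take $\boldsymbol{s}_1(x)=(\sigma(x),\ldots,\sigma(x))\in\mathbb{R}^d$ and $\boldsymbol{s}_2(y)=\sigma(y_1+\cdots+y_d)$. Each has one hidden neuron and weights in $\{0,1\}$. The lemma therefore promises a network with two hidden layers of width at most $4$ and weights in $[-1,1]$. Any such network is $16$-Lipschitz on $\mathbb{R}$, whereas $\boldsymbol{s}_2\circ\boldsymbol{s}_1(x)=d\,\sigma(x)$ is not once $d>16$. Applied to this example, your construction produces a third hidden layer of width $2d$, which shows exactly where the two steps break.

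Your argument does prove the lemma in the convention of the cited sources. There, depth counts affine maps ($\cD_i+1$ in the paper's notation) and width includes the input and output dimensions. The concatenation then has exactly $\sum_i(\cD_i+1)$ affine maps, so nothing is absorbed. Every layer has width at most $2\max_i\max\{\cW_i,d_i,d_{i+1}\}$.

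Translated to the paper's notation, you obtain $\cD=\sum_i\cD_i+k-1$ and $\cW\le\max\{\max_i\cW_i,\,2\max_{i<k}d_{i+1}\}$. Your bounds $\cS\le 2\sum_i\cS_i$ and $\cB\le\max_i\cB_i$ are unchanged. You should state this as the conclusion.

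Your closing remark is the right explanation for why the paper's downstream $\mathcal{O}(\cdot)$ network sizes survive: the depth discrepancy is only $k-1$, $k=\mathcal{O}(1)$ in every application, and the interface dimensions $d_{i+1}$ are small there. That remark cannot, however, be used to assert the exact equality $\cD=\sum_i\cD_i$.
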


\begin{lemma}[Identity Function]\label{lem:relu-identity}
	Given $d\in\mathbb{N}_{+}$ and $\cD\geq 2$, there exists a neural network $\boldsymbol{s}_{\mathrm{Id}, \cD} \in  \cG(\cD,\cW,\cS,\cB)$ that realizes $d$-dimensional identity function $\boldsymbol{s}_{\mathrm{Id},\cD}(\boldsymbol{x}) = \boldsymbol{x}$, $\boldsymbol{x}\in\mathbb{R}^{d}$. Here 
	$$
	\cW = 2d, ~~ \cS = 2d\cD, ~~ \cB = 1.
	$$
\end{lemma}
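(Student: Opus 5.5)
The construction rests on the elementary identity $z=\sigma(z)-\sigma(-z)$ for $z\in\mathbb R$, applied coordinatewise, together with the fact that $\sigma$ acts as the identity on nonnegative inputs. I would therefore build the network by hand, with no approximation step, and then read off $\mathcal W$, $\mathcal S$ and $\mathcal B$ directly from the parameter matrices.

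\emph{Construction.} In the notation of Section~\ref{section:ReLU-DNNs}, the first affine map is $\mathcal A_0(\boldsymbol x)=\mathbf M_0\boldsymbol x$ with $\mathbf M_0=\begin{pmatrix}\mathbf I_d\\ -\mathbf I_d\end{pmatrix}\in\mathbb R^{2d\times d}$ and $\boldsymbol b_0=\boldsymbol 0$. After the activation, the hidden state is $\bigl(\sigma(\boldsymbol x),\sigma(-\boldsymbol x)\bigr)\in[0,\infty)^{2d}$. Each intermediate affine map is $\mathcal A_i(\boldsymbol y)=\mathbf I_{2d}\boldsymbol y$ with zero bias. Because the hidden state is nonnegative, $\sigma\circ\mathcal A_i$ leaves it unchanged, so it propagates through all hidden layers. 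The output map is $\mathcal A_{\mathcal D}(\boldsymbol y)=(\mathbf I_d,\,-\mathbf I_d)\boldsymbol y$, which returns $\sigma(\boldsymbol x)-\sigma(-\boldsymbol x)=\boldsymbol x$ for every $\boldsymbol x\in\mathbb R^d$. The requirement $\mathcal D\ge2$ only guarantees that there is at least one genuine hidden layer, so that the splitting into positive and negative parts can be stored.

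\emph{Counting.} Every hidden layer has exactly $2d$ units, so $\mathcal W=2d$. Every weight is $0$ or $\pm1$ and every bias is $0$, so $\mathcal B=1$. For the size, each matrix ($\mathbf M_0$, the intermediate copies of $\mathbf I_{2d}$, and the output matrix) has exactly $2d$ nonzero entries, and the biases contribute nothing. Hence $\mathcal S$ equals $2d$ times the number of affine maps.

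\emph{Main obstacle.} The only point needing care is matching the depth convention so that the size comes out as exactly $2d\mathcal D$ rather than $2d(\mathcal D+1)$. Counting the affine maps $\mathcal A_0,\dots,\mathcal A_{\mathcal D}$ literally gives $2d(\mathcal D+1)$ nonzeros. I would handle this by taking the lemma's $\mathcal D$ to be the number of affine layers, i.e.\ using $\mathcal D-1$ hidden layers, which is admissible precisely because $\mathcal D\ge2$. If one insists on the convention of Section~\ref{section:ReLU-DNNs}, the bound is $2d(\mathcal D+1)\le3d\mathcal D$. Either way $\mathcal S=\mathcal O(d\mathcal D)$, and this is all that is used when the identity network is padded into compositions via Lemma~\ref{lem:relu-concatenation} and parallelizations.
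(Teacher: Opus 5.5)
Your construction is correct and is the standard one ($z=\sigma(z)-\sigma(-z)$ in the first layer, identity blocks in between, $(\mathbf I_d,-\mathbf I_d)$ at the output). The paper gives no proof of its own and imports this lemma from \cite{nakada2020adaptive,oko2023diffusion}, whose hypothesis $\mathcal D\ge 2$ and size $\mathcal S=2d\mathcal D$ fit a convention in which the depth counts affine maps. Your diagnosis is therefore right: under the convention of Section~\ref{section:ReLU-DNNs} the same network has $2d(\mathcal D+1)\le 3d\mathcal D$ nonzero parameters, and this is harmless because the lemma is only ever used at the level of $\mathcal O(d\mathcal D)$.
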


\begin{lemma}[Neural Network Parallelization]\label{lem:relu-parallelization}
	For a series of ReLU networks $\boldsymbol{s}_i: \mathbb{R}^{d_i}\rightarrow\mathbb{R}^{d_i^{\prime}}$ with $\boldsymbol{s}_i\in \cG(\cD_i,\cW_i,\cS_i,\cB_i)~(i=1,2,\cdots,k)$, there exists a neural network $\phi\in \cG(\cD,\cW,\cS,\cB)$ satisfying $\boldsymbol{s}(\boldsymbol{x}) = [\boldsymbol{s}_1^{\top}(\boldsymbol{x}_1), \boldsymbol{s}_2^{\top}(\boldsymbol{x}_2),\cdots,\boldsymbol{s}_{k}^{\top}(\boldsymbol{x}_k)]:\rightarrow \mathbb{R}^{d_1+d_2+\cdots+d_k}\rightarrow\mathbb{R}^{d_1^{\prime} + d_2^{\prime} + \cdots + d_k^{\prime}}$ for all $\boldsymbol{x} = (\boldsymbol{x}_1^{\top},\boldsymbol{x}_2^{\top},\cdots,\boldsymbol{x}_k^{\top})^{\top}\in\mathbb{R}^{d_1+d_2+\cdots+d_k}$($\boldsymbol{x}_i$ can be shared), with
	$$
	\begin{aligned}
		\cD = \cD, ~~ \cW\leq 2\sum_{i=1}^{k}\cW_i, ~~ \cS\leq 2\sum_{i=1}^{k}\cS_i, ~~\text{and} ~ \cB\leq\max_{1\leq i \leq k}\cB_i ~~ (\text{when} ~ \cD = \cD_i ~ \text{holds  for all} ~ i), \\
		\cD = \max_{1\leq i\leq k} \cD_i, ~~ \cW \leq 2\sum_{i=1}^{k} \cW_i, ~~ \cS \leq 2\sum_{i=1}^{k} (\cS_i + \cD d_i^{\prime}),~~ \text{and} ~ \cB\leq \max{\{\max_{1\leq i\leq k}\cB_i, 1\}} ~ (\text{otherwise}). 
	\end{aligned}
	$$    
	Moreover, for $\boldsymbol{x}_1 = \boldsymbol{x}_2 = \cdots = \boldsymbol{x}_k = \boldsymbol{x}\in\mathbb{R}^d$ and $d_1^{\prime} = d_2^{\prime} = \cdots = d_k^{\prime} = d^{\prime}$, there exists a neural network $\boldsymbol{s}_{\mathrm{sum}}\in \cG(\cD,\cW,\cS,\cB)$ that realizes $\boldsymbol{s}_{\mathrm{sum}}(\boldsymbol{x}) = \sum_{i=1}^{k}\boldsymbol{s}_i(\boldsymbol{x})$ with
	$$
	\cD = \max_{1\leq i\leq k} \cD_i + 1, ~~ \cW \leq 4\sum_{i=1}^{k} \cW_i, ~~ \cS \leq 4\sum_{i=1}^{k} (\cS_i + \cD d_i^{\prime}) + 2\cW,~~ \text{and} ~ \cB\leq \max{\{\max_{1\leq i\leq k}\cB_i, 1\}}.
	$$
\end{lemma}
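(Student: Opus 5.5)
The last statement is the parallelization lemma (Lemma \ref{lem:relu-parallelization}), and I would prove it by explicit block construction of weight matrices.

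\textbf{Equal depths.} First treat the case $\cD_i=\cD$ for all $i$. Write each $\boldsymbol{s}_i$ as $\mathcal{A}^{(i)}_{\cD}\circ\sigma\circ\cdots\circ\sigma\circ\mathcal{A}^{(i)}_0$. Define the parallel network by block-diagonal matrices $\mathbf M_\ell=\mathrm{diag}(\mathbf M^{(1)}_\ell,\ldots,\mathbf M^{(k)}_\ell)$ and stacked biases $\boldsymbol b_\ell=(\boldsymbol b^{(1)\top}_\ell,\ldots,\boldsymbol b^{(k)\top}_\ell)^\top$. If the inputs are shared, the first layer $\mathbf M_0$ instead stacks the $\mathbf M^{(i)}_0$ vertically, acting on the common $\boldsymbol x$. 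Because $\sigma$ acts coordinatewise, induction over layers shows that the output is $[\boldsymbol s_1^\top(\boldsymbol x_1),\ldots,\boldsymbol s_k^\top(\boldsymbol x_k)]^\top$. The count of nonzero entries is additive, the widths add, and the sup-norm of the weights is the maximum, so the stated bounds hold even without the factor $2$.

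\textbf{Unequal depths.} Let $\cD=\max_i\cD_i$. Extend each shallower $\boldsymbol s_i$ by post-composing with the identity network of Lemma \ref{lem:relu-identity} of depth $\cD-\cD_i$ on $\mathbb R^{d_i'}$. This realizes $z=\sigma(z)-\sigma(-z)$ with width $2d_i'$ and weights $\pm1$. Concatenating (Lemma \ref{lem:relu-concatenation}, but merging the last affine map of $\boldsymbol s_i$ into the first identity layer so that no extra depth is added) gives a depth-$\cD$ network. Its size is at most $\cS_i+2\cD d_i'$ and its weights are bounded by $\max\{\cB_i,1\}$. The width is at most $\max(\cW_i,2d_i')\le 2\cW_i$, under the standing convention $d_i'\le\cW_i$. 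Then I apply the equal-depth case, which yields the second line of bounds.

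\textbf{Summation.} For $\boldsymbol s_{\rm sum}$, I use the shared-input parallel network and append one affine layer with matrix $[\mathbf I_{d'},\ldots,\mathbf I_{d'}]$. To keep a ReLU form, the summation is routed through the $\pm$ split, which adds one depth level, doubles the width, and adds $2\cW$ nonzero entries with weights bounded by $1$. This gives $\cD+1$ and the factors $4$.

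\textbf{Main obstacle.} The main obstacle is only bookkeeping: merging adjacent affine maps when padding with identities, so that depth does not double count while the size still obeys the claimed constants. I would verify this carefully layer by layer.
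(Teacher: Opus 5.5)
Your construction (block-diagonal stacking, identity padding through $z=\sigma(z)-\sigma(-z)$ for unequal depths, and one extra $\pm$ layer for the sum) is the standard argument behind this lemma. The paper gives no proof of its own; it imports the statement from \cite{nakada2020adaptive,oko2023diffusion,fu2024unveil}. A few details need adjusting, though none changes the final bounds.

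\textbf{Size of the padded network.} Merging the last affine map $h\mapsto\mathbf M h+\boldsymbol b$ of $\boldsymbol s_i$ into the layer $z\mapsto(z,-z)$ gives $h\mapsto(\mathbf M h+\boldsymbol b,\,-\mathbf M h-\boldsymbol b)$. This duplicates every nonzero parameter of that layer. The padded network therefore has size $\cS_i+\|\mathbf M\|_0+\|\boldsymbol b\|_0+2d_i'(\cD-\cD_i)\le 2\cS_i+2\cD d_i'$, not $\cS_i+2\cD d_i'$. The factor $2$ in front of $\cS_i$ in the lemma is exactly what absorbs this.

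\textbf{The width hypothesis.} In this paper the width is $\max\{k_1,\ldots,k_{\cD}\}$ over hidden layers only. So $d_i'\le\cW_i$ is an extra hypothesis, not a standing convention; it is automatic only under conventions where the width also counts the output dimension. When $d_i'>\cW_i$, pad on the last hidden layer instead. Its entries are ReLU outputs and hence nonnegative, so $\cD-\cD_i$ plain identity layers $\mathbf I_k$ with $k\le\cW_i$ reproduce it exactly. This keeps the width at most $\cW_i$ and adds at most $k(\cD-\cD_i)\le d_i'\cD$ parameters.

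\textbf{The extra layer in $\boldsymbol s_{\mathrm{sum}}$.} The extra layer is needed for the weight bound, not to keep a ReLU form. You could merge $[\mathbf I_{d'},\ldots,\mathbf I_{d'}]$ directly into the last affine map at no cost in depth. However, the resulting bias $\sum_i\boldsymbol b^{(i)}_{\cD}$ can have entries as large as $k\max_i\cB_i$. Routing through $\sigma(y)-\sigma(-y)$ keeps every parameter within $\max\{\max_i\cB_i,1\}$.
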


\subsection{Approximation of Basic Functions with ReLU Networks}

\begin{lemma}[Approximating the Products]\label{lem:relu-product}
	Let $d\geq 2$, $C\geq 1$. For any $\epsilon > 0$, there exists a neural network $\mathrm{s}_{\mathrm{prod}}\in \cG(\cD,\cW,\cS,\cB)$ with $\cD = \mathcal{O}(\log d(\log d + \log\epsilon^{-1} + d\log C))$, $\cW = 48d$, $\cS = \mathcal{O}(d(\log d + \log\epsilon^{-1} + d\log C)$, $\cB = C^d$ such that
	$$
	\left|\mathrm{s}_{\mathrm{prod}}(x_1^{\prime},x_2^{\prime},\cdots,x_d^{\prime}) - \prod_{i=1}^d x_i\right| \leq \epsilon + dC^{d-1}\epsilon_0,
	$$
	for all $\boldsymbol{x}\in[-C,C]^d$ and $\boldsymbol{x}^{\prime}\in\mathbb{R}^d$ with $\Vert\boldsymbol{x} - \boldsymbol{x}^{\prime}\Vert_{\infty} \leq \epsilon_0$. Moreover, $|\mathrm{s}_{\mathrm{prod}}(\boldsymbol{x}^{\prime})|\leq C^d$ for all $\boldsymbol{x}^{\prime}\in\mathbb{R}^d$, and $\mathrm{s}_{\mathrm{prod}}(x_1^{\prime},x_2^{\prime},\cdots,x_d^{\prime}) = 0$ if at least one of $x_i^{\prime}$ is 0.
\end{lemma}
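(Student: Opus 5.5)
We treat the last statement, Lemma~\ref{lem:relu-product}, as a product-of-$d$-numbers approximation and build it from a binary tree of pairwise multiplications. The network first applies a clipping layer $x_i'\mapsto \max\{-C,\min\{C,x_i'\}\}$, realized exactly by two ReLUs, and then rescales by $1/C$ so that every input lies in $[-1,1]$. The error term $dC^{d-1}\epsilon_0$ comes from this clipping step. Clipping is $1$-Lipschitz, and the exact product on $[-C,C]^d$ has coordinatewise Lipschitz constant $C^{d-1}$, so replacing $\boldsymbol{x}$ by the clipped $\boldsymbol{x}'$ changes the true product by at most $dC^{d-1}\epsilon_0$. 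After this reduction it suffices to approximate $\prod_i x_i$ uniformly on $[-C,C]^d$ to accuracy $\epsilon$.

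The basic gadget is the Yarotsky squaring network. The sawtooth composition $g_k$ gives $|x^2-(x-\sum_{k\le L}g_k(x)/4^k)|\le 4^{-L-1}$ on $[0,1]$, with depth and size $\mathcal O(L)$ and weights bounded by a constant. Polarization, $xy=\tfrac12\big((x+y)^2-x^2-y^2\big)$ applied to $|\cdot|$-inputs on $[-1,1]$ after rescaling, then gives a two-input multiplier $\times_\eta$ on $[-1,1]^2$ with error $\eta$ and depth and size $\mathcal O(\log\eta^{-1})$.

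To multiply $d$ numbers, we pad to $2^{\lceil\log_2 d\rceil}$ inputs using the constant $1$ and arrange $\lceil\log_2 d\rceil$ levels of pairwise multipliers. Identity networks from Lemma~\ref{lem:relu-identity} keep all branches synchronized, and each level is parallelized via Lemma~\ref{lem:relu-parallelization}. Before each multiplier we clip the outputs to $[-1,1]$, so that errors do not push later inputs out of the domain. Since all intermediate magnitudes are at most $1$ and each multiplier is $1$-Lipschitz in each argument on $[-1,1]^2$, the errors add over levels: the total error is at most $d\eta$ in the rescaled units, and hence at most $C^d d\eta$ after multiplying back by $C^d$.

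Choosing $\eta=\epsilon/(dC^d)$ makes $\log\eta^{-1}=\mathcal O(\log d+\log\epsilon^{-1}+d\log C)$. This yields depth $\mathcal O(\log d\cdot\log\eta^{-1})$ and size $\mathcal O(d\log\eta^{-1})$, matching the claimed bounds, with width a constant multiple of $d$ (the value $48d$ comes from counting the neurons in one squaring gadget). The final rescaling by $C^d$ accounts for $\cB=C^d$. The bound $|s_{\mathrm{prod}}|\le C^d$ follows from a final clip of the output to $[-C^d,C^d]$.

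I expect the main obstacle to be the exact-zero property, namely that the output is $0$ whenever some $x_i'=0$. The symmetric polarization identity does not preserve zeros exactly, so I would instead use the variant
\[
\tilde\times(x,y)=2\big(\mathrm{sq}(\tfrac{|x+y|}{2})-\mathrm{sq}(\tfrac{|x|}{2})-\mathrm{sq}(\tfrac{|y|}{2})\big).
\]
Here $\mathrm{sq}$ is the sawtooth approximant, which satisfies $\mathrm{sq}(0)=0$ exactly; consequently, if $y=0$, the first two terms cancel identically. By induction up the tree, a zero input then propagates to an exact zero output. The remaining work is the bookkeeping that reconciles the constants $48d$ and $C^d$ with the earlier concatenation and parallelization lemmas.
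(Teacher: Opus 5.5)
The paper does not prove this lemma itself---it is imported from \citet{nakada2020adaptive} and \citet{oko2023diffusion}, which rest on the binary-tree multiplication networks of Schmidt-Hieber and Yarotsky built from sawtooth-based pairwise multipliers that vanish exactly when an input is zero---and your construction is essentially that standard proof, so it is correct. That construction is: clip to $[-C,C]^d$, rescale, use a balanced tree of zero-preserving Yarotsky multipliers with re-clipping between levels, and take per-node accuracy $\eta\asymp\epsilon/(dC^d)$.

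Only bookkeeping remains. Padding to $2^{\lceil\log_2 d\rceil}$ leaves can create up to $2d-3$ multipliers, so the accumulated error is below $2d\eta$ rather than $d\eta$; take $\eta=\epsilon/(2dC^d)$, or replace the multiplications by the padded $1$ with identity maps. Also, $\cB=C^d$ may be smaller than the sawtooth weights $2$ and $4$ when $C$ is close to $1$, so you should realize those weights with duplicated unit-weight neurons. Finally, the multiplier you wrote is Yarotsky's original one, which is already exactly zero on the axes, so no special variant is needed.
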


\begin{remark}
	We note that some of $x_i$, $x_j$ $(i\neq j)$ can be shared. For $\prod_{i=1}^{I}x_i^{u_i}$ with $u_i\in\mathbb{N}_{+} (i=1,2,\cdots,I)$ and $\sum_{i=1}^{I}u_i = d$, there exists a neural network satisfying the same bounds as above.    
\end{remark}

\begin{lemma}[Approximating the Reciprocal Function]\label{lem:relu-reciprocal}
	For any $0 < \epsilon < 1$, there exists a neural network $\mathrm{s}_{\mathrm{rec}}\in \cG(\cD,\cW,\cS,\cB)$ with $\cD = \mathcal{O}(\log^2\epsilon^{-1})$, $\cW = \mathcal{O}(\log^3\epsilon^{-1})$, $\cS = \mathcal{O}(\log^4\epsilon^{-1})$ and $\cB = \mathcal{O}(\epsilon^{-2})$ such that
	$$
	\left|\mathrm{s}_{\mathrm{rec}}(x^{\prime}) - \frac{1}{x}\right| \leq \epsilon + \frac{|x^{\prime} - x|}{\epsilon^2},
	$$
	for all $x \in [\epsilon, \epsilon^{-1}]$ and $x^{\prime}\in\mathbb{R}$.
\end{lemma}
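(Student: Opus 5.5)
The statement to prove is Lemma~\ref{lem:relu-reciprocal}: an approximation of $x\mapsto 1/x$ on $[\epsilon,\epsilon^{-1}]$, stable under input perturbations. The plan is to build $\mathrm{s}_{\mathrm{rec}}$ from a clipping layer followed by a Newton/geometric-series iteration realized with the product networks of Lemma~\ref{lem:relu-product}.

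\emph{Step 1 (reduce to a bounded input).} Set $\tilde x := \min\{\epsilon^{-1},\max\{\epsilon, x'\}\}$. This map is exactly realizable with two ReLU units, is $1$-Lipschitz, and leaves every $x\in[\epsilon,\epsilon^{-1}]$ fixed. Hence $|\tilde x - x|\le |x'-x|$. Since $|1/\tilde x - 1/x| = |\tilde x - x|/(\tilde x x)\le |x'-x|/\epsilon^2$, it suffices to approximate $1/y$ uniformly to accuracy $\epsilon$ for $y\in[\epsilon,\epsilon^{-1}]$. The $|x'-x|/\epsilon^2$ term in the statement is exactly this Lipschitz cost.

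\emph{Step 2 (series expansion).} Write $1/y = \epsilon\cdot \frac{1}{1-(1-\epsilon y)}$. For $y\in[\epsilon,\epsilon^{-1}]$ we have $z:=1-\epsilon y\in[0,1-\epsilon^2]$, and
\[
\frac1y=\epsilon\sum_{k=0}^{K-1}z^k+\epsilon\frac{z^K}{1-z}.
\]
The tail is at most $\epsilon\,(1-\epsilon^2)^K/(\epsilon y)\le \epsilon^{-1}e^{-K\epsilon^2}$. This requires $K\asymp \epsilon^{-2}\log\epsilon^{-1}$, which is too many terms for the claimed polylog size.

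So I would instead use the product form
\[
\frac{1}{1-z}=\prod_{j=0}^{L-1}\bigl(1+z^{2^j}\bigr)+\text{tail},
\]
with $z^{2^j}$ computed by repeated squaring. Then $L=\mathcal O(\log(\epsilon^{-2}\log\epsilon^{-1}))=\mathcal O(\log\epsilon^{-1})$ squarings suffice, because $z^{2^L}\le e^{-2^L\epsilon^2}$. Each squaring and each multiplication uses Lemma~\ref{lem:relu-product} with $d=2$, $C=2$, and accuracy $\eta$, at cost $\mathcal O(\log\eta^{-1})$ depth and size.

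\emph{Step 3 (main obstacle: error propagation).} The squaring chain is contractive, since $|a^2-b^2|\le 2|a-b|$ on $[0,1]$ with values staying in $[0,1]$ after clipping. Even so, the errors can amplify by a factor up to $2^L=\mathcal O(\epsilon^{-2}\log\epsilon^{-1})$. The partial product $\prod(1+z^{2^j})$ is bounded by $2/\epsilon^2$, so each per-factor error gets multiplied by $\mathrm{poly}(\epsilon^{-1})$.

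I would choose $\eta=\epsilon^{c}$ for a fixed large constant $c$. Then every product subnetwork has depth and size $\mathcal O(\log\epsilon^{-1})$. Stacking $L=\mathcal O(\log\epsilon^{-1})$ of them sequentially gives depth $\mathcal O(\log^2\epsilon^{-1})$. Carrying the identity channels (Lemma~\ref{lem:relu-identity}) and parallel copies (Lemma~\ref{lem:relu-parallelization}) gives width $\mathcal O(\log^3\epsilon^{-1})$ and size $\mathcal O(\log^4\epsilon^{-1})$; these bounds are generous enough to absorb the bookkeeping.

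Finally, the last multiplication by $\epsilon$ and the clipping keep the weights bounded by $\mathcal O(\epsilon^{-2})$. This follows by taking $C=\mathcal O(\epsilon^{-2})$ in the last product, or by rescaling the output layer.

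The delicate point is verifying that the total accumulated error, namely (amplification) $\times\,\eta$ plus the tail, is at most $\epsilon$. I would check this by an induction bounding the error of the $j$-th partial product by $j\,\epsilon^{c-4}$.
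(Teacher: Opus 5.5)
The paper gives no proof of its own here. The lemma is quoted, together with the other basic approximation tools, from \citet{oko2023diffusion} (see also \citet{nakada2020adaptive,fu2024unveil}). Bounds of this shape are typically obtained by a localized construction: $[\epsilon,\epsilon^{-1}]$ is split into $\mathcal O(\log\epsilon^{-1})$ dyadic pieces, $1/x$ is replaced on each piece by a truncated series of degree $\mathcal O(\log\epsilon^{-1})$ built from product networks, and the pieces are glued with switching functions. The $\log^3$ width and $\log^4$ size fit that count of pieces $\times$ monomials $\times$ product width.

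Your route is genuinely different and sound. The clipping step is exactly right, and it is what produces the $|x'-x|/\epsilon^2$ term. Your product $\epsilon\prod_{j<L}(1+z^{2^j})=\epsilon(1-z^{2^L})/(1-z)$ is precisely the $L$-th Newton iterate $y_{k+1}=y_k(2-\tilde x y_k)$ for $1/\tilde x$ started at $y_0=\epsilon$, since the residual $1-\tilde x y_k=z^{2^k}$ squares at each step. A single global iteration with $L=\mathcal O(\log\epsilon^{-1})$ steps therefore replaces both the partition and the switching. It even yields width $\mathcal O(1)$ and size $\mathcal O(\log^2\epsilon^{-1})$, well inside the stated bounds. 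The price is that every error passes through $L$ sequential multiplications. You correctly identify that this costs only a $\mathrm{poly}(\epsilon^{-1})$ factor, which is absorbed by taking $\eta=\epsilon^{c}$.

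One step needs repair. You invoke Lemma~\ref{lem:relu-product} with $C=2$, but the unnormalized partial products reach order $\epsilon^{-2}$.
\begin{itemize}
\item If you instead take $C\asymp\epsilon^{-2}$ in every multiplication of the chain, the lemma gives $\mathcal B=C^2\asymp\epsilon^{-4}$.
\item Rescaling only the output layer does not change those inner weights.
\item Relying on the lemma's built-in perturbation constant $2C$ at each step would compound to $\epsilon^{-\mathcal O(\log\epsilon^{-1})}$, so you would have to propagate errors by hand.
\end{itemize}
The clean fix is to normalize throughout. Compute $a_{j+1}=\mathrm{s}_{\mathrm{prod}}(a_j,a_j)$ and $q_{j+1}=\mathrm{s}_{\mathrm{prod}}\bigl(q_j,\tfrac{1+a_j}{2}\bigr)$ with $q_0=1$, all with $C=1$, so that $\mathcal B=1$. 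Then output $\epsilon 2^L q_L$; its single weight $\epsilon 2^L\asymp\epsilon^{-1}\log\epsilon^{-1}$ is $\mathcal O(\epsilon^{-2})$.

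The lemma's perturbation form then gives $e_{j+1}\le\eta+2e_j$ and $E_{j+1}\le\eta+2E_j+e_j$. Hence $E_L\le L2^{L-1}\eta$, and the total error is at most
\[
\epsilon L4^L\eta+\frac{z^{2^L}}{\tilde x}\lesssim\epsilon^{-3}\log^3(\epsilon^{-1})\,\eta+\frac{\epsilon}{2}\le\epsilon
\]
for $\eta=\epsilon^5$ and small $\epsilon$. Your induction bound $j\epsilon^{c-4}$ does not have the right exponent, but this is immaterial since $c$ is free.
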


\begin{lemma}[Approximating the Square Root Function]\label{lem:relu-square-root}
	For any $0 < \epsilon < 1$, there exists a neural network $\mathrm{s}_{\mathrm{root}}\in \cG(\cD,\cW,\cS,\cB)$ with $\cD = \mathcal{O}(\log^2\epsilon^{-1})$, $\cW = \mathcal{O}(\log^3\epsilon^{-1})$, $\cS = \mathcal{O}(\log^4\epsilon^{-1})$ and $\cB = \mathcal{O}(\epsilon^{-1})$ such that
	$$
	|\mathrm{s}_{\mathrm{root}}(x^{\prime}) - \sqrt{x}| \leq \epsilon + \frac{|x^{\prime} - x|}{\sqrt{\epsilon}},
	$$
	for all $x\in[\epsilon, \epsilon^{-1}]$ and $x^{\prime}\in\mathbb{R}$.
\end{lemma}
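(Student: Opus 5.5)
The plan is to decouple the two error terms in the statement by an exact input clipping, and then build a uniform approximation of $\sqrt{\cdot}$ on $[\epsilon,\epsilon^{-1}]$ from dyadic pieces.

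\textbf{Step 1: reduction to a uniform approximation.} Let $c(x'):=\min\{\epsilon^{-1},\max\{\epsilon,x'\}\}$, which is an exact two-layer ReLU map and is $1$-Lipschitz. Suppose we have a network $g$ with $\sup_{y\in[\epsilon,\epsilon^{-1}]}|g(y)-\sqrt y|\le\epsilon$, and set $\mathrm{s}_{\mathrm{root}}:=g\circ c$. Take $x\in[\epsilon,\epsilon^{-1}]$. Then $c(x)=x$, and both $c(x')$ and $x$ lie in $[\epsilon,\infty)$, where $\sqrt{\cdot}$ is $\frac{1}{2\sqrt\epsilon}$-Lipschitz. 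Hence
\[
|\mathrm{s}_{\mathrm{root}}(x')-\sqrt x|\le|g(c(x'))-\sqrt{c(x')}|+|\sqrt{c(x')}-\sqrt x|\le\epsilon+\frac{|c(x')-c(x)|}{2\sqrt\epsilon}\le\epsilon+\frac{|x'-x|}{\sqrt\epsilon}.
\]
This holds for all $x'\in\mathbb R$, so no Lipschitz control of the network $g$ itself is needed. The composition adds only constant depth (Lemma~\ref{lem:relu-concatenation}).

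\textbf{Step 2: the uniform approximation $g$.} Let $K:=\lceil\log_4\epsilon^{-1}\rceil$ and cover $[\epsilon,\epsilon^{-1}]$ by the intervals $I_k=[4^k,4^{k+1}]$, $-K\le k\le K-1$; there are $O(\log\epsilon^{-1})$ of them. On $I_k$ use $\sqrt y=2^k\sqrt{4^{-k}y}$ with $4^{-k}y\in[1,4]$. On $[1,4]$, expand $\sqrt{\cdot}$ in Taylor series around $5/2$. The radius of convergence is $5/2$ and the distance to the interval endpoints is at most $3/2$, so truncation at degree $L=O(\log\epsilon^{-1})$ gives error $\le\epsilon^2$. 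Each monomial $(u-5/2)^j$ is realized by Lemma~\ref{lem:relu-product} with shared inputs, with depth $O(\log L\,(\log L+\log\epsilon^{-1}))=O(\log^2\epsilon^{-1})$ and width $O(L)$. These monomials are summed in parallel via Lemma~\ref{lem:relu-parallelization}, giving a subnetwork $P$ of width $O(\log^2\epsilon^{-1})$ and size $O(\log^3\epsilon^{-1})$.

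\textbf{Step 3: gluing the pieces.} Glue the pieces with a partition of unity $\{\chi_k\}$ of trapezoidal functions. Each $\chi_k$ is an exact ReLU combination supported in a slight enlargement of $I_k$ (for instance $[4^k/2,\,2\cdot4^{k+1}]$, clipped to the domain), and we make $P$ accurate on that slightly larger interval $[1/2,8]$; the Taylor argument still works if the centre is chosen appropriately. Define
\[
g(y)=\sum_k \mathrm{s}_{\mathrm{prod}}\bigl(\chi_k(y),\,2^kP(4^{-k}y)\bigr).
\]
Since at most two $\chi_k$ are nonzero at any point and they sum to $1$, the error is at most $2\bigl(2^k\epsilon^2+\epsilon_{\mathrm{prod}}\bigr)\le\epsilon$. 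Here we use $2^k\le\epsilon^{-1/2}$ and choose $\epsilon_{\mathrm{prod}}=\epsilon/4$ with $C=O(\epsilon^{-1/2})$ in Lemma~\ref{lem:relu-product}. Parallelizing the $O(\log\epsilon^{-1})$ branches yields depth $O(\log^2\epsilon^{-1})$, width $O(\log^3\epsilon^{-1})$ and size $O(\log^4\epsilon^{-1})$.

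\textbf{Main obstacle: the weight bound.} The step I expect to be delicate is the weight bound $\mathcal B=O(\epsilon^{-1})$. The trapezoid slopes on the smallest scale are of order $4^{K}\asymp\epsilon^{-1}$, which is acceptable. However, the product network has $\mathcal B=C^d$, and the rescalings by $4^{-k}$ and $2^k$ must not be applied as single large weights that compound. I would first try to keep every scaling factor at most $\epsilon^{-1}$ in a single layer: apply $4^{-k}$ to the input (entries at most $\epsilon^{-1}$ in magnitude), and apply the factor $2^k\le\epsilon^{-1/2}$ only in the output layer. The products should be taken with the bounded quantity $P\in[0,3]$, not with $2^kP$, so that $C=O(1)$ in Lemma~\ref{lem:relu-product}. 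If some constant in a layer still exceeds $\epsilon^{-1}$, I would split that multiplication across two layers.
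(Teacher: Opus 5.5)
The paper gives no proof of this lemma. It is listed among the auxiliary tools imported from \cite{oko2023diffusion} (see also \cite{nakada2020adaptive,fu2024unveil}), so your argument has to stand on its own, and for the most part it does. Step~1 is right: exact clipping to $[\epsilon,\epsilon^{-1}]$, together with the $\frac{1}{2\sqrt\epsilon}$-Lipschitz bound for $\sqrt{\cdot}$ on $[\epsilon,\infty)$, yields the $|x'-x|/\sqrt\epsilon$ term for every $x'\in\mathbb R$ without any control of the network's own Lipschitz constant. The dyadic rescaling, truncated Taylor series and trapezoidal partition of unity then give the claimed $\mathcal D$, $\mathcal W$ and $\mathcal S$. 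The inactive branches are handled correctly: there $P$ is evaluated far outside $[1/2,8]$, but the zero property of $\mathrm{s}_{\mathrm{prod}}$ makes them contribute exactly $0$.

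The step that fails as written is the weight bound of the polynomial block $P$, which your ``main obstacle'' paragraph does not examine.
\begin{itemize}
\item Used as a black box, Lemma~\ref{lem:relu-product} needs inputs in $[-C,C]$ and returns $\mathcal B=C^{d}$.
\item For your monomials $(u-c)^j$ you need $C=3/2$ on $[1,4]$. On the enlarged interval $[1/2,8]$, convergence forces $c>4$, and then $C\ge 15/4$.
\item Since $j$ runs up to $L=\Theta(\log\epsilon^{-1})$, this gives $\mathcal B=C^{L}=\epsilon^{-\kappa}$. With your target accuracy $\epsilon^2$, $\kappa\approx 1.6$ for $c=5/2$ on $[1,4]$ and $\kappa\approx 21$ for $c=17/4$ on $[1/2,8]$. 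Neither is $O(\epsilon^{-1})$.
\item Your depth formula omits the $d\log C$ term, which suggests you took $C=1$. In that case the lemma's hypothesis $|u-c|\le 1$ fails instead.
\end{itemize}

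The fix is to expand in the normalized variable $h=(u-c)/c$. With $c=17/4$ you have $|h|\le 15/17$ on $[1/2,8]$, and $\sqrt u=\sqrt c\sum_j\binom{1/2}{j}h^j$ with $|\binom{1/2}{j}|\le\frac12$. Then every monomial uses $C=1$, so $\mathcal B=1$, and the truncation error is geometric in $L$. The affine map $y\mapsto 4^{-k}y/c-1$ has weights $O(\epsilon^{-1})$ and can be folded into the first layer. Your fallback of spreading the factor $C^j$ over several layers would also work, but normalization is cleaner.

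Two smaller points:
\begin{itemize}
\item Your first version of Step~3, with $2^kP$ inside a two-factor product, already meets the weight bound, since $C=O(\epsilon^{-1/2})$ and $d=2$ give $C^2=O(\epsilon^{-1})$. If you instead multiply by $2^k$ after the product, you need $\epsilon_{\mathrm{prod}}\lesssim\epsilon^{3/2}$, which only changes logarithmic factors.
\item The inequality $2(2^k\epsilon^2+\epsilon/4)\le\epsilon$ holds only for $\epsilon\le 1/16$. Asking $P$ for accuracy $\epsilon^2/8$ covers all $\epsilon\in(0,1)$.
\end{itemize}
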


\subsection{Clipping and Switching Functions}
\begin{lemma}[Clipping Function]
	\label{lem:relu-clip}
	For any $\mathbf{a}$, $\mathbf{b}\in\mathbb{R}^d$ with $a_i \leq b_i$ 
	$(i=1,2,\cdots,d)$, there exists a clipping function $\mathrm{s}_{\mathrm{clip}}(\boldsymbol{x}, \mathbf{a},\mathbf{b})\in \cG(\cD,\cW,\cS,\cB)$ with
	$$
	\cD = 2, ~ \cW = 2d, ~ \cS = 7d, ~ \cB = \left(\max_{1\leq i \leq d}\max\{|a_i|, |b_i|\}\right) \vee 1, 
	$$
	such that
	$$
	\mathrm{s}_{\mathrm{clip}}(\boldsymbol{x}, \mathbf{a}, \mathbf{b})_i = \min\{b_i, \max\{x_i, a_i\}\} ~~ (i=1,2,\cdots,d).
	$$
	When $a_i=c_{min}$ and $b_i=c_{max}$ for all $i$, 
	we sometimes denote $\mathrm{s}_{\mathrm{clip}}(\boldsymbol{x},\mathbf{a},\mathbf{b})$ as $\mathrm{s}_{\mathrm{clip}}(\boldsymbol{x},c_{min},c_{max})$ using scalar values $c_{min}$ and $c_{max}$.
\end{lemma}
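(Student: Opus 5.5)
The statement is the clipping-function lemma (Lemma~\ref{lem:relu-clip}), so the plan is to write down an explicit two-layer construction and then count its parameters. The key is the scalar identity
\[
\min\{b,\max\{x,a\}\} \;=\; a+\mathrm{ReLU}(x-a)-\mathrm{ReLU}(x-b), \qquad a\le b,
\]
which I would verify by splitting into the three cases $x\le a$, $a<x\le b$ and $x>b$. In the first case both ReLU terms vanish and the right-hand side equals $a$. In the second it equals $a+(x-a)=x$. In the third it equals $a+(x-a)-(x-b)=b$.

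The network is built coordinatewise and then stacked. The first affine map $\mathcal A_0$ sends $\boldsymbol x\in\mathbb R^d$ to the $2d$-vector with entries $x_i-a_i$ and $x_i-b_i$. Its weight matrix has $2d$ nonzero entries, all equal to $1$, plus $2d$ biases $-a_i,-b_i$. After the ReLU, the output layer computes $y_i=a_i+\sigma(x_i-a_i)-\sigma(x_i-b_i)$. This uses $2d$ weights equal to $\pm1$ and $d$ biases $a_i$.

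If the architecture convention requires exactly $\mathcal D=2$ hidden layers, I would insert one extra layer. It acts as the identity on the nonnegative post-ReLU values, following Lemma~\ref{lem:relu-identity}, with a diagonal matrix of ones and $2d$ weights. The hidden width is $2d$ throughout.

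Counting gives $2d+2d$ in the first layer, $2d$ in the identity layer and $2d+d$ in the output layer, a total of at most $9d$. To meet the claimed $7d$, I would drop the identity layer and use depth counted as $2$ affine layers. That gives $4d+3d=7d$ nonzero parameters, matching $\mathcal S=7d$ with $\mathcal D=2$ in the paper's counting.

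Every weight lies in $\{0,\pm1\}$ and every bias lies in $\{a_i,b_i\}$. Hence $\|\boldsymbol\theta\|_\infty\le(\max_i\max\{|a_i|,|b_i|\})\vee1$, which is the stated $\mathcal B$.

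The only real obstacle is bookkeeping: reconciling the depth and size conventions (hidden layers versus affine maps) so that exactly $\mathcal D=2$ and $\mathcal S=7d$ come out. The approximation statement itself is exact, with no error term.
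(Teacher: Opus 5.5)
Your construction $\min\{b_i,\max\{x_i,a_i\}\}=a_i+\mathrm{ReLU}(x_i-a_i)-\mathrm{ReLU}(x_i-b_i)$ is correct, with $4d+3d=7d$ nonzero parameters, unit weights and biases in $\{\pm a_i,-b_i\}$, and it is exactly the standard construction behind this lemma, which the paper does not prove itself but imports from \citet{oko2023diffusion} and related works. Your resolution of the bookkeeping is also right: the stated $\mathcal D=2$ and $\mathcal S=7d$ follow the cited convention of counting affine maps, whereas under the convention of Section~\ref{section:ReLU-DNNs} your network has a single hidden layer, so it belongs to $\cG(2,2d,7d,\mathcal B)$ only if that class is read as allowing depth at most $2$ (padding with an identity layer, as you observed, costs $9d$).
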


\begin{lemma}[Switching Function]\label{lem:relu-switch}
	Let $t_1 < t_2 < s_1 < s_2$, and $f(t,\boldsymbol{x})$ be a scalar-valued function (for a vector-valued function, we just apply this coordinate-wise). Assume that $|\phi_1(t,\boldsymbol{x}) - f(t,\boldsymbol{x})| \leq \epsilon$ on $[t_1, s_1]$ and $|\phi_2(t,\boldsymbol{x}) - f(t,\boldsymbol{x})| \leq \epsilon$ on $[t_2, s_2]$. Then, there exist two neural networks $\mathrm{s}_{\mathrm{switch},1}(t, t_2, s_1)$ and $\mathrm{s}_{\mathrm{switch},2}(t, t_2, s_1)\in \cG(\cD,\cW,\cS,\cB)$ with 
	$$
	\cD = 3, ~~ \cW = 2, ~~ \mathcal{S} = 8,~ \text{and}~ \cB = \max\{s_1, (s_1 - t_2)^{-1}\}
	$$
	such that
	$$
	|\mathrm{s}_{\mathrm{switch},1}(t,t_2,s_1)\phi_1(t,\boldsymbol{x}) + \mathrm{s}_{\mathrm{switch},2}(t,t_2,s_1)\phi_2(t,\boldsymbol{x}) - f(t,\boldsymbol{x})| \leq \epsilon
	$$
	holds for any $t\in[t_1, s_2]$, where
	$$
	\mathrm{s}_{\mathrm{switch},1}(t,t_2,s_1) = \frac{1}{s_1 - t_2}\mathrm{ReLU}\left(s_1 - \mathrm{s}_{\mathrm{clip}}(t,t_2,s_1)\right),
	$$
	$$
	\mathrm{s}_{\mathrm{switch},2}(t,t_2,s_1) = \frac{1}{s_1 - t_2}\mathrm{ReLU}\left( \mathrm{s}_{\mathrm{clip}}(t,t_2,s_1)-t_2\right).
	$$
\end{lemma}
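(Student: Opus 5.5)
The plan is to verify two things about the explicit formulas in the statement: that $\mathrm{s}_{\mathrm{switch},1},\mathrm{s}_{\mathrm{switch},2}$ form a partition of unity in $t$ with the right supports, and that they are realized by ReLU networks of the stated size. For the first, write $\bar t:=\mathrm{s}_{\mathrm{clip}}(t,t_2,s_1)\in[t_2,s_1]$. Then $s_1-\bar t\ge0$ and $\bar t-t_2\ge0$, so the outer ReLUs act as the identity and
\[
\mathrm{s}_{\mathrm{switch},1}=\frac{s_1-\bar t}{s_1-t_2},\qquad \mathrm{s}_{\mathrm{switch},2}=\frac{\bar t-t_2}{s_1-t_2}.
\]
Both weights lie in $[0,1]$ and sum to $1$ for every $t\in\mathbb R$. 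For $t\le t_2$ we have $\bar t=t_2$, hence $(\mathrm{s}_{\mathrm{switch},1},\mathrm{s}_{\mathrm{switch},2})=(1,0)$. For $t\ge s_1$ we have $\bar t=s_1$, hence the weights are $(0,1)$.

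The error bound then follows by splitting $[t_1,s_2]$ into three pieces. On $[t_1,t_2]$ the combination equals $\phi_1$, and $t\in[t_1,s_1]$, so the error is at most $\epsilon$. On $[s_1,s_2]$ the combination equals $\phi_2$, and $t\in[t_2,s_2]$, so again the error is at most $\epsilon$. On the overlap $[t_2,s_1]$ both approximations are valid. Writing the combination as a convex combination and using that the weights sum to one gives
\[
\bigl|\mathrm{s}_{\mathrm{switch},1}(\phi_1-f)+\mathrm{s}_{\mathrm{switch},2}(\phi_2-f)\bigr|\le(\mathrm{s}_{\mathrm{switch},1}+\mathrm{s}_{\mathrm{switch},2})\,\epsilon=\epsilon.
\]
For vector-valued $f$ the same argument is applied coordinatewise.

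For the architecture, the clipping map is exact in ReLU form: $\mathrm{s}_{\mathrm{clip}}(t,t_2,s_1)=t_2+\mathrm{ReLU}(t-t_2)-\mathrm{ReLU}(t-s_1)$, using Lemma~\ref{lem:relu-clip} with $d=1$. Composing with one further ReLU layer, whose affine map is $s_1-\cdot$ or $\cdot-t_2$, and with the output scaling $(s_1-t_2)^{-1}$ gives depth $3$ and width $2$. Counting nonzero weights gives $\mathcal S=8$, and every entry is bounded by $\max\{s_1,(s_1-t_2)^{-1}\}$, since $0\le t_1<t_2<s_1$ in our use.

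I expect the only delicate point to be bookkeeping rather than analysis. One must make sure the biases $t_2,s_1$ and the scaling factor are the only large weights, so that the $\cB$ bound is as claimed. One must also note that the products with $\phi_1,\phi_2$ are not part of this lemma; they are handled separately through Lemma~\ref{lem:relu-product}, as in the proof of Lemma~\ref{lem:score-approximation}.
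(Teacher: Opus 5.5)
Your proof is correct. The paper gives no proof of this lemma; it is one of the auxiliary tools imported from \citet{oko2023diffusion} (see also \citet{fu2024unveil}). Your argument is the standard verification behind that citation:
\begin{itemize}
\item the two weights form a partition of unity, equal to $(1,0)$ for $t\le t_2$ and $(0,1)$ for $t\ge s_1$;
\item $[t_1,s_2]$ splits into three pieces, with a convex-combination bound on the overlap $[t_2,s_1]$;
\item the network is realized explicitly by $u=\mathrm{ReLU}(t-t_2)$, $v=\mathrm{ReLU}(t-s_1)$, and $\mathrm{s}_{\mathrm{switch},1}=(s_1-t_2)^{-1}\mathrm{ReLU}(s_1-t_2-u+v)$.
\end{itemize}

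Two small remarks, both consistent with what you wrote. First, the stated bound $\cB=\max\{s_1,(s_1-t_2)^{-1}\}$ really does need $t_2\ge 0$; otherwise the biases $-t_2$, $s_1-t_2$ or the unit weights can exceed it. This holds in the paper's application $t_2=2N^{-1}$, $s_1=3N^{-1}$. Second, ``depth $3$'' counts affine maps. Your network has two hidden layers, which is $\cD=2$ in the paper's own convention. Writing the clip explicitly, rather than concatenating with Lemma~\ref{lem:relu-clip}, is what keeps the nonzero-parameter count at $8$.
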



\end{appendix}

\bibliographystyle{spbasic} 
\bibliography{ref.bib}   
\end{document}